
\documentclass{article}

\usepackage{microtype}
\usepackage{graphicx}
\usepackage{subfigure}
\usepackage{booktabs} 


\newcommand{\theHalgorithm}{\arabic{algorithm}}
\usepackage{hyperref}
\usepackage[accepted]{icml2023}


\usepackage{amsmath}
\usepackage{amssymb}
\usepackage{mathtools}
\usepackage{amsthm}

\usepackage[capitalize,noabbrev]{cleveref}

\usepackage{mydefs}
\theoremstyle{plain}
\newtheorem{theorem}{Theorem}[section]
\newtheorem{proposition}[theorem]{Proposition}
\newtheorem{lemma}[theorem]{Lemma}
\newtheorem{corollary}[theorem]{Corollary}
\theoremstyle{definition}
\newtheorem{definition}[theorem]{Definition}
\newtheorem{assumption}[theorem]{Assumption}
\newtheorem{remark}[theorem]{Remark}

\usepackage[textsize=tiny]{todonotes}

\icmltitlerunning{\hfill Diffusion Models are Minimax Optimal Distribution Estimators \hfill \thepage}

\begin{document}

\twocolumn[
\icmltitle{Diffusion Models are Minimax Optimal Distribution Estimators 
}



\icmlsetsymbol{equal}{*}

\begin{icmlauthorlist}
\icmlauthor{Kazusato Oko}{UT,RIKEN}
\icmlauthor{Shunta Akiyama}{UT}
\icmlauthor{Taiji Suzuki}{UT,RIKEN}
\end{icmlauthorlist}

\icmlaffiliation{UT}{Department of Mathematical Informatics, the University of Tokyo, Tokyo, Japan}
\icmlaffiliation{RIKEN}{Center for Advanced Intelligence Project, RIKEN, Tokyo, Japan}

\icmlcorrespondingauthor{Kazusato Oko}{oko-kazusato@g.ecc.u-tokyo.ac.jp}

\icmlkeywords{Machine Learning, ICML}

\vskip 0.3in
]



\printAffiliationsAndNotice{}

\begin{abstract}
While efficient 
distribution learning is no doubt behind the groundbreaking success of diffusion modeling, 
its theoretical guarantees are quite limited.
In this paper, we provide the first rigorous analysis on approximation and generalization abilities of diffusion modeling for well-known function spaces.
The highlight of this paper is 
 that when the true density function belongs to the Besov space 
 and the empirical score matching loss is properly minimized, 
the generated data distribution achieves the nearly minimax optimal estimation rates in the total variation distance and in the Wasserstein distance of order one. 
Furthermore, we extend our theory to demonstrate how diffusion models adapt to low-dimensional data distributions.
We expect these results advance theoretical understandings of diffusion modeling and its ability to generate verisimilar outputs.
\end{abstract}

\section{Introduction}
Diffusion modeling, also called score-based generative modeling \citep{sohl2015deep,song2019generative,song2020score,ho2020denoising,vahdat2021score} has achieved state-of-the-art performance 
in image \citep{song2020score,dhariwal2021diffusion}, video \citep{ho2022video}, and audio \citep{chen2020wavegrad,kong2020diffwave}.

Borrowing explanation from the unifying framework of \citet{song2020score}, diffusion modeling first gradually adds noise to the data distribution, 
and transforms the distribution to a predefined noise distribution.
This time evolution, called the forward process, can be formulated as a stochastic differential equation (SDE) that is data independent.
On the other hand, we can consider the time-reversal of the SDE, and by following this so-called backward process, one can generate data from noise.
Importantly, the drift term of the backward process is dependent on the data distribution, specifically on the gradient of the logarithmic density (score) at each time of the forward process.

%

In practice, however, we have only access to the true distribution through 
a finite number of sample.
For this reason, 
the score of the diffusion process from the empirical distribution 
is utilized instead
\citep{vincent2011connection,sohl2015deep,song2019generative}.
Moreover, for computational efficiency, the empirical score is further replaced by a neural network (score network) that is
 close to the empirical score in terms of some 
loss function using score matching techniques \citep{hyvarinen2005estimation,vincent2011connection}.
In this way, diffusion modeling implicitly learns the true distribution via learning of the empirical score.

Then the following natural question immediately arises:
%
\\ \noindent
\textit{Is diffusion modeling is a good distribution estimator?
 In other words, how can the estimation error of the generated data distribution  
be explicitly bounded by the number of the training data and in a data structure dependent way?}

\paragraph{On the effect of score approximation errors}
Existing literature has analyzed the estimation error with either of the two assumptions on the accuracy of score approximation.
(i) One popular assumption is that the error of the loss function in score matching is sufficiently small, which was first used by \citet{song2020score} to bound the Kullback–Leibler (KL) divergence for continuous-time dynamics via Girsanov theorem.
Recently, the polynomial bound has appeared in discrete-time, meaning that the polynomial order of the error in score estimate at each step and number of steps suffice to obtain the final estimation error in the total variation (TV) distance \citep{leeconvergence}.
\citet{leeconvergence} assumed the smoothness and log-Sobolev inequality (LSI) for the true density, and \citet{chen2022sampling} and \citet{lee2022convergence} eliminated the LSI but still with the smoothness.
Also, following \citet{song2020score}, \citet{pidstrigach2022scorebased} considered the true distribution on a manifold.
(ii) Another assumption is to bound the difference between the score and the network at each time and point.
\citet{de2021diffusion} (also with dissipativily) and \citet{de2022convergence} (under the manifold hypothesis) 
 derived non-polynomial bounds
in TV and in the Wasserstein distance of order one ($W_1$), respectively. 

\vspace{-1.5mm}
\paragraph{Generalization error analyses}
However, most of the literature assumes availability of the true score, and thus whether the score is appropriately approximated with a finite number of sample has been unaddressed, and therefore a doubt in reality of the above assumptions undermines the value of the resulting estimation error bounds.
As the only exception, \citet{de2022convergence} derived the $n^{-1/d}$ bound in $W_1$ for $n$ data and a $d$-dimensional distribution. 
However, in their analysis, the neural network is assumed to almost perfectly fit the empirical score and the estimation bound depends on the convergence rate of the empirical distribution to the true one \citep{weed2019sharp}.
Because of the same lower bound for the convergence of empirical measures \citep{dudley1969speed}, their $n^{-1/d}$ bound is  essentially unimprovable with any structural assumption on the data distribution. Therefore, it is impossible to extend their result to 
formal density estimation problems, where the faster convergence rates depending on the smoothness of the true density are expected.
We also mention generalization error analysis mainly on each one discretized step by \citet{block2020generative}, but they do not explicitly state the final estimation error and their intermediate bounds depend on the unknown Rademacher complexity which should be sufficiently large so that the hypothesis class well approximates the true score.

Thus, the fundamental question on the performance of diffusion models as a distribution learner largely remains open.
\subsection{Our contributions}
In this paper, we establish a statistical learning theory for diffusion modeling.
The convergence rate of the estimation error is derived assuming that the true density belongs to well-known function spaces and deep neural network is employed as an estimator.
Surprisingly, we find that diffusion modeling can achieve the nearly minimax estimation rates.
The contributions of this paper are detailed as follows:
\vspace{-1.5mm}
\begin{itemize}
    \item[(i)] 
    We give the explicit form of approximation of the score 
    with a neural network and derive the error bound in $L^2(p_t)$ at each $t$, where the initial density is supported in $[-1,1]^d$, in the Besov space $B_{p,q}^s([-1,1]^d)$, and smooth in the boundary.
\vspace{-.8mm}  
     \item[(ii)] We convert the approximation error analysis into the estimation error bounds. 
    We derive the bound of $n^{-\frac{s}{d+2s}}$ in TV. Moreover, the rate of $n^{-\frac{s+1-\delta}{d+2s}}$ in $W_1$ is derived for an arbitrary fixed $\delta>0$ under the modified score matching, via careful discussion of stochastic calculus.
    As a result, the obtained estimation rates are nearly minimax optimal, theoretically proving the success of diffusion models.
\vspace{-.8mm}  
    \item[(iii)] By extending our theory, we also demonstrate that the diffusion models avoid the curse of dimensionality under the manifold hypothesis, considering when the true data is distributed over the low-dimensional plane.
    This is a special case of \citet{de2022convergence} but our bound is by far tight in this case.
\end{itemize}
\subsection{Other related works}
\vspace{-0.5mm}
Recently, minimax estimation rates in the Wasserstein distance have been investigated by several works
(empirical distribution \citep{weed2019sharp,singh2018minimax,lei2020convergence}; smooth density
\citep{liang2017well,singh2018nonparametric,schreuder2021statistical}); Besov space \citep{niles2022minimax}).
\citet{niles2022minimax} utilized the wavelet basis 
for the Besov space, 
while
\citet{liang2017well} used 
 neural networks as an estimator motivated by Generative Adversarial Networks (GAN) \citep{goodfellow2020generative}.

We would like to emphasize that our work is not replacement of wavelet expansion of \citet{niles2022minimax} with neural networks.
In diffusion modeling, we first minimize the squared-error-like 
 score matching loss, and then consider the estimation error. 
This makes existing sharp bounds in $W_1$ unavailable. 
Contrary to the analysis of GAN, where the minimax problem of the final goal directly relates to $W_1$, analysis of diffusion models requires conversion of the score approximation error to the estimation error. 

What we are built on is rather the theory of function estimation with deep neural networks in $L^p$ norms \citep{barron1993universal,yarotsky2017error,petersen2018optimal,suzuki2018adaptivity,schmidt2020nonparametric,hayakawa2020minimax}.
Our approximation result can be seen as an extension of the B-spline basis expansion used in \citet{suzuki2018adaptivity}.
On the other hand, our generalization bound relies on \citet{schmidt2020nonparametric,hayakawa2020minimax}. 



\section{Preliminaries}
\paragraph{Diffusion modeling}
We basically follow the notation of \citet{de2022convergence}. $(B_t)_{[0,\overline{T}]}$ and $\beta_t\colon [0,\overline{T}]\to \R_+$ denote $d$-dimensional Brownian motion and a weighting function.
We use $p_t$ for the distribution of $X_t$, and therefore $p_0$ is the data distribution.
As a forward process $(X_t)_{[0,\overline{T}]}$ in $\R^d$, we consider the following Ornstein–Ulhenbeck (OU) process:
\begin{align}\label{eq:Ornstein-Uhlenbeck}
    \mathrm{d}X_t = - \beta_t X_t \mathrm{d}t + \sqrt{2\beta_t}\mathrm{d}B_t
    ,\quad X_0 \sim p_0.
\end{align}
Then we have that $X_t|X_0 \sim \mathcal{N}(m_t X_0, \sigma_t)$, where $m_t = \exp(-\int_0^t\beta_s\mathrm{d}s), \sigma_t^2 =1-\exp(-2\int_0^t\beta_s\mathrm{d}s)$.
Note that $1-m_t \simeq t \land 1$ and $\sigma_t \simeq \sqrt{t}\land 1$.
Under mild assumptions on $p_0$ \citep{haussmann1986time}, valid for our setting,
 the backward process $(Y_t)_{[0,T]}$ with $Y_t = X_{\overline{T}-t}$ satisfies
\begin{align}\hspace{-.6mm} 
 &   \mathrm{d}Y_t\hspace{-.6mm} =\hspace{-.6mm} \beta_{\overline{T}-t} (Y_t \hspace{-.6mm}+\hspace{-.6mm} 2\nabla \log p_{\overline{T}-t}(Y_t)\hspace{-.4mm})\mathrm{d}t\hspace{-.9mm}+\hspace{-.9mm}\hspace{-.6mm}\sqrt{2\beta_{\overline{T}-t}} \mathrm{d}B_t,\\
 & Y_0 \sim p_{\overline{T}}.
\end{align}
$\nabla \log p_t(x)$ is called the score, which is replaced by the score network $\hat{s}(x,t)$ trained with finite sample.
Also, because $p_t$ approaches $\mathcal{N}(0,I_d)$, we take $\overline{T}=\tilde{\Ord}(1)$ and replace the initial noise distribution of $Y_0$ by $\mathcal{N}(0,I_d)$.
Then the modified backward process $(\hat{Y}_t)_{[0,\overline{T}]}$ is defined as 
\begin{align}
  \hspace{-.6mm}  & \mathrm{d}\hat{Y}_t\hspace{-.6mm} =\hspace{-.6mm} \beta_{\overline{T}-t} (\hat{Y}_t \hspace{-.6mm}+\hspace{-.6mm} 2\hat{s}(\hat{Y}_t,\overline{T}-t)\hspace{-.4mm})\mathrm{d}t\hspace{-.9mm}+\hspace{-.9mm}\hspace{-.6mm}\sqrt{2\beta_{\overline{T}-t}} \mathrm{d}B_t
    ,\\ & \hat{Y}_0 \hspace{-.2mm}\sim\hspace{-.3mm} \mathcal{N}(0,I_d).
\end{align}
\paragraph{Score matching}
The score network is ideally selected from the hypothesis $\mathcal{S}$
to minimize the \textit{denoising score matching loss}
\begin{align}
\hspace{-1.5mm}\mathbb{E}_{t}\hspace{-.6mm}\left[\lambda(t)\hspace{-1mm}\left[\mathbb{E}_{x_0}\hspace{-.3mm}\left[\mathbb{E}_{x_t|x_0}[\|s(x_t,t) - \nabla \log p_t(x_t|x_0)\|^2]\right]\hspace{-.3mm}\right]\hspace{-.3mm}\right]\hspace{-1.0mm},\hspace{-1.4mm}
   \label{eq:ProblemSetting-ScoreMatching-1}
\end{align}
where $t\sim {\rm Unif}[0,\overline{T}], x_0 \sim p_0, x_t|x_0 \sim p_t(x_t|x_0)$ and $\lambda$ is a weighting function.
Training with finite data $\{x_{0,i}\}_{i=1}^n\ (x_{0,i} \overset{\rm i.i.d.}{\sim} p_0)$ selects $\hat{s}$ to minimize the following loss, which replaces $\mathbb{E}_{x_0}$ by the sample mean:
\begin{align}
\hspace{-2mm} \frac1n \sum_{i=1}^n \hspace{-1mm}\underset{\substack{t\sim {\rm Unif}[\underline{T},\overline{T}]\\x_{t}\sim p_t(x_t|x_{0,i})}}{\mathbb{E}}\hspace{-2mm}[\lambda(t)\|s(x_{t},t) - \nabla \log p_t(x_{t}|x_{0,i})\|^2].\hspace{-1mm}
\label{eq:ExpiricalLoss}
\end{align}
Here $p_t(x_t|x_{0,i})$ corresponds to $\mathcal{N}(m_t X_{0,i}, \sigma_t)$, and this empirical loss can be evaluated with an arbitrary accuracy.
We clip the integral interval by $\underline{T}>0$  
 because generally the score blows up as $t\to 0$ and \eqref{eq:ProblemSetting-ScoreMatching-1} gets $\infty$ for any neural network.
We let $\lambda(t) \equiv 1$ when there is no other remark.

We remark that the expectations with respect to $t$ and $x_t$ can be replaced with finite sample of $t$ and $x_t$, as will be detailed in \cref{subsection:Generalization-ScoreMatchingRemark}.
However, we then inevitably need polynomial number of sample $(t,x_t)$ for each $x_{0,i}$, or an artifactual modification on the distribution of $t$, mainly due to the unboundedness of the score.

\paragraph{Class of neural networks}
As usual in approximation with neural networks \citep{yarotsky2017error,liang2017well}, the hypothesis $\mathcal{S}$ set in score matching is a class of deep neural network with the ReLU activation $\ReLU(x)=\max\{0,x\}$ (operated element-wise for a vector) \citep{nair2010rectified,glorot2011deep} with a sparsity constraint (on the number of non-zero parameters).
The score network is a function from $(x,t)\in \R^d\times \R_+ $ to $y\in\R^d$.
\begin{definition}
    A class of neural networks $\Phi(L,W,S,B)$ with height $L$, width $W$, sparsity constraint $S$, and norm constraint $B$ is defined as
$\Phi(L,W,S,B):=\{(A^{(L)}\ReLU(\cdot) + b^{(L)}) \circ \cdots
     \circ(A^{(1)}x + b^{(1)})|\ A^{(i)}\in \R^{W_i\times W_{i+1}}
     , b^{(i)}\in \R^{W_{i+1}},
     \sum_{i=1}^l (\|A^{(i)}\|_0+\|b^{(i)}\|_0) 
     \leq S, \max_{i}\|A^{(i)}\|_\infty\hspace{-.8mm} \lor \|b^{(i)}\|_\infty \leq B
     \}.
    $
\end{definition}
We remark that our results for Fully-connected Neural Network (FNN) 
is easily translated into other architectures.
For example, variants of U-Net \citep{ronneberger2015u} prevalent in practice \citep{song2019generative,ho2020denoising,ramesh2022hierarchical} are a kind of Convolutional Neural Network (CNN) and we can utilize rich literature on converting the approximation results for FNN into those for CNN \citep{oono2019approximation,zhou2020universality,petersen2020equivalence}. 

\paragraph{Density estimation in the Besov space}
As a class of the true density, the Besov space is introduced via the modulus of smoothness.
We assume that $\Omega$ be a cube in $\R^d$.
\begin{definition}
\label{def:besov}
    For a function $f\in L^p(\Omega)$ for some $p\in (0,\infty]$, the $r$-th modulus of smoothness of $f$ is defined by
    \begin{align}
      &\quad  w_{r,p}(f,t) = \sup_{\|h\|_2\leq t}\|\Delta_h^r(f)\|_p,\quad\text{where }\ \Delta_h^r(f)(x)
        \\ &\hspace{-2mm} 
        = \begin{cases}
        \sum_{j=0}^r {r \choose j}(-1)^{r-j} f(x+jh) \hspace{-2mm} & (\text{if }x+jh\in \Omega\ \text{for all $j$})
        \\
        0 & (\text{otherwise}).
        \end{cases}
    \end{align}
\end{definition}
\begin{definition}[Besov space $B_{p,q}^s(\Omega)$]
    For $0<p,q\leq \infty, s>0, r:=\lfloor s\rfloor+1$, let the seminorm $|\cdot|_{B_{p,q}^s}$ be
    \begin{align}
        |f|_{B_{p,q}^s} = \begin{cases}\left(\int_0^\infty (t^{-s}w_{r,p}(f,t))^q \frac{dt}{t}\right)^\frac1q &(q<\infty ),
     \\   \sup_{t>0}t^{-s}w_{r,p}(f,t) &(q=\infty) .
        \end{cases}
    \end{align}
    The norm of the Besov space $B_{p,q}^s$ is defined by $\|f\|_{B_{p,q}^s} = \|f\|_p+|f|_{B_{p,q}^s}$, and we have $B_{p,q}^s=\{f\in L^p(\Omega)|\ \|f\|_{B_{p,q}^s}<\infty\}.$
\end{definition}
Considering the Besov space, many well-known function classes can be discussed in a unified manner.
Let us take 
several examples.
For $\alpha\in \Z_+^d$, let $\pd^\alpha = \frac{\pd^{|\alpha|}f}{\pd^{\alpha_1}_{x_1}\cdots \pd^{\alpha_d}_{x_d}}(x)$.
The 
\\
H\"{o}lder 
space for $s\in \R_{>0} \setminus\Z_{+}$ is a set of $\lfloor s\rfloor$ times
differentiable
 functions
$\mathcal{C}^s(\Omega)=\{f\colon \Omega\to \R|\ \|f\|_{\mathcal{C}^s}:=\max_{|\alpha|\leq s}\|\pd^\alpha f\|_\infty+ \max_{|\alpha|=\lfloor s\rfloor}\sup_{x,y\in \Omega}\frac{\|\pd^\alpha f(x)-\pd^\alpha f(y)\|}{\|x-y\|^{s-\lfloor s\rfloor}}$ $<\infty\}$
for $s\in \R_{>0} \setminus\Z_{+}$.
The Sobolev space for $s\in \N, 1\leq p\leq \infty$ is a set of $s$ times differentiable
 functions $W^s_p(\Omega) :=\{f\colon \Omega\to \R|\ \|f\|_{W^s_p}:= (\sum_{|\alpha|\leq s} \|\pd^\alpha f\|_p^p)^\frac1p<\infty\}$.
Then the following relationships are due to \citet{amann1983monographs}:
\begin{itemize}
    \item For $s\in \N$, $B_{p,1}^s (\Omega)\hookrightarrow W_p^s (\Omega) \hookrightarrow B_{p,\infty}^s (\Omega)$.
    \item $B_{2,2}^s(\Omega)=W_2^s(\Omega)$.
    \item For $s\in \R_{>0} \setminus\Z_{+}$, $\mathcal{C}^s(\Omega) = B^s_{\infty,\infty}(\Omega)$.
\end{itemize}
If $s>d/p$, $B_{p,q}^s(\Omega)$ is continuously embedded in the set of the continuous functions.
Otherwise, the elements in the space is no longer continuous.
Our result is valid for $B_{p,q}^s(\Omega)$ with $s>d(1/p-1/2)_+$, and thus can include not continuous functions, unlike existing bounds assuming Lipschitzness \citep{leeconvergence,lee2022convergence,chen2022sampling}.

In this problem settings, we evaluate how close the distribution of $\hat{Y}_{\overline{T}-\underline{T}}$ can be to the true distribution $p_0$. 
As a performance measure of the distribution estimator, we employ both the total variation distance $({\rm TV})$ and the Wasserstein distance of order one $(W_1)$.
In \cref{section:LowerDimensionality}, where the data is assumed to lie in a low dimensional manifold, we focus on the Wasserstein distance. This is because the generated distribution is never absolutely continuous with respect to the true distribution, and thus the robustness of the Wasserstein distance to small parallel shift of the distribution is essential to yield a non-trivial bound not $\infty$.


\subsection{Assumptions}
Here we formally state our minimal assumptions.
Let $d$ be a dimenision of the space, $n$ be a number of sample, and $0<p,q\leq \infty, s>0$ with $s>(1/p-1/2)_+$
be parameters of the Besov space.
Our main assumption is as follows.
\begin{assumption}\label{assumption:InBesov}
    The true density $p_0$ is supported on $[-1,1]^d$, upper and lower bounded by $C_f$ and $C_f^{-1}$ on the support, respectively.
    Also, $p_0$, when limited to $[-1,1]^d$, belongs to $U(B_{p,q}^s([-1,1]^d);C)$ for some constant $C$.
\end{assumption}
$U(\cdot;C)$ denotes the ball of radius $C$, sometimes written as $U(\cdot)$ by omitting a constant $C$.
We additionally make two technical assumptions.
One is the smoothness of $\beta_t$.
\begin{assumption}\label{assumption:SmoothBeta}
    $\beta_\cdot \colon [0,\overline{T}]\to \R_+\ (t \mapsto \beta_t)$ satisfies $0<\betalow \leq \beta_\cdot \leq \betahigh$ and $\beta_\cdot\in U(\mathcal{C}^\infty([0,\overline{T}]);1)$ as a function of $t \in [0,\bar{T}]$.
\end{assumption}
The other is the smoothness of the true density $p_0$ on the boundary region.
Let $a_0$ be a sufficiently small value defined later, for example, $a_0\approx n^{-\frac{1}{d+2s}}$ in \cref{Theorem:Generalization}.
\begin{assumption}\label{assumption:BoundarySmoothness}
    $p_0$, when limited to $[-1,1]^d\setminus [-1+a_0,1-a_0]^d$, belongs to $U(\mathcal{C}^\infty([-1,1]^d\setminus [-1+a_0,1-a_0]^d))$. 
\end{assumption}
This is to construct the score network in the region where $p_t$ is not lower bounded.
This is necessarily because in density estimation lower boundedness is typically assumed
\citep{tsybakov2009introduction} and without lower boundedness
 the minimax optimal rates sometimes get worse than otherwise \citep{niles2022minimax}.
This assumption can be replaced by sufficiently slow decay of the density, such as LSI used in \citet{leeconvergence}.
We also note that this modification does not harm the minimax rate.



\section{Approximation of the true score}\label{section:Approximation}
In this section, we consider approximating the true score $\nabla \log p_t$ via a deep neural network and derive the approximation error bound.
Throughout this section, we fix $\delta>0$ arbitrarily and take $N\gg 1$ as a parameter that determines the size of the network.
We assume \cref{assumption:BoundarySmoothness} with $a_0 = N^{-\frac{1-\delta}{d}}$ and take $\underline{T}={\rm poly}(N^{-1})$, and $\overline{T}\simeq\log N$.
The main contribution of this section is the following.
\begin{theorem}\label{theorem:Approximation}
    There exists a neural network $\NetworkScoreC\in \Phi(L,W,S,B)$ that satisfies, for all $t\in [\underline{T},\overline{T}]$,
    \begin{align}
    \int_x p_t(x)  \|\NetworkScoreC(x,t) - s(x,t)\|^2 \dx \lesssim \frac{N^{-\frac{2s}{d}}\log(N)}{\sigma_t^2}.
    \end{align} 
       Here, $L,W,S$ and $B$ are evaluated as $L = \Ord (\log^4 N),\| W\|_\infty = \Ord (N\log^6N),S = \Ord (N\log^8N), $ and $B = \exp(\Ord(\log^4 N )).$
    Moreover, we can take $\NetworkScoreC$ satisfying $\|\NetworkScoreC(\cdot,t)\|_\infty = \Ord(\sigma_t^{-1}\log^\frac12 N)$.
\end{theorem}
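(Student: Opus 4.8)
## Proof Strategy for Theorem \ref{theorem:Approximation}

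The plan is to exploit the closed-form convolution structure of the forward process: since $X_t \mid X_0 \sim \mathcal{N}(m_t X_0, \sigma_t^2 I_d)$, the marginal density is $p_t(x) = \int p_0(y)\, g_{\sigma_t}(x - m_t y)\,\mathrm{d}y$ where $g_{\sigma_t}$ is the Gaussian kernel, and the score admits the exact expression $s(x,t) = \nabla\log p_t(x) = \frac{1}{\sigma_t^2}\bigl(m_t\,\mathbb{E}[X_0\mid X_t=x] - x\bigr)$. Thus it suffices to approximate the two smooth functions $p_t(x)$ and $(p_t \ast \mathrm{id})(x) := \int y\, p_0(y) g_{\sigma_t}(x-m_t y)\,\mathrm{d}y$ (numerator of the conditional expectation), and then implement the division by a neural network. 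The key point is that convolving the Besov-regular density $p_0 \in B_{p,q}^s([-1,1]^d)$ against a Gaussian of width $\sigma_t$ both smooths it and, crucially, keeps it bounded below by a $\sigma_t$-dependent constant on the relevant region, so the division is well-conditioned up to the stated $\sigma_t^{-2}$ factor in the error bound.

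First I would set up the approximation of $p_0$ itself (and of $x \mapsto x_j p_0(x)$, which is again Besov-regular after the boundary-smoothness Assumption \ref{assumption:BoundarySmoothness}) by a B-spline / cardinal wavelet expansion, following the construction in \citet{suzuki2018adaptivity}: write $p_0 \approx \sum_{k} c_k M_k$ with $M_k$ tensor-product B-splines at resolution governed by $N$, truncating to $\Ord(N)$ terms, which incurs $L^p$-error $\Ord(N^{-s/d})$ by the Besov approximation theory. Each B-spline is then realized (approximately) by a small ReLU subnetwork, and the sum is assembled into a sparse network of the stated width/depth/sparsity — this is where the polylog factors in $L,W,S$ come from (depth $\log^4 N$ for multiplying B-spline factors across coordinates and for the approximate multiplications/divisions). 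Second, I would convolve: the Gaussian integral $\int p_0(y) g_{\sigma_t}(x-m_t y)\,\mathrm{d}y$ applied to the B-spline expansion can be computed either by precomputing $\int M_k(y) g_{\sigma_t}(x - m_t y)\,\mathrm{d}y$ as new smooth "convolved B-spline" basis functions (also implementable by networks, since they are smooth and bounded with controlled derivatives depending on $\sigma_t$), or by building a network that approximates the Gaussian and performs numerical integration on a grid of spacing $\ll \sigma_t$. Either way, the approximation error for $p_t$ and for the numerator is $\Ord(N^{-s/d})$ in sup norm on the interior region, with an extra $\log^{1/2}N$ slack from truncating the Gaussian tails (which only matters for $\|x\| \lesssim \sqrt{\log N}$, consistent with $\overline{T} \simeq \log N$).

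Third, I would handle the division. On the interior $[-1+a_0, 1-a_0]^d$ the convolved density satisfies $p_t(x) \gtrsim C_f^{-1} m_t^{-d}$ (roughly), while on the boundary annulus Assumption \ref{assumption:BoundarySmoothness} gives $C^\infty$-control so that one can still lower-bound or directly approximate the score there using the classical smooth-function ReLU approximation of \citet{yarotsky2017error}; outside $[-1,1]^d$ the Gaussian-tail structure makes $s(x,t)$ essentially linear in $x$ and easy to fit. The reciprocal $1/p_t$ is then approximated by a ReLU network on the range $[c\,\sigma_t\text{-dependent lower bound}, \infty)$ using the standard $\log(1/\epsilon)$-depth construction for $1/x$, composed with a clipping; multiplying the approximate numerator by the approximate reciprocal and subtracting $x$, all rescaled by $\sigma_t^{-2}$, yields $\hat s$. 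Propagating errors through the division is where the $\sigma_t^{-2}$ blow-up enters: a relative error $\epsilon$ in numerator and denominator becomes an $\Ord(\epsilon/p_t^{\min})$ error in the quotient, and multiplying by $m_t/\sigma_t^2 \simeq \sigma_t^{-2}$ gives the final bound $\int p_t(x)\|\hat s - s\|^2\,\mathrm{d}x \lesssim \sigma_t^{-2} N^{-2s/d}\log N$ after squaring and integrating against $p_t$ (the integration against $p_t$, rather than Lebesgue measure, is what lets us tolerate the worse behavior on the low-density boundary region — a small-measure set — and is also what makes the $L^\infty$ bound $\|\hat s(\cdot,t)\|_\infty = \Ord(\sigma_t^{-1}\log^{1/2}N)$ plausible since the typical $\|x\|$ under $p_t$ is $\Ord(\sqrt{\log N})$ and the conditional expectation is bounded).

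The main obstacle I expect is the last step: controlling the division error uniformly in $t \in [\underline T, \overline T]$ while keeping a single network (with $t$ as an input) of the claimed size. This requires (a) a careful case analysis over the interior / boundary-annulus / exterior regions of $x$, with the region boundaries themselves depending on $t$ through $\sigma_t, m_t$ — necessitating that the network select the right branch via ReLU gating; (b) ensuring the lower bound on the denominator degrades at most polynomially as $t \to \underline T$, which is exactly why $\underline T = \mathrm{poly}(N^{-1})$ is chosen and why the $\sigma_t^{-2}$ factor (rather than something worse) appears; and (c) bookkeeping that the dependence on $t$ — which enters analytically through $m_t = \exp(-\int_0^t\beta_s\,\mathrm{d}s)$ and $\sigma_t$, smooth by Assumption \ref{assumption:SmoothBeta} — can itself be approximated by a modestly-sized subnetwork and composed in without blowing up the sparsity budget. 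The smoothness of $\beta$ and the explicit $a_0 = N^{-(1-\delta)/d}$ are precisely the levers that make all three of these go through with only polylogarithmic overhead.
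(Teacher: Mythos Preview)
Your high-level architecture matches the paper closely: approximate $p_0$ by tensor B-splines, push each basis through the Gaussian kernel to form what the paper calls the \emph{diffused B-spline basis} $E_{k,j}(x,t)=\int M^d_{k,j}(y)\,K_t(x\mid y)\,\mathrm{d}y$ (your ``convolved B-spline'' option), realize $p_t$ and $\sigma_t\nabla p_t$ as sums of these, build subnetworks for $m_t,\sigma_t$, and form the quotient via a clipped reciprocal subnetwork. The paper does exactly this, and shows each $E_{k,j}$ is realizable with polylog parameters by clipping the integral and Taylor-expanding the exponential.

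The gap is in your division step for \emph{large} $t$. Your interior\,/\,boundary-annulus\,/\,exterior split relies on Assumption~\ref{assumption:BoundarySmoothness} to supply extra accuracy wherever $p_t(x)$ is small, but that assumption only controls $p_0$ on an annulus of width $a_0=N^{-(1-\delta)/d}$, and Gaussian localization sends boundary $x$ back to that annulus only when $\sigma_t\sqrt{\log N}\lesssim a_0$, i.e.\ $t\lesssim N^{-(2-\delta)/d}$. For larger $t$ the kernel is too spread out: the set where $p_t(x)\in[N^{-(2s+1)/d},\,c]$ draws on $y$ throughout the interior of the support, where you only have $B^s_{p,q}$ regularity and hence error $N^{-s/d}$ in $p_t$ and $\nabla p_t$---not enough to absorb the $N^{(2s+1)/d}$ amplification from the reciprocal. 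Your ``essentially linear outside $[-1,1]^d$'' remark does not cover this intermediate region. The paper resolves this with a second, parallel construction: for $t\geq 2t_*$ with $t_*\approx N^{-(2-\delta)/d}$, it discards the decomposition of $p_0$ and instead decomposes $p_{t_*}$ directly. Since $\|\partial^k p_{t_*}\|_\infty\lesssim\sigma_{t_*}^{-k}$ for every $k$ (\cref{Lemma:Smooth1}), high-order B-splines with $N'\leq N$ terms already achieve uniform error $N^{-(3s+5)/d}$, which survives the division everywhere on the relevant domain. The small-$t$ and large-$t$ networks are then glued by a switching function in $t$. Without this second regime your construction cannot cover $[\underline T,\overline T]$ with $\overline T\simeq\log N$.
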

The formal proof can be found in \cref{section:Appendix-Approximation}.

\subsection{Proof overview}
In order to obtain this result, the approximation should be constructed in the following ways.
(i) It should reflect the structure of $p_0(x)$, especially the fact of $p_0(x) \in U(B_{p,q}^s)$.
(ii) It should give a good approximation of the score over all $t\in [\underline{T},\overline{T}]$.
To address these issues, we construct a novel basis decomposition in the space of $\R^d \times [\underline{T},\overline{T}]$, specifically designed for the score approximation.
Moreover, as usual in approximation theory \citep{yarotsky2017error,schmidt2020nonparametric}, each basis can be realized by a neural network very efficiently, meaning that a polylogarithmic-sized network suffices with respect to the permissible error.


\paragraph{Approximation via the diffused B-spline Basis}
We consider the approximation for $t\ll 1$.
First remind the B-spline basis decomposition of the Besov functions \citep{devore1988interpolation,suzuki2018adaptivity}.
Let $\mathcal{N}(x)=1\ (x\in [0,1]), 0\ (\text{otherwise}).$ The \textit{cardinal B-spline of order $l$} is defined by 
$\mathcal{N}_l (x) = \underbrace{\mathcal{N}*\mathcal{N}*\cdots*\mathcal{N}}_{l+1\text{ times convolution}}(x)$,
 where $(f*g)(x) = \int f(x-t)g(t)\dt$.
Then, the \textit{tensor product B-spline basis} in $\R^d$ is defined for $k\in \N^d$ and $j\in \Z^d$ as $M_{k,j}^d(x)=\prod_{i=1}^d \mathcal{N}(2^{k_i}x-j_i)$.
It is known that a function $f$ in the Besov space is approximated by a super-position of $M_{k,j}^d(x)$ as $f_N=\sum_{(k,j)}\alpha_{(k,j)}M_{k,j}^d(x)$.
\begin{lemma}[Informal version of \cref{Lemma:SuzukiBesov}; \citet{suzuki2018adaptivity}]\label{lemma:SuzukiBesov-Main}
    For any $p_0\in U(B_{p,q}^s)$, there exists a super-position $f_N$ of $N$ tensor-product B-spline bases satisfying
    \begin{align}
        \|p_0 - f_N\|_{L^2} \lesssim N^{-s/d}\|f\|_{B_{p,q}^s}. 
    \end{align}
\end{lemma}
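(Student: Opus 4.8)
The plan is to reduce the statement to the classical theory of nonlinear ($N$-term) approximation in Besov spaces by the cardinal B-spline dictionary, following the wavelet-based argument of \citet{suzuki2018adaptivity} with splines in place of wavelets. Concretely, I would first expand $p_0$ in the (dyadically dilated, shifted) tensor-product B-spline system, $p_0=\sum_{k\ge0}\sum_j\alpha_{k,j}M_{k,j}^d$, noting that at each scale $k$ only $O(2^{kd})$ indices $j$ are active since $\mathrm{supp}\,p_0\subseteq[-1,1]^d$. The classical spline characterization of the Besov norm (the analogue of the wavelet characterization; DeVore--Popov) then gives
\begin{align}
  \|p_0\|_{B_{p,q}^s}\;\asymp\;\Bigl\|\bigl(\,2^{k(s-d/p)}\,\|(\alpha_{k,j})_j\|_{\ell^p}\,\bigr)_{k\ge0}\Bigr\|_{\ell^q},
\end{align}
together with the scalewise $L^2$-stability $\|\sum_j\alpha_{k,j}M_{k,j}^d\|_{L^2}^2\asymp 2^{-kd}\|(\alpha_{k,j})_j\|_{\ell^2}^2$, which follows from $\|M_{k,j}^d\|_\infty\asymp1$ and the bounded overlap of B-splines at a fixed scale.

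I would then take $f_N$ to be the $N$ terms with largest $L^2$-mass $|\alpha_{k,j}|\,2^{-kd/2}$, and decreasingly rearrange these masses into a sequence $(a_m)_{m\ge1}$. Setting $1/\tau=1/2+s/d$, the hypothesis $s>d(1/p-1/2)_+$ forces $\tau<2$, and a short computation from the equivalence above (split by scale, sum the geometric series in $k$) shows $(a_m)$ lies in weak-$\ell^\tau$ with $\|(a_m)\|_{\ell^{\tau,\infty}}\lesssim\|p_0\|_{B_{p,q}^s}$. Since $\tau<2$, the squared tail of a weak-$\ell^\tau$ sequence obeys $\sum_{m>N}a_m^2\lesssim N^{1-2/\tau}\|(a_m)\|_{\ell^{\tau,\infty}}^2=N^{-2s/d}\|(a_m)\|_{\ell^{\tau,\infty}}^2$. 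Feeding this into the $L^2$-stability estimate, extended across scales by a Temlyakov-type almost-diagonal bound that absorbs the mild cross-scale, cross-index overlap of B-splines, yields $\|p_0-f_N\|_{L^2}^2\lesssim N^{-2s/d}\|p_0\|_{B_{p,q}^s}^2$, which is the claim (the implied constant depending only on $d,s,p,q$).

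When $p\ge2$ this can be simplified: keeping \emph{all} terms up to a single scale $K$ with $2^{Kd}\asymp N$ already gives $\|\sum_{k>K}f_k\|_{L^2}\lesssim 2^{-Ks}\|p_0\|_{B_{p,q}^s}\lesssim N^{-s/d}\|p_0\|_{B_{p,q}^s}$ directly from the scalewise bounds. The main obstacle is therefore the regime $p<2$: a non-adaptive single-scale truncation then only achieves $N^{-(s-d(1/p-1/2))/d}$, strictly worse than $N^{-s/d}$, so the adaptive ``largest-coefficient'' selection and the weak-$\ell^\tau$ accounting are essential. The one genuinely analytic input there is the $L^2$-stability of the B-spline dictionary summed over \emph{all} scales at once; because the $M_{k,j}^d$ are not orthogonal across scales (unlike a wavelet basis), one must invoke the standard almost-diagonal / Riesz-type estimate for multi-scale spline sums rather than a clean Pythagorean identity.
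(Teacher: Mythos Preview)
The paper does not prove this lemma; it is quoted from \citet{suzuki2018adaptivity} (the formal version appears as \cref{Lemma:SuzukiBesov} in the appendix, stated without proof and attributed to Lemma~2 of that reference). Your sketch correctly identifies the classical nonlinear B-spline approximation machinery behind it---the spline characterization of the Besov seminorm, the exponent $1/\tau=1/2+s/d$, and the weak-$\ell^\tau$ tail bound---and would constitute a valid reconstruction of that result.

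One difference in construction is worth flagging. The version recorded in the appendix (following Suzuki, who in turn follows DeVore) allocates terms \emph{scale by scale}: keep all coefficients at levels $k\le K$ with $2^{Kd}\asymp N$, then at each level $K<k\le K^*$ retain only the $n_k\asymp N\,2^{-\nu(k-K)}$ largest coefficients, with nothing beyond $K^*$. This matters exactly at the step you yourself isolate as the obstacle. B-splines across scales are redundant (coarse splines lie in the fine span) and do not form a Riesz system, so the cross-scale ``almost-diagonal'' $L^2$-stability you need for a single global threshold is genuinely the delicate point. The per-scale scheme sidesteps it entirely: the error is bounded by $\sum_k\|g_k-h_k\|_{L^2}$ via the triangle inequality, within-scale $L^2$-stability is elementary (bounded overlap at fixed $k$), and the sum over $k$ is a geometric series delivering $N^{-s/d}$. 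Your global-threshold route can be made rigorous, but it is the heavier direction; the cited source takes the per-scale one.
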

Inspired by this, we introduce our basis decomposition.
Because of $X_t|X_0\sim \mathcal{N}(m_t X_0, \sigma_t)$,
we can write $p_t$ as
\begin{align}
    p_t(x)& = \int p_0(y) \underbrace{\frac{1}{\sigma^d(2\pi)^\frac{d}{2}}\exp\left(-\frac{\|x-m_ty\|^2}{2\sigma_t^2}\right)}_{=:K_t(x|y)} \dy. 
\end{align}
Because the transition kernel $K_t(x|y)$ linearly applies to $p_0$ and $p_0$ is approximated by $f_N=\sum_{(k,j)}\alpha_{(k,j)}M_{k,j}^d(x)$, we come up with the following approximation of $p_t$:
\begin{align}
    p_t(x) &\approx \sum_{(k,j)}\alpha_{(k,j)}\underbrace{\int M^d_{k,j}(y) K(x|y)\dy}_{=:E_{k,j}(x,t)}.
\end{align}
Moreover, $E_{k,j}$ is further decomposed as
\begin{align}
    & E_{k,j}(x,t) \\& = \prod_{i=1}^d \underbrace{\int\frac{\mathcal{N}(2^{k_i}x_i-j_i)}{\sigma_t\sqrt{2\pi}}\exp(-\frac{(x_i-m_ty_i)^2}{2\sigma_t^2})\dx_i}_{=:\mathcal{D}_{k,j}(x_i,t)}.
\end{align}
We name $\mathcal{D}_{k,j}$ as the \textit{diffused B-spline basis} and $E_{k,j}$ as the \textit{tensor product diffused B-spline basis}.
We show that there exists a neural network that approximates $\mathcal{D}_{k,j}$ and $E_{k,j}$ very efficiently.
Our construction then goes as follows.
We construct networks approximating $m_t $ and $\sigma_t $. 
\begin{lemma}[See also \cref{Lemma:MandSigma}]\label{lemma:SigmaM}
    Under \cref{assumption:BoundarySmoothness}, there exists neural networks $\NetworkMA(t), \NetworkSigmaA(t)\in \Phi(L,W,B,S)$ that approximates $m_t$ and $\sigma_t$ up to $\epsM$ for all $t \geq 0$, where
        $L = \Ord(\log^2 (\eps^{-1})), \|W\|_\infty =\Ord(\log^3(\eps^{-1})),S = \Ord(\log^4 (\eps^{-1}))$, and $B = \exp(\Ord(\log^2 (\eps^{-1})))$.
\end{lemma}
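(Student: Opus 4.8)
\emph{Proposed approach.} The starting point is the identity $m_t=\exp(-B(t))$, $\sigma_t=\sqrt{1-\exp(-2B(t))}$ with $B(t):=\int_0^t\beta_s\,\mathrm{d}s$. I would therefore build $\NetworkMA$ and $\NetworkSigmaA$ as compositions of a constant number of elementary ReLU blocks: (a) a network $\widehat B$ approximating $B$; (b) a network $\mathrm{Exp}$ approximating $x\mapsto e^{-x}$ on a bounded interval; (c) the squaring/multiplication gadget of \citet{yarotsky2017error}; and (d) a network $\mathrm{Sqrt}$ approximating $x\mapsto\sqrt x$ on $[0,1]$. Each of (a)--(d) is realizable by a ReLU network of polylogarithmic size, so the actual work lies in assembling them and tracking the error propagation.

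\emph{Step 1: the network for $m_t$.} By \cref{assumption:SmoothBeta}, $\beta_\cdot\in U(\mathcal C^\infty([0,\overline T]);1)$ with $\betalow\le\beta_\cdot\le\betahigh$, so $B$ is $\mathcal C^\infty$ on $[0,\overline T]$ with uniformly bounded derivatives and range $[0,\betahigh\overline T]$. Applying the standard ReLU approximation of smooth functions (local Taylor/polynomial approximation realized by a network, as in \citet{yarotsky2017error}, or the B-spline construction of \citet{suzuki2018adaptivity}) produces $\widehat B$ with $\sup_{[0,\overline T]}|\widehat B-B|\le\eps/2$, depth $\Ord(\log\eps^{-1})$, and polylogarithmic width and sparsity; I would clip its output to $[0,\Lambda]$, $\Lambda:=\betahigh\overline T+\Ord(\log\eps^{-1})$, which can only help accuracy. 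Then, exploiting that $e^{-x}$ is analytic and $1$-Lipschitz, I would take $\mathrm{Exp}$ to be a repeated squaring of a low-degree Taylor polynomial of $e^{-x/2^k}$ (with $k=\Ord(\log\eps^{-1})$), giving $\sup_{[0,\Lambda]}|\mathrm{Exp}-e^{-\cdot}|\le\eps/2$ with depth $\Ord(\log^2\eps^{-1})$ and polylogarithmic width and sparsity (the choice of $\Lambda$ makes $e^{-x}\le\eps$ beyond the approximation interval, which handles all $t\ge 0$; for $t>\overline T$ one simply extends $\beta$ by its boundary value). Setting $\NetworkMA:=\mathrm{Exp}\circ\widehat B$ and using $1$-Lipschitzness of $e^{-\cdot}$ gives $|\NetworkMA(t)-m_t|\le\eps/2+|\widehat B(t)-B(t)|\le\eps$.

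\emph{Step 2: the network for $\sigma_t$.} Using $\sigma_t^2=1-m_t^2$, I would square $\NetworkMA$ with Yarotsky's gadget, subtract from $1$, and clip to $[0,1]$, obtaining $g$ with $\sup_t|g(t)-\sigma_t^2|\le\eps'$ for a target $\eps'$ to be chosen, at polylogarithmic cost in $(\eps')^{-1}$; then $\NetworkSigmaA:=\mathrm{Sqrt}\circ g$. The delicate point, which I expect to be the main obstacle, is that $\sqrt{\cdot}$ is only $\tfrac12$-Hölder at $0$ --- equivalently $\sigma_t\to0$ as $t\to0$ --- so $g$ cannot simply be fed into a Lipschitz approximation. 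For $t\ge\underline T$ (the range actually used) this is harmless since $\sigma_t^2\ge\sigma_{\underline T}^2\gtrsim\underline T$ and $\sqrt{\cdot}$ is Lipschitz on $[\sigma_{\underline T}^2,1]$. For the claimed uniform statement down to $t=0$, I would build $\mathrm{Sqrt}$ by outputting $0$ on $[0,(\eps')^2]$ (where $\sqrt x\le\eps'$) and, on each of the $\Ord(\log(\eps')^{-1})$ dyadic intervals $[2^{-j-1},2^{-j}]$ covering $[(\eps')^2,1]$, using the rescaled analyticity of $\sqrt{\cdot}$, the pieces combined by a ReLU partition of unity, yielding $\sup_{[0,1]}|\mathrm{Sqrt}-\sqrt\cdot|\le\eps'$ with depth $\Ord(\log^2(\eps')^{-1})$ and polylogarithmic width and sparsity. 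Since $|\sqrt a-\sqrt b|\le\sqrt{|a-b|}$, one gets $|\NetworkSigmaA(t)-\sigma_t|\le\eps'+\sqrt{\eps'}$, so choosing $\eps'\asymp\eps^2$ --- only a polylogarithmic increase in all parameters --- gives the bound $\eps$.

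\emph{Step 3: bookkeeping.} Finally I would collect the sizes of the $\Ord(1)$ composed and parallel blocks: depths add, widths and sparsities add, so $L=\Ord(\log^2\eps^{-1})$, $\|W\|_\infty=\Ord(\log^3\eps^{-1})$, $S=\Ord(\log^4\eps^{-1})$; the weight magnitudes are dominated by the polynomial coefficients in the powering/Taylor gadgets and the $2^{\Ord(\log\eps^{-1})}$ rescalings in the dyadic square-root pieces, compounded over the finitely many compositions, hence bounded by $\exp(\Ord(\log^2\eps^{-1}))$, matching the statement. The one genuinely non-routine ingredient is item (d): a polylog-size square-root network accurate uniformly on all of $[0,1]$ including near the origin, together with the observation that pushing a squared-accuracy input through it costs only a polylog factor; everything else is routine composition of well-known approximation gadgets.
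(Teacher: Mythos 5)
Your proposal follows essentially the same route as the paper's formal version of this result (Lemma B.1 in Appendix B): use the $\mathcal{C}^\infty$ bound on $\beta$ from Assumption 2.5 to Taylor-expand $B(t)=\int_0^t\beta_s\,\mathrm{d}s$ to degree $\Ord(\log\eps^{-1})$, clip the input to $[0,A]$ with $A\asymp\log\eps^{-1}$ so that $m_t\le\eps$ beyond $A$, compose with a ReLU realization of a truncated Taylor series for the exponential (the paper's Lemma F.12), and then with a square-root gadget (the paper's Lemma F.11). The paper approximates $\exp(-2B(t))$ directly rather than squaring $\NetworkMA$, and it uses one Taylor polynomial of degree $\Ord(\log\eps^{-1})$ rather than your repeated-squaring $\mathrm{Exp}$; both differences are cosmetic and give the same size bounds.

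The one substantive divergence is your treatment of $\sigma_t$ near $t=0$. You correctly flag that $\sqrt{\cdot}$ is only $\tfrac12$-Hölder at the origin and propose a dyadic, uniform-on-$[0,1]$ square-root network plus the choice $\eps'\asymp\eps^2$ to absorb the Hölder loss. That is valid, but the paper sidesteps the issue entirely: the formal Lemma B.1 only claims $\left|\NetworkSigmaA(t)-\sigma_t\right|\le\eps$ for $t\geq\eps$ — the informal statement you were given elides this qualifier — and on that range $\sigma_t^2\geq c\eps$, so the paper's square-root lemma (built for inputs in $[\eps_D,\eps_D^{-1}]$) applies directly with $\eps_D=c\eps$ and a $\sigma_t^2$-approximation error of only $\eps^{3/2}$, yielding $c\eps+\eps^{3/2}/\sqrt{c\eps}=\Ord(\eps)$. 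The restriction to $t\ge\eps$ costs nothing downstream, since $\underline T={\rm poly}(N^{-1})\gg\eps$ wherever $\NetworkSigmaA$ is invoked. In short: your version proves a marginally stronger statement at the cost of a more delicate square-root gadget; the paper trades that for a cleaner restriction on the domain. Both arrive at the same polylogarithmic $L,\|W\|_\infty,S$ and $B=\exp(\Ord(\log^2\eps^{-1}))$.
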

Next we clip the integral interval of $\mathcal{D}_{k,j}$ and approximate the integrand by a rational function of $(x,m_t,\sigma_t)$. Then the following is obtained as an informal version of \cref{Lemma:DiffusionBasis2}.
\begin{lemma}\label{lemma:DiffusedBesov-Main}
    For $\eps>0$, there exists a neural 
    network 
    $\phi_{\rm TDB}\colon \R^d \times \R_+\to \R^d$ that satisfies
        $
        \|\phi_{\rm TDB}(x,t) - E_{k,j}(x,t)\|_\infty \leq \eps.
    $ Here, $\phi_{\rm TDB}\in\Phi(L,W,S,B)$ with $L=\Ord(\log^4(\epsA^{-1})), \|W\|_\infty = \Ord(\log^6 (\epsA^{-1})), S=\Ord(\log^8 (\epsA^{-1})), B=\Ord(\exp(\Ord(\log^2(\epsA^{-1}))))$.
\end{lemma}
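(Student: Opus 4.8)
The plan is to realize $\phi_{\rm TDB}$ as a composition of three modules: first the networks $\NetworkMA,\NetworkSigmaA$ of \cref{lemma:SigmaM} that reproduce $m_t$ and $\sigma_t$; then a one–dimensional module that, given $(x_i,\NetworkMA(t),\NetworkSigmaA(t))$, outputs an approximation of the diffused B-spline basis $\mathcal{D}_{k,j}(x_i,t)$; and finally a product module that multiplies the $d$ one–dimensional outputs into $E_{k,j}(x,t)=\prod_{i=1}^d\mathcal{D}_{k,j}(x_i,t)$. Since $d$ is a fixed constant and a ReLU network can approximate $(u_1,\dots,u_d)\mapsto\prod_i u_i$ on any box $[-R',R']^d$ up to accuracy $\eta$ with depth and width $\Ord(\log(R'/\eta))$ \citep{yarotsky2017error,schmidt2020nonparametric}, and since $|\mathcal{D}_{k,j}|$ (hence $|E_{k,j}|$) is bounded by a polynomial in $N$ on the relevant domain, this last module only enlarges the one–dimensional module by a polylogarithmic factor and converts a per–coordinate error of order $\eps\cdot\mathrm{poly}(N^{-1})$ into the target error $\eps$. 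So the whole task reduces to the one–dimensional construction, truncated so as to vanish for $x$ outside a box of side $\mathrm{poly}(\log N)$, outside which $E_{k,j}(\cdot,t)\le\eps$ anyway.

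For the one–dimensional module, substitute $u=2^{k_i}y_i-j_i$ in $\mathcal{D}_{k,j}(x_i,t)$; since the cardinal B-spline $\mathcal N_l$ is supported on $[0,l+1]$ and is there a piecewise polynomial of degree $l$,
\[
\mathcal{D}_{k,j}(x_i,t)=\frac{2^{-k_i}}{\sigma_t\sqrt{2\pi}}\int_0^{l+1}\mathcal N_l(u)\exp\!\Big(-\tfrac{(x_i-m_t2^{-k_i}(u+j_i))^2}{2\sigma_t^2}\Big)\mathrm du ,
\]
so the compact support of $\mathcal N_l$ already makes the integral run over a bounded interval (this is the ``clipping'' of the overview) and the only singular behaviour, as $t\to 0$, is the $\sigma_t^{-1}$ prefactor, which is $\lesssim\mathrm{poly}(N)$ for $t\ge\underline T=\mathrm{poly}(N^{-1})$. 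Splitting $[0,l+1]$ into its $\Ord(1)$ polynomial pieces, $\mathcal{D}_{k,j}(x_i,t)$ becomes a fixed finite combination of integrals $\int_a^b u^m e^{-(\alpha-\beta u)^2/(2\gamma^2)}\,\mathrm du$ with $\alpha=x_i-m_t2^{-k_i}j_i$, $\beta=m_t2^{-k_i}$, $\gamma=\sigma_t$, $m\le l$; the substitution $v=(\beta u-\alpha)/\gamma$ and the recursion $\int v^m e^{-v^2/2}\mathrm dv=-v^{m-1}e^{-v^2/2}+(m-1)\int v^{m-2}e^{-v^2/2}\mathrm dv$ express each such integral as $P\,e^{-(\alpha-\beta a)^2/(2\gamma^2)}+\tilde P\,e^{-(\alpha-\beta b)^2/(2\gamma^2)}+Q\,\big(\mathrm{erf}(\tfrac{\beta b-\alpha}{\sqrt2\gamma})-\mathrm{erf}(\tfrac{\beta a-\alpha}{\sqrt2\gamma})\big)$ for fixed polynomials $P,\tilde P,Q$ of degree $\le l$ in $x_i,m_t,2^{-k_i},\sigma_t^{-1},m_t^{-1}$. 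Thus $\mathcal{D}_{k,j}(x_i,t)$ is an explicit elementary expression built from $\Ord(1)$ additions and multiplications, the reciprocals $\sigma_t\mapsto\sigma_t^{-1}$ and $m_t\mapsto m_t^{-1}$ (both $\gtrsim\mathrm{poly}(N^{-1})$ on $[\underline T,\overline T]$), and $\Ord(1)$ evaluations of $\exp(-(\cdot)^2)$ and $\mathrm{erf}$. Approximating $\exp(-(\cdot)^2)$ and $\mathrm{erf}$ on the range $[-R,R]$, $R=\mathrm{poly}(N)$, by polynomials of degree $\Ord(\log(R\eta^{-1}))$ turns $\mathcal{D}_{k,j}$ into a rational function of $(x_i,m_t,\sigma_t)$, which a ReLU network implements to accuracy $\eta$ in depth/width $\Ord(\mathrm{polylog}(R\eta^{-1}))$ via polynomial-evaluation, reciprocal and multiplication gadgets \citep{yarotsky2017error,schmidt2020nonparametric}, with $\NetworkMA,\NetworkSigmaA$ prepended.

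The main obstacle is the error and size bookkeeping of this composition, which is delicate precisely because of the $\sigma_t^{-1}$ (and $m_t^{-1}$) factors. The arguments of $\mathrm{erf}$ and $\exp(-(\cdot)^2)$, and the polynomial prefactors, are of order $\mathrm{poly}(N)$ and, as functions of $(x_i,m_t,\sigma_t)$, are Lipschitz with constant $\lesssim\mathrm{poly}(N)$; propagating the errors $\epsM$ of \cref{lemma:SigmaM} and the per-gadget errors $\eta$ through the $\Ord(1)$ stages therefore forces $\epsM,\eta=\eps\cdot\mathrm{poly}(N^{-1})=\eps^{1+o(1)}$, which only multiplies depths and widths by $\mathrm{polylog}(\eps^{-1})$ factors; carefully tracking the exponents — a degree-$\Ord(\log(\eps^{-1}))$ polynomial whose individual multiplication sub-gadgets are themselves $\Ord(\log^2(\eps^{-1}))$-deep, composed with the two reciprocals and with $\NetworkMA,\NetworkSigmaA$, then concatenated over the $d$ coordinates and capped with the product module — gives $L=\Ord(\log^4(\eps^{-1}))$, $\|W\|_\infty=\Ord(\log^6(\eps^{-1}))$, $S=\Ord(\log^8(\eps^{-1}))$. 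The norm constraint is the least benign quantity: each multiplication gadget rescales its inputs from $[-R,R]$ to $[-1,1]$ and back at a weight cost $\Ord(R)=\exp(\Ord(\log(\eps^{-1})))$, and $\Ord(\log(\eps^{-1}))$ of them are chained inside the polynomial evaluations, while \cref{lemma:SigmaM} already contributes $\exp(\Ord(\log^2(\epsM^{-1})))$; multiplying these yields $B=\exp(\Ord(\log^2(\eps^{-1})))$. Assembling the three modules and collecting the above estimates proves the lemma.
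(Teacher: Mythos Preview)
Your approach is essentially correct and takes a genuinely different route from the paper. The paper (in the formal versions, \cref{Lemma:DiffusionBasis1,Lemma:DiffusionBasis2}) first clips the integration region to a band of width $O(\sigma_t\sqrt{\log\eps^{-1}})$ around $x/m_t$, then Taylor-expands $\exp(-z)$ to degree $S=O(\log\eps^{-1})$ \emph{inside} the integral (\cref{Lemma:TaylorExp}), and integrates the resulting polynomials in closed form; the outcome is a double sum over $s\le S$, $l'\le l$ of terms like $\sigma_t^{l'}(jm_t-2^kx)^{l-l'}m_t^{-(l+1)}\cdot[(\text{clipped endpoint})^{l'+2s+1}]$, with no $\mathrm{erf}$ anywhere. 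You instead integrate first (obtaining exact $\mathrm{erf}$ and $e^{-(\cdot)^2}$ evaluations at the endpoints) and approximate those afterwards.

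There is one real slip: $\exp(-z^2)$ and $\mathrm{erf}(z)$ on $[-R,R]$ with $R=\mathrm{poly}(N)$ \emph{cannot} be uniformly approximated by polynomials of degree $O(\log(R\eta^{-1}))$; the Taylor remainder is $\sim R^{2n}/n!$, forcing $n\gtrsim R^2$ (this is exactly the $\max\{4eA^2,\log\eps^{-1}\}$ in \cref{Lemma:TaylorExp}). The fix is to clip the argument to $|z|\le C\sqrt{\log\eta^{-1}}$ first via $\NetworkClipA$ --- since $e^{-z^2}$ and $1\mp\mathrm{erf}(z)$ are already $\le\eta$ outside that range --- and only then apply the degree-$O(\log\eta^{-1})$ polynomial. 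When you do this you must also evaluate the prefactors $P,\tilde P$ at the \emph{clipped} value (otherwise you multiply a $\mathrm{poly}(N)$-sized $P(v_a)$ by an $\eta$-sized error in the exponential); this is harmless because $P(z)e^{-z^2/2}$ is itself $O(\eta\,\mathrm{polylog}\,\eta^{-1})$ at the clipping boundary. With this correction your size bookkeeping goes through.

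The paper's route sidesteps the issue entirely: by Taylor-expanding before integrating, the only endpoint quantities that appear are already wrapped in $\min/\max$ against $\pm C\sqrt{\log\eps^{-1}}$ (the clipping \emph{is} the integration-region truncation), so only polynomials of bounded arguments ever occur. Your route is more transparent --- it exhibits $\mathcal{D}_{k,j}$ as a genuine elementary function of $(x_i,m_t,\sigma_t)$ --- but needs the extra clipping step you omitted.
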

Here $\phi_{\rm TDB}$ approximates $E_{k,j}(x,t)$ given $(x,m_t,\sigma_t)$.
 Then we use $\phi_{\rm TDB}(x,\NetworkMA(t),\NetworkSigmaA(t))$ as the approximation of $E_{k,j}(x,t)$, and $p_t(x)$ is finally approximated.
Similar approximation can also be made for $\nabla p_t(x)$, and the score is finally approximated together with $\nabla \log p_t(x) = \frac{\nabla p_t(x)}{p_t(x)}$ and we obtain the bound 
as in \cref{theorem:Approximation}.

We remark that the network size given above is
slightly larger than that for the B-spline basis (\citet{suzuki2018adaptivity}) because approximating integrals and exponential functions (\cref{subsection:Preparation-Exponential}) and rational functions (\cref{subsection:Preparation-Multiplicative}) is more difficult than realizing the B-spline basis via polynomials.

We also remark that, in this construction, the approximation error for $\nabla p_t(x)$ is amplified in the area where $p_t(x)\ll 1$.
This is why we need the higher-order smoothness of $p_0$ in the area with distance less than $\tilde{\Ord}(\sqrt{t})$ from the edge of the support (\cref{assumption:BoundarySmoothness}).
This approach is used during $t\in [\underline{T},3N^{-\frac{2-\delta}{d}}]$, and it suffices to set $a_0$ to $a_0=N^{-\frac{1-\delta}{d}}$.


\paragraph{Utilizing the smoothness induced by the noise}
The above approach enables approximation of the score in $t\ll 1$, when the score is highly non-smooth, by using the structure of $p_0$.
On the other hand, after a certain period of time, the shape of $p_t$ gets almost like a Gaussian, very smooth and easy to be approximated.
This paragraph extends the previous approach and gives an alternative approximation based on 
the smoothness induced by the noise, yielding a tighter bound.

We begin with evaluating the derivatives of $p_t$ w.r.t. $t$.
\begin{lemma}
        For any $k\in \Z_+$, there exists a constant $C_{\rm a}$ depending only on $k$, $d$, and $C_f$ such that
    \begin{align}
        \left|\pd_{x_{i_1}}\pd_{x_{i_2}}\cdots \pd_{x_{i_k}} p_t(x)\right| \leq \frac{C_{\rm a}}{\sigma_t^k}.
    \end{align}
\end{lemma}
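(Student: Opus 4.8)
The plan is to bound the spatial derivatives of $p_t$ by exploiting that $p_t = K_t * (\text{scaled } p_0)$ is a Gaussian convolution, so that each spatial derivative can be transferred onto the Gaussian kernel, where it costs a factor of $\sigma_t^{-1}$ per derivative. Concretely, write
\begin{align}
p_t(x) = \int p_0(y)\, \frac{1}{\sigma_t^d (2\pi)^{d/2}} \exp\!\left(-\frac{\|x - m_t y\|^2}{2\sigma_t^2}\right) \dy,
\end{align}
and after the substitution $z = (x - m_t y)/\sigma_t$ (equivalently $y = (x - \sigma_t z)/m_t$), express $p_t(x) = m_t^{-d}\int p_0\!\big((x-\sigma_t z)/m_t\big)\, \varphi(z)\, \dz$ with $\varphi$ the standard Gaussian density. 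I would then differentiate under the integral sign. First I would handle the case where the derivatives fall on $\varphi$: differentiating the original form $k$ times in the variables $x_{i_1},\dots,x_{i_k}$ produces $\sigma_t^{-k}$ times an integral of $p_0(y)$ against (a Hermite polynomial in $z = (x-m_t y)/\sigma_t$) times $\varphi(z)$. Since $\|p_0\|_\infty \le C_f$ and $\int |H_\alpha(z)|\varphi(z)\dz$ is a finite constant depending only on $k$ and $d$, this yields the bound $C_{\rm a}\sigma_t^{-k}$ with $C_{\rm a}$ depending only on $k$, $d$, and $C_f$ (using $\int p_0 = 1$ to absorb the mass, or just $\|p_0\|_\infty \le C_f$ together with the Gaussian tail to control the region of integration).

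The key technical point to get right is the role of the lower bound $\sigma_t \le 1$ (recall $\sigma_t \simeq \sqrt{t}\wedge 1$) versus possible growth of $m_t^{-1}$: since $m_t = \exp(-\int_0^t \beta_s\dd s) \in (0,1]$ and on $[\underline T, \overline T]$ with $\overline T \simeq \log N$ we have $m_t$ bounded below by a polynomially small quantity, but in fact for this lemma the cleanest route avoids the $y$-substitution entirely and keeps the kernel in $x$: differentiate $K_t(x|y)$ directly in $x$, obtaining $\partial_{x_{i_1}}\cdots\partial_{x_{i_k}} K_t(x|y) = \sigma_t^{-k}\, \widetilde H_{k}\!\big((x-m_ty)/\sigma_t\big)\, K_t(x|y)$ where $\widetilde H_k$ is a product/sum of Hermite polynomials of total degree $k$ in the relevant coordinates, and then integrate against $p_0(y)\dy$. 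Bounding $|p_0(y)| \le C_f$ and using $\int \widetilde H_k(z)\varphi(z)\,\dz$-type estimates (after rescaling $\dy$ to $\dz$, which contributes $\sigma_t^d/m_t^d$ but this is exactly cancelled by the $\sigma_t^{-d} m_t$ hidden normalization — I should double-check the bookkeeping here, as it is the one place a stray $m_t^{-d}$ could sneak in) gives the claim. Actually the robust statement is: $|\partial^\alpha p_t(x)| \le \sigma_t^{-|\alpha|}\int p_0(y) |\widetilde H_\alpha((x-m_ty)/\sigma_t)| K_t(x|y)\dy \le C \sigma_t^{-|\alpha|} \|p_0\|_\infty \le C_{\rm a}\sigma_t^{-k}$, where the middle inequality follows because $\sup_x \int |\widetilde H_\alpha((x-m_ty)/\sigma_t)| K_t(x|y)\dy$ is a dimensional constant (change variables $u = (x-m_ty)/\sigma_t$: the Jacobian $\sigma_t^d m_t^{-d}$ times the $\sigma_t^{-d}$ in $K_t$ gives $m_t^{-d}$, but then $\|p_0\|_\infty$ on a bounded support of volume $\le 2^d m_t^{-d}\sigma_t^d$... ) — so I would in fact present it using $\int p_0(y)\dy = 1$ as the normalization rather than $\|p_0\|_\infty$, since $\int p_0(y) g((x-m_ty)/\sigma_t) K_t(x|y)\dy \le \|g\|_\infty \cdot \frac{1}{\sigma_t^d(2\pi)^{d/2}} \cdot \sup$ is not quite a probability average; the correct bound is to note $K_t(x|y)\dy$ is \emph{not} normalized in $y$ but $\int p_0(y)K_t(x|y)\dy = p_t(x)$, so instead I bound $|\partial^\alpha p_t(x)| \le \sigma_t^{-|\alpha|}\, C_{k,d}\, \sup_y K_t(x|y)\int p_0(y)\dy$ only if $\widetilde H_\alpha$ is bounded, which it is not.

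The clean argument, which I would actually write, is therefore: fix $x$, substitute $y = (x-\sigma_t w)/m_t$ so that $p_t(x) = m_t^{-d}\int p_0\!\big(\tfrac{x-\sigma_t w}{m_t}\big)\varphi(w)\dw$ where $\varphi(w) = (2\pi)^{-d/2}e^{-\|w\|^2/2}$, then $\partial_{x_{i_1}}\cdots\partial_{x_{i_k}} p_t(x) = m_t^{-d}\int p_0\!\big(\tfrac{x-\sigma_t w}{m_t}\big)\cdot (\text{derivative hitting }\varphi)$ — no, the derivative hits $p_0$ here, which is bad since $p_0$ is only Besov. So instead substitute the \emph{other} way. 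The honest resolution: write $p_t(x) = \int p_0(y) K_t(x|y)\dy$, differentiate onto $K_t$ in $x$ (legitimate, $p_0$ stays untouched), giving $|\partial^\alpha p_t(x)| \le \int p_0(y)\, \sigma_t^{-|\alpha|}\, |P_\alpha(z_{x,y})|\, K_t(x|y)\dy$ with $z_{x,y} = (x-m_ty)/\sigma_t$ and $P_\alpha$ a polynomial of degree $|\alpha|$; then change variables $y \mapsto z = z_{x,y}$, $\dy = (\sigma_t/m_t)^d \dz$, and $K_t(x|y) = \sigma_t^{-d}\varphi(z)$, so $K_t(x|y)\dy = m_t^{-d}\varphi(z)\dz$, giving $|\partial^\alpha p_t(x)| \le \sigma_t^{-|\alpha|} m_t^{-d}\int p_0\!\big(\tfrac{x-\sigma_t z}{m_t}\big)|P_\alpha(z)|\varphi(z)\dz \le \sigma_t^{-|\alpha|} m_t^{-d} C_f \int_{\{(x-\sigma_t z)/m_t \in [-1,1]^d\}} |P_\alpha(z)|\varphi(z)\dz \le C_f\, \sigma_t^{-|\alpha|} m_t^{-d}\!\int |P_\alpha(z)|\varphi(z)\dz$. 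This has a spurious $m_t^{-d}$, so the \emph{main obstacle} is removing it; I expect it is handled by instead keeping $\int p_0(y)\dy \le 1$ after noting that $|P_\alpha(z_{x,y})| K_t(x|y) \le C_{k,d}\, \sigma_t^{-d} \sup_{z}(|P_\alpha(z)|e^{-\|z\|^2/2})$... which still gives $\sigma_t^{-d}$, not $O(1)$. The genuine fix is that $p_t$ is a density so one should bound $|\partial^\alpha p_t(x)|/p_t(x)$ type quantities, or use that on the bulk $p_t \gtrsim 1$; but for the \emph{stated} lemma I believe the authors use the support of $p_0$ being in $[-1,1]^d$ together with $m_t \asymp 1$ for $t \le O(1)$ and $1-m_t \simeq t\wedge 1$, so that $m_t^{-d} = O(1)$ on the whole relevant range, absorbing it into $C_{\rm a}$. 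I would therefore (i) record $m_t \in [m_{\overline T}, 1]$ with $m_{\overline T} \ge$ const $>0$ since $\overline T = \tilde O(1)$, (ii) do the change of variables as above, (iii) bound $\int |P_\alpha(z)|\varphi(z)\dz \le C(k,d)$, and (iv) conclude $|\partial_{x_{i_1}}\cdots\partial_{x_{i_k}} p_t(x)| \le C_{\rm a}\sigma_t^{-k}$ with $C_{\rm a} = C_{\rm a}(k,d,C_f)$, noting the $\overline T$-dependence through $m_{\overline T}$ is itself an absolute constant under Assumption~\ref{assumption:SmoothBeta}.
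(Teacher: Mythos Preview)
Your approach is essentially the paper's: differentiate the Gaussian kernel in $x$, write the $k$-th derivative as $\sigma_t^{-k}$ times an integral of $p_0(y)$ against a Hermite-type polynomial $P_\alpha\big((x-m_ty)/\sigma_t\big)$ times the kernel, and then bound the remaining integral by $O(1)$. You correctly arrive at
\begin{align}
|\partial^\alpha p_t(x)| \le \sigma_t^{-|\alpha|}\, m_t^{-d}\, C_f \int |P_\alpha(z)|\,\varphi(z)\,\dz
\end{align}
after changing variables, and correctly flag the $m_t^{-d}$ factor as the obstacle.

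Your proposed resolution, however, does not work. You claim $m_{\overline T} \ge \text{const} > 0$ because $\overline T = \tilde O(1)$, but $\overline T \simeq \log N$ and $m_t = \exp(-\int_0^t \beta_s\,\ds)$, so $m_{\overline T}$ is polynomially small in $N$, not bounded below by an absolute constant. More fundamentally, the lemma demands that $C_{\rm a}$ depend only on $k$, $d$, $C_f$; no dependence on $\overline T$ or $N$ is allowed, and the bound must in fact hold for all $t>0$.

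The paper closes this gap by a two-case argument based on $m_t^2 + \sigma_t^2 = 1$:
\begin{itemize}
\item For $t \le 1$, one has $m_t \simeq 1$, so $m_t^{-d} = O(1)$ and your change-of-variables bound already gives the result.
\item For $t \ge 1$, one has $\sigma_t \simeq 1$. Here one does \emph{not} change variables; instead one bounds the integrand pointwise. The function $z \mapsto |P_\alpha(z)|\,(2\pi)^{-d/2}e^{-\|z\|^2/2}$ is bounded by some $C(k,d)$, hence $\big|P_\alpha\big(\tfrac{x-m_ty}{\sigma_t}\big)\big|\,K_t(x|y) \le C(k,d)\,\sigma_t^{-d}$. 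Since $p_0$ is supported on $[-1,1]^d$ with $|p_0|\le C_f$, integrating over $y\in[-1,1]^d$ gives at most $C_f\, C(k,d)\, 2^d\, \sigma_t^{-d} = O(1)$.
\end{itemize}
This case split is the missing ingredient in your argument.
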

We have that $\|p_{t_*}\|_{W_p^k}=\Ord({t_*}^{-\frac{k}{2}})$ for $t_*>0$ from this, and that $W_p^k\hookrightarrow B_{p,\infty}^k$.
For $t>t_*$, consider $p_{t}$ as the diffused distribution from $p_{t_*}$, instead of $p_0$.
We can show that $\nabla \log p_{t}$ can be approximated with a neural network with the size $N'$, with an $L^2$ error of $\Ord\left(\frac{{N'}^{-2k/d}}{\sigma_t^2} \cdot t_*^{-k}\right)$.
If $N'$ and $k$ are sufficiently large, this is tighter than the previous bound of $\frac{N^{-\frac{2s}{d}}}{\sigma_t^{2}}$.
This argument is formalized as follows. In \cref{section:Appendix-Approximation}, this is presented as \cref{Lemma:ScoreFunc-2}.
\begin{lemma}\label{lemma:ApproximationSmoothArea}
    Let $N\gg 1$ and $N'\geq t_*^{-d/2}N^{\delta/2}$.
    Suppose $t_*\geq N^{-(2-\delta)/d}$.
    Then there exists a neural network $\NetworkScoreC'\in \Phi(L,W,S,B)$ that satisfies
    \begin{align}
    \int_x p_{t}(x)  \|\NetworkScoreC'(x,t) - s(x,t)\|^2 \dx \lesssim \frac{N^{-\frac{2(s+1)}{d}}}{\sigma_t^2}
    \end{align}
    for $t\in [2t _*,\overline{T}]$.
    Specifically, $L = \Ord (\log^4 (N)),\| W\|_\infty = \Ord (N),S = \Ord (N')$, and $ B = \exp(\Ord(\log^4 N ))$.
\end{lemma}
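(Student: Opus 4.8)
The plan is to re-run the construction behind \cref{theorem:Approximation} with the forward process on $[t_*,\overline{T}]$ viewed as an Ornstein--Uhlenbeck flow \emph{started from $p_{t_*}$} rather than from $p_0$, exploiting that the Gaussian mollification has made $p_{t_*}$ far smoother than a generic Besov function. Fix $k$ to be a sufficiently large constant depending only on $s$, $d$, and $\delta$, say $k=\lceil 2(s+1)/\delta\rceil+1$. From the derivative estimate $|\pd_{x_{i_1}}\cdots\pd_{x_{i_k}}p_t(x)|\le C_{\rm a}\sigma_t^{-k}$ established above, together with $\sigma_{t_*}\simeq\sqrt{t_*}\land 1$, the boundedness of $p_0$ on $[-1,1]^d$ (\cref{assumption:InBesov}), and the fact that $p_{t_*}$ has sub-Gaussian tails of variance at most one, one obtains $\|p_{t_*}\|_{W_p^k}\lesssim\sigma_{t_*}^{-k}\le{\rm polylog}(N)\cdot t_*^{-k/2}$, and hence, by $W_p^k\hookrightarrow B_{p,\infty}^k$, $p_{t_*}\in U(B_{p,\infty}^k([-R_N,R_N]^d);C t_*^{-k/2})$ for some $R_N=\tilde{\Ord}(1)$. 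Outside $[-R_N,R_N]^d$ both $p_{t_*}$ and every $p_t$ with $t\ge t_*$ are bounded by $N^{-c}$ for an arbitrarily large constant $c$, so a routine Gaussian-tail estimate shows that region contributes at most $N^{-c}$ to the integral in the statement; moreover $p_{t_*}\in\mathcal{C}^\infty(\R^d)$, so the boundary-smoothness requirement used in \cref{theorem:Approximation} (playing the role of \cref{assumption:BoundarySmoothness}) holds here for free.

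By the Markov property of \eqref{eq:Ornstein-Uhlenbeck}, for $t\ge t_*$ we have $X_t\mid X_{t_*}\sim\mathcal{N}(m_{t|t_*}X_{t_*},\sigma_{t|t_*}^2 I_d)$ with $m_{t|t_*}=m_t/m_{t_*}$ and $\sigma_{t|t_*}^2=1-m_t^2/m_{t_*}^2$, so that
\[
p_t(x)=\int p_{t_*}(y)\,\frac{1}{(\sqrt{2\pi}\,\sigma_{t|t_*})^d}\exp\!\Big(\!-\tfrac{\|x-m_{t|t_*}y\|^2}{2\sigma_{t|t_*}^2}\Big)\dy ,
\]
which is precisely the representation of $p_t$ in terms of $p_0$ used in \cref{section:Approximation}, with $(m_t,\sigma_t)$ replaced by $(m_{t|t_*},\sigma_{t|t_*})$ and the horizon $[\underline{T},\overline{T}]$ replaced by $[2t_*,\overline{T}]$; and since $t\ge 2t_*$ forces $t-t_*\ge t/2$, one has $\sigma_{t|t_*}\simeq\sigma_t$ uniformly on $[2t_*,\overline{T}]$. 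We may thus replay the construction verbatim: expand $p_{t_*}$ into an $N'$-term tensor-product B-spline superposition $f_{N'}$ via \cref{lemma:SuzukiBesov-Main}, so that $\|p_{t_*}-f_{N'}\|_{L^2}\lesssim N'^{-k/d}\|p_{t_*}\|_{B_{p,\infty}^k}\lesssim{\rm polylog}(N)\cdot N'^{-k/d}t_*^{-k/2}$; push each basis through the Gaussian kernel to form the diffused bases; approximate $m_{t|t_*},\sigma_{t|t_*}$ by the networks of \cref{lemma:SigmaM} and each diffused basis by $\phi_{\rm TDB}$ of \cref{lemma:DiffusedBesov-Main} with tolerance $\eps={\rm poly}(N^{-1})$; assemble approximations of $p_t$ and $\nabla p_t$; and form the quotient to approximate $\nabla\log p_t=\nabla p_t/p_t$, the global $\mathcal{C}^\infty$-control of $p_{t_*}$ handling the low-density region as \cref{assumption:BoundarySmoothness} does for $p_0$. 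The error-propagation argument in the proof of \cref{theorem:Approximation}---the squared B-spline error divided by $\sigma_{t|t_*}^2\simeq\sigma_t^2$, up to polylog factors---then yields, for $t\in[2t_*,\overline{T}]$,
\[
\int_x p_t(x)\,\|\NetworkScoreC'(x,t)-s(x,t)\|^2\dx \;\lesssim\; \frac{{\rm polylog}(N)\cdot N'^{-2k/d}\,t_*^{-k}}{\sigma_t^2}+N^{-c}.
\]

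It remains to bound $N'^{-2k/d}t_*^{-k}$: from $N'\ge t_*^{-d/2}N^{\delta/2}$ we get $N'^{-2k/d}\le t_*^{k}N^{-\delta k/d}$, hence $N'^{-2k/d}t_*^{-k}\le N^{-\delta k/d}$, and since $k$ was chosen with $\delta k>2(s+1)$ the polylog factor is absorbed and the first term is $\lesssim N^{-2(s+1)/d}/\sigma_t^2$ for $N\gg1$; as $\sigma_t\ge\sigma_{2t_*}\gtrsim\sqrt{t_*}\gtrsim N^{-(2-\delta)/(2d)}$ on $[2t_*,\overline{T}]$ while $1/\sigma_t^2\ge1$, choosing $c$ large makes the tail term $N^{-c}\le N^{-2(s+1)/d}/\sigma_t^2$ as well, giving the claim. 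For the architecture, the network stacks $\Ord(N')$ diffused-basis subnetworks, each of depth $\Ord(\log^4\eps^{-1})=\Ord(\log^4 N)$ and polylogarithmic width and sparsity (\cref{lemma:SigmaM}, \cref{lemma:DiffusedBesov-Main}), followed by a summation and a quotient/clipping layer; the bookkeeping of \cref{theorem:Approximation}, now with $N'$ B-spline bases, gives the stated $L=\Ord(\log^4 N)$, $\|W\|_\infty=\Ord(N)$, $S=\Ord(N')$, and $B=\exp(\Ord(\log^4 N))$. \emph{The main obstacle is not the rate arithmetic but verifying that the machinery of \cref{theorem:Approximation} transfers cleanly to the new initial law: that $p_{t_*}$, although supported on all of $\R^d$ and not bounded below, can be treated on a $\tilde{\Ord}(1)$-cube up to an $N^{-c}$ error; that its Besov radius has the precise order $\sigma_{t_*}^{-k}$ coming from the derivative estimate; and that the amplification of the approximation error in $\nabla p_t/p_t$ over the low-density region is still controlled---which is exactly why the global $\mathcal{C}^\infty$-smoothness produced by the noise is the right replacement for \cref{assumption:BoundarySmoothness}.}
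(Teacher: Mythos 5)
Your overall strategy is the same as the paper's: view $p_t$ for $t\ge 2t_*$ as the diffusion of the mollified density $p_{t_*}$, use the derivative bound $|\pd^k p_{t_*}|\lesssim \sigma_{t_*}^{-k}$ to place $p_{t_*}$ in a high-order Besov ball of radius $\simeq t_*^{-k/2}$, expand it into $N'$ B-spline bases, and replay the diffused-B-spline construction with $(m_{t|t_*},\sigma_{t|t_*})$ in place of $(m_t,\sigma_t)$. The paper does exactly this (it resets time $t\leftarrow t-t_*$ and reruns the proof of the $t\le T_4$ case with $p_{t_*}$ as the new initial law).

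There is, however, a quantitative gap in your error propagation, and it makes your choice of $k$ too small. You assert that the final $L^2(p_t)$ error is ``the squared B-spline error divided by $\sigma_t^2$, up to polylog factors.'' That is not what the quotient step costs. To approximate $\nabla\log p_t=\nabla p_t/p_t$ one must truncate the denominator at some threshold $\epsilon_0$ below which the score error is controlled by the tail bounds; in the region $\epsilon_0\le p_t(x)\lesssim 1$ the pointwise score error is amplified by $\epsilon_0^{-1}$, so the $L^2(p_t)$ error picks up a factor $\epsilon_0^{-2}$, which in the paper is $N^{(4s+6)/d}$ (the threshold is $\epsilon_0=N^{-(2s+3)/d}$, dictated by the target accuracy $N^{-(s+1)/d}$ of the quotient). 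This amplification is polynomial in $N$, not polylogarithmic. Consequently the B-spline $L^2$ error must satisfy $\|p_{t_*}-f_{N'}\|_{L^2}^2\lesssim N^{-(6s+10)/d}$ so that $N^{(4s+6)/d}\cdot\|p_{t_*}-f_{N'}\|_{L^2}^2\lesssim N^{-2(s+1)/d}$; this is why the paper takes the smoothness order $\alpha=2(3s+6)/\delta+1$, giving $N'^{-\alpha/d}t_*^{-\alpha/2}\le N^{-\delta\alpha/(2d)}\lesssim N^{-(3s+5)/d}$. With your $k=\lceil 2(s+1)/\delta\rceil+1$ you only get $\|p_{t_*}-f_{N'}\|_{L^2}^2\lesssim N^{-2(s+1)/d-\delta/d}$, and after the $N^{(4s+6)/d}$ amplification the bound reads $N^{(2s+4-\delta)/d}/\sigma_t^2$, which diverges. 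The repair is mechanical within your framework --- take $k\gtrsim (3s+6)/\delta$ (and correspondingly raise the B-spline order $l$ so that \cref{Lemma:SuzukiBesov} applies at smoothness $k$) --- but as written the rate arithmetic does not close, and the claim that the global $\mathcal{C}^\infty$-smoothness alone ``controls the amplification'' conflates making the numerator/denominator errors small with avoiding the $\epsilon_0^{-2}$ blow-up, which it does not.
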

    Setting $t_*=N^{-\frac{2-\delta}{d}}$ and $N'=N$ in this lemma, we obtain the bound in \cref{theorem:Approximation} after $t\gtrsim t_*$, without \cref{assumption:BoundarySmoothness}.
     Moreover, further exploiting this lemma later plays an important role for achieving the minimax optimal estimation rate in the $W_1$ distance. 

\section{Generalization of the score network}\label{section:Generalization}
This section converts \cref{theorem:Approximation} into the generalization bound of the score network. 
We assume $n\gg 1$ and \cref{assumption:BoundarySmoothness}
with $a_0=n^{-\frac{1-\delta}{d+2s}}$, and take $N=n^{-d/(d+2s)}$, $\underline{T}={\rm poly}(N^{-1})={\rm poly}(n^{-1})$, and $\overline{T}\simeq \log N\simeq \log n$.
The formal proofs are found in \cref{section:Appendix-Generalization}.
We begin with the following fact (\cref{Lemma:VincentEquivalence}; \citet{vincent2011connection}).
\begin{lemma}
    The following holds for all $s(x,t)$ and $t>0$:
       \begin{align}
        &  \int_x\int_y  \|s(x,t)-\nabla \log p_t(x|y) \|^2p_t(x|y)p_0(x) \dy\dx
\\ &=  \int_x \|s(x,t) - \nabla \log p_t(x)\|^2 p_t(x) \dx + C_t
        .
    \end{align} 
\end{lemma}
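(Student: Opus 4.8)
The plan is to carry out the classical ``completing the square'' computation behind denoising score matching \citep{vincent2011connection}, while tracking the integrability points that arise because $\nabla_x\log p_t(x|y)$ is unbounded in $y$ but harmless once averaged. Fix $t>0$ and write $q_t(x,y):=p_t(x|y)\,p_0(y)$ for the joint density of $(x_t,x_0)$; since $p_t(x|y)=\mathcal{N}(m_ty,\sigma_t^2 I_d)$ and $p_0$ is a density, $\int_y q_t(x,y)\dy=p_t(x)$.

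First I would expand the squared norm $\|s(x,t)-\nabla_x\log p_t(x|y)\|^2$ appearing on the left-hand side into three terms,
\begin{align*}
&\int_x\int_y\|s(x,t)\|^2 q_t(x,y)\dy\dx
-2\int_x\int_y\langle s(x,t),\nabla_x\log p_t(x|y)\rangle q_t(x,y)\dy\dx\\
&\qquad+\int_x\int_y\|\nabla_x\log p_t(x|y)\|^2 q_t(x,y)\dy\dx .
\end{align*}
The first term marginalizes to $\int_x\|s(x,t)\|^2 p_t(x)\dx$. The third term does not involve $s$; using $\nabla_x\log p_t(x|y)=-(x-m_ty)/\sigma_t^2$ it evaluates to $d/\sigma_t^2$, which is finite precisely because we clip $\underline T>0$. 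The crux is the cross term: from the identity $p_t(x|y)\,\nabla_x\log p_t(x|y)=\nabla_x p_t(x|y)$ and differentiating under the integral sign,
\[
\int_y\nabla_x\log p_t(x|y)\,q_t(x,y)\dy=\int_y\nabla_x p_t(x|y)\,p_0(y)\dy=\nabla_x\int_y p_t(x|y)\,p_0(y)\dy=\nabla_x p_t(x)=p_t(x)\,\nabla_x\log p_t(x),
\]
so the cross term equals $-2\int_x\langle s(x,t),\nabla_x\log p_t(x)\rangle p_t(x)\dx$. Performing the same expansion on the right-hand side and subtracting, the $\|s\|^2$ and cross terms cancel identically, leaving the $s$-independent quantity $C_t:=d/\sigma_t^2-\int_x\|\nabla_x\log p_t(x)\|^2 p_t(x)\dx$, which proves the claim.

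The only non-routine step is to justify the interchange of $\nabla_x$ and $\int_y$ together with the finiteness of every integral that appears. For fixed $t>0$, $x\mapsto p_t(x|y)$ is smooth and $\|\nabla_x p_t(x|y)\|$ is dominated, uniformly over $y$ in the compact support of $p_0$ (\cref{assumption:InBesov}), by an integrable Gaussian-type envelope in $x$, so dominated convergence permits differentiation under the integral sign. Finiteness of $C_t$ for $t>0$ follows from the sub-Gaussian tails of $p_t$ and $\|p_0\|_\infty<\infty$, and the cross term is finite for any $s(\cdot,t)\in L^2(p_t)$ --- in particular for the norm-bounded networks in $\mathcal{S}$ used later. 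Note that $C_t$ depends on $t$ (through $\sigma_t$ and $p_t$) but not on the candidate function $s$, which is exactly what the subsequent generalization analysis needs.
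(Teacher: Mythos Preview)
Your proof is correct and follows essentially the same route as the paper's own argument (which is the classical computation of \citet{vincent2011connection}): both expand the square, use the key identity $\int_y \nabla_x p_t(x|y)\,p_0(y)\dy=\nabla_x p_t(x)$ to rewrite the cross term, and observe that the $s$-dependent pieces match while the remainder is an $s$-independent constant. The only cosmetic difference is that the paper starts from the explicit score-matching side and works toward the denoising side, whereas you start from the denoising side; you are also somewhat more explicit than the paper about justifying differentiation under the integral and the finiteness of $C_t$ (and you even compute $\int\int\|\nabla_x\log p_t(x|y)\|^2 q_t=d/\sigma_t^2$ concretely, which the paper leaves symbolic).
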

Here $C_t$ is a constant depending on $p_t$.
According to this, minimizing \eqref{eq:ProblemSetting-ScoreMatching-1} is equivalent to minimizing the difference between the network and the score in $L^2(p_t)$.

Let us define
\begin{align}
\ell_s(x)\!=\!\!\int_{t=\underline{T}}^{\overline{T}}\!\int \! \|s(x_t,t)\!-\!\!\nabla \log p_t(x_t|x)\|^2 p_t(x_t|x)\dx_t\dt,
\end{align}
so that the expected loss \eqref{eq:ProblemSetting-ScoreMatching-1} and the empirical loss are written as $\mathbb{E}_{x\sim p_0}[\ell(x)]$ and $\frac{1}{n}\sum_{i=1}^n\hat{\ell}(x_i)$, respectively.
For the hypothesis $\mathcal{S}$ which we specify later, we define $\mathcal{L}=\{\ell_s|\ s\in \mathcal{S}\}$.
Define the empirical loss minimizer $\hat{s} \in \mathrm{arg}\min_{s\in \mathcal{S}} \frac{1}{n}\sum_i \ell_s(x_{0,i})$. This section evaluates the difference between the empirical loss and \eqref{eq:ProblemSetting-ScoreMatching-1} for $\hat{s}$.
To evaluate the difference, we need to bound (i) $\|\ell\|_\infty$ uniformly over $\mathcal{L}$ and (ii) the covering number of $\mathcal{L}$.

\paragraph{(i) Bounding sup-norm}
According to \cref{theorem:Approximation}, $\hat{s}(x,t)$ can be taken so that $\|\hat{s}(\cdot,t)\|_\infty \lesssim \frac{\log^\frac12 N}{\sigma_t}$.
Thus we limit $\Phi(L,W,S,B)$ of \cref{theorem:Approximation} into 
\begin{align}
     \mathcal{S}:= \{\phi \in \Phi(L,W,S,B)|\ \|\phi(\cdot,t)\|_\infty \lesssim \frac{\log^\frac12 n}{\sigma_t}\}.
\end{align}
Then \cref{subsection:Appendix-Generalization-Prepare-1} shows that, 
\begin{align}
  \sup_{s\in \Phi'} \sup_{x_0\in [-1,1]^d}\ell_s(x_0)  
   \lesssim \log^2 n.
\end{align}

\paragraph{(ii) Covering number evaluation}
By Lemma 3 of \citet{suzuki2018adaptivity} and the fact that  $\|\ell_s\|_\infty$ is bounded by $\|s\|_\infty$ up to $\mathrm{poly}(n)$, we obtain the following.
\begin{lemma}\label{Lemma:CoveringNumber-Main}
    The covering number of $\mathcal{L}$ is evaluated by
    \begin{align}
        \log \mathcal{N}(\mathcal{L}, \|\cdot\|_{L^\infty ([-1,1]^d)}, \delta) \lesssim SL\log(\delta^{-1}L\|W\|_\infty Bn).
    \end{align}
\end{lemma}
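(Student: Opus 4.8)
The plan is to bound the covering number of $\mathcal{L}=\{\ell_s : s\in\mathcal{S}\}$ by first reducing to a covering number bound on the hypothesis class $\mathcal{S}\subset\Phi(L,W,S,B)$ itself, and then invoking the standard ReLU-network covering bound (Lemma 3 of \citet{suzuki2018adaptivity}). The key structural observation is that the map $s\mapsto \ell_s$ is Lipschitz from $(\mathcal{S},\|\cdot\|_{L^\infty})$ to $(\mathcal{L},\|\cdot\|_{L^\infty([-1,1]^d)})$ up to a $\mathrm{poly}(n)$ factor, so that an $\varepsilon$-cover of $\mathcal{S}$ in sup-norm induces a $\mathrm{poly}(n)\cdot\varepsilon$-cover of $\mathcal{L}$; rescaling $\varepsilon$ only changes the log-covering-number bound by an additive $\mathrm{poly}(n)$ term absorbed into the logarithm.

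First I would establish the Lipschitz-type estimate. Fix $s,s'\in\mathcal{S}$ and $x_0\in[-1,1]^d$. Writing
\begin{align}
\ell_s(x_0)-\ell_{s'}(x_0)=\int_{\underline{T}}^{\overline{T}}\!\!\int \big(\|s(x_t,t)-g\|^2-\|s'(x_t,t)-g\|^2\big)\,p_t(x_t|x_0)\,\dx_t\,\dt,
\end{align}
where $g=\nabla\log p_t(x_t|x_0)$, and using the elementary identity $\|a\|^2-\|b\|^2=\langle a-b,a+b\rangle$ with $a=s-g$, $b=s'-g$, the integrand is $\langle s-s',\,s+s'-2g\rangle$, whose absolute value is at most $\|s-s'\|_\infty\big(\|s\|_\infty+\|s'\|_\infty+2\|g\|\big)$. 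On $\mathcal{S}$ we have $\|s(\cdot,t)\|_\infty\lesssim \sigma_t^{-1}\log^{1/2}n$, and $\|\nabla\log p_t(x_t|x_0)\|=\|(m_t x_0-x_t)/\sigma_t^2\|$ integrates against the Gaussian $p_t(\cdot|x_0)$ to something of order $\sigma_t^{-1}\sqrt d$ (in $L^1(p_t(\cdot|x_0))$ and after the $\sqrt d$ tail bound also in a high-probability sense). Since $\underline{T}={\rm poly}(n^{-1})$, we have $\sigma_t^{-1}\le\sigma_{\underline{T}}^{-1}={\rm poly}(n)$, and the $t$-integral over $[\underline{T},\overline{T}]$ of $\sigma_t^{-2}$ is likewise ${\rm poly}(n)$. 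Collecting, $|\ell_s(x_0)-\ell_{s'}(x_0)|\le {\rm poly}(n)\,\|s-s'\|_{L^\infty}$, uniformly in $x_0$, hence $\|\ell_s-\ell_{s'}\|_{L^\infty([-1,1]^d)}\le {\rm poly}(n)\,\|s-s'\|_{L^\infty}$.

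Next I would pass to covering numbers: the above gives
\begin{align}
\mathcal{N}(\mathcal{L},\|\cdot\|_{L^\infty([-1,1]^d)},\delta)\le \mathcal{N}\big(\mathcal{S},\|\cdot\|_{L^\infty},\delta/{\rm poly}(n)\big),
\end{align}
and then apply the standard covering bound for the ReLU network class $\Phi(L,W,S,B)$ (Lemma 3 of \citet{suzuki2018adaptivity}), namely $\log\mathcal{N}(\Phi(L,W,S,B),\|\cdot\|_{L^\infty},\eta)\lesssim SL\log\!\big(\eta^{-1}L\|W\|_\infty B\big)$; note the extra scaling by the domain size of $(x,t)$ — bounded since $x\in[-1,1]^d$ and $t\in[\underline{T},\overline{T}]$ with $\overline{T}\simeq\log n$ — and by the output range contributes only another logarithmic factor. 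Substituting $\eta=\delta/{\rm poly}(n)$ turns $\log(\eta^{-1}\cdots)$ into $\log(\delta^{-1}L\|W\|_\infty B n)$, which is exactly the claimed bound. The main obstacle — and the step demanding the most care — is the sup-norm Lipschitz estimate with an honest ${\rm poly}(n)$ constant: one must control $\|\nabla\log p_t(x_t|x_0)\|$ under the Gaussian $p_t(\cdot|x_0)$ for $t$ as small as $\underline{T}$, where the score of the conditional kernel is large, and confirm that this (together with $\|s\|_\infty$ on $\mathcal{S}$ and the $t$-integral) stays polynomial in $n$ rather than exponential; everything else is a routine invocation of the cited ReLU covering lemma.
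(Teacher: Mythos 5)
Your overall strategy is the same as the paper's: reduce the covering of $\mathcal{L}$ to a covering of $\mathcal{S}$ via a $\mathrm{poly}(n)$-Lipschitz estimate for $s\mapsto\ell_s$ (using $\|a\|^2-\|b\|^2=\langle a-b,a+b\rangle$ together with the sup-norm constraint on $\mathcal{S}$), then invoke Suzuki's covering bound for $\Phi(L,W,S,B)$ with accuracy $\eta=\delta/\mathrm{poly}(n)$. However, there is a genuine gap in the domain over which the cover of $\mathcal{S}$ is taken. The loss $\ell_s(x_0)$ integrates $s(x_t,t)$ against the Gaussian $p_t(\cdot|x_0)$, whose support is all of $\R^d$; your Lipschitz bound $|\ell_s(x_0)-\ell_{s'}(x_0)|\le\mathrm{poly}(n)\,\|s-s'\|_{L^\infty}$ therefore requires control of $\|s-s'\|$ on essentially all of $\R^d\times[\underline{T},\overline{T}]$, not on $[-1,1]^d\times[\underline{T},\overline{T}]$ as you assert when you say the covering domain is ``bounded since $x\in[-1,1]^d$.'' Only $x_0$ lives in $[-1,1]^d$; the integration variable $x_t$ does not. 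The cited ReLU covering bound holds only on a bounded input cube, and a $\delta$-cover of $\mathcal{S}$ on such a cube says nothing about $\|s(x_t,t)-s'(x_t,t)\|$ outside it, where one only has the crude bound $2\sup_{s\in\mathcal{S}}\|s\|_\infty=\mathrm{poly}(n)$.

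The missing step --- exactly what the paper's proof supplies --- is to first truncate the $x_t$-integral in $\ell_s$ to a cube $[-C,C]^d$ with $C=\mathrm{poly}(n)$, showing via the Gaussian tail of $p_t(\cdot|x_0)$ and the uniform bounds $\|s(\cdot,t)\|_\infty\lesssim\sigma_t^{-1}\log^{1/2}n$ and $\int_{\|x_t\|_\infty\ge C}\|\nabla\log p_t(x_t|x_0)\|^2p_t(x_t|x_0)\,\mathrm{d}x_t\le\delta/\Ord(\overline{T})$ that the discarded tail contributes at most $\Ord(\delta)$ to $\ell_s(x_0)$ uniformly over $s\in\mathcal{S}$ and $x_0\in[-1,1]^d$. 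After this truncation your Lipschitz argument goes through with the cover of $\mathcal{S}$ taken in $L^\infty([-C,C]^{d+1})$, and the enlarged domain only costs an extra $\log C=\Ord(\log n)$ inside the logarithm, which is absorbed into the stated $\log(\delta^{-1}L\|W\|_\infty Bn)$. With that truncation inserted, the rest of your argument (the identity for the difference of squared norms, the bound on $\|\nabla\log p_t(x_t|x_0)\|$ on the truncated region, and the $\mathrm{poly}(n)$ bound on the $t$-integral) is sound and coincides with the paper's.
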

The proof is found in \cref{subsection:Appendix-CoveringNumber}.
Applying this to \cref{theorem:Approximation}, the covering number is bounded by $\log \mathcal{N} \lesssim N(\log^{16}N+\log^{12}N \log \delta^{-1})$.

According to the above discussion, we finally obtain the generalization bound.
The next bound is an extension of \citet{schmidt2020nonparametric,hayakawa2020minimax}.
While they considered the minimizer of the mean squared-loss, we consider the minimizer of the mean of $\ell(x_i)$.
\begin{theorem}\label{Theorem:Generalization}
The minimizer of the empirical score selected from $\Phi'$ satisfies that
 \begin{align}\label{eq:GeneralizationError}
    &   \mathbb{E}_{x_i} \left[\int_x \int_{t=\tA}^{\tB}\|\hat{s}(x,t) - \nabla \log p_t(x)\|^2 p_t(x) \dt\dx\right]
    \\&\lesssim \inf_{s\in \Phi'}\int_x \int_{\tA}^{\tB} \|s(x,t) - \nabla \log p_t(x)\|_2^2 p_t(x) \dx \dt\\ &\quad +
    \frac{\sup_{s\in \mathcal{S}}\|\ell_s\|_\infty\log \mathcal{N}}{n}+\delta.
    \end{align}
\end{theorem}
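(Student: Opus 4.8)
The plan is to read off an oracle inequality by treating empirical score matching as a time-integrated, vector-valued least-squares problem and adapting the analysis of \citet{schmidt2020nonparametric,hayakawa2020minimax}. By \cref{Lemma:VincentEquivalence}, for every $s$ we have $\mathbb{E}_{x_0\sim p_0}[\ell_s(x_0)]=\int_{\underline{T}}^{\overline{T}}\|s(\cdot,t)-\nabla\log p_t\|_{L^2(p_t)}^2\,\dt+\int_{\underline{T}}^{\overline{T}}C_t\,\dt$, so the population \emph{excess} loss $R(s):=\mathbb{E}_{x_0}[\ell_s(x_0)]-\int_{\underline{T}}^{\overline{T}}C_t\,\dt\ge 0$ is precisely the first term on the right-hand side of \eqref{eq:GeneralizationError}. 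Fix a near-minimizer $s^\star\in\Phi'$ with $R(s^\star)\le\inf_{s\in\Phi'}R(s)+\delta$, write $\mathbb{E}_n[f]=\tfrac1n\sum_{i=1}^n f(x_{0,i})$, and use optimality of $\hat s$: from $\mathbb{E}_n[\ell_{\hat s}]\le\mathbb{E}_n[\ell_{s^\star}]$ we get $R(\hat s)\le R(s^\star)+(\mathbb{E}_{x_0}-\mathbb{E}_n)[\ell_{\hat s}-\ell_{s^\star}]$. Hence it suffices to control, in expectation over the sample, the localized empirical process $\sup_{s\in\Phi'}\{(\mathbb{E}_{x_0}-\mathbb{E}_n)[\ell_s-\ell_{s^\star}]\}$ as a function of $R(s)$.

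From here the argument follows the familiar Bernstein-plus-peeling route, with three ingredients. First, the sup-norm bound $\sup_{s\in\Phi'}\|\ell_s\|_\infty\lesssim\log^2 n=:U$ established above makes each $\ell_s(x_{0,i})-\ell_{s^\star}(x_{0,i})$ an i.i.d.\ random variable with range $\le 2U$. Second, I will prove the Bernstein variance condition $\mathrm{Var}\big(\ell_s(x_0)-\ell_{s^\star}(x_0)\big)\lesssim\mathrm{polylog}(n)\,\big(R(s)+R(s^\star)\big)$. Third, by \cref{Lemma:CoveringNumber-Main} there is a $\delta$-net of $\mathcal{L}$ in $L^\infty([-1,1]^d)$ of $\log$-cardinality $\lesssim\log\mathcal{N}$; replacing $\ell_s$ by its nearest net element perturbs $\mathbb{E}_{x_0}[\cdot]$ and $\mathbb{E}_n[\cdot]$ each by at most $\delta$. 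Given these, Bernstein's inequality applied to each net element (variance proxy $\lesssim\mathrm{polylog}(n)(R(s)+R(s^\star))$, range $2U$), a union bound over the net, and peeling over the shells $\{2^{j-1}\rho<R(s)+R(s^\star)\le 2^j\rho\}$ yield, with probability at least $1-\gamma$, $(\mathbb{E}_{x_0}-\mathbb{E}_n)[\ell_s-\ell_{s^\star}]\le\tfrac12(R(s)+R(s^\star))+c\big(\tfrac{U\log\mathcal{N}}{n}+\tfrac{U\log\gamma^{-1}}{n}+\delta\big)$ for all $s\in\Phi'$. Taking $s=\hat s$, absorbing $\tfrac12 R(\hat s)$ into the left side, using $R(s^\star)\le\inf_{s\in\Phi'}R(s)+\delta$, and integrating over $\gamma\in(0,1)$ to pass from high probability to expectation yields \eqref{eq:GeneralizationError}, the surviving polylogarithmic factors being absorbed into the $\lesssim$ (and a constant rescaling of $\delta$).

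The crux, and the only genuinely delicate point, is the variance condition, since the ``response'' $\nabla\log p_t(x_t\mid x_0)=-\sigma_t^{-2}(x_t-m_tx_0)$ is unbounded and its scale $\sigma_t^{-2}$ blows up as $t\downarrow\underline{T}=\mathrm{poly}(n^{-1})$. Expanding the squares around the \emph{marginal} score gives, for fixed $x_0$, $\ell_s(x_0)-\ell_{s^\star}(x_0)=Q(x_0)+C(x_0)$ with $Q(x_0)=\int_{\underline{T}}^{\overline{T}}\big(\|s-\nabla\log p_t\|^2-\|s^\star-\nabla\log p_t\|^2\big)_{L^2(p_t(\cdot\mid x_0))}\dt$ and $C(x_0)=2\int_{\underline{T}}^{\overline{T}}\langle s-s^\star,\ \nabla\log p_t-\nabla\log p_t(\cdot\mid x_0)\rangle_{L^2(p_t(\cdot\mid x_0))}\dt$. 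A further split of $Q$ into a pure quadratic term (bounded by $\mathrm{polylog}(n)$, with $x_0$-mean $\asymp\int_t\|s-s^\star\|_{L^2(p_t)}^2\dt\le 2(R(s)+R(s^\star))$) and a term bilinear in $s-s^\star$ and the in-class error $s^\star-\nabla\log p_t$ (handled by Cauchy--Schwarz and AM--GM) shows $\mathbb{E}_{x_0}[Q^2]\lesssim\mathrm{polylog}(n)(R(s)+R(s^\star))$. For the cross term, $\mathbb{E}_{x_0}[C]=0$ by Tweedie's identity (the differential form of \cref{Lemma:VincentEquivalence}); writing $C(x_0)=2\int_t V_t(x_0)\dt$ and conditioning on $x_t$, the identity $\nabla\log p_t(x_t)-\nabla\log p_t(x_t\mid x_0)=m_t\sigma_t^{-2}\big(\mathbb{E}[x_0\mid x_t]-x_0\big)$ together with the two-sided bound on $p_0$ in \cref{assumption:InBesov}---which forces the posterior covariance to satisfy $\mathrm{tr}\,\mathrm{Cov}(x_0\mid x_t)\lesssim\sigma_t^2$ (the posterior being within constant factors of a truncated Gaussian of that covariance)---gives $\mathbb{E}_{x_0}[V_t^2]\lesssim\sigma_t^{-2}\|s(\cdot,t)-s^\star(\cdot,t)\|_{L^2(p_t)}^2$. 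Applying Cauchy--Schwarz over $t$ and using $\int_{\underline{T}}^{\overline{T}}\sigma_t^{-2}\,\dt\lesssim\log(\underline{T}^{-1})=\mathrm{polylog}(n)$ then bounds $\mathbb{E}_{x_0}[C^2]\lesssim\mathrm{polylog}(n)\int_{\underline{T}}^{\overline{T}}\|s(\cdot,t)-s^\star(\cdot,t)\|_{L^2(p_t)}^2\dt\lesssim\mathrm{polylog}(n)(R(s)+R(s^\star))$, completing the variance condition. It is essential that $s^\star$ is only a near-minimizer inside $\Phi'$, not the true score; this is exactly why $\inf_{s\in\Phi'}R(s)$ appears as a separate summand. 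The main obstacle throughout is this taming of the $t\downarrow\underline{T}$ behavior of the conditional score: the lower bound on $p_0$ turns the naive $\sigma_t^{-4}$ into $\sigma_t^{-2}$, which is the reason a $\mathrm{polylog}$--- rather than a $\mathrm{poly}$---factor suffices and the fast $\log\mathcal{N}/n$ rate survives.
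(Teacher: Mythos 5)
Your proposal is correct, but it reaches the oracle inequality by a genuinely different concentration argument than the paper's (\cref{Theorem:ExtendedSchmidt}). The paper centers the loss at $\ell^\circ$, the loss of the \emph{true} score $\nabla\log p_t$, symmetrizes with an independent ghost sample, and then uses the Schmidt--Hieber self-normalization device: the normalizers $r_j=\max\{A,\sqrt{\mathbb{E}[\ell_j-\ell^\circ]}\}$ together with the uniform bound $\|\ell_s\|_\infty\le C_\ell\lesssim\log^2 n$ give the variance control $\mathbb{E}[(\ell_j-\ell^\circ)^2/r_j^2]\lesssim C_\ell$ essentially for free, after which a single Bernstein bound on $G=\max_j|\sum_i g_j(x_i,x_i')/r_j|$ and the Cauchy--Schwarz/AM--GM step $D\le\tfrac12\mathbb{E}[r_J^2]+\tfrac1{2n^2}\mathbb{E}[G^2]+\delta$ localize the bound without any peeling. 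You instead center at an in-class near-minimizer $s^\star$ and must therefore establish a genuine Bernstein condition $\mathrm{Var}(\ell_s-\ell_{s^\star})\lesssim\mathrm{polylog}(n)\,(R(s)+R(s^\star))$, which is where all the work migrates: your Tweedie identity $\nabla\log p_t(x_t)-\nabla\log p_t(x_t\mid x_0)=m_t\sigma_t^{-2}(\mathbb{E}[x_0\mid x_t]-x_0)$ plus the posterior-covariance bound $\mathrm{tr}\,\mathrm{Cov}(x_0\mid x_t)\lesssim\sigma_t^2$ is an auxiliary lemma the paper never needs (it does hold under \cref{assumption:InBesov}: the posterior is a $C_f^2$-bounded perturbation of a Gaussian truncated to the convex set $[-1,1]^d$, whose covariance is dominated by $\min\{\sigma_t^2/m_t^2,\,4\}\lesssim\sigma_t^2$ since $m_t^2+\sigma_t^2=1$; spell this out, as it is the one step that is asserted rather than proved). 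What your route buys is a variance bound that degrades only polylogarithmically as $t\downarrow\underline{T}$ and a statement localized around the best element of the class rather than the true score; what the paper's route buys is brevity, since its variance condition is an immediate consequence of the sup-norm bound already established in \cref{subsection:Appendix-Generalization-Prepare-1}. One small correction: your closing remark that centering at $s^\star$ is ``exactly why'' $\inf_{s\in\Phi'}R(s)$ appears is not right --- the paper centers at $\ell^\circ$ and still obtains that term simply from $\mathbb{E}_n[\ell_{\hat s}]\le\mathbb{E}_n[\ell_s]$ for each fixed $s$ followed by an infimum. Also make sure the $\mathrm{polylog}(n)$ factor emerging from your AM--GM absorption of $\sqrt{v\log\mathcal{N}/n}$ is dominated by $\sup_s\|\ell_s\|_\infty\lesssim\log^2 n$ so that the final bound matches the stated form; it is, since your variance polylog is of order $\log(\underline{T}^{-1})\log n\simeq\log^2 n$.
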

Applying $\sup_{\ell\in \Phi'}\|\ell\|_\infty\lesssim \log^2 n$ and $\log \mathcal{N} \lesssim N(\log^{16}N+\log^{12}N \log \delta^{-1})$ with $N=\delta=n^{-d/(2s+d)}$ yields that
\begin{align}\label{eq:Main-Generalization}
    {\rm \eqref{eq:GeneralizationError}} \lesssim n^{-\frac{2s}{d+2s}}\log^{18}n.
\end{align}

\subsection{Sampling $t$ and $x_t$ instead of taking expectation}\label{subsection:Generalization-ScoreMatchingRemark}
Since our main interest lies in the sample complexity, and for simple presentation, we have considered the situation where $\ell(x)$ can be exactly evaluated.
However, in usual implementation \citet{sohl2015deep,song2019generative}, two expectations in \eqref{eq:ExpiricalLoss} with respect to $t$ and $x_t$ are also replaced by sampling for computational efficiency
Here we also introduce two ways to replace the expectation by finite sample of $t$ and $x_t$.
\paragraph{Approximation via polynomial-size sample}
Let us sample $(i_j,t_{j},x_j)$ from $i_j \sim \rm{Unif}(\{1,2,\cdots,n\})$, $t_j\sim \rm{Unif}(\underline{T},\overline{T})$, and $x_{j}\sim p_{t_j}(x_j|x_{0,i})$.
Then we let $\hat{s}$ as 
\begin{align}
  \underset{s\in \mathcal{S}}{\mathrm{argmin}} \frac{1}{M}\sum_{j=1}^M \|s(x_j,t_j)-\nabla\log p_{t_j}(x_{j}|x_{0,i_j})\|^2
\end{align}
and evaluate the difference between
\begin{align}\label{eq:Main-Generalization-polysample}
    \frac1n\sum_{i=1}^n\ell_{\hat{s}}(x_i) 
-
    \underset{s\in \mathcal{S}}{\mathrm{argmin}} \frac1n\sum_{i=1}^n\ell_s(x_i) .
\end{align}
The complete proof and formal statement can be found in \cref{theorem:ApproxViaSample-1} of \cref{subsection:ApproxViaSample}, and here we provide the proof sketch.
We first show that $\|s(x_j,t_j)-\nabla\log p_{t_j}(x_{j}|x_{0,i_j})\|$ is sub-Gaussian (\cref{Lemma:Appendix-Generalization-SubGaussian}).
Here, we simply interpret this as
$\|s(x_j,t_j)-\nabla\log p_{t_j}(x_{j}|x_{0,i_j})\|=\tilde{\Ord}(t_j^{-\frac12})\lesssim \tilde{\Ord}(\underline{T}^{-\frac12})$ with high probability to proceed.
Then, by a similar argument that derived \cref{Theorem:Generalization}, we can bound \eqref{eq:Main-Generalization-polysample} by $\tilde{O}(\frac{\underline{T}^{-1}\cdot \log \mathcal{N}}{M})$.
Here, $\mathcal{N}$ satisfies $\log \mathcal{N} \lesssim n^{\frac{d}{2s+d}}\log^8 n$.
In order to make \eqref{eq:Main-Generalization-polysample} as small as \eqref{eq:Main-Generalization}, we need to take $M\gtrsim n \cdot \underline{T}^{-1}$.
Thus, for each $x_{0,i}$, $\Ord(\underline{T}^{-1})={\rm poly}(n^{-1})$ sample of $(t_j,x_j|x_{0,i})$ should be considered. 
We remark that the reason why we need polynomial-size sample is mainly due to the scale of $ \|s(x_j,t_j)-\nabla \log p_{t_j}(x_{j}|x_{0,i_j})\|^2$.

\paragraph{Modifying the distribution of $t$}
One may think whether it is possible to consider only one path for each sample $x_{0,i}$.
Here, the main problem is that the variance of $ \|s(x_j,t_j)-\nabla\log p_{t_j}(x_{j}|x_{0,i_j})\|^2$ can grow to infinity as $t_j$ approaches to 0. 
To address this issue, we sample $t_j$ from $\mu(t) \propto \frac{\mathbbm{1}[\underline{T}\leq t\leq \overline{T}]}{t}$ and modify $\lambda(t)$ as $\lambda(t)=\frac{t\log \overline{T}/\underline{T}}{\overline{T}-\underline{T}}$, while $i_j,x_j$ are sampled as previously.
Then, we have that
\begin{align}
& \mathbb{E}_{i_j,t_j,x_j}\hspace{-1mm}\left[\lambda(t_j)\|s(x_j,t_j)\hspace{-.8mm}-\hspace{-.8mm}\nabla\log p_{t_j}(x_{j}|x_{0,i})\|^2\right] \\
& 
=\hspace{-.8mm}\frac1n\sum_{i=1}^n\ell(x_i),
\end{align}
and that
$
    \lambda(t_j)\|s(x_{t_j},t_j)-\nabla \log p_{t_i}(x_{t_j}|x_{0,i})\|^2 = \tilde{\Ord}(1)
$ holds with high probability (because $\|s(x_j,t_j)-\nabla\log p_{t_j}(x_{j}|x_{0,i_j})\|^3=\tilde{\Ord}(t_j^{-1})$ and that $\lambda(t_j)\lesssim 1/t_j$).
In this way of sampling, we let $\hat{s}$ as
\begin{align}
  \underset{s\in \mathcal{S}}{\mathrm{argmin}} \frac{1}{M}\sum_{j=1}^M \lambda(t_j)\|s(x_j,t_j)-\nabla\log p_{t_j}(x_{j}|x_{0,i_j})\|^2
\end{align}
and evaluate the difference \eqref{eq:Main-Generalization-polysample}.
Finally, using a similar argument for \cref{Theorem:Generalization}, we again obtain that \eqref{eq:Main-Generalization-polysample} is bounded by $\tilde{O}(\frac{\log \mathcal{N}}{M})\lesssim \tilde{O}(\frac{n^{\frac{d}{2s+d}}}{M})$.
Taking $M=n$ suffices to make this difference as small as \eqref{eq:Main-Generalization}.

\section{Estimation error analysis}\label{section:Main-Estimation}
This section finally evaluates the goodness of diffusion modeling as a density estimator.
As a small modification, if $\|\hat{Y}_{\overline{T}-\underline{T}}\|_\infty \geq 2$, then we reset it to $\hat{Y}_{\overline{T}-\underline{T}}=0$.
This does not increase the estimation error because $\|X_0\|_\infty \leq 1\ \text{a.s.}$.
We introduce $(\bar{Y}_t)_{t=0}^{\overline{T}-\underline{T}}$, that replaces $\hat{Y}_0 \sim \mathcal{N}(0,I_d)$ in the definition of $(\hat{Y}_t)_{t=0}^{\overline{T}-\underline{T}}$ by $\bar{Y}_0 \sim p_t$.
\subsection{Estimation rates in $\rm TV$}
First, we consider the bound in the total variation distance in the same manner as \citet{song2019generative,chen2022sampling}.
Formal proofs are found in \cref{subsection:Appendix-Generalization-W1}.
The estimation error in TV is decomposed as
\begin{align}
   & \hspace{-1mm}\mathbb{E}[{\rm TV(X_0, \hat{Y}_{\overline{T}-\underline{T}})}] \lesssim \mathbb{E}[{\rm TV(X_0, X_{\underline{T}})}] \\ & +
   \mathbb{E}[{\rm TV}(X_{\overline{T}},\mathcal{N}(0,I_d)) ] + 
\mathbb{E}[{\rm TV}(\bar{Y}_{\overline{T}-\underline{T}},Y_{\overline{T}-\underline{T}})]. 
   \label{eq:EstimationTV-1}
\end{align}
The first term comes from truncation of the backward process and is bounded by $\sqrt{\underline{T}}n^{\Ord(1)}$ according to \cref{lemma:TotalVariation-Init}. 
The second term corresponds to truncation of the forward process or the difference between $\hat{Y}_{\overline{T}-\underline{T}}$ and $\bar{Y}_{\overline{T}-\underline{T}}$,  and is bounded by $\exp(-\overline{T})$ due to  \cref{lemma:TVBound-NoiseExp}.
For the final term, Girsanov's theorem with some modification (\cref{Proposition:Girsanov}) yields that
\begin{align}\label{eq:Estimation-1}
   & \mathbb{E}_{\{x_{0,i}\}_{i=1}^n}\sqrt{\left[\int_{t=\underline{T}}^{\overline{T}}\mathbb{E}_{x\sim p_t}\left[\|\hat{s}(x,t)\hspace{-1mm}-\hspace{-1mm}\nabla \log p_t(x)\|^2\dt\right]\right] }. 
\end{align}
The convexity of $\sqrt{}$ and the generalization bound of the score network \eqref{eq:Main-Generalization} yields $\mathrm{ \eqref{eq:Estimation-1}}
\lesssim n^{-\frac{s}{d+2s}}\log^\frac{5d+8s}{2d}n$.
Now, we formalize our estimation error bound.
\begin{theorem}\label{theorem:GeneralizationL1}
    Let $\underline{T}=n^{-\Ord(1)}$ and $\overline{T} = \frac{s\log n}{\betalow(d+2s)}$. Then,
\begin{align}
    \mathbb{E}[{\rm TV}(X_0, \hat{Y}_{\overline{T}-\underline{T}})] \lesssim n^{-s/(2s+d)}\log^\frac{5d+8s}{2d}n.
\end{align}
\end{theorem}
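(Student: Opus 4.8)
The plan is to bound the total variation distance $\mathbb{E}[{\rm TV}(X_0, \hat{Y}_{\overline{T}-\underline{T}})]$ by the three-term decomposition \eqref{eq:EstimationTV-1} and control each term separately, then optimize the choice of $\overline{T}$. First I would handle the forward-process truncation term $\mathbb{E}[{\rm TV}(X_{\overline{T}},\mathcal{N}(0,I_d))]$. Since $p_t$ converges exponentially fast to the standard Gaussian under the OU semigroup (Assumption~\ref{assumption:SmoothBeta} gives $\beta_\cdot \geq \betalow$), invoking \cref{lemma:TVBound-NoiseExp} bounds this by $\exp(-\betalow\overline{T})$ up to constants, i.e. by $\exp(-\overline{T})$ in the notation of the excerpt. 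With the choice $\overline{T} = \frac{s\log n}{\betalow(d+2s)}$ this becomes $n^{-s/(d+2s)}$, which matches the target rate up to logs. Next I would handle the backward-process truncation term $\mathbb{E}[{\rm TV}(X_0, X_{\underline{T}})]$: by \cref{lemma:TotalVariation-Init} this is at most $\sqrt{\underline{T}}\,n^{\Ord(1)}$ (the $n^{\Ord(1)}$ absorbing the density bounds $C_f$ and the dimension-dependent constants), so choosing $\underline{T}=n^{-\Ord(1)}$ with a sufficiently large exponent makes this term negligible, say $\lesssim n^{-s/(d+2s)}$.

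The main work is the third term, $\mathbb{E}[{\rm TV}(\bar{Y}_{\overline{T}-\underline{T}},Y_{\overline{T}-\underline{T}})]$, which measures the discrepancy between the idealized backward SDE driven by the true score and the approximate backward SDE driven by $\hat s$, both started from $p_{\overline{T}}$. The strategy is to apply the modified Girsanov argument of \cref{Proposition:Girsanov}, which yields that this TV distance is controlled (in expectation over the training sample) by the square root of the integrated $L^2(p_t)$ score error, i.e. by \eqref{eq:Estimation-1}. Then, by Jensen's inequality (pushing the expectation over $\{x_{0,i}\}$ inside the square root via concavity of $\sqrt{\cdot}$) and the generalization bound \eqref{eq:Main-Generalization}, namely $\mathbb{E}_{x_i}\big[\int_{\underline{T}}^{\overline{T}}\mathbb{E}_{x\sim p_t}\|\hat s(x,t)-\nabla\log p_t(x)\|^2\,\dt\big]\lesssim n^{-2s/(d+2s)}\log^{18}n$, we get \eqref{eq:Estimation-1} $\lesssim n^{-s/(d+2s)}\log^{9}n$. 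A small care is needed here: \eqref{eq:Main-Generalization} uses $\overline{T}\simeq \log n$, so one must check that the prescribed $\overline{T}=\frac{s\log n}{\betalow(d+2s)}$ is indeed of this order and that the constants in \cref{theorem:Approximation} and \cref{Theorem:Generalization} go through with this specific $\overline{T}$; this is routine since all size parameters there depend only polylogarithmically on the horizon.

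Finally I would collect the three bounds. The dominant contributions are the Gaussian-truncation term $n^{-s/(d+2s)}$ and the score-error term $n^{-s/(d+2s)}\log^{9}n$ — wait, I should double check the log power: the exponent claimed in the theorem is $\frac{5d+8s}{2d}$, which comes from bookkeeping the $\log^{18}n$ factor in \eqref{eq:Main-Generalization} through the square root (giving $\log^{9}n$) together with additional logarithmic factors picked up in bounding the integrated score error over $[\underline{T},\overline{T}]$ and in the constants of \cref{Proposition:Girsanov}; tracking these precisely is the routine but slightly delicate part. The main obstacle I anticipate is making the Girsanov-type change-of-measure argument (\cref{Proposition:Girsanov}) rigorous despite the score $\nabla\log p_t$ blowing up as $t\to 0$ — this is precisely why the interval is clipped at $\underline{T}$, and one must verify Novikov's condition (or a localized version) using the sup-norm control $\|\hat s(\cdot,t)\|_\infty \lesssim \sigma_t^{-1}\log^{1/2}n$ from \cref{theorem:Approximation} together with the analogous bound on the true score, ensuring the exponential martingale is genuinely a martingale on $[\underline{T},\overline{T}]$. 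Everything else reduces to assembling the stated lemmas and choosing $\underline{T},\overline{T}$ as in the theorem statement.
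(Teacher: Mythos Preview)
Your proposal is correct and follows essentially the same three-term decomposition and bounds as the paper. One small note: you need not worry about verifying Novikov's condition, since \cref{Proposition:Girsanov} (via the argument of \citet{chen2022sampling}) gives the KL bound whenever $\int_x p_t(x)\|\hat s(x,t)-\nabla\log p_t(x)\|^2\,\dx$ is uniformly bounded in $t$, which follows directly from the sup-norm clipping $\|\hat s(\cdot,t)\|_\infty\lesssim\sigma_t^{-1}\log^{1/2}n$ and the matching bound on the true score from \cref{Lemma:Decay1}; also, your computation $\log^9 n$ from $\sqrt{\log^{18}n}$ is exactly what the appendix derives, and the exponent $\tfrac{5d+8s}{2d}$ in the theorem statement is simply a different bookkeeping of the polylog factors.
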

On the other hand, we can show that the estimation problem in the Besov space has the following lower bound. The proof is found in \cref{proposition:LowerL1}.
\begin{proposition}
For $0<p,q\leq \infty$, $s>0$, and $s>\max\{d(\frac1p - \frac12), 0\}$, we have that
    \begin{align}
        \inf_{\hat{\mu}} \sup_{p\in B_{p,q}^s} \mathbb{E}[{\rm TV}(\hat{\mu},p)] \gtrsim n^{-s/(2s+d)},
    \end{align}
    where $\hat{\mu}$ runs over all estimators based on $n$ observations.
\end{proposition}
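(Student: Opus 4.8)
The plan is to use the standard information-theoretic reduction (Fano's method / Assouad's lemma) applied to a carefully constructed finite family of densities in $B_{p,q}^s([-1,1]^d)$ that are hard to distinguish from $n$ samples. First I would fix a smooth "bump" function $\psi$ supported in a small cube, normalized so that perturbing a base density by $\pm \gamma \psi$ (scaled and translated) keeps the result a valid probability density that is bounded above and below by constants — this is where \cref{assumption:InBesov}-type normalization matters. I would partition the support into $m^d$ sub-cubes of side length $h = 1/m$, place a rescaled bump $\psi_{k}(x) = h^{s} \psi((x - x_k)/h)$ in each, and index a family $\{p_\omega\}_{\omega \in \{0,1\}^{m^d}}$ by $p_\omega = p_0 + \sum_k \omega_k \psi_k$ for a suitable base density $p_0$. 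The scaling $h^s$ is chosen precisely so that $\|\psi_k\|_{B_{p,q}^s} \asymp h^{s} \cdot h^{-s} \cdot h^{d/p} = h^{d/p}$ per bump (using the scaling law of the Besov seminorm under dilation), and summing the $q$-th powers over $m^d$ bumps keeps $\|p_\omega\|_{B_{p,q}^s} \lesssim 1$, so $p_\omega \in U(B_{p,q}^s; C)$; this is the step that needs the hypothesis $s > d(1/p - 1/2)_+$ to ensure the Besov embedding behaves and the construction is nontrivial.

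Next I would lower bound the TV separation between two family members: if $\omega, \omega'$ differ in a single coordinate $k$, then $\mathrm{TV}(p_\omega, p_{\omega'}) = \tfrac12 \|\psi_k\|_{L^1} \asymp h^{s} \cdot h^{d} = h^{s+d}$. Hmm — I need to be careful about which separation to use; for Assouad's lemma one wants per-coordinate TV separation $\asymp h^{s+d}$ and then the lemma gives $\inf_{\hat\mu}\sup_\omega \mathbb{E}[\mathrm{TV}(\hat\mu,p_\omega)] \gtrsim m^d \cdot h^{s+d} \cdot (1 - \text{(something)})$, but $m^d h^{s+d} = h^s$ which is too small. The right move is instead to use the standard $L^1$/TV minimax lower bound machinery in the form used by \citet{tsybakov2009introduction}: combine Assouad with the fact that the error accumulates over all $m^d$ coordinates, giving a bound of order $m^d \cdot h^{s} \cdot h^{d} \cdot \sqrt{\text{...}}$ — more carefully, the TV loss is additive over the disjoint bump supports, so $\mathrm{TV}(\hat\mu, p_\omega) \geq \sum_k (\text{local error on cube } k)$, and Assouad bounds each local expected error below by $c\, h^{s+d}$ provided the per-coordinate chi-square (or KL) distance is $O(1/n)$.

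The KL/chi-square computation is the crux: $\mathrm{KL}(p_\omega^{\otimes n}, p_{\omega'}^{\otimes n}) = n \,\mathrm{KL}(p_\omega, p_{\omega'}) \lesssim n \int \psi_k^2 / p_0 \lesssim n\, h^{2s} \cdot h^d$ (using lower-boundedness of $p_0$). Requiring this to be bounded by a small constant forces $h^{2s+d} \asymp n^{-1}$, i.e. $h \asymp n^{-1/(2s+d)}$, and then the total lower bound is $m^d h^{s+d} = h^s \asymp n^{-s/(2s+d)}$, as claimed. Assembling: invoke Assouad's lemma (e.g. Theorem 2.12 of \citet{tsybakov2009introduction}, adapted to the TV loss which is exactly the $\ell_1$-type loss on the cube-localized errors) with the family above, verify the two hypotheses (Besov membership with bounded norm, and per-edge KL $\leq$ const), and read off $n^{-s/(2s+d)}$. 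The main obstacle I anticipate is purely bookkeeping rather than conceptual: checking that the base density $p_0$ plus all the bumps simultaneously stays in the bounded/lower-bounded class and in $U(B_{p,q}^s)$ uniformly — in particular handling the Besov seminorm of a sum of many disjointly-supported rescaled bumps, which requires the "additivity up to constants" property of $|\cdot|_{B_{p,q}^s}$ for functions with separated supports, valid precisely in the regime $s > d(1/p-1/2)_+$.
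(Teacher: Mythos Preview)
Your Assouad hypercube construction is correct and is the standard direct route to this lower bound; once you reach $m^d h^{s+d} = h^s$ with $h \asymp n^{-1/(2s+d)}$ you are already done (your remark that $h^s$ is ``too small'' is a slip --- that is precisely the target rate, so the subsequent paragraph is redundant rather than a repair).

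The paper takes a genuinely different, more abstract path: instead of building an explicit bump family and checking Besov membership by hand, it quotes Triebel's metric-entropy estimate $\log N(U(B_{p,q}^s),\|\cdot\|_r,\eps) \asymp \eps^{-d/s}$ and feeds it into Theorem~4 of \citet{yang1999information}, which converts matching $L^1$ and $L^2$ entropy exponents directly into the minimax $L^1$ rate via a Fano-type argument over a packing. This is where the hypothesis $s > d(1/p-1/2)_+$ actually enters in the paper --- it is the condition under which the $r=1$ and $r=2$ entropies have the same order --- whereas your bump construction does not genuinely rely on it (the disjoint-support quasi-additivity of the Besov seminorm that you flag as the obstacle holds for all $s>0$, not just in that regime). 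The trade-off is that your argument is fully self-contained and would transfer more easily to variants such as anisotropic classes, while the paper's is essentially a two-line appeal to heavy external results once one accepts Triebel and Yang--Barron as black boxes.
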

Therefore, we have proved that diffusion modeling achieves the minimax estimation rate for the Besov space $B_{p,q}^s$  in the total variation distance up to the logarithmic factor.
\subsection{Estimation rates in $W_1$}\label{subsection:Main-Estimation-W1}
We also consider the estimation rate in $W_1$.
Because both $X_0$ and $\hat{Y}_{\overline{T}-\underline{T}}$ have bounded supports, \cref{theorem:GeneralizationL1} directly yields the convergence rate of $n^{-s/(2s+d)}\log^\frac{5d+8s}{2d}n.$
However, it is known from \citet{niles2022minimax} that the minimax estimation rate in $W_1$ is faster than this.
\begin{proposition}[\citet{niles2022minimax}]
    Let $p,q\geq 1$, $s> 0$ and $d\geq 2$.
        \begin{align}
        \int_{\hat{\mu}} \sup_{p\in B_{p,q}^s} \mathbb{E}[W_1(\hat{\mu},p)] \gtrsim n^{-(s+1)/(2s+d)},
    \end{align}
    where $\hat{\mu}$ runs over all estimators based on $n$ observations. 
    Moreover, if $1\leq p<\infty$, $1\leq q \leq \infty$, $s>0$, and $d\geq 3$, there exists an estimator $\hat{\mu}_*$ that achieves this minimax rate.
\end{proposition}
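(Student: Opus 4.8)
Both halves are classical nonparametric arguments; the only genuinely $W_1$-specific ingredient is Kantorovich--Rubinstein duality $W_1(\mu,\nu)=\sup_{\|f\|_{\mathrm{Lip}}\le1}\int f\,\mathrm{d}(\mu-\nu)$, and the extra ``$+1$'' in the exponent will come entirely from the observation that a $1$-Lipschitz test function resolves a bump of width $h$ only up to amplitude $h$, i.e.\ from the factor $h$ one gains in the dual pairing. My plan is to prove the lower bound with Assouad's lemma applied to a wavelet-bump hypercube, and the upper bound with a thresholded wavelet density estimator.

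\emph{Lower bound.} Tile $[0,1]^d$ by $m\asymp h^{-d}$ cubes of side $h$ with centers $x_k$, fix a smooth mean-zero bump $g$ supported in $[0,1]^d$, and for $\omega\in\{0,1\}^m$ put $p_\omega=p_\star+\tau\sum_k(2\omega_k-1)\,g\big((\cdot-x_k)/h\big)$ with $p_\star$ a fixed smooth density bounded away from $0$ and $\infty$ (so $p_\omega\ge0$ and $\int p_\omega=1$ once $\tau$ is small). Three estimates drive the argument: (a) by disjointness of supports and the dilation scaling $|g(\cdot/h)|_{B_{p,q}^s}\asymp h^{d/p-s}$, summing $h^{-d}$ terms gives $\|p_\omega\|_{B_{p,q}^s}\lesssim1+\tau h^{-s}$, so $\tau\asymp h^s$ keeps every $p_\omega$ in the Besov ball; (b) $W_1(p_\omega,p_{\omega'})\ge\alpha\,d_H(\omega,\omega')$ with $\alpha\asymp\tau h^{d+1}$, certified by pairing $p_\omega-p_{\omega'}$ against the $1$-Lipschitz function $f(x)=\sum_{k:\omega_k\ne\omega_k'}\epsilon_k\,h\,f_0\big((x-x_k)/h\big)$ for a fixed compactly supported Lipschitz $f_0$ with $\int f_0g\ne0$ and signs $\epsilon_k$ making each term positive --- the prefactor $h$ makes $f$ globally $1$-Lipschitz and simultaneously produces the extra power of $h$; (c) $\mathrm{KL}(p_\omega\|p_{\omega'})\lesssim\tau^2h^d\,d_H(\omega,\omega')$ since $p_\star$ is bounded below. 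Assouad then gives $\inf_{\hat\mu}\sup_\omega\mathbb{E}[W_1(\hat\mu,p_\omega)]\gtrsim m\alpha\big(1-\sqrt{n\tau^2h^d}\big)\asymp\tau h\big(1-\sqrt{n\tau^2h^d}\big)$; choosing $n\tau^2h^d\asymp1$ together with $\tau\asymp h^s$ forces $h\asymp n^{-1/(2s+d)}$, $\tau\asymp n^{-s/(2s+d)}$, hence the rate $\tau h\asymp n^{-(s+1)/(2s+d)}$.

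\emph{Upper bound.} For $d\ge3$, fix a compactly supported $L^2$-normalized wavelet basis $\{\psi_{jk}\}$ adapted to $B_{p,q}^s([0,1]^d)$, compute empirical coefficients $\hat\alpha_{jk}=\tfrac1n\sum_{i=1}^n\psi_{jk}(X_i)$, and set $\hat\mu=\sum_{j\le J}\sum_k\eta\big(\hat\alpha_{jk}\big)\psi_{jk}$ with hard thresholding $\eta$ at level $\asymp\sqrt{\log n/n}$ and $2^J\asymp n^{1/(2s+d)}$. Bound $W_1(\hat\mu,p)=\sup_{\|f\|_{\mathrm{Lip}}\le1}\int f\,\mathrm{d}(\hat\mu-p)$ by expanding a centered $1$-Lipschitz $f$ in the same basis (so $f\in B_{\infty,\infty}^1$ and $|\langle f,\psi_{jk}\rangle|\lesssim2^{-j(1+d/2)}$) and splitting into the high-frequency bias $\sum_{j>J}$ and low-frequency stochastic error $\sum_{j\le J}$. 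The Besov wavelet characterization $\|(\langle p,\psi_{jk}\rangle)_k\|_{\ell^p}\lesssim2^{-j(s+d/2-d/p)}$ makes the bias $\lesssim\sum_{j>J}2^{-j(1+s)}\asymp2^{-J(1+s)}=n^{-(1+s)/(2s+d)}$, while $\mathrm{Var}(\hat\alpha_{jk})\lesssim n^{-1}$ --- together with the thresholding, which tames the spatially inhomogeneous variance that arises for the unbounded densities in the ball when $p<2$ --- makes the stochastic part $\lesssim\sum_{j\le J}2^{-j(1+d/2)}2^{jd}n^{-1/2}\asymp2^{J(d/2-1)}n^{-1/2}=n^{-(1+s)/(2s+d)}$; here $d\ge3$ is used so that $d/2-1>0$ and the sum is dominated by $j=J$ without an extra logarithmic factor. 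Combining the two halves yields $\sup_{p\in B_{p,q}^s}\mathbb{E}[W_1(\hat\mu,p)]\lesssim n^{-(1+s)/(2s+d)}$, matching the lower bound.

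\emph{Main obstacle.} The delicate step will be (b): certifying a genuine \emph{transport} cost of order $\tau h^{d+1}$ per flipped cube --- not just an $L^1$ discrepancy of order $\tau h^d$ --- and showing these costs add over the $m$ disjoint cubes, which is precisely where $W_1$ duality with a spatially rescaled Lipschitz witness is indispensable and where one must ensure the bump oscillation survives the pairing against $f_0$. On the upper-bound side the corresponding difficulty is the small-$p$ regime $1\le p<2$, where densities in the Besov ball need not be bounded, a linear wavelet estimator is rate-suboptimal, and the bias/variance trade-off of the thresholded estimator over the Besov body must be analyzed with care; this is also why the clean matching upper bound needs $d\ge3$ (for $d=2$ a logarithmic factor appears).
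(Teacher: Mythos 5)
The paper itself gives no proof of this proposition: it is imported verbatim from \citet{niles2022minimax}, so there is no in-paper argument to compare against. Judged on its own terms, your reconstruction is sound and is in fact essentially the strategy of the cited paper: an Assouad hypercube of disjoint rescaled bumps for the lower bound, with the crucial $W_1$-specific twist being that a $1$-Lipschitz witness must itself be dilated by $h$, which is where the extra factor of $h$ (hence the $+1$ in the exponent) is earned and why the per-flip separation is $\tau h^{d+1}$ rather than the $L^1$ separation $\tau h^d$; and on the upper side a wavelet density estimator analyzed via Kantorovich duality, where the test function's regularity $B_{\infty,\infty}^1$ supplies the same extra decay $2^{-j}$ in the coefficient pairing at each level. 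Your bookkeeping of the Besov-ball constraint $\tau\asymp h^s$, the Assouad/Pinsker balance $n\tau^2 h^d\asymp1$, the resulting $h\asymp n^{-1/(2s+d)}$, and the bias/variance split at $2^J\asymp n^{1/(2s+d)}$ with the $d\ge3$ condition ensuring the level-$J$ term dominates without a logarithm, all check out. The two places you flagged as delicate are indeed the ones that require real care in a full write-up: (i) verifying that the dual pairing against the scaled Lipschitz witness genuinely adds over disjoint cubes (your construction does this correctly since the cubes are disjoint and the witness is supported accordingly), and (ii) the $1\le p<2$ regime on the upper side, where the densities in the Besov ball are not uniformly bounded, so a linear projection estimator is sub-optimal and the nonlinear (thresholded) estimator's variance over the Besov body needs the inhomogeneous treatment you gestured at. Since the paper delegates all of this to \citet{niles2022minimax}, there is nothing further to reconcile.
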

Then are diffusion models sub-optimal in this case?
In the following, we show the surprising fact that diffusion modeling also achieves this nearly minimax optimal rate, if some modification applied.
\begin{theorem}\label{theorem:GeneralizationW1}
For any fixed $\delta>0$, we can train the score network with $n (\gg 1)$ sample and with that we have
\begin{align}
    \mathbb{E}[W_1(X_0, \hat{Y}_{\overline{T}-\underline{T}})] \lesssim n^{-\frac{(s+1-\delta)}{d+2s}}.
\end{align}
\end{theorem}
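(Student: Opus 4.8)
The plan is to mimic the TV decomposition~\eqref{eq:EstimationTV-1} with $W_1$ in place of $\rm TV$, splitting $\mathbb E[W_1(X_0,\hat Y_{\overline T-\underline T})]$ into the forward-truncation term $\mathbb E[W_1(X_0,X_{\underline T})]$, the noise-matching term $\mathbb E[W_1(X_{\overline T},\mathcal N(0,I_d))]$, and the score-dependent term $\mathbb E[W_1(\bar Y_{\overline T-\underline T},Y_{\overline T-\underline T})]$, but to handle the last term by a pathwise coupling argument rather than by Girsanov alone. The first two terms are harmless: since $X_{\underline T}\mid X_0\sim\mathcal N(m_{\underline T}X_0,\sigma_{\underline T}^2I_d)$ one has $W_1(X_0,X_{\underline T})\le(1-m_{\underline T})\mathbb E\|X_0\|+\sigma_{\underline T}\sqrt d\lesssim\sqrt{\underline T}={\rm poly}(n^{-1})$, and $W_1(X_{\overline T},\mathcal N(0,I_d))\lesssim e^{-\betalow\overline T}\sqrt d$, which is $n^{-\Omega(1)}$ once $\overline T\simeq\log n$ is taken large (exactly as in \cref{theorem:GeneralizationL1}, now with the exponent tuned to $s+1$ rather than $s$).

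The crucial ingredient is the two-scale approximation of \cref{lemma:ApproximationSmoothArea}: taking $t_\ast=N^{-(2-\delta)/d}$, $N'=N$, and $N=n^{d/(d+2s)}$, the score network can be chosen (this is what ``modified score matching'' refers to --- training against the noise-smoothed approximant, whose network size is of the same order so that \cref{Theorem:Generalization} and the rate~\eqref{eq:Main-Generalization} still apply with the improved approximation term) so that $\int p_t\|\hat s(\cdot,t)-s(\cdot,t)\|^2\dx\lesssim N^{-2(s+1)/d}\sigma_t^{-2}$ for all $t\ge 2t_\ast$, while on $t\in[\underline T,2t_\ast]$ only the weaker $N^{-2s/d}\sigma_t^{-2}$ of \cref{theorem:Approximation} is available. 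I would therefore split the backward dynamics at forward time $2t_\ast$. On $\{t\ge 2t_\ast\}$, applying \cref{Proposition:Girsanov} over this sub-interval bounds the TV distance between ${\rm law}(\hat Y_{\overline T-2t_\ast})$ and $p_{2t_\ast}$ by $\big(\int_{2t_\ast}^{\overline T}\mathbb E_{p_t}\|\hat s-s\|^2\dt\big)^{1/2}\lesssim N^{-(s+1)/d}\sqrt{\log n}$, hence --- after paying a diameter factor that is only $\tilde{\Ord}(1)$ thanks to the reset step and the Gaussian tails of the process --- by $\tilde{\Ord}(1)\cdot n^{-(s+1)/(d+2s)}$ in $W_1$.

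It remains to propagate this from forward time $2t_\ast$ down to $\underline T$. Run the true backward SDE and $\hat Y$ under a synchronous coupling; then $\frac{\mathrm{d}}{\mathrm{d}t}\mathbb E\|\hat Y_t-\bar Y_t\|\le 2\beta_{\overline T-t}\,\mathbb E\big\|\hat s(\hat Y_t,\overline T-t)-\nabla\log p_{\overline T-t}(\hat Y_t)\big\|+L_t\,\mathbb E\|\hat Y_t-\bar Y_t\|$, where $L_t$ is a one-sided Lipschitz constant of the backward drift. The heart of the ``careful stochastic calculus'' is that for small forward time the Hessian of $\log p$ contributes a strongly contractive term of order $-\beta\sigma_t^{-2}$ in the interior --- which dominates the expansive contribution near $\partial([-1,1]^d)$, the latter being controlled precisely via the boundary smoothness of \cref{assumption:BoundarySmoothness} --- so that the Gr\"onwall factor $\exp(\int L)$ stays bounded and the $W_1$ distance carried in at time $2t_\ast$ is merely contracted. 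Transferring the $L^2(p_t)$ generalization bound to an $L^1$ bound along the trajectory of $\hat Y_t$ (legitimate because that law is TV-close to $p_t$ on this range), the accumulated drift error is $\lesssim\int_{\underline T}^{2t_\ast}N^{-s/d}\sigma_t^{-1}\dt\asymp N^{-s/d}\sqrt{t_\ast}=n^{-(s+1-\delta/2)/(d+2s)}$. Adding the three pieces and absorbing every ${\rm polylog}(n)$ factor and the $\delta/2$-versus-$\delta$ slack into the free $\delta$ gives $\mathbb E[W_1(X_0,\hat Y_{\overline T-\underline T})]\lesssim n^{-(s+1-\delta)/(d+2s)}$.

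The main obstacle is exactly this last coupling step: the backward drift has Lipschitz constant of order $\sigma_t^{-2}$, which is $n^{\Omega(1)}$ near $t=\underline T$, so a naive Gr\"onwall estimate is vacuous; one must use the sign of the log-density Hessian (net contraction of the OU-type backward flow toward the data, and its control near the support boundary through \cref{assumption:BoundarySmoothness}) to keep the exponential factor $O(1)$, and simultaneously justify replacing the $L^2(p_t)$ score error --- the quantity the generalization bound controls --- by an $L^1$ error along the actual law of $\hat Y_t$.
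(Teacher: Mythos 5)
Your decomposition of $\mathbb{E}[W_1(X_0,\hat Y_{\overline T-\underline T})]$ and your bounds $\sqrt{\underline T}$ and $e^{-\betalow\overline T}$ for the first two terms match the paper (\cref{lemma:Appendix-Estimation-W1-1,lemma:Appendix-Estimation-W1-2}). However, your treatment of the score-dependent term has two genuine gaps, and the second is one you flag yourself but do not actually close.

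First, the regime $t\ge 2t_*$. You apply Girsanov once over $[2t_*,\overline T]$ with a single network and claim $\bigl(\int_{2t_*}^{\overline T}\mathbb E_{p_t}\|\hat s-s\|^2\dt\bigr)^{1/2}\lesssim N^{-(s+1)/d}\sqrt{\log n}$. This only accounts for the approximation error. A single network achieving the improved approximation error down to $t=2t_*=2N^{-(2-\delta)/d}$ must have size $N'=t_*^{-d/2}N^{\delta/2}\asymp N$ (\cref{lemma:ApproximationSmoothArea}), so the estimation-variance term in \cref{Theorem:Generalization} is still $\sup\|\ell\|_\infty\log\mathcal N/n\asymp N/n=n^{-2s/(d+2s)}$ up to logs; it dominates the improved approximation error and its square root is only $n^{-s/(d+2s)}$ — the TV rate, not the target. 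The paper's switching construction is not cosmetic: it partitions $[\underline T,\overline T]$ into $K_*=\Ord(\log n)$ geometric intervals and trains a \emph{smaller} network of size $t_i^{-d/2}N^{\delta/2}$ on $[t_i,t_{i+1}]$, so that the per-interval generalization term $t_i^{-d/2}n^{\delta d/(2(d+2s))}/n$ shrinks as $t_i$ grows, and the residual large contribution at small $t_i$ is damped by the transportation factor $\sqrt{t_i}$ of \cref{Lemma:W1Bound-Main}. Without this multi-scale balancing (see \cref{lemma:Appendix-Estimation-Prepare-3} and the final summation), the rate is lost.

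Second, the regime $t\le 2t_*$. Your synchronous-coupling Gr\"onwall argument requires the backward drift $\beta(y+2\nabla\log p_{\overline T-t}(y))$ to be one-sided Lipschitz with $\int L_t\dt=\Ord(1)$, i.e.\ $\nabla^2\log p_t\preceq \Ord(1)\,I$ up to an integrable excess. This fails for the densities admitted here. Writing $\nabla^2\log p_t(x)=-\sigma_t^{-2}I+m_t^2\sigma_t^{-4}\,\mathrm{Cov}(X_0\mid X_t=x)$, a $p_0$ that is merely bounded between $C_f^{-1}$ and $C_f$ (and possibly discontinuous, since $s\le d/p$ is allowed) can make the conditional covariance exceed $\sigma_t^2/m_t^2$ by a constant factor, so $\nabla^2\log p_t\succeq c\,\sigma_t^{-2}$ in some direction; then $\int_{\underline T}^{2t_*}L_t\dt\asymp\log(t_*/\underline T)$ and the Gr\"onwall factor is $n^{\Omega(1)}$. \cref{assumption:BoundarySmoothness} controls only the boundary layer, not interior convexity of $\log p_t$. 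The paper's \cref{lemma:W1BoundCrucial} avoids any Lipschitz control of the drift: it bounds the TV distance between the two \emph{path measures} by Girsanov on $[t_{i-1},t_i]$ and constructs a pathwise transportation map that moves the discrepant mass backward along its own trajectory to time $\overline T-t_i$, a distance that is $\Ord(\sqrt{t_i\log n})$ with high probability by \cref{lemma:Appendix-HPB-1}; hence $W_1\lesssim\sqrt{t_i\log n}\cdot\mathrm{TV}(\text{paths})$ with no contraction argument. Your related step of transferring the $L^2(p_t)$ score bound to the law of $\hat Y_t$ via TV-closeness is also not justified, since the transferred integrand is of size $\sigma_t^{-1}\log^{1/2}n$ and the TV error times this magnitude is not negligible near $t=\underline T$.
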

The proofs are found in \cref{subsection:Appendix-Generalization-W1}.
\paragraph{Switching score networks}
We now sketch our strategy.
First, let us carefully consider where we lose the estimation rate, going back to the approximation error analysis \cref{section:Approximation}.
Although we used \cref{theorem:Approximation} for all $\underline{T}\leq t\leq \overline{T}$, \cref{lemma:ApproximationSmoothArea} tells us that if $t\gtrsim N^{-\frac{2-\delta}{d}}\simeq n^{-\frac{2-\delta}{2s+d}}$, we can make the approximation error smaller than $\frac{N^{-\frac{2(s+1)}{d}}}{\sigma_t^{-2}}=\frac{n^{-\frac{2(s+1)}{d+2s}}}{\sigma_t^{-2}}$ with a smaller network of size $N' \leq N$.
This means that we have used a sub-optimal network for $t\gtrsim n^{-\frac{2-\delta}{d+2s}}$ in terms of both approximation and generalization errors.

Based on this discussion, we divide the time into $t_0=\underline{T}<t_1=2n^{-\frac{2-\delta}{d+2s}}<\cdots<t_{K_*}=\overline{T}-\underline{T}$ with $t_{i+1}/t_{i}=\text{const.}\leq 2\ (i\geq 1)$.
The number of intervals amounts to $K_*=\Ord(\log n)$.
We consider to train a tailored network for each time interval $[t_i,t_{i+1}]$ and to switch them for different intervals.
\cref{lemma:ApproximationSmoothArea} yields that for $i\geq 1$ these exists a network $s_i\in \Phi(L_i,W_i,S_i,W_i)$ such that
\begin{align}
    \mathbb{E}_{x\sim p_t}[\|s_i(x,t)\hspace{-1mm}-\hspace{-1mm}\nabla \log p_t(x)\|^2]
    \!\lesssim\! \frac{n^{-\frac{2(s+1)}{d+2s}}}{\sigma_t^{2}}\ (t\in [t_{i},t_{i+1}]),
\end{align}
with $L = \Ord (\log^4 (N)),\| W\|_\infty = \Ord (N),S = \Ord (t_i^{-d/2}N^{\delta/2})$, and $B = \exp(\Ord(\log^4 N ))$.
Therefore, we choose a sequence of score networks $\hat{s}_i$ so that $\hat{s}_i$ minimizes the score matching loss restricted to $[t_i,t_{i+1}]$:
\begin{align}
\ \frac1n \sum_{j=1}^n \underset{\substack{t\sim {\rm Unif}[t_i,t_{j+1}]\\x_{t}\sim p_t(x_t|x_{0,j})}}{\mathbb{E}}[\|s(x_{t},t) - \nabla \log p_t(x_{t}|x_{0,j})\|^2].
\end{align}
Similarly to \cref{Theorem:Generalization}, \cref{Theorem:ExtendedSchmidt} yields that the following generalization error bound for $i\geq 1$:
\begin{align}\label{eq:Estimation-3}
   & \mathbb{E}_{\{x_{0,j}\}_{i=j}^n}\left[\int_{t=t_i}^{t_{i+1}}\mathbb{E}_x\left[\|\hat{s}_i(x,t)\hspace{-1mm}-\hspace{-1mm}\nabla \log p_t(x)\|^2\dt\right]\right]
   \\ & 
   \label{eq:Estimation-4}
   \leq \left(n^{-\frac{2(s+1)}{d+2s}} + \frac{t_i^{-d/2}n^{\frac{\delta d}{(d+2s)}}\log^{10} n }{n}\right)\cdot \underbrace{\tilde{\Ord}(t_i/\sigma_{t_i}^2)}_{=\tilde{\Ord}(1)}.
\end{align}
For $t\lesssim n^{-\frac{2-\delta}{d+2s}}$, we use a network trained via the score matching loss restricted to $[t_i,t_{i+1}]$. 
Thus, \eqref{eq:Estimation-3} for $i=0$ is bounded by $\tilde{\Ord}(n^{-\frac{2s}{d+2s}})$ similarly to \eqref{eq:Main-Generalization}.

One may think that the above improvement would be useless because the error caused at $t\leq n^{-\frac{2-\delta}{d+2s}}$ has the $n^{-2s/(d+2s)}$ rate and dominates the estimation error.
However, another important observation is that the Wasserstain distance is a transportation distance.
The score estimation error at time closer to $t=0$ less contributes to the estimation error, because the distance how much each path evolves is small from that time.
As we will see, the idea of improving accuracy for large $t$ indeed yields the minimax optimal rate in $W_1$.

To utilize this observation, let us consider a sequence of stochastic processes.
Let $(Y_t)_{[0,\overline{T}]}=(\bar{Y}_t^{(0)})_{[0,\overline{T}]}$, and for $i\geq 1$, let $(\bar{Y}^{(i)})_{[0,\overline{T}]}$ be a stochastic process which uses the true score during $[0,\overline{T}-t_i]$ and the estimated score $\hat{s}$ during $[\overline{T}-t_i,\overline{T}-\underline{T}]$, and $\bar{Y}^{(i)}_0\sim p_{\overline{T}}$.
Then, we have that
\begin{align}\label{eq:Estimation-2}
     &\hspace{-1mm}\mathbb{E}[{W_1(X_0, \hat{Y}_{\overline{T}-\underline{T}})}] \leq\mathbb{E}[{W_1(X_0, X_{\underline{T}})}] \\ & +
   \mathbb{E}[W_1(\hat{Y}_{\overline{T}-\underline{T}},\bar{Y}_{\overline{T}-\underline{T}})) ] + 
\mathbb{E}[{W_1}(\bar{Y}_{\overline{T}-\underline{T}},Y_{\overline{T}-\underline{T}})].
\end{align}
The first term is bounded by $\sqrt{\underline{T}}$ due to \eqref{eq:Appendix-Discretization-4} and the second term is bounded by $\exp(-\overline{T})$ due to \cref{lemma:Appendix-Estimation-W1-2}.
The last term $\mathbb{E}[{W_1}(\bar{Y}_{\overline{T}-\underline{T}},Y_{\overline{T}-\underline{T}})]$ is upper bounded by
$
    \sum_{i=1}^{K_*}\mathbb{E}[W_1(\bar{Y}_{\overline{T}-\underline{T}}^{(i-1)},\bar{Y}_{\overline{T}-\underline{T}}^{(i)})].
$
Then, we use the following lemma, an informal version of \cref{lemma:W1BoundCrucial}.
\begin{lemma}\label{Lemma:W1Bound-Main}
    For $i=1,2,\cdots,K_*$, we have that
    \begin{align}
      &  W_1(\hat{Y}_{\overline{T}-\underline{T}}^{(i-1)},\hat{Y}_{\overline{T}-\underline{T}}^{(i)}) \leq \tilde{\Ord}(1) \cdot \\ & 
      \sqrt{t_{i-1} \mathbb{E}_{\{x_{0,i}\}_{i=1}^n}\!\!\left[\int_{t=t_{i-1}}^{t_{i}}\!\!\!\mathbb{E}_x\left[\|\hat{s}(x,t)\hspace{-1mm}-\hspace{-1mm}\nabla \log p_t(x)\|^2\dt\right]\right]}.
    \end{align}
\end{lemma}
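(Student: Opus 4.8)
The plan is to bound $W_1\bigl(\mathrm{Law}(\bar Y^{(i-1)}_{\overline{T}-\underline{T}}),\mathrm{Law}(\bar Y^{(i)}_{\overline{T}-\underline{T}})\bigr)$ by a direct coupling argument at the level of sample paths. Couple the two processes by driving them with the \emph{same} Brownian motion and the same draw $\bar Y^{(i-1)}_0=\bar Y^{(i)}_0\sim p_{\overline{T}}$, and set $\Delta_t:=\bar Y^{(i-1)}_t-\bar Y^{(i)}_t$. Since the diffusion coefficient $\sqrt{2\beta_{\overline{T}-t}}$ is common to both SDEs, the Brownian parts cancel and $\Delta_t$ obeys a pathwise ODE; moreover $\Delta_t\equiv 0$ on $[0,\overline{T}-t_i]$ because both processes use the true score there. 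Hence $W_1\le\mathbb E\|\Delta_{\overline{T}-\underline{T}}\|$, and it suffices to propagate $\|\Delta_t\|$ through the two remaining sub-intervals.

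On the \emph{injection window} $[\overline{T}-t_i,\overline{T}-t_{i-1}]$ the process $\bar Y^{(i-1)}$ uses $\nabla\log p_{\overline{T}-t}$ and $\bar Y^{(i)}$ uses $\hat s(\cdot,\overline{T}-t)$, so $\tfrac{\mathrm d}{\mathrm dt}\Delta_t=\beta_{\overline{T}-t}\bigl(\Delta_t+2[\nabla\log p_{\overline{T}-t}(\bar Y^{(i-1)}_t)-\nabla\log p_{\overline{T}-t}(\bar Y^{(i)}_t)]+2e_t\bigr)$ with $e_t:=\nabla\log p_{\overline{T}-t}(\bar Y^{(i)}_t)-\hat s(\bar Y^{(i)}_t,\overline{T}-t)$ the pointwise score error along the $\bar Y^{(i)}$ path. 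Using the Hessian bound $\|\nabla\log p_\tau(x)-\nabla\log p_\tau(y)\|\lesssim\sigma_\tau^{-2}\|x-y\|\lesssim t_{i-1}^{-1}\|x-y\|$ valid on this window, together with $t_i-t_{i-1}\le t_{i-1}$, Grönwall's inequality produces a homogeneous factor $\exp(\int\beta_\tau(1+\Ord(\sigma_\tau^{-2}))\,\mathrm d\tau)=\Ord(1)$, so that, reverting to forward time $\tau$ and applying Cauchy--Schwarz to the time integral,
\begin{align}
\|\Delta_{\overline{T}-t_{i-1}}\|\ &\lesssim\ \int_{t_{i-1}}^{t_i}\|e_\tau\|\,\mathrm d\tau\ \le\ \sqrt{t_i-t_{i-1}}\,\Bigl(\int_{t_{i-1}}^{t_i}\|e_\tau\|^2\,\mathrm d\tau\Bigr)^{1/2}\\
&\le\ \sqrt{t_{i-1}}\,\Bigl(\int_{t_{i-1}}^{t_i}\|e_\tau\|^2\,\mathrm d\tau\Bigr)^{1/2}.
\end{align}
Squaring, taking expectations over the path and then (via Jensen) over the data, and using that the law of $\bar Y^{(i)}_\tau$ stays within a controlled TV/$W_1$ distance of $p_\tau$ on this window — the $\hat s$-error accumulated over forward times $\ge t_{i-1}$ being of lower order — so that $\mathbb E\|e_\tau\|^2$ is comparable to $\mathbb E_{x\sim p_\tau}\|\hat s(x,\tau)-\nabla\log p_\tau(x)\|^2$, reproduces the right-hand side of the lemma; the factor $\sqrt{t_{i-1}}$ comes precisely from the shortness ($\le t_{i-1}$) of the injection window.

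On the \emph{tail window} $[\overline{T}-t_{i-1},\overline{T}-\underline{T}]$ both processes use $\hat s$, so $\tfrac{\mathrm d}{\mathrm dt}\Delta_t=\beta_{\overline{T}-t}\bigl(\Delta_t+2[\hat s(\bar Y^{(i-1)}_t,\overline{T}-t)-\hat s(\bar Y^{(i)}_t,\overline{T}-t)]\bigr)$, and the task is to show $\|\Delta_t\|$ is inflated over this (possibly long) window by at most $\tilde{\Ord}(1)$. Here I would split $\hat s=\nabla\log p_\tau+(\hat s-\nabla\log p_\tau)$: the true-score part is handled through the one-sided Lipschitz behaviour of $\nabla\log p_\tau$ and the near-non-expansiveness of the OU time-reversal over the forward window $[\underline{T},t_{i-1}]$ — crucially using that the laws of $\bar Y^{(i-1)}_\tau,\bar Y^{(i)}_\tau$ remain close to $p_\tau$, so only an \emph{integrated} rather than uniform control of $\nabla^2\log p_\tau$ is needed — while the remainder $\hat s-\nabla\log p_\tau$ is absorbed using the $L^2(p_\tau)$ approximation bound of \cref{theorem:Approximation} and the sup-norm constraint built into $\mathcal S$. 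Multiplying the bounds from the two windows and taking the data-expectation yields the lemma.

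The step I expect to be the main obstacle is this tail-window estimate: one must show that evolving two nearby laws under the \emph{estimated} backward dynamics expands their $W_1$ distance only polylogarithmically, even though $\hat s$ is merely an empirical near-minimizer over a neural-network class with no a priori Lipschitz bound, and even though the true score itself has a Hessian that can grow like $\sigma_\tau^{-2}$ near the support boundary. This is where \cref{assumption:BoundarySmoothness} and the lower-boundedness of $p_0$ enter, and it is the reason the $W_1$ analysis must proceed through pathwise transport estimates rather than the coarser Girsanov/TV route used for \cref{theorem:GeneralizationL1}.
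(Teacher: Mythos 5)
Your approach is genuinely different from the paper's, and it contains a gap that you yourself flag but do not resolve — and I don't think it can be resolved by the ideas you mention.

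You propose a synchronous coupling (same Brownian motion) and propagate the pathwise difference $\Delta_t = \bar Y^{(i-1)}_t - \bar Y^{(i)}_t$ via Grönwall. The injection-window estimate is plausible: on $[\overline{T}-t_i,\overline{T}-t_{i-1}]$ the drift difference is driven by $e_\tau = \nabla\log p_\tau - \hat s$ along one path, Cauchy--Schwarz over a window of length $\le t_{i-1}$ yields the $\sqrt{t_{i-1}}$ factor, and the homogeneous Grönwall factor from the true-score Hessian $\lesssim \sigma_\tau^{-2}$ integrates to $\log(t_i/t_{i-1}) = \Ord(1)$. The problem is the tail window $[\overline{T}-t_{i-1},\overline{T}-\underline{T}]$, where both processes are driven by $\hat s$. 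There, Grönwall on $\|\Delta_t\|$ requires a pointwise Lipschitz-type control on $\hat s$. But the hypothesis class $\mathcal S$ imposes only a sup-norm constraint $\|\hat s(\cdot,t)\|_\infty \lesssim \sigma_t^{-1}\log^{1/2}n$, and no bound whatsoever on $\nabla_x\hat s$. Your proposed split $\hat s = \nabla\log p_\tau + (\hat s - \nabla\log p_\tau)$ does not close this: the second term is small only in $L^2(p_\tau)$, not pointwise, so $(\hat s - \nabla\log p_\tau)(\bar Y^{(i-1)}_t) - (\hat s - \nabla\log p_\tau)(\bar Y^{(i)}_t)$ can be arbitrarily large compared to $\|\Delta_t\|$ on a set of positive probability, and integrated control of $\nabla^2\log p_\tau$ does not help because the residual is not close to $\nabla\log p_\tau$ in any norm that sees its oscillation. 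Neither \cref{assumption:BoundarySmoothness} nor lower-boundedness of $p_0$ gives you a Lipschitz bound on the trained network, so invoking them here does not rescue the step.

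The paper avoids synchronous coupling precisely because of this obstruction. Its proof of \cref{lemma:W1BoundCrucial} first introduces the auxiliary process $\bar Y^{(i-1)'}$ that switches back to the true score when a path leaves the bounded region $A$ (removing the rare bad paths at cost $n^{-\Ord(1)}$). The processes $\bar Y^{(i-1)'}$ and $\bar Y^{(i)}$ then have \emph{identical} drifts outside the single window $[\overline{T}-t_i,\overline{T}-t_{i-1}]$, so Girsanov on path space bounds $\mathrm{TV}(\mathbb P,\mathbb P')$ by the score error on $[t_{i-1},t_i]$ alone. The $W_1$ bound is then obtained not by tracking a coupled difference through the dangerous tail, but by rerouting a TV-sized amount of mass from $t = \overline{T}-\underline{T}$ back along each path to $t = \overline{T}-t_i$, and bounding the \emph{length} of each such path-segment by $\tilde{\Ord}(\sqrt{t_{i-1}})$ using \cref{lemma:Appendix-HPB-1} and a one-sided Grönwall on the displacement of a \emph{single} trajectory — which needs only the sup-norm of $\hat s$, never its Lipschitz constant. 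That substitution (sup-norm control of drift magnitude, rather than Lipschitz control of drift variation) is the idea your proposal is missing, and it is what makes the tail window harmless.
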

RHS is decomposed to the two factors: the score matching loss during $[t_{i-1},t_{i}]$ and $\sqrt{t_i}$.
The latter corresponds to how much $Y_t$ moves from $t=\overline{T}-t_i$ to $\overline{T}-\underline{T}$.
This bound represents that, as $t_i\to 0$, while score matching gets more difficult, its contribution to the $W_1$ error is reduced.
The formal proof requires construction of a path-wise transportation map; see the proof for  \cref{lemma:W1BoundCrucial}.

Putting it all together, we finally yields \cref{theorem:GeneralizationW1}, the nearly minimax optimal rate in $W_1$. Specifically, if we ignore logarithmic factors, \eqref{eq:Estimation-2} is bounded by
\begin{align}
    &\sqrt{\underline{T}}+\exp(-\overline{T})+\sqrt{t_0}n^{-\frac{2s}{d+2s}}\\ &+
    \sum_{i=2}^{K_*}\sqrt{t_i}\sqrt{ n^{-\frac{2(s+1)}{d+2s}} + \frac{t_i^{-d/2}n^{\frac{\delta d}{2(d+2s)}}}{n}}
    \lesssim n^{-\frac{s+1-\delta}{d+2s}},
\end{align}
where we set $\underline{T}=n^{-\frac{2(s+1)}{d+2s}}$ and $ \overline{T}=\frac{(s+1)\log n}{\betalow(d+2s)}.$

\begin{remark}
    Although we used differently optimized multiple networks, it is also possible that such modification is implicitly made in reality.
    The first evidence is \textit{implicit reguralization}, where sparsify of the solution is induced by learning procedures \citep{gunasekar2017implicit,arora2019implicit,soudry2018implicit}.
    When the sub-networks for differnt time intervals are learned in parallel via the score matching at once \eqref{eq:ProblemSetting-ScoreMatching-1}, these theory suggests the good score network is obtained without explicit regularization like our switching procedure.
    Another evidence is that in practice the weight function $\lambda(t)$ sometimes increases as $t$ gets large \citep{song2019generative,song2020score}, suggesting that the quality of the score network at larger $t$ is more emphasized.
\end{remark}

\subsection{Discussion on the discretization error}\label{subsection:Discretization}
    Although the continuous time SDE is mainly focused on for simple presentation, we can also take the discretization error into consideration.
    We here only provide the summary, and the details are presented in \cref{subsection:Appendix-Discretization}.
    Let $t_0=\underline{T}<t_1<\cdots<t_{K_*}=\overline{T}$ be the time steps with $\eta\equiv t_{k+1}-t_{k}$.
    We train the score network as the minimizer of
    \begin{align}
 \sum_{i=1}^n \sum_{k=0}^{K-1}\eta\mathbb{E}[\|s(x_{t_k},{t_k}) - \nabla \log p_{\overline{T}-t_k}(x_{t_k}|x_{0,i})\|^2].
\end{align}
Here the expectation is taken with respect to $x_{\overline{T}-t_k}\sim p_{\overline{T}-t_k}(x_{\overline{T}-t_k}|x_{0,i})$.
  Then consider the following process $(Y^{\rm d}_t)_{t=0}^{\eta K}$ with $Y^{\rm d}_0 \sim\mathcal{N}(0,I_d)$: for $t\in [\overline{T}-t_i,\overline{T}-t_{i+1}]$,
    \begin{align}
    \mathrm{d}Y^{\rm d}_t\hspace{-.2mm} =\hspace{-.2mm} \beta_t (Y^{\rm d}_{t} + 2\hat{s}(Y^{\rm d}_{\overline{T}-t_i},\overline{T}-t_i)\hspace{-.4mm})\dt +\hspace{-.6mm}\beta_{\overline{T}-t}\mathrm{d}B_t
    \end{align}
     This is just replacement of the drift term at $t$ by that at the last discretized step, and we can obtain $\bar{Y}_{\eta (k+1)}$ from $\bar{Y}_{\eta k}$ as easy as the classical Euler-Maruyama discretization because $\bar{Y}_{\eta (k+1)}$ conditioned on $\bar{Y}_{\eta k}$ is a Gaussian.
    This is also adopted in \citet{de2022convergence,chen2022sampling}. However, \citet{de2022convergence} requires $\eta_i \leq \exp(-n^{\Ord(1)})$ and \citet{chen2022sampling} assumes Lipschitzness of the score, which does not necessarily hold in our setting.

    We can show the following discretization error bound:
    \begin{theorem}
     Let $\underline{T}=n^{-\Ord(1)}$ and $\overline{T} = \frac{s\log n}{2s+d}$.
     Then,
\begin{align}
    \mathbb{E}[{\rm TV}(X_0, Y^{\rm d}_{\overline{T}-\underline{T}})] \lesssim \tilde{\Ord}\left(\eta^2 \underline{T}^{-3}+n^{-\frac{s}{d+2s}}\right).
\end{align}
\end{theorem}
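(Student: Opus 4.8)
The plan is to follow the same decomposition strategy used for the continuous-time TV bound in \cref{theorem:GeneralizationL1}, but with an extra term accounting for the piecewise-constant drift. First I would write
\begin{align}
   \mathbb{E}[{\rm TV}(X_0, Y^{\rm d}_{\overline{T}-\underline{T}})]
   &\lesssim \mathbb{E}[{\rm TV}(X_0, X_{\underline{T}})]
   + \mathbb{E}[{\rm TV}(X_{\overline{T}},\mathcal{N}(0,I_d))] \\
   &\quad + \mathbb{E}[{\rm TV}(Y^{\rm d}_{\overline{T}-\underline{T}}, \bar{Y}_{\overline{T}-\underline{T}})],
\end{align}
where $(\bar Y_t)$ is the continuous process driven by $\hat s$ from $\bar Y_0\sim p_{\overline T}$. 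The first two terms are handled exactly as before: $\sqrt{\underline T}\,n^{\Ord(1)}$ by \cref{lemma:TotalVariation-Init} and $\exp(-\overline T)$ by \cref{lemma:TVBound-NoiseExp}, and with $\overline T = \frac{s\log n}{2s+d}$ the second is $n^{-s/(2s+d)}$. The third term is where the discretization enters, and I would bound it via a Girsanov/data-processing argument (the modified Girsanov of \cref{Proposition:Girsanov}) applied to the two SDEs that share the same Brownian motion and initial law: one with drift $\hat s(\bar Y_t,\cdot)$ evaluated continuously, the other with drift frozen at the left endpoint of each interval. This produces
\begin{align}
   \mathrm{KL}(\bar Y_{\cdot}\,\|\,Y^{\rm d}_{\cdot})
   \lesssim \sum_{k}\int_{t_k}^{t_{k+1}} \beta_{\overline T-t}\,
   \mathbb{E}\big[\|\hat s(\bar Y_t,\overline T-t) - \hat s(\bar Y_{t_k},\overline T-t_k)\|^2\big]\,\mathrm{d}t.
\end{align}

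The key technical step is then controlling the one-step drift increment. I would split it into a time-discretization piece and a space-discretization piece: $\hat s(\bar Y_t,\overline T-t)-\hat s(\bar Y_{t_k},\overline T-t_k)$ equals $(\hat s(\bar Y_t,\overline T-t)-\hat s(\bar Y_t,\overline T-t_k))$ plus $(\hat s(\bar Y_t,\overline T-t_k)-\hat s(\bar Y_{t_k},\overline T-t_k))$. For the spatial part I would use the displacement bound $\mathbb{E}\|\bar Y_t-\bar Y_{t_k}\|^2 \lesssim \eta\cdot(1 + \sup\|\hat s\|^2) = \tilde{\Ord}(\eta\,\underline T^{-1})$ together with a Lipschitz-in-$x$ estimate on $\hat s$ coming from the norm constraint $B$ and depth $L$ of the network class in \cref{theorem:Approximation} — here the worst case is $t$ near $\underline T$, giving a factor $\underline T^{-1}$ from $\sigma_t^{-2}$-type blow-up. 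For the temporal part I would use that $\hat s(\cdot,t)$ built from the diffused B-spline construction is smooth in $t$ with derivative controlled by $\sigma_t^{-2}$, again worst at $t\approx\underline T$. Collecting these, each interval contributes $\lesssim \eta\cdot \eta\cdot \underline T^{-2}\cdot \underline T^{-1}$ after integrating, so summing over $\Ord(\overline T/\eta)=\tilde{\Ord}(1/\eta)$ intervals yields $\mathrm{KL}\lesssim \tilde{\Ord}(\eta^2\underline T^{-3})$ up to polylog factors; Pinsker's inequality converts this to the stated $\tilde{\Ord}(\eta^2\underline T^{-3})$ contribution in TV (after absorbing the square root into the constants, or more precisely the bound is $\tilde{\Ord}(\eta\,\underline T^{-3/2})$ which I would sharpen — the paper's stated form suggests keeping the KL-scale $\eta^2\underline T^{-3}$). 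Adding the three terms gives the claim.

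The main obstacle I anticipate is obtaining a \emph{uniform} quantitative modulus of continuity for $\hat s$ — both in $x$ and in $t$ — that holds for the \emph{empirical minimizer}, not merely for the specific network constructed in \cref{theorem:Approximation}. Since $\hat s$ ranges over the sparse ReLU class $\mathcal S$ with norm bound $B=\exp(\Ord(\log^4 N))$, a naive Lipschitz bound is exponentially large; the fix is to restrict $\mathcal S$ a priori to networks satisfying $\|\hat s(\cdot,t)\|_\infty\lesssim \sigma_t^{-1}\log^{1/2}n$ (as already done in \cref{section:Generalization}) and to additionally impose, or derive from the architecture, that the Lipschitz constants in $x$ and $t$ are $\tilde{\Ord}(\sigma_t^{-2})$ — this is consistent with the explicit construction and does not weaken the approximation theorem. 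A secondary subtlety is that Girsanov requires Novikov-type integrability, which is why the sup-norm truncation of $\mathcal S$ and the clipping at $\underline T>0$ are essential; I would invoke the same regularized version of Girsanov (\cref{Proposition:Girsanov}) used for the continuous-time analysis, noting the frozen-drift SDE has a Gaussian transition kernel on each step and hence causes no additional integrability trouble.
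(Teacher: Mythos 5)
Your overall strategy (triangle inequality in TV plus a Girsanov bound for the frozen-drift process) matches the paper's, but there are two concrete gaps.

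First, your decomposition drops the score-estimation error. With $\bar Y$ defined as the \emph{continuous} process driven by $\hat s$, the term $\mathbb{E}[{\rm TV}(Y^{\rm d}_{\overline T-\underline T},\bar Y_{\overline T-\underline T})]$ measures only the effect of freezing the drift; nowhere in your three terms does the quantity $\|\hat s-\nabla\log p_t\|_{L^2(p_t)}$ appear, yet it is this quantity (via the generalization bound) that produces the dominant $n^{-s/(d+2s)}$ — the truncation terms alone do not. Moreover, $\hat s$ is here the minimizer of the \emph{discretized} objective $\frac1n\sum_i\sum_k\eta\,\mathbb{E}\|s(x_{t_k},t_k)-\nabla\log p_{t_k}(x_{t_k}|x_{0,i})\|^2$, so the continuous-time generalization bound \eqref{eq:Main-Generalization} cannot be invoked as is; the paper spends the first half of its proof re-deriving \cref{Theorem:ExtendedSchmidt} for the discrete sum, yielding $\mathbb{E}\bigl[\sum_k\eta\,\mathbb{E}_{x\sim p_{t_k}}\|\hat s(x,t_k)-\nabla\log p_{t_k}(x)\|^2\bigr]\lesssim n^{-2s/(d+2s)}\log^{18}n$. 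Your proposal never establishes this.

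Second, and more fundamentally, your treatment of the one-step drift increment rests on a Lipschitz-in-$x$ and smoothness-in-$t$ bound of order $\tilde{\Ord}(\sigma_t^{-2})$ for the \emph{empirical minimizer} $\hat s$. You correctly flag this as the main obstacle, but the proposed fixes do not close it: the sup-norm truncation $\|\hat s(\cdot,t)\|_\infty\lesssim\sigma_t^{-1}\log^{1/2}n$ used in \cref{section:Generalization} gives no modulus of continuity, a generic element of $\Phi(L,W,S,B)$ with $B=\exp(\Ord(\log^4 N))$ has Lipschitz constant as large as $(B\|W\|_\infty)^L$, and imposing a Lipschitz constraint a priori would force you to re-verify that the approximant of \cref{theorem:Approximation} (built from clipping, switching, and reciprocal sub-networks) satisfies it — a nontrivial additional argument. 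The paper avoids this issue entirely: its Girsanov term compares the frozen drift $\hat s(\bar Y_{\overline T-t_k},\overline T-t_k)$ directly against the \emph{true} score $\nabla\log p_t(\bar Y_t)$, and splits the result into (i) the grid-point score-matching error, (ii) $\|\nabla\log p_t-\nabla\log p_{t_k}\|$ at a fixed point, and (iii) $\|\nabla\log p_{t_k}(X_t)-\nabla\log p_{t_k}(X_{t_k})\|$. Terms (ii) and (iii) need only the $t$- and $x$-derivative bounds on the true score from \cref{Lemma:Smooth1} (of order $\sigma_t^{-3}$ and $\sigma_t^{-2}$ on the high-probability set $A$), so no regularity of $\hat s$ is ever required. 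Reorganizing your argument along these lines would remove the unresolved obstacle; as written, the proposal has a genuine gap.
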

Thus, taking $\eta=\underline{T}^{-1.5}n^{-s/(2s+d)}={\rm poly}(n^{-1})$ suffices to ignore the discretization error.

\section{Error analysis with intrinsic dimensionality}\label{section:LowerDimensionality}
Although the obtained rates in \cref{section:Main-Estimation} are minimax optimal, they still suffer from the \textit{curse of dimensionality}: the exponent of the convergence rates depend on $d$.
In statistics, one approach to avoid this curse of dimensionality is to assume mixed or anisotropic smoothness \citep{Ibragimov1984MoreOT,meier2009high,suzuki2018adaptivity,suzuki2021deep}, and our theory directly applies to them.
On the other hand, the \textit{manifold hypothesis}, that the distributions of real-world data lie in low dimensional manifolds, has been proposed \citep{tenenbaum2000global,fefferman2016testing}, and this is another assumption that can avoid the curse of dimensionality: convergence rates dependent not on the dimension $d$ of the space itself but on the manifold's dimension $d'$ can be derived \citet{schmidt2019deep,nakada2020adaptive}.

As for the diffusion models, despite its statistical importance, none of the literature has shown that diffusion models can ease the curse of dimensionality; in the first place, the density estimation problem itself has never been considered.

We introduce several recent works that investigated the convergence of diffusion modeling under the manifold hypothesis.
\citet{pidstrigach2022scorebased} discussed the effects of the score approximation, but their bounds are not quantitative and does not consider the estimation rate. 
\citet{de2022convergence} considered the estimation rates, but the approximation error should be exponentially small with respect to the desired estimation rate.
\citet{batzolis2022your} experimentally showed that diffusion modeling learns the dimension of the underlying manifold and the dimension of the manifold can be estimated from the trained diffusion models.

From now, we define the specific class of density function with intrinsic dimensionality and show the estimation rate.

Let $d'\le d$ be an integer and  $A\in\mathbb{R}^{d\times d'}$ be a matrix made of orthogonal column vectors with the norm one.
We consider the $d'$-dimensional subspace $V\coloneqq \{y\in \mathbb{R}^d\mid \exists x \in\mathbb{R}^{d'}\text{ s.t. }y=Ax\}$ where the true density has its support, i.e., $d'$ represents the intrinsic dimensionality.
Together with \cref{assumption:SmoothBeta}, we assume the followings.
\begin{assumption}
    The true density $p_0$ is a probability measure that is absolutely continuous with respect to the Lebesgue measure restricted on the sub-space $V$.
    Its probability density function as a function on the canonical coordinate system of the subspace $V$ is denoted by $q$.
\end{assumption}
\begin{assumption}
    $q$ is upper and lower bounded by $C_f$ and $C_f^{-1}$, respectively. 
    Moreover, $q$ belongs to $U(B_{p,q}^s;[-1,1]^{d'})$.
\end{assumption}
\begin{assumption}
     $q$ belongs to $U(\mathcal{C}^\infty([-1,1]^{d'}\setminus [-1+a_0,1-a_0]^{d'}))$ with $a_0 = n^{-\frac{1-\delta}{d'}}$.
\end{assumption}

We now state our result as follows:
\begin{theorem}\label{theorem:GeneralizationL1lowdim}
    For any fixed $\delta>0$, we can train the score network with $n (\gg 1)$ sample so that 
\begin{align}
    \mathbb{E}[W_1(X_0, \hat{Y}_{\overline{T}-\underline{T}})] \lesssim n^{-\frac{(s+1-\delta)}{d'+2s}}.
\end{align}
\end{theorem}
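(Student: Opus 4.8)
The plan is to reduce the statement to the $d'$-dimensional analysis behind \cref{theorem:GeneralizationW1}, exploiting the product structure that the isotropic forward process \eqref{eq:Ornstein-Uhlenbeck} inherits on $V$ and $V^\perp$. First I would pick $A^\perp\in\R^{d\times(d-d')}$ so that $[A\ A^\perp]$ is orthogonal and write $x=(x^\parallel,x^\perp)$ with $x^\parallel=A^\top x\in\R^{d'}$ and $x^\perp=(A^\perp)^\top x\in\R^{d-d'}$. Since \eqref{eq:Ornstein-Uhlenbeck} commutes with orthogonal changes of variable and acts coordinate-wise, and the $x^\perp$-marginal of $p_0$ is $\delta_0$, the law of $X_t$ factorizes as $p_t(x)=p_t^\parallel(x^\parallel)\,p_t^\perp(x^\perp)$, where $p_t^\parallel$ is the time-$t$ OU diffusion of $q$ on $\R^{d'}$ and $p_t^\perp=\mathcal{N}(0,\sigma_t^2 I_{d-d'})$. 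Consequently the score splits as $\nabla\log p_t(x)=(\nabla_{x^\parallel}\log p_t^\parallel(x^\parallel),\,-x^\perp/\sigma_t^2)$, so the orthogonal block is linear and is realized exactly (after clipping at scale $\sigma_t^{-1}\log^{1/2}n$, which changes it only on a set of exponentially small $p_t$-mass) by a two-layer ReLU network of negligible size, with sup-norm $\Ord(\sigma_t^{-1}\log^{1/2}n)$ as required by the hypothesis class of \cref{theorem:Approximation}.

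Next I would build the parallel block. It is the score of a $d'$-dimensional density in $U(B_{p,q}^s([-1,1]^{d'}))$ that is bounded above and below, so \cref{theorem:Approximation} and \cref{lemma:ApproximationSmoothArea} apply verbatim with $d$ replaced by $d'$: the derivative bounds $|\partial_{x^\parallel}^k p_t^\parallel|\lesssim\sigma_t^{-k}$ still hold, and tensoring with the Gaussian $p_t^\perp$ keeps all ambient derivatives $\Ord(\sigma_t^{-k})$. Concatenating the two blocks produces a score network $\hat{s}$ with $\int p_t\|\hat{s}(\cdot,t)-\nabla\log p_t\|^2\lesssim N^{-2s/d'}\log(N)/\sigma_t^2$, improvable to $N^{-2(s+1)/d'}/\sigma_t^2$ for $t\gtrsim N^{-(2-\delta)/d'}$ via \cref{lemma:ApproximationSmoothArea}, using a network whose size is governed by $N$ and $d'$. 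Feeding this into the generalization machinery (\cref{Theorem:Generalization} together with \cref{Lemma:CoveringNumber-Main}) with $N=n^{d'/(d'+2s)}$ then yields a squared-$L^2(p_t)$ score error of order $n^{-2s/(d'+2s)}$ up to logarithmic factors, that is, \eqref{eq:Main-Generalization} with $d$ replaced by $d'$.

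Finally I would run the switching-score-network argument of \cref{subsection:Main-Estimation-W1} with $d$ replaced by $d'$: partition $[\underline{T},\overline{T}-\underline{T}]$ geometrically with $t_1=2n^{-(2-\delta)/(d'+2s)}$ and $t_{i+1}/t_i\le 2$, train a tailored network on each interval, and apply \cref{Lemma:W1Bound-Main}, whose $i$-th contribution is $\tilde{\Ord}(1)\sqrt{t_{i-1}\cdot(\text{score error on }[t_{i-1},t_i])}$. Decomposing as in \eqref{eq:Estimation-2} — the initialization term is $\lesssim\sqrt{\underline{T}}$ (the residual $x^\perp$-spread of $\hat{Y}_{\overline{T}-\underline{T}}$ at the stopping time adds only $\sqrt{(d-d')\underline{T}}$ to $W_1$) and the forward-truncation term is $\lesssim\exp(-\overline{T})$ — and summing gives
\begin{align}
\mathbb{E}[W_1(X_0,\hat{Y}_{\overline{T}-\underline{T}})]
&\lesssim \sqrt{\underline{T}}+\exp(-\overline{T})+\sqrt{t_0}\,n^{-\frac{2s}{d'+2s}}\\
&\quad+\sum_{i=2}^{K_*}\sqrt{t_i}\sqrt{n^{-\frac{2(s+1)}{d'+2s}}+\frac{t_i^{-d'/2}n^{\frac{\delta d'}{2(d'+2s)}}}{n}},
\end{align}
which with $\underline{T}=n^{-\frac{2(s+1)}{d'+2s}}$ and $\overline{T}=\frac{(s+1)\log n}{\betalow(d'+2s)}$ is $\lesssim n^{-\frac{s+1-\delta}{d'+2s}}$ up to logarithms, absorbed into $\delta$. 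The hard part is not the $d'$-dimensional core, which transfers mechanically, but the ambient-space bookkeeping: verifying that the $d-d'$ noise-only coordinates are handled exactly by the linear part of the network so that no spurious $d$-dependent factor leaks in through the sup-norm bound, the covering number, or the Girsanov/transport estimates, and that the orthogonal spread of the generated sample at the truncation time is genuinely $O(\sqrt{\underline{T}})$ in $W_1$ rather than a dimension-dependent constant.
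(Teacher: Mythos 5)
Your proposal is correct and follows essentially the same route as the paper: factor $p_t$ into a $d'$-dimensional diffused copy of $q$ on $V$ and a pure Gaussian on $V^\perp$, approximate the $V$-score by transplanting \cref{theorem:Approximation} and \cref{lemma:ApproximationSmoothArea} with $d\mapsto d'$, handle the $V^\perp$-score as the simple map $-x^\perp/\sigma_t^2$ with a negligibly small subnetwork, and then rerun the switching/$W_1$ transport argument verbatim. The only imprecision is calling the $V^\perp$ block "realized exactly by a two-layer ReLU network": since $\sigma_t^{-2}$ must be computed from the input $t$, one still needs the reciprocal- and product-approximation subnetworks (as in the paper's construction), though these are polylogarithmic in size and do not affect the rate.
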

\cref{section:Appendix-IntrinsicDimensionality} provides the complete proof.
Contrary to \cref{theorem:GeneralizationL1}, the upper bound here depends on $d'$ (not on $d$). 
Thus, we can conclude that the diffusion models can avoid the curse of dimensionality. 


\section{Conclusion}
This paper analyzed diffusion modeling as a distribution learner from the viewpoint of statistical learning theory and derived several estimation rates.
When the true density belongs to the Besov space and deep neural networks are appropriately minimized, diffusion modeling can achieve nearly minimax optimal estimation rates in $\rm TV$ and $W_1$.

To approximate the score, the novel basis is introduced, which we call the diffused B-spline basis.
The bound in $W_1$ is derived by carefully balancing the difficulty in score matching and how much the error in score matching at each time affects the $W_1$ distance.
We also demonstrated that diffusion models can avoid the curse of dimensionality under the manifold hypothesis.

\section*{Acknowledgements}
KO was partially supported by Fujitsu Ltd. 
SA was partially supported by JSPS KAKENHI (22J13388).
TS was partially supported by JSPS KAKENHI (20H00576) and JST CREST.



\bibliography{example_paper}
\bibliographystyle{icml2023}

\newpage
\appendix
\onecolumn

\section{Several high-probability bounds on the backward paths}\label{section:Appendix-HPB}
One of the difficulties in the analysis is the unboundedness of the space and the value of the score.
This subsection aims to provide several treatments for such issues.
These inequalities allow us to focus on the score approximation within the bounded region.
We note that, however, some of the following bounds still depend on the time $t$, and therefore the level of difficulty for approximation and estimation of the score differs with respect to $t$.

In the following, we define several constants $C_{{\rm a},i}$.
Other than in this section, we simply denote them as $C_{{\rm a}}$ for simplicity.
\subsection{Bounds on $\|Y_t\|$ and $\|\Delta Y_t\|$ with high probability}
We first provide several high-probability bounds, which guarantee that most of the paths travel within some bounded region.
\begin{lemma}[Bounds on $\|Y_t\|$ and $\|\Delta Y_t\|$ with high probability]\label{lemma:Appendix-HPB-1}
    There exists a constant $C_{{\rm a},1}$ such that
    \begin{align}
       \mathbb{P}\left[
       \|Y_t\|_\infty \leq m_{\overline{T}-t}+C_{{\rm a},1}\sigma_{\overline{T}-t}\sqrt{\log (\eps^{-1}\underline{T}^{-1}\overline{T})}
    \ \text{for all $t\in [0,\overline{T}-\underline{T}]$}\right] \geq 1-\eps.
    \end{align}
    Moreover, for an arbitrarily fixed $0<\tau\leq 1$,
    \begin{align}
       \mathbb{P}\left[
       \|Y_t-Y_{t+\tau}\|_\infty \leq C_{{\rm a},1}\sqrt{\tau\log (\eps^{-1}\tau^{-1}\overline{T})}
    \ \text{for all $t\in [0,\overline{T}-\tau]$}\right] \geq 1-\eps.
    \end{align}
\end{lemma}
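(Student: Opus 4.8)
\textbf{Proof plan for Lemma~\ref{lemma:Appendix-HPB-1}.}
The plan is to reduce both statements to standard Gaussian tail bounds by exploiting the explicit law of the forward/backward process. Recall that $Y_t = X_{\overline{T}-t}$, so a bound on $\|Y_t\|_\infty$ for $t\in[0,\overline{T}-\underline{T}]$ is equivalent to a bound on $\|X_s\|_\infty$ for $s\in[\underline{T},\overline{T}]$. Since $X_s \mid X_0 \sim \mathcal{N}(m_s X_0, \sigma_s^2 I_d)$ and $\|X_0\|_\infty \le 1$ almost surely (by Assumption~\ref{assumption:InBesov}), conditionally on $X_0$ each coordinate of $X_s$ is Gaussian with mean of absolute value $\le m_s$ and variance $\sigma_s^2$. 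A one-dimensional Gaussian tail bound gives $\mathbb{P}[|(X_s)_j - m_s (X_0)_j| > \sigma_s u]\le 2e^{-u^2/2}$, hence $\mathbb{P}[\|X_s\|_\infty > m_s + \sigma_s u] \le 2d\,e^{-u^2/2}$ for every fixed $s$. Taking $u = C_{{\rm a},1}\sqrt{\log(\eps^{-1}\underline{T}^{-1}\overline{T})}$ with $C_{{\rm a},1}$ large enough makes this probability at most $\eps\,\underline{T}\,\overline{T}^{-1}$ (absorbing the $2d$ into the constant).

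The remaining issue for the first claim is to upgrade this pointwise-in-$t$ bound to a uniform-in-$t$ statement; this is the main (mild) obstacle. First I would discretize: choose a grid $\underline{T}=s_0<s_1<\cdots<s_M=\overline{T}$ of spacing $\underline{T}$ (so $M \le \overline{T}/\underline{T}$), apply the union bound over the $M+1$ grid points so that with probability $\ge 1-\eps/2$ the bound $\|X_{s_m}\|_\infty \le m_{s_m}+C_{{\rm a},1}\sigma_{s_m}\sqrt{\log(\eps^{-1}\underline{T}^{-1}\overline{T})}$ holds at all grid points simultaneously. Then I would control the oscillation of $X_s$ between consecutive grid points: on an interval of length $\underline{T}$ the OU process, viewed through its SDE~\eqref{eq:Ornstein-Uhlenbeck}, has a drift term of size $O(\betahigh \underline{T})$ (times the already-bounded $\|X_s\|$) and a martingale part $\sqrt{2\beta_s}\,\mathrm{d}B_s$ whose supremum over a length-$\underline{T}$ window is, by Doob's maximal inequality or the reflection principle, $O(\sqrt{\underline{T}\log(\eps^{-1}\underline{T}^{-1}\overline{T})})$ with probability $\ge 1-\eps/(2M)$ per window. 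Since $\sigma_{s}\simeq \sqrt{s}\wedge 1 \gtrsim \sqrt{\underline{T}}$ on $[\underline{T},\overline{T}]$, this oscillation is of the same order as the fluctuation term already present, so enlarging $C_{{\rm a},1}$ absorbs it; a union bound over the $M$ windows costs another $\eps/2$. Combining gives the first inequality.

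For the second claim I would argue more directly using the Markov/Gaussian structure of the \emph{backward} process over a fixed lag $\tau$. Fixing $\tau\in(0,1]$, the increment $Y_{t+\tau}-Y_t = X_{\overline{T}-t-\tau}-X_{\overline{T}-t}$; conditioning on $X_{\overline{T}-t}$, the forward OU transition from time $\overline{T}-t-\tau$ to $\overline{T}-t$ is Gaussian, and a short computation with the explicit transition kernel shows $X_{\overline{T}-t}\mid X_{\overline{T}-t-\tau}$ has conditional mean $e^{-\int}\,X_{\overline{T}-t-\tau}$ and conditional covariance of operator norm $O(\tau)$; inverting this linear-Gaussian relation (or simply bounding $\|X_{s}-X_{s+\tau}\|$ via its conditional law given $X_s$, noting $1-m_\tau \simeq \tau$ and the conditional variance is $\simeq \tau$) yields that each coordinate of $Y_{t+\tau}-Y_t$ is, conditionally, sub-Gaussian with proxy variance $O(\tau)$ plus a bias term $O(\tau)\|X_{\overline T -t}\|$. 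On the high-probability event from the first part $\|X_s\|_\infty = \tilde O(1)$, so the bias is $O(\tau)\le O(\sqrt{\tau})$, and a Gaussian tail bound gives $\mathbb{P}[\|Y_{t+\tau}-Y_t\|_\infty > C_{{\rm a},1}\sqrt{\tau\log(\eps^{-1}\tau^{-1}\overline T)}] \le 2d\,e^{-\Omega(\log(\eps^{-1}\tau^{-1}\overline T))}\le \eps\tau\overline T^{-1}$ for each fixed $t$. A union bound over a grid of spacing $\tau$ (which has $O(\overline T/\tau)$ points), together with the same oscillation-control argument as above on each length-$\tau$ window, promotes this to the claimed uniform bound; the main technical point is again just that the within-window martingale oscillation is of the same $\sqrt{\tau\log(\cdot)}$ order and is absorbed by the constant.
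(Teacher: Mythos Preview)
Your proposal is correct and follows essentially the same route as the paper: reduce to the forward process $X_s$, apply the Gaussian tail bound $\mathbb{P}[\|X_s\|_\infty>m_s+\sigma_s u]\le 2d\,e^{-u^2/2}$ at each point of a grid of spacing $\underline{T}$ (resp.\ $\tau$), union-bound over the $O(\overline{T}/\underline{T})$ (resp.\ $O(\overline{T}/\tau)$) grid points, and then control the between-grid oscillation by decomposing the OU increment into an $O(\Delta t)\|X_s\|$ drift piece and a Brownian piece whose supremum over a window of length $\Delta t$ is $O(\sqrt{\Delta t\log(\cdot)})$, absorbing both into the constant via $\sigma_s\gtrsim\sqrt{\underline{T}}$. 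The paper uses its Lemma~\ref{Lemma:HittingTime} where you invoke Doob/reflection, and for the second claim it explicitly writes $X_t = e^{-\int}X_{t_{i-1}}+B_{1-e^{-2\int}}$ rather than conditioning, but these are cosmetic differences; the logical structure is the same.
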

\begin{proof}
Remind that $Y_t=X_{\overline{T}-t}$. Thus we discuss bounding $X_{t}$ in the following.

We begin with the first assertion.
Let $t_1,t_2,\cdots,t_K$ be time steps satisfying $\underline{T}=t_1<t_2<\cdots<t_K=\overline{T}$ with $t_i-t_{i-1}=\Delta t$ that is some scaler value specified later. 
We first show the following for some constant $C_1$:
    \begin{align}\label{eq:Appendix_PathUniformBound-1}
     \mathbb{P}\left[
       \|X_t\|_\infty \leq m_{t}+C_1\sigma_t\sqrt{\log \eps^{-1}}
    \ \text{for all $t=t_i\ (i=1,2,\cdots,K)$}\right] \geq 1-\eps K
    .
    \end{align}
Remind that $X_{t}|X_0$ follows $\mathcal{N}(m_{t}X_0,\sigma_t^2)$ and $\|X_0\|_\infty \leq 1$.
\cref{Lemma:GaussianBound} yields that
\begin{align}
  \mathbb{P}\left[
       \|X\|_\infty \leq m_{t}+C_1\sigma_{t}\sqrt{\log \eps^{-1}}
\ \text{for some fixed $t=t_i$}\right]\geq 1- \eps,
\end{align}
which immediately yields \eqref{eq:Appendix_PathUniformBound-1}.

Then we consider how far each particle $X_t$ moves from $t=t_{i-1}$ to $t_{i}$.
Equivalently, we consider $X_t$ and decompose it into
\begin{align}\label{eq:Appendix_PathUniformBound-3}
    X_{t} = 
    \exp\left(-\int_{s=t_{i-1}}^{t_{i}}\beta_s \ds\right)X_{t_{i-1}}+B_{1-\exp(-2\int_{s=t_{i-1}}^{t_{i}}\beta_s \ds)},
\end{align}
where $B_s$ denotes a $d$-dimensional Brownian motion.
This is obtained by considering the Ornstein-Uhlenbeck process starting from $t=t_{i-1}$.
By \cref{Lemma:HittingTime}, with probability at least $\eps$, the following holds uniformly over $t\in [t_{i-1},t_{i}]$: 
\begin{align}
    \|X_{t}\|_\infty &\leq  \exp\left(-\int_{s=t_{i-1}}^{t_{i}}\beta_s \ds\right)\|X_{t_{i-1}}\|_\infty
    +
    \sqrt{1-\exp(-2\int_{s=t_{i-1}}^{t_{i}}\beta_s \ds)} \cdot 2\sqrt{\betahigh 2\log d\eps^{-1}}
    \\
    &\leq  \exp\left(-\int_{s=t_{i-1}}^{t_{i}}\beta_s \ds\right)\|X_{t_{i-1}}\|_\infty
    +
    \sqrt{2\betalow \Delta t} \cdot 2\sqrt{\betahigh 2\log d\eps^{-1}}.
\end{align}
If $ \|X_{t_{i-1}}\|_\infty \leq m_{t_{i-1}}+C_1\sigma_{t_{i-1}}\sqrt{\log \eps^{-1}}$, this is further bounded by
\begin{align}
    \|X_{t}\|_\infty &\leq m_{t_{i-1}}+C_1\sigma_{t_{i-1}}\sqrt{\log \eps^{-1}}
    +
    \sqrt{\Delta t} \cdot 4\sqrt{\betahigh \betalow \log d\eps^{-1}}.
\end{align}
Because we can check that $\sigma_t \simeq \sqrt{t} \land 1\geq \sqrt{\underline{T}}$ holds, if we take $\Delta \leq \underline{T}$, then we have that
\begin{align}
    C_1\sigma_{t_{i-1}}\sqrt{\log \eps^{-1}}
    +
    \sqrt{\Delta t} \cdot 4\sqrt{\betahigh \betalow \log d\eps^{-1}}
    \lesssim
    C_2\sigma_{t_{i-1}}\sqrt{\log \eps^{-1}}
    \label{eq:Appendix_PathUniformBound-2}
\end{align}
for all $t\in [t_{i-1},t_{i}]$, with some constant $C_2$.

Therefore, with probability $1-2K\eps$ we have \eqref{eq:Appendix_PathUniformBound-1}, and \eqref{eq:Appendix_PathUniformBound-2} for all $i$.
We need to take $K = \Ord(\overline{T}/\underline{T})$ to satisfy $\Delta \leq \underline{T}$.
We reset $\frac{\eps}{K}$ as a new $\eps$ and adjust $C_2$ accordingly.
Now the first assertion is proved.

Next, we consider the second assertion.
Let us consider a different time discretization $t_0=0,t_1=\tau,t_2=2\tau,\cdots,t_K=K\tau$ with $K=\min\{i\in \N|K\tau\geq \overline{T}\}$.
Then, from the first argument, we have that
$
    \|X_t\|_\infty \leq m_{t}+C_2\sigma_{t}\sqrt{\log (\eps^{-1}\tau^{-1}\overline{T})}
$
holds with probability at least $1-\eps$, for all $t=t_0,t_1,\cdots,t_K$.
We condition the event conditioned by this.
By \eqref{eq:Appendix_PathUniformBound-3}, we have that, for $t\geq t_{i-1}$,
\begin{align}
    X_{t}-X_{t_{i-1}} = 
  \left[  \exp\left(-\int_{s=t_{i-1}}^{t_{i}}\beta_s \ds\right)-1\right]X_{t_{i-1}}+B_{1-\exp(-2\int_{s=t_{i-1}}^{t_{i}}\beta_s \ds)},
\end{align}
which yields that
\begin{align}
   \| X_{t}-X_{t_{i-1}}\|_\infty & \leq 
  \left|  \exp\left(-\int_{s=t_{i-1}}^{t_{i}}\beta_s \ds\right)-1\right|\|X_{t_{i-1}}\|_\infty+\left\|B_{1-\exp(-2\int_{s=t_{i-1}}^{t_{i}}\beta_s \ds)}\right\|_\infty
 \\ & \leq
 \tau \betahigh (m_{t_{i-1}}+C_2\sigma_{t_{i-1}}\sqrt{\log (\eps^{-1}\tau^{-1}\overline{T})})+\left\|B_{1-\exp(-2\int_{s=t_{i-1}}^{t_{i}}\beta_s \ds)}\right\|_\infty
\end{align}
We bound the last term over $t\in [t_{i-1},t_{i}]$.
With probability at least $1-\frac{\eps}{K}$, that is bounded by $\sqrt{2\betalow \tau} \cdot 2\sqrt{\betahigh 2\log dK\eps^{-1}}$ according to \cref{Lemma:HittingTime}.
To summarize, with probability at least $1-2\eps$, 
\begin{align}
    \sup_{t\in [t_{i-1},t_i]} \| X_{t}-X_{t_{i-1}}\|_\infty
     \leq \tau \betahigh (m_{t_{i-1}}+C_2\sigma_{t_{i-1}}\sqrt{\log (\eps^{-1}\tau^{-1}\overline{T})})
     +
     \sqrt{2\betalow \tau} \cdot 2\sqrt{\betahigh 2\log dK\eps^{-1}}
\end{align}
holds for all $i=0,1,\cdots,K-1$. RHS is bounded by $ C_3\sqrt{\tau\log \eps^{-1}\tau^{-1}\overline{T}}$ with some sufficiently large constant $C_3$.

Then, for any $t$, there exists $i$ such that $t\leq t_i \leq t+\tau$.
Thus, with probability $1-2\eps$, $\| X_{t}-X_{t+\tau}\|_\infty\leq \| X_{t}-X_{t_{i-1}}\|_\infty+\| X_{t_i}-X_{t_{i-1}}\|_\infty+\| X_{t+\tau}-X_{t_{i}}\|_\infty$ is bounded by $ 3C_3\sqrt{\tau\log \eps^{-1}\tau^{-1}\overline{T}}$ for all $t$.
Setting $2\eps$ to $\eps$ yields the second assertion.
\end{proof}
\subsection{Bounds on $p_t(x)$}
We then give upper and lower bounds on $p_{t}(x)$.
\begin{lemma}[Upper and lower bounds on the density $p_t(x)$]\label{Lemma:LowerandUpperBounds}
    The following upper and lower bounds on $p_t(x)$ holds for a constant $\ConstDifBoundK$ depending on 
    $C_f$ and $d$: 
    \begin{align}
       \ConstDifBoundK^{-1}\exp\left(-\frac{d(\|x\|_\infty - m_t)_+^2}{\sigma_t^2}\right) \leq  p_t(x) \leq \ConstDifBoundK\exp\left(-\frac{(\|x\|_\infty - m_t)_+^2}{2\sigma_t^2}\right).\quad (\text{for all $t$.})
    \end{align}
\end{lemma}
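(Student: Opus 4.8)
The plan is to realize $p_t$ as the convolution of $p_0$ against the Gaussian transition kernel $K_t(\cdot\mid y)$ and then run elementary one–dimensional Gaussian estimates coordinate by coordinate; the only delicate point is that every constant must be kept free of $t$ — equivalently of $\sigma_t$, which can be as small as $\sqrt{\underline T}$, and of $m_t$, which can approach $0$ as $t$ grows — and for this we repeatedly use the identity $\sigma_t^2+m_t^2=1$. Concretely, I would first write $p_t(x)=\int_{[-1,1]^d}p_0(y)K_t(x\mid y)\,\mathrm dy$, invoke $C_f^{-1}\le p_0\le C_f$ on $[-1,1]^d$ (\cref{assumption:InBesov}), and, since $K_t(x\mid y)$ factorizes over coordinates, reduce both inequalities to estimating the one–dimensional integrals $I_i:=\int_{-1}^1\exp(-(x_i-m_ty)^2/2\sigma_t^2)\,\mathrm dy=m_t^{-1}\int_{-m_t}^{m_t}\exp(-(x_i-u)^2/2\sigma_t^2)\,\mathrm du$.

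For the upper bound, each $I_i\le\min\{2,\sqrt{2\pi}\,\sigma_t/m_t\}$ (bounding the integrand by $1$ over an interval of length $2m_t$, respectively extending the integral to all of $\mathbb R$). For the coordinate $i^\star$ with $|x_{i^\star}|=\|x\|_\infty$, the interval $[-m_t,m_t]$ lies at distance $(\|x\|_\infty-m_t)_+$ from $x_{i^\star}$, so a standard Gaussian tail bound upgrades this to $I_{i^\star}\le\min\{2,\sqrt{2\pi}\,\sigma_t/m_t\}\exp(-(\|x\|_\infty-m_t)_+^2/2\sigma_t^2)$. Multiplying the $d$ factors, $p_t(x)\le C_f(2\pi)^{-d/2}\sigma_t^{-d}(\min\{2,\sqrt{2\pi}\,\sigma_t/m_t\})^d\exp(-(\|x\|_\infty-m_t)_+^2/2\sigma_t^2)$, and since $\sigma_t^{-d}(\min\{2,\sqrt{2\pi}\,\sigma_t/m_t\})^d=\min\{2^d\sigma_t^{-d},\,(2\pi)^{d/2}m_t^{-d}\}$, the bound $\max\{\sigma_t,m_t\}\ge 2^{-1/2}$ makes the prefactor a constant depending only on $d$, giving the claimed upper estimate. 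For the lower bound, on the sub-interval of $[-m_t,m_t]$ of length $r:=\min\{m_t,\sigma_t\}$ that touches the point of $[-m_t,m_t]$ nearest to $x_i$ one has $(x_i-u)^2\le((|x_i|-m_t)_++r)^2\le 2(|x_i|-m_t)_+^2+2\sigma_t^2$, whence $I_i\ge e^{-1}\min\{1,\sigma_t/m_t\}\exp(-(|x_i|-m_t)_+^2/\sigma_t^2)$; taking the product, $\sum_i(|x_i|-m_t)_+^2\le d(\|x\|_\infty-m_t)_+^2$ and $\sigma_t^{-d}(\min\{1,\sigma_t/m_t\})^d=\min\{\sigma_t^{-d},m_t^{-d}\}\ge 1$ yield $p_t(x)\ge (C_f e^{d}(2\pi)^{d/2})^{-1}\exp(-d(\|x\|_\infty-m_t)_+^2/\sigma_t^2)$. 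Setting $\ConstDifBoundK$ to the larger of the two constants finishes the argument.

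The only real obstacle, and where essentially all the work lies, is the $t$–uniformity just described: a naive estimate picks up either a factor $\sigma_t^{-d}$ (blowing up as $t\to 0$) or a factor $m_t^{-d}$ (blowing up as $t\to\infty$), and the remedy is to always interpolate between the trivial one–dimensional bound and the Gaussian one, with the trade-off dictated by $\sigma_t^2+m_t^2=1$; once this interpolation is in place everything else is routine single-variable calculus, and keeping the sharp constant $1/2$ (rather than a worse constant) in the upper-bound exponent just requires extracting the Gaussian tail only from the coordinate that attains $\|x\|_\infty$ while using the crude bounds on the other $d-1$ coordinates.
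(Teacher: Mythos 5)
Your proof is correct and follows essentially the same route as the paper's: write $p_t$ as the convolution of $p_0$ with the Gaussian kernel, factorize over coordinates, run one-dimensional Gaussian bulk/tail estimates (extracting the tail factor only from the coordinate attaining $\|x\|_\infty$), and use $\sigma_t^2+m_t^2=1$ to keep all constants $t$-uniform. The only cosmetic difference is that you interpolate the two one-dimensional bounds per coordinate and use a $\min\{m_t,\sigma_t\}$-length sub-interval for the lower bound, which lets you avoid the paper's explicit case split between $x\in[-m_t,m_t]^d$ and its complement.
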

\begin{proof}
    We first consider the case when $x\in [-m_t,m_t]^d$.
    The upper bound is relatively easy. 
    $f(y) \leq C_f\mathbbm{1}[y\in [-1,1]^d]$ means
    \begin{align}\label{eq:Appendix-HPB-2-Bound-1}
        p_t(x) =  
        \int \frac{1}{\sigma_t^{d}(2\pi)^\frac{d}{2}}f(y)\exp\left(-\frac{\|x-m_t y\|^2}{2\sigma_t^2}\right) \dy
        \leq \int \frac{C_f\mathbbm{1}[y\in [-1,1]^d]}{\sigma_t^{d}(2\pi)^\frac{d}{2}}\exp\left(-\frac{\|x-m_t y\|^2}{2\sigma_t^2}\right) \dy = \frac{2^dC_f}{\sigma_t^d(2\pi)^\frac{d}{2}} .
    \end{align}
    At the same time, we have that
    \begin{align}\label{eq:Appendix-HPB-2-Bound-2}
        p_t(x) 
        \leq \int \frac{C_f}{\sigma_t^{d}(2\pi)^\frac{d}{2}}\exp\left(-\frac{\|x-m_t y\|^2}{2\sigma_t^2}\right) \dy = \frac{C_f}{m^d_t}.
    \end{align}
    Thus, according to \eqref{eq:Appendix-HPB-2-Bound-1} and \eqref{eq:Appendix-HPB-2-Bound-2}, $p_t(x)$ is bounded by $\min\left\{\frac{2^dC_f}{\sigma_t^d(2\pi)^\frac{d}{2}} ,\frac{C_f}{m_t^d}\right\}$. This is further bounded by a constant that depends only on $C_f$ and $d$, because $m_t^2 + \sigma_t^2=1$ holds for all $t$.
    
    The lower bound can be understood as follows.
    We have
    \begin{align}
        p_t(x) &=  
        \int \frac{C_f^{-1}}{\sigma_t^{d}(2\pi)^\frac{d}{2}}f(y)\exp\left(-\frac{\|x-m_t y\|^2}{2\sigma_t^2}\right) \dy
        \\ & \geq
        \frac{1}{(2\pi)^\frac{d}{2}} \int f(x/m_t-\sigma_t y)\exp\left(-\frac{\|m_ty\|^2}{2}\right) \dy
        \quad (\text{by letting $(x-m_ty)/\sigma_t \mapsto m_ty$}).
        \label{eq:LemmaApproxScoreA-1}
    \end{align}
    Since $x\in [-m_t,m_t]^d$, we have $x/m_t\in [-1,1]^d$.
    Thus, $|\{y\in [-1,1]^d|\ x/m_t-\sigma_t y \in [-1,1]\}|\geq 1$.
    Moreover, $\exp\left(-\frac{\|m_ty\|^2}{2}\right) \geq \exp(-d^2/2)$ in $y\in [-1,1]^d$.
    Therefore, the integral \eqref{eq:LemmaApproxScoreA-1} is lower bounded by $\exp(-d^2/2)$.

    We then consider the case when $x \notin [-m_t,m_t]^d$.
    For such $x$, let $r = (\|x\|_\infty - m_t)/\sigma_t$ and choose $i^* $ from $ \{1,2,\cdots,d\}$ such that $|x_{i^*}| = \|x\|_\infty= m_t + r/\sigma_t$ holds.
    Then, we have the upper bound of $p_t(x)$ as
    \begin{align}
         p_t(x) &= 
        \int \frac{1}{\sigma_t^{d}(2\pi)^\frac{d}{2}}f(y)\exp\left(-\frac{\|x-m_t y\|^2}{2\sigma_t^2}\right) \dy
        \\ & 
         \leq \ConstDensityBoundE \prod_{i=1}^d  \int\frac{\mathbbm{1}[-1\leq y_i\leq 1]}{\sigma_t(2\pi)^\frac{1}{2}}\exp\left(-\frac{(x_i-m_t y_i)^2}{2\sigma_t^2}\right) \dy_i
           \\ & \label{eq:Appendix_ProbabilityBound-1}
         \lesssim C_f \int_{y_{i^*} \in [-1,1]} \frac{1}{\sigma_t(2\pi)^\frac{1}{2}}\exp\left(-\frac{(x_{i^*}-m_t y_{i^*})^2}{2\sigma_t^2}\right) \dy
         \\ & \left(\text{because $\int\frac{\mathbbm{1}[-1\leq y_i\leq 1]}{\sigma_t(2\pi)^\frac{1}{2}}\exp\left(-\frac{(x_i-m_t y_i)^2}{2\sigma_t^2}\right) \dy_i$ for $i\ne i^*$ is bounded by $\Ord(1)$, as $p_t(x)$ for $x\in [-m_t,m_t]^d$.}\right)
         \\ &
         \leq \frac{C_f}{m_t} \int_{a=r/\sqrt{2}}^\infty \frac{1}{\sqrt{\pi}}\exp\left(-a^2\right) \mathrm{d}a \quad\quad\quad\quad\quad\quad\quad\quad\quad\quad\quad (\text{by }a=x_{i*}-m_ty_{i_*} /\sqrt{2}\sigma_t)
           \\ &
         \leq \frac{\ConstDensityBoundE}{m_t} \exp\left(-r^2/2\right)  = \frac{\ConstDensityBoundE}{m_t} \exp\left(-\frac{(\|x\|_\infty - m_t)^2}{2\sigma_t^2}\right)
    \end{align}
    where we used $\int_z^\infty e^{-a^2} \mathrm{d}a \leq e^{-z^2} $ (see, e.g. \citet{chang2011chernoff}) for the last inequality.
    Also, \eqref{eq:Appendix_ProbabilityBound-1} is alternatively bounded by $\frac{2C_f}{\sigma_t(2\pi)^\frac12}\exp\left(-\frac{(\|x\|_\infty - m_t)^2}{2\sigma_t^2}\right)$.
    Because $m_t^2 + \sigma_t^2=1$ means that $\min\{m_t,\sigma_t\}\gtrsim 1$, it holds that $p_t(x) \lesssim C_f\exp\left(-\frac{(\|x\|_\infty - m_t)^2}{2\sigma_t^2}\right).$
    
        On the other hand, 
    \begin{align}
         p_t(x) &= 
        \int \frac{1}{\sigma_t^{d}(2\pi)^\frac{d}{2}}f(y)\exp\left(-\frac{\|x-m_t y\|^2}{2\sigma_t^2}\right) \dy
        \\ & 
         \geq \ConstDensityBoundE^{-1} \prod_{i=1}^d  \underbrace{\int_{y_i \in [-1,1]} \frac{1}{\sigma_t(2\pi)^\frac{1}{2}}\exp\left(-\frac{(x_i-m_t y_i)^2}{2\sigma_t^2}\right) \dy}_{\rm (a)}
           \\ & 
         = \ConstDensityBoundE^{-1} \left(\int_{y_{i^*} \in [-1,1]} \frac{1}{\sigma_t(2\pi)^\frac{1}{2}}\exp\left(-\frac{(x_{i^*}-m_t y_{i^*})^2}{2\sigma_t^2}\right) \dy\right)^d \quad (\text{because (a) is minimized when $i=i_*$})
           \\ &    \geq \frac{\ConstDensityBoundE^{-1}}{m_t^d} \left(\int_{a=r/\sqrt{2}}^{r/\sqrt{2}+\sqrt{2}m_t/\sigma_t} \frac{1}{\sqrt{\pi}}\exp\left(-a^2\right) \dy\right)^d  \quad (\text{by }(x_{i^*}-m_t y_{i^*})/\sqrt{2}\sigma_t)
           \\             &
         \geq \frac{\ConstDensityBoundE^{-1}}{m_t^d} \left(\int_{a=r/\sqrt{2}}^{r/\sqrt{2}+\sqrt{2}m_t} \frac{1}{\sqrt{\pi}}\exp\left(-a^2\right) \dy\right)^d 
         \end{align}
         \begin{align}
       \hspace{13mm}  &
         \geq \frac{\ConstDensityBoundE^{-1}}{m_t^d} \left( \frac{\sqrt{2}m_t}{\sqrt{\pi}}\exp\left(-(r/\sqrt{2}+\sqrt{2}m_t)^2\right) \right)^d  
         \\ & \quad\quad \quad \quad \quad \quad   (\text{by lower bounding $\exp(-a^2)$ in the integral interval and just multiplying the width of the interval})
         \\ &
         \geq \frac{\ConstDensityBoundE^{-1}}{m_t^d} \left( \frac{\sqrt{2}m_t}{\sqrt{\pi}}\exp\left(-r^2 - 4\right) \mathrm{d}a\right)^d
         \\ &
         \geq \frac{\ConstDensityBoundE^{-1}2^{d/2}}{e^{4d}\pi^{d/2}} \exp\left(-dr^2\right) ,
           \end{align}
  which gives the lower bound on $p_t(x)$.
\end{proof}
\subsection{Bounds on the derivatives of $p_t(x)$ and the score}
This subsection evaluates the derivatives of $p_t(x)$ and the score.
On the one hand, straightforward argument yields that the derivatives of $p_t(x)$ is bounded by $\pd^k p_t(x) = \Ord(1/\sigma_t^k) = \Ord(t^{-k/2})$.
On the other hand, as for the score, $\sup_{x\in \R^d}\|\nabla \log p_t(x)\|=\infty$ holds in general, which prevents us to construct an approximation of the score with neural networks.
This is because $\nabla \log p_t(x)=\frac{\nabla p_t(x)}{p_t(x)}$ and $p_t(x)$ can be arbitrarily small as $\|x\|\to \infty$.
Nevertheless, using \cref{Lemma:LowerandUpperBounds}, we can show the bounds on the score dependent on $x$ and $t$, in the next \cref{Lemma:Smooth1}.
In \cref{Lemma:Decay1}, \cref{Lemma:Smooth1} is used to show that the decay of $p_t$ is so fast that the approximation error in the region with small $p_t(x)$ (that can be $\gg 1$ in some $x$) does not much affects the $L^2(p_t)$ approximation error bound;
We can show that $\|\nabla \log p_t(x)\| = \tilde{\Ord}(1/\sigma_t) = \tilde{\Ord}(1 \lor 1/\sqrt{t})$ with high probability (when $x\sim p_t$).

\begin{lemma}[Boundedness of derivatives]\label{Lemma:Smooth1}
    For $k\in \Z_+$, there exists a constant $\ConstDifBoundC$ depending only on $k$, $d$, and $C_f$ such that
    \begin{align}\label{eq:LemmaSmooth1-1}
        |\pd_{x_{i_1}}\pd_{x_{i_2}}\cdots \pd_{x_{i_k}} p_t(x)| \leq \frac{\ConstDifBoundC}{\sigma_t^{k}}.
    \end{align}
    Moreover, we have that
    \begin{align}\label{eq:LemmaSmooth1-2}
        \| \nabla \log p_t(x)\|
        \leq 
         \frac{\ConstDifBoundC}{\sigma_t} \cdot \left(\frac{(\|x\|_\infty - m_t)_+}{\sigma_t}\lor 1\right)
        ,
    \end{align}
    and that for $i\in \{1,2,\cdots,d\}$,
    \begin{align}\label{eq:LemmaSmooth1-4}
        \|\pd_{x_{i}} \nabla \log p_t(x)\|
        \leq 
          \frac{\ConstDifBoundC}{\sigma_t^2} \left(\frac{(\|x\|_\infty - m_t)_+^2}{\sigma_t^2}\lor 1\right)
        .
    \end{align}
    and that
    \begin{align}\label{eq:LemmaSmooth1-10}
        \|\pd_{t} \nabla \log p_t(x)\|
        \leq 
        \frac{\ConstDifBoundC}{\sigma_t^{3}}\left[|\pd_t \sigma_t |+ |\pd_t m_t |\right]\left(\frac{(\|x\|_\infty - m_t)_+^2}{\sigma_t^2}\lor 1\right)^\frac32
        .
    \end{align}
\end{lemma}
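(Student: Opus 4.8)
The plan is to prove each of the four bounds by writing $p_t(x)$ as a Gaussian convolution $p_t(x)=\int p_0(y)K_t(x|y)\dy$ with $K_t(x|y)=(2\pi\sigma_t^2)^{-d/2}\exp(-\|x-m_ty\|^2/(2\sigma_t^2))$, and then differentiating under the integral sign. For the derivative bound \eqref{eq:LemmaSmooth1-1}, observe that $\pd_{x_i}K_t(x|y) = -\frac{x_i-m_ty_i}{\sigma_t^2}K_t(x|y)$, and iterating, each $k$-th order derivative of $K_t$ is a polynomial in the variables $\frac{x_i-m_ty_i}{\sigma_t}$ (a Hermite-type polynomial) of degree $k$, divided by $\sigma_t^k$, times $K_t(x|y)$. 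Since $p_0$ is a density bounded by $C_f$ on $[-1,1]^d$, the integral $\int p_0(y)\,|H_k(\tfrac{x-m_ty}{\sigma_t})|\,K_t(x|y)\dy$ is bounded by $\sup_z |H_k(z)|e^{-\|z\|^2/2}$-type Gaussian moments against a probability measure, hence $O(1)$ uniformly in $x$; dividing by $\sigma_t^k$ gives \eqref{eq:LemmaSmooth1-1}. Here the key point is that the polynomial growth in $\frac{x-m_ty}{\sigma_t}$ is killed by the Gaussian weight, so no dependence on $\|x\|$ survives.

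For the score bound \eqref{eq:LemmaSmooth1-2}, write $\nabla\log p_t(x) = \nabla p_t(x)/p_t(x)$. The numerator is bounded in sup-norm over $x$ inside $[-m_t,m_t]^d$ by the previous argument, but for $x$ outside this cube I cannot ignore the exponential decay: instead I would bound $\|\nabla p_t(x)\| \leq \frac{\ConstDifBoundC}{\sigma_t}\cdot(\frac{(\|x\|_\infty-m_t)_+}{\sigma_t}\lor 1)\cdot p_t^{\mathrm{up}}(x)$ where $p_t^{\mathrm{up}}(x)$ is the Gaussian-type upper envelope, using that the ``extra'' factor $\frac{x_i-m_ty_i}{\sigma_t}$ appearing from differentiation, for $y\in[-1,1]^d$ and $\|x\|_\infty\geq m_t$, is of order $\frac{(\|x\|_\infty-m_t)_+}{\sigma_t}\lor 1$ up to constants — I would integrate carefully, splitting the coordinate achieving the max from the others as in the proof of \cref{Lemma:LowerandUpperBounds}. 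Then I divide by the lower bound on $p_t(x)$ from \cref{Lemma:LowerandUpperBounds}, namely $\ConstDifBoundK^{-1}\exp(-d(\|x\|_\infty-m_t)_+^2/\sigma_t^2)$. The exponential factors $\exp(-\tfrac12(\cdot)^2/\sigma_t^2)$ from the upper bound and $\exp(+d(\cdot)^2/\sigma_t^2)$ from dividing by the lower bound do not quite cancel — so the cleaner route is to bound $\frac{\|\nabla p_t(x)\|}{p_t(x)}$ directly as a ratio of integrals, writing it as a conditional expectation $\frac{1}{\sigma_t}\,\mathbb{E}[\tfrac{m_tY-x}{\sigma_t}\mid X_t=x]$ where $Y\sim p_0$ and the conditioning is under the joint law; since $\|Y\|_\infty\leq1$ and $\|x\|_\infty$ may be large, $\|\tfrac{m_tY-x}{\sigma_t}\|\leq \tfrac{\|x\|_\infty-m_t)_+ + \text{const}\cdot m_t}{\sigma_t}\lesssim \tfrac{(\|x\|_\infty-m_t)_+}{\sigma_t}\lor 1$ deterministically, which immediately gives \eqref{eq:LemmaSmooth1-2}. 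This conditional-expectation representation is the clean device I would use throughout.

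For \eqref{eq:LemmaSmooth1-4}, differentiate $\nabla\log p_t = \frac{\nabla p_t}{p_t}$ once more: $\pd_{x_i}\nabla\log p_t = \frac{\pd_{x_i}\nabla p_t}{p_t} - \frac{(\pd_{x_i}p_t)\nabla p_t}{p_t^2}$. Equivalently, using the covariance identity, $\pd_{x_i}\nabla\log p_t(x) = \frac{1}{\sigma_t^2}\big(-e_i + m_t^2\,\mathrm{Cov}(Y_i, Y\mid X_t=x)/\sigma_t^2\cdot\ldots\big)$ — more precisely the Hessian of $\log p_t$ equals $-\tfrac{1}{\sigma_t^2}I + \tfrac{m_t^2}{\sigma_t^4}\mathrm{Cov}(Y\mid X_t=x)$. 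Since $Y$ lives in $[-1,1]^d$, $\mathrm{Cov}(Y\mid X_t=x)$ has entries bounded by a constant — but that would give a bound of order $1/\sigma_t^2$ without the $\big(\frac{(\|x\|_\infty-m_t)_+^2}{\sigma_t^2}\lor1\big)$ factor, which is actually \emph{stronger} than claimed, so I would just need the crude bound. Wait — the conditional covariance of a bounded random variable is bounded by a constant, so in fact $\|\pd_{x_i}\nabla\log p_t(x)\|\lesssim 1/\sigma_t^2$ outright; the stated form with the extra factor is then trivially implied. The same remark applies if one prefers to bound via $\|\pd_{x_i}\nabla p_t\|/p_t$ and $\|\nabla p_t\|^2/p_t^2$ separately using \eqref{eq:LemmaSmooth1-1}, \eqref{eq:LemmaSmooth1-2} and the upper/lower density bounds — but then the exponential mismatch reappears, so the covariance representation is the one to use. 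Finally, for \eqref{eq:LemmaSmooth1-10}, note $\pd_t\nabla\log p_t(x)$ can be computed either by differentiating the explicit Gaussian-mixture formula in $t$ (which brings down factors $\pd_t m_t$ and $\pd_t\sigma_t$ together with up-to-cubic powers of $\frac{x-m_ty}{\sigma_t}$ and an extra $1/\sigma_t$), or — more slickly — via the Fokker–Planck/heat-type identity $\pd_t p_t = \beta_t\,\nabla\cdot(x p_t) + \beta_t\Delta p_t$, from which $\pd_t\log p_t = \beta_t(d + x\cdot\nabla\log p_t + \Delta\log p_t + \|\nabla\log p_t\|^2)$ and then $\pd_t\nabla\log p_t = \nabla\pd_t\log p_t$ expands into terms each already controlled by \eqref{eq:LemmaSmooth1-2}–\eqref{eq:LemmaSmooth1-4}; however this route needs $\beta_t\lesssim1$ (\cref{assumption:SmoothBeta}) rather than the $|\pd_t m_t|,|\pd_t\sigma_t|$ that appear in the statement, so I would instead differentiate the Gaussian-mixture representation directly and collect the $\pd_t$-derivatives of $m_t,\sigma_t$, bounding the resulting conditional expectation of a degree-$3$ polynomial in $\frac{m_tY-x}{\sigma_t}$ by $\big(\frac{(\|x\|_\infty-m_t)_+^2}{\sigma_t^2}\lor1\big)^{3/2}$ deterministically as before.

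The main obstacle, and the place requiring the most care, is obtaining the sharp $x$-dependence in \eqref{eq:LemmaSmooth1-2} and \eqref{eq:LemmaSmooth1-10}: a naive ``numerator over denominator'' estimate using the separate upper and lower density bounds from \cref{Lemma:LowerandUpperBounds} leaves an uncancelled factor $\exp\big(c(\|x\|_\infty-m_t)_+^2/\sigma_t^2\big)$ and fails. The fix is to never split the ratio — work throughout with the conditional-expectation (Stein/Tweedie) representations $\nabla\log p_t(x)=\sigma_t^{-2}(m_t\,\mathbb{E}[Y\mid X_t=x]-x)$ and its higher analogues, using only the deterministic bound $\|Y\|_\infty\leq1$ on the latent variable; every requested inequality then follows by a one-line estimate on a bounded conditional expectation, and the $(\cdot\lor 1)$ and power-$3/2$ factors emerge naturally from the degrees of the polynomials involved.
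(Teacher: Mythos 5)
Your treatment of \eqref{eq:LemmaSmooth1-1} is sound and is essentially the paper's argument, and you correctly diagnose that splitting $\nabla p_t/p_t$ into separate upper and lower density bounds leaves an uncancelled exponential. However, the device you propose as the unifying fix --- the Tweedie representation $\nabla\log p_t(x)=\sigma_t^{-2}\left(m_t\,\mathbb{E}[Y\mid X_t=x]-x\right)$ bounded \emph{deterministically} via $\|Y\|_\infty\le 1$ --- loses a factor of $\sigma_t$ and does not prove \eqref{eq:LemmaSmooth1-2}. For $x$ with $\|x\|_\infty\le m_t$ the deterministic bound only gives $\|m_t\mathbb{E}[Y\mid X_t=x]-x\|\le 2m_t\sqrt{d}=O(1)$, hence $\|\nabla\log p_t(x)\|=O(\sigma_t^{-2})$, whereas the claim is $O(\sigma_t^{-1})$; your intermediate step $\frac{(\|x\|_\infty-m_t)_++c\,m_t}{\sigma_t}\lesssim\frac{(\|x\|_\infty-m_t)_+}{\sigma_t}\lor 1$ is false for small $t$ because $m_t/\sigma_t\simeq t^{-1/2}\gg1$. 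The same slip is worse in \eqref{eq:LemmaSmooth1-4}: with $\nabla^2\log p_t=-\sigma_t^{-2}I+m_t^2\sigma_t^{-4}\,\mathrm{Cov}(Y\mid X_t=x)$, bounding the conditional covariance by a constant yields $O(\sigma_t^{-4})$, not the $O(\sigma_t^{-2})$ you assert ``outright''; the stated bound is not trivially implied but requires $\mathrm{Cov}(Y\mid X_t=x)=O\bigl(\sigma_t^2\,((\|x\|_\infty-m_t)_+^2/\sigma_t^2\lor1)\bigr)$. The analogous loss propagates into your argument for \eqref{eq:LemmaSmooth1-10}.

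The missing ingredient is concentration of the posterior of $Y$ given $X_t=x$: whenever $p_t(x)\ge\eps$, the contribution to the numerator from $y$ with $\|x-m_ty\|_\infty\gtrsim\sigma_t\sqrt{\log\eps^{-1}}$ is negligible relative to the denominator, so conditional moments of $(m_tY-x)/\sigma_t$ of degree $k$ are $O(\log^{k/2}\eps^{-1})$ rather than $O(\sigma_t^{-k})$. This is exactly what the paper's proof does: it clips the numerator integral via \cref{Lemma:ClipInt}, bounds the resulting ratio of integrals by a power of $\sqrt{\log\eps^{-1}}$, and then substitutes for $\eps$ the pointwise lower bound $p_t(x)\gtrsim\exp\left(-d(\|x\|_\infty-m_t)_+^2/\sigma_t^2\right)$ from \cref{Lemma:LowerandUpperBounds}, which converts $\sqrt{\log\eps^{-1}}$ into $\frac{(\|x\|_\infty-m_t)_+}{\sigma_t}\lor1$. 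Without this step every bound after \eqref{eq:LemmaSmooth1-1} is off by at least one power of $\sigma_t$, which would in turn break the downstream estimates $\|s(\cdot,t)\|_\infty\lesssim\sigma_t^{-1}\log^{1/2}N$ on which the generalization analysis relies.
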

\begin{proof}
First, we consider \eqref{eq:LemmaSmooth1-1}.
    Let 
    $
     g_1(x) = p_t(x) = 
        \int \frac{1}{\sigma_t^{d}(2\pi)^\frac{d}{2}}f(y)\exp\left(-\frac{\|x-m_t y\|^2}{2\sigma_t^2}\right) \dy
    .$
For $s\in \Z_+^d$, we abbreviate the notation as 
    $
        g_1^{(s)}(x) = \pd_{x_1}^{s_1} \pd_{x_2}^{s_2} \cdots \pd_{x_d}^{s_d} g_1 (x).
    $
For $s\in \Z_+^d$, we define $B_{s} = \{s'\in \Z_+^d|{s'}_i \leq s_i\ (i=1,\cdots,d)\}$ and a constant
    $c_s$ such that $\pd_{x_1}^{s_1} \pd_{x_2}^{s_2} \cdots \pd_{x_d}^{s_d} e^{-\|x\|^2/2} = \sum_{s'\in B_{s}}c_{s'} x_1^{s_1'}x_2^{s_2'}\cdots x_d^{s_d'}e^{-\|x\|^2/2}$ holds.
    Then, because of $\pd_{x_i} = \frac{1}{\sigma}\pd_{\frac{x_i}{\sigma}}$, we can write $g_1^{(s)}(x)$ as
    \begin{align}\label{eq:LemmaSmooth1-7}
        g_1^{(s)}(x) = \frac{\sum_{s'\in B_{s}}c_{s'}}{\sigma_t^{\sum_{i=1}^d s_i}} \underbrace{\int \prod_{i=1}^d \left(\frac{x_i-my_i}{\sigma_t}\right)^{s_i'}\frac{1}{\sigma_t^{d}(2\pi)^\frac{d}{2}}f(y)\exp\left(-\frac{\|x-m_t y\|^2}{2\sigma_t^2}\right) \dy}_{\rm (a)}.
    \end{align}
    Note that $\max_{s\colon \sum s_i \leq k} \{\sum_{s'\in B_{s}}c_{s'}\}$ is bounded by a constant that only depends on $k$.
    Thus we focus on the evaluation of $\mathrm{(a)}$.
    When $t\leq 1$, $\rm (a)$ in \eqref{eq:LemmaSmooth1-7} can be bounded by $\Ord(1/m_t^d)\simeq \Ord(1)$ (we hide dependency on $\sum_{i=1}^d s_i' \leq k$ and $C_f$).
    This is because $m_t\simeq 1$ and $f(y)\leq C_f$.
     On the other hand, when $t\geq 1$, $\sigma_t \gtrsim 1$ holds, we can bound $\rm (a)$ by $\Ord(1)$ by noting that $f(y)\ne 0$ only for $y\in [-1,1]^d$.
    Now, the first statement \eqref{eq:LemmaSmooth1-1} has been proven.
 
    We then consider $\nabla \log p_t(x)$ and its derivatives.
    We can focus on $[\nabla \log p_t(x)]_1$, and all the other coordinates of the score are bounded in the same way.
    Let 
    $
    g_2(x) = \sigma_t[\nabla p_t(x)]_1= - \int \frac{x_1-m_t y_1}{\sigma_t^{d+1}(2\pi)^\frac{d}{2}}f(y)\exp\left(-\frac{\|x-m_t y\|^2}{2\sigma_t^2}\right) \dy
    ,
    $
     and define  $g_2^{(s)}$ in the same way as that for  $g_1^{(s)}$.
     
    We can see that 
    \begin{align}\label{eq:LemmaSmoothness-6}
      [ \nabla \log p_t(x)]_1
        =
        \frac{1}{\sigma_t}\cdot \frac{g_2(x)}{g_1(x)},\quad 
        [ \pd_{x_{i}}\nabla \log p_t(x)]_1
        =
        \frac{1}{\sigma_t}\cdot \frac{\pd_{x_{i}} g_2(x)}{g_1(x)} -\frac{1}{\sigma_t}\cdot \frac{ g_2(x)(\pd_{x_{i}} g_1(x))}{g_1^2(x)}
       .
    \end{align}
    Moreover, 
    \begin{align}
   \frac{g_2(x)}{g_1(x)}& = \frac{- \int \frac{x_1-m_t y_1}{\sigma_t^{d+1}(2\pi)^\frac{d}{2}}f(y)\exp\left(-\frac{\|x-m_t y\|^2}{2\sigma_t^2}\right) \dy}{\int \frac{1}{\sigma_t^{d}(2\pi)^\frac{d}{2}}f(y)\exp\left(-\frac{\|x-m_t y\|^2}{2\sigma_t^2}\right) \dy},\label{eq:g2g1}
   \\ 
   \frac{\pd_{x_i} g_1(x)}{g_1(x)}& = \frac{1}{\sigma_t}\cdot \frac{- \int \frac{x_i-m_ty_i}{\sigma_t^{d+1}(2\pi)^\frac{d}{2}}f(y)\exp\left(-\frac{\|x-m_t y\|^2}{2\sigma_t^2}\right) \dy}{\int \frac{1}{\sigma_t^{d}(2\pi)^\frac{d}{2}}f(y)\exp\left(-\frac{\|x-m_t y\|^2}{2\sigma_t^2}\right) \dy},\label{eq:pg1g1}
   \\ 
   \frac{\pd_{x_i} g_2(x)}{g_1(x)}& = -\frac{1}{\sigma_t}\cdot \frac{\int \frac{\mathbbm{1}[i=1]-\frac{x_1-m_ty_1}{\sigma_t}\frac{x_i-m_ty_i}{\sigma_t}}{\sigma_t^{d}(2\pi)^\frac{d}{2}}f(y)\exp\left(-\frac{\|x-m_t y\|^2}{2\sigma_t^2}\right) \dy}{\int \frac{1}{\sigma_t^{d}(2\pi)^\frac{d}{2}}f(y)\exp\left(-\frac{\|x-m_t y\|^2}{2\sigma_t^2}\right) \dy}.\label{eq:pg2g1}
    \end{align}

    In order to bound them, we consider the following quantity with $\sum_{i=1}^d s_i\leq 2$.
    Also, let $\eps$ be a scaler value specified later, with which we assume $p_t(x)\geq \eps$ holds for the moment.
  \begin{align}\label{eq:Target-1}
    \frac{
      \int \prod_{i=1}^d \left(\frac{x_{i}-m_ty_{i}}{\sigma_t}\right)^{s_i}\frac{1}{\sigma_t^{d}(2\pi)^\frac{d}{2}}f(y)\exp\left(-\frac{\|x-m_t y\|^2}{2\sigma_t^2}\right) \dy
        }{
 \int\frac{1}{\sigma_t^{d}(2\pi)^\frac{d}{2}}f(y)\exp\left(-\frac{\|x-m_t y\|^2}{\sigma_t^2}\right) \dy        
        }
  \end{align}
  According to \cref{Lemma:ClipInt}, we have that 
    \begin{align}
        &\left|
\int_{A^x} \prod_{i=1}^d \left(\frac{x_i-m_ty_i}{\sigma_t}\right)^{s_i}\frac{1}{\sigma_t^{d}(2\pi)^\frac{d}{2}}f(y)\exp\left(-\frac{\|x-m y\|^2}{2\sigma_t^2}\right) \dy
\right. \\ & \quad\quad\quad\quad\quad\quad\quad\quad\quad\quad\quad\quad\quad\quad\quad\quad \left.
-
 \int_{\R^d} \prod_{i=1}^d \left(\frac{x_i-m_ty_i}{\sigma_t}\right)^{s_i}\frac{1}{\sigma_t^{d}(2\pi)^\frac{d}{2}}f(y)\exp\left(-\frac{\|x-m y\|^2}{2\sigma_t^2}\right) \dy
        \right|
    \leq \frac{\epsC}{2}.
    \end{align}
        where $A^x = \prod_{i=1}^d a^x_i $ with $ a^x_i =  [\frac{x_1}{m_t} - \frac{\sigma_tC_{\mathrm{f}}}{m_t}\sqrt{\log 2\epsC^{-1}}, \frac{x_1}{m_t} + \frac{\sigma_tC_{\mathrm{f}}}{m_t}\sqrt{\log 2\epsC^{-1}}]$.
        Note that $C_{\mathrm{f}}$ only depends on $\sum_{i=1}^d s_i$, $d$, and $C_f$.

        Therefore, when $p_t(x)=g_1(x) \geq \epsC$, 
    \begin{align}{\rm \eqref{eq:Target-1}} 
       &\leq
        \frac{2\int\prod_{i=1}^d \left(\frac{x_{i}-m_ty_{i}}{\sigma_t}\right)^{s_i}\frac{1}{\sigma_t^{d}(2\pi)^\frac{d}{2}}f(y)\exp\left(-\frac{\|x-m_t y\|^2}{2\sigma_t^2}\right) \dy
        }{
 \int_{A^x} \frac{1}{\sigma_t^{d}(2\pi)^\frac{d}{2}}f(y)\exp\left(-\frac{\|x-m_t y\|^2}{\sigma_t^2}\right) \dy        
        }
        \end{align}
        \begin{align}\hspace{7mm} & \leq 
        \frac{2\int_{A^x}\prod_{i=1}^d \left(\frac{x_{i}-m_ty_{i}}{\sigma_t}\right)^{s_i}\frac{1}{\sigma_t^{d}(2\pi)^\frac{d}{2}}f(y)\exp\left(-\frac{\|x-m_t y\|^2}{2\sigma_t^2}\right) \dy
        }{
 \int_{A^x} \frac{1}{\sigma_t^{d}(2\pi)^\frac{d}{2}}f(y)\exp\left(-\frac{\|x-m_t y\|^2}{\sigma_t^2}\right) \dy        
        }
        + \frac{2\cdot\frac{\epsC}{2}}{\epsC}
\\ & \quad\quad\quad\quad\quad\quad\quad\quad\quad\quad\quad\quad\left(\text{note that the denominator is larger than $\eps$}\right)
        \\ & \leq
        2\max_{y\in A_x}\left[\prod_{i=1}^d\left(\frac{{x}_{i}-m_t{y}_{i}}{\sigma_t}\right)^{s_i}\right]
        +1
         \\ & \leq 
        2\left(C_{\mathrm{f}}^2\log \epsC^{-1}\right)^{(\sum_{i=1}^d s_i)/2}  +1
        \label{eq:LemmaSmoothness-5}
         .
    \end{align}
    Applying this bound to \eqref{eq:g2g1}, \eqref{eq:pg1g1}, and \eqref{eq:pg2g1}, $\frac{g_2(x)}{g_1(x)}, \frac{\pd_{x_i}g_1(x)}{g_1(x)}$, and $\frac{\pd_{x_i}g_2(x)}{g_1(x)}$ are bounded by
    \begin{align}
\log^{1/2} \epsC^{-1},  \frac{ \log^{1/2} \epsC^{-1}}{\sigma_t}, \text{ and }\frac{\log \epsC^{-1}}{\sigma_t},
    \end{align}
    up to constant factors, respectively.
    Finally, we apply this to \eqref{eq:LemmaSmoothness-6} and obtain that
    \begin{align}
        \| \nabla \log p_t(x)\|
        \lesssim 
        \frac{\log^{1/2} \epsC^{-1}}{\sigma_t} 
       \text{\ and},
        \|\pd_{x_i} \nabla \log p_t(x)\|
        \lesssim 
        \frac{\log \epsC^{-1}}{\sigma^{2}_t} 
        .
    \end{align}
    Now we replace $\eps$ with a specific value. 
    Remember that $\eps$ should satisfy $\eps\leq p_t(x)$.
    According to \cref{Lemma:LowerandUpperBounds}, we have $\ConstDifBoundK^{-1}\exp\left(-\frac{d(\|x\|_\infty - m_t)_+^2}{\sigma_t^2}\right) \leq  p_t(x)$, which yields that
        \begin{align}
       & \|\nabla \log p_t(x)\|
        \leq 
        \frac{\ConstDifBoundC}{\sigma_t} \cdot \frac{(\|x\|_\infty - m_t)_+}{\sigma_t}\lor 1 \text{, and}\quad 
        \|\pd_{x_{i}} \nabla \log p_t(x)\|
        \leq 
        \frac{\ConstDifBoundC}{\sigma_t^2} \left(\frac{(\|x\|_\infty - m_t)_+^2}{\sigma_t^2}\lor 1\right),
    \end{align}
    with $\ConstDifBoundC$ depending on $k$, $d$ and $C_f$. Thus, we obtain \eqref{eq:LemmaSmooth1-2} and \eqref{eq:LemmaSmooth1-4}.

    Finally, we consider $\pd_t \nabla \log p_t(x).$
    \begin{align}
      \pd_t &\nabla \log p_t(x) = \pd_t\left(\frac{1}{\sigma_t}\cdot \frac{g_2(x)}{g_1(x)}\right)
      =\left(\pd_t \frac{1}{\sigma_t}\right)\frac{g_2(x)}{g_1(x)}  -\frac{1}{\sigma_t}\cdot\frac{(\pd_t g_1(x))}{g_1(x)}\cdot \frac{g_2(x)}{g_1(x)}+ \frac{1}{\sigma_t}\cdot\frac{ \pd_t g_2(x)}{g_1(x)} 
    \\ &= \frac{(-\pd_t\sigma_t)}{\sigma_t}\nabla \log p_t(x)
    \\ & \quad - \frac{1}{\sigma_t}\cdot \frac{ \int \frac{-d(\pd_t\sigma_t)\sigma_t^{-1} + \|x-m_t y\|^2(\pd_t \sigma_t)\sigma_t^{-3} -(\pd_t m_t)y^\top (m_ty-x)\sigma_t^{-2} }{\sigma_t^{d}(2\pi)^\frac{d}{2}}f(y)\exp\left(-\frac{\|x-m_t y\|^2}{2\sigma_t^2}\right)\dy}{\int \frac{1}{\sigma_t^{d}(2\pi)^\frac{d}{2}}f(y)\exp\left(-\frac{\|x-m_t y\|^2}{2\sigma_t^2}\right)\dy}\cdot \nabla \log p_t(x).
   \\  &\quad + \frac{1}{\sigma_t}\cdot \frac{\int \frac{ (\pd_t m_t) y_1 + (x_1-m_ty_1)((d+1)(\pd_t \sigma_t)\sigma_t^{-1} - \|x-m_t y\|^2(\nabla_t \sigma_t)\sigma_t^{-3} + (\pd_t m_t)y^\top (m_ty-x)\sigma_t^{-2})}{\sigma_t^{d+1}(2\pi)^\frac{d}{2}}f(y)\exp\left(-\frac{\|x-m_t y\|^2}{2\sigma_t^2}\right) \dy}{\int \frac{1}{\sigma_t^{d}(2\pi)^\frac{d}{2}}f(y)\exp\left(-\frac{\|x-m_t y\|^2}{2\sigma_t^2}\right) \dy}
    \end{align}
    By carefully decomposing this into the sum of \eqref{eq:Target-1}, and then applying \eqref{eq:LemmaSmoothness-5} and \cref{Lemma:LowerandUpperBounds}, we have the final bound \eqref{eq:LemmaSmooth1-10}.
\end{proof}

Now, based on \cref{Lemma:Smooth1}  we show that we only need to approximate $\nabla \log p_t(x)$ on some bounded region and on $x$ where $p_t(x)$ is not too small.
\begin{lemma}[Error bounds due to clipping operations]\label{Lemma:Decay1}
    Let $t\geq \underline{T}$.
    There exists a constant $C_{{\rm a},4}$ depending on $d$ and $\ConstDensityBoundE$, we have
    \begin{align}\label{eq:Appendix-HPB-Clip-1}
    &  \int_{\|x\|_\infty \geq m_t + C_{{\rm a},4}\sigma_t\sqrt{\log \eps^{-1}\underline{T}^{-1}}}  p_t(x)\|\nabla \log p_t(x) \|^2 \dx \leq  
  \eps,
  \\ & \label{eq:Appendix-HPB-Clip-2}  \int_{\|x\|_\infty \geq m_t + C_{{\rm a},4}\sigma_t\sqrt{\log \eps^{-1}\underline{T}^{-1}}}  p_t(x)\dx \leq \eps
    \end{align}
    for all $t\geq \underline{T}$.

    Moreover, there exists a constant $\ConstDifBoundG$ depending on $d$ and $\ConstDensityBoundE$ and, 
    for $x$ such that $\|x\|_\infty \leq m_t + C_{{\rm a},4}\sigma_t\sqrt{\log \eps^{-1}}$, we have
    \begin{align} \label{eq:Appendix-HPB-Clip-3}  
       \|\nabla \log p_t(x) \|  \leq 
        \frac{C_{{\rm a},5}}{\sigma_t} \sqrt{\log \eps^{-1}}
       .
    \end{align}
    Therefore, 
    \begin{align}\label{eq:Appendix-HPB-Clip-4}  
        &\int_{\|x\|_\infty \leq m_t + C_{{\rm a},4}\sigma_t\sqrt{\log \eps^{-1}\underline{T}^{-1}}} p_t(x) \mathbbm{1}[p_t(x) \leq \eps]\| \nabla \log p_t(x)\|^2 \dx \leq \frac{\ConstDifBoundG\eps}{\sigma_t^2} \cdot \log^\frac{d+2}{2} (\eps^{-1}\underline{T}^{-1}),
        \\ & \int_{\|x\|_\infty \leq m_t + C_{{\rm a},4}\sigma_t\sqrt{\log \eps^{-1}\underline{T}^{-1}}} p_t(x) \mathbbm{1}[p_t(x) \leq \eps] \dx \leq \ConstDifBoundG\eps\cdot\log^\frac{d}{2} (\eps^{-1}\underline{T}^{-1}).
        \label{eq:Appendix-HPB-Clip-5}  
    \end{align}
\end{lemma}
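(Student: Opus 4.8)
We split $\R^d$ into the ``far'' region $\{\|x\|_\infty\ge m_t+C_{{\rm a},4}\sigma_t\sqrt{\log(\eps^{-1}\underline{T}^{-1})}\}$ and the complementary cube, and within the cube further isolate the ``low-density'' part $\{p_t(x)\le\eps\}$. Everything reduces to combining (a) the Gaussian-type two-sided density bounds of \cref{Lemma:LowerandUpperBounds}, (b) the pointwise score bounds \eqref{eq:LemmaSmooth1-2}--\eqref{eq:LemmaSmooth1-4} of \cref{Lemma:Smooth1}, and (c) the elementary fact that $t\ge\underline{T}$ together with $\sigma_t\simeq\sqrt{t}\wedge 1$ forces $\sigma_t\gtrsim\sqrt{\underline{T}}$, hence $\sigma_t^{-2}\lesssim\underline{T}^{-1}$.

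\emph{Tail estimates \eqref{eq:Appendix-HPB-Clip-1}--\eqref{eq:Appendix-HPB-Clip-2}.} The computation proving \cref{Lemma:LowerandUpperBounds} in fact gives the coordinatewise upper bound $p_t(x)\lesssim\prod_{i=1}^d\min\{1,\exp(-(|x_i|-m_t)_+^2/(2\sigma_t^2))\}$ with a constant depending only on $d$ and $C_f$, while \eqref{eq:LemmaSmooth1-2} yields $\|\nabla\log p_t(x)\|^2\lesssim\sigma_t^{-2}\bigl(1+\sigma_t^{-2}\sum_{i=1}^d(|x_i|-m_t)_+^2\bigr)$. Write $R:=C_{{\rm a},4}\sqrt{\log(\eps^{-1}\underline{T}^{-1})}$. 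Since $\{\|x\|_\infty\ge m_t+R\sigma_t\}$ is the union of the $2d$ half-spaces $\{\pm x_j\ge m_t+R\sigma_t\}$, it suffices, up to a factor $2d$, to bound the integral over one such half-space; there the integrand is a sum of products of one-dimensional factors, and in each summand exactly one factor (the coordinate forced past the threshold) is a Gaussian tail $\int_R^\infty u^{k}e^{-u^2/2}\,du\lesssim R^{k-1}e^{-R^2/2}$ with $k\le 2$, the rest being $O(1)$ because $m_t,\sigma_t\le1$. Collecting powers of $\sigma_t$ one obtains that the left side of \eqref{eq:Appendix-HPB-Clip-1} is $\lesssim\sigma_t^{-1}R\,e^{-R^2/2}\lesssim\underline{T}^{-1}\,R\,(\eps\underline{T})^{C_{{\rm a},4}^2/2}$, which is $\le\eps$ once $C_{{\rm a},4}$ is a large enough constant (the power of $\eps\underline{T}$ absorbs both $R$ and $\underline{T}^{-1}$). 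Estimate \eqref{eq:Appendix-HPB-Clip-2} is the same computation with the $\|\nabla\log p_t\|^2$ factor dropped.

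\emph{Low-density estimates \eqref{eq:Appendix-HPB-Clip-3}--\eqref{eq:Appendix-HPB-Clip-5}.} Inequality \eqref{eq:Appendix-HPB-Clip-3} is immediate from \eqref{eq:LemmaSmooth1-2}: on $\{\|x\|_\infty\le m_t+C_{{\rm a},4}\sigma_t\sqrt{\log\eps^{-1}}\}$ we have $(\|x\|_\infty-m_t)_+/\sigma_t\le C_{{\rm a},4}\sqrt{\log\eps^{-1}}$, so folding the ``$\vee 1$'' into the constant gives $\|\nabla\log p_t(x)\|\le C_{{\rm a},5}\sigma_t^{-1}\sqrt{\log\eps^{-1}}$. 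For \eqref{eq:Appendix-HPB-Clip-4}--\eqref{eq:Appendix-HPB-Clip-5}, note the cutoff radius there equals $m_t+C_{{\rm a},4}\sigma_t\sqrt{\log((\eps\underline{T})^{-1})}$, so applying \eqref{eq:Appendix-HPB-Clip-3} with $\eps$ replaced by $\eps\underline{T}$ gives $\|\nabla\log p_t(x)\|^2\le C_{{\rm a},5}^2\sigma_t^{-2}\log(\eps^{-1}\underline{T}^{-1})$ throughout that region. On $\{p_t(x)\le\eps\}$ one has trivially $p_t(x)\mathbbm{1}[p_t(x)\le\eps]\le\eps$, and the region is contained in a cube of half-side $\le 1+C_{{\rm a},4}\sqrt{\log(\eps^{-1}\underline{T}^{-1})}$, whose volume is $\lesssim\log^{d/2}(\eps^{-1}\underline{T}^{-1})$ (using $m_t,\sigma_t\le1$). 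Multiplying $p_t\mathbbm{1}[p_t\le\eps]\le\eps$ by this volume yields \eqref{eq:Appendix-HPB-Clip-5}, and multiplying further by the score bound $C_{{\rm a},5}^2\sigma_t^{-2}\log(\eps^{-1}\underline{T}^{-1})$ yields \eqref{eq:Appendix-HPB-Clip-4} with the extra factor $\sigma_t^{-2}\log(\eps^{-1}\underline{T}^{-1})$, i.e. the power $\log^{(d+2)/2}$.

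\emph{Main obstacle.} No step is deep; the work is bookkeeping, and the one thing that must be calibrated carefully is the cutoff radius. It has to be taken on the scale $\sqrt{\log(\eps^{-1}\underline{T}^{-1})}$ (not $\sqrt{\log\eps^{-1}}$) so that the Gaussian-tail decay $(\eps\underline{T})^{\Theta(C_{{\rm a},4}^2)}$ beats the amplification $\sigma_t^{-2}\lesssim\underline{T}^{-1}$ of the squared score, and $C_{{\rm a},4}$ must then be chosen (depending only on $d$ and $C_f$) large enough to swallow the residual polylogarithmic factors and the $\underline{T}^{-1}$; this is also what makes the $\underline{T}^{-1}$ appear inside the logarithms of \eqref{eq:Appendix-HPB-Clip-4}--\eqref{eq:Appendix-HPB-Clip-5}. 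The only other mildly delicate point is upgrading the $\|x\|_\infty$-form of the density bound in \cref{Lemma:LowerandUpperBounds} to a coordinate-factorized form so that the multivariate tail integral separates into one-dimensional pieces.
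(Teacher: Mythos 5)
Your proposal is correct and follows essentially the same route as the paper: both rest on the two-sided density bounds of \cref{Lemma:LowerandUpperBounds} and the pointwise score bounds of \cref{Lemma:Smooth1}, bound the tail integral by a Gaussian-tail quantity of order $\sigma_t^{-1}e^{-R^2/2}$ times polylogarithmic factors, and then calibrate the cutoff radius against $\sigma_t^{-2}\lesssim\underline{T}^{-1}$ (the paper does this by substituting $\eps\mapsto\sqrt{\underline{T}}\eps^2$ and enlarging $C_{{\rm a},4}$, you by choosing the radius at scale $\sqrt{\log(\eps^{-1}\underline{T}^{-1})}$ directly — the same idea). The only cosmetic difference is that you factorize the tail integral coordinatewise over a union of half-spaces while the paper integrates over sup-norm shells; both are valid and yield the same bound.
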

\begin{proof}
    According to \cref{Lemma:LowerandUpperBounds} and \cref{Lemma:Smooth1},
    \begin{align}
       p_t(x)\| \nabla \log p_t(x)\|^2 &\leq \ConstDifBoundK \exp\left(-\frac{(\|x\|_\infty - m_t)_+^2}{2\sigma_t^2}\right) \cdot    \frac{\ConstDifBoundC^2}{\sigma_t^2}\frac{(\|x\|_\infty - m_t)_+^2}{\sigma_t^2}
    \\ &   \leq \frac{\ConstDifBoundK\ConstDifBoundC^2}{\sigma_t^2} \exp\left(-\frac{r^2}{2}\right)r^2,
    \end{align}
    where we let $r:= (\|x\|_\infty - m_t)_+/\sigma_t$.
    Then, 
 \begin{align}
    &  \int_{\|x\|_\infty\geq m_t+ C_{{\rm a},4}\sigma_t \sqrt{\log \eps^{-1}}} p_t(x)\| \nabla \log p_t(x)\|^2 \dx\\ &
      \leq \int_{C_{{\rm a},4}\sqrt{\log \eps^{-1}}}^\infty  \frac{\ConstDifBoundK\ConstDifBoundC^2}{\sigma_t}\exp\left(-\frac{r^2}{2}\right)r^2(d-1)(\sigma_t r+m_t)^{d-1} \mathrm{d}r
      \\ & \lesssim \frac{1}{\sigma_t}\eps\log^{d/2}\eps^{-1} .
    \end{align}
    We can make sure the final inequality by integration by parts.
    Because $\sigma_t \gtrsim \sqrt{\underline{T}}$, if we take $\eps'= \sqrt{\underline{T}} \cdot \eps^2$ then we have that $\frac{1}{\sigma_t}\eps'\log^{d/2}((\eps')^{-1}) \lesssim \eps$.
    Therefore, replacing $\eps$ with $\eps'$ and adjusting $C_{{\rm a},4}$ yield the bound \eqref{eq:Appendix-HPB-Clip-1}.

    In the same way, 
    \begin{align}
      \int_{\|x\|_\infty\geq m + C_{{\rm a},4}\sigma_t \sqrt{\log \eps^{-1}}} p_t(x) \dx &
      \leq \int_{C_{{\rm a},4}\sqrt{\log \eps^{-1}}}^\infty \ConstDifBoundK\sigma_t\exp\left(-\frac{r^2}{2}\right)(d-1)(\sigma_t r+m)^{d-1} \mathrm{d}r
      \\ & \lesssim \sigma_t\eps\log^{(d-2)/2}\eps^{-1},
    \end{align}
    which yields \eqref{eq:Appendix-HPB-Clip-2}.
    
    We then consider the second part of the lemma.
    Eq. \eqref{eq:Appendix-HPB-Clip-2} is a direct corollary of \cref{Lemma:Smooth1}:
    for $x$ with $\|x\|_\infty\leq m_t + C_{{\rm a},5}\sigma_t \sqrt{\log \eps^{-1}}$
     \begin{align}
        \|\nabla \log p_t(x)\|
        \leq 
            \frac{\ConstDifBoundC}{\sigma_t} \cdot C_{{\rm a},4}\sqrt{\log \eps^{-1}} \leq \frac{C_{{\rm a},5}}{\sigma_t}\sqrt{\log \eps^{-1}}. \quad (\text{by taking $C_{{\rm a},5}$ larger than $C_{{\rm a},3}C_{{\rm a},4}.$})
    \end{align}
    Using this, we have
    \begin{align}
        \int_{\|x\|_\infty \leq m_t +  C_{{\rm a},4}\sigma_t\sqrt{\log \eps^{-1}}} p_t(x) \mathbbm{1}[p_t(x) \leq \eps]\| \nabla \log p_t(x)\|^2 \dx \lesssim \eps  \cdot \frac{C_{{\rm a},4}^2}{\sigma_t^2} \log \eps^{-1}\cdot (m_t + C_{{\rm a},5}\sigma_t\sqrt{\log \eps^{-1}})^d.
    \end{align}
    Adjusting $C_{{\rm a},4},C_{{\rm a},5}$ and resetting $\eps$ yields \eqref{eq:Appendix-HPB-Clip-4}.
    Eq. \eqref{eq:Appendix-HPB-Clip-5} follows in the same way.
\end{proof}
\section{Approximation of the score function}\label{section:Appendix-Approximation}

In this section, we analyze approximation error for the (ideal) score matching loss minimization.
We construct a neural network that approximates $\nabla \log p_t(x)$ and bound the approximation error at each time $t$.
Throughout this section, we take a sufficiently large $N$ as a parameter that determines the size of the neural network, and $\underline{T}={\rm poly}(N^{-1})$ and $\overline{T}=\Ord(\log N)$.

\subsection{Approximation of $m_t$ and $\sigma_t$}
We begin with construction of sub-networks that approximate $m_t$ and $\sigma_t$.
In addition to the true data distribution $p_0(x)$, the score $\nabla \log p_t(x)$ also depends on $m_t$ and $\sigma_t$.
Indeed, in our construction, each diffused B-spline basis is approximated as a rational function of $x$, $m_t$ and $\sigma_t$.
Here, $m_t$ and $\sigma_t$ are as important as $x$, because we use exponentiation of $m_t$ and $\sigma_t$,  
as well as that of $x$, while exact values of $m_t$ and $\sigma_t$ are unavailable.
In other words, because approximation errors of $m_t$ and $\sigma_t$ are amplified via such exponentiation, approximating $m_t$ and $\sigma_t$ with high accuracy is necessary for obtaining tight bounds.
Therefore, in this subsection, we construct sub-networks for efficient approximation of $m_t$ and $\sigma_t$.
The following is the formal version of \cref{lemma:SigmaM}.
\begin{lemma}\label{Lemma:MandSigma}
    Let $0<\eps<\frac12$.
    Then, there exists a neural network $\NetworkMA(t)\in \Phi(L,W,B,S)$ that approximates $m_t$ for all $t \geq 0$, within the additive error of $\epsM$, where
        $L = \Ord(\log^2 \eps^{-1}), \|W\|_\infty =\Ord(\log \eps^{-1}),S = \Ord(\log^2 \eps^{-1})$, and $B = \exp(\Ord(\log^2 \eps^{-1}))$.

    Also, there exists a neural network $\NetworkSigmaA(t)\in \Phi(L,W,B,S)$ that approximates $\sigma_t$ for all $t \geq \eps$, within the additive error of $\epsM$, where
       $L\leq \Ord(\log^2 \epsD^{-1}), \|W\|_{\infty} = \Ord(\log^3 \epsD^{-1}), S = \Ord(\log^4 \epsD^{-1})$, and $B = \exp(\Ord(\log^2 \eps^{-1}))$.
\end{lemma}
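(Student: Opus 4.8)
The plan is to reduce the approximation of $m_t$ and $\sigma_t$ to the approximation of the exponential function, for which efficient ReLU network constructions are already available (this is precisely what \cref{subsection:Preparation-Exponential} is invoked for in the sketch, and it underlies the logarithmic-size claims in \cref{theorem:Approximation}). Recall $m_t = \exp(-\int_0^t \beta_s \, \mathrm{d}s)$ and $\sigma_t = \sqrt{1 - \exp(-2\int_0^t \beta_s \, \mathrm{d}s)} = \sqrt{1 - m_t^2}$. Write $\Lambda(t) := \int_0^t \beta_s \, \mathrm{d}s$. By \cref{assumption:SmoothBeta}, $\beta_\cdot$ is $C^\infty$ with $\underline{\beta} \le \beta_\cdot \le \overline{\beta}$, so $\Lambda$ is a smooth, strictly increasing function of $t$ on $[0,\overline{T}]$ with $\underline{\beta} t \le \Lambda(t) \le \overline{\beta} t$ and all derivatives bounded. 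First I would construct a small ReLU network $\phi_\Lambda(t)$ approximating $\Lambda(t)$ on $[0,\overline{T}]$ to accuracy $\eps'$ — since $\Lambda$ is smooth with bounded derivatives, a standard polynomial-interpolation-plus-ReLU-multiplication argument (à la \citet{yarotsky2017error}) gives depth $\Ord(\log \eps'^{-1})$, width $\Ord(\log \eps'^{-1})$, and sparsity $\Ord(\log^2 \eps'^{-1})$. Then I would compose with a ReLU network $\phi_{\exp}$ approximating $z \mapsto e^{-z}$ on $[0, \overline{\beta}\,\overline{T}]$ (a compact interval since $\overline{T} \simeq \log N = \mathrm{poly}(\log \eps^{-1})$), which by \cref{subsection:Preparation-Exponential} costs depth $\Ord(\log^2 \eps^{-1})$ and sparsity $\Ord(\log^2 \eps^{-1})$. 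The composition $\phi_{m} := \phi_{\exp} \circ \phi_\Lambda$ then approximates $m_t$, and the error propagates benignly because $e^{-z}$ is $1$-Lipschitz on $[0,\infty)$, so the final error is $\Ord(\eps' + \eps_{\exp})$; choosing both $\Ord(\eps)$ gives the stated bound. Concatenating the two networks yields the stated sizes for $\NetworkMA$.

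For $\sigma_t$, the key point is $\sigma_t = \sqrt{1 - m_t^2}$, so I would build $\phi_\sigma$ by (a) squaring $\phi_m$ using a ReLU multiplication gadget, (b) subtracting from $1$, and (c) applying a ReLU network approximating the square root. The square root is where the restriction $t \ge \eps$ enters: $\sqrt{\cdot}$ has unbounded derivative near $0$, and $1 - m_t^2 = \sigma_t^2 \simeq (\sqrt{t} \wedge 1)^2 \gtrsim t$ for small $t$, so on $[\eps, \overline{T}]$ the argument $1 - m_t^2$ lies in an interval bounded away from $0$ by $\Omega(\eps)$. On $[c\eps, 1]$ the square root is smooth with derivatives bounded by $\mathrm{poly}(\eps^{-1})$, so a polynomial-interpolation ReLU network of depth $\Ord(\log^2 \eps^{-1})$ and sparsity $\Ord(\log^4 \eps^{-1})$ approximates it to accuracy $\Ord(\eps)$ — the extra logarithmic factors relative to the $m_t$ case come from the worse conditioning of $\sqrt{\cdot}$ near the left endpoint and the need for finer interpolation there, matching the stated $\|W\|_\infty = \Ord(\log^3 \eps^{-1})$, $S = \Ord(\log^4 \eps^{-1})$. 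I also need to track how the $\Ord(\eps)$ error in $\phi_m$ propagates through squaring (Lipschitz constant $\le 2$ since $|m_t| \le 1$), subtraction, and the square root (Lipschitz constant $\Ord(\eps^{-1/2})$ on $[c\eps,1]$); to absorb the $\eps^{-1/2}$ blowup it suffices to run the $m_t$-approximation at accuracy $\eps^{3/2}$ instead, which only changes the logarithmic constants.

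The main obstacle — really the only delicate point — is the interplay near $t \to 0$: making sure that the domain on which each sub-network must be accurate is actually compact (this is why $\overline{T} = \Ord(\log N)$ matters, keeping $\Lambda(t)$ bounded so that $e^{-\Lambda(t)} \in [N^{-\Ord(1)}, 1]$ and the exponential network only needs accuracy, not a huge dynamic range) and that the square-root argument is bounded below by $\Omega(\eps)$ on $t \ge \eps$ (which follows from $\sigma_t^2 \simeq t \wedge 1$, a fact already noted in the preliminaries). Once these domain bounds are pinned down, everything else is the routine bookkeeping of composing $\Ord(1)$-many ReLU sub-networks — adding their depths, taking the max of widths, summing sparsities and multiplying the norm bounds — which I would relegate to the appendix. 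I would also remark that the norm constraint $B = \exp(\Ord(\log^2 \eps^{-1}))$ is inherited directly from the exponential-approximation gadget and is not improvable by this route, but it is harmless since it enters the final covering-number bound only logarithmically.
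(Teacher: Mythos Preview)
Your proposal is correct and follows essentially the same route as the paper: approximate $\Lambda(t)=\int_0^t\beta_s\,\mathrm{d}s$ by a polynomial via the $C^\infty$ bound on $\beta$, compose with the exponential network from \cref{subsection:Preparation-Exponential} to get $m_t$, and for $\sigma_t$ pass $1-\exp(-2\Lambda(t))$ (equivalently $1-m_t^2$) through the square-root network, using $t\ge\eps\Rightarrow\sigma_t^2\gtrsim\eps$ to keep the argument away from zero and tightening the inner accuracy to $\eps^{3/2}$ to absorb the $\eps^{-1/2}$ Lipschitz constant of $\sqrt{\cdot}$. The only cosmetic difference is that the paper makes the domain reduction self-contained by clipping at $A=\log(4\eps^{-1})/\underline{\beta}$ (so that $m_t\le\eps/4$ for $t\ge A$) rather than invoking the ambient bound $\overline{T}=\Ord(\log N)$, which is what you need to literally get ``for all $t\ge 0$'' as stated; your argument goes through unchanged once you add that clip.
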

\begin{proof}
    First we consider $m_t = \exp(-\int_0^t \beta_s \ds)$.
    Since $\beta \geq \betalow$, $\int_0^t \beta_s \ds \geq \log 4\eps^{-1}$ for all $t \geq A: = \log 4\eps^{-1}/\betalow$.
    We limit ourselves within $[0, A]$.
    Then, from \cref{assumption:SmoothBeta}, we can expand $\beta_s$ as $\beta_s = \sum_{i=0}^{k-1} \frac{\beta^{(i)}}{i!}s^i + \frac{\beta^{(k)}}{k!}(\theta s)^k$ with $|\beta^{(i)}|\leq 1$ and $0<\theta<1$, and therefore we obtain that
    \begin{align}
        \left|\int_0^t \beta_s \ds - \int_0^t \sum_{i=1}^{k-1}\frac{\beta^{(i)}}{i!}s^i \ds\right|
        \leq  \frac{|\beta^{(k)}|A^{k+1}}{(k+1)!} \leq \frac{A^{k+1}}{(k+1)!}.
    \end{align}
    We take $k = \max\{2eA,\lceil\log_2 4\eps^{-1}\rceil\} - 1$ so that we have $\frac{A^{k+1}}{(k+1)!}\leq \left(\frac{eA}{k+1}\right)^{k+1} \leq \frac{\eps}{4}$.
    $\int_0^t \sum_{i=1}^{k-1}\frac{\beta^{(i)}}{i!}s^i = \sum_{i=1}^{k-1}\frac{\beta^{(i)}}{(i+1)!}t^{i+1} $ can be realized with an additive error up to $\frac{\eps}{4}$ by the neural network with $L = \Ord(A^2 + \log^2 \eps^{-1}) =\Ord(\log^2 \eps^{-1}), \|W\|_\infty =\Ord(A+\log \eps^{-1})=\Ord(\log \eps^{-1}),S = \Ord(A^2 + \log^2 \eps^{-1})=\Ord(\log^2 \eps^{-1}), B = \exp(\log^2 \Ord(A + \log \eps^{-1}))=\Ord(\log^2 \eps^{-1})$, using \cref{Lemma:BaseNN02,Lemma:ParallelNetwork}.
    From the definition of $A$, we can easily check that $e^{-A}\leq \frac{\eps}{4}$ holds.
    We clip the input with $[0,A]$ to obtain the neural network $\phi_1$, which approximates $\int_0^t \beta_s \ds$ with an additive error of $\frac{\eps}{4} + \frac{\eps}{4} = \frac{\eps}{2}$ for $x\in [0,A]$, and satisfies $|\phi_1(x)| = |\phi_1(A)|$ for all $x\geq A$.

    Then we apply \cref{Lemma:TaylorExp2} with $\eps = \frac{\eps}{4}$.
    Then we obtain the neural network $\NetworkMA$ of the desired size, which approximates $m_t = \exp(-\int_0^t \beta_s \ds)$ with an additive error of $\frac{\eps}{2} + \frac{\eps}{4} = \frac{3\eps}{4}$ for $x\in [0,A]$ and $|\NetworkMA(x) - e^{-x}| \leq |\NetworkMA(x) - \NetworkMA(A)| + |\NetworkMA(A) - e^{-A}| + |e^{-A}- e^{-x}| \leq 0 + \frac{3\eps}{4}+ \frac{\eps}{4} = \eps$ for $x\geq A$.

    Similarly, we can approximate $\sigma^2 = 1 - \exp(-2\int_0^t \beta_s \ds)$ with an additive error of $\Ord(\eps^{1.5})$ using a neural network with $L = \Ord(\log^2 \eps^{-1}), \|W\|_\infty =\Ord(\log \eps^{-1}),S = \Ord(\log^2 \eps^{-1}), B = \exp(\Ord(\log^2 \eps^{-1}))$.
    Since $t \geq \eps$, we have $\sigma^2_t = 1- \exp(-2\int_0^t \beta_s \ds) \geq c \eps$ for some constant $c$ depending on $\betalow$. 
    Then, we apply \cref{Lemma:ApproxRoot} with $\eps =  c \eps$ and finally obtain a neural network $\NetworkSigmaA(t)$ that approximates $\sigma_t$ with an additive error of $c\eps+\frac{\eps^{1.5}}{\sqrt{c\eps}} = \Ord(\eps)$, with $L= \Ord(\log^2 \epsD^{-1}), \|W\|_{\infty} = \Ord(\log^3 \epsD^{-1}), S = \Ord(\log^4 \epsD^{-1})$, and $B = \exp(\Ord(\log^2 \eps^{-1}))$.
    Adjusting hidden constants can make the approximation error smaller than $\eps$, and concludes the proof.
\end{proof}

\subsection{Approximation via the diffused B-spline basis}\label{subsection:Approximation-BSpline}

This subsection introduces the approximation via the \textit{diffused B-spline basis} and the \textit{tensor-product diffused B-spline basis}, which enable us to approximate the score $\nabla \log p_t(x)$ in the space of $\R^d \times [\underline{T},\overline{T}]$.
Although we consider the function approximation in a $(d+1)$-dimensional space, the obtained rate (\cref{theorem:Approximation}) is the typical one for a $d$-dimensional space.
This is because our basis decomposition can reflect the structure of $p_0$ for $t>0$.
Before beginning the formal proof, we provide extended proof outline about the approximation via the diffusion B-spline basis and tensor-product diffused B-spline basis, which is more detailed than that in \cref{section:Approximation}.

Remind that the cardinal B-spline basis of order $l$ can be written as
\begin{align}
    \mathcal{N}_m(x) = \frac{1}{l!} \mathbbm{1}[0\leq x \leq l+1]\sum_{l'=0}^{l}(-1)^j {}_{l+1} \mathrm{C}_{l'} (x-l')_+^l 
\end{align}
(see Eq. (4.28) of \citet{mhaskar1992approximation} for example) and the function in the Besov space can be approximated by a sum of $M_{k,j}^d(x)$
\begin{align}
    M_{k,j}^d(x) = \prod_{i=1}^d \mathcal{N}_m(2^{k_i}x_i - j_i)
\end{align}
where $k \in \Z_+^d$ and $j\in \Z^d$.

Therefore, the denominator and numerator of the score
\begin{align}\nabla \log p_t(x) = \frac{\nabla p_t(x)}{p_t(x)} =-\frac{1}{\sigma_t}\cdot  \frac{ \int \frac{x-m_t y}{\sigma_t^{d+1}(2\pi)^\frac{d}{2}}f(y)\exp\left(-\frac{\|x-m_t y\|^2}{2\sigma_t^2}\right) \dy}{\int \frac{1}{\sigma_t^{d}(2\pi)^\frac{d}{2}}f(y)\exp\left(-\frac{\|x-m_t y\|^2}{2\sigma_t^2}\right) \dy}
\end{align}
are decomposed into the sum of
\begin{align}\label{eq:Diffused-B-splineBasis-1}
  E_{k,j}^{(1)}(x,t) :=  \int \frac{1}{\sigma_t^{d}(2\pi)^\frac{d}{2}}\mathbbm{1}[\|y\|_\infty\leq \ConstDifBoundO]M_{k,j}^d(y)\exp\left(-\frac{\|x-m_t y\|^2}{2\sigma_t^2}\right) \dy
\end{align}
and
\begin{align}\label{eq:Diffused-B-splineBasis-2}
  E_{k,j}^{(2)}(x,t) := \int \frac{x-m_t y}{\sigma^{d+1}(2\pi)^\frac{d}{2}}\mathbbm{1}[\|y\|_\infty\leq \ConstDifBoundO]M_{k,j}^d(y)\exp\left(-\frac{\|x-m_t y\|^2}{2\sigma_t^2}\right) \dy,
\end{align}
respectively.
This corresponds to what we called the tensor-product diffused B-spline basis in \cref{section:Approximation}.
Here $E_{k,j}^{(1)}(x,t)$ is the same as $E_{k,j}(x,t)$ in \cref{section:Approximation}, except for the term of $\mathbbm{1}[\|y\|_\infty\leq \ConstDifBoundO]$.
Note that $\ConstDifBoundO$ be a scaler value adjusted later.
We then approximate each of the denominator and numerator of $\nabla \log p_t(x)$ combining sub-networks that approximates each $E_{k,j}^{(1)}(x,t)$ or $E_{k,j}^{(2)}(x,t)$. 

Here we briefly remark why $\mathbbm{1}[\|y\|_\infty\leq \ConstDifBoundO]$ appears. 
Let us assume $\ConstDifBoundO=1$ and approximate $p_t(x)$ based on basis decomposition of $p_0(x)$, although later we need to consider other situations.
If we use basis decomposition as $p_0(x) \approx f_N(x) = \sum M_{k,j}^d(x)$, existing results such as \cref{Lemma:SuzukiBesov} only assure that the approximation is valid within $[-1,1]^d$ and do not guarantee anything outside the region.
This might harm the approximation accuracy when we integrate the approximation of $p_t(x)$ over all $\R^d$.
Therefore, we need to force $f_N(x)=0$ if $\|x\|_\infty>1$ by the indicator function.

From now, we realize the (modified) tensor-product diffused B-spline basis with neural networks.
We take $E_{k,j}^{(1)}$ as an example, and the procedures for $E_{k,j}^{(2)}$ is essentially the same.
Remind that in \cref{section:Approximation} we decomposed $E_{k,j}$ into the product of the diffused B-spline basis:
\begin{align}
  \mathcal{D}_{k,j}(x_i,t)= \int\frac{\mathcal{N}(2^{k}x_i-j_i)}{\sigma_t\sqrt{2\pi}}\exp\left(-\frac{(x_i-m_ty_i)^2}{2\sigma_t^2}\right)\dx_i.
\end{align}
Although the way we proceed is essentially the same as that in \cref{section:Approximation},
here, more formally, we first truncate the integral intervals.
We clip the integral interval as
\begin{align}
    E_{k,j}^{(1)}(x,t) & \fallingdotseq \int_{y\in A^{x,t}} \frac{1}{\sigma_t^{d}(2\pi)^\frac{d}{2}}\mathbbm{1}[\|y\|_\infty\leq \ConstDifBoundO]M_{k,j}^d(y)\exp\left(-\frac{\|x-m_t y\|^2}{2\sigma_t^2}\right) \dy
    \\ & = \prod_{i=1}^d \left(\sum_{l'=0}^{l+1}\frac{(-1)^{l'}{}_{l+1} \mathrm{C}_{l'}}{l!} \int_{y_i \in a^x_{i}} \frac{1}{\sigma_t(2\pi)^\frac{1}{2}}\mathbbm{1}[|y_i|\leq \ConstDifBoundO]\mathbbm{1}[0\leq 2^{k_i}y_i-j_i \leq l+1]\right. \\ &  \quad\quad\quad\quad\quad\quad\quad\quad\quad\quad\quad\quad\quad\quad\quad\quad\quad\quad\quad\quad\quad\quad\times (2^{k}y_i-l'-j_i)_+^l  \exp\left(-\frac{(x_i-m_t y_i)^2}{2\sigma_t^2}\right)\dy_i \biggr) 
    ,\label{eq:B3FirstDiscussion-1}
\end{align}
where $A^{x,t} = \prod_{i=1}^d a^{x,t}_{i} $ with $ a^{x,t}_{i} =  [\frac{x_{i}}{m_t} - \frac{\sigma_t\ConstDifBoundP}{m_t}\sqrt{\log \epsK^{-1}}, \frac{x_{i}}{m_t} + \frac{\sigma_t\ConstDifBoundP}{m_t}\sqrt{\log \epsK^{-1}}]$, $\ConstDifBoundP = \Ord(1)$, and $0<\epsK<1$.
This clipping causes the error at most $\Ord(\eps)$ 
 according to \cref{Lemma:ClipInt} and the observation $\mathbbm{1}[\|y\|_\infty \leq \ConstDifBoundO]M_{k,j}^d(y) \leq \left((l+1)^{l+1}2^{l+1}\right)^d$.
In summary, owing to the fact that $M_{k,j}^d(x)$ is a product of univariate functions of $x_i\ (i=1,2,\cdots,d)$, the integral over $\R^d$ is now decomposed into the integral with respect to only one variable over the bounded region, which is a truncated version of the diffused B-spline basis $ \mathcal{D}_{k,j}$ introduced in \cref{section:Approximation}.

We now begin the formal proof with the following lemma.
We approximate 
\begin{align}\label{eq:jlowjhighbound-3}
    \int_{y_i \in a^{x,t}_{i}} \frac{1}{\sigma_t(2\pi)^\frac{1}{2}}\mathbbm{1}[|y_i|\leq \ConstDifBoundO]\mathbbm{1}[0\leq 2^{k}y_i-j_i \leq l+1](2^{k_i}y_i-l'-j_i)_+^l\exp\left(-\frac{(x_i-m_t y_i)^2}{2\sigma_t^2}\right)\dy_i
\end{align}
(remind \eqref{eq:B3FirstDiscussion-1}).
Note that
$\mathbbm{1}[|y_i|\leq \ConstDifBoundO]\mathbbm{1}[0\leq 2^{k}y_i-j_i \leq l+1]\equiv 0$ or $=\mathbbm{1}[a\leq 2^{k}y_i\leq b]$ holds with $a,b$ satisfying 
\begin{align}
    -C2^{k}-l\leq \min_i j_i \leq j_i \leq a<b\leq j_i + l+1\leq \max_i j_i + l+1 \leq C2^{k} + l+1,
    \label{eq:jlowjhighbound}
\end{align}
if we assume ${\rm supp}(p_0)=[-C,C]^d$ (see \cref{Lemma:SuzukiBesov}).
Based on \eqref{eq:jlowjhighbound}, \eqref{eq:jlowjhighbound-3} (if $\mathbbm{1}[|y_i|\leq \ConstDifBoundO]\mathbbm{1}[0\leq 2^{k}y_i-j_i \leq l+1](2^{k}y_i-l'-j_i)_+^l\not\equiv 0$) can alternatively written as
\begin{align}\label{eq:jlowjhighbound-2}
&\int_{y_i \in a^{x,t}_{i}} \frac{1}{\sigma_t(2\pi)^\frac{1}{2}} \mathbbm{1}[\jlow\leq 2^k y\leq \jhigh](2^ky_i-j')^l\exp\left(-\frac{(x_i-m_t y_i)^2}{2\sigma_t^2}\right)\dy_i
   , \\
    &\text{with }\jlow,\jhigh,j'\in \R,
    \quad  \jhigh-l-1\leq j'\leq \jlow\leq\jhigh,\quad  -C2^{k}-l\leq j',\jlow,\jhigh\leq C2^{k}+l+1.
\end{align}
In the following lemma, we consider the approximation of \eqref{eq:jlowjhighbound-2}.
We omit the subscript $i$ for the coordinates, for simple presentation.
Also, $j'$ in \eqref{eq:jlowjhighbound-2} is denoted by $j$, because $j\in \R^d$ will not be used in the following lemma.

\begin{lemma}[Approximation of the diffused B-spline basis]\label{Lemma:DiffusionBasis1}
    Let $j,k,l \in \Z, \jlow,\jhigh\in \R$ satisfy 
    $ \jhigh-l-1\leq j\leq \jlow\leq\jhigh,\ -C2^{k}-l\leq j,\jlow,\jhigh\leq C2^{k}+l+1$, 
    and $k,l\geq 0$.
    Assume that 
    $|\sigma'-\sigma_t|,|m'-m_t|\leq \epsSensitivity$, and take $\epsA$ from $0<\epsA<\frac12$ and $C>0$ arbitrarily.
    Then, there exists a neural network $\NetworkSplineA^{j,\jhigh,\jlow,k}\in \Phi(L,W,S,B)$
         with 
    \begin{align}
         L &= \Ord (\log^4 \epsA^{-1}+ \log^2 C+k),
        \\\|W\|_\infty &= \Ord (\log^6 \epsA^{-1} ),
      \\  S& = \Ord (\log^8 \epsA^{-1}+ \log^2 C+k), 
      \\  B &= \Ord(C^l2^{kl}) + \log^{\Ord(\log \epsA^{-1})}  \epsA^{-1}.
    \end{align}
    such that
    \begin{align}
   & \left|
    \NetworkSplineA^{j,\jhigh,\jlow,k}(x,\sigma',m') - 
        \int_{-\frac{\sigma_t \ConstDifBoundP}{m_t}\sqrt{\log \epsK^{-1}}+\frac{x}{m_t}}^{\frac{\sigma_t \ConstDifBoundP}{m_t}\sqrt{\log \epsK^{-1}}+\frac{x}{m_t}}\frac{1}{\sqrt{2\pi}\sigma_t}\mathbbm{1}[\jlow \leq 2^k y \leq \jhigh](2^k y-j)^l\exp\left(-\frac{(x-m_ty)^2}{2\sigma_t^2}\right)\dy 
        \right|
       \\& \leq 
          \tilde{\Ord}(\epsA) +
     \epsSensitivity C^{4l} 2^{k(4l+1)}\log^{\Ord(\log \epsA^{-1})} \epsA^{-1} 
        .
    \end{align}
    holds for all $x$ in $-C\leq x\leq C$ and for all $t\geq \eps$.

    Also, with the same conditions, there exists a neural network $ \NetworkSplineB^{j,\jhigh,\jlow,k}\in \Phi(L,W,S,B)$
         with the same bounds on $L,\|W\|_\infty, S,B$ as above
    such that
    \begin{align}
   & \left|
    \NetworkSplineB^{j,\jhigh,\jlow,k}(x,\sigma',m') - 
        \int_{-\frac{\sigma_t \ConstDifBoundP}{m_t}\sqrt{\log \epsK^{-1}}+\frac{x}{m_t}}^{\frac{\sigma_t \ConstDifBoundP}{m_t}\sqrt{\log \epsK^{-1}}+\frac{x}{m_t}}\frac{[x-m_ty]_i}{\sqrt{2\pi}\sigma_t^2}\mathbbm{1}[\jlow \leq 2^k y \leq \jhigh](2^k y-j)^l\exp\left(-\frac{(x-m_ty)^2}{2\sigma_t^2}\right)\dy 
        \right|
       \\& \leq 
          \tilde{\Ord}(\epsA) +
     \epsSensitivity C^{4l} 2^{k(4l+1)}\log^{\Ord(\log \epsA^{-1})} \epsA^{-1} 
        .
    \end{align}
    holds for all $x$ in $-C\leq x\leq C$ and for all $t\geq \eps$.

    Furthermore, we can take these networks so that
    $\|\NetworkSplineA^{j,\jhigh,\jlow,k}\|_\infty,\ \| \NetworkSplineB^{j,\jhigh,\jlow,k}\|_\infty = \Ord(1)$ hold.
\end{lemma}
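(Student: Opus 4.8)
The plan is to reduce the diffused B-spline basis to the integral of a \emph{polynomial} against a Gaussian over a \emph{fixed} interval, approximate that Gaussian by a polynomial so the whole integral becomes an explicit polynomial in the (perturbed) parameters, and finally realize this polynomial by composing the arithmetic sub-networks of \cref{subsection:Preparation-Exponential} and \cref{subsection:Preparation-Multiplicative}. First I would apply the substitution $v=(x-m_ty)/(\sqrt2\,\sigma_t)$, so $y=(x-\sqrt2\,\sigma_t v)/m_t$ and $\mathrm{d}y=-\tfrac{\sqrt2\,\sigma_t}{m_t}\,\mathrm{d}v$. Under it the \emph{outer} limits of the integral in the statement become the $(x,\sigma_t,m_t)$-independent constants $\pm R$ with $R=\tfrac{\ConstDifBoundP}{\sqrt2}\sqrt{\log\epsK^{-1}}$ (the sign being absorbed by the reversal of the limits); the indicator $\mathbbm{1}[\jlow\le 2^ky\le\jhigh]$ becomes $\mathbbm{1}[v\in[a_t,b_t]]$ where $a_t=(x-m_t\jhigh/2^k)/(\sqrt2\,\sigma_t)$ and $b_t=(x-m_t\jlow/2^k)/(\sqrt2\,\sigma_t)$ are affine in $x$ and rational in $(\sigma_t,m_t)$; and $(2^ky-j)^l$ becomes a degree-$l$ polynomial $P(v)$ whose value at every endpoint of the effective interval $[a',b']$, with $a'=\max(a_t,-R)$ and $b'=\max(\min(b_t,R),a')$, is bounded by $(l+1)^l$, since on that interval $\jlow\le 2^ky\le\jhigh$ and $\jhigh-j\le l+1$. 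All the clipping is an exact $\ReLU$ composition of affine maps in $(x,\sigma_t,m_t)$, and the target has become $\tfrac1{\sqrt\pi\,m_t}\int_{a'}^{b'}P(v)e^{-v^2}\,\mathrm{d}v$.

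Next I would replace $e^{-v^2}$ on $[-R,R]$ by the truncated Taylor polynomial $Q(v)=\sum_{i=0}^{M}\tfrac{(-1)^i}{i!}v^{2i}$ of degree $M=\Ord(\log\epsK^{-1})$, for which $\sup_{|v|\le R}|e^{-v^2}-Q(v)|\le\epsA$ and $|Q|\le\tfrac32$ there. Then $P\cdot Q$ has degree $\le l+2M$, its antiderivative $F$ is an explicit polynomial, and
\[\int_{a'}^{b'}P(v)Q(v)\,\mathrm{d}v=F(b')-F(a'),\]
the coefficients of $F$ being polynomials in $x,\sigma_t,m_t,2^k,j$; this replacement costs only $\tilde{\Ord}(\epsA)$ because $|P|\le(l+1)^l$ on the domain and $|b'-a'|\le 2R$. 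To obtain $\NetworkSplineA^{j,\jhigh,\jlow,k}$ I would substitute the inputs $(\sigma',m')$ for $(\sigma_t,m_t)$ and build: the reciprocals $1/m'$, $1/\sigma'$ (legitimate since $m_t,\sigma_t$ are bounded below on $[\eps,\overline T]$, as in the proof of \cref{Lemma:MandSigma}) and hence $a',b'$ by $\ReLU$-clipping; the monomials $v^{2i},x^i$ up to degree $\Ord(\log\epsA^{-1})$ via \cref{Lemma:BaseNN02,Lemma:ParallelNetwork}; and iterated products (\cref{subsection:Preparation-Multiplicative}) to assemble $F(b')-F(a')$ and multiply by $1/m'$. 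Each product costs depth $\Ord(\log\epsA^{-1})$ and a polylogarithmic number of nested products yields $L=\Ord(\log^4\epsA^{-1}+\log^2C+k)$, $\|W\|_\infty=\Ord(\log^6\epsA^{-1})$, $S=\Ord(\log^8\epsA^{-1}+\log^2C+k)$ (the $\log^2C+k$ from representing $x^l$ on $[-C,C]$ and the constant $2^k$); since the quantities fed through these products are of size $\Ord(C2^k\sqrt{\log\epsA^{-1}}/m_t)$, the running products reach at most $\Ord(C^l2^{kl})+\log^{\Ord(\log\epsA^{-1})}\epsA^{-1}$, giving the claimed $B$. A final $\Ord(1)$-clip $z\mapsto\min(\max(z,-C_0),C_0)$ yields $\|\NetworkSplineA^{j,\jhigh,\jlow,k}\|_\infty=\Ord(1)$, since the true basis value is $\Ord(1)$ (at most $(l+1)^l/m_t=\Ord(1)$ for $t\lesssim1$ and $\Ord((l+1)^{l+1}2^{-k})$ otherwise), so the clip only decreases the error.

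It then remains to collect four error contributions: (a) truncating the outer interval by $\pm R$, which by \cref{Lemma:ClipInt} and $\mathbbm{1}[\|y\|_\infty\le\ConstDifBoundO]M^d_{k,j}(y)\le((l+1)^{l+1}2^{l+1})^d$ is $\Ord(\epsK)=\tilde{\Ord}(\epsA)$; (b) the Taylor error, $\tilde{\Ord}(\epsA)$; (c) the arithmetic sub-network errors, each $\le\epsA$ and amplified through the polylogarithmically-many products by at most $\log^{\Ord(\log\epsA^{-1})}\epsA^{-1}$, hence $\tilde{\Ord}(\epsA)$; and (d) the sensitivity to using $(\sigma',m')$ rather than $(\sigma_t,m_t)$. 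For (d) I would bound the Lipschitz constant of $(\sigma,m)\mapsto\tfrac1{\sqrt\pi\,m}(F_{\sigma,m}(b')-F_{\sigma,m}(a'))$ over $\{|\sigma-\sigma_t|,|m-m_t|\le\epsSensitivity\}$: a perturbation $\epsSensitivity$ in $m$ perturbs $1/m$, $a'$ and $b'$ (each through $x/m$, by $\Ord(\epsSensitivity C/m^2)$) and propagates through the degree-$(l+2M)$ antiderivative, whose coefficients carry powers of $2^kx/m$ and $2^k\sigma/m$; collecting the worst-case $\Ord(C2^k)$-sized monomials to the relevant power gives exactly the $\epsSensitivity\,C^{4l}2^{k(4l+1)}\log^{\Ord(\log\epsA^{-1})}\epsA^{-1}$ term, and summing (a)--(d) proves the bound. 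The statement for $\NetworkSplineB^{j,\jhigh,\jlow,k}$ is identical: under the same substitution $[x-m_ty]_i=\sqrt2\,\sigma_t v$, so $[x-m_ty]_i/\sigma_t^2$ times the Jacobian $\sqrt2\,\sigma_t/m_t$ leaves $\tfrac{\sqrt2}{\sqrt\pi\,m_t}\int_{a'}^{b'}v\,P(v)e^{-v^2}\,\mathrm{d}v$, i.e.\ $P$ is replaced by $vP$ of degree $l+1$, and the construction and all bounds go through with $l$ incremented.

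The main obstacle is item (d): tracking how $\epsSensitivity$ propagates through the division by $m'$ in the change of variables, through the powers that generate the $C^{\Ord(l)}2^{\Ord(kl)}$ blow-up, and through the polynomial antiderivative, so as to land on the clean product bound $\epsSensitivity\,C^{4l}2^{k(4l+1)}\log^{\Ord(\log\epsA^{-1})}\epsA^{-1}$ rather than something genuinely exponential. A secondary difficulty is keeping the arithmetic bookkeeping tight — in particular using the degree-$\Ord(\log\epsA^{-1})$ Taylor approximation of $e^{-v^2}$ (not a cruder one) so that the depth stays $\Ord(\log^4\epsA^{-1})$ and the norm blow-up stays quasi-polynomial, i.e.\ $\log^{\Ord(\log\epsA^{-1})}\epsA^{-1}$.
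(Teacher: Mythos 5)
Your proposal follows essentially the same route as the paper's proof: truncate the Gaussian to a degree-$\Ord(\log\epsA^{-1})$ Taylor polynomial on the clipped interval, integrate the resulting polynomial exactly so the basis becomes an explicit rational/polynomial expression in $(x,\sigma_t,m_t)$ with clipped endpoints, realize each sub-module with the reciprocal, clipping, and multiplication networks of the appendix, and track the $\epsSensitivity$-sensitivity through the powers of $C2^k$ and $1/m$. The only difference is cosmetic (you change variables before Taylor-expanding and write a single antiderivative $F$, whereas the paper expands first and integrates monomial by monomial), so the argument and the resulting size and error bounds coincide with the paper's.
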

\begin{proof}
    Here we only consider $\NetworkSplineA^{j,\jhigh,\jlow,k}$, because the assertion for $\NetworkSplineB^{j,\jhigh,\jlow,k}$ essentially follows the argument for $\NetworkSplineA^{j,\jhigh,\jlow,k}$.
    
    First, we approximate the exponential function within the closed interval, using polynomials of degree at most $\Ord (\log \epsA^{-1})$.
    Note that $\mathbbm{1}[\jlow \leq 2^k y \leq \jhigh](2^k y-j)^l$ is bounded by $(l+1)^l$, from the assumption of $\jhigh-l-1\leq j\leq \jlow\leq\jhigh$.
    Therefore, according to \cref{Lemma:TaylorExp}, there exists $S = \Ord(\log \epsA^{-1})$ and we have that
    \begin{align} \label{eq:LemmaBasis01-10}
        \left|\exp\left(-\frac{(x-m_ty)^2}{2\sigma_t^2}\right) -   
        \sum_{s=0}^{S-1} \frac{(-1)^s}{s!}\frac{(x-m_ty)^{2s}}{2^s \sigma_t^{2s}} \right|
        \leq 
        \epsA^2
    \end{align}
    for all $y\in [-\frac{\sigma_t \ConstDifBoundP}{m_t}\sqrt{\log \epsA^{-1}}+x, \frac{\sigma_t \ConstDifBoundP}{m_t}\sqrt{\log \epsA^{-1}}+x]$. 
    Then, we have that
    \begin{align}
    &    \left|
 \int_{-\frac{\sigma_t \ConstDifBoundP}{m_t}\sqrt{\log \epsK^{-1}}+\frac{x}{m_t}}^{\frac{\sigma_t \ConstDifBoundP}{m_t}\sqrt{\log \epsK^{-1}}+\frac{x}{m_t}}\frac{1}{\sqrt{2\pi}\sigma_t}\mathbbm{1}[\jlow \leq 2^k y \leq \jhigh](2^k y-j)^l\exp\left(-\frac{(x-m_ty)^2}{2\sigma_t^2}\right)\dy
 \right.
 \\ &\quad \left.-
  \int_{-\frac{\sigma_t \ConstDifBoundP}{m_t}\sqrt{\log \epsK^{-1}}+\frac{x}{m_t}}^{\frac{\sigma_t \ConstDifBoundP}{m_t}\sqrt{\log \epsK^{-1}}+\frac{x}{m_t}}\frac{1}{\sqrt{2\pi}\sigma_t}\mathbbm{1}[\jlow \leq 2^k y \leq \jhigh](2^k y-j)^l\left(\sum_{s=0}^{S-1} \frac{(-1)^s}{s!}\frac{(x-m_ty)^{2s}}{2^s \sigma_t^{2s}} \right)\dy
        \right|
     \\ &   \leq
    \max\left\{\frac{2\sigma_t \ConstDifBoundP}{m_t}\sqrt{\log \epsK^{-1}} , (l+1) \right\}
       \cdot \frac{1}{\sqrt{2\pi}\sigma_t^2}(l+1)^l \cdot \eps
       \lesssim \eps\log^\frac12 \eps^{-1}.
    \end{align}
    Here, $\frac{2\sigma_t \ConstDifBoundP}{m_t}\sqrt{\log \epsK^{-1}}$ comes from the length of the integral interval and $l+1$ comes from the interval where $\mathbbm{1}[\jlow \leq 2^k y \leq \jhigh]=1$ holds.

    
    Now all we need is to approximate the integral of polynomials over the closed interval:
    \begin{align}
    &
        \sum_{s=0}^{S-1}\int_{-\frac{\sigma_t \ConstDifBoundP}{m_t}\sqrt{\log \epsK^{-1}}+\frac{x}{m_t}}^{\frac{\sigma_t \ConstDifBoundP}{m_t}\sqrt{\log \epsK^{-1}}+\frac{x}{m_t}}
        \frac{1}{\sqrt{2\pi}\sigma_t}\mathbbm{1}[\jlow \leq 2^k y \leq \jhigh](2^k y-j)^l \cdot \frac{(-1)^s}{s!}\frac{(x-m_ty)^{2s}}{2^s \sigma_t^{2s}}\dy
    \\ & =
        \sum_{s=0}^{S-1}
        \sum_{l'=0}^l
        \frac{-(-1)^{s+l}}{\sqrt{2\pi}m_t^{l+1} s! 2^s }
        \left[
        {}_{l} \mathrm{C}_{l'} (2^k \sigma_t)^{l'} (jm_t-2^k x)^{l-l'} 
        \int_{-\ConstDifBoundP\sqrt{\log \epsA^{-1}}}^{\ConstDifBoundP\sqrt{\log \epsA^{-1}}}
         \mathbbm{1}\left[\frac{x-m_t2^{-k}\jhigh }{\sigma_t} \leq y \leq\frac{x-m_t2^{-k}\jlow }{\sigma_t} \right] y^{l'+2s} \dy
        \right]
    \\ & 
        \hspace{130mm} \left(\text{by resetting }y\leftarrow\frac{x-m_ty}{\sigma_t} \right)
    \\ & =
    \sum_{s=0}^{S-1}
        \sum_{l'=0}^l
        \frac{-(-1)^{s+l}{}_{l} \mathrm{C}_{l'} 2^{kl'} \sigma^{l'} (jm_t-2^k x)^{l-l'}}{\sqrt{2\pi}m_t^{l+1} s! 2^s (l'+2s+1)} 
        \Biggl[
        \left(\min\left\{\ConstDifBoundP\sqrt{\log  \epsA^{-1}},\max\left\{\frac{x-m_t2^{-k}\jlow }{\sigma_t} , - \ConstDifBoundP\sqrt{\log  \epsA^{-1}}\right\}\right\}\right)^{l'+2s+1}
        \\ & \hspace{55mm}
        -
        \left(\min\left\{\ConstDifBoundP\sqrt{\log  \epsA^{-1}},\max\left\{\frac{x-m_t2^{-k}\jhigh }{\sigma_t} , - \ConstDifBoundP\sqrt{\log  \epsA^{-1}}\right\}\right\}\right)^{l'+2s+1} 
        \Biggr]    \label{eq:LemmaBasis01-01}
        .
    \end{align}   

    We decompose \eqref{eq:LemmaBasis01-01} into the following sub-modules for convenience.
    We let
    \begin{align}
        f_1^{l',s}(x, \sigma,m) &= (\min\{\ConstDifBoundP\log^\frac12(\epsA^{-1}),\max\{\frac{x-m2^{-k}\jlow  }{\sigma} , - \ConstDifBoundP\log^\frac12(\epsA^{-1})\}\})^{l'+2s+1} ,
    \\    f_2^{l',s}(x,\sigma,m) &= (\min\{\ConstDifBoundP\log^\frac12(\epsA^{-1}),\max\{\frac{x-m2^{-k}\jhigh }{\sigma} , - \ConstDifBoundP\log^\frac12(\epsA^{-1})\}\})^{l'+2s+1} ,
    \\ f_3^{l',s}(x, \sigma,m) & = f_1^{l',s}(x, \sigma,m) - f_2^{l',s}(x, \sigma,m)
    \\f_4^{l'}(x,m) & = (jm-2^k x)^{l-l'},
    \\ f_5^{l'}(\sigma) &= \sigma^{l'},
    \\ f_6(m) &= m^{-(l+1)},
   \\ f_7^{l',s}(x, \sigma,m) & =        
   f_3^{l',s}(x, \sigma,m)f_4^{l'}(x,m)f_5^{l'}(\sigma)f_6(m)
    .
    \end{align}
    They also depends on $j,\jlow ,\jhigh,k$, and $l$, but we omit the dependency on these variables for simple presentation.
    We take some $\epsI>0$, which is adjusted at the final part of the proof.

    We first consider approximation of $f_1^{l',s}(x, \sigma,m)$.
    We realize this as 
    \begin{align}
       & f_1^{l',s}(x, \sigma,m) \fallingdotseq \phi_1^{l',s}(x, \sigma,m) \\& := \NetworkMultiB(\cdot; l'+2s+1)\circ \NetworkClipA(\cdot ;- \ConstDifBoundP\log^\frac12  (\epsA^{-1}), - \ConstDifBoundP\log^\frac12  (\epsA^{-1})) \circ (\NetworkMultiB(x-m2^{-k}\jlow ,\NetworkInvA(\sigma)))
        .
    \end{align}
    by setting $\epsD = \min\{\sigma_{\eps},\epsI\}$ in \cref{Corollary:ApproxInv} for $\NetworkInvA$, $\epsB =\epsI, \ConstMultA = \max\{2C+l+1, \sigma_{\eps}^{-1}\}\geq \max\{|x|+m2^{-k}\jlow, \sigma_{\eps}^{-1}\}$ in \cref{Lemma:BaseNN02} for the first  $\NetworkMultiB$, $a =  -\ConstDifBoundP\log^\frac12  (\epsA^{-1}), b= \ConstDifBoundP\log^\frac12  (\epsA^{-1})$ in \cref{Lemma:ClippingFunc} for $\NetworkClipA$, and $\epsB = \epsI, \ConstMultA = \ConstDifBoundP\log^\frac12  (2\epsA^{-1})$ in \cref{Lemma:BaseNN02} for the second $\NetworkMultiB$.
    Note that $\sigma_{\eps}\simeq \sqrt{\eps}$.
    Then, using  \cref{Lemma:BaseNN02,Lemma:ConcateNetwork,Lemma:ClippingFunc,Lemma:ApproxInv} the size of the network is at most
    \begin{align}
    \begin{array}{l}
        L = \Ord (\log^2\epsI^{-1}+ \log^2 \epsA^{-1}+ \log^2 C),
        \\\|W\|_\infty = \Ord (\log^3 \epsI^{-1} + \log^3 \epsA^{-1} ),
      \\  S = \Ord (\log^4\epsI^{-1}+  \log^4 \epsA^{-1}+ \log^2 C), 
      \\  B = \Ord(\epsI^{-2}+ C^2) + \log^{\Ord(\log \epsA^{-1})}  \epsA^{-1}.
      \end{array}
      \label{eq:LemmaBasis01-02}
    \end{align}
    Approximation error between $f_1^{l',s}(x, \sigma_t,m_t) $ and $ \phi_1^{l',s}(x, \sigma',m')$ is bounded by
    \begin{align}
    &
       \epsI + \Ord(\log \epsA^{-1})(\ConstDifBoundP\log^\frac12  \epsA^{-1})^{\Ord(\log \epsA^{-1})}
      \cdot (\epsI + \max\{C+l+2, \sigma_\eps^{-1}\}^2\cdot(\epsI + \epsSensitivity(\epsI^{-2}+\epsA^{-2})))
       \\ & =(\epsI+ \epsSensitivity ) \left(\log^{\Ord(\log \epsA^{-1})} \epsA^{-1} +C^2\right)
       .
    \end{align}
    $f_2^{l',s}(x, \sigma_t,m_t)$ is also approximated in the same way, and therefore aggregating $f_1^{l',s}(x, \sigma_t,m_t)$ and $f_2^{l',s}(x, \sigma_t,m_t)$ (by using \cref{Lemma:ParallelNetwork}) yields that $f_3^{l',s}(x, \sigma_t,m_t)$ is approximated by $\phi_3^{l',s}(x,\sigma',m')$ with the error up to an additive error of 
    $(\epsI+ \epsSensitivity ) \left(\log^{\Ord(\log \epsA^{-1})} \epsA^{-1} +C^2\right)$
     using a neural network with the same size as that of \eqref{eq:LemmaBasis01-02}.

    Next, we consider $f_4^{l'}(x,m_t)$.
    Since $2^k x = \Ord(C 2^k)$ and $|jm_t-jm'|\leq \Ord(C2^k\epsSensitivity) $, we approximate $f_4^{l'}(x,m_t)$ with a neural network $\phi_4^{l'}(x,m') \in \Phi(L,W,S,B)$, where $L,\|W\|_\infty, S,B$ are evaluated by \cref{Lemma:BaseNN02,Lemma:ConcateNetwork} (setting $\epsB =\epsI, \ConstMultA = \Ord(C 2^k)$) as
    \begin{align}
      L = \Ord (\log \epsI^{-1}+ k\log C)
      ,\quad W = \Ord (1)
     ,\quad S = \Ord (\log\epsI^{-1} + k\log C), 
     \quad B = \Ord( C^l 2^{kl})
      .
      \end{align}
      Approximation error between $f_4^{l'}(x,m_t)$ and $\phi_4^{l'}(x,m')$ is bounded as 
     $
          \epsI + \Ord( C^{l} 2^{kl}) \epsSensitivity
      $, 
      using \cref{Lemma:BaseNN02}.
      
      The arguments for $f_5^{l'}(\sigma)$ and $f_6(m)$ are just setting appropriate parameters in \cref{Lemma:BaseNN02} and \cref{Corollary:ApproxInv}, respectively.
      For $f_5^{l'}(\sigma_t)$, there exists a neural network $\phi^{l'}_5(\sigma')$ with $L = \Ord(\log \epsI^{-1}), \|W\|_\infty = 48l , S = \Ord(\log \epsI^{-1} ), B=1$ and the approximation error between $f_5^{l'}(\sigma)$ and $\phi_5^{l'}(\sigma')$ is bounded by $\epsI + l\epsSensitivity$, by setting $d=l'(\leq l), \epsB = \epsI$ in \cref{Lemma:BaseNN02}.
        For $f_6(m_t)$, there exists a neural network $\phi_6(m')$ with $L = \Ord(\log^2 \epsI^{-1} + \log^2 m_\eps^{-1}), \|W\|_\infty = \Ord(\log^3 \epsI^{-1} + \log^3 m_\eps^{-1}), S = \Ord(\log^4 \epsI^{-1} + \log^4 m_\eps^{-1}), B=\Ord(\epsI^{-l-1} + \mlow^{-l-1})$ and the approximation error between $f_6(m_t)$ and $\phi_6(m')$ is bounded by $\epsI + (l+1)\epsI^{-l-2}\epsSensitivity + (l+1)m_\eps^{-l-2}\epsSensitivity$, by setting $d=l+1, \epsB = \min\{\epsI,m_\eps\}$ in \cref{Corollary:ApproxInv}.
        Note that $m_\eps \gtrsim 1$.

    Therefore, \cref{Lemma:BaseNN02} with $\epsB=\epsI$ yields that
      there exists a neural network $\phi_7^{l',s}(x,m,\sigma)$ such that 
      \begin{align}
        L &= \Ord (\log^2\epsI^{-1}+ \log^2 \epsA^{-1}+ \log^2 C+k),
        \\\|W\|_\infty &= \Ord (\log^3 \epsI^{-1} + \log^3 \epsA^{-1} ),
      \\  S& = \Ord (\log^4\epsI^{-1}+  \log^4 \epsA^{-1}+ \log^2 C+k), 
      \\  B &= \Ord(\epsI^{-2}+ C^2) + \log^{\Ord(\log \epsA^{-1})}  \epsA^{-1}+ C^l2^{kl}.
      \end{align}
      where approximation error between $f_7^{l',s}(x,m_t,\sigma_t)$ and $\phi_7^{l',s}(x,m',\sigma')$ is bounded as
      \begin{align}
          \left|f_7^{l',s}(x, \sigma,m) - \phi_7^{l',s}(x,m',\sigma')\right|
          \leq
          (\epsI + \epsSensitivity(\epsI^{-l-2}+C^{4l} 2^{4kl}))\log^{\Ord(\log \epsA^{-1})} \epsA^{-1} 
          .
      \end{align}

    Finally, we sum up $\phi_7^{l',s}(x,m',\sigma')$ multiplied $\frac{-(-1)^{s+l}{}_{l} \mathrm{C}_{l'} 2^{kl'}}{\sqrt{2\pi}s! 2^s (l'+2s+2)}$ over $(l',s)$, according to \eqref{eq:LemmaBasis01-01} and using \cref{Lemma:ParallelNetwork}.
    Here, the coefficient is bounded by $2^{(k+1)l}$ and the total number of possible combinations $(l',s)$ is bounded by $\Ord(lS) = \Ord(\log\epsA^{-1})$.
    Then,
    approximation error for \eqref{eq:LemmaBasis01-01} is bounded as
    \begin{align}
     2^{(k+1)l}(\epsI + \epsSensitivity(\epsI^{-l-2}+C^{4l} 2^{4kl}))\log^{\Ord(\log \epsA^{-1})} \epsA^{-1} 
    .
    \end{align}
    In order to bound the terms related to $\epsI$ by $\Ord(\epsA)$, we take $\epsI = \Ord(2^{-(k+1)l}\log^{-\Ord(\log \epsA^{-1})} \epsA^{-1} )$.
    Then, the total approximation error is bounded by 
    $\tilde{\Ord}(\epsA) +
     \epsSensitivity C^{4l} 2^{k(4l+1)}\log^{\Ord(\log \epsA^{-1})} \epsA^{-1} 
    $
    and this is achieved by a neural network with 
    \begin{align}
         L &= \Ord (\log^4 \epsA^{-1}+ \log^2 C+k),
        \\\|W\|_\infty &= \Ord (\log^6 \epsA^{-1} ),
      \\  S& = \Ord (\log^8 \epsA^{-1}+ \log^2 C+k), 
      \\  B &= \Ord(C^l2^{kl}) + \log^{\Ord(\log \epsA^{-1})}  \epsA^{-1}.
    \end{align}

    Finally, because 
    \begin{align}
&\left|\int_{-\frac{\sigma_t \ConstDifBoundE}{m_t}\sqrt{\log \epsK^{-1}}+\frac{x}{m}}^{\frac{\sigma_t \ConstDifBoundP}{m_t}\sqrt{\log \epsK^{-1}}+\frac{x}{m_t}}\frac{1}{\sqrt{2\pi}\sigma_t}\mathbbm{1}[\jlow \leq 2^k y \leq \jhigh](2^k y-j)^l\exp\left(-\frac{(x-m_ty)^2}{2\sigma_t^2}\right)\dy\right|
\\ &\leq 
       \int\frac{1}{\sqrt{2\pi}\sigma_t}\mathbbm{1}[\jlow \leq 2^k y \leq \jhigh](l+1)^l\exp\left(-\frac{(x-m_ty)^2}{2\sigma_t^2}\right)\dy
       \lesssim C_f,
    \end{align}
    we can clip $\NetworkSplineA^{j,\jhigh,\jlow,k}$ so that it is bounded by $\Ord(1)$.
\end{proof}

   We now approximate the (modified) tensor product diffused B-spline basis.
   The following is the formal version of \cref{lemma:DiffusedBesov-Main}.
   Without the term of $\mathbbm{1}[\|y\|_\infty\leq \ConstDifBoundO]$, the statement matches that of \cref{lemma:DiffusedBesov-Main}.
   This network $\NetworkSplineC$ corresponds to $\phi_{\rm TDB}$ in \cref{lemma:DiffusedBesov-Main}.
\begin{lemma}[Approximation of the tensor-product diffused B-spline bases]\label{Lemma:DiffusionBasis2}
    Let $k\in \Z_+, j\in \Z^d, l\in \Z_+$ with 
  $-C2^{k}-l\leq j_i\leq C2^{k}\ (i=1,2,\cdots,d)$, $\epsJ\ (0<\epsJ<\frac12)$ and $C>0$.
    There exists a neural network $\NetworkSplineC(x,t)\in \Phi(L,W,S,B)$ with 
          \begin{align}
           L &= \Ord (\log^4 \epsA^{-1}+ \log^2 C+k^2),
        \\\|W\|_\infty &= \Ord (\log^6 \epsA^{-1} + \log^3 C+k^3),
      \\  S& = \Ord (\log^8 \epsA^{-1}+ \log^4 C+k^4), 
      \\  B &= \exp\left(\log^4 \epsA^{-1}+\log C + k\right),
    \end{align}
  such that
    \begin{align}
   & \left|
    \NetworkSplineC^{k,j}(x,t) - 
    \int_{\R^d} \frac{1}{\sigma_t^{d}(2\pi)^\frac{d}{2}}\mathbbm{1}[\|y\|_\infty\leq \ConstDifBoundO]M_{k,j}^d(y)\exp\left(-\frac{\|x-m_t y\|^2}{2\sigma_t^2}\right) \dy
        \right|
          \leq \eps 
    \end{align}
    holds for all $x\in [-C,C]^d$. 

    Also, with the same conditions, there exists a neural network $\NetworkSplineD\in \Phi(L,W,S,B)$
    with the same bounds on $L,\|W\|_\infty, S,B$ as above
    such that
    \begin{align}
   & \left\|
    \NetworkSplineD^{k,j}(x,\sigma',m') - 
       \int_{\R^d} \frac{x-m_ty}{\sigma_t^{d+1}(2\pi)^\frac{d}{2}}\mathbbm{1}[\|y\|_\infty\leq \ConstDifBoundO]M_{k,j}^d(y)\exp\left(-\frac{\|x-m_t y\|^2}{2\sigma_t^2}\right) \dy
        \right\|\leq \eps
        .
    \end{align}
    holds for all $x\in [-C,C]^d$.

    Furthermore, we can choose these networks so that
    $\|\NetworkSplineC^{k,j}\|_\infty,\ \|\NetworkSplineD^{k,j}\|_\infty = \Ord(1)$ hold. 
\end{lemma}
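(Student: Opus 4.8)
The plan is to exploit the tensor-product structure of the integrand. Since $M_{k,j}^d(y)=\prod_{i=1}^d\mathcal{N}_l(2^k y_i-j_i)$ and the Gaussian kernel $\exp(-\|x-m_ty\|^2/2\sigma_t^2)$ both factorize over coordinates, the $d$-dimensional integral defining the (modified) tensor-product diffused B-spline basis splits into a product of $d$ one-variable integrals, and each of those — after expanding $\mathcal{N}_l$ into $\Ord(1)$ truncated power functions and merging $\mathbbm{1}[|y_i|\le\ConstDifBoundO]$ with $\mathbbm{1}[0\le 2^ky_i-j_i\le l+1]$ into a single window $\mathbbm{1}[\jlow\le 2^ky_i\le\jhigh]$ — is a finite sum of exactly the diffused B-spline integrals handled in \cref{Lemma:DiffusionBasis1}. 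So I would first truncate the ambient integral from $\R^d$ to the box $A^{x,t}$ (the error is $\tilde{\Ord}(\eps)$ by \cref{Lemma:ClipInt} together with the crude bound $\mathbbm{1}[\|y\|_\infty\le\ConstDifBoundO]M_{k,j}^d(y)\le((l+1)^{l+1}2^{l+1})^d$), then build $\NetworkMA(t),\NetworkSigmaA(t)$ from \cref{Lemma:MandSigma} to an accuracy $\epsSensitivity$ fixed later, and for each coordinate $i$ and each term $l'$ of the B-spline expansion invoke \cref{Lemma:DiffusionBasis1} with inputs $(x_i,\NetworkSigmaA(t),\NetworkMA(t))$; the hypotheses of that lemma hold because $-C2^k-l\le j_i\le C2^k$ forces $\jlow,\jhigh,j'$ into the required ranges.

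Next I would assemble the pieces. Summing the $\Ord(l)=\Ord(1)$ sub-networks over $l'$ with \cref{Lemma:ParallelNetwork} gives, for each $i$, a network approximating the one-dimensional diffused B-spline basis up to $\tilde{\Ord}(\epsA)+\epsSensitivity\,C^{4l}2^{k(4l+1)}\log^{\Ord(\log\epsA^{-1})}\epsA^{-1}$ and, crucially, bounded by $\Ord(1)$. I would then multiply the $d$ one-dimensional networks with a product network (\cref{Lemma:BaseNN02} combined with \cref{Lemma:ConcateNetwork,Lemma:ParallelNetwork}); since $d$ is a fixed constant this costs only a constant factor in $L,\|W\|_\infty,S$ and a fixed power in $B$, and the error propagates via $|\prod_i a_i-\prod_i b_i|\le\sum_i|a_i-b_i|\prod_{j\ne i}\max(|a_j|,|b_j|)$, which is again $\Ord(1)$ times the per-factor error because every factor is $\Ord(1)$. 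The final clip to $\Ord(1)$ is legitimate since the true integral satisfies $\int\mathbbm{1}[\|y\|_\infty\le\ConstDifBoundO]M_{k,j}^d(y)\,\sigma_t^{-d}(2\pi)^{-d/2}\exp(-\|x-m_ty\|^2/2\sigma_t^2)\,\dy\lesssim 1$ uniformly in $x,t$. For the vector-valued $\NetworkSplineD^{k,j}$ the $i$-th output component carries the extra factor $[x-m_ty]_i/\sigma_t$ in exactly one coordinate, so I would take the product of one $\NetworkSplineB$-type sub-network (for that coordinate) and $(d-1)$ $\NetworkSplineA$-type sub-networks and stack the $d$ components — again only a constant blow-up.

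Finally I would fix the free parameters: choose $\epsA$ a fixed inverse-polylogarithmic multiple of $\eps$ so the $\tilde{\Ord}(\epsA)$ contributions are $\le\eps/2$, and choose $\epsSensitivity$ with $\log\epsSensitivity^{-1}=\Ord(\log\epsA^{-1}+\log C+k)$ so that $\epsSensitivity\,C^{4l}2^{k(4l+1)}\log^{\Ord(\log\epsA^{-1})}\epsA^{-1}\le\eps/2$. Feeding these choices into the size bounds of \cref{Lemma:MandSigma} and \cref{Lemma:DiffusionBasis1} and adding the constant-factor overhead of the $d$-fold product yields $L=\Ord(\log^4\epsA^{-1}+\log^2 C+k^2)$, $\|W\|_\infty=\Ord(\log^6\epsA^{-1}+\log^3 C+k^3)$, $S=\Ord(\log^8\epsA^{-1}+\log^4 C+k^4)$, and $B=\exp(\Ord(\log^4\epsA^{-1}+\log C+k))$, as claimed. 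I expect the main obstacle to be exactly this bookkeeping around the sensitivity term: because the error of \cref{Lemma:DiffusionBasis1} depends on $\epsSensitivity$ through a factor that is exponential in $k$ and of polynomial degree $4l$ in $C$, the parameter $\epsSensitivity$ must be taken exponentially small in $k$, which is precisely what forces the $\exp(k)$ (resp.\ $\exp(\log C)$) dependence into the norm bound $B$ while keeping it out of $L,\|W\|_\infty,S$; verifying that this separation holds and that no worse dependence creeps in when composing the pieces is the delicate part, whereas the structural decomposition itself is routine given \cref{Lemma:DiffusionBasis1}.
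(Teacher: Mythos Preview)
Your proposal is correct and follows essentially the same route as the paper: factorize the integrand over coordinates, truncate to $A^{x,t}$ via \cref{Lemma:ClipInt}, approximate each one-dimensional factor by summing the $\Ord(l)$ networks $\NetworkSplineA^{j_i-l',\jhigh_{l'},\jlow_{l'},k}$ from \cref{Lemma:DiffusionBasis1}, multiply the $d$ factors with $\NetworkMultiB$, compose with $\NetworkMA,\NetworkSigmaA$ at accuracy $\epsSensitivity$ chosen so that the sensitivity term $\epsSensitivity\,C^{4l}2^{k(4l+1)}\log^{\Ord(\log\epsA^{-1})}\epsA^{-1}$ is absorbed, and finally clip. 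The only cosmetic difference is that the paper first builds the full product $\phi_1(x,m',\sigma')$ and then composes once with $(\NetworkMA(t),\NetworkSigmaA(t))$, whereas you feed the time networks into each coordinate before multiplying; either order gives the same size bounds, and your identification of the sensitivity bookkeeping (forcing $\log\epsSensitivity^{-1}=\Ord(\log\epsA^{-1}+\log C+k)$, hence the $k^2,k^3,k^4$ contributions to $L,\|W\|_\infty,S$ via \cref{Lemma:MandSigma} and the $\exp(k)$ in $B$) is exactly the point the paper hinges on.
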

\begin{proof}
    Here we only prove the first part, because the second part follows in the same way.
    We assume $|\sigma'-\sigma_t|,|m'-m_t|\leq \epsSensitivity$.

  From the discussion \eqref{eq:B3FirstDiscussion-1}, we approximate 
\begin{align} 
    & \prod_{i=1}^d \left(\sum_{l'=0}^{l+1}\frac{(-1)^{l'}{}_{l+1} \mathrm{C}_{l'}}{l!} \int_{y_i \in a^x_{i}} \frac{1}{\sigma(2\pi)^\frac{1}{2}}\mathbbm{1}[|y_i|\leq \ConstDifBoundO]\mathbbm{1}[0\leq 2^{k}y_i-j_i \leq l+1]\right. \\ &  \quad\quad\quad\quad\quad\quad\quad\quad\quad\quad\quad\quad\quad\quad\quad\quad\quad\quad\quad\quad\quad\quad\times (2^{k_i}y_i-l'-j_i)_+^l \exp\left(-\frac{(x_i-m y_i)^2}{2\sigma^2}\right)  \dy_i\biggr)
    ,\label{eq:B3FirstDiscussion-1-revisit}
\end{align}
which
is equal to $D_{k,j}^d(x)$ within an additive error of $\Ord(\eps)$, so we approximate \eqref{eq:B3FirstDiscussion-1-revisit}.
Here $ a^x_{i} =  [\frac{x_{i}}{m_t} - \frac{\sigma_t\ConstDifBoundP}{m_t}\sqrt{\log \epsK^{-1}}, \frac{x_{i}}{m_t} + \frac{\sigma_t\ConstDifBoundP}{m_t}\sqrt{\log \epsK^{-1}}]$.

We let $f_{i}(y_{i};j_{i},k,l'):= \mathbbm{1}[|y_i|\leq \ConstDifBoundO]\mathbbm{1}[0\leq 2^{k}y_i-j_i \leq l+1](2^{k}y_i-l'-j_i)_+^l \exp\left(-\frac{(x_i-m_t y_i)^2}{2\sigma_t^2}\right)  \dy_i$.
First, $\sum_{l'=0}^{l+1}\frac{(-1)^{l'}{}_{l+1}\mathrm{C}_{l'}}{l!}  f_{i}(y_{i};j_{i},k,l')$ is approximated by $\sum_{l'=0}^{l+1}\frac{(-1)^{l'}{}_{l+1}\mathrm{C}_{l'}}{l!}  \NetworkSplineA^{j_i-l',\jhigh_{l'},\jlow_{l'},k}(y_i,\sigma',m') $ (see \cref{Lemma:ParallelNetwork} for aggregation of the networks).
Here, $\jhigh_{l'}$ and $\jlow_{l'}$ are defined so that $\mathbbm{1}[\jlow_{l'} \leq 2^k y \leq \jhigh_{l'}] = \mathbbm{1}[|y_i|\leq \ConstDifBoundO]\mathbbm{1}[0\leq 2^{k}y_i-j_i \leq l+1]$ holds.

Now we multiply $\sum_{l'=0}^{l+1}\frac{(-1)^{l'}{}_{l+1}\mathrm{C}_{l'}}{l!}  \NetworkSplineA^{j_i,\jhigh_{l'},\jlow_{l'},k}(y_i,\sigma',m') $ over $i=1,2,\cdots,d$ using $\NetworkMultiB$ to obtain the desired network $\NetworkSplineC^{k,j}$. According to \cref{Lemma:DiffusionBasis1} with $\epsA = \epsK$ and \cref{Lemma:BaseNN02} with $\epsB = \epsK$ and $\ConstMultA = \Ord(1)$ (because $\|\NetworkSplineA^{j_i,\jhigh_{l'},\jlow_{l'},k}\|_\infty = \Ord(1)$), there exists a neural network $\phi_1(x,m',\sigma')\in \Phi(L,W,S,B)$ with 
      \begin{align}
         L &= \Ord (\log^4 \epsA^{-1}+ \log^2 C+k),
        \\\|W\|_\infty &= \Ord (\log^6 \epsA^{-1} ),
      \\  S& = \Ord (\log^8 \epsA^{-1}+ \log^2 C+k), 
      \\  B &= \Ord(C^l2^{kl}) + \log^{\Ord(\log \epsA^{-1})}  \epsA^{-1}
    \end{align}
    and we can bound the approximation error between $\phi_1(x,m',\sigma')$ and \eqref{eq:B3FirstDiscussion-1-revisit} with
    \begin{align}
     \tilde{\Ord}(\epsA) +
     \epsSensitivity C^{4l} 2^{k(4l+1)}\log^{\Ord(\log \epsA^{-1})} \epsA^{-1} 
        .\label{eq:FinalErrorbefore-1}
    \end{align}
    
    Now, we consider $ \NetworkSplineC=\phi_1(x,\NetworkMA(t),\NetworkSigmaA(t))$.
    We apply \cref{Lemma:MandSigma} with $ \eps = C^{-4l} 2^{-k(4l+1)}\log^{-\Ord(\log \epsA^{-1})} \epsA^{-1}$, so that $     \epsSensitivity$ gets small enough and \eqref{eq:FinalErrorbefore-1} is bounded by $   \tilde{\Ord}(\epsA)$.
    Then, the size of $\NetworkSplineC$ is bounded by
   \begin{align}
         L &= \Ord (\log^4 \epsA^{-1}+ \log^2 C+k^2),
        \\\|W\|_\infty &= \Ord (\log^6 \epsA^{-1} + \log^3 C+k^3),
      \\  S& = \Ord (\log^8 \epsA^{-1}+ \log^4 C+k^4), 
      \\  B &= \exp\left(\log^4 \epsA^{-1}+\log C + k\right) .
    \end{align}
    Now, adjusting $\eps$ to replace $\tilde{\Ord}(\epsA)$ by $\eps$ yields the first assertion.

   We can make $\|\NetworkSplineC^{k,j}\|_\infty$ hold, because $\int_{\R^d} \frac{1}{\sigma_t^{d}(2\pi)^\frac{d}{2}}\mathbbm{1}[\|y\|_\infty\leq \ConstDifBoundO]M_{k,j}^d(y)\exp\left(-\frac{\|x-m_t y\|^2}{2\sigma_t^2}\right) \dy=\Ord(1)$.
\end{proof}

\subsection{Approximation error bound: based on $p_0$}

Now we put it all together and derive \cref{theorem:Approximation}.
Throughout this and the next subsections, we take $N\gg 1$, $T_1=\underline{T}=\mathrm{poly}(N^{-1})$ and $T_5=\overline{T} = \Ord(\log N)$.
Moreover, we let $T_2=N^{-(2-\delta)/d}$, $T_3 = 2T_2$, $T_4 = 3T_2$.
This subsection considers the approximation for $t\in [T_1, T_4]$.

We begin with the following lemma, which gives the basis decompositon of the Besov functions.
\begin{lemma}[Basis decomposition]\label{Lemma:BesovConstruction}
    Under $N \gg 1$, Assumptions \ref{assumption:InBesov}, \ref{assumption:SmoothBeta}, \ref{assumption:BoundarySmoothness} with $a_0=N^{-(1-\delta)/d}$,
    there exists $f_N$ that satisfies
    \begin{align}
      &  \|p_0-f_N\|_{L^2([-1,1]^d)} \lesssim N^{-s/d}
       , \\ & \|p_0-f_N\|_{L^2([-1,1]^d \setminus [-1+N^{-(1-\delta)/d}, 1-N^{-(1-\delta)/d}]^d)} \lesssim N^{-(3s+2)/d},
    \end{align}
    and $f_N(x)=0$ for all $x$ with $\|x\|_\infty\geq 1$, and has the following form:
    \begin{align}
        f_N(x) = \sum_{i=1}^{N} \alpha_{i} \mathbbm{1}[\|x\|_\infty \leq 1] M_{k,j_i}^d(x)+\sum_{i=N+1}^{3N} \alpha_{i} \mathbbm{1}[\|x\|_\infty \leq 1-N^{-(1-\delta)/d}  ] M_{k,j_i}^d(x)
        ,\label{eq:BesovConstruction}
    \end{align}
    where $-2^{(k)_m}-l \leq (j_i)_m \leq 2^{(k)_m}\ (i=1,2,\cdots,N,\ m=1,2,\cdots,d)$, $|k| \leq K^* = (\Ord(1)+\log N) \nu^{-1} + \Ord(d^{-1}\log N)$ for $\delta = d(1/p - 1/r)_+$ and $\nu = (2s-\delta)/(2\delta)$.
    Moreover, $|\alpha_{i}| \lesssim N^{(\nu^{-1} + d^{-1})(d/p - s)_+}$.
\end{lemma}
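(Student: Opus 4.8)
The plan is to assemble $f_N$ from the two pieces that appear as the two sums in \eqref{eq:BesovConstruction}: a global adaptive B-spline expansion of $p_0$ on $[-1,1]^d$ that supplies the $N^{-s/d}$ rate, and a second, higher-resolution expansion concentrated near the boundary that upgrades the rate there to $N^{-(3s+2)/d}$ by cashing in the $\mathcal{C}^\infty$ regularity granted by \cref{assumption:BoundarySmoothness} (this refinement is what later controls the $1/p_t$ amplification near the support boundary). The two indicator factors are essential bookkeeping: $\mathbbm{1}[\|x\|_\infty\le 1]$ lets the expansion reproduce the jump of $p_0$ across $\partial[-1,1]^d$ --- recall $p_0\ge C_f^{-1}>0$ up to the boundary but vanishes outside, by \cref{assumption:InBesov} --- and forces $f_N\equiv 0$ off the cube, while $\mathbbm{1}[\|x\|_\infty\le 1-a_0]$ with $a_0=N^{-(1-\delta)/d}$ keeps the interior expansion from polluting the shell $S:=[-1,1]^d\setminus[-1+a_0,1-a_0]^d$, where the boundary expansion is in charge.

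First I would invoke \cref{Lemma:SuzukiBesov} (the Besov B-spline approximation of Suzuki) applied to $p_0\in U(B^s_{p,q}([-1,1]^d);C)$ with a constant fraction of the budget $N$: this produces coefficients $\alpha_i$ and tensor-product indices $(k_i,j_i)$ with $\|p_0-\sum\alpha_i M^d_{k_i,j_i}\|_{L^2([-1,1]^d)}\lesssim N^{-s/d}$, the resolution cap $|k_i|\le K^\ast$ with $K^\ast=(\Ord(1)+\log N)\nu^{-1}+\Ord(d^{-1}\log N)$, the index range $-2^{(k_i)_m}-l\le (j_i)_m\le 2^{(k_i)_m}$, and --- through the Besov embedding (with smoothness loss $\delta=d(1/p-1/r)_+$ and resolution trade-off $\nu=(2s-\delta)/(2\delta)$) --- the coefficient bound $|\alpha_i|\lesssim N^{(\nu^{-1}+d^{-1})(d/p-s)_+}$. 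Truncating each summand by $\mathbbm{1}[\|x\|_\infty\le 1]$ leaves the $L^2$ values on $[-1,1]^d$ untouched and makes the sum vanish off the cube; since $\mathrm{supp}(p_0)\subseteq[-1,1]^d$ this already yields the first displayed estimate and the support property. For the boundary refinement I would spend the remaining $\Ord(N)$-sized budget on fine B-splines localized to (a slight dilation of) $S$: since $p_0|_S\in U(\mathcal{C}^\infty)$, hence $p_0|_S\in U(B^{s'}_{\infty,\infty}(S))$ for any fixed $s'$, and since $|S|\asymp a_0$, a level-$k'$ expansion there uses $\asymp a_0\,2^{k'd}$ bases and has $L^2(S)$ error $\lesssim 2^{-k's'}\sqrt{a_0}$ after correcting for a scale-$a_0$ smooth cutoff (which contributes a harmless $a_0^{-s'}$ factor); taking $2^{k'd}\asymp N/a_0$ spends $\Ord(N)$ bases and, with $a_0=N^{-(1-\delta)/d}$ and $s'$ chosen large enough (depending only on $s,d,\delta$ --- this is where $\delta>0$ is used, and $s'=3s+2$ or larger suffices), drives the error below $N^{-(3s+2)/d}$; one then checks these bases still have $|k'|\le K^\ast$ and coefficients within the same bound under the normalization of \cref{Lemma:SuzukiBesov}, and pads with zeros to make the count exactly $3N$.

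I expect the main obstacle to be the interaction of the two pieces at the inner edge $\partial[-1+a_0,1-a_0]^d$ of the shell. Because the interior sum is hard-truncated there while the boundary sum must still approximate $p_0$ accurately just inside $S$, the two expansions overlap on a thin collar, and one has to arrange --- e.g. via a smooth partition of unity at scale $a_0$, choosing the interior sum to absorb the residual of the boundary sum rather than $p_0$ itself --- that their combination reproduces $p_0$ to accuracy $N^{-s/d}$ on the whole cube without a ``doubling'' error, all while the boundary sum alone keeps the $N^{-(3s+2)/d}$ accuracy on $S$ and every resolution and coefficient stays within the stated caps. The delicacy is quantitative: the scale-$a_0$ cutoff inflates $\mathcal{C}^{s'}$-norms by $a_0^{-s'}$ and the shell needs a strictly better-than-$N^{-s/d}$ rate, so the construction only closes because the shell's measure $a_0=N^{-(1-\delta)/d}$ is small enough to absorb both effects --- which is exactly the calibration encoded in the choices $a_0=N^{-(1-\delta)/d}$ and the exponent $3s+2$.
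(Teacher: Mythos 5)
You have the roles of the two sums reversed, and with your assignment the shell estimate cannot be achieved. In \eqref{eq:BesovConstruction} the second sum is multiplied by $\mathbbm{1}[\|x\|_\infty \le 1-a_0]$, which is identically zero on the shell $S=[-1,1]^d\setminus[-1+a_0,1-a_0]^d=\{1-a_0<\|x\|_\infty\le 1\}$. So any ``boundary refinement'' you place in the second sum contributes nothing on $S$, and on $S$ the function $f_N$ equals the first sum alone. In your plan the first sum is the coarse global Besov expansion at rate $N^{-s/d}$, which then has no mechanism to reach $N^{-(3s+2)/d}$ on $S$. You even flag the collar interaction as the main obstacle, but the obstruction is sharper than a matching problem: a piece killed by $\mathbbm{1}[\|x\|_\infty\le 1-a_0]$ can never help on $S$ no matter how it is coupled to the first piece.

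The paper's proof assigns the pieces the other way around. It first produces $f_1=\sum\alpha_i M_{k_i,j_i}^d$ from \cref{Lemma:SuzukiBesov} applied at smoothness $3s+2$ (using \cref{assumption:BoundarySmoothness}), so $\|p_0-f_1\|_{L^2(S)}\lesssim N^{-(3s+2)/d}$; this carries the indicator $\mathbbm{1}[\|x\|_\infty\le 1]$ and occupies the first $N$ slots. It then produces $f_2$ from \cref{Lemma:SuzukiBesov} at smoothness $s$ on the whole cube, with $\|p_0-f_2\|_{L^2([-1,1]^d)}\lesssim N^{-s/d}$, and forms
\begin{align}
    f_N = \mathbbm{1}[\|x\|_\infty\le 1]\,f_1 + \mathbbm{1}[\|x\|_\infty\le 1-a_0]\,(f_2-f_1),
\end{align}
so that $f_N=f_1$ on $S$ (giving the $N^{-(3s+2)/d}$ bound there) and $f_N=f_2$ on the interior (giving the global $N^{-s/d}$ bound). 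Expanding $f_2-f_1$ supplies the $2N$ terms indexed $N+1,\dots,3N$ in \eqref{eq:BesovConstruction}.

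Two secondary points. First, with the correct assignment there is no collar-matching issue to solve: the two pieces are glued by a hard indicator, and the resulting $L^2$ errors on the interior and on the shell simply add in quadrature. Second, your error analysis of the boundary piece (small shell measure $\asymp a_0$ against a cutoff cost $a_0^{-s'}$, with $s'$ chosen depending on $\delta$) is a genuinely different and more delicate calculation than the paper's, which ignores the shell measure entirely and just cashes in fixed smoothness $3s+2$ via \cref{Lemma:SuzukiBesov} to get $N^{-(3s+2)/d}$ directly; this is why the exponent $3s+2$ in the lemma is independent of $\delta$, whereas your route would make it $\delta$-dependent. But since your construction cannot reach the shell in the first place, the error-budget comparison is moot until the roles of the two sums are corrected.
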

\begin{proof}
    Because $p_0 \in \mathcal{C}^{3s+2}([-1,1]^d \setminus [-1+N^{-(1-\delta)/d}, 1-N^{-(1-\delta)/d}]^d)$, according to \cref{Lemma:SuzukiBesov}, we have $f_1$ such that
    \begin{align}
    \|p_0-f_1\|_{L^2([-1,1]^d \setminus [-1+N^{-(1-\delta)/d}, 1-N^{-(1-\delta)/d}]^d)} \lesssim N^{-(3s+2)/d}.
    \end{align}
    and has the following form:
    \begin{align}
        f_1(x) = \sum_{i=1}^N \alpha_{i} M_{k,j_i}^d(x),
    \end{align}
        where $-2^{(k)_m}-l \leq (j_i)_m \leq 2^{(k)_m}\ (i=1,2,\cdots,N,\ m=1,2,\cdots,d)$, $|k| \leq K^* = (\Ord(1)+\log N) \nu^{-1} + \Ord(d^{-1}\log N)$ for $\delta = d(1/p - 1/r)_+$ and $\nu = (2s-\delta)/(2\delta)$.
    Moreover, $|\alpha_{1,i}| \lesssim N^{(\nu^{-1} + d^{-1})(d/p - 2s)_+}$.

    Next let us approximate $f$ in $[-1,1]^d$. Because $\|p_0\|_{B^s_{p,q}} \lesssim 1$, we have $f_2$ such that
     \begin{align}
    \|p_0-f_2\|_{L^2([-1,1]^d )} \lesssim N^{-s/d}.
    \end{align}
    and has the following form:
    \begin{align}
        f_2(x) = \sum_{i=N+1}^{2N} \alpha_{i} M_{k,j_i}^d(x),
    \end{align}
        where $-2^{(k)_j}-l \leq (j_i)_j \leq 2^{(k)_j}\ (i=1,2,\cdots,N,\ j=1,2,\cdots,d)$, $|k| \leq K^* = (\Ord(1)+\log N) \nu^{-1} + \Ord(d^{-1}\log N)$ for $\delta = d(1/p - 1/r)_+$ and $\nu = (s-\delta)/(2\delta)$.
    Moreover, $|\alpha_{2,i}| \lesssim N^{(\nu^{-1} + d^{-1})(d/p - s)}$.

    Therefore, 
    \begin{align}
      & \mathbbm{1}[\|x\|_\infty \leq 1] f_1(x) -\mathbbm{1}[\|x\|_\infty \leq 1-N^{-(1-\delta)/d}  ] f_1(x)+\mathbbm{1}[\|x\|_\infty \leq 1-N^{-(1-\delta)/d}  ] f_2 (x) \\ & = \sum_{i=1}^N \alpha_{i} M_{k_i,j_i}^d(x) -
      \sum_{i=1}^{N} \alpha_i\mathbbm{1}[\|x\|_\infty \leq 1-N^{-(1-\delta)/d}  ] M_{k_i,j_i}^d(x)
      + \sum_{i=N+1}^{2N}\alpha_i \mathbbm{1}[\|x\|_\infty \leq 1-N^{-(1-\delta)/d}  ] M_{k_i,j_i}^d(x)
    \end{align}
    holds and reindexing the bases gives the result.
\end{proof}

The following lemma gives neural network that approximates $\nabla \log p_t(x)$ in $[T_1, T_4]$.

\begin{lemma}[Approximation of score function for $T_1\leq t\leq T_4$]\label{Lemma:ScoreFunc-1}
    There exists a neural network $\NetworkScoreA\in \Phi(L,W,S,B)$ that satisfies
    \begin{align}\label{eq;Appendix-Approx-1-goal}
      \int p_t(x)  \|\NetworkScoreA(x,t) - \nabla \log p_t(x)\|^2 \dx\dt \lesssim \frac{ N^{-2s/d} \log N }{\sigma_t^2}
    \end{align}
    Here, $L, \|W\|_\infty, S, B$ is evaluated as
    \begin{align}
        L = \Ord (\log^4 N),
      \quad  \| W\|_\infty = \Ord (N\log^6 N),
    \quad    S = \Ord (N\log^8N ), 
    \quad  \text{and } B = \exp(\Ord(\log^4 N )).
    \end{align}
\end{lemma}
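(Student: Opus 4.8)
I would build $\NetworkScoreA$ as a clipped ratio of two sub-networks obtained by diffusing the B-spline expansion of $p_0$: one, $\hat p_t$, approximating $p_t$, and one, $\hat g_t$, approximating $\sigma_t\nabla p_t$, with the overall factor $-1/\sigma_t$ supplied by a $\sigma_t$-network. Concretely, first invoke \cref{Lemma:BesovConstruction} to obtain $f_N=\sum_{i=1}^{3N}\alpha_i\,\mathbbm{1}_i(\cdot)\,M^d_{k,j_i}$ with $\|p_0-f_N\|_{L^2([-1,1]^d)}\lesssim N^{-s/d}$, the sharper bound $\|p_0-f_N\|_{L^2(S_0)}\lesssim N^{-(3s+2)/d}$ on the boundary strip $S_0:=[-1,1]^d\setminus[-1+a_0,1-a_0]^d$ ($a_0=N^{-(1-\delta)/d}$), $|k|\le\Ord(\log N)$, $|\alpha_i|\le\mathrm{poly}(N)$, and $f_N\equiv0$ off $[-1,1]^d$. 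Convolving with the OU kernel turns this into the ideal approximants $\tilde p_t:=\sum_i\alpha_iE^{(1)}_{k,j_i}(\cdot,t)$ and $\tilde g_t:=\sum_i\alpha_iE^{(2)}_{k,j_i}(\cdot,t)$ of $p_t$ and $\sigma_t\nabla p_t$ (see \eqref{eq:Diffused-B-splineBasis-1}--\eqref{eq:Diffused-B-splineBasis-2}); since convolution against a probability density is an $L^2$-contraction and $m_t\simeq1$ for $t\le T_4$, one gets $\|p_t-\tilde p_t\|_{L^2}\lesssim N^{-s/d}$ and $\|\sigma_t\nabla p_t-\tilde g_t\|_{L^2}\lesssim N^{-s/d}$, and, by separating the part of $f_N$ supported in $S_0$, localised versions on the boundary zone at the sharper rate $N^{-(3s+2)/d}$. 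Then \cref{Lemma:DiffusionBasis2} replaces each $E^{(1)}_{k,j_i},E^{(2)}_{k,j_i}$ by a network of depth $\Ord(\log^4N)$, width $\Ord(\log^6N)$, sparsity $\Ord(\log^8N)$ and norm $\exp(\Ord(\log^4N))$, with error $\varepsilon=\mathrm{poly}(N^{-1})$ (using $C=\Ord(1)$, $k=\Ord(\log N)$, $\log\varepsilon^{-1}=\Ord(\log N)$); parallelising the $3N$ copies and forming the $\alpha_i$-weighted sums yields $\hat p_t,\hat g_t$ with $L=\Ord(\log^4N)$, $\|W\|_\infty=\Ord(N\log^6N)$, $S=\Ord(N\log^8N)$, $B=\exp(\Ord(\log^4N))$ and $|\hat p_t-\tilde p_t|,\ \|\hat g_t-\tilde g_t\|\le\mathrm{poly}(N)\,\varepsilon$. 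Finally set $\NetworkScoreA(x,t):=\mathrm{clip}\big(-\tfrac1{\NetworkSigmaA(t)}\,\hat g_t(x)\big/\max\{\hat p_t(x),\tau\}\big)$, where $\NetworkSigmaA$ is the $\sigma_t$-network of \cref{Lemma:MandSigma} ($1/\NetworkSigmaA$ is fine since $\sigma_t\gtrsim\sqrt{\underline T}$), $\tau=\mathrm{poly}(N^{-1})$ is chosen below, and the final coordinate-wise clip is at level $\Ord(\sigma_t^{-1}\log^{1/2}N)$ so that $\|\NetworkScoreA(\cdot,t)\|_\infty\lesssim\sigma_t^{-1}\log^{1/2}N$; the reciprocal, $\sigma_t$- and clip modules are polylog-size and absorbed into the stated sizes.

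For the error at a fixed $t\in[T_1,T_4]$ I would use the elementary identity $\NetworkScoreA+\nabla\log p_t=-\tfrac1{\sigma_t}\big(\tfrac{\hat g_t-\sigma_t\nabla p_t}{\hat p_t}+\sigma_t\nabla\log p_t\cdot\tfrac{p_t-\hat p_t}{\hat p_t}\big)+(\text{small})$ (the final clip being harmless wherever it acts, since it is a $1$-Lipschitz projection onto a ball containing the true score on the regions A and C below) and split $\R^d$ into three regions. \emph{Region A} (deep interior, $\|x\|_\infty\le m_t-c\sigma_t\sqrt{\log N}$): $p_0\ge C_f^{-1}$ on $[-1,1]^d$ forces $p_t\gtrsim1$, and choosing $\underline T$ a sufficiently small fixed power of $N^{-1}$ makes $|p_t-\hat p_t|\le\tfrac12 p_t$ throughout A (the crude bound $|p_t-\tilde p_t|\lesssim N^{-s/d}\sigma_t^{-d/2}$ with $\sigma_t\gtrsim\sqrt{\underline T}$ suffices for this qualitative fact), so $\hat p_t\gtrsim p_t\gg\tau$ and, with $\|\sigma_t\nabla\log p_t\|\lesssim1$ there (\cref{Lemma:Smooth1}), the A-contribution to $\int p_t\|\NetworkScoreA-\nabla\log p_t\|^2$ is $\lesssim\sigma_t^{-2}\big(\|\hat g_t-\sigma_t\nabla p_t\|_{L^2}^2+\|p_t-\hat p_t\|_{L^2}^2\big)\lesssim N^{-2s/d}/\sigma_t^2$. \emph{Region B} (far tail, $\|x\|_\infty\ge m_t+C_{\mathrm a,4}\sigma_t\sqrt{\log N}$): \cref{Lemma:Decay1} gives $\int_{\mathrm B}p_t\|\nabla\log p_t\|^2$ and $\int_{\mathrm B}p_t$ both $\le\mathrm{poly}(N^{-1})$, and since $\|\NetworkScoreA(\cdot,t)\|_\infty\lesssim\sigma_t^{-1}\log^{1/2}N$ the B-contribution is $\lesssim\sigma_t^{-2}\log N\cdot\mathrm{poly}(N^{-1})\lesssim N^{-2s/d}/\sigma_t^2$.

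\emph{Region C}, the transition shell $\{\,|\,\|x\|_\infty-m_t\,|\lesssim\sigma_t\sqrt{\log N}\,\}$, is the crux, and is exactly where \cref{assumption:BoundarySmoothness} enters. Because $1-m_t=\Ord(t)\le\Ord(N^{-(2-\delta)/d})$ and $\sigma_t\sqrt{\log N}\lesssim N^{-(1-\delta/2)/d}\sqrt{\log N}\ll a_0$ for $t\le T_4=3N^{-(2-\delta)/d}$, every $x\in\mathrm C$, together with the $\Ord(\sigma_t\sqrt{\log N})$-neighbourhood of $x/m_t$ on which $p_t(x)$ effectively depends, lies inside $S_0$, and $x\in\mathrm C$ is (relative to $\sigma_t$) super-polynomially far from $m_t\cdot([-1,1]^d\setminus S_0)$; hence on $\mathrm C$ only the smooth part of $f_N$ matters and the $N^{-(3s+2)/d}$ rate transfers: $|p_t-\tilde p_t|\lesssim N^{-(3s+2)/d}\sigma_t^{-d/2}$ pointwise, $\|p_t-\tilde p_t\|_{L^2(\mathrm C)}\lesssim N^{-(3s+2)/d}$, $\|\sigma_t\nabla p_t-\tilde g_t\|_{L^2(\mathrm C)}\lesssim N^{-(3s+2)/d}$. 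Choosing $\tau=\mathrm{poly}(N^{-1})$ of order $N^{-(3s+2)/d}$ (up to a fixed power of $\underline T$) makes $|p_t-\hat p_t|\le\tau$ on $\mathrm C$, hence $\hat p_t\gtrsim p_t$ wherever $p_t\ge2\tau$; the sublevel set $\{p_t<2\tau\}\cap\mathrm C$ is absorbed by the clip and \cref{Lemma:Decay1} exactly as in Region B; and where $p_t\ge2\tau$ the identity with $\|\sigma_t\nabla\log p_t\|\lesssim\log^{1/2}N$ on $\mathrm C$ gives a $\mathrm C$-contribution $\lesssim\tfrac1{\tau\sigma_t^2}\big(\|\sigma_t\nabla p_t-\tilde g_t\|_{L^2(\mathrm C)}^2+\log N\,\|p_t-\tilde p_t\|_{L^2(\mathrm C)}^2\big)\lesssim N^{-(3s+2)/d}\log N/\sigma_t^2$, which is $\ll N^{-2s/d}\log N/\sigma_t^2$ since the sharper rate beats $N^{-2s/d}$ by a fixed polynomial factor. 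Summing the three regions and absorbing the $\Ord(\mathrm{poly}(N)\,\varepsilon)$ terms (taking $\varepsilon$ a small enough polynomial, which only inflates $\log\varepsilon^{-1}=\Ord(\log N)$) gives $\int p_t(x)\|\NetworkScoreA(x,t)-\nabla\log p_t(x)\|^2\,\mathrm dx\lesssim N^{-2s/d}\log N/\sigma_t^2$ for every $t\in[T_1,T_4]$, with the advertised $L,\|W\|_\infty,S,B$. I expect the main obstacle to be Region C: checking that the transition shell together with its dependence neighbourhood sits inside the strip of width $a_0=N^{-(1-\delta)/d}$ (which is why $T_4$ must be taken as small as $3N^{-(2-\delta)/d}$), and balancing the reciprocal floor $\tau$ against the $N^{-(3s+2)/d}$ gain from the higher-order boundary smoothness so that the $1/p_t$ amplification is absorbed; the rest is routine combination of the preparation lemmas.
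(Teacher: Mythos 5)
Your overall architecture coincides with the paper's: diffuse the B-spline expansion of \cref{Lemma:BesovConstruction} through the OU kernel, realize the tensor-product diffused B-spline bases by \cref{Lemma:DiffusionBasis2}, form a floored/clipped ratio of a density network and a gradient network times $1/\sigma_t$, kill the far tail and the small-density set with \cref{Lemma:Decay1}, and use \cref{assumption:BoundarySmoothness} plus the geometric fact that for $t\le T_4$ the transition shell's dependence neighbourhood sits inside the width-$a_0$ boundary strip, so that the sharper rate $N^{-(3s+2)/d}$ compensates the $1/p_t$ amplification there. That skeleton, including the identification of the shell as the crux, matches \cref{Lemma:ScoreFunc-1}'s proof.

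There is, however, a genuine gap in how you propagate the error through the ratio. Your bound on both Region A and Region C with a \emph{single} power of $\tau^{-1}$ rests on the pointwise comparison $\hat p_t\gtrsim p_t$ on $\{p_t\ge 2\tau\}$, i.e.\ on $|\hat p_t-p_t|\le\tau$ pointwise. The only route from the available $L^2$ bounds to a pointwise bound is Cauchy--Schwarz against the kernel, which you yourself record as $|p_t-\tilde p_t|\lesssim N^{-(3s+2)/d}\sigma_t^{-d/2}$; since $\underline T=\mathrm{poly}(N^{-1})$ with an exponent dictated elsewhere (it must be taken very small for \cref{lemma:TotalVariation-Init}), $\sigma_t^{-d/2}$ is an uncontrolled $\mathrm{poly}(N)$ near $t=\underline T$, so $|\hat p_t-p_t|\le\tau$ with $\tau\sim N^{-(3s+2)/d}$ simply does not follow, and enlarging $\tau$ by $\underline T^{-\Ord(1)}$ to force it destroys both the $\tau^{-1}$ amplification and the \cref{Lemma:Decay1} discard of $\{p_t<2\tau\}$. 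Without the pointwise comparison you are forced to the unconditional estimate $p_t/\hat p_t^2\le\tau^{-2}$, and with your floor $\tau\sim N^{-(3s+2)/d}$ this gives $\tau^{-2}\cdot N^{-2(3s+2)/d}=\Ord(1)$ on the shell — no gain at all. The paper resolves exactly this by choosing the floor at $N^{-(2s+1)/d}$ (not $N^{-(3s+2)/d}$), discarding $\{p_t\le N^{-(2s+1)/d}\}$ via \cref{Lemma:Decay1}, and bounding the ratio error \emph{unconditionally} by $N^{(2s+1)/d}\log^{1/2}N\,(|f_1-p_t|+\|f_2-\sigma_t\nabla p_t\|)$ on the shell, so that the squared amplification $N^{(4s+2)/d}$ against the localized $L^2$ error $N^{-(6s+4)/d}$ yields $N^{-(2s+2)/d}\ll N^{-2s/d}$; in the interior it uses a case split on whether $|f_1-p_t|\le C_{\mathrm a}^{-1}/2$, absorbing the bad event into $\log^{1/2}N\,|f_1-p_t|$ rather than assuming it never occurs. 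To close your argument you should replace the pointwise-closeness step by this floor-and-unconditional-amplification scheme and rebalance $\tau$ accordingly.
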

\begin{proof}
Before we proceed to the main part of the proof, we limit the discussion into the bounded region. 
According to \cref{Lemma:Decay1}, we have that
 \begin{align}
   \int_{\|x\|_\infty \geq m_t + \Ord(1)\sigma_t\sqrt{\log N}} p_t(x)\|s(x,t) - \nabla \log p_t(x)\|^2 \dx
  \lesssim \frac{\underline{T}}{N^{(2s+1)/d}}\left(1+\|s(\cdot,t)\|_\infty^2\right)
  \label{eq:B4FirstDiscussion-1},
    \end{align}
    with a sifficiently large hidden constant in $\Ord(1)$.
    Because $\|\nabla \log p_t(x)\|$ is bounded with $\frac{\log^\frac12 N}{\sigma_t}$ in $\|x\|_\infty \geq m_t + \Ord(1)\sigma_t\sqrt{\log N}$ due to \cref{Lemma:Smooth1}, $s$ can be taken so that $\|s(\cdot,t)\|_\infty \lesssim \frac{\log^\frac12 N}{\sigma_t}$ and therefore \eqref{eq:B4FirstDiscussion-1} is bounded by $\frac{\underline{T}}{N^{(2s+1)}} \cdot \frac{\log N}{\underline{T}} = N^{-(2s+1)/d}\log N $, which is smaller than the upper bound of \eqref{eq;Appendix-Approx-1-goal}.
Thus, we can focus on the approximation of the score $\nabla \log p_t(x)$ within $\|x\|_\infty \leq m_t + \Ord(1)\sigma_t\sqrt{\log N}=\Ord(1)$.
    Moreover, we can also exclude the case where $p_t(x)\leq N^{-(2s+1)/d}$, because \cref{Lemma:Decay1} can bound the error
    \begin{align}
        \int_{\|x\|_\infty \leq m_t + \Ord(1)\sigma_t\sqrt{\log N}} p_t(x) \mathbbm{1}[p_t(x) \leq \eps]\| s(x,t) -\nabla \log p_t(x)\|^2 \dx& \lesssim \frac{\eps}{\sigma_t^2} \log^\frac{d+2}{2} (\eps^{-1}\underline{T}^{-1}) +\eps \| s(x,t)\|
        \\ & \lesssim \frac{\eps}{\sigma_t^2}\log^\frac{d+2}{2} (\eps^{-1}\underline{T}^{-1}) + \frac{\eps}{\sigma_t^2}\log N,
        \label{eq:B4FirstDiscussion-11}
    \end{align}
    and setting $\eps = N^{-(2s+1)/d}$ makes \eqref{eq:B4FirstDiscussion-11} smaller than the bound \eqref{eq;Appendix-Approx-1-goal}.

    Thus, in the following, we consider $x$ such that $\|x\|_\infty \leq m_t + \Ord(1)\sigma_t\sqrt{\log N}=\Ord(1)$ and $p_t(x)\geq N^{-(2s+1)/d}$ holds. In this case, we have $\|\nabla \log p_t(x)\|\lesssim \frac{\log^\frac12 N}{\sigma_t}$.

    The construction is straightforward. Based on \eqref{eq:BesovConstruction} of \cref{Lemma:BesovConstruction}, 
    we let
    \begin{align}
      p_t(x)&=   \int \frac{1}{\sigma_t^{d}(2\pi)^\frac{d}{2}}p_0(y)\exp\left(-\frac{\|x-m_t y\|^2}{2\sigma_t^2}\right) \dy
         \fallingdotseq
          \int \frac{1}{\sigma_t^{d}(2\pi)^\frac{d}{2}}f_N(y)\exp\left(-\frac{\|x-m_t y\|^2}{2\sigma_t^2}\right) \dy
       \\&  = \sum_{i=1}^N \alpha_i E_{k_i,j_i}^{(1)}(x,t) =:\tilde{f}_1(x,t),
       \\  f_1(x,t)&:=\tilde{f}_1(x,t)\lor N^{-(2s+1)/d},
    \end{align}
    and
     \begin{align}
      \sigma_t\nabla p_t(x)&= \int \frac{x-m_ty}{\sigma_t^{d+1}(2\pi)^\frac{d}{2}}p_0(y)\exp\left(-\frac{\|x-m_t y\|^2}{2\sigma_t^2}\right) \dy
         \fallingdotseq
          \int \frac{x-m_ty}{\sigma_t^{d+1}(2\pi)^\frac{d}{2}}f_N(y)\exp\left(-\frac{\|x-m_t y\|^2}{2\sigma_t^2}\right) \dy
       \\ & = \sum_{i=1}^N \alpha_i E_{k_i,j_i}^{(2)}(x,t)=:f_2(x,t),
       \\ f_3(x,t)&:=\frac{f_2(x,t)}{f_1(x,t)}\mathbbm{1}\left[\left\|\frac{f_2(x,t)}{f_1(x,t)}\right\|\lesssim \frac{\log^\frac12 N}{\sigma_t}\right]
    \end{align} 
    so that $\alpha_i$, $E_{k_i,j_i}^{(1)}(x,t)$ and $E_{k_i,j_i}^{(2)}(x,t)$ correspond to the basis decomposition in \cref{Lemma:BesovConstruction}.
    Thus, $|\alpha_{i}| \lesssim N^{(\nu^{-1} + d^{-1})(d/p - s)_+}$ and $|k_i| = \Ord(\log N)$.
    We remark that $C_{\mathrm{b},1}$ is set to be $1$ or $1-N^{-(1-\delta)/d}$ in \eqref{eq:Diffused-B-splineBasis-1} and \eqref{eq:Diffused-B-splineBasis-2}.
    We approximate $E_{k,j_i}^{(1)}$ and $E_{k,j_i}^{(2)}$ by $\NetworkSplineC^{k_i,j_i}$ and $\NetworkSplineD^{k_i,j_i}$ in \cref{Lemma:DiffusionBasis2}, by setting $\eps=\eps_1$ and $C=m_t + \Ord(1)\sigma_t\sqrt{\log N}=\Ord(1)$ (because $\sigma_t \leq \sigma_{T_2}\lesssim \log^{-\frac{1}{2}} N$), where $\eps_1=\mathrm{poly}(N^{-1})$ is a scaler value adjusted below.
    Then we sum up these sub-networks using \cref{Lemma:ParallelNetwork} and obtain  neural networks $\NetworkBesovA(x,t)$ and $\NetworkBesovB(x,t)$ that approximate $f_1(x,t)$ and $f_2(x,t)$, respectively.

 Because we can decompose the error as
    \begin{align}
   &     \int_{\|x\|_\infty \leq m_t + \Ord(1)\sigma_t\sqrt{\log N}} p_t(x)\mathbbm{1}[p_t(x)\geq N^{-\frac{2s+1}{d}}] \|s(x,t) - \nabla \log p_t(x)\|^2 \dx \label{eq:Appendix-Approx-3-target}
          \\ &   \lesssim
           \int_{\|x\|_\infty \leq m_t + \Ord(1)\sigma_t\sqrt{\log N}}\mathbbm{1}[p_t(x)\geq N^{-\frac{2s+1}{d}}] p_t(x) \left\| \NetworkScoreA(x,t) -\frac{ f_3(x,t)}{\sigma_t}\right\|^2\dx \label{eq:Appendix-Approx-3-target-1}
      \\ & \quad      +
           \int_{\|x\|_\infty \leq m_t + \Ord(1)\sigma_t\sqrt{\log N}}\mathbbm{1}[p_t(x)\geq N^{-\frac{2s+1}{d}}] p_t(x) \left\| \frac{ f_3(x,t)}{\sigma_t }-\nabla \log p_t(x)\right\|^2\dx, \label{eq:Appendix-Approx-3-target-2}
    \end{align}
       we consider the approximation of $\frac{ f_3(x,t)}{\sigma_t}$ for the moment, instead of $\nabla\log p_t(x)=\frac{\nabla p_t(x,t)}{f_1(x,t)}$, and bound \eqref{eq:Appendix-Approx-3-target-1}.
 From the construction of the networks, we have the following bounds:
 \begin{align}
     |f_1(x,t) - \NetworkBesovA(x,t)|& \lesssim N \cdot \max|\alpha_i|\cdot \eps_1,\label{eq:Appendix-Approx-3-4}
  \\   \|f_2(x,t) - \NetworkBesovB(x,t)\|& \lesssim N \cdot \max|\alpha_i|\cdot \eps_1.\label{eq:Appendix-Approx-3-5}
 \end{align}
 for all $x$ with $\|x\|_\infty \leq m_t + \Ord(1)\sigma_t\sqrt{\log N}=\Ord(1)$.
    Note that $\max|\alpha_i|$ is bounded by $N^{(\nu^{-1} + d^{-1})(d/p - s)_+}$. Thus, we take $\eps_1 \lesssim N^{-1} \cdot N^{-(\nu^{-1} + d^{-1})(d/p - s)_+}\cdot N^{-\frac{9s+3}{d}}$ so that \eqref{eq:Appendix-Approx-3-4} and \eqref{eq:Appendix-Approx-3-5} are bounded by $N^{-\frac{9s+3}{d}}$ in \cref{Lemma:BaseNN02}.

        Then we define $\NetworkBesovC$ as
    \begin{align}
        [\NetworkBesovC(x,t)]_i:= \NetworkClipA(\NetworkMultiB(\NetworkInvA(\NetworkClipA(\NetworkBesovA(x,t);N^{-(2s+1)/d},\Ord(1)))), [\NetworkBesovB(x,t)]_i);-\Ord(\log^\frac12 N),\Ord(\log^\frac12 N)).
    \end{align}
    to approximate $\sigma_t \nabla \log p_t(x)$.
    Here we used the boundedness of $p_t(x)$ with $[N^{-(2s+1)/d},\Ord(1)]$ to clip $\NetworkBesovA(x,t)$ and the boundedness of $\sigma_t\nabla\log p_t(x)$ with $[-\Ord(\log^\frac12 N), \Ord(\log^\frac12 N)]$ to clip the whole output.
    For $\NetworkInvA$ we let $\eps=N^{-(3s+1)/d}$ in \cref{Lemma:ApproxInv} and for $\NetworkMultiB$ we let $\eps=N^{-s/d}$ and $C=N^{(2s+1)/d}$.
    Then,
    \begin{align}
     & \|\NetworkBesovC(x,t)-f_3(x,t)\|= \left\| \NetworkBesovC(x,t) - \frac{f_2(x,t)}{f_1(x,t)}\mathbbm{1}\left[\left\|\frac{f_2(x,t)}{f_1(x,t)}\right\|\lesssim \frac{\log^\frac12 N}{\sigma_t}\right]\right\|
     \\ &  \lesssim N^{-s/d} + N^{(2s+1)/d} \cdot (N^{-(3s+1)/d}+N^{2(3s+1)/d}|f_1(x,t) - \NetworkBesovA(x,t)|+\|f_2(x,t) - \NetworkBesovB(x,t)\|)
    \\ & \label{eq:Appendix-Approx-3-6} \lesssim N^{-s/d} + N^{(8s+3)/d}|f_1(x,t) - \NetworkBesovA(x,t)| + N^{(2s+1)/d}\|f_2(x,t) - \NetworkBesovB(x,t)\|.
    \end{align}
    Applying \eqref{eq:Appendix-Approx-3-4}$\leq N^{-\frac{9s+3}{d}}$ and \eqref{eq:Appendix-Approx-3-5}$\leq N^{-\frac{9s+3}{d}}$ yields that \eqref{eq:Appendix-Approx-3-6}$\leq N^{-\frac{s}{d}}$.
    
    Finally, we let
  \begin{align}
        \NetworkScoreA(x,t) := \NetworkMultiB(\NetworkBesovC(x,t), \NetworkSigmaA(t)).
    \end{align}
    By setting $\eps=N^{-s/d}$ and $C\simeq \max\{\log^\frac12 N, \sigma_{\underline{T}}\}\lesssim \mathrm{poly}(N)$ in \cref{Lemma:BaseNN02} for $\NetworkMultiB$ and $\eps=N^{-s/d}/\mathrm{poly}(N)$ in \cref{Lemma:MandSigma} for $\NetworkSigmaA$.
    Then,
    \begin{align}
        \left\| \NetworkScoreA(x,t) -\frac{ f_3(x,t)}{\sigma_t}\right\|
        \lesssim N^{-s/d} +\mathrm{poly}(N) \cdot N^{-s/d}/\mathrm{poly}(N) \lesssim N^{-s/d},
    \end{align}
    which yields
    \begin{align}
        \eqref{eq:Appendix-Approx-3-target-1}=\int_{\|x\|_\infty \leq m_t + \Ord(1)\sigma_t\sqrt{\log N}}\mathbbm{1}[p_t(x)\geq N^{-\frac{2s+1}{d}}] p_t(x) \left\| \NetworkScoreA(x,t) -\frac{ f_3(x,t)}{\sigma_t}\right\|^2\dx
        \lesssim N^{-2s/d}.
    \end{align}
    The structure of $\NetworkBesovC$ and $ \NetworkScoreA$ are evaluated as
       \begin{align}
         L = \Ord (\log^4 N),
       \ \|W\|_\infty = \Ord (N\log^6N),
      \  S = \Ord (N\log^8 N), \text{ and }
      \  B = \exp\left(\log^4 N\right) .
    \end{align}
    Here we used $|k_i| = \Ord(\log N)$ and $C=\Ord(1)$.
    

    We move to the error analysis between $\frac{ f_3(x,t)}{\sigma_t}$ and $\nabla\log p_t(x)$ to bound \eqref{eq:Appendix-Approx-3-target-2}.
    Remind that we consider $x$ such that $\|x\|_\infty \leq m_t + \Ord(1)\sigma_t\sqrt{\log N}=\Ord(1)$ and $p_t(x)\geq N^{-(2s+1)/d}$ holds. In this case, we have $\|\nabla \log p_t(x)\|\lesssim \frac{\log^\frac12 N}{\sigma_t}$.
    First, we consider the case $x\in [-m_t,m_t]^d$.
    Since $p_t(x)$ is lower bounded by $C_{\mathrm{a}}^{-1}$ according to \cref{Lemma:LowerandUpperBounds}, as long as $|f_1(x,t) - p_t(x)| \leq C_{\mathrm{a}}^{-1}/2$, we can say that the approximation error is bounded by $\lesssim |f_1(x,t) - p_t(x)|+\|f_2(x,t) - \sigma_t \nabla p_t(x)\|$.
    On the other hand, if $|f_1(x,t) - p_t(x)| \geq C_{\mathrm{a}}^{-1}/2$, we no longer have such bound, but this time we can use the fact that $\frac{f_2(x,t)}{f_1(x,t)}$ and $\sigma_t\frac{\sigma_t\nabla p_t(x)}{p_t(x)}$ is bounded by $\log^\frac12 N$.
    Therefore, when $x\in [-m_t,m_t]^d$, we can bound the approximation error as
    \begin{align}
       \left\|  f_3(x,t)-\sigma_t\frac{\nabla p_t(x)}{p_t(x)}\right\|
    \leq 
      \left\|  \frac{f_2(x,t)}{f_1(x,t)}-\sigma_t\frac{\nabla p_t(x)}{p_t(x)}\right\|
      \lesssim \log^\frac12 N (|f_1(x,t)-p_t(x)|+\|f_2(x,t)-\sigma_t \nabla p_t(x)\| ).
    \end{align}
    Next, we consider the case when $x\in [- m_t - \Ord(1)\sigma_t\sqrt{\log N}, m_t + \Ord(1)\sigma_t\sqrt{\log N}]^d \setminus [-m_t,m_t]^d$.
    Then,  we have that
    \begin{align} \left\| f_3(x,t)-\sigma_t\frac{\nabla p_t(x)}{p_t(x)}\right\|
    \leq 
         \left\|  \frac{ f_2(x,t)}{ f_1(x,t)}-\sigma_t\frac{\nabla p_t(x)}{p_t(x)}\right\|
         & \lesssim
         \frac{\| f_2(x,t)- \sigma_t\nabla p_t(x)\|}{f_1(x,t)}
         +
         \|\sigma_t\nabla p_t(x)\|\left|\frac{1}{ f_1(x,t)}-\frac{1}{p_t(x)}\right|.
         \label{eq:Appendix-Approx-2-1}
    \end{align}
    The first term is bounded by $N^{(2s+1)/d}\|f_2(x,t)(x,t) - \sigma_t\nabla p_t(x)\|$ because we focus on the case $p_t(x)\geq N^{-(2s+1)/d}$.
    For the second term, because $\|\nabla \log p_t(x)\|=\left\|\sigma_t\frac{\nabla p_t(x)}{p_t(x)}\right\|\lesssim \frac{\log^\frac12}{\sigma_t}$, we have $\|\sigma_t\nabla p_t(x)\|\lesssim p_t(x)\log^\frac12 N$.
    By using this, we can bound the second term as
    \begin{align}
        \|\sigma_t\nabla p_t(x)\|\left|\frac{1}{f_1(x,t)}-\frac{1}{p_t(x)}\right|
       & \lesssim \log^\frac12 N p_t(x) \left|\frac{1}{f_1(x,t)}-\frac{1}{p_t(x)}\right|
       \\ & \lesssim \log^\frac12 N  \frac{\left|p_t(x)-f_1(x,t)\right|}{f_1(x,t)}
       \\ & \lesssim N^{\frac{2s+1}{d}}\log^\frac12 N \left|p_t(x)-f_1(x,t)\right|,
    \end{align}
    where we used $f_1(x,t)\geq N^{-(2s+1)/d}$.
    Thus, for $x\in [- m_t - \Ord(1)\sigma_t\sqrt{\log N}, m_t + \Ord(1)\sigma_t\sqrt{\log N}]^d \setminus [-m_t,m_t]^d$ and $p_t(x)\geq N^{-\frac{2s+1}{d}}$, \eqref{eq:Appendix-Approx-2-1} is bounded by 
     \begin{align}
      \left\|  \NetworkBesovC(x,t)-\frac{\sigma_t\nabla p_t(x)}{p_t(x)}\right\|
      \lesssim N^{\frac{2s+1}{d}}\log^\frac12 N 
      (|\NetworkBesovA(x,t)-p_t(x)|+\|\NetworkBesovB(x,t)-\sigma_t \nabla p_t(x)\| ).
    \end{align}
    Therefore, we have that
    \begin{align}\label{eq:Appenxid-Approx-2-4}
  &  \left\| \frac{f_2(x,t)}{\sigma_tf_1(x,t)}-\frac{\nabla p_t(x)}{p_t(x)}\right\|
   \\ & 
        \lesssim    \begin{cases}
      \log^\frac12 N (|f_1(x,t)-p_t(x)|+\|f_2(x,t)-\sigma_t \nabla p_t(x)\|)/\sigma_t\quad (\|x\|_\infty \leq m_t)
     \\ 
  N^{\frac{2s+1}{d}}\log^\frac12 N (|f_1(x,t)-p_t(x)|+\|f_2(x,t)-\sigma_t \nabla p_t(x)\|)/\sigma_t \quad  \\ \quad \quad\quad\quad\quad\quad\quad\quad\quad\quad\quad\quad\quad\quad\quad\quad\quad\quad\quad  (x\in [- m_t - \Ord(1)\sigma_t\sqrt{\log N}, m_t + \Ord(1)\sigma_t\sqrt{\log N}]^d \setminus [-m_t,m_t]^d).
        \end{cases} \label{eq:Appenxid-Approx-2-7}
    \end{align}
    
    We consider the $L^2(p_t)$ loss of \eqref{eq:Appenxid-Approx-2-7}.
    First, we consider the case of $\|x\|_\infty \leq m_t$.
    \begin{align}
      &   \int_{\|x\|_\infty\leq m_t } p_t(x) \left\| \frac{f_2(x,t)}{\sigma_tf_1(x,t)}-\frac{\nabla p_t(x)}{p_t(x)}\right\|^2\dx
    \\  & \lesssim  \int_{\|x\|_\infty\leq m_t}(|f_1(x,t)-p_t(x)|^2+\|f_2(x,t)-\sigma_t \nabla p_t(x)\|^2)\log N/\sigma_t^2\dx \ (\text{we used\eqref{eq:Appenxid-Approx-2-7} and $p_t(x)=\Ord(1)$ by \cref{Lemma:LowerandUpperBounds}.})
   \\   &  \lesssim 
    \int_{\|x\|_\infty\leq m_t } \left(\left| \int \frac{1}{\sigma_t^{d}(2\pi)^\frac{d}{2}}p_0(y)\exp\left(-\frac{\|x-m_t y\|^2}{2\sigma_t^2}\right) \dy- \int \frac{1}{\sigma_t^{d}(2\pi)^\frac{d}{2}}f_N(y)\exp\left(-\frac{\|x-m_t y\|^2}{2\sigma_t^2}\right) \dy\right|^2\right.
    \\ & \quad \left.+\left\| \int \frac{x-m_ty}{\sigma_t^{d+1}(2\pi)^\frac{d}{2}}p_0(y)\exp\left(-\frac{\|x-m_t y\|^2}{2\sigma_t^2}\right) \dy- \int \frac{x-m_ty}{\sigma_t^{d+1}(2\pi)^\frac{d}{2}}p_0(y)\exp\left(-\frac{\|x-m_t y\|^2}{2\sigma_t^2}\right) \dy\right\|^2\right)\log N/\sigma_t^2\dx 
    \\ & \lesssim \log N/\sigma_t^2 \cdot \int_{\|x\|_\infty\leq m_t } \int \frac{1}{\sigma_t^{d}(2\pi)^\frac{d}{2}}\exp\left(-\frac{\|x-m_t y\|^2}{2\sigma_t^2}\right)|p_0(y)-f_N(y)|^2\dy\dx 
      \end{align}
      \begin{align} & \quad + \log N/\sigma_t^2 \cdot \int_{\|x\|_\infty\leq m_t } \int\frac{|x-m_ty|}{\sigma_t^{d+1}(2\pi)^\frac{d}{2}}\exp\left(-\frac{\|x-m_t y\|^2}{2\sigma_t^2}\right)|p_0(y)-f_N(y)|^2\dy\dx
  \\  & = \log N/\sigma_t^2 \cdot \int \int_{\|x\|_\infty\leq m_t } \frac{1}{\sigma_t^{d}(2\pi)^\frac{d}{2}}\exp\left(-\frac{\|x-m_t y\|^2}{2\sigma_t^2}\right)|p_0(y)-f_N(y)|^2\dx\dy
   \\ & \quad + \log N/\sigma_t^2 \cdot \int\int_{\|x\|_\infty\leq m_t }  \frac{|x-m_ty|}{\sigma_t^{d+1}(2\pi)^\frac{d}{2}}\exp\left(-\frac{\|x-m_t y\|^2}{2\sigma_t^2}\right)|p_0(y)-f_N(y)|^2\dx\dy
   \\ & \lesssim  \log N/\sigma_t^2 \cdot \int |p_0(y)-f_N(y)|^2\dy + \log N/\sigma_t^2 \cdot \int |p_0(y)-f_N(y)|^2\dy
   \lesssim \log N/\sigma_t^2 \cdot N^{-2s/d}.\label{eq:Appenxid-Approx-2-9}
    \end{align}
    For the third inequality, we used Jensen's inequality.
    For the second last inequality, we used the construction of $f_N$ and \cref{Lemma:BesovConstruction}.

    We then consider the case of $x\in [- m_t - \Ord(1)\sigma_t\sqrt{\log N}, m_t + \Ord(1)\sigma_t\sqrt{\log N}]^d \setminus [-m_t,m_t]^d$.
    Most of the part is the same as previously.
   {\small
    \begin{align} 
         &   \int_{m_t\leq \|x\|_\infty\leq m_t + \Ord(1)\sigma_t\sqrt{\log N}} p_t(x) \mathbbm{1}[p_t(x)\geq N^{-\frac{2s+1}{d}}] \left\| \frac{f_2(x,t)}{\sigma_tf_1(x,t)}-\frac{\nabla p_t(x)}{p_t(x)}\right\|^2\dx
    \\  & \lesssim  \int_{m_t\leq\|x\|_\infty\leq m_t + \Ord(1)\sigma_t\sqrt{\log N}}(|f_1(x,t)-p_t(x)|^2+\|f_2(x,t)-\sigma_t \nabla p_t(x)\|^2)N^{\frac{4s+2}{d}} \log N/\sigma_t^2\dx
   \\   &  \lesssim 
\int_{m_t\leq\|x\|_\infty\leq m_t + \Ord(1)\sigma_t\sqrt{\log N}}\left(\left| \int \frac{1}{\sigma_t^{d}(2\pi)^\frac{d}{2}}p_0(y)\exp\left(-\frac{\|x-m_t y\|^2}{2\sigma_t^2}\right) \dy- \int \frac{1}{\sigma_t^{d}(2\pi)^\frac{d}{2}}f_N(y)\exp\left(-\frac{\|x-m_t y\|^2}{2\sigma_t^2}\right) \dy\right|^2\right.
    \\ & \quad \left.+\left\| \int \frac{x-m_ty}{\sigma_t^{d+1}(2\pi)^\frac{d}{2}}p_0(y)\exp\left(-\frac{\|x-m_t y\|^2}{2\sigma_t^2}\right) \dy- \int \frac{x-m_ty}{\sigma_t^{d+1}(2\pi)^\frac{d}{2}}p_0(y)\exp\left(-\frac{\|x-m_t y\|^2}{2\sigma_t^2}\right) \dy\right\|^2\right)N^{\frac{4s+2}{d}} \log N/\sigma_t^2\dx 
    \\ & \lesssim N^{\frac{4s+2}{d}} \log N/\sigma_t^2 \cdot  \int_{m_t\leq\|x\|_\infty\leq m_t + \Ord(1)\sigma_t\sqrt{\log N}} \int \frac{1}{\sigma_t^{d}(2\pi)^\frac{d}{2}}\exp\left(-\frac{\|x-m_t y\|^2}{2\sigma_t^2}\right)|p_0(y)-f_N(y)|^2\dy\dx 
   \\ & \quad + N^{\frac{4s+2}{d}} \log N/\sigma_t^2 \cdot  \int_{m_t\leq\|x\|_\infty\leq m_t + \Ord(1)\sigma_t\sqrt{\log N}} \int \frac{|x-m_ty|^2}{\sigma_t^{d+2}(2\pi)^\frac{d}{2}}\exp\left(-\frac{\|x-m_t y\|^2}{2\sigma_t^2}\right)|p_0(y)-f_N(y)|^2\dy\dx
  \end{align}
      \begin{align} 
   &\hspace{-5mm} \lesssim\left[ \int_{m_t\leq\|x\|_\infty\leq m_t + \Ord(1)\sigma_t\sqrt{\log N}}\left[\int_{\|\frac{x}{m_t}-y\|_\infty \leq \Ord(1)\sigma_t \sqrt{\log N}}\frac{1}{\sigma_t^{d}(2\pi)^\frac{d}{2}}\exp\left(-\frac{\|x-m_t y\|^2}{2\sigma_t^2}\right)|p_0(y)-f_N(y)|^2\dy+N^{-\frac{6s+2}{d}}\right]\dx\right. 
     \\  &\left.\hspace{-5mm}+\int_{m_t\leq\|x\|_\infty\leq m_t + \Ord(1)\sigma_t\sqrt{\log N}} \left[  \int_{\|\frac{x}{m_t}-y\|_\infty \leq \Ord(1)\sigma_t \sqrt{\log N}}\frac{|x-m_ty|^2}{\sigma_t^{d+2}(2\pi)^\frac{d}{2}}\exp\left(-\frac{\|x-m_t y\|^2}{2\sigma_t^2}\right)|p_0(y)-f_N(y)|^2\dy+N^{-\frac{6s+2}{d}}\right]\dx \right]
   \\ & \quad \cdot  N^{\frac{4s+2}{d}}  \log N/\sigma_t^2 \quad (\text{we used \cref{Lemma:ClipInt}.})
    \\ & \hspace{-8mm}\lesssim N^{\frac{4s+2}{d}}\log N/\sigma_t^2\cdot \left[ \int_{m_t\leq\|x\|_\infty\leq m_t + \Ord(1)\sigma_t\sqrt{\log N}}\left[\int_{\|\frac{x}{m_t}-y\|_\infty \leq \Ord(1)\sigma_t \sqrt{\log N}}\frac{1}{\sigma_t^{d}(2\pi)^\frac{d}{2}}\exp\left(-\frac{\|x-m_t y\|^2}{2\sigma_t^2}\right)|p_0(y)-f_N(y)|^2\dy\right]\dx\right. 
   \\   &\left.+\int_{m_t\leq\|x\|_\infty\leq m_t + \Ord(1)\sigma_t\sqrt{\log N}} \left[  \int_{\|\frac{x}{m_t}-y\|_\infty \leq \Ord(1)\sigma_t \sqrt{\log N}}\frac{\log N}{\sigma_t^{d}(2\pi)^\frac{d}{2}}\exp\left(-\frac{\|x-m_t y\|^2}{2\sigma_t^2}\right)|p_0(y)-f_N(y)|^2\dy\right]\dx +N^{-\frac{6s+2}{d}}\right]
   \\ & \lesssim N^{\frac{4s+2}{d}}\log^2 N/\sigma_t^2 \int_{m_t\leq\|x\|_\infty\leq m_t + \Ord(1)\sigma_t\sqrt{\log N}}\int_{\|\frac{x}{m_t}-y\|_\infty \leq \Ord(1)\sigma_t \sqrt{\log N}}\frac{1}{\sigma_t^{d}(2\pi)^\frac{d}{2}}\exp\left(-\frac{\|x-m_t y\|^2}{2\sigma_t^2}\right)|p_0(y)-f_N(y)|^2\dx\dy \\ & \quad +N^{-\frac{2s}{d}}\log N/\sigma_t^2\label{eq:Appenxid-Approx-2-8}
    \end{align}}
        For the third inequality, we used Jensen's inequality.
        Here, we note that if $(x,y)$ satisfies $m_t\leq\|x\|_\infty\leq m_t + \Ord(1)\sigma_t\sqrt{\log N}=\Ord(1)$ and $\|\frac{x}{m_t}-y\|_\infty \leq \Ord(1)\sigma_t \sqrt{\log N}$, then we have that $1-\Ord(1)\sigma_t \sqrt{\log N}\leq \|y\|_\infty \leq 1+\Ord(1)\frac{\sigma_t}{m_t}\sqrt{\log N}$ and that $1-\Ord(1)\sqrt{t}\leq \|y\|_\infty \leq 1+\Ord(1)\sqrt{t}$.
        Because we are considering the time $t\leq T_4 = 3N^{-\frac{2-\delta}/d}$, $\Ord(1)\sqrt{t}\lesssim N^{-\frac{1-\delta}{d}}$ holds for sufficiently large $N$.
        Therefore, \eqref{eq:Appenxid-Approx-2-8} is further bounded by
        \begin{align}
          &  \eqref{eq:Appenxid-Approx-2-8}\\ &\lesssim 
         N^{\frac{4s+2}{d}}\log^2 N/\sigma_t^2 \int_x\int_{1-N^{-\frac{1-\delta}{d}}\leq \|y\|_\infty \leq 1+N^{-\frac{1-\delta}/d}} \frac{1}{\sigma_t^{d}(2\pi)^\frac{d}{2}}\exp\left(-\frac{\|x-m_t y\|^2}{2\sigma_t^2}\right)|p_0(y)-f_N(y)|^2\dx\dy
       \\ & \quad     +N^{-\frac{2s}{d}}\log N/\sigma_t^2
           \\ & =
             N^{\frac{4s+2}{d}}   \log^2 N/\sigma_t^2 \int_{1-N^{-\frac{1-\delta}{d}}\leq \|y\|_\infty \leq 1+N^{-\frac{1-\delta}/d}}\int_x \frac{1}{\sigma_t^{d}(2\pi)^\frac{d}{2}}\exp\left(-\frac{\|x-m_t y\|^2}{2\sigma_t^2}\right)|p_0(y)-f_N(y)|^2\dy\dx
      \\ & \quad         +N^{-\frac{2s}{d}}\log N/\sigma_t^2
       \\   &  \lesssim N^{\frac{4s+2}{d}}\log^2 N/\sigma_t^2\cdot N^{-\frac{6s+4}{d}} +N^{-\frac{2s}{d}}\log N/\sigma_t^2 \lesssim N^{-\frac{2s}{d}}\log N/\sigma_t^2,\label{eq:Appenxid-Approx-2-10}
        \end{align}
        where we used the construction of $f_N$ and \cref{Lemma:BesovConstruction} for the second last inequality.
    Now we successfully bounded \eqref{eq:Appendix-Approx-3-target-2} and the conclusion follows.
    \end{proof}

\subsection{Approximation error bound: using the induced smoothness}
We then consider the approximation for $t \gtrsim T_2= N^{-(2-\delta)/d}$.
This can be proved by considering diffusion process starting at $t=t_*>0$.
We begin with the following lemma.
\begin{lemma}[Basis decomposition of $p_t$ at $t=t_*$]\label{Lemma:BesovConstruction-2}
    If $N,N'\gg 1$ and $N' \geq t_*^{-\frac{d}{2}}N^{\frac{\delta}{2}}$,
    there exists $f_{N'}$ such that
    \begin{align}
      &  \|p_{t_*}-f_{N'}\|_{L^2(\R^d)} \lesssim N^{-(3s+5)/d},
    \end{align}
    $f_{N'}(x)=0$ for $x$ with $\|x\|_\infty \gtrsim \Ord(\sqrt{\log N})$, and has the following form:
    \begin{align}
        f_N(x) = \sum_{i=1}^{N'} \mathbbm{1}[\|x\|_\infty \lesssim \Ord(\sqrt{\log N})] M_{k_i,j_i}^d(x),
    \end{align}
    where $-\sqrt{\log N}2^{(k_i)_m}-l\lesssim (j_i)_l\lesssim \sqrt{\log N} 2^{(k_i)_l}\ (i=1,2,\cdots,N,\ m=1,2,\cdots,d)$, $\|k_i\|_\infty \leq K=\Ord(d^{-1}\log N)$ and $|\alpha_{i}| \lesssim N^{\frac{(3s+6)(2-\delta)}{\delta }}$.
\end{lemma}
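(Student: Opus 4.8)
The plan is to exploit that, unlike $p_0$, the noised density $p_{t_*}$ is genuinely smooth: by \cref{Lemma:Smooth1} every $k$-th order partial derivative obeys $|\partial^{k}p_{t_*}|\lesssim\sigma_{t_*}^{-k}\lesssim t_*^{-k/2}$, so $p_{t_*}$ can be approximated at the full smooth B-spline rate, with no adaptivity and without invoking \cref{assumption:BoundarySmoothness}; the only wrinkle is that $p_{t_*}$ has full support. So I would (i) truncate $p_{t_*}$ to a cube of half-width $R=\Ord(\sqrt{\log N})$ using the Gaussian tail bound, (ii) apply \cref{Lemma:SuzukiBesov} on that cube with $p=q=\infty$ and a large explicit smoothness order $k=k(s,\delta,d)$, and (iii) pick $k$ and feed in $N'\ge t_*^{-d/2}N^{\delta/2}$ to turn the rate into the claimed bound. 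For (i): fix a large absolute constant $C_0$ and put $R:=m_{t_*}+C_0\sigma_{t_*}\sqrt{\log N}$; since $m_{t_*},\sigma_{t_*}\le1$ this gives $1\le R\lesssim\sqrt{\log N}$, and by the upper bound $p_{t_*}(x)\lesssim\exp(-(\|x\|_\infty-m_{t_*})_+^2/(2\sigma_{t_*}^2))$ of \cref{Lemma:LowerandUpperBounds} and the substitution $r=(\|x\|_\infty-m_{t_*})/\sigma_{t_*}$,
\begin{align}
\int_{\|x\|_\infty\ge R}p_{t_*}(x)^2\,\dx\ \lesssim\ \int_{C_0\sqrt{\log N}}^{\infty}e^{-r^2}(1+r)^{d-1}\,\mathrm{d}r\ \lesssim\ N^{-C_0^2}(\log N)^{(d-2)/2},
\end{align}
which is $\ll N^{-2(3s+5)/d}$ once $C_0^2>2(3s+5)/d$. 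It then suffices to build $f_{N'}$ supported in $[-R,R]^d$ with $\|p_{t_*}-f_{N'}\|_{L^2([-R,R]^d)}\lesssim N^{-(3s+5)/d}$.

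For (ii): I would take $k$ to be an integer in the window $2(3s+5)/\delta<k<2d(3s+6)/\delta$ (nonempty, of length $\gtrsim 1/\delta$ since $d\ge1$). By \cref{Lemma:Smooth1}, $\|p_{t_*}\|_{\mathcal{C}^{k}([-R,R]^d)}\lesssim\sigma_{t_*}^{-k}\lesssim t_*^{-k/2}$, so the restriction of $p_{t_*}$ to the cube lies in $U\big(B^{k}_{\infty,\infty}([-R,R]^d);\ (\log N)^{\Ord(1)}t_*^{-k/2}\big)$. Applying \cref{Lemma:SuzukiBesov} on $[-R,R]^d$ with $p=q=\infty$, smoothness $k$, error in $L^2$, and $N'$ bases: since $p=\infty$ the adaptivity parameter $d(1/p-1/2)_+$ vanishes, so only $\Ord(1)+\Ord(d^{-1}\log N')=\Ord(d^{-1}\log N)$ refinement levels are used (I would take $N'\simeq t_*^{-d/2}N^{\delta/2}\le N$ using $t_*\ge N^{-(2-\delta)/d}$, so $\log N'=\Ord(\log N)$). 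This produces $g_{N'}=\sum_{i=1}^{N'}\alpha_i M_{k_i,j_i}^d$ with $\|k_i\|_\infty\le K=\Ord(d^{-1}\log N)$, with the bases that are nonzero on the cube obeying $|(j_i)_m|\lesssim R\,2^{(k_i)_m}+l\lesssim\sqrt{\log N}\,2^{(k_i)_m}$, with coefficients $|\alpha_i|\lesssim(\log N)^{\Ord(1)}t_*^{-k/2}$ (the $N'$-power in the coefficient estimate being trivial for $p=\infty$), and with $\|p_{t_*}-g_{N'}\|_{L^2([-R,R]^d)}\lesssim(\log N)^{\Ord(1)}{N'}^{-k/d}t_*^{-k/2}$. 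Then $f_{N'}:=\mathbbm{1}[\|x\|_\infty\le R]\,g_{N'}=\sum_i\alpha_i\,\mathbbm{1}[\|x\|_\infty\le R]\,M_{k_i,j_i}^d$ has the required form.

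Step (iii) is bookkeeping: inserting ${N'}^{-k/d}\le t_*^{k/2}N^{-k\delta/(2d)}$ into the error bound gives $\|p_{t_*}-g_{N'}\|_{L^2([-R,R]^d)}\lesssim(\log N)^{\Ord(1)}N^{-k\delta/(2d)}\le N^{-(3s+5)/d}$ for $N\gg1$ (as $k\delta/(2d)>(3s+5)/d$), so with (i) one gets $\|p_{t_*}-f_{N'}\|_{L^2(\R^d)}\lesssim N^{-(3s+5)/d}$; and $t_*\ge N^{-(2-\delta)/d}$ gives $t_*^{-k/2}\le N^{(2-\delta)k/(2d)}$, hence $|\alpha_i|\lesssim(\log N)^{\Ord(1)}N^{(2-\delta)k/(2d)}\le N^{(3s+6)(2-\delta)/\delta}$ for $N\gg1$ by $k<2d(3s+6)/\delta$; together with $f_{N'}\equiv0$ for $\|x\|_\infty>R$ ($R\lesssim\sqrt{\log N}$) and $\|k_i\|_\infty\le K=\Ord(d^{-1}\log N)$, this is the statement. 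I expect the main obstacle to be exactly this balancing. Working over the dilated cube $[-R,R]^d$ with $R\simeq\sqrt{\log N}$ --- a dilation, kept rather than rescaled back to $[-1,1]^d$ so that the B-spline levels remain in $\{0,\dots,K\}$ as the downstream \cref{Lemma:DiffusionBasis2} requires $k\ge0$ --- costs $(\log N)^{\Ord(1)}$ factors and a $\sqrt{\log N}$-enlarged index range, so $k$ must be chosen large enough (as an explicit function of $s,\delta,d$) to absorb those losses and to close the gap between the target exponent $(3s+5)/d$ and the rate $k\delta/(2d)$ purchased by $N'\ge t_*^{-d/2}N^{\delta/2}$, yet small enough that the polynomially large coefficient bound $t_*^{-k/2}\le N^{(2-\delta)k/(2d)}$ stays below $N^{(3s+6)(2-\delta)/\delta}$; checking that the window $2(3s+5)/\delta<k<2d(3s+6)/\delta$ is nonempty (it is, for $d\ge1$) is the crux.
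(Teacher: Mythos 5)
Your proposal is correct and follows essentially the same route as the paper's proof: use the derivative bounds of \cref{Lemma:Smooth1} to place a $t_*^{-k/2}$-rescaled copy of $p_{t_*}$ in a high-order $B^k_{\infty,\infty}$ ball on a cube of radius $\Ord(\sqrt{\log N})$, invoke \cref{Lemma:SuzukiBesov} with $p=q=\infty$, kill the exterior contribution by the Gaussian tail of \cref{Lemma:LowerandUpperBounds}, and trade the rate $N^{-k\delta/(2d)}$ gained from $N'\ge t_*^{-d/2}N^{\delta/2}$ against the coefficient blow-up $t_*^{-k/2}$. The paper simply fixes the single value $\alpha=2(3s+6)/\delta+1$ (with B-spline order $l=\alpha+2$) where you exhibit the admissible window of smoothness orders; your bookkeeping of both constraints is, if anything, slightly more careful.
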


\begin{proof}
 Let $\alpha =\frac{2(3s+6)}{\delta} + 1$.
 According to \cref{Lemma:Smooth1}, for any $x$, we have
  \begin{align}
        \|\pd_{x_{i_1}}\pd_{x_{i_2}}\cdots \pd_{x_{i_k}} p_{T_2}(x)\|
        \leq 
        \frac{C_{\mathrm{a}}}{\sigma_{t_*}^k}.
    \end{align}
    Because all derivatives up to order $\alpha$ is bounded by $\sigma_{t_*}^{-\alpha}\lesssim t_*^{-\frac{\alpha}{2}} \lor 1$, $\frac{ p_{t_*}(x)}{t_*^{-\frac{\alpha}{2}}\lor a}$ belongs to $W^\alpha_\infty$ and its norm in $W^\alpha_\infty$ is bounded by a constant depending on $\alpha$, and hence to $B_{\infty, \infty}^{\alpha}$.
    Therefore, according to \cref{Lemma:SuzukiBesov}, there exists a basis decomposition with the order of the B-spline basis $l=\alpha +2$:
     \begin{align}
        f_{N'}(x) =( t_*^{-\frac{\alpha}{2}} \lor 1) \sum_{i=1}^N \alpha_{i} M_{k_i,j_i}^d (x).
    \end{align}
    such that 
    \begin{align}
      &  \|p_{t_*} - f_{N'}\|_{L^2([-\Ord(\sqrt{\log N}),\Ord(\sqrt{\log N})]^d)} \lesssim (\sqrt{\log N})^\alpha {N'}^{-\alpha/d} t_*^{-\frac{\alpha}{2}}
       \\ & =(\sqrt{\log N})^\alpha N^{\alpha \delta /2d} =(\sqrt{\log N})^\alpha N^{-(3s+6)/d} \lesssim N^{-(3s+5)/d},
    \end{align}
    where $-\sqrt{\log N} 2^{(k_i)_m}-l\lesssim (j_i)_l\lesssim \sqrt{\log N} 2^{(k_i)_l}\ (i=1,2,\cdots,N,\ m=1,2,\cdots,d)$, $\|k_i\|_\infty \leq K=\Ord(d^{-1}\log N)$, and $|\alpha_{i}| \lesssim 1$.
    Also, \cref{Lemma:Decay1} with $\eps = N^{-\frac{6s+10}{d}}$ and  $m_{t_*}+\Ord(1)\sigma_{t_*}\sqrt{\log N} \lesssim \sqrt{\log N}$ guarantees that $ \|p_{T_2} - f_N\|_{L^2(\R^d \subseteq [-\Ord(\sqrt{\log N}),\Ord(\sqrt{\log N})]^d)} \lesssim N^{-(3s+5)/d}$.
    Therefore, by resetting $\alpha_i \leftarrow ( t_*^{-\frac{\alpha}{2}} \lor 1) \alpha_i$, the assertion holds. ($\alpha_i$ is then bounded by $T_2^{-\frac{\alpha}{2}}$.)
\end{proof}
\cref{Lemma:BesovConstruction-2} gives a concrete construction of the neural network for $T_3 \leq t \leq T_5$.
\begin{lemma}[Approximation of score function for $T_3\leq t\leq T_5$; \cref{lemma:ApproximationSmoothArea}]\label{Lemma:ScoreFunc-2}
   Let $N\gg 1$ and $N'\geq t_*^{-d/2}N^{\delta/2}$.
    Suppose $t_*\geq N^{-(2-\delta)/d}$.
    Then there exists a neural network $\NetworkScoreB\in \Phi(L,W,S,B)$ that satisfies
    \begin{align}
    \int_x p_{t}(x)  \|\NetworkScoreB(x,t) - s(x,t)\|^2 \dx \lesssim \frac{N^{-\frac{2(s+1)}{d}}}{\sigma_t^2}
    \end{align}
    for $t\in [2t_*,\overline{T}]$.
    Specifically, $L = \Ord (\log^4 (N)),\| W\|_\infty = \Ord (N),S = \Ord (N')$, and $ B = \exp(\Ord(\log^4 N ))$.
     Moreover, we can take $\NetworkScoreB$ satisfying $\|\NetworkScoreB\|_\infty = \Ord(\sigma_t^{-1}\log^\frac12 N)$.
\end{lemma}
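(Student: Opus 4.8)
The plan is to treat $p_t$, for $t\ge 2t_*$, as the distribution produced by running the forward OU process \eqref{eq:Ornstein-Uhlenbeck} from $p_{t_*}$ rather than from $p_0$, and then to rerun the entire construction of \cref{Lemma:ScoreFunc-1} with $p_{t_*}$ playing the role of $p_0$. Conditioning $X_t$ on $X_{t_*}$ yields a Gaussian with mean $\bar m_t X_{t_*}$ and covariance $\bar\sigma_t^2 I_d$, where $\bar m_t = m_t/m_{t_*}$ and $\bar\sigma_t^2 = 1-\exp(-2\int_{t_*}^t\beta_s\,\mathrm d s)$, so that
\begin{align}
p_t(x)=\int p_{t_*}(y)\,\frac{1}{\bar\sigma_t^{d}(2\pi)^{d/2}}\exp\!\Big(-\frac{\|x-\bar m_t y\|^2}{2\bar\sigma_t^2}\Big)\,\mathrm d y .
\end{align}
The crucial point is that for $t\ge 2t_*$ we have $\int_{t_*}^t\beta_s\,\mathrm d s\ge \betalow\,t/2$, so $\bar\sigma_t\simeq\sqrt t\wedge 1\simeq\sigma_t$ and $\bar m_t\simeq 1$: the restarted kernel is, up to absolute constants, as regular as the original one, and $p_{t_*}$, unlike $p_0$, is globally smooth (by \cref{Lemma:Smooth1} all its derivatives of order $\le\alpha$ are $\Ord(\sigma_{t_*}^{-\alpha})$), which is exactly why no boundary-smoothness hypothesis is needed here.

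For the basis, I would invoke \cref{Lemma:BesovConstruction-2} with B-spline order $l=\alpha+2$, $\alpha=2(3s+6)/\delta+1$: it produces $f_{N'}=\sum_{i=1}^{N'}\alpha_i\,\mathbbm{1}[\|x\|_\infty\lesssim\sqrt{\log N}]\,M_{k_i,j_i}^d(x)$ with $\|p_{t_*}-f_{N'}\|_{L^2(\R^d)}\lesssim N^{-(3s+5)/d}$, $\|k_i\|_\infty=\Ord(d^{-1}\log N)$ and $|\alpha_i|\lesssim N^{(3s+6)(2-\delta)/\delta}$. Plugging $f_{N'}$ into the restarted kernel, the numerator and denominator of $\nabla\log p_t=\nabla p_t/p_t$ decompose into sums of $N'$ (modified) tensor-product diffused B-spline bases as in \eqref{eq:Diffused-B-splineBasis-1}--\eqref{eq:Diffused-B-splineBasis-2}, now with $(m_t,\sigma_t)$ replaced by $(\bar m_t,\bar\sigma_t)$. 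Since $t_*$ is fixed, $\bar m_t$ and $\bar\sigma_t$ remain smooth in $t$ on $[2t_*,\overline T]$ with $\bar\sigma_t\gtrsim\sqrt{t_*}$, so \cref{Lemma:MandSigma}-type sub-networks approximate them to any $\mathrm{poly}(N^{-1})$ accuracy; \cref{Lemma:DiffusionBasis2} then realizes each basis with a $\mathrm{poly}(\log N)$-sized network. Summing the $N'$ basis networks, dividing via $\NetworkMultiB$ and $\NetworkInvA$, and clipping --- using \cref{Lemma:LowerandUpperBounds} for the core lower bound $p_t(x)\gtrsim 1$, \cref{Lemma:Decay1} to discard the tail $\{\|x\|_\infty\gtrsim m_t+\Ord(1)\sigma_t\sqrt{\log N}\}$ and the sublevel set $\{p_t(x)\le N^{-\Ord(1)}\}$, and \cref{Lemma:Smooth1} to cap the output at $\Ord(\sigma_t^{-1}\log^{1/2}N)$ --- gives $\NetworkScoreB$ of the size claimed ($L=\Ord(\log^4N)$, $\|W\|_\infty=\Ord(N)$, $S=\Ord(N')$, $B=\exp(\Ord(\log^4 N))$), by the same bookkeeping as in \cref{Lemma:ScoreFunc-1} (note $N'\le N$ under $t_*\ge N^{-(2-\delta)/d}$, so the factor $N'$ never exceeds $N$).

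Finally I would propagate the error exactly as in \eqref{eq:Appenxid-Approx-2-9}--\eqref{eq:Appenxid-Approx-2-10}: convolving $p_{t_*}-f_{N'}$ against the restarted Gaussian and dividing by $p_t$ inflates $\|p_{t_*}-f_{N'}\|_{L^2}$ only by $\mathrm{poly}(N)$ multiplicative factors together with the scaling $1/\sigma_t^2$ (this step uses $\bar\sigma_t\simeq\sigma_t$). The exponent $\alpha$ in \cref{Lemma:BesovConstruction-2} was chosen large enough that $N^{-(3s+5)/d}$ dominates all these polynomial factors, leaving the squared $L^2(p_t)$ error $\lesssim N^{-2(s+1)/d}/\sigma_t^2$ uniformly on $[2t_*,\overline T]$. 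I expect the main obstacle to be purely computational rather than conceptual: verifying that the restarted kernel is a genuine tensor product of one-dimensional Gaussians (so \cref{Lemma:DiffusionBasis2} applies verbatim with the substituted parameters), checking $\bar\sigma_t\simeq\sigma_t$ uniformly for $t\ge 2t_*$, and carefully tracking how the division $\nabla p_t/p_t$ amplifies errors on the intermediate shell $\{m_t\le\|x\|_\infty\le m_t+\Ord(1)\sigma_t\sqrt{\log N}\}$ where $p_t$ is only $\gtrsim N^{-\Ord(1)}$ --- the same delicate region already handled in \cref{Lemma:ScoreFunc-1}.
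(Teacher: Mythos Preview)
Your proposal is correct and takes essentially the same approach as the paper: restart the diffusion from $p_{t_*}$, invoke \cref{Lemma:BesovConstruction-2} for the $N'$-term basis decomposition of $p_{t_*}$, and rerun the construction of \cref{Lemma:ScoreFunc-1} with the restarted kernel. The paper phrases the restart as the time shift $t\leftarrow t-t_*$ rather than introducing $(\bar m_t,\bar\sigma_t)$, but this is only notational; one simplification you anticipate as an obstacle is in fact easier here than in \cref{Lemma:ScoreFunc-1}: since $p_{t_*}$ is globally smooth with no boundary, the paper dispenses with the two-region split and handles the entire domain $\{\|x\|_\infty\le m_t+\Ord(1)\sigma_t\sqrt{\log N}\}$ uniformly via the single threshold $p_t(x)\ge N^{-(2s+3)/d}$, absorbing the resulting $N^{(4s+6)/d}$ amplification directly into the $N^{-(3s+5)/d}$ basis error.
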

\begin{proof}
    The proof is essentially the same as that of \cref{Lemma:ScoreFunc-1}.
    Here, the slight differences are that (i) $p_t$, $\NetworkBesovD$, and $f_1$ are lower bounded by $N^{-(2s+3)/d}$, not by $N^{-(2s+1)/d}$, that (ii) $L^2(p_t)$ error should be bounded by $\frac{N^{-\frac{2(s+1)}{d}}}{\sigma_t^2}$, not by $\frac{N^{-\frac{2s}{d}}}{\sigma_t^2}$, and that (iii) $p_{t_*}$ is supported on $\R^d$, not on $[-1,1]^d$.
    Bounding the difference between 
   Observe that $t_*\geq T_1=N^{-\frac{2-\delta}{d}}$ holds, which is necessary to apply the argument of \cref{Lemma:ScoreFunc-1}.

    Let us reset the time $t\leftarrow t-t_*$ in the following proof and consider the diffusion process from $p_0$ (in the new definition), for simplicity.
    We have $t\geq t_*\gtrsim \mathrm{poly}(N^{-1})$ in the new definition.
According to \cref{Lemma:Decay1}, we have that
 \begin{align}
   \int_{\|x\|_\infty \geq m_t + \Ord(1)\sigma_t\sqrt{\log N}} p_t(x)\|s(x,t) - \nabla \log p_t(x)\|^2 \dx
  \lesssim \frac{t_*}{N^{(2s+2)/d}}\left(1+\|s(\cdot,t)\|_\infty^2\right)
,\label{eq:Appendix-Approx-5-1-1}
    \end{align}
    with a sifficiently large hidden constant in $\Ord(1)$.
    We limit the domain of $x$ into $\|x\|_\infty \leq m_t + \Ord(1)\sigma_t\sqrt{\log N}=\Ord(\sqrt{\log N})$.
    In this region, \cref{Lemma:Smooth1} yields $\|\nabla \log p_t(x)\|\lesssim \frac{\sqrt{\log N}}{\sigma_t}$, and therefore we can take $s$ such that $\|s(\cdot,t)\|_\infty\leq \frac{\sqrt{\log N}}{\sigma_t} \lesssim \frac{\sqrt{\log N}}{\sqrt{t_*} \land 1}$ holds.
    Then, \eqref{eq:Appendix-Approx-5-1-1} is bounded by $N^{-2(s+1)/d}$.
    Moreover, 
\begin{align}
        \int_{\|x\|_\infty \leq m_t + \Ord(1)\sigma_t\sqrt{\log N}} p_t(x) \mathbbm{1}[p_t(x) \leq N^{-(2s+3)/d}]\| s(x,t) -\nabla \log p_t(x)\|^2 \dx& \lesssim \frac{\eps}{\sigma_t^2} \log^\frac{d+2}{2} (N) +\eps \| s(x,t)\|
        \\ & \hspace{-80mm}\lesssim \left(\frac{N^{-(2s+3)/d}}{\sigma_t^2}\log^\frac{d+2}{2} (N) + \frac{N^{-(2s+3)/d}}{\sigma_t^2}\log N\right)\log^\frac{d}{2}N
        \lesssim N^{-2(s+1)/d}.
    \end{align}
    This means that we only need to consider $x$ with $p_t(x) \geq N^{-(2s+3)/d}$.

    Using the basis decomposition in the previous lemma, we let
      \begin{align}
      p_t(x)&=   \int \frac{1}{\sigma_t^{d}(2\pi)^\frac{d}{2}}p_0(y)\exp\left(-\frac{\|x-m_t y\|^2}{2\sigma_t^2}\right) \dy
         \fallingdotseq
          \int \frac{1}{\sigma_t^{d}(2\pi)^\frac{d}{2}}f_N(y)\exp\left(-\frac{\|x-m_t y\|^2}{2\sigma_t^2}\right) \dy
       \\&  = \sum_{i=1}^{N'} \alpha_i E_{k_i,j_i}^{(1)}(x,t) =:\tilde{f}_1(x,t),
       \\  f_1(x,t)&:=\tilde{f}_1(x,t)\lor N^{-(2s+3)/d},
    \end{align}
    and
     \begin{align}
      \sigma_t\nabla p_t(x)&= \int \frac{x-m_ty}{\sigma_t^{d+1}(2\pi)^\frac{d}{2}}p_0(y)\exp\left(-\frac{\|x-m_t y\|^2}{2\sigma_t^2}\right) \dy
         \fallingdotseq
          \int \frac{x-m_ty}{\sigma_t^{d+1}(2\pi)^\frac{d}{2}}f_N(y)\exp\left(-\frac{\|x-m_t y\|^2}{2\sigma_t^2}\right) \dy
       \\ & = \sum_{i=1}^{N'} \alpha_i E_{k_i,j_i}^{(2)}(x,t)=:f_2(x,t),
       \\ f_3(x,t)&:=\frac{f_2(x,t)}{f_1(x,t)}\mathbbm{1}\left[\left\|\frac{f_2(x,t)}{f_1(x,t)}\right\|\lesssim \frac{\log^\frac12 N}{\sigma_t}\right]
    \end{align} 
    (exactly the same definitions as that in \cref{Lemma:ScoreFunc-1}, except for $f_1(x,t):=\tilde{f}_1(x,t)\lor N^{-(2s+3)/d}$).
   Then we approximate each $ \alpha_i E_{k_i,j_i}^{(1)}(x,t)$ and $\alpha_i E_{k_i,j_i}^{(2)}(x,t)$ using \cref{Lemma:DiffusionBasis2} with $\eps \lesssim {N'}^{-1} \cdot N^{\frac{(3s+6)(2-\delta)}{\delta }}\cdot N^{-\frac{9s+10}{d}}$ and $C=m_t+\Ord(1)\sigma_t\sqrt{\log N}=\Ord(\sqrt{\log N})$ and aggregate them by \cref{Lemma:ParallelNetwork} to obtain $\NetworkBesovD(x,t)$ and $\NetworkBesovE(x,t)$, that approximate $f_1$ and $f_2$, respectively, and satisfy
  \begin{align}
      |f_1(x,t) - \NetworkBesovD(x,t)| \lesssim N^{-\frac{9s+3}{d}}, \quad \|f_2(x,t) - \NetworkBesovE(x,t)\| \lesssim N^{-\frac{9s+10}{d}}.
  \end{align}
  for all $x$ with $\|x\|_\infty = \Ord(\sqrt{\log N}).$
Now, we define $\NetworkBesovC$ as
    \begin{align}
        [\NetworkBesovF(x,t)]_i:= \NetworkClipA(\NetworkMultiB(\NetworkInvA(\NetworkClipA(\NetworkBesovD(x,t);N^{-(2s+3)/d},\Ord(1)))), [\NetworkBesovE(x,t)]_i);-\Ord(\log^\frac12 N),\Ord(\log^\frac12 N)),
    \end{align}
    where we let $\eps=N^{-(3s+4)/d}$ in \cref{Lemma:ApproxInv} for $\NetworkInvA$ and we let $\eps=N^{-(s+1)/d}$ and $C=N^{(2s+3)/d}$ for $\NetworkMultiB$ in \cref{Lemma:BaseNN02}.
      Finally, we let
  \begin{align}
        \NetworkScoreB(x,t) := \NetworkMultiB(\NetworkBesovF(x,t), \NetworkSigmaA(t)).
    \end{align}
    where $\eps=N^{-(s+1)/d}$ and $C\simeq \max\{\log^\frac12 N, \sigma_{\underline{T}}\}\lesssim \mathrm{poly}(N)$ in \cref{Lemma:BaseNN02} for $\NetworkMultiB$ and $\eps=N^{-(s+1)/d}/\mathrm{poly}(N)$ in \cref{Lemma:MandSigma} for $\NetworkSigmaA$.
    In summary, we can check that
    \begin{align}
       \left\| \NetworkScoreB(x,t) - \frac{f_3(x,t)}{\sigma_t}\right\| \lesssim N^{-(s+1)/d}
    \end{align}
    holds for all $x$ with $\|x\|_\infty \lesssim \sqrt{\log N}$ and therefore
    \begin{align}\label{eq:Appendix-Approx-5-1}
      \int_{\|x\|_\infty \lesssim \sqrt{\log N}} p_t(x)\left\| \NetworkScoreB(x,t) - \frac{f_3(x,t)}{\sigma_t}\right\|^2 \lesssim N^{-(s+1)/d}.
    \end{align}
    Moreover, the size of $\NetworkScoreB$ is bounded by
    \begin{align}\label{eq:Appendix-Approx-5-3}
         L = \Ord (\log^4 N),
       \ \|W\|_\infty = \Ord (N'\log^6N) \lesssim \Ord(N),
      \  S = \Ord (N'\log^8 N), \text{ and }
      \  B = \exp\left(\log^4 N\right) .
    \end{align}
    
     Now, we consider the difference between $f_3(x,t)/\sigma_t$ and $\nabla \log p_t(x)$.
    Its $L^2$ error in $\|x\|_\infty \leq m_t + \Ord(1)\sigma_t\sqrt{\log N}$ is bounded as previously, and we finally get
    \begin{align}
         &   \int_{ \|x\|_\infty\leq m_t + \Ord(1)\sigma_t\sqrt{\log N}} \mathbbm{1}[p_t(x)\geq N^{-\frac{2s+3}{d}} ]p_t(x) \left\| \frac{f_3(x,t)}{\sigma_t}-\frac{\nabla p_t(x)}{p_t(x)}\right\|^2\dx
    \\  & \lesssim  N^{\frac{4s+6}{d}} \int_{\|x\|_\infty\leq m_t + \Ord(1)\sigma_t\sqrt{\log N}}(|f_1(x,t)-p_t(x)|^2+\|f_2(x,t)-\sigma_t \nabla p_t(x)\|^2)\log N/\sigma_t^2\dx
    \\ & \lesssim N^{\frac{4s+6}{d}}   \log N/\sigma_t^2  \int_{\|x\|_\infty\leq m_t + \Ord(1)\sigma_t\sqrt{\log N}} \left|\int_y \frac{1}{\sigma_t^{d}(2\pi)^\frac{d}{2}}\exp\left(-\frac{\|x-m_t y\|^2}{2\sigma_t^2}\right)(p_0(y)-f_N(y))\dy\right|^2\dx
    \\ & \quad + N^{\frac{4s+6}{d}} \log N/\sigma_t^2  \int_{\|x\|_\infty\leq m_t + \Ord(1)\sigma_t\sqrt{\log N}} \left|\int_y \frac{x-m_ty}{\sigma_t^{d}(2\pi)^\frac{d}{2}}\exp\left(-\frac{\|x-m_t y\|^2}{2\sigma_t^2}\right)(p_0(y)-f_N(y))\dy\right|^2\dx
      \\ & \lesssim N^{\frac{4s+6}{d}}   \log N/\sigma_t^2  \int_{\|x\|_\infty\leq m_t + \Ord(1)\sigma_t\sqrt{\log N}}\int_y \frac{1}{\sigma_t^{d}(2\pi)^\frac{d}{2}}\exp\left(-\frac{\|x-m_t y\|^2}{2\sigma_t^2}\right)|p_0(y)-f_N(y)|^2\dy\dx
    \\ & \quad + N^{\frac{4s+6}{d}} \log N/\sigma_t^2  \int_{\|x\|_\infty\leq m_t + \Ord(1)\sigma_t\sqrt{\log N}} \int_y \frac{|x-m_ty|}{\sigma_t^{d}(2\pi)^\frac{d}{2}}\exp\left(-\frac{\|x-m_t y\|^2}{2\sigma_t^2}\right)|p_0(y)-f_N(y)|^2\dy\dx
 \\ & \lesssim N^{\frac{4s+6}{d}}   \log N/\sigma_t^2 \int_y\int_x \frac{1}{\sigma_t^{d}(2\pi)^\frac{d}{2}}\exp\left(-\frac{\|x-m_t y\|^2}{2\sigma_t^2}\right)|p_0(y)-f_N(y)|^2\dx\dy
    \\ & \quad + N^{\frac{4s+6}{d}} \log N/\sigma_t^2 \int_y\int_x \frac{|x-m_ty|}{\sigma_t^{d}(2\pi)^\frac{d}{2}}\exp\left(-\frac{\|x-m_t y\|^2}{2\sigma_t^2}\right)|p_0(y)-f_N(y)|^2\dx\dy
    \\ & \lesssim N^{\frac{4s+6}{d}}  \log N/\sigma_t^2\int_y |p_0(y)-f_N(y)|^2\dy
    \lesssim N^{\frac{4s+6}{d}}  \log N/\sigma_t^2 \cdot N^{-\frac{6s+10}{d}} \lesssim N^{-\frac{2(s+1)}{d}} /\sigma_t^2.
    \label{eq:Appendix-Approx-5-2}
    \end{align}
    Here we used the result of the previous lemma for the last inequality.
    Eqs. \eqref{eq:Appendix-Approx-5-1} and \eqref{eq:Appendix-Approx-5-3}, \eqref{eq:Appendix-Approx-5-2} yield the conclusion.
\end{proof}

Combining \cref{Lemma:ScoreFunc-1,Lemma:ScoreFunc-2}, where we use \cref{Lemma:ScoreFunc-1} for $T_1 \leq t \leq T_4$ and \cref{Lemma:ScoreFunc-2} for $T_3\leq t \leq T_5$, we immediately obtain \cref{theorem:Approximation}.
\begin{proof}[Proof of \cref{theorem:Approximation}]
Note that we can set $N'=N$ and $t_* = N^{-(2-\delta)/d}$ in \cref{Lemma:ScoreFunc-2}.
According to \cref{Lemma:ScoreFunc-1,Lemma:ScoreFunc-2}, we have two neural networks $\NetworkScoreA(x,t)$ and $\NetworkScoreB(x,t)$, that approximate the score function in $[T_1,T_4]$ and $[T_3,T_5]$.
Therefore, letting $\overline{t}_1 =T_4$ and $\underline{t}_2=T_3$ in \cref{Lemma:SwitchingFunc}, 
    $\NetworkScoreC(x,t)=\NetworkSwitchA(t;\underline{t}_2,\overline{t}_1)\NetworkScoreA(x,t) + \NetworkSwitchB(t;\underline{t}_2,\overline{t}_1)\NetworkScoreB(x,t)$
    approximates the approximation error in $L^2(p_t)$ with an additive error of $\frac{N^{-2s/d}\log N}{\sigma_t^2}$.
    Realization of the multiplications ($\NetworkSwitchA\NetworkScoreA$ and $\NetworkSwitchB\NetworkScoreB$ and aggregation $\NetworkSwitchA\NetworkScoreA+\NetworkSwitchB\NetworkScoreB$ is trivial.
    Finally, according to \cref{Lemma:ScoreFunc-1,Lemma:ScoreFunc-2}, the size of the network is bounded by 
    \begin{align}
        L = \Ord (\log^4 (N)),\| W\|_\infty = \Ord (N\log^6 N),S = \Ord (N\log^8 N),\quad \text{ and }  B = \exp(\Ord(\log^4 N )),
    \end{align}
which concludes the proof.
\end{proof}
We also prepare an integral form of the approximation theorems.
\begin{theorem}[Approximation theorem]\label{Lemma:Approximation-Appendix}
Suppose Assumptions \ref{assumption:InBesov}, \ref{assumption:SmoothBeta}, \ref{assumption:BoundarySmoothness} with $a_0=N^{-(1-\delta)/d}$, $N \gg 1$, $\underline{T}=\mathrm{poly}(N^{-1})$, and $\overline{T}\simeq \log N$.
 Then there exists a neural network $\NetworkScoreC\in \Phi(L,W,S,B)$ that satisfies
    \begin{align}
      \int_{t=\underline{T}}^{\overline{T}} \int_x p_t(x)  \|\NetworkScoreC(x,t) - \nabla \log p_t(x)\|^2 \dx\dt \lesssim  N^{-2s/d} \log N (\log (\overline{T}/\underline{T}) + (\overline{T}-\underline{T})). 
    \end{align}
    Here, $L, \|W\|_\infty, S, B$ is evaluated as
    \begin{align}
        L = \Ord (\log^4 N),
      \quad  \| W\|_\infty = \Ord (N),
    \quad    S = \Ord (N), 
    \quad  \text{and } B = \exp(\Ord(\log^4 N )).
    \end{align}

    Moreover, suppose $N'\geq t_*^{-d/2}N^{\delta/2}$, $t_*\geq N^{-(2-\delta)/d}$, and $\underline{T}\geq 2t_*$, 
    then there exists a neural network $\NetworkScoreC\in \Phi(L,W,S,B)$ that satisfies
    \begin{align}
    \int_{t=\underline{T}}^{\overline{T}} \int_x p_t(x)  \|\NetworkScoreC(x,t) - \nabla \log p_t(x)\|^2 \dx\dt  \lesssim N^{-\frac{2(s+1)}{d}}(\log (\overline{T}/\underline{T}) + (\overline{T}-\underline{T})).
    \end{align}
    Specifically, $L = \Ord (\log^4 (N)),\| W\|_\infty = \Ord (N),S = \Ord (N')$, and $ B = \exp(\Ord(\log^4 N ))$.
\end{theorem}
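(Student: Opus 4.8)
\emph{Proof plan.} The plan is to obtain this integrated statement as a direct corollary of the pointwise-in-$t$ approximation bounds already established: \cref{theorem:Approximation} for the first assertion and \cref{Lemma:ScoreFunc-2} for the second. First I would take, for the first part, the network $\NetworkScoreC$ produced by \cref{theorem:Approximation}; crucially this is a \emph{single} network (not a $t$-dependent family), and its error bound $\int_x p_t(x)\|\NetworkScoreC(x,t)-\nabla\log p_t(x)\|^2\dx\lesssim N^{-2s/d}\log N/\sigma_t^2$ holds simultaneously for every $t\in[\underline{T},\overline{T}]$, with the width, depth, sparsity and norm bounds recorded there. Integrating this inequality in $t$ over $[\underline{T},\overline{T}]$ and factoring out $N^{-2s/d}\log N$ reduces the whole problem to estimating the scalar integral $\int_{\underline{T}}^{\overline{T}}\sigma_t^{-2}\dt$.

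The second step is that elementary integral. Using $\sigma_t\simeq\sqrt{t}\land 1$ (the remark right after \eqref{eq:Ornstein-Uhlenbeck}), hence $\sigma_t^2\gtrsim t\land 1$, and the fact that $\overline{T}\simeq\log N\ge 1$ while $\underline{T}=\mathrm{poly}(N^{-1})\le 1$, I would split the range at $t=1$: on $[\underline{T},1]$ one has $\sigma_t^{-2}\lesssim t^{-1}$, so that contribution is $\lesssim\log(1/\underline{T})\le\log(\overline{T}/\underline{T})$, and on $[1,\overline{T}]$ one has $\sigma_t^{-2}\lesssim 1$, so that contribution is $\lesssim\overline{T}-1\le\overline{T}-\underline{T}$. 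This yields $\int_{\underline{T}}^{\overline{T}}\sigma_t^{-2}\dt\lesssim\log(\overline{T}/\underline{T})+(\overline{T}-\underline{T})$, which is exactly the first claimed bound; the size estimates are inherited verbatim from \cref{theorem:Approximation}.

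For the second assertion I would run the identical two steps with $\NetworkScoreB$ from \cref{Lemma:ScoreFunc-2}: there the bound $\int_x p_t(x)\|\NetworkScoreB(x,t)-s(x,t)\|^2\dx\lesssim N^{-2(s+1)/d}/\sigma_t^2$ holds for all $t\in[2t_*,\overline{T}]$, and the hypothesis $\underline{T}\ge 2t_*$ ensures $[\underline{T},\overline{T}]\subseteq[2t_*,\overline{T}]$, so the same integration and the same estimate on $\int_{\underline{T}}^{\overline{T}}\sigma_t^{-2}\dt$ apply, with the network sizes from \cref{Lemma:ScoreFunc-2}. There is no genuine obstacle here: the statement is a routine consequence of the pointwise results. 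The only points worth care are that the network of \cref{theorem:Approximation} must be used as one fixed object rather than re-chosen at each $t$, and that the blow-up of $\sigma_t^{-2}$ as $t\to0$ is only logarithmic in $1/\underline{T}$, which is precisely what produces the $\log(\overline{T}/\underline{T})$ term (and, since $\underline{T}=\mathrm{poly}(N^{-1})$, this term is merely $\Ord(\log N)$).
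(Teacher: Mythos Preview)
Your proposal is correct and follows essentially the same approach as the paper: take the single network from \cref{theorem:Approximation} (respectively \cref{Lemma:ScoreFunc-2}) with its pointwise-in-$t$ bound $\lesssim N^{-2s/d}\log N/\sigma_t^2$ (respectively $\lesssim N^{-2(s+1)/d}/\sigma_t^2$), then integrate using $\sigma_t^{-2}\lesssim 1\lor t^{-1}$ to obtain the $\log(\overline{T}/\underline{T})+(\overline{T}-\underline{T})$ factor. Your remarks that the network is a single fixed object and that $\underline{T}\ge 2t_*$ places $[\underline{T},\overline{T}]$ inside the validity range of \cref{Lemma:ScoreFunc-2} are exactly the points the paper relies on.
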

\begin{proof}
    We only show the first part; the second part comes from \cref{Lemma:ScoreFunc-2} in the same way.
    According to \cref{theorem:Approximation}, there exists a network $\NetworkScoreC$ with the desired size that satisfies
     \begin{align}
    \int_x p_t(x)  \|\NetworkScoreC(x,t) - s(x,t)\|^2 \dx \lesssim \frac{N^{-\frac{2s}{d}}\log(N)}{\sigma_t^2}.
    \end{align}
    Note that $\sigma_t \gtrsim t \land 1$.
    Therefore, 
    \begin{align}
        \int_{t=\underline{T}}^{\overline{T}}\frac{N^{-\frac{2s}{d}}\log(N)}{\sigma_t^2}\dt
        \lesssim 
         \int_{t=\underline{T}}^{\overline{T}}N^{-\frac{2s}{d}}\log(N) (1\lor 1/t)\dt
         \leq N^{-\frac{2s}{d}}\log(N)(\log (\overline{T}/\underline{T}) + (\overline{T}-\underline{T})), 
    \end{align}
    which gives the first part of the theorem.
\end{proof}

\section{Generalization of the score network}\label{section:Appendix-Generalization}
Now we consider the generalization error.
As in \cref{section:Generalization}, we first consider the sup-norm of $\ell$ and evaluate the covering number.
\subsection{Bounding sup-norm}\label{subsection:Appendix-Generalization-Prepare-1}
\begin{lemma}\label{lemma:Appendix-Generalization-Prepare-1}
     Suppose that $\|s(\cdot,t)\|_\infty = \Ord(\sigma_t^{-1}\log^\frac12 n)$, $\underline{T}=\mathrm{poly}(n^{-1})$ and $\overline{T}\simeq \log n$.
     Then, we have that
     \begin{align}
         \int_{t=\underline{T}}^{\overline{T}}\int_{x_t}\|s(x_t,t)-\nabla \log p_t(x_t|x_0)\|^2 p_t(x_t|x_0)\dx_t\dt\lesssim \log^2 n.
     \end{align}
\end{lemma}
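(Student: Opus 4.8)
The plan is to exploit the explicit Gaussian form of the transition kernel. Since $p_t(x_t\mid x_0)$ is the density of $\mathcal{N}(m_t x_0,\sigma_t^2 I_d)$, we have $\nabla\log p_t(x_t\mid x_0)=-(x_t-m_t x_0)/\sigma_t^2$, so by $\|a+b\|^2\le 2\|a\|^2+2\|b\|^2$ the integrand splits into $2\|s(x_t,t)\|^2$ and $2\sigma_t^{-4}\|x_t-m_t x_0\|^2$. I would bound each of the two resulting time integrals separately, using the substitution that $\sigma_t\simeq \sqrt{t}\land 1$ from the preliminaries.

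For the first piece, the hypothesis $\|s(\cdot,t)\|_\infty=\Ord(\sigma_t^{-1}\log^{\frac12}n)$ pulls the sup-norm out of the expectation over $x_t$, leaving $\int_{\underline T}^{\overline T}\Ord(\sigma_t^{-2}\log n)\,\dt$. For the second piece, the Gaussian second-moment identity $\mathbb{E}_{x_t\sim p_t(\cdot\mid x_0)}\|x_t-m_t x_0\|^2=d\,\sigma_t^2$ reduces the contribution to $\int_{\underline T}^{\overline T}\Ord(d\,\sigma_t^{-2})\,\dt$. Thus both pieces reduce to estimating $I:=\int_{\underline T}^{\overline T}\sigma_t^{-2}\,\dt$. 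Using $\sigma_t^{-2}\lesssim 1/t$ on $[\underline T,1]$ and $\sigma_t^{-2}\lesssim 1$ on $[1,\overline T]$, we get $I\lesssim \log(1/\underline T)+\overline T$; since $\underline T=\mathrm{poly}(n^{-1})$ and $\overline T\simeq\log n$, this is $\Ord(\log n)$. Multiplying the first piece's $\log n$ factor by $I$ yields $\Ord(\log^2 n)$, while the second piece contributes only $\Ord(d\log n)=\Ord(\log n)$, so the total is $\lesssim \log^2 n$ as claimed.

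There is no substantial obstacle here; the computation is routine. The only points that require care are that $\|s(\cdot,t)\|_\infty$ is a genuine $L^\infty$ bound so it comes out of the $x_t$-expectation without any further argument, and that $\log(1/\underline T)=\Ord(\log n)$ precisely because $\underline T$ is only polynomially small in $n$ (this is where the assumption $\underline T=\mathrm{poly}(n^{-1})$ is used, together with $\overline T\simeq\log n$ to control the tail of the time integral).
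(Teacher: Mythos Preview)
Your proposal is correct and follows essentially the same approach as the paper: split via $\|a-b\|^2\le 2\|a\|^2+2\|b\|^2$, bound the $s$-term by the sup-norm hypothesis and the score term by the Gaussian second moment, then reduce both to $\int_{\underline T}^{\overline T}\sigma_t^{-2}\,\dt\lesssim \log(1/\underline T)+\overline T\lesssim \log n$. The paper's proof is slightly terser but the structure and all the key estimates are identical.
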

\begin{proof}
   The evaluation is mostly straightforward.
    \begin{align}
     &    \int_{t=\underline{T}}^{\overline{T}}\int_{x_t}\|s(x_t,t)-\nabla \log p_t(x_t|x_0)\|^2 p_t(x_t|x_0)\dx_t\dt
      \\   & \leq 2 \int_{t=\underline{T}}^{\overline{T}}\int_{x_t}\|s(x_t,t)\|^2
        p_t(x_t|x_0) \dx\dt
       + 2 \int_{t=\underline{T}}^{\overline{T}}\int_{x_t}\|\log p_t(x_t|x_0)\|^2
         p_t(x_t|x_0) \dx_t\dt
              \\   & \lesssim
               \int_{t=\underline{T}}^{\overline{T}} \frac{\log n}{\sigma_t^2}\dt 
               +\int_{t=\underline{T}}^{\overline{T}}\frac{1}{\sigma_t^2}\dt
                 \\   & \lesssim   \int_{t=\underline{T}}^{\overline{T}} \frac{\log n}{t\land 1}\dt 
                 \leq (\log n )\cdot (\log \underline{T}^{-1} + \overline{T}) \lesssim \log^2 n
    \end{align}
    For the evaluation of $\int_{x_t}\|\log p_t(x_t|x_0)\|^2
         p_t(x_t|x_0) \dx_t$, we used the fact that $p_t(x_t|x_0)$ is the density function of $\mathcal{N}(m_tx_0,\sigma_t^2)$.
         Also, we used that $\underline{T}=\mathrm{poly}(n^{-1})$ and $\overline{T}\simeq \log n$ for the last inequality.
\end{proof}
\subsection{Covering number evaluation}\label{subsection:Appendix-CoveringNumber}
\begin{lemma}[Covering number of $\mathcal{L}$]
For a neural network $s\cdot \R^{d}\times \R\to \R^d$, we define $\ell\cdot \R^d\to \R$ as
  \begin{align}\ell_s(x)=\int_{t=\underline{T}}^{\overline{T}}\int_{x_t}\|s(x_t,t)-\nabla \log p_t(x_t|x)\|^2 p_t(x_t|x)\dx\dt.
\end{align}
    For the hypothesis network class $\mathcal{S}\in \Phi(L,W,S,B)$, we define a function class $\mathcal{L}=\{\ell_s|\ s\in \mathcal{S}\}$.
    If the corresponding $s$ is obvious for some $\ell_s$, we sometimes abbreviate $\ell_s$ as $\ell$.
    
    Assume that $s(x,t)$ is bounded by $\|\|s(\cdot,t)\|_2\|_{L^\infty} = \Ord(\sigma_t^{-1}\log^\frac12 n)$ uniformly over all $s\in \mathcal{S}$ and $C\geq 1$.
    Then the covering number of $\mathcal{S}$ is evaluated by
    \begin{align}\label{eq:Appendix-Generalization-CoveringNumber-1}
         \log \mathcal{N}(\mathcal{S}, \|\|\cdot\|_2\|_{L^\infty ([-C,C]^{d+1})}, \delta) 
         \lesssim 2SL\log(\delta^{-1}L\|W\|_\infty(B\lor 1)C),
    \end{align}
    and based on this, the covering number of $\mathcal{L}$ is evaluated by
    \begin{align}\label{eq:Appendix-Generalization-CoveringNumber-2}
        \log \mathcal{N}(\mathcal{L}, \|\cdot\|_{L^\infty ([-1,1]^d)}, \delta) \lesssim SL\log(\delta^{-1}L\|W\|_\infty(B\lor 1) n)
    \end{align}
    when $\delta^{-1}, \underline{T}^{-1}, \overline{T}, N = {\rm poly}(n)$.
\end{lemma}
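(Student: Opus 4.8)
The plan is to reduce the covering number of $\mathcal{L}$ to that of the raw network class $\mathcal{S}$, via the observation that $s\mapsto\ell_s$ is Lipschitz with a $\mathrm{poly}(n)$ constant in the relevant sup-norms. So I would first establish \eqref{eq:Appendix-Generalization-CoveringNumber-1}, and then deduce \eqref{eq:Appendix-Generalization-CoveringNumber-2} from it.

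For \eqref{eq:Appendix-Generalization-CoveringNumber-1}, this is exactly the classical parameter-discretization estimate for sparse ReLU networks (Lemma~3 of \citet{suzuki2018adaptivity}): over a cube $[-C,C]^{d+1}$ a network in $\Phi(L,W,S,B)$ is jointly Lipschitz in its weights, with a Lipschitz constant whose logarithm is $\Ord\!\big(L\log(L\|W\|_\infty(B\lor1)C)\big)$ (each layer contributes a factor $\Ord(\|W\|_\infty B)$ and inputs are bounded by $C$); putting an $\varepsilon$-grid on each of the $S$ nonzero parameters with $\varepsilon\asymp\delta/(\text{Lipschitz constant})$, and absorbing the $\Ord(SL\log(L\|W\|_\infty))$ term coming from the choice of which $S$ of the at most $L\|W\|_\infty^2$ possible parameters are active, yields the claim. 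I would simply cite this.

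For \eqref{eq:Appendix-Generalization-CoveringNumber-2}, fix $x\in[-1,1]^d$ and two networks $s_1,s_2\in\mathcal{S}$, set $C=\Theta(\log n)$ (large enough that $[\underline{T},\overline{T}]\subseteq[-C,C]$ and, since $\|m_tx\|_\infty\le1$, the ball $\|x_t-m_tx\|\le C$ carries all but a super-polynomially small fraction of the mass of $p_t(\cdot|x)$), and suppose $\|\,\|s_1-s_2\|_2\,\|_{L^\infty([-C,C]^{d+1})}\le\delta'$. Using $|\,\|a\|^2-\|b\|^2\,|\le\|a-b\|(\|a\|+\|b\|)$ with $a=s_1-\nabla\log p_t(\cdot|x)$ and $b=s_2-\nabla\log p_t(\cdot|x)$,
\[
|\ell_{s_1}(x)-\ell_{s_2}(x)|\le\int_{\underline{T}}^{\overline{T}}\!\int\|s_1-s_2\|\Big(\|s_1\|+\|s_2\|+\tfrac{2\|x_t-m_tx\|}{\sigma_t^2}\Big)p_t(x_t|x)\,\dx_t\,\dt,
\]
and I would split the $x_t$-integral at $\|x_t-m_tx\|=C$. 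On the inner region $\|s_1-s_2\|\le\delta'$ while $\|s_i\|_\infty=\tilde{\Ord}(\sigma_t^{-1})$ and $\|x_t-m_tx\|/\sigma_t^2\le C/\sigma_t^2$ are both $\le\mathrm{poly}(n)$, since $\sigma_t\gtrsim\sqrt{\underline{T}}=\mathrm{poly}(n^{-1})$; integrating a $p_t(\cdot|x)$-probability and then over $t\in[\underline{T},\overline{T}]$ this part is $\le\delta'\,\mathrm{poly}(n)$. On the outer region the integrand is $\le\mathrm{poly}(n)(1+\|x_t-m_tx\|^2)$ against a Gaussian of variance $\sigma_t^2\le1$, hence $\le\mathrm{poly}(n)\,e^{-\Omega(C^2/\sigma_t^2)}\le\mathrm{poly}(n)\,e^{-\Omega(C^2)}=n^{-\omega(1)}$ uniformly in $t$. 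Thus $\|\ell_{s_1}-\ell_{s_2}\|_{L^\infty([-1,1]^d)}\le\delta'\,\mathrm{poly}(n)+n^{-\omega(1)}$, so taking $\delta'=\delta/\mathrm{poly}(n)$ turns any $\delta'$-cover of $\mathcal{S}$ over $[-C,C]^{d+1}$ into a $\delta$-cover of $\mathcal{L}$; substituting into \eqref{eq:Appendix-Generalization-CoveringNumber-1} with $C=\Theta(\log n)$ and using $\log(\mathrm{poly}(n)\cdot X)\lesssim\log(nX)$ for $X\ge1$ gives \eqref{eq:Appendix-Generalization-CoveringNumber-2}.

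The routine neural-network covering estimate and all the $\mathrm{poly}(n)$ bookkeeping are painless; the one place requiring attention is the term $\nabla\log p_t(x_t|x)=-(x_t-m_tx)/\sigma_t^2$, which is unbounded both as $\|x_t\|\to\infty$ and as $t\to0$. The argument only goes through because one can simultaneously use $\sigma_{\underline{T}}^{-1}=\mathrm{poly}(n)$ (from $\underline{T}=\mathrm{poly}(n^{-1})$) to keep the pointwise bound polynomial on the bulk, and $\sigma_t\le1$ to make the Gaussian tail outside radius $\Theta(\log n)$ super-polynomially small uniformly over $t\in[\underline{T},\overline{T}]$.
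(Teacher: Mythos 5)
Your proposal is correct and follows essentially the same route as the paper: cite the parameter-discretization covering bound of \citet{suzuki2018adaptivity} for \eqref{eq:Appendix-Generalization-CoveringNumber-1} (rescaling $\delta$ by the input range $C$), then show $s\mapsto\ell_s$ is $\mathrm{poly}(n)$-Lipschitz by truncating the Gaussian tail of $p_t(\cdot|x)$ at radius $\tilde{\Ord}(1)$ and using $\sigma_t\gtrsim\sqrt{\underline{T}}=\mathrm{poly}(n^{-1})$ together with the sup-norm bound on $s$ via the identity $|\|a\|^2-\|b\|^2|\le\|a-b\|(\|a\|+\|b\|)$. The only differences are cosmetic (you truncate at $\Theta(\log n)$ and cover $[-C,C]^{d+1}$ with $C=\Theta(\log n)$, whereas the paper truncates at $\Ord(\sqrt{\log n})$ and covers with $C=\mathrm{poly}(n)$), and both choices only enter logarithmically in the final bound.
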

\begin{proof}
The first bound \eqref{eq:Appendix-Generalization-CoveringNumber-1} is directly obtained from \citet{suzuki2018adaptivity}, with a slight modification of the input region.
By following their proof, we can see that their $\delta$-net for the $L^\infty([0,1]^d)$-norm serves as the $C\delta$-net for the $L^\infty([-C,C]^d)$-norm. Therefore, we simply set $\delta\leftarrow C^{-1}\delta$ in their bound to obtain \eqref{eq:Appendix-Generalization-CoveringNumber-1}.

    We next consider \eqref{eq:Appendix-Generalization-CoveringNumber-2}. First we clip the integral interval in the definition of $\ell$.
    \begin{align}&\left|\ell_s(x)-\int_{t=\underline{T}}^{\overline{T}}\int_{\|x_t\|_\infty\leq \Ord(\sqrt{\log n})}\|s(x_t,t)-\nabla \log p_t(x_t|x)\|^2 p_t(x_t|x)\dx_t\dt\right|
    \\
   &\leq  \int_{t=\underline{T}}^{\overline{T}}
    \int_{\|x_t\|_\infty\geq \Ord(\sqrt{\log n})}\|s(x_t,t)-\nabla \log p_t(x_t|x)\|^2 p_t(x_t|x)\dx_t\dt
    \dt
    \\ & \leq
  \|\| s(\cdot,\cdot)\|_2\|_{L^\infty}^2 \int_{t=\underline{T}}^{\overline{T}} \int_{\|x_t\|_\infty\geq \Ord(\sqrt{\log n})}p_t(x_t|x)\dx_t\dt
  +
  \int_{t=\underline{T}}^{\overline{T}} \int_{\|x_t\|_\infty\geq \Ord(\sqrt{\log n})}\|\nabla \log p_t(x_t|x)\|^2p_t(x_t|x)\dx_t\dt.
  \label{eq:Appendix-Generalization-CoveringNumber-3}
\end{align}
Because $p_t(x_t|x)$ is the density function of $\mathcal{N}(m_tx|\sigma_t^2)$, we can show that $\int_{\|x_t\|_\infty\geq \Ord(\sqrt{\log n})}p_t(x_t|x)\dx_t$ and $\int_{\|x_t\|_\infty\geq \Ord(\sqrt{\log n})}\|\nabla \log p_t(x_t|x)\|^2p_t(x_t|x)\dx_t$ are bounded by $\frac{\delta}{3\overline{T}(\|\| s(\cdot,\cdot)\|_2\|_{L^\infty}^2\lor 1)}$ if $\delta^{-1}, \underline{T}^{-1}, \overline{T}, N = {\rm poly}(n)$ and the hidden constant in $\Ord(\sqrt{\log n})$ is sufficiently large (see \cref{Lemma:GaussianBound}).
Therefore, \eqref{eq:Appendix-Generalization-CoveringNumber-3} is bounded by 
\begin{align}
    \|\| s(\cdot,\cdot)\|_2\|_{L^\infty}(\overline{T}-\underline{T})\cdot \frac{\delta}{3\overline{T}\|\| s(\cdot,\cdot)\|_2\|_{L^\infty}} + (\overline{T}-\underline{T})\cdot \frac{\delta}{3\overline{T}} \leq \frac23 \delta.
    \label{eq:Appendix-Generalization-CoveringNumber-6}
\end{align}
    We then take $C={\rm poly}(n)\gtrsim\sqrt{\log n}$ and construct $\frac{\delta}{3}$-net for a set of
    \begin{align}\label{eq:Appendix-Generalization-CoveringNumber-7}
       \ell'(x):= \int_{t=\underline{T}}^{\overline{T}}\int_{\|x_t\|_\infty\leq C}\|s(x_t,t)-\nabla \log p_t(x_t|x)\|^2 p_t(x_t|x)\dx_t\dt
    \end{align}
    over all $s\in \mathcal{S}$.
    For this, we take $\frac{\delta}{n^{\Ord(1)}}$-net of $\mathcal{S}$ with the $L^\infty ([-C,C]^{d+1})$-norm.
    According to \eqref{eq:Appendix-Generalization-CoveringNumber-1}, the covering number is evaluated as
       \begin{align}
         \log \mathcal{N}\left(\mathcal{S}, \|\|\cdot\|_2\|_{L^\infty ([-C,C]^{d+1})}, \frac{\delta}{n^{\Ord(1)}}\right) 
         \lesssim 2SL\log(\delta^{-1}L\|W\|_\infty(B\lor 1)n).
    \end{align}
    For different $s$ and $s'$, because $\|\nabla \log p_t(x_t|x)\|\lesssim \frac{C}{\sigma_t^2}$ for $\|x_t\|_\infty \leq C$, 
    we have that
    \begin{align} \label{eq:Appendix-Generalization-CoveringNumber-4}
     & |  \|s(x_t,t)-\nabla \log p_t(x_t|x)\|^2-\|s'(x_t,t)-\nabla \log p_t(x_t|x)\|^2|
    \\  & \leq
     (\|s(x_t,t)-\nabla \log p_t(x_t|x)\|+\|s'(x_t,t)-\nabla \log p_t(x_t|x)\|^2)
     |
     \|s(x_t,t)-\nabla \log p_t(x_t|x)\|-\|s'(x_t,t)-\nabla \log p_t(x_t|x)\||
     \\ & \leq
     (\|\| s(\cdot,\cdot)\|_2\|_{L^\infty}+\|\| s'(\cdot,\cdot)\|_2\|_{L^\infty} + 2C/\sigma_t^2)\cdot \frac{\delta}{n^{\Ord(1)}}.\label{eq:Appendix-Generalization-CoveringNumber-5}
    \end{align}
    By taking the hidden constant in $\frac{\delta}{n^{\Ord(1)}}$ sufficiently large, this is further bounded by $\frac{\delta}{3\overline{T}(2C)^d}$ when $C, \underline{T}^{-1}, \overline{T} = {\rm poly}(n)$.
    Integrating \eqref{eq:Appendix-Generalization-CoveringNumber-4} and \eqref{eq:Appendix-Generalization-CoveringNumber-5} over $ \int_{t=\underline{T}}^{\overline{T}}\int_{\|x_t\|_\infty\leq C}\dx_t\dt$ yields that this $\frac{\delta}{n^{\Ord(1)}}$-net of $\mathcal{S}$ actually gives the $\frac{\delta}{3}$-net for the set of \eqref{eq:Appendix-Generalization-CoveringNumber-7}; finally, we have obtained the $\delta$-net for $\mathcal{L}$ together with \eqref{eq:Appendix-Generalization-CoveringNumber-6}.
\end{proof}

\subsection{Generalization error bound on the score matching loss}
This subsection gives the complete proof of \cref{Theorem:Generalization}.
First, the following relationship is useful.
This shows the equivalence of explicit score matching and denoising score matching, and can be used to show that the minimizer of the empirical denoising score matching also approximately minimizes the explicit score matching loss.
\begin{lemma}[Equivalence of explicit score matching and denoising score matching (\citet{vincent2011connection})]\label{Lemma:VincentEquivalence}
    The following equality holds for all $s(x_t,t)$ and $t>0$:
       \begin{align}
         \int_{x_t} \|s(x_t,t) - \nabla \log p_t(x_t)\|^2 p_t(x_t) \dx_t
=  \int_{x_0}\int_{x_t}  \|s(x_t,t)-\nabla \log p_t(x_t|x_0) \|^2p_t(x_t|x_0)p_0(x_0) \dx_0\dx_0 + C
        ,
    \end{align} 
    where $C = \int_{x_t}\|\nabla \log p_t(x_t)\|^2 p_t(x_t) \dx_t - \int_{x_0}\int_{x_t}  \|\nabla \log p_t(x_t|x_0) \|^2p_t(x_t|x_0)p_0(x_0)\dx_t\dx_0$.
\end{lemma}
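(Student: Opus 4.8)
The plan is to prove the identity by expanding the squared norms on both sides and matching the resulting pieces, in the spirit of \citet{vincent2011connection}. Write $\mathrm{ESM}(s)=\int_{x_t}\|s(x_t,t)-\nabla\log p_t(x_t)\|^2 p_t(x_t)\dx_t$ for the left-hand side and $\mathrm{DSM}(s)=\int_{x_0}\int_{x_t}\|s(x_t,t)-\nabla\log p_t(x_t|x_0)\|^2 p_t(x_t|x_0)p_0(x_0)\dx_t\dx_0$ for the first term on the right. Expanding each integrand in the form $\|s\|^2-2\langle s,\nabla\log p\rangle+\|\nabla\log p\|^2$ splits each of $\mathrm{ESM}$ and $\mathrm{DSM}$ into a term quadratic in $s$, a cross term linear in $s$, and a term not involving $s$; it then suffices to check that the quadratic and cross terms agree and that the difference of the remaining terms is exactly the stated constant $C$.

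First I would note that the quadratic terms coincide: since $p_t(x_t)=\int p_t(x_t|x_0)p_0(x_0)\dx_0$, Fubini gives $\int_{x_0}\int_{x_t}\|s(x_t,t)\|^2 p_t(x_t|x_0)p_0(x_0)\dx_t\dx_0=\int_{x_t}\|s(x_t,t)\|^2 p_t(x_t)\dx_t$. The crucial step is the cross term: using $\nabla\log p_t(x_t)=\nabla p_t(x_t)/p_t(x_t)$ and differentiating the marginalization identity under the integral sign gives $\nabla p_t(x_t)=\int\nabla_{x_t}p_t(x_t|x_0)p_0(x_0)\dx_0=\int\big(\nabla_{x_t}\log p_t(x_t|x_0)\big)p_t(x_t|x_0)p_0(x_0)\dx_0$. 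Substituting this into the $\mathrm{ESM}$ cross term and applying Fubini would yield
\begin{align*}
\int_{x_t}\langle s(x_t,t),\nabla\log p_t(x_t)\rangle p_t(x_t)\dx_t
&=\int_{x_t}\Big\langle s(x_t,t),\int\nabla_{x_t}\log p_t(x_t|x_0)\,p_t(x_t|x_0)p_0(x_0)\dx_0\Big\rangle\dx_t\\
&=\int_{x_0}\int_{x_t}\langle s(x_t,t),\nabla_{x_t}\log p_t(x_t|x_0)\rangle p_t(x_t|x_0)p_0(x_0)\dx_t\dx_0,
\end{align*}
which is precisely the $\mathrm{DSM}$ cross term. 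Hence $\mathrm{ESM}(s)-\mathrm{DSM}(s)=\int_{x_t}\|\nabla\log p_t(x_t)\|^2 p_t(x_t)\dx_t-\int_{x_0}\int_{x_t}\|\nabla\log p_t(x_t|x_0)\|^2 p_t(x_t|x_0)p_0(x_0)\dx_t\dx_0$, which is independent of $s$ and equals the $C$ in the statement.

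The only real obstacle is regularity: justifying the interchange of $\nabla_{x_t}$ with $\int\cdots\dx_0$ and the applications of Fubini. This is harmless here because $p_t(x_t|x_0)$ is the density of $\mathcal{N}(m_t x_0,\sigma_t^2 I_d)$, which for $t\ge\underline{T}>0$ is $C^\infty$ in $x_t$ with Gaussian decay uniform over the compact support of $p_0$, and $p_0$ is bounded with $\mathrm{supp}(p_0)\subseteq[-1,1]^d$ by \cref{assumption:InBesov}; so a dominating function for the differentiation step and absolute integrability for Fubini are both available. If $\mathrm{ESM}(s)=+\infty$ for some measurable $s$, then by the same expansion $\mathrm{DSM}(s)=+\infty$ as well and the equality holds in the extended reals, so no assumption on $s$ beyond measurability is needed.
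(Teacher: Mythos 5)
Your proposal is correct and follows essentially the same route as the paper's proof: expand the square, use $\nabla\log p_t=\nabla p_t/p_t$ together with the marginalization $p_t(x_t)=\int p_t(x_t|x_0)p_0(x_0)\dx_0$ and differentiation under the integral to match the cross terms, and collect the $s$-independent remainder into $C$. Your added remarks on dominating functions and the extended-reals case are sound but not needed beyond what the paper already assumes.
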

\begin{proof}
    The proof follows \citet{vincent2011connection}.
    \begin{align}
        & \int_{x_t} \|s(x_t,t) - \nabla \log p_t(x_t)\|^2 p_t(x_t) \dx_t
    \\ &  = - 2 \int_{x_t} p_t(x_t) s(x_t,t)^\top \nabla \log p_t(x_t) \dx +\int_{x_t} \|s(x_t,t)\|^2 p_t(x_t) \dx_t+ \int_{x_t}\|\nabla \log p_t(x_t)\|^2 p_t(x_t) \dx
      \\&  =   - 2 \int_{x_t} s(x_t,t)^\top \nabla p_t(x_t) \dx_t +\int_{x_t} \|s(x_t,t)\|^2 p_t(x_t) \dx_t+ \int_{x_t}\|\nabla \log p_t(x_t)\|^2 p_t(x_t) \dx 
        \\&  =   - 2 \int_{x_t} s(x_t,t)^\top \nabla \left(\int_{x_0} p_t(x_t|x_0) p_0(x_0) \dx_0\right) \dx_t +\int_{x_t} \|s(x_t,t)\|^2 p_t(x_t) \dx_t+ \int_{x_t}\|\nabla \log p_t({x_t})\|^2 p_t({x_t}) \dx_t   
        \\&  =   - 2 \int_{x_t} s(x_t,t)^\top \left(\int_{x_0} p_0(x_0) \nabla p_t(x_t|x_0) \dx_0\right) \dx_t +\int_{x_t} \|s(x_t,t)\|^2 p_t(x_t) \dx_t+ \int_{x_t}\|\nabla \log p_t({x_t})\|^2 p_t({x_t}) \dx_t     
        \\&  =   - 2 \int_{x_t}p_t(x_t|y)p_0(x_0) s(x_t,t)^\top \left(\int_{x_0}  \nabla \log p_t(x_t|x_0) \dx_0\right) \dx_t +\int_{x_t} \|s(x_t,t)\|^2 p_t(x_t) \dx_t+ \int_{x_t}\|\nabla \log p_t({x_t})\|^2 p_t({x_t}) \dx_t       
        \\&  =   - 2 \int_{x_0} \int_{x_t} p_t(x_t|x_0)p_0(x_0)  s(x_t,t)^\top  \nabla \log p_t(x_t|x_0) \dx_t\dx_0 +\int_{x_0}\int_{x_t}p_t(x_t|x_0)p_0(x_0) \|s(x_t,t)\|^2 \dx_t\dx_0
        \\ &\hspace{105mm}+ \int_{x_t}\|\nabla \log p_t(x_t)\|^2 p_t(x_t) \dx_t 
        \\&  =  \int_{x_0}\int_{x_t} p_t(x_t|x_0)p_0(x_0) \|s(x_t,t)-\nabla \log p_t(x_t|x_0) \|^2 \dx_t\dx_0+ \int_{x_t}\|\nabla \log p_t(x_t)\|^2 p_t(x_t) \dx_t \\ & \hspace{105mm}- \int_{x_0}\int_{x_t} p_t(x_t|x_0)p_0(x_0) \|\nabla \log p_t(x_t|x_0) \|^2\dx_t\dx_0
        ,
    \end{align}
    where we used $\nabla \log p_t(x_t) = (\nabla p_t(x_t)) / p_t(x_t)$ for the second, $p_t(x_t) = \int_{x_0} p_t(x_t|x_0) p_0(x_0) \dx_0$ for the third, $ \nabla \log p_t(x_t|x_0) = (\nabla p_t(x_t|x_0) )/p_t(x_t|x_0) $ for the fifth equalities.

\end{proof}
Now, we evaluate the generalization error and the following theorem is a formal version of \cref{Theorem:Generalization}. 
\begin{theorem}[Generalization error bound based on the covering number]\label{Theorem:ExtendedSchmidt}
Let $\hat{s}$ be the minimizer of 
\begin{align}\label{eq:TheoremExtendedSchmidt-1}
    \frac1n \sum_{i=1}^n \int_{t=\underline{T}}^{\overline{T}} \int_x\|s(x,t) - \nabla \log p_t(x|x_i)\|_2^2 p_t(x|x_{0,i}) \dx \dt,
\end{align}
taking values in $\mathcal{S}\subset L^2(\R^d \times [\tA,\tB])$.
For each $s\in \mathcal{S}$, let $\ell(x) = \int_{t=\tA}^{\tB} \int_x\|s(x,t) - \nabla \log p_t(y|x)\|_2^2 p_t(y|x) \dy \dt$ and $\mathcal{L}$ be a set of $\ell$ corresponding to each $s\in \mathcal{S}$.
Suppose every element $\ell\in \mathcal{L}$ satisfies $\|\ell\|_{L^\infty([-1,1]^d)}\leq C_{\ell}$ 
 for some fixed $0< C_{\ell}$.
For an arbitrary $\delta>0$, if $N := N(\mathcal{L}, \|\cdot\|_{L^\infty([-1,1]^d)},\delta)\geq 3$, then we have that
 \begin{align}\label{eq:TheoremExtendedSchmidt-2}
    &   \mathbb{E}_{\{x_i\}_{i=1}^n} \left[\int_x \int_{t=\tA}^{\tB}\|\hat{s}(x,t) - \nabla \log p_t(x)\|^2 p_t(x) \dt\dx\right] \\&\leq 2 \inf_{s\in \mathcal{S}}\int_x \int_{\tA}^{\tB} \|s(x,t) - \nabla \log p_t(x)\|_2^2 p_t(x) \dx \dt +\frac{2C_{\ell}}{n}\left(\frac{37}{9}\log N + 32\right) +3\delta.
    \end{align}
\end{theorem}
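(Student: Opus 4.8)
The plan is to treat this as a bounded, fast-rate empirical-risk-minimization bound in the style of \citet{schmidt2020nonparametric,hayakawa2020minimax}, using \cref{Lemma:VincentEquivalence} only to translate between the quantity the estimator actually minimizes and the quantity we want to control. Write $\ell_s(x_0)$ for the per-sample denoising loss as in the statement, so that $\hat s$ minimizes $\hat R(s):=\frac1n\sum_{i=1}^n\ell_s(x_i)$, and set $R(s):=\mathbb{E}_{x_0\sim p_0}[\ell_s(x_0)]$. By \cref{Lemma:VincentEquivalence} (integrated over $t\in[\underline T,\overline T]$) we have $R(s)=\mathcal{A}(s)+C_0$, where $\mathcal{A}(s):=\int_x\int_{\underline T}^{\overline T}\|s(x,t)-\nabla\log p_t(x)\|_2^2\,p_t(x)\,dt\,dx$ is exactly the left-hand side of \eqref{eq:TheoremExtendedSchmidt-2} with $\hat s$ replaced by $s$, and $C_0$ is a constant independent of $s$. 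Fix $s^\star\in\mathcal{S}$ attaining (or, up to an arbitrarily small additive error, attaining) $\inf_{s\in\mathcal{S}}\mathcal{A}(s)$, and abbreviate $g_s:=\ell_s-\ell_{s^\star}$, so that $\mathbb{E}[g_s(x_0)]=\mathcal{A}(s)-\mathcal{A}(s^\star)\ge 0$ for $s\in\mathcal{S}$ and $\|g_s\|_{L^\infty([-1,1]^d)}\le 2C_\ell$. The basic inequality $\hat R(\hat s)\le\hat R(s^\star)$ gives $\frac1n\sum_i g_{\hat s}(x_i)\le 0$, hence $\mathcal{A}(\hat s)-\mathcal{A}(s^\star)=\mathbb{E}[g_{\hat s}]\le \mathbb{E}[g_{\hat s}]-\frac1n\sum_i g_{\hat s}(x_i)$; i.e., the excess risk is controlled by a one-sided empirical-process deviation evaluated at the data-dependent point $\hat s$.

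The heart of the argument is a \emph{localized} (Bernstein-type) bound on that deviation, which is where the $1/n$ rate rather than $1/\sqrt n$ comes from. First I would record the variance-to-mean relation: with $\nu:=\nabla\log p_t(x_t|x_0)$ we have the pointwise identity $\|s-\nu\|^2-\|s^\star-\nu\|^2=\langle s-s^\star,(s-\nu)+(s^\star-\nu)\rangle$, so Cauchy--Schwarz in $(t,x_t)$ against $p_t(x_t|x_0)\,dx_t\,dt$ together with the boundedness assumption $\|\ell\|_\infty\le C_\ell$ gives $g_s(x_0)^2\le 4C_\ell\int\int\|s(x_t,t)-s^\star(x_t,t)\|^2 p_t(x_t|x_0)\,dx_t\,dt$; averaging over $x_0$ and using $\|s-s^\star\|_{L^2(p)}^2\le 2\mathcal{A}(s)+2\mathcal{A}(s^\star)$ yields $\operatorname{Var}(g_s(x_0))\le\mathbb{E}[g_s(x_0)^2]\lesssim C_\ell\big(\mathbb{E}[g_s]+\mathcal{A}(s^\star)\big)$. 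Then I would take a $\delta$-net of $\mathcal{L}$ in $\|\cdot\|_{L^\infty([-1,1]^d)}$ of cardinality $N=N(\mathcal{L},\|\cdot\|_{L^\infty},\delta)$ (a net for $\mathcal{L}$ induces one for $\{g_s\}$ up to a harmless factor), apply Bernstein's inequality to each net element with a union bound, and convert the resulting high-probability bound into an expectation bound by integrating the tail. Bernstein produces a term $\lesssim\sqrt{\operatorname{Var}(g_s)\log N/n}+C_\ell\log N/n$; absorbing $\sqrt{\operatorname{Var}(g_s)\log N/n}$ into $\tfrac14(\mathbb{E}[g_s]+\mathcal{A}(s^\star))+O(C_\ell\log N/n)$ by AM--GM, moving the $\tfrac14\mathbb{E}[g_{\hat s}]$ term to the left-hand side, adding back $\mathcal{A}(s^\star)$, absorbing the three $\delta$-discretization slacks (one from replacing $g_{\hat s}$ by its net point inside $\hat R$, one inside $R$, one inside the variance estimate), and tracking the Bernstein constants carefully yields precisely $2\,\mathcal{A}(s^\star)+\tfrac{2C_\ell}{n}(\tfrac{37}{9}\log N+32)+3\delta$; taking $s^\star$ toward the infimum then gives \eqref{eq:TheoremExtendedSchmidt-2}.

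The routine pieces — the cancellation of $C_0$, the fact that $\ell_s(x_0)$ is an \emph{exactly} evaluated deterministic function of $x_0$ (so $\hat R$ is a genuine i.i.d.\ empirical mean and no extra ``observation noise'' term appears, unlike in least-squares regression), and the trivial identity $\mathbb{E}[\hat R(s^\star)-R(s^\star)]=0$ — are what make this cleaner than the standard nonparametric-regression template. The main obstacle I expect is the bookkeeping in the localization step: obtaining the fast rate requires the variance bound to carry the \emph{right} linear dependence on $\mathbb{E}[g_s]+\mathcal{A}(s^\star)$ (not just a crude $\lesssim C_\ell^2$), which is exactly why the sup-norm control $\|\ell_s\|_\infty\le C_\ell$ from \cref{subsection:Appendix-Generalization-Prepare-1} is needed and gets used twice (once inside the variance estimate, once as the Bernstein range), and then one must chase the numerical constants through the AM--GM absorption and the three $\delta$-slacks without loosening the factor $2$ on the approximation term.
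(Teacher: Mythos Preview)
Your plan is sound and would prove the theorem (up to constants), but it differs from the paper's argument in two structural choices. First, you anchor at the best in-class element $s^\star\in\mathcal{S}$ and work with $g_s=\ell_s-\ell_{s^\star}$; the paper instead anchors at the \emph{true} score $s^\circ=\nabla\log p_t$ (outside $\mathcal{S}$) and works with $\ell_j-\ell^\circ$, so that \cref{Lemma:VincentEquivalence} gives $\mathbb{E}[\ell_j-\ell^\circ]=\int\!\!\int\|s_j-\nabla\log p_t\|^2 p_t$ directly and the self-normalizer $r_j:=\max\{A,\sqrt{\mathbb{E}[\ell_j-\ell^\circ]}\}$ makes the normalized second moment uniformly $\lesssim C_\ell$ without carrying a separate approximation-error term. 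Second, instead of direct Bernstein on net elements plus AM--GM absorption, the paper uses symmetrization with a ghost sample, then Cauchy--Schwarz $D\le\frac1n\mathbb{E}[r_JG]\le\frac12\mathbb{E}[r_J^2]+\frac1{2n^2}\mathbb{E}[G^2]$ where $G=\max_j|\sum_i g_j(x_i,x_i')/r_j|$, and bounds $\mathbb{E}[G^2]$ by integrating the Bernstein tail with the specific choices $A^2=\tfrac{2C_\ell\log N}{9n}$ and $t_0=8C_\ell n\log N$; those choices are exactly where $\tfrac{37}{9}$ and $32$ come from, so your claim that your route reproduces those precise constants is optimistic. What your approach does buy is a more explicit justification of the variance-to-mean link: your identity $\|s-\nu\|^2-\|s^\star-\nu\|^2=\langle s-s^\star,(s-\nu)+(s^\star-\nu)\rangle$ plus Cauchy--Schwarz is precisely what is needed to make the paper's own step $\mathbb{E}[(\ell_j-\ell^\circ)^2/r_j^2]\le C_\ell$ rigorous (the paper's stated reason, boundedness of $|\ell_j-\ell^\circ|$, is not by itself sufficient since $\ell_j-\ell^\circ$ is not pointwise nonnegative); running your computation with $s^\circ$ in place of $s^\star$ gives $\mathbb{E}[(\ell_j-\ell^\circ)^2]\lesssim C_\ell\,\mathbb{E}[\ell_j-\ell^\circ]$, which is the inequality the paper actually relies on.
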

\begin{proof}
In the following proof, $x_{0,i}$ is denoted as $x_i$ for simplicity.
    \eqref{eq:TheoremExtendedSchmidt-1} is written as $ \frac1n\sum_{i=1}^n \ell(x_i)$.
    Also, with $s^\circ(x,t)=\nabla\log p_t(x)$, we write
    \begin{align}
        R(\hat{\ell}, \ell^\circ) & := \int_x\int_{t=\tA}^{\tB}  \|\hat{s}(x,t) - \nabla \log p_t(x)\|^2 p_t(x) \dt\dx
        \\ &= \int_x\int_{t=\tA}^{\tB}  \|\hat{s}(x,t) - \nabla \log p_t(x)\|^2 p_t(x) \dt\dx- \underbrace{\int_x\int_{t=\tA}^{\tB}    \|s^\circ(x,t) - \nabla \log p_t(x)\|^2 p_t(x) \dt\dx}_{=0}
   \\  &=
     \int_y\int_{t=\tA}^{\tB}\int_x \|s(x,t)-\nabla \log p_t(x|y) \|^2p_t(x|y)p_0(x) \dy\dt\dx+ C(\tB-\tA)
  \\& \hspace{45mm}   -
   \int_y\int_{t=\tA}^{\tB} \int_x  \|s^\circ(x,t)-\nabla \log p_t(x|y) \|^2p_t(x|y)p_0(x) \dy\dt\dx - C(\tB-\tA)
     \\&=\label{eq:TheoremExtendedSchmidt-3}
     \mathbb{E}_{\{x_i'\}_{i=1}^n} \left[ \frac1n\sum_{i=1}^n (\hat{\ell}(x_i')-\ell^\circ(x_i') ) \right]
    \end{align}
    with $\{x_i'\}_{i=1}^n$, that is an i.i.d. sample from $p_0$ and independent of $\{x_i\}_{i=1}^n$.
    For the second equality, we used \cref{Lemma:VincentEquivalence}.
    
    First, we evaluate the value of 
    \begin{align}
        D := \left| \mathbb{E}_{\{x_i\}_{i=1}^n}\left[\frac1n\sum_{i=1}^n (\hat{\ell}(x_i) - \ell^\circ(x_i))\right]
        - R(\hat{\ell}, \ell^\circ)
        \right|.
    \end{align}
    Using \eqref{eq:TheoremExtendedSchmidt-3}, we obtain
    \begin{align}
        D = \left| \mathbb{E}_{x_i,x_i'}\left[\frac1n\sum_{i=1}^n ((\hat{\ell}(x_i) - \ell^\circ(x_i))-(\hat{\ell}(x_i') - \ell^\circ(x_i'))) \right]\right| 
         \leq 
       \frac1n \mathbb{E}_{x_i,x_i'}\left[\left| \sum_{i=1}^n ((\hat{\ell}(x_i) - \ell^\circ(x_i))-(\hat{\ell}(x_i') - \ell^\circ(x_i'))) \right| \right]
       .
    \end{align}
    Let $\mathcal{L}_d = \{\ell_1,\ell_2,\cdots,\ell_N\}$ be a $\delta$-covering of $\mathcal{L}$ with the minimum cardinality in the $L^\infty([-1,1]^d)$ metric.
    From the assumption of $N(\mathcal{L}, \|\cdot\|_{\infty},\delta)\geq 3$, we have $\log N \geq 1$.
    We define $g_j(x,x')=(\ell_j(x) - \ell^\circ (x)) - (\ell_j(x') - \ell^\circ (x'))$ and a random variable $J$ taking values in $\{1,2,\cdots,N\}$ such that $\|\hat{\ell} - f_J\|_{\infty} \leq \delta$, so that we have
    \begin{align}
        D \leq \frac1n  \mathbb{E}_{x_i,x_i'}\left[\left| \sum_{i=1}^n g_J(x_i,x_i') \right| \right] 
            + \|(\hat{\ell}_j(x) - \ell_J (x)) - (\hat{\ell}_j(x') - \ell_J (x')\|_\infty
            \leq 
            \frac1n  \mathbb{E}_{x_i,x_i'}\left[\left| \sum_{i=1}^n g_J(x_i,x_i') \right| \right] + \delta
           .  \label{eq:TheoremExtendedSchmidt-4}
    \end{align}
    Then we define $r_j:=\max \{ A, \sqrt{\mathbb{E}_{x'}[\ell_j(x') - \ell^\circ(x')]}\}\ (j=1,2,\cdots,N)$ and a random variable
    \begin{align}
        G:= \max_{1\leq j\leq N}\left|\sum_{i=1}^n\frac{g_j(x_i,x_i')}{r_j}\right|,
    \end{align}
    where $A>0$ is a constant adjusted later. 
    Then we further evaluate \eqref{eq:TheoremExtendedSchmidt-4} as
    \begin{align}
        D \leq \frac{1}{n}\mathbb{E}_{x_i,x_i'}[r_J G]+\delta
        \leq
        \frac1n\sqrt{\mathbb{E}_{x_i,x_i'}[r_J^2]\mathbb{E}_{x_i,x_i'}[G^2]} + \delta
        \leq
        \frac12\mathbb{E}_{x_i,x_i'}[r_J^2] + \frac{1}{2n^2}\mathbb{E}_{x_i,x_i'}[G^2] + \delta
        \label{eq:TheoremExtendedSchmidt-5}
        ,
    \end{align}
    by the Cauthy-Schwarz inequality and the AM-GM inequality.
    The definition of $J$ yields that
    \begin{align}
        \mathbb{E}_{x_i,x_i'}[r_J^2]
        \leq
        A^2 + \mathbb{E}_{x'}[\ell_J(x') - \ell^\circ(x')]
        \leq
        A^2 + \mathbb{E}_{x'}[\hat{\ell}(x') - \ell^\circ(x')] + \delta
        = R(\hat{\ell}, \ell^\circ) + A^2 + \delta.\label{eq:TheoremExtendedSchmidt-6}
    \end{align}
    Because of the independence of $x_i$ and $x_i'$, we have that
    \begin{align}
    \mathbb{E}_{x_i,x_i'}\left[\left(\sum_{i=1}^n \frac{g_j(x_i,x_i')}{r_j}\right)^2\right]
   & \leq 
    \sum_{i=1}^n\mathbb{E}_{x_i,x_i'}\left[\left(\frac{g_j(x_i,x_i')}{r_j}\right)^2\right]
    \\ & =
      \sum_{i=1}^n\left(\mathbb{E}_{x_i,x_i'}\left[\frac{(\ell_j(x_i) - \ell^\circ (x_i))^2}{r_j^2}\right]
      + \mathbb{E}_{x_i,x_i'}\left[\frac{(\ell_j(x_i') - \ell^\circ (x_i'))^2}{r_j^2}\right]\right)
      \\ & \leq 2C_{\ell} n \label{eq:IndependenceYueni}
    \end{align}
    holds, where we used the fact that $g_j(x_i,x_i')$ is centered and $|\ell_j(x) - \ell^\circ(x)|$ is bounded by $C_{\ell}$.
    Also, $\frac{g_j(x_i,x_i')}{r_j}$ is bounded with $C_{\ell}/A$.
    Then, using Bernstein's inequality, we have that
    \begin{align}
        \mathbb{P}[G^2 \geq t] = \mathbb{P}[G \geq \sqrt{t}] \leq
        2N\exp\left( - \frac{t}{2C_{\ell}(2n+\frac{\sqrt{t}}{3A})}\right),
    \end{align}
    for any $t\geq 0$.
    This gives evaluation of $\mathbb{E}_{x_i,x_i'}[G^2]$.
    For any $t_0>0$, we have that
    \begin{align}
        \mathbb{E}_{x_i,x_i'}[G^2] &= \int_{0}^\infty \mathbb{P}[G^2\geq t] \dt 
        \\ & \leq t_0 + \int_{t_0}^\infty \mathbb{P}[G^2\geq t] \dt 
        \\ & \leq t_0 + 2N \int_{t_0}^\infty \exp\left(-\frac{t}{8C_{\ell} n}\right)\dt 
        + 2N \int_{t_0}^\infty \exp\left(-\frac{3A\sqrt{t}}{4C_{\ell}}\right)\dt .
    \end{align}
    These two integrals are computed as
    \begin{align}
        \int_{t_0}^\infty \exp\left(-\frac{t}{8C_{\ell}n}\right)\dt  &=
        \left[-8C_{\ell} n \exp\left(-\frac{t}{8C_{\ell}n}\right)\right]_{t_0}^\infty
        = 8C_{\ell} n \exp\left(-\frac{t_0}{8C_{\ell}n}\right)
        \\ 
        \int_{t_0}^\infty \exp\left(-\frac{3A\sqrt{t}}{4C_{\ell}}\right)\dt  &=
        \int_{t_0}^\infty \exp\left(-a\sqrt{t}\right)\dt  \quad\quad\quad\quad\quad\quad\quad\quad\quad\quad\quad\quad\quad\quad\quad\quad (a:= 3A/4C_{\ell})
        \\ &=
        \left[-\frac{2(a\sqrt{t}+1)}{a^2}\exp(-a\sqrt{t})\right]_{t_0}^\infty
     \\&   = \frac{8C_{\ell}\sqrt{t_0}}{3A}\exp\left(-\frac{3A\sqrt{t_0}}{4C_{\ell}}\right)
        + \frac{32C_{\ell}}{9A^2}\exp\left(-\frac{3A\sqrt{t_0}}{4C_{\ell}}\right).
    \end{align}
    We take $A=\sqrt{t_0}{6n}$ so that
    \begin{align}
         \mathbb{E}_{x_i,x_i'}[G^2] &\leq t_0 + 2N \left(8C_{\ell} n + 16 C_{\ell} n + \frac{128C_{\ell} n^2}{t_0}\right)\exp\left(-\frac{t_0}{8C_{\ell} n}\right)
         \\ & \leq
         t_0 + 16N \exp\left(-\frac{3A\sqrt{t_0}}{4C_{\ell}}\right) n (3 + 16n/t_0)\exp\left(-\frac{t_0}{8C_{\ell} n}\right)
    \end{align}
    holds. Furthermore, we take $t_0 = 8C_{\ell} n\log N$, and then it holds that
    \begin{align}
        \mathbb{E}_{x_i,x_i'}[G^2]
        \leq 18 C_{\ell} n \left(\log N + 6 + \frac{2}{C_{\ell}\log N}\right).
        \label{eq:TheoremExtendedSchmidt-7}
    \end{align}
    Now, we combine \eqref{eq:TheoremExtendedSchmidt-5}, \eqref{eq:TheoremExtendedSchmidt-6}, \eqref{eq:TheoremExtendedSchmidt-7}, and $A^2 = \frac{2C_{\ell}\log N}{9n}$ to obtain
    \begin{align}
        D & \leq \left(\frac12 R(\hat{\ell}, \ell^\circ) + \frac12 A^2 + \frac12\delta\right)
        + \frac{4C_{\ell} }{n}\left(\log N + 6 + \frac{2}{C_{\ell}\log N}\right)
        +\delta
        \\ & \leq
        \frac12 R(\hat{\ell}, \ell^\circ) + \frac{C_{\ell}}{n}\left(\frac{37}{9}\log N + 32\right) + \frac32 \delta
        ,
    \end{align}
    where we have used that $\log N \geq 1$.
    Therefore, we obtain 
    \begin{align}
        R(\hat{\ell}, \ell^\circ) \leq 2\mathbb{E}_{\{x_i\}_{i=1}^n}\left[\frac1n\sum_{i=1}^n (\hat{\ell}(x_i) - \ell^\circ(x_i))\right]+\frac{2C_{\ell}}{n}\left(\frac{37}{9}\log N + 32\right) +3\delta
        . \label{eq:TheoremExtendedSchmidt-8}
    \end{align}
    For any fixed $\ell\in \mathcal{L}$,
    \begin{align}
         \mathbb{E}_{\{x_i\}_{i=1}^n}\left[\frac1n\sum_{i=1}^n (\hat{\ell}(x_i) - \ell^\circ(x_i))\right]
         \leq
          \mathbb{E}_{\{x_i\}_{i=1}^n}\left[\frac1n\sum_{i=1}^n (\ell(x_i) - \ell^\circ(x_i))\right]
          = \mathbb{E}_{x} [\ell(x) - \ell^\circ(x)]
          .
    \end{align}
    RHS is minimized as $\inf_{\ell \in \mathcal{L}} \mathbb{E}_{x} [\ell(x) - \ell^\circ(x)]$.
    Finally, combining this with \eqref{eq:TheoremExtendedSchmidt-8}, we obtain
    \begin{align}
        R(\hat{\ell}, \ell^\circ) \leq 2\inf_{\ell \in \mathcal{L}} \mathbb{E}_{x} [\ell(x) - \ell^\circ(x)]+\frac{2C_{\ell}}{n}\left(\frac{37}{9}\log N + 32\right) +3\delta
        .
    \end{align}
    According to \cref{Lemma:VincentEquivalence}, we have
    \begin{align}
        R(\hat{\ell}, \ell^\circ) \leq 2 \inf_{s\in \mathcal{S}}\int_{\tA}^{\tB} \int_x \|s(x,t) - \nabla \log p_t(x)\|_2^2 p_t(x) \dx \dt +\frac{2C_{\ell}}{n}\left(\frac{37}{9}\log N + 32\right) +3\delta
        .
    \end{align}
\end{proof}

\subsection{Sampling $t$ and $x_t$ instead of taking expectation}\label{subsection:ApproxViaSample}
This section provides justification of two approaches presented in \cref{subsection:Generalization-ScoreMatchingRemark}.
We assume $\delta^{-1}, \underline{T}^{-1}, \overline{T}, N = {\rm poly}(n)$.
We first begin with the following lemma. This shows that $\|s(x_j,t_j)-\nabla p_{t_j}(x_j|x_{0,i_j})\|$ is sub-Gaussian.
\begin{lemma}\label{Lemma:Appendix-Generalization-SubGaussian}
    Let us sample $(i_j,t_{j},x_j)$ from $i_j \sim \rm{Unif}(\{1,2,\cdots,n\})$, $t_j\sim \rm{Unif}(\underline{T},\overline{T})$, and $x_{j}\sim p_{t_j}(x_j|x_{0,i_j})$.
    Then, we have that, for all $t>0$,
    \begin{align}
       \mathbb{P}\left[ \|s(x_j,t_j)-\nabla p_{t_j}(x_j|x_{0,i_j})\|\geq \sup_{(x,t)}\|s(x,t)\| + \frac{\sqrt{d}t}{\sigma_{\underline{T}}}
       \right]\leq 2 \exp\left(-t^2/2\right)
       .
    \end{align}
\end{lemma}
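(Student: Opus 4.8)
The plan is to exploit the explicit Gaussian form of the conditional transition kernel, so that the conditional score becomes an exact rescaled standard Gaussian, and then invoke a standard concentration bound for the Euclidean norm of a Gaussian vector.

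First I would condition on the realized index $i_j$ and time $t_j$. Under this conditioning $x_j\sim\mathcal{N}(m_{t_j}x_{0,i_j},\sigma_{t_j}^2 I_d)$, so the conditional score has the closed form $\nabla\log p_{t_j}(x_j\mid x_{0,i_j})=-(x_j-m_{t_j}x_{0,i_j})/\sigma_{t_j}^2$. Writing $x_j-m_{t_j}x_{0,i_j}=\sigma_{t_j}z$ with $z\sim\mathcal{N}(0,I_d)$, this equals $-z/\sigma_{t_j}$, hence $\|\nabla\log p_{t_j}(x_j\mid x_{0,i_j})\|=\|z\|/\sigma_{t_j}$. Since $t\mapsto\sigma_t$ is nondecreasing and $t_j\ge\underline{T}$, we obtain $\|\nabla\log p_{t_j}(x_j\mid x_{0,i_j})\|\le\|z\|/\sigma_{\underline{T}}$, and the triangle inequality then gives $\|s(x_j,t_j)-\nabla\log p_{t_j}(x_j\mid x_{0,i_j})\|\le\sup_{(x,t)}\|s(x,t)\|+\|z\|/\sigma_{\underline{T}}$.

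It then remains to control $\mathbb{P}[\|z\|\ge\sqrt{d}\,t]$ for $z\sim\mathcal{N}(0,I_d)$. I would use either (i) the elementary inclusion $\{\|z\|_2\ge\sqrt{d}\,t\}\subseteq\{\|z\|_\infty\ge t\}$ (because $\|z\|_2^2\le d\|z\|_\infty^2$) followed by a union bound over coordinates and the Gaussian tail estimate $\mathbb{P}[|z_1|\ge t]\le e^{-t^2/2}$, or (ii) Gaussian concentration of the $1$-Lipschitz map $z\mapsto\|z\|_2$ about its mean $\mathbb{E}\|z\|_2\le\sqrt{\mathbb{E}\|z\|_2^2}=\sqrt{d}$; either route yields a tail of the form $c_d\,e^{-t^2/2}$, which gives the asserted sub-Gaussian estimate (the precise numerical constant in front being irrelevant for the downstream use, where $d$ is fixed). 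Crucially, the resulting bound is independent of the conditioning variables $(i_j,t_j)$, so it survives integrating out the mixture over $i_j$ and $t_j$ and holds unconditionally.

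There is no serious obstacle here: the only point requiring care is the bookkeeping of the conditioning, namely that the conditional score is \emph{exactly} $-z/\sigma_{t_j}$ with $z$ standard Gaussian, and that the monotonicity $\sigma_{t_j}\ge\sigma_{\underline{T}}$ is what lets us replace the time-dependent scale by its worst case. The deviation rate $\sqrt{d}\,t/\sigma_{\underline{T}}$ then transparently records that $\|s-\nabla\log p_{t_j}(\cdot\mid x_{0,i_j})\|=\tilde{\Ord}(\underline{T}^{-1/2})$ with high probability, which is precisely what the sampling arguments in \cref{subsection:ApproxViaSample} require.
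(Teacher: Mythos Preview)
Your proposal is correct and follows essentially the same approach as the paper: triangle inequality to split off $\sup_{(x,t)}\|s(x,t)\|$, the explicit identification $\nabla\log p_{t_j}(x_j\mid x_{0,i_j})=-z/\sigma_{t_j}$ with $z\sim\mathcal{N}(0,I_d)$, the monotonicity $\sigma_{t_j}\ge\sigma_{\underline{T}}$, and then a sub-Gaussian tail bound on $\|z\|$. The paper phrases the last step as ``each coordinate is sub-Gaussian with $\sigma_t^{-1}$, hence the norm is sub-Gaussian with $\sqrt{d}\,\sigma_t^{-1}$''; your more explicit routes (union bound via $\|z\|_2\le\sqrt{d}\,\|z\|_\infty$, or Lipschitz concentration of $\|\cdot\|_2$) are exactly the standard ways to justify that sentence, and your observation that the resulting $d$-dependent constant is immaterial downstream is correct.
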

\begin{proof}
    First note that
    \begin{align}
    \|s(x_j,t_j)-\nabla p_{t_j}(x_j|x_{0,i_j})\|
    \leq 
     \|s(x_j,t_j)\|
     +
     \|\nabla p_{t_j}(x_j|x_{0,i_j})\|
     \leq
 \sup_{x,t}\|s(x,t)\|
     +
     \|\nabla p_{t_j}(x_j|x_{0,i_j})\|.
    \end{align}
    Because $\nabla p_{t_j}(x_j|x_{0,i_j}) = \frac{x_j-m_tx_{0,i_j}}{\sigma_t^2}$ and $x_{j}\sim p_{t_j}(x_j|x_{0,i_j})=\mathcal{N}\left(m_tx_{0,i_j},\sigma_t^2\right)$, we have that $[\nabla p_{t_j}(x_j|x_{0,i_j})]_i$ is sub-Gaussian with $\sigma_t^{-1}$.
    Thus, $\|\nabla p_{t_j}(x_j|x_{0,i_j})\|$ is sub-Gaussian with $\sqrt{d}\sigma_t^{-1}$.
    Now, applying $\sigma_t \geq \sigma_{\underline{T}}$, we have the assertion.
\end{proof}
Now, we give the following theorem for the first approach.
\begin{theorem}\label{theorem:ApproxViaSample-1}
Let us sample $(i_j,t_{j},x_j)$ from $i_j \sim \rm{Unif}(\{1,2,\cdots,n\})$, $t_j\sim \rm{Unif}(\underline{T},\overline{T})$, and $x_{j}\sim p_{t_j}(x_j|x_{0,i})$.
Let $s_1$ be the minimizer of 
\begin{align}
    \frac{1}{M}\sum_{j=1}^M \|s(x_j,t_j)-\nabla p_{t_j}(x_{j}|x_{0,i})\|^2
\end{align}
and $s_2$ be the minimizer of 
\begin{align}
    \frac1n\sum_{i=1}^n\ell(x_i) 
    =
\frac1n\sum_{i=1}^n\int_{t=\underline{T}}^{\overline{T}}\|s(x_t,t)-\nabla p_{t}(x_t|x_{0,i})\|^2p_t(x_t|x_{0,i})\dx_t\dt,
\end{align}
over $\mathcal{S}\subseteq \Phi(L,W,S,B)$, where $s\in \mathcal{S}$ satisfies $\|\|s(\cdot,t)\|_2\|_{L^\infty} = \Ord(\sigma_t^{-1}\log^\frac12 n)\lesssim \Ord(\sigma_{\underline{T}}^{-1}\log^\frac12 n)=:C_s$.
Then, we have that
\begin{align}
  \mathbb{E}_{\{(i_j,t_j,x_j)\}_{i=1}^n}  \left|
  \frac1n\sum_{i=1}^n\ell_1(x_i)- \frac1n\sum_{i=1}^n\ell_2(x_i)\right|
  \lesssim \frac{C_s^2+\sigma_{\underline{T}}^{-2}}{M} 2SL\log(\delta^{-1}L\|W\|_\infty(B\lor 1)(C_s))+\delta.
\end{align}
\end{theorem}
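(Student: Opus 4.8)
Condition throughout on the data $\{x_{0,i}\}_{i=1}^n$, so that $s_2$ and the mixture $\tfrac1n\sum_i p_t(\cdot|x_{0,i})$ are deterministic and the only randomness is in the $M$ Monte--Carlo triples $z_j=(i_j,t_j,x_j)$. Write $L(s):=\tfrac1n\sum_{i=1}^n\ell_s(x_{0,i})=(\overline{T}-\underline{T})\,\mathbb{E}_z\bigl[\|s(x,t)-\nabla\log p_t(x|x_{0,i})\|^2\bigr]$ for the objective minimized by $s_2$ and $\hat L_M(s):=\tfrac{\overline{T}-\underline{T}}{M}\sum_{j=1}^M\|s(x_j,t_j)-\nabla\log p_{t_j}(x_j|x_{0,i_j})\|^2$ for its Monte--Carlo version, normalized so that $\mathbb{E}_z\hat L_M(s)=L(s)$; the factor $\overline{T}-\underline{T}=\tilde{\Ord}(1)$ does not affect $\operatorname{argmin}$ and is absorbed into $\lesssim$, so $s_1=\operatorname{argmin}_{s\in\mathcal{S}}\hat L_M$. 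Optimality of $s_1$ and of $s_2$ gives the familiar chain $0\le L(s_1)-L(s_2)\le\bigl(L(s_1)-\hat L_M(s_1)\bigr)+\bigl(\hat L_M(s_2)-L(s_2)\bigr)$; since $s_2$ is deterministic the second bracket is mean zero, and $L$ is bounded by $\tilde{\Ord}(1)$ by \cref{lemma:Appendix-Generalization-Prepare-1}. Hence it suffices to control $\mathbb{E}_z[L(s_1)-\hat L_M(s_1)]\le\mathbb{E}_z\bigl[\sup_{s\in\mathcal{S}}(L(s)-\hat L_M(s))\bigr]$ by a uniform-convergence bound.

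The first difficulty is that the per-sample squared loss is unbounded as $t\to0$, so the bounded-loss machinery of \cref{Theorem:ExtendedSchmidt} does not apply verbatim. I would handle this with \cref{Lemma:Appendix-Generalization-SubGaussian}: since $\|s(\cdot,t)\|_\infty\le C_s$, the residual $\|s(x_j,t_j)-\nabla\log p_{t_j}(x_j|x_{0,i_j})\|$ is sub-Gaussian with centering $\le C_s$ and a scale $\lesssim\sigma_{\underline{T}}^{-1}$ whose random part sits entirely in $\|\nabla\log p_{t_j}(x_j|x_{0,i_j})\|$ and therefore does not depend on $s$. A union bound over $j=1,\dots,M$ puts us, with probability $\ge1-\delta$, on an event $\mathcal{E}$ on which every squared residual is at most $B_{\mathrm{eff}}\asymp C_s^2+\sigma_{\underline{T}}^{-2}$ up to logarithmic factors; off $\mathcal{E}$ the contribution to the expectation is $\Ord(\delta)$ after taking the failure probability to be $\delta/\mathrm{poly}(n)$ and using $L=\tilde{\Ord}(1)$. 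This truncation is exactly what produces the factor $C_s^2+\sigma_{\underline{T}}^{-2}$ in the statement. I expect the main obstacle to be obtaining the \emph{fast} $\log\mathcal{N}/M$ rate rather than the slow $B_{\mathrm{eff}}\sqrt{\log\mathcal{N}/M}$: the truncation alone gives only the slow rate, and one must additionally supply the right Bernstein variance proxy.

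On $\mathcal{E}$ I would then rerun the localized covering-number argument of \cref{Theorem:ExtendedSchmidt} with two changes. First, the correct Bernstein reference is the \emph{empirical} score, not the true one: applying \cref{Lemma:VincentEquivalence} with $\tfrac1n\sum_i\delta_{x_{0,i}}$ in place of $p_0$ identifies the unconstrained minimizer of $L$ as $s^\star(\cdot,t)=\nabla\log\bigl(\tfrac1n\sum_i p_t(\cdot|x_{0,i})\bigr)$ and exhibits $\nabla\log p_t(x|x_{0,i})$ as an unbiased observation of $s^\star(x,t)$ given $(x,t)$; a short conditional-variance computation for the squared loss then yields $\operatorname{Var}_z\bigl(\phi_s-\phi_{s^\star}\bigr)\lesssim B_{\mathrm{eff}}\bigl(L(s)-L(s^\star)\bigr)$ on $\mathcal{E}$, where $\phi_s(z)=(\overline{T}-\underline{T})\|s(x,t)-\nabla\log p_t(x|x_{0,i})\|^2$. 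Second, the covering enters through a $\tfrac{\delta}{\mathrm{poly}(n)}$-net of $\mathcal{S}$ in $\|\cdot\|_{L^\infty}$ over the $\mathrm{poly}(n)$-sized region where the $z_j$ live, whose log-cardinality is the $SL\log\bigl(\delta^{-1}L\|W\|_\infty(B\vee1)C_s\bigr)=:\log\mathcal{N}$ of \eqref{eq:Appendix-Generalization-CoveringNumber-1} (passing from functions to losses rescales the net radius by $\sqrt{B_{\mathrm{eff}}}=\mathrm{poly}(n)$, which only rescales the logarithm). The identical Bernstein/AM--GM/peeling computation as in \cref{Theorem:ExtendedSchmidt} then gives, for every $s\in\mathcal{S}$, $\bigl(L(s)-L(s^\star)\bigr)-\bigl(\hat L_M(s)-\hat L_M(s^\star)\bigr)\le\tfrac12\bigl(L(s)-L(s^\star)\bigr)+c\,\tfrac{B_{\mathrm{eff}}\log\mathcal{N}}{M}$.

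Specializing to $s=s_1$, using $\mathbb{E}_z[\hat L_M(s^\star)-L(s^\star)]=0$ and $\hat L_M(s_1)\le\hat L_M(s_2)$, and combining with the chain from the first paragraph, everything collapses to $\mathbb{E}_z|L(s_1)-L(s_2)|\lesssim\tfrac{(C_s^2+\sigma_{\underline{T}}^{-2})\,SL\log(\delta^{-1}L\|W\|_\infty(B\vee1)C_s)}{M}+\delta$. The one remaining loose end is the ``approximation of $s^\star$ within $\mathcal{S}$'' term $L(s_2)-L(s^\star)$ left behind by the localization; it is negligible because, for $t\ge\underline{T}=\mathrm{poly}(n^{-1})$, the mixture $\tfrac1n\sum_i p_t(\cdot|x_{0,i})$ is a smooth bounded mixture of Gaussians, so the approximation results of \cref{section:Approximation}, in the form of \cref{Lemma:ScoreFunc-2} applied to the empirical measure, furnish a near-minimizer of $L$ inside $\mathcal{S}$.
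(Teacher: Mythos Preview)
Your strategy—sub-Gaussian control of the residual, truncate, then rerun the localized covering argument of \cref{Theorem:ExtendedSchmidt}—is the same as the paper's, but two choices differ.

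\textbf{Truncation.} You work on a single high-probability event $\mathcal{E}$ and bound the complement by $\tilde{\Ord}(1)\cdot\delta$ via \cref{lemma:Appendix-Generalization-Prepare-1}. The paper instead slices into shells
\(
\mathcal{E}_i=\bigl\{\max_j\max(\|\cdot\|_{y_j},\|\cdot\|_{y_j'})\in[C_s+\sqrt{d}(i-1)\sigma_{\underline{T}}^{-1},\,C_s+\sqrt{d}\,i\,\sigma_{\underline{T}}^{-1})\bigr\}
\)
after first symmetrizing with an independent copy $Y'$, applies the \cref{Theorem:ExtendedSchmidt} machinery on each shell with effective bound $C_s^2+\sigma_{\underline{T}}^{-2}i^2$, and sums using $\mathbb{P}[\mathcal{E}_i]\le 2e^{-(i-1)^2/2}$ from \cref{Lemma:Appendix-Generalization-SubGaussian}. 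Either works here; the shell version integrates the tail more carefully and also makes transparent that conditioning on $\mathcal{E}_i$ destroys independence of $y_j,y_j'$ but preserves their exchangeability, which is all the symmetrized variance step \eqref{eq:IndependenceYueni} actually needs. You do not address this conditioning issue.

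\textbf{Localization anchor.} This is the substantive difference. You anchor the Bernstein argument at the unconstrained minimizer $s^\star=\nabla\log\bigl(\tfrac1n\sum_i p_t(\cdot|x_{0,i})\bigr)$, which leaves the term $L(s_2)-L(s^\star)$ that you then try to kill with an ad hoc appeal to \cref{Lemma:ScoreFunc-2} for the empirical mixture. The paper avoids this entirely: it runs the \cref{Theorem:ExtendedSchmidt} argument with $\kappa_2$ (i.e.\ $s_2$) playing the role of $\ell^\circ$. Since $s_2$ already minimizes $L$ over $\mathcal{S}$, the ``$2\inf_{s\in\mathcal{S}}$'' term in that bound is identically zero, and the final estimate is just $\frac{(C_s^2+\sigma_{\underline{T}}^{-2})\log\mathcal{N}}{M}+\delta$ with no approximation residual. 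Your detour through $s^\star$ is unnecessary, and the fix you propose—approximation theory for the \emph{empirical} score—would require reworking \cref{section:Appendix-Approximation} for a sum of $n$ Diracs, which is not in the paper and not needed for the stated theorem.
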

\begin{proof}
    We denote $(i_j,t_{j},x_j)=y_j$ for simplicity and $Y=\{(i_j,t_{j},x_j)\}_{j=1}^M=\{y_j\}_{j=1}^M$.
    Let $Y'=\{(i_{j}',t_{j}',x_{j}')\}_{j=1}^M=\{y_j'\}_{j=1}^M$ be a copy of $Y$, which is independent of $Y$.
    We write $\kappa(y_j)=\|s(x_j,t_j)-\nabla p_{t_j}(x_{j}|x_{0,i_j})\|^2$.
    Then, we have that
    \begin{align}\label{eq:Appendix-Generalization-Particle-1-3}
&  \mathbb{E}_{Y}\left|  \frac{1}{M}\sum_{j=1}^M \kappa_1(y_j)
-
\frac{1}{M}\sum_{j=1}^M \kappa_2(y_j)
-
    \frac1n\sum_{i=1}^n\ell_1(x_i)-\frac1n\sum_{i=1}^n\ell_2(x_i) \right|
 \\  & =
  \mathbb{E}_{Y} \left|   \frac{1}{M}\sum_{j=1}^M (\kappa_1(y_j)-\kappa_2(y_j))
-
\mathbb{E}_{Y'}\left[\frac{1}{M}\sum_{j=1}^M (\kappa_1(y_{j}')-\kappa_2(y_{j}'))\right]\right|
   \\ & 
    \leq \mathbb{E}_{Y,Y'}\left|\frac{1}{M}\sum_{j=1}^M ((\kappa_1(y_{j})-\kappa_2(y_{j}))-(\kappa_1(y_{j}')-\kappa_2(y_{j}')))\right|.
   \label{eq:Appendix-Generalization-Particle-1-1}
\end{align}
    Next, we let $C_s$ be the minimum integer that satisfies
      $C_s \geq \sup_{s\in \mathcal{C}}\sup_{x,t}\|s(x,t)\|$,  and 
    for $i=1,2,\cdots$, we define $\mathcal{E}_i$ as an event where $C_s+\frac{\sqrt{d}(i-1)}{\sigma_{\underline{T}}} \leq \sup_{s\in \mathcal{C}}\max_{j}\max\{\|s(x_j,t_j)-\nabla p_{t_j}(x_{j}|x_{0,i_j})\|,\|s(x_j',t_j')-\nabla p_{t_j'}(x_{j}'|x_{0,i_j'})\| \}<C_s+\frac{\sqrt{d}i}{\sigma_{\underline{T}}}$ holds. 
    For $i=0$, we define $\mathcal{E}_0$ as an event where $ \sup_{s\in \mathcal{S}}\max_{j}\max\{\|s(x_j,t_j)-\nabla p_{t_j}(x_{j}|x_{0,i_j})\|,\|s(x_j',t_j')-\nabla p_{t_j'}(x_{j}'|x_{0,i_j'})\| \}<C_s$ holds.
    We let
      $  a_i = \mathbb{P}\left[\mathcal{E}_i\right]$ and $\mathbb{E}_i$ be the expectation conditioned by the event $\mathcal{E}_i$.
      Then, \eqref{eq:Appendix-Generalization-Particle-1-1} is bounded by
      \begin{align}\label{eq:Appendix-Generalization-Particle-1-2}
      \mathbb{E}_{0}\left|\frac{1}{M}\sum_{j=1}^M ((\kappa_1(y_{j})-\kappa_2(y_{j}))-(\kappa_1(y_{j}')-\kappa_2(y_{j}')))\right|
   +  \sum_{i=1}^\infty a_i \mathbb{E}_{i}\left|   \frac{1}{M}\sum_{j=1}^M ((\kappa_1(y_{j})-\kappa_2(y_{j}))-(\kappa_1(y_{j}')-\kappa_2(y_{j}')))\right|.
      \end{align}
      We remark that $\frac{1}{M}\sum_{j=1}^M ((\kappa_1(y_{j})-\kappa_2(y_{j}))-(\kappa_1(y_{j}')-\kappa_2(y_{j}')))$ is bounded by $8C_s^2 + \frac{8di^2}{\sigma_t^2}$ for each $\mathbb{E}_i$.
      Here, $\kappa_1$ is the minimizer of $\frac{1}{M}\sum_{j=1}^M \kappa(y_{j})$ and $\kappa_2$ is the minimizer of $\mathbb{E}\left[\kappa(y)\right]$.
      Moreover, because $\|(x_j-x_{0,i_j})/\sigma_t\|=\|\nabla p_{t_j}(x_j|x_{0,i_j})\|\leq \|s(x_j,t_j)-\nabla p_{t_j}(x_j|x_{0,i_j})\|+\|s(x_j,t_j)\|$, 
      we have that $\|s(x_j,t_j)-\nabla p_{t_j}(x_j|x_{0,i_j})\|\leq C_s + \frac{\sqrt{d}i}{\sigma_{\underline{T}}}$ implies $\|x_j\| \leq 2C_s+\sqrt{d}i$.
      We apply the same argument as that in \cref{Theorem:ExtendedSchmidt} to obtain that
      \begin{align}
      &    \mathbb{E}_{i}\left|  \frac{1}{M}\sum_{j=1}^M \kappa_1(y_j)
-
\frac{1}{M}\sum_{j=1}^M \kappa_2(y_j)
-
    \frac1n\sum_{i=1}^n\ell_1(x_i)-\frac1n\sum_{i=1}^n\ell_2(x_i) \right| 
  \\ &  \lesssim 
    \frac{C_s^2+\sigma_{\underline{T}}^{-2}i^2}{M}\log \mathcal{N}(\mathcal{S},L^\infty([-(2C_s+\sqrt{d}i),2C_s+\sqrt{d}i]^{d+1}),\delta/(C_s + i\sigma_{\underline{T}}^{-1}))+\delta.
    \\ & \lesssim
    \frac{C_s^2+\sigma_{\underline{T}}^{-2}i^2}{M} 2SL\log(\delta^{-1}L\|W\|_\infty(B\lor 1)(C_s+i))+\delta.
      \end{align}
      We remark that, $y_j$ and $y_{j}'$ are not independent, when conditioned by $\mathcal{E}_i$. However, the similar argument still holds in \eqref{eq:IndependenceYueni}, where we used the independentness of $x_i$ and $x_i'$ in the original proof, because the symmetry of $y_j$ and $y_{j}'$ is not collapsed by taking the conditional expectation.
      Based on this, and $a_i\leq 2\exp(-(i-1)^2/2)\ (i\geq 1)$ due to \cref{Lemma:Appendix-Generalization-SubGaussian},  we evaluate \eqref{eq:Appendix-Generalization-Particle-1-2}
       as
       \begin{align}
        &   \mathrm{\eqref{eq:Appendix-Generalization-Particle-1-2}}
      \\    & \lesssim 
           \frac{C_s^2+\sigma_{\underline{T}}^{-2}}{M} SL\log(\delta^{-1}L\|W\|_\infty(B\lor 1)(C_s))+\delta
           +
           \sum_{i=1}^\infty a_i \left[\frac{C_s^2+\sigma_{\underline{T}}^{-2}i^2}{M} SL\log(\delta^{-1}L\|W\|_\infty(B\lor 1)(C_s+i))+\delta\right]
         \\  &
           \lesssim
            \frac{C_s^2+\sigma_{\underline{T}}^{-2}}{M} SL\log(\delta^{-1}L\|W\|_\infty(B\lor 1)(C_s))+\delta
       \\ &   +  \sum_{i=1}^\infty \exp\left(-\frac{(i-1)^2}{2}\right)
            \left[\frac{C_s^2+\sigma_{\underline{T}}^{-2}i^2}{M} 2SL\log(\delta^{-1}L\|W\|_\infty(B\lor 1)(C_s+i))+\delta\right]
            \\  & \lesssim
             \frac{C_s^2+\sigma_{\underline{T}}^{-2}}{M} SL\log(\delta^{-1}L\|W\|_\infty(B\lor 1)(C_s))+\delta.
       \end{align}
This bounds \eqref{eq:Appendix-Generalization-Particle-1-3}. Thus, we finally obtain that
    \begin{align}
     &    \mathbb{E}_{\{y_i\}_{i=1}^n} \left[ 
        \frac1n\sum_{i=1}^n\ell_1(x_i)-\frac1n\sum_{i=1}^n\ell_2(x_i) \right] 
    \\   & \leq
        \mathbb{E}_{\{y_i\}_{j=1}^M}\left[  \frac{1}{M}\sum_{j=1}^M \kappa_1(y_j)
    -
  \sum_{j=1}^M \kappa_2(y_j) \right] +\frac{C_s^2+\sigma_{\underline{T}}^{-2}}{M}SL\log(\delta^{-1}L\|W\|_\infty(B\lor 1)(C_s))+\delta
    \\ & \leq \frac{C_s^2+\sigma_{\underline{T}}^{-2}}{M} SL\log(\delta^{-1}L\|W\|_\infty(B\lor 1)(C_s))+\delta,
    \end{align}
    because $\kappa_1$ is the minimizer of $\frac{1}{M}\sum_{j=1}^M \kappa(y_j)$. Now, we obtain the assertion.
\end{proof}
\begin{remark}
    When $\|s(x,t)\|=\sqrt{\log N}/\sigma_t$ holds, $\underline{T}=\mathrm{poly}(N^{-1}), \overline{T}=\Ord(\log N)$, we have $\sup_{(x,t)}\|s(x,t)\|=C_s\lesssim \sqrt{\underline{T}^{-1}\log N}$.
    we set $N=n^\frac{d}{2s+d}$, $\delta = n^{-\frac{2s}{d+2s}}$ and use the network class in \cref{theorem:Approximation} to obtain that
    \begin{align}
         \mathbb{E}_{(i_j,t_j,x_j)}\left[\frac1n\sum_{i=1}^n \ell_1(x_i)\right]-\int_{\ell_s\colon s\in \mathcal{S}}\frac1n\sum_{i=1}^n \ell_s(x_i)
     &   \lesssim 
        \frac{C_s^2+\sigma_{\underline{T}}^{-2}}{M} 2SL\log(\delta^{-1}L\|W\|_\infty(B\lor 1)(C_s))+\delta
      \\ &  \lesssim \frac{\underline{T}^{-1}\log n+\underline{T}^{-1}}{M}n^{-\frac{d}{2s+d}}\log^{16}n \lesssim \frac{n^{-\frac{d}{2s+d}}\log^{17}n}{\underline{T}M}.
    \end{align}
\end{remark}
Next, we show the proof for the second approach. 
\begin{theorem}\label{theorem:ApproxViaSample-2}
    We sample $t_j$ from $\mu(t) \propto \frac{\mathbbm{1}[\underline{T}\leq t\leq \overline{T}]}{t}$ and modify $\lambda(t)$ as $\lambda(t)=\frac{t\log\overline{T}/\underline{T} }{\overline{T}-\underline{T}}$, while $i_j,x_j$ are sampled as $i_j \sim \rm{Unif}(\{1,2,\cdots,n\})$ and $x_{j}\sim p_{t_j}(x_j|x_{0,i})$.
    Then, the minimizer $s_1$ over $\mathcal{S}\subseteq \Phi(L,W,S,B)$ of 
    \begin{align}
         \frac{1}{M}\sum_{j=1}^M\lambda(t_j) \|s(x_j,t_j)-\nabla p_{t_j}(x_{j}|x_{0,i})\|^2
    \end{align}
    satisfies 
 \begin{align}
        \mathbb{E}_{(i_j,t_j,x_j)}\left[\frac1n\sum_{i=1}^n \ell_1(x_i)\right]-\int_{\ell_s\colon s\in \mathcal{S}}\frac1n\sum_{i=1}^n \ell_s(x_i)
        \lesssim  \frac{C_s^2+\overline{T}}{M}SL\log(\delta^{-1}L\|W\|_\infty(B\lor 1)(C_s))+\delta,
    \end{align}
    Here, $C_s=\sup_{t,x} \sqrt{\lambda(t)}\|s(x,t)\|$.
\end{theorem}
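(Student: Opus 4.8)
The plan is to follow the proof of \cref{theorem:ApproxViaSample-1} almost line for line, the one essential new point being that the $\sqrt{\lambda(t)}$-weighting makes the per-sample residuals uniformly bounded up to polylogarithmic factors rather than blowing up like $\sigma_{\underline{T}}^{-2}=\mathrm{poly}(n)$. First I would record the reweighting identity: the density $\mu(t)\propto \mathbbm{1}[\underline{T}\le t\le \overline{T}]/t$ has normalizer $\log(\overline{T}/\underline{T})$, so with $\lambda(t)=\frac{t\log(\overline{T}/\underline{T})}{\overline{T}-\underline{T}}$ the product $\mu(t)\lambda(t)$ is the (normalized) density of $\mathrm{Unif}[\underline{T},\overline{T}]$; hence $\mathbb{E}_{t\sim\mu}[\lambda(t)g(t)]=\frac{1}{\overline{T}-\underline{T}}\int_{\underline{T}}^{\overline{T}}g(t)\dt$ for every measurable $g$, and therefore $\mathbb{E}_{i_j,t_j,x_j}[\lambda(t_j)\|s(x_j,t_j)-\nabla p_{t_j}(x_j|x_{0,i_j})\|^2]=\frac1n\sum_{i=1}^n\ell_s(x_i)$ with $\ell_s$ as in \cref{section:Generalization} (up to the normalization convention used there). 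Thus the population version of the $\lambda$-weighted $M$-sample objective is exactly $\frac1n\sum_i\ell_s(x_i)$, and the task reduces to controlling the deviation of the $M$-sample empirical average from it, uniformly over $s\in\mathcal{S}$.

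Second, I would prove the analog of \cref{Lemma:Appendix-Generalization-SubGaussian}: conditionally on $(i_j,t_j)$, the quantity $\sqrt{\lambda(t_j)}\|s(x_j,t_j)-\nabla p_{t_j}(x_j|x_{0,i_j})\|$ is bounded by $C_s$ plus a sub-Gaussian term with proxy $\sqrt{d\lambda(t_j)}/\sigma_{t_j}$. The key elementary computation is that $\lambda(t)/\sigma_t^2=\tilde{\Ord}(1)$ uniformly on $[\underline{T},\overline{T}]$: for $t\lesssim 1$ one has $\lambda(t)/\sigma_t^2\simeq \frac{\log(\overline{T}/\underline{T})}{\overline{T}-\underline{T}}$, and for $t\gtrsim 1$ one has $\lambda(t)/\sigma_t^2\lesssim\frac{\overline{T}\log(\overline{T}/\underline{T})}{\overline{T}-\underline{T}}$, both $\lesssim \overline{T}$ after absorbing logarithms. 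Since $\sqrt{\lambda(t_j)}\nabla p_{t_j}(x_j|x_{0,i_j})=\sqrt{\lambda(t_j)}(x_j-m_{t_j}x_{0,i_j})/\sigma_{t_j}^2$ and $(x_j-m_{t_j}x_{0,i_j})/\sigma_{t_j}\sim\mathcal{N}(0,I_d)$, this is a centered Gaussian of covariance $\frac{\lambda(t_j)}{\sigma_{t_j}^2}I_d$, so its norm has the claimed $\tilde{\Ord}(1)$ sub-Gaussian tail; moreover $\{\sqrt{\lambda(t_j)}\|\nabla p_{t_j}(x_j|x_{0,i_j})\|\le c\}$ forces $\|x_j\|\lesssim (\sigma_{t_j}/\sqrt{\lambda(t_j)})\,c+1=\tilde{\Ord}(c)+1$, i.e.\ the sample stays in a ball whose radius is polylogarithmic in $c$.

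Third, I would re-run the peeling and symmetrization argument of \cref{theorem:ApproxViaSample-1} (which is itself the argument of \cref{Theorem:ExtendedSchmidt}): introduce an independent copy of the $M$ samples, symmetrize, and partition the event space into shells $\mathcal{E}_i$ on which $\sup_{s\in\mathcal{S}}\max_j \sqrt{\lambda(t_j)}\|s(x_j,t_j)-\nabla p_{t_j}(x_j|x_{0,i_j})\|$ lies in $[C_s+(i-1)\tilde{\Ord}(1),\,C_s+i\tilde{\Ord}(1))$; by the sub-Gaussian bound, $\mathbb{P}[\mathcal{E}_i]\lesssim\exp(-(i-1)^2/2)$. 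On $\mathcal{E}_i$ all samples lie in $[-\tilde{\Ord}(C_s+i),\tilde{\Ord}(C_s+i)]^{d+1}$, so a $\delta/\tilde{\Ord}(C_s+i)$-net of $\mathcal{S}$ in the corresponding sup-norm — of log-cardinality $\lesssim SL\log(\delta^{-1}L\|W\|_\infty(B\lor1)(C_s+i))$ by the covering-number estimate of \cref{subsection:Appendix-CoveringNumber} — together with Bernstein's inequality (variance proxy $\lesssim C_s^2+\overline{T}i^2$) yields, exactly as in the proof of \cref{theorem:ApproxViaSample-1}, a conditional deviation bound of order $\frac{C_s^2+\overline{T}i^2}{M}SL\log(\delta^{-1}L\|W\|_\infty(B\lor1)(C_s+i))+\delta$. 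Summing these against the Gaussian weights $\mathbb{P}[\mathcal{E}_i]$ makes the $i$-dependence converge and leaves $\frac{C_s^2+\overline{T}}{M}SL\log(\delta^{-1}L\|W\|_\infty(B\lor1)(C_s))+\delta$. Finally, since $s_1$ minimizes the $\lambda$-weighted $M$-sample objective, comparing against an arbitrary fixed $s\in\mathcal{S}$ and taking the population expectation — which equals $\frac1n\sum_i\ell_s(x_i)$ by the first step — gives the stated inequality.

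The main obstacle is not conceptual but bookkeeping: one must verify carefully that the $\sqrt{\lambda(t)}$-weighting produces a boundedness/sub-Gaussianity constant that is $\tilde{\Ord}(1)$ (i.e.\ $\overline{T}$-polylogarithmic) \emph{uniformly} over $t\in[\underline{T},\overline{T}]$ and across both the $t\lesssim 1$ and $t\gtrsim 1$ regimes — this is precisely what replaces the catastrophic $\sigma_{\underline{T}}^{-2}=\mathrm{poly}(n)$ of the first approach — and then track that this constant propagates cleanly to the shell spacings, the net radii, and the Bernstein variance in the peeling step without any negative power of $\underline{T}$ sneaking back in through $\sigma_t$ in the denominators.
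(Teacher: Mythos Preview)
Your proposal is correct and takes essentially the same approach as the paper: the paper's proof is just the terse instruction to ``replace $\|s(x_j,t_j)-\nabla p_{t_j}(x_j|x_{0,i})\|$ by $\sqrt{\lambda(t_j)}\|s(x_j,t_j)-\nabla p_{t_j}(x_j|x_{0,i})\|$'' in the proof of \cref{theorem:ApproxViaSample-1}, replacing $\sup_{x,t}\|s(x,t)\|$ by $C_s$ and $\sqrt{d}/\sigma_{\underline{T}}$ by $\sup_t \sqrt{d\lambda(t)}/\sigma_t$, and then invoking the reweighting identity $\mathbb{E}_{i_j,t_j,x_j}[\lambda(t_j)\|s(x_j,t_j)-\nabla p_{t_j}(x_j|x_{0,i_j})\|^2]=\frac1n\sum_i\ell_s(x_i)$ --- exactly the ingredients you laid out, with your version filling in considerably more of the bookkeeping (the explicit $\lambda(t)/\sigma_t^2\lesssim\overline{T}$ computation and the shell-by-shell summation) than the paper does.
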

\begin{proof}
    We just replace $\|s(x_j,t_j)-\nabla p_{t_j}(x_{j}|x_{0,i})\|$ by $\sqrt{\lambda(t_j)}\|s(x_j,t_j)-\nabla p_{t_j}(x_{j}|x_{0,i})\|$ in the previous lemma.
    Similarly to \cref{Lemma:Appendix-Generalization-SubGaussian}, we have that, for all $t>0$, 
    \begin{align}
       \mathbb{P}\left[ \lambda^\frac12(t_j)\|s(x_j,t_j)-\nabla p_{t_j}(x_j|x_{0,i_j})\|\geq \sup_{(x,t)}\lambda^\frac12(t)\|s(x,t)\| + \frac{\sqrt{d}\lambda^\frac12(t_j)t}{\sigma_{t_j}}
       \right]\leq 2 \exp\left(-t^2/2\right)
       .
    \end{align}
    Then, we replace $\sup_{(x,t)}\|s(x,t)\|$ by $\sup_{(x,t)}\lambda^\frac12(t)\|s(x,t)\|$, and $\frac{\sqrt{d}}{\sigma_{\underline{T}}}$ by $\sup_t\frac{\sqrt{d}\lambda^\frac12(t)}{\sigma_{t}}$, respectively, to obtain that
    \begin{align}
&\mathbb{E}_{i_j,t_j,x_j}\mathbb{E}_{i_j',t_j',x_j'}\hspace{-.8mm}\left[\lambda(t_j)\|s_1(x_j,t_j)-\nabla p_{t_j}(x_{j}|x_{0,i_j})\|^2\right]  -
\inf_{s\in \mathcal{S}}\mathbb{E}_{i_j,t_j,x_j}\hspace{-.8mm}\left[\lambda(t_j)\|s(x_j,t_j)-\nabla p_{t_j}(x_{j}|x_{0,i_j})\|^2\right]
\\ & \lesssim \frac{C_s^2+\overline{T}}{M}SL\log(\delta^{-1}L\|W\|_\infty(B\lor 1)(C_s))+\delta,
\label{eq:Appendix-Genelization-Sample-2-1}
    \end{align}
    where $(i_j',t_j',x_j')$ are the independent copy of $(i_j,t_j,x_j)$.
    Note that
    \begin{align}
\mathbb{E}_{i_j,t_j,x_j}\hspace{-.8mm}\left[\lambda(t_j)\|s(x_j,t_j)-\nabla p_{t_j}(x_{j}|x_{0,i_j})\|^2\right]
=
\frac1n\sum_{i=1}^n \ell(x_i)
\label{eq:Appendix-Genelization-Sample-2-2}
    \end{align}
    for all (fixed) $s$.
    \eqref{eq:Appendix-Genelization-Sample-2-1} and \eqref{eq:Appendix-Genelization-Sample-2-2} yield that
    \begin{align}
        \mathbb{E}_{(i_j,t_j,x_j)}\left[\frac1n\sum_{i=1}^n \ell_1(x_i)\right]-\int_{\ell_s\colon s\in \mathcal{S}}\frac1n\sum_{i=1}^n \ell_s(x_i)
        \leq \frac{C_s^2+\overline{T}}{M}SL\log(\delta^{-1}L\|W\|_\infty(B\lor 1)(C_s))+\delta,
    \end{align}
    which concludes the proof.
\end{proof}
\begin{remark}
    When $\|s(x,t)\|=\sqrt{\log N}/\sigma_t$ holds, $\underline{T}=\mathrm{poly}(N^{-1}), \overline{T}=\Ord(\log N)$, we have $\sup_{(x,t)}\sqrt{\lambda(t)}\|s(x,t)\|=C_s\lesssim \sqrt{\log N}$.
    we set $N=n^\frac{d}{2s+d}$, $\delta = n^{-\frac{2s}{d+2s}}$ and use the network class in \cref{theorem:Approximation} to obtain that
    \begin{align}
         \mathbb{E}_{(i_j,t_j,x_j)}\left[\frac1n\sum_{i=1}^n \ell_1(x_i)\right]-\int_{\ell_s\colon s\in \mathcal{S}}\frac1n\sum_{i=1}^n \ell_s(x_i)
        \lesssim n^{-\frac{2s}{d+2s}}\log^{17} n.
    \end{align}
\end{remark}
\section{Estimation error analysis}\label{section:Estimation}
The following Girsanov theorem is useful when converting the error of the score matching to the estimation error.
\begin{proposition}[Girsanov's Theorem \citep{karatzas1991brownian}] \label{Proposition:Girsanov}
    Let $p_0$ be any probability distribution, and let $Z=(Z_t)_{t\in [0,T]}, Z'=(Z_t')_{t\in [0,T]}$ be two different processes satisfying
    \begin{align}
       & \mathrm{d}Z_t = b(Z_t,t)\dt + \sigma(t)\mathrm{d}B_t, \quad Z_0 \sim p_0,
        \\ &
        \mathrm{d}Z_t' = b'(Z_t',t)\dt + \sigma(t)\mathrm{d}B_t, \quad Z_0' \sim p_0.
    \end{align}
    We define the distributions of $Z_t$ and $Z_t'$ as $p_t$ and $p_t'$, and the path measures of $Z$ and $Z'$ as $\mathbb{P}$ and $\mathbb{P}'$, respectively.

    Suppose the following Novikov’s condition:
    \begin{align}\label{eq:Appendix-Estimation-Girsanov-1}
        \mathbb{E}_{\mathbb{P}}\left[\exp\left(\int_0^T \frac12 \int_x \sigma^{-2}(t)\|(b-b')(x,t)\|^2 \dx\dt \right)\right] < \infty.
    \end{align}
    Then, 
    the Radon-Nikodym derivative of $\mathbb{P}$ with respect to $\mathbb{P}'$ is
    \begin{align}
        \frac{\mathrm{d}\mathbb{P}}{\mathrm{d}\mathbb{P}'}(Z)
        =\exp\left\{- \frac12 \int_0^T \sigma(t)^{-2}\|(b-b')(Z_t,t)\|^2 \dt -  \int_0^T \sigma(t)^{-1}(b-b')(Z_t,t)\mathrm{d}B_t\right\}
        ,
    \end{align}
    and therefore we have that
    \begin{align}
       \mathrm{ KL}(p_T|p_T') \leq    \mathrm{ KL}(\mathbb{P}|\mathbb{P}')= \int_0^T \frac12 \int_x p_t(x) \sigma(t)^{-2} \|(b-b')(x,t)\|^2 \dx\dt 
        .
    \end{align}
    Moreover, \citet{chen2022sampling} showed that if $\int_x p_t(x) \sigma^{-2}(t) \|(b-b')(x,t)\|^2 \dx \leq C$ holds for some consant $C$ over all $t$, we have that
    \begin{align}
               \mathrm{ KL}(p_T|p_T') \leq \int_0^T \frac12 \int_x p_t(x) \sigma(t)^2 \|(b-b')(x,t)\|^2 \dx\dt ,
    \end{align}
    even if the Novikov’s condition \eqref{eq:Appendix-Estimation-Girsanov-1} is not satisfied.
\end{proposition}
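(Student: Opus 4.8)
The plan is to run the classical proof of Girsanov's theorem through the Dol\'eans--Dade exponential martingale and then read off both KL estimates. Write $\theta_t := \sigma(t)^{-1}(b-b')(Z_t,t)$, so that under $\mathbb{P}$ the driving Brownian motion $B$ and the process $\widetilde B_t := B_t + \int_0^t \theta_s\,\mathrm{d}s$ are related by a drift shift and $\mathrm{d}Z_t = b'(Z_t,t)\,\mathrm{d}t + \sigma(t)\,\mathrm{d}\widetilde B_t$ with $Z_0\sim p_0$. Set $\mathcal{E}_t := \exp\{-\int_0^t \theta_s^{\!\top}\,\mathrm{d}B_s - \tfrac12\int_0^t \|\theta_s\|^2\,\mathrm{d}s\}$; this is always a nonnegative $\mathbb{P}$-local martingale, and under Novikov's condition \eqref{eq:Appendix-Estimation-Girsanov-1} (which bounds $\mathbb{E}_{\mathbb{P}}[\exp(\tfrac12\int_0^T\|\theta_t\|^2\,\mathrm{d}t)]$) it is a genuine martingale with $\mathbb{E}_{\mathbb{P}}[\mathcal{E}_T]=1$. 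Hence $\mathrm{d}\mathbb{Q} := \mathcal{E}_T\,\mathrm{d}\mathbb{P}$ defines a probability measure on path space, Girsanov's theorem gives that $\widetilde B$ is a $\mathbb{Q}$-Brownian motion, and therefore under $\mathbb{Q}$ the path $Z$ solves the same SDE (drift $b'$, initial law $p_0$) that defines $\mathbb{P}'$. By weak uniqueness of that SDE --- valid for the time-reversed process in our setting --- one concludes $\mathbb{Q}=\mathbb{P}'$, i.e. $\mathrm{d}\mathbb{P}'/\mathrm{d}\mathbb{P} = \mathcal{E}_T$, which (up to the order of the two terms) is the stated density formula.

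Given this, the relative entropy is a direct computation: $\mathrm{KL}(\mathbb{P}|\mathbb{P}') = -\mathbb{E}_{\mathbb{P}}[\log \mathcal{E}_T] = \mathbb{E}_{\mathbb{P}}\big[\int_0^T \theta_t^{\!\top}\,\mathrm{d}B_t\big] + \tfrac12\,\mathbb{E}_{\mathbb{P}}\big[\int_0^T \|\theta_t\|^2\,\mathrm{d}t\big]$. The It\^o integral is a true $\mathbb{P}$-martingale (again by Novikov), so its expectation vanishes; and since $Z_t\sim p_t$ under $\mathbb{P}$, Fubini gives $\mathrm{KL}(\mathbb{P}|\mathbb{P}') = \tfrac12\int_0^T \sigma(t)^{-2}\!\int_x p_t(x)\,\|(b-b')(x,t)\|^2\,\mathrm{d}x\,\mathrm{d}t$. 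The remaining inequality $\mathrm{KL}(p_T|p_T')\le \mathrm{KL}(\mathbb{P}|\mathbb{P}')$ is the data-processing inequality for KL divergence applied to the measurable map $(Z_t)_{t\in[0,T]}\mapsto Z_T$, whose pushforwards of $\mathbb{P}$ and $\mathbb{P}'$ are exactly $p_T$ and $p_T'$.

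For the last assertion --- discarding Novikov's condition when $\int_x p_t(x)\,\sigma(t)^{-2}\|(b-b')(x,t)\|^2\,\mathrm{d}x \le C$ uniformly in $t$ --- I would follow \citet{chen2022sampling} with a localization argument. Introduce exit times $\tau_R := \inf\{t:\|Z_t\|\ge R\}$ (or mollify $b-b'$ so the drift difference is bounded), apply the steps above on $[0,T\land\tau_R]$, where the stopped exponential is a true martingale and the KL identity holds with $\int_0^T$ replaced by $\int_0^{T\land\tau_R}$, and then let $R\to\infty$. The uniform bound $C$ controls the stopped quadratic-variation term, so monotone/dominated convergence upgrades the estimate to $\mathrm{KL}(p_T|p_T')\le \tfrac12\int_0^T \sigma(t)^{-2}\!\int_x p_t(x)\,\|(b-b')(x,t)\|^2\,\mathrm{d}x\,\mathrm{d}t$, using lower semicontinuity of the relative entropy to pass the limit through to the marginal laws.

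The step I expect to be the main obstacle is making the identification $\mathbb{Q}=\mathbb{P}'$ and the localization fully rigorous: one must invoke weak uniqueness for the backward SDE whose drift involves the (non-Lipschitz, blowing up as $t\to 0$) score, and one must verify that dropping Novikov's condition does not destroy the vanishing of the It\^o-integral expectation in the limit. Everything else is standard stochastic-calculus bookkeeping once the change of measure is in place.
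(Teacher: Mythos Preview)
The paper does not prove this proposition at all: it is stated as a known result, attributed to \citet{karatzas1991brownian} for the classical Girsanov part and to \citet{chen2022sampling} for the relaxation of Novikov's condition, and then used as a black-box tool in the subsequent estimation-error analysis. Your outline is the standard textbook route (Dol\'eans--Dade exponential under Novikov, identification of the tilted law via weak uniqueness, computation of path KL, data-processing for the marginals, localization for the non-Novikov case) and is correct; there is simply nothing to compare it against in the paper itself.
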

\subsection{Estimation bounds in the $\rm TV$ distance}
We show the upper and lower estimation rates in the total variation distance in this subsection.
Let $\bar{Y}$ be $\hat{Y}$ with replacing $\hat{Y}_0 \sim \mathcal{N}(0,I_d)$ by $\bar{Y}_0 \sim p_t$.
First notice that
\begin{align}
\mathbb{E}[{\rm TV(X_0, \hat{Y}_{\overline{T}-\underline{T}})}] & \lesssim \mathbb{E}[{\rm TV(Y_{\overline{T}}, Y_{\overline{T}-\underline{T}})}] 
+\mathbb{E}[{\rm TV}(\bar{Y}_{\overline{T}-\underline{T}},\hat{Y}_{\overline{T}-\underline{T}})]
+
   \mathbb{E}[{\rm TV}(\bar{Y}_{\overline{T}-\underline{T}},Y_{\overline{T}-\underline{T}}) ]  
\\ & \lesssim {\rm TV(X_0, X_{\underline{T}})} +
  \mathbb{E}[{\rm TV}(X_{\overline{T}},\hat{Y}_0) ]
+ 
   \mathbb{E}[{\rm TV}(\bar{Y}_{\overline{T}-\underline{T}},Y_{\overline{T}-\underline{T}}) ] 
\\ & = {\rm TV(X_0, X_{\underline{T}})}  +
   \mathbb{E}[{\rm TV}(X_{\overline{T}},\mathcal{N}(0,I_d)) ]
   +
     \mathbb{E}[{\rm TV}(\bar{Y}_{\overline{T}-\underline{T}},Y_{\overline{T}-\underline{T}}) ]
   \label{eq:AppendixTVDecomposition}
\end{align}
Here, $\mathbb{E}[{\rm TV(Y_{\overline{T}}, Y_{\overline{T}-\underline{T}})}] ={\rm TV(X_0, X_{\underline{T}})} +
  \mathbb{E}[{\rm TV}(X_{\overline{T}},\hat{Y}_0) ]$ follows from the correspondence between the forward and backward processes, and $\mathbb{E}[{\rm TV}(\bar{Y}_{\overline{T}-\underline{T}},\hat{Y}_{\overline{T}-\underline{T}})]\leq \mathbb{E}[{\rm TV}(X_{\overline{T}},\hat{Y}_0) ]$ follows from the definitions of $\hat{Y}$ and $\bar{Y}$ (the only difference is the initial distribution.).
We then bound the three terms in \eqref{eq:AppendixTVDecomposition} in a row.
We begin with the first term.
\begin{theorem}\label{lemma:TotalVariation-Init}
    We have that
    \begin{align}
        {\rm TV(X_0, X_{\underline{T}})}\lesssim
     \sqrt{\underline{T}}n^{\Ord(1)} \lesssim  n^{-s/(d+2s)} 
    \end{align}
    for $\underline{T}=n^{-\Ord(1)}$. 
\end{theorem}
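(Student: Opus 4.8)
The plan is to exploit the fact that $X_{\underline{T}}$ is obtained from $X_0$ by a mild contraction by $m_{\underline{T}}=1-\Theta(\underline{T})$ followed by a Gaussian smoothing of scale $\sigma_{\underline{T}}\simeq\sqrt{\underline{T}}$, and to show $\mathrm{TV}(X_0,X_{\underline{T}})\lesssim\underline{T}^{c}$ for some constant $c=c(d,p,q,s,C_f)>0$. The factor $\sqrt{\underline{T}}\,n^{\Ord(1)}$ in the statement then follows by writing $\underline{T}^{c}=\sqrt{\underline{T}}\cdot\underline{T}^{c-1/2}$ and noting $\underline{T}^{c-1/2}=n^{\Ord(1)}$ when $\underline{T}=n^{-\Ord(1)}$, while the final bound $n^{-s/(d+2s)}$ follows by taking the polynomial exponent of $\underline{T}$ large enough, e.g. $\underline{T}\le n^{-s/(c(d+2s))}$. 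Concretely, let $\tilde p(x):=m_{\underline{T}}^{-d}p_0(x/m_{\underline{T}})$ be the density of $m_{\underline{T}}X_0$ and $\phi_\sigma$ the centred Gaussian density with covariance $\sigma^2 I_d$; since $X_{\underline{T}}=m_{\underline{T}}X_0+\sigma_{\underline{T}}\xi$ with $\xi\sim\mathcal N(0,I_d)$ independent, we have $p_{\underline{T}}=\tilde p*\phi_{\sigma_{\underline{T}}}$ and
\begin{align}
2\,\mathrm{TV}(X_0,X_{\underline{T}})=\|p_0-p_{\underline{T}}\|_{L^1(\R^d)}\le \|p_0-\tilde p\|_{L^1}+\|\tilde p-\tilde p*\phi_{\sigma_{\underline{T}}}\|_{L^1}.
\end{align}

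For the smoothing term I would work in $L^2$ and convert back. Because $p_0$ is bounded by $C_f$, compactly supported in $[-1,1]^d$ and lies in $U(B_{p,q}^s([-1,1]^d))$, the same holds for $\tilde p$ up to constants (dilation by a factor bounded away from $0$ and $\infty$ preserves Besov regularity); combining the Besov Sobolev embedding $B_{p,q}^s\hookrightarrow B_{2,q}^{s-d(1/p-1/2)_+}$ (licit since $s>d(1/p-1/2)_+$) with the fact that $\tilde p$ is bounded and its only discontinuity is the jump across $\partial[-m_{\underline{T}},m_{\underline{T}}]^d$, one obtains $\tilde p\in H^{\sigma_0}(\R^d)$ for some $\sigma_0=\sigma_0(d,p,q,s)\in(0,2]$, i.e. $\int|\xi|^{2\sigma_0}|\widehat{\tilde p}(\xi)|^2\,\mathrm d\xi\lesssim1$. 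Using $1-e^{-\sigma^2|\xi|^2/2}\le(\sigma|\xi|)^{\sigma_0}$ (valid for $\sigma_0\le2$) and Plancherel,
\begin{align}
\|\tilde p-\tilde p*\phi_{\sigma}\|_{L^2}^2=\int\bigl(1-e^{-\sigma^2|\xi|^2/2}\bigr)^2|\widehat{\tilde p}(\xi)|^2\,\mathrm d\xi\le\sigma^{2\sigma_0}\int|\xi|^{2\sigma_0}|\widehat{\tilde p}(\xi)|^2\,\mathrm d\xi\lesssim\sigma^{2\sigma_0}.
\end{align}
Splitting $\|\tilde p-\tilde p*\phi_{\sigma_{\underline{T}}}\|_{L^1}$ over $[-2,2]^d$ (Cauchy--Schwarz against this bounded set, using the display above) and over its complement (where $\tilde p*\phi_{\sigma_{\underline{T}}}$ carries only Gaussian tail mass $\lesssim e^{-c/\underline{T}}$, as $\tilde p$ vanishes off $[-1,1]^d$) gives $\|\tilde p-\tilde p*\phi_{\sigma_{\underline{T}}}\|_{L^1}\lesssim\sigma_{\underline{T}}^{\sigma_0}\lesssim\underline{T}^{\sigma_0/2}$.

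For the contraction term, write $m:=m_{\underline{T}}$ and $m^{-d}p_0(x/m)-p_0(x)=(m^{-d}-1)p_0(x/m)+\bigl(p_0(x/m)-p_0(x)\bigr)$; the first piece contributes $|1-m^d|=\Theta(\underline{T})$ in $L^1$, and for the second I would pass once more to $L^2$ via $\widehat{p_0(\cdot/m)}(\xi)=m^d\widehat{p_0}(m\xi)$, split the frequency integral at a radius $R$, use $|\widehat{p_0}(m\xi)-\widehat{p_0}(\xi)|\le(1-m)\,|\xi|\,\|x\mapsto xp_0(x)\|_{L^1}$ for $|\xi|\le R$ together with the $H^{\sigma_0}$-decay of $\widehat{p_0}$ for $|\xi|>R$, optimise $R$, and convert back to $L^1$ on $[-2,2]^d$ as above. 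This yields $\|p_0-\tilde p\|_{L^1}\lesssim\underline{T}^{c'}$ for some $c'>0$. Collecting the two estimates gives $\mathrm{TV}(X_0,X_{\underline{T}})\lesssim\underline{T}^{c}$ with $c:=\min\{\sigma_0/2,c',1\}>0$, and the reductions of the first paragraph finish the proof.

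The main obstacle is that $p_0$ need not be smooth in the interior of its support — Assumption~\ref{assumption:InBesov} only furnishes boundedness there, and the regime $s>d(1/p-1/2)_+$ genuinely allows discontinuous $p_0$ — so a naive modulus-of-continuity estimate for the Gaussian convolution is unavailable. Routing the argument through the Besov Sobolev embedding and Plancherel extracts a positive (though not sharp) power of $\underline{T}$, and the two ``macroscopic'' effects that survive — the Gaussian mass leaking off $[-1,1]^d$ and the $\Theta(\underline{T})$ jump at $\partial[-1,1]^d$ produced by the contraction — must be controlled separately, as indicated above.
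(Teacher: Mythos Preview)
Your approach is correct and delivers the claim, but the paper takes a different and more elementary route. Instead of working with $p_0$ directly via Fourier analysis, the paper picks the B-spline approximant $f_N$ (with $N=n^{d/(2s+d)}$) from Lemma~\ref{Lemma:SuzukiBesov}, pays the error $\|p_0-f_N\|_{L^1}\lesssim n^{-s/(d+2s)}$, and exploits that $f_N$ is a sum of $N$ tensor B-splines with $\|k\|_\infty=\Ord(\log N)$ and coefficients $|\alpha_i|\le n^{\Ord(1)}$, hence is $n^{\Ord(1)}$-Lipschitz on $\R^d$; a direct Lipschitz estimate after clipping the Gaussian kernel to radius $\Ord(\sqrt{\underline{T}\log n})$ (Lemma~\ref{Lemma:ClipInt}) then gives $\|f_N-K_{\underline{T}}*f_N\|_{L^1}\lesssim n^{\Ord(1)}\sqrt{\underline{T}\log n}$, with the contraction by $m_{\underline{T}}$ absorbed for free since $|x/m_{\underline{T}}-x|\lesssim\underline{T}$ on the bounded support. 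The paper's route recycles the B-spline machinery already built for the score approximation and makes the $n^{\Ord(1)}$ factor in the statement transparent (it is precisely the Lipschitz constant of $f_N$). Your route is self-contained and yields an $n$-free bound $\underline{T}^{c}$, but the key step --- that the zero-extension of $p_0$ to $\R^d$ lies in $H^{\sigma_0}(\R^d)$ for some $\sigma_0>0$ --- should be justified more carefully than ``its only discontinuity is the jump'', since $p_0$ can itself be discontinuous in the interior of the cube when $s\le d/p$: the correct argument combines the embedding $B_{p,q}^s([-1,1]^d)\hookrightarrow B_{2,\infty}^{s-d(1/p-1/2)_+}([-1,1]^d)$ with the fact that zero-extension of a bounded function across a Lipschitz boundary caps the global $L^2$-regularity at $1/2-$, so that any $\sigma_0<\min\{s-d(1/p-1/2)_+,\,1/2\}$ works.
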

\begin{proof}
    We need to evaluate $\|p_0-p_{\underline{T}}\|_{L_1}$.
    Remember that $p_0$ is decomposed as
     \begin{align}
        f_N(x) = \sum_{i=1}^{N} \alpha_{i} \mathbbm{1}[\|x\|_\infty \leq 1] M_{k_i,j_i}^d(x)
    \end{align}
    in \cref{Lemma:SuzukiBesov}, where $\|k\|_\infty \leq K^* = (\Ord(1)+\log N) \nu^{-1} + \Ord(d^{-1}\log N)$ for $\delta = d(1/p - 1)_+$ and $\nu = (2s-\delta)/(2\delta)$, 
    and $\|p_0-f_N\|_{L^1([-1,1]^d)} \lesssim N^{-s/d}\simeq n^{-s/(2s+d)}$ holds.
    Because we take $N=n^{d/(2s+d)}=n^{\Ord(1)}$, we can say that
    each $M_{k_i,j_i}^d(x)$ is $n^{\Ord(1)}$-Lipschitz.
    Moreover,  $|\alpha_{i}| \lesssim N^{(\nu^{-1} + d^{-1})(d/p - s)} = n^{\Ord(1)}$.
    Therefore, $f_N$ is $n^{\Ord(1)}$-Lipschitz.
    
    We decompose $p_0$ as $p_0 = f_N + (p_0-f_N)$ using the above $f_N$.
    Then we have that
    \begin{align}
      &  \left| p_{\underline{T}}(x)
        -
      \int \frac{f_N(y)}{\sigma_{\underline{T}}^d(2\pi)^\frac{d}2} \exp\left(-\frac{\|x-m_{\underline{T}}y\|^2}{2\sigma_{\underline{T}}^2}\right)\dy
        \right| \label{eq:Appendix-Estimation-2-1}
    \\  &  =
         \left| 
      \int \frac{(p_0(y)-f_N(y))}{\sigma_{\underline{T}}^d(2\pi)^\frac{d}2} \exp\left(-\frac{\|x-m_{\underline{T}}y\|^2}{2\sigma_{\underline{T}}^2}\right)\dy
        \right|
        \\
        &\leq  \int \frac{\left| p_0(y)-f_N(y)    \right|}{\sigma_{\underline{T}}^d(2\pi)^\frac{d}2} \exp\left(-\frac{\|x-m_{\underline{T}}y\|^2}{2\sigma_{\underline{T}}^2}\right)\dy.
    \end{align}
    Integrating this over all $x$ yields that
    \begin{align}
        \int  \left| p_{\underline{T}}(x)
        -
      \int \frac{f_N(y)}{\sigma_{\underline{T}}^d(2\pi)^\frac{d}2} \exp\left(-\frac{\|x-m_{\underline{T}}y\|^2}{2\sigma_{\underline{T}}^2}\right)\dy
        \right| \dx 
     &   \leq
       \int \int \frac{\left| p_0(y)-f_N(y)    \right|}{\sigma_{\underline{T}}^d(2\pi)^\frac{d}2} \exp\left(-\frac{\|x-m_{\underline{T}}y\|^2}{2\sigma_{\underline{T}}^2}\right)\dy\dx
       \\ & 
       =  \int \left| p_0(y)-f_N(y)    \right| \int\frac{1}{\sigma_{\underline{T}}^d(2\pi)^\frac{d}2} \exp\left(-\frac{\|x-m_{\underline{T}}y\|^2}{2\sigma_{\underline{T}}^2}\right)\dx \dy
       \\ & \leq
        \int \left| p_0(y)-f_N(y)  \right|\dy
        =\|p_0-f_N\|_{L^1([-1,1]^d)}.
    \end{align}
    Thus, $\|p_0-p_{\underline{T}}\|_{L_1}$ is upper bounded by 
    \begin{align}\label{eq:EstimationL1Oozappa-4}
        \|p_0-f_N\|_{L^1([-1,1]^d)} + 
     \underbrace{\int \left|  f_N(x)-  \int \frac{f_N(y)}{\sigma_{\underline{T}}^d(2\pi)^\frac{d}2} \exp\left(-\frac{\|x-m_{\underline{T}}y\|^2}{2\sigma_{\underline{T}}^2}\right)\dy
        \right| \dx}_{\text{if $f_N$ is replaced by $p_0$, this is equal to $\|p_0-p_t\|_{L_1}$}} +  \underbrace{\|p_0-f_N\|_{L^1([-1,1]^d)}}_{\eqref{eq:Appendix-Estimation-2-1}}.
    \end{align}
    Because $\|p_0-f_N\|_{L^1([-1,1]^d)}$ is bounded by $n^{-s/(2s+d)}$, we focus on the second term.

    Note that at each $x$,
    \begin{align}\label{eq:EstimationL1Oozappa-1}
       \left|
       \int \frac{f_N(y)}{\sigma_{\underline{T}}^d(2\pi)^\frac{d}2} \exp\left(-\frac{\|x-m_{\underline{T}}y\|^2}{2\sigma_{\underline{T}}^2}\right)\dy
       -
       \int_{A^x} \frac{f_N(y)}{\sigma_{\underline{T}}^d(2\pi)^\frac{d}2} \exp\left(-\frac{\|x-m_{\underline{T}}y\|^2}{2\sigma_{\underline{T}}^2}\right)\dy
       \right|\lesssim n^{-s/(d+2s)},
    \end{align}
    where $A^x = \prod_{i=1}^d a^x_i $ with $ a^x_i =  [\frac{x_i}{m_{\underline{T}}} - \frac{\sigma_{\underline{T}}\Ord(1)}{m_{\underline{T}}}\sqrt{\log n}, \frac{x_i}{m_{\underline{T}}} + \frac{\sigma_{\underline{T}}\Ord(1)}{m_{\underline{T}}}\sqrt{\log n}]$, according to \cref{Lemma:ClipInt}.
    Because $\sigma_{\underline{T}} = \Ord(\sqrt{\underline{T}})$ and $m_{\underline{T}}=\Ord(1)$ for sufficiently small ${\underline{T}}$, the value of $p_{\underline{T}}(x)$ is almost determined by the value from points that is only $\Ord(\sqrt{\underline{T}\log n})$ away from $x$.
    Because of the Lipschitzness of $p_{0}$, for each $x\in [-m_{\underline{T}} - \Ord(\sqrt{\underline{T}\log n}),m_{\underline{T}}+\Ord(\sqrt{\underline{T}\log n})]^d$, 
    \begin{align}\label{eq:EstimationL1Oozappa-2}
    \left|
\int_{A^x} \frac{f_N(y)}{\sigma_{\underline{T}}^d(2\pi)^\frac{d}2} \exp\left(-\frac{\|x-m_{\underline{T}}y\|^2}{2\sigma_{\underline{T}}^2}\right)\dy
-
\int_{A^x} \frac{f_N(x)}{\sigma_{\underline{T}}^d(2\pi)^\frac{d}2} \exp\left(-\frac{\|x-m_{\underline{T}}y\|^2}{2\sigma_{\underline{T}}^2}\right)\dy
    \right|
        \leq n^{\Ord(1)} \cdot \sqrt{\underline{T}\log n}.
    \end{align}
    where we used the Lipshitzness of $f_N$.
    By taking $\underline{T}$ polynomially small w.r.t. $n$, we have that $\eqref{eq:EstimationL1Oozappa-2}\lesssim n^{-s/(d+2s)}$.
    Moreover, 
    \begin{align}
   & \left|
    \int_{A^x} \frac{f_N(x)}{\sigma_{\underline{T}}^d(2\pi)^\frac{d}2} \exp\left(-\frac{\|x-m_{\underline{T}}y\|^2}{2\sigma_{\underline{T}}^2}\right)\dy
    - f_N(x)
    \right|
  \\ &  =
    \left|
\int_{A^x} \frac{f_N(x)}{\sigma_{\underline{T}}^d(2\pi)^\frac{d}2} \exp\left(-\frac{\|x-m_{\underline{T}}y\|^2}{2\sigma_{\underline{T}}^2}\right)\dy
-
\int \frac{f_N(x)}{\sigma_{\underline{T}}^d(2\pi)^\frac{d}2} \exp\left(-\frac{\|x-m_{\underline{T}}y\|^2}{2\sigma_{\underline{T}}^2}\right)\dy
    \right| 
      \lesssim n^{-s/(d+2s)},
      \label{eq:EstimationL1Oozappa-3}
    \end{align}
    again with \cref{Lemma:ClipInt}.
    
    Therefore, combining  
    \eqref{eq:EstimationL1Oozappa-4}, \eqref{eq:EstimationL1Oozappa-1}, \eqref{eq:EstimationL1Oozappa-2}, and \eqref{eq:EstimationL1Oozappa-3},
    we obtain that
    \begin{align}
        \|p_0-p_{\underline{T}}\|_{L_1} \lesssim \sqrt{\underline{T}}n^{\Ord(1)}  \lesssim n^{-s/(d+2s)} .
    \end{align}
    for $\underline{T}=n^{-\Ord(1)}$. 
\end{proof}
We next consider the second term.
\begin{lemma}\label{lemma:TVBound-NoiseExp}
    We can bound ${\rm TV}(X_{\overline{T}},\mathcal{N}(0,I_d)) $ as follows.
    \begin{align}
  {\rm TV}(X_{\overline{T}},\mathcal{N}(0,I_d))  \lesssim \exp(-\betalow \overline{T}).
    \end{align}  
\end{lemma}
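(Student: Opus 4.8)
The distribution of $X_{\overline{T}}$ is a Gaussian mixture: conditionally on $X_0$ we have $X_{\overline{T}}\mid X_0\sim\mathcal{N}(m_{\overline{T}}X_0,\sigma_{\overline{T}}^2I_d)$ with $m_{\overline{T}}=\exp(-\int_0^{\overline{T}}\beta_s\mathrm{d}s)$ and $\sigma_{\overline{T}}^2=1-m_{\overline{T}}^2$, while the target $\mathcal{N}(0,I_d)$ is the (constant) mixture of itself. Since the total variation distance is jointly convex, the first step is to write
\begin{align}
\mathrm{TV}(X_{\overline{T}},\mathcal{N}(0,I_d))\leq \mathbb{E}_{X_0\sim p_0}\bigl[\mathrm{TV}(\mathcal{N}(m_{\overline{T}}X_0,\sigma_{\overline{T}}^2I_d),\mathcal{N}(0,I_d))\bigr],
\end{align}
and then to bound the inner quantity uniformly over $X_0$, using that $\operatorname{supp}(p_0)\subseteq[-1,1]^d$, hence $\|X_0\|^2\leq d$ almost surely.

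Next I would apply Pinsker's inequality, $\mathrm{TV}(\mu,\nu)\leq\sqrt{\tfrac12\mathrm{KL}(\mu\|\nu)}$, together with the closed form of the KL divergence between Gaussians:
\begin{align}
\mathrm{KL}\bigl(\mathcal{N}(m_{\overline{T}}X_0,\sigma_{\overline{T}}^2I_d)\,\|\,\mathcal{N}(0,I_d)\bigr)
=\tfrac12\bigl[d(\sigma_{\overline{T}}^2-1-\log\sigma_{\overline{T}}^2)+m_{\overline{T}}^2\|X_0\|^2\bigr].
\end{align}
Using $\sigma_{\overline{T}}^2=1-m_{\overline{T}}^2$ and $-\log(1-u)\leq \tfrac{u}{1-u}\leq 2u$ for $0\leq u\leq\tfrac12$, one gets $\sigma_{\overline{T}}^2-1-\log\sigma_{\overline{T}}^2=-m_{\overline{T}}^2-\log(1-m_{\overline{T}}^2)\leq m_{\overline{T}}^2$ whenever $m_{\overline{T}}^2\leq\tfrac12$. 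Combined with $\|X_0\|^2\leq d$ this yields $\mathrm{KL}\leq d\,m_{\overline{T}}^2$, and since $\beta_t\geq\betalow$ we have $m_{\overline{T}}\leq\exp(-\betalow\overline{T})$, so $\mathrm{KL}\leq d\exp(-2\betalow\overline{T})$. Plugging into Pinsker gives $\mathrm{TV}(X_{\overline{T}},\mathcal{N}(0,I_d))\lesssim\sqrt{d}\exp(-\betalow\overline{T})\lesssim\exp(-\betalow\overline{T})$, as claimed (the condition $m_{\overline{T}}^2\leq\tfrac12$ holds once $\overline{T}\gtrsim1$, and for the remaining bounded range of $\overline{T}$ the bound is trivial from $\mathrm{TV}\leq1$ after adjusting the hidden constant).

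The computation is entirely routine; the only point requiring a little care is controlling the $-\log\sigma_{\overline{T}}^2$ term so that the whole KL is genuinely of order $m_{\overline{T}}^2$ rather than something larger, which is exactly where the smallness of $m_{\overline{T}}$ (equivalently, $\overline{T}$ not too small) enters. Everything else is the standard convexity-plus-Pinsker argument, so I do not anticipate a substantive obstacle.
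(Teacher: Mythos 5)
Your argument is correct, but it takes a different route from the paper. The paper's proof is a two-line appeal to the exponential decay of relative entropy along the Ornstein--Uhlenbeck semigroup (citing Bakry et al.): $\mathrm{TV}(X_{\overline{T}},\mathcal{N}(0,I_d))\lesssim\sqrt{\mathrm{KL}(p_{\overline{T}}\|\mathcal{N}(0,I_d))}\leq\exp(-\betalow\overline{T})\sqrt{\mathrm{KL}(p_0\|\mathcal{N}(0,I_d))}$, and then uses the two-sided density bounds $C_f^{-1}\leq p_0\leq C_f$ on $[-1,1]^d$ to conclude $\mathrm{KL}(p_0\|\mathcal{N}(0,I_d))=\Ord(1)$. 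You instead condition on $X_0$, use joint convexity of TV over the Gaussian-mixture representation of $p_{\overline{T}}$, and compute the Gaussian KL in closed form, getting $\mathrm{KL}\leq d\,m_{\overline{T}}^2\leq d\exp(-2\betalow\overline{T})$ before applying Pinsker; your handling of the $-\log\sigma_{\overline{T}}^2$ term and of the regime $m_{\overline{T}}^2>\tfrac12$ (where the trivial bound $\mathrm{TV}\leq 1$ absorbs the constant) is correct. Your route is more elementary --- it avoids the log-Sobolev/semigroup machinery entirely --- and it only uses $\operatorname{supp}(p_0)\subseteq[-1,1]^d$ rather than the lower bound $p_0\geq C_f^{-1}$ needed to make $\mathrm{KL}(p_0\|\mathcal{N}(0,I_d))$ finite in the paper's argument, so it is in that respect slightly more general; the paper's version is shorter given the cited result and extends verbatim to settings where the forward process is not an exact Gaussian convolution. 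Both yield the same rate $\exp(-\betalow\overline{T})$.
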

\begin{proof}
    Exponential convergence of the Ornstein–Ulhenbeck process \citep{bakry2014analysis} yields that
    \begin{align}
    {\rm TV}(X_{\overline{T}},\mathcal{N}(0,I_d)) \lesssim \sqrt{{\rm KL}(p_{\overline{T}}\| \mathcal{N}(0,I_d)) } \leq \exp(-\betalow \overline{T}) \sqrt{{\rm KL}(p_0\| \mathcal{N}(0,I_d)) }\lesssim \exp(-\betalow \overline{T}) ,
    \end{align}
    because $C_f^{-1}\leq p_0\leq C_f$ holds and the density of $\mathcal{N}(0,I_d)$ is lower bounded by $\gtrsim 1$ in ${\rm supp}(p_0)=[-1,1]^d$, which means that $ {\rm KL}(p_0\| \mathcal{N}(0,I_d)) =\Ord(1)$.
\end{proof}
The third term $\mathbb{E}[{\rm TV}(\bar{Y}_{\overline{T}-\underline{T}},Y_{\overline{T}-\underline{T}}) ] $ in \eqref{eq:AppendixTVDecomposition} is bounded by Girsanov's theorem \cref{Proposition:Girsanov} and \eqref{eq:Main-Generalization} from \cref{section:Generalization}:
\begin{align}
  \mathbb{E}_{\{x_{0,i}\}_{i=1}^n}  {\rm TV}(\bar{Y}_{\overline{T}-\underline{T}},Y_{\overline{T}-\underline{T}}) & \lesssim 
      \mathbb{E}_{\{x_{0,i}\}_{i=1}^n}   \sqrt{\int_{t=\underline{T}}^{\overline{T}}p_t(x)\beta_t^{-2}\|\hat{s}(x,t)-\nabla \log p_t(x)\|^2\dx\dt}
       \\ & \lesssim \sqrt{ \mathbb{E}_{\{x_{0,i}\}_{i=1}^n} \int_{t=\underline{T}}^{\overline{T}}p_t(x)\beta_t^{-2}\|\hat{s}(x,t)-\nabla \log p_t(x)\|^2\dx\dt}
          \\ & \lesssim \sqrt{  n^{-\frac{2s}{d+2s}}\log^{18}n}
          \\ & \lesssim  n^{-\frac{s}{d+2s}}\log^{9}n.
\end{align}

Therefore, all three terms in \eqref{eq:AppendixTVDecomposition} are bounded as above and \cref{theorem:GeneralizationL1} follows.
We also show the lower bound as follows.
\begin{proposition}\label{proposition:LowerL1}
    Assume that $0<p,q\leq \infty$, $s>0$, and 
    \begin{align}
      s>\left\{d\left(\frac1p - \frac12\right), d\left(\frac1p -1\right), 0\right\}
    \end{align}
    holds. Then, we have that
    \begin{align}
        \inf_{\hat{\mu}} \sup_{p\in B_{p,q}^s([-1,1]^d)} \mathbb{E}[{\rm TV}(\hat{\mu},p)] \gtrsim n^{-s/(d+2s)},
    \end{align}
    where the expectation is with respect to the sample, and the infimum is taken over all estimators based on $n$ observations.
\end{proposition}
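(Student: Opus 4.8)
The plan is the standard reduction of a minimax lower bound to a multiple-hypothesis testing problem via Assouad's lemma, using a family of densities built by perturbing a fixed bounded density with many disjointly-supported smooth bumps of a carefully tuned amplitude. Throughout I interpret the supremum as being over a fixed ball $U(B_{p,q}^s([-1,1]^d);C)$, as in \cref{assumption:InBesov} and \cref{theorem:GeneralizationL1}.

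First I would fix a kernel $\psi\in C_c^\infty(\R^d)$ with $\mathrm{supp}\,\psi\subseteq[0,1]^d$, $\int\psi=0$, and $\|\psi\|_{L^1}>0$, together with a base density $p_{\mathrm b}$ that is bounded above and below by positive constants on $[-1,1]^d$ (say the uniform density). Set $h:=c_0 n^{-1/(d+2s)}$ and pick $M\simeq h^{-d}$ pairwise disjoint cubes of side length $h$ inside $[-1,1]^d$, with corners $x_1,\dots,x_M$. For each sign vector $\omega\in\{-1,+1\}^M$ define
\[
 p_\omega(x):=p_{\mathrm b}(x)+\gamma h^{s}\sum_{k=1}^{M}\omega_k\,\psi\!\Big(\frac{x-x_k}{h}\Big),
\]
with $\gamma>0$ a small absolute constant to be fixed later. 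Since $\int\psi=0$ and the bumps have disjoint supports, $\int p_\omega=1$; and since $\gamma h^s\|\psi\|_{L^\infty}\to0$, for all large $n$ every $p_\omega$ is nonnegative and stays bounded above and below, hence is a legitimate density on $[-1,1]^d$.

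The first substantive step is to show $p_\omega\in U(B^s_{p,q}([-1,1]^d);C)$ for all $\omega$. By the dilation behaviour of the modulus of smoothness, a single rescaled bump $\gamma h^s\psi((\cdot-x_k)/h)$ has $B^s_{p,q}$-seminorm $\simeq\gamma h^s\cdot h^{d/p-s}=\gamma h^{d/p}$; because the $M$ bumps are disjointly supported, the seminorm of their sum is (quasi-)additive in the $\ell^p$ sense and therefore $\simeq\gamma\,(Mh^d)^{1/p}=\gamma$, so taking $\gamma$ small makes $\|p_\omega\|_{B^s_{p,q}}\le C$ uniformly. Here the hypothesis $s>\max\{d(1/p-1/2),d(1/p-1),0\}$ is exactly what makes this dense bump packing admissible — in particular it forces $B^s_{p,q}([-1,1]^d)$ to embed continuously into $L^1$ (and $L^2$), so that TV/$L^1$ is the natural loss, the perturbations are well defined, and $n^{-s/(d+2s)}$ is the genuine minimax rate. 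Carrying out these Besov-norm estimates (the dilation law plus the disjoint-support additivity) is the main technical obstacle, but it can be done exactly as in the wavelet-basis computations underlying \citet{niles2022minimax}.

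It remains to quantify separation and indistinguishability. Disjointness of supports gives, for any $\omega,\omega'$,
\[
 \mathrm{TV}(p_\omega,p_{\omega'})=\tfrac12\|p_\omega-p_{\omega'}\|_{L^1}=\gamma h^{s+d}\|\psi\|_{L^1}\,\rho_{\mathrm H}(\omega,\omega'),
\]
where $\rho_{\mathrm H}$ is the Hamming distance, so the loss dominates $\alpha\,\rho_{\mathrm H}$ with $\alpha:=\gamma h^{s+d}\|\psi\|_{L^1}$ and $\alpha M\simeq\gamma h^{s}$. For $\omega,\omega'$ differing in a single coordinate, the lower bound on $p_{\mathrm b}$ yields $\mathrm{KL}(p_\omega\|p_{\omega'})\lesssim\int(p_\omega-p_{\omega'})^2\lesssim\gamma^2 h^{2s+d}$, hence $\mathrm{KL}(p_\omega^{\otimes n}\|p_{\omega'}^{\otimes n})=n\,\mathrm{KL}(p_\omega\|p_{\omega'})\lesssim\gamma^2 n h^{2s+d}=\gamma^2 c_0^{d+2s}$, which is below an absolute constant once $\gamma$ (or $c_0$) is chosen small; by Pinsker's inequality $\mathrm{TV}(p_\omega^{\otimes n},p_{\omega'}^{\otimes n})\le\tau<1$. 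Assouad's lemma (e.g.\ \citet[Theorem~2.12]{tsybakov2009introduction}) then gives
\[
 \inf_{\hat\mu}\ \sup_{\omega\in\{-1,+1\}^M}\ \mathbb{E}_{p_\omega}\!\big[\mathrm{TV}(\hat\mu,p_\omega)\big]\ \gtrsim\ \alpha M\,(1-\tau)\ \simeq\ \gamma h^{s}\ \simeq\ n^{-s/(d+2s)},
\]
and since every $p_\omega$ belongs to $U(B^s_{p,q}([-1,1]^d);C)$, the supremum over the Besov ball is at least this large, which is the asserted bound.
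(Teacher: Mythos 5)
Your proof is correct, but it takes a genuinely different route from the paper. The paper's proof of \cref{proposition:LowerL1} is a global metric-entropy argument: it quotes Theorem 10 of \citet{triebel2011entropy} for the entropy asymptotics $\log N(U(B^s_{p,q}(\Omega)),\|\cdot\|_r,\eps)\simeq \eps^{-d/s}$ and then invokes Theorem 4 of \citet{yang1999information}, solving $\log N(\eps_n)=n\eps_n^2$ to read off the rate. You instead build an explicit hypercube of perturbed densities and apply Assouad's lemma, with the Besov-ball membership checked by the dilation law for the modulus of smoothness plus $\ell^p$-additivity over disjoint supports. Both are standard and both yield the claimed bound; the entropy route is shorter given the cited results and delivers matching upper and lower bounds simultaneously, whereas your construction is more self-contained and exhibits the hard subfamily explicitly. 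One small correction: the hypothesis $s>\max\{d(1/p-1/2),d(1/p-1),0\}$ is not what makes your bump packing admissible — your Assouad argument goes through without it, since the seminorm computation yields $\simeq\gamma$ and the KL control only uses the lower bound on $p_{\mathrm b}$. In the paper those conditions are needed for Triebel's entropy asymptotics and for the equivalence of the $L^1$- and $L^2$-entropies that the Yang--Barron theorem requires; your parenthetical justification should be adjusted accordingly, but this does not affect the validity of the proof.
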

\begin{proof}
    Theorem 10 of \citet{triebel2011entropy} showed that, for a bounded domain $\Omega \subset \R^d$,
    \begin{align}\label{eq:TriebelCoveringNumber}
        \log N (U(B_{p,q}^s(\Omega)), \|\cdot\|_r, \eps) \simeq \eps^{-d/s},
    \end{align}
    for $0<p,q\leq \infty, 1\leq r<\infty$, and $s>0$ that satisfy
    \begin{align}
        s > \max\left\{d\left(\frac1p - \frac1r\right), d\left(\frac1p -1\right), 0\right\}.
    \end{align}
    Although they considered all Besov functions that does not satisfy $\int f\mathrm{d}\mu = 1$, we can check by following their proof that bounding the functions does not harm the order of the entropy number. 
    Now we use Theorem 4 of \citet{yang1999information}.
    Note that the equivalence of the covering number and the entropy holds because $\|\cdot\|_r$ is a distance, and therefore \eqref{eq:TriebelCoveringNumber} is transferred to the entropy.
    The condition 2 of the theorem is checked directly from \eqref{eq:TriebelCoveringNumber}.
    Moreover, the condition 3 holds if we take $f_*(x) = 1/2^d\ (x\in [-1,1]^d), 0\ (\text{otherwise})$ for all $\alpha \in (0,1)$.
    Finally, if $s>\left\{d(\frac1p - \frac12), d(\frac1p -1), 0\right\}$, $ \log N (U(B_{p,q}^s(\Omega)), \|\cdot\|_2, \eps)\simeq \log N (U(B_{p,q}^s(\Omega)), \|\cdot\|_1, \eps) $ holds.
    Therefore, Theorem 4 (i) of \citet{yang1999information} is applied, and we get
    \begin{align}
        \min_{\hat{\mu}}\max_{p\in B_{p,q}^s}\mathbb{E}[\|\hat{\mu}-p\|_1] \simeq \eps_n,
    \end{align}
    where $\eps_n$ is chosen as $\log N (U(B_{p,q}^s(\Omega)), \|\cdot\|_r, \eps_n) = n\eps_n^2$ holds.
    Together with \eqref{eq:TriebelCoveringNumber}, we obtain the assertion.
\end{proof}

\subsection{Estimation rate in the $W_1$ distance}\label{subsection:Appendix-Generalization-W1}
Similarly to \eqref{eq:AppendixTVDecomposition}, we have the following decomposition:
\begin{align}
     \hspace{-1mm}\mathbb{E}[{W_1(X_0, \hat{Y}_{\overline{T}-\underline{T}})}] &\leq
     \mathbb{E}[{W_1(Y_{\overline{T}}, Y_{\overline{T}-\underline{T}})}] 
+\mathbb{E}[W_1(\bar{Y}_{\overline{T}-\underline{T}},\hat{Y}_{\overline{T}-\underline{T}})]
+
   \mathbb{E}[W_1(\bar{Y}_{\overline{T}-\underline{T}},Y_{\overline{T}-\underline{T}}) ]  
   \\ &
    \leq \mathbb{E}[{W_1(X_0, X_{\underline{T}})}] +
   \mathbb{E}[W_1(\bar{Y}_{\overline{T}-\underline{T}},\hat{Y}_{\overline{T}-\underline{T}}) ] + 
\mathbb{E}[{W_1}(\bar{Y}_{\overline{T}-\underline{T}},Y_{\overline{T}-\underline{T}})].
\label{eq:Appendix-Estimation-W1-1}
\end{align}
First, we bound the first term of \eqref{eq:Appendix-Estimation-W1-1}.
\begin{lemma}\label{lemma:Appendix-Estimation-W1-1}
    We can bound $W_1(X_0, X_{\underline{T}})$ as follows.
    \begin{align}
      W_1(X_0, X_{\underline{T}})\lesssim \sqrt{\underline{T}}
    \end{align}
\end{lemma}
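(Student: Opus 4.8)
The plan is to use the dual (coupling) characterization of the Wasserstein distance: $W_1(\mu,\nu)=\inf_{\pi}\mathbb{E}_{(U,V)\sim\pi}[\|U-V\|]$, where $\pi$ ranges over couplings of $\mu$ and $\nu$. It therefore suffices to exhibit \emph{one} coupling of $X_0$ and $X_{\underline{T}}$ with small expected displacement. The natural choice is the coupling induced by the forward process itself: draw $X_0\sim p_0$ and then $X_{\underline{T}}\mid X_0\sim\mathcal{N}(m_{\underline{T}}X_0,\sigma_{\underline{T}}^2 I_d)$. Equivalently, write $X_{\underline{T}}=m_{\underline{T}}X_0+\sigma_{\underline{T}}Z$ with $Z\sim\mathcal{N}(0,I_d)$ independent of $X_0$.

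With this coupling, $X_0-X_{\underline{T}}=(1-m_{\underline{T}})X_0-\sigma_{\underline{T}}Z$, so by the triangle inequality
\begin{align}
W_1(X_0,X_{\underline{T}})\le \mathbb{E}\big[\|X_0-X_{\underline{T}}\|\big]\le (1-m_{\underline{T}})\,\mathbb{E}\big[\|X_0\|\big]+\sigma_{\underline{T}}\,\mathbb{E}\big[\|Z\|\big].
\end{align}
Now I would bound each factor using facts already available in the excerpt: since $p_0$ is supported on $[-1,1]^d$ (\cref{assumption:InBesov}), we have $\|X_0\|_\infty\le 1$ a.s.\ and hence $\|X_0\|\le\sqrt d$; likewise $\mathbb{E}[\|Z\|]\le\sqrt d$. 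Finally, the asymptotics recorded right after \eqref{eq:Ornstein-Uhlenbeck} give $1-m_t\simeq t\wedge 1$ and $\sigma_t\simeq\sqrt t\wedge 1$, so for $\underline{T}\le 1$ (which holds since $\underline{T}=\mathrm{poly}(n^{-1})$) we get $1-m_{\underline{T}}\lesssim\underline{T}\le\sqrt{\underline{T}}$ and $\sigma_{\underline{T}}\lesssim\sqrt{\underline{T}}$. Combining, $W_1(X_0,X_{\underline{T}})\lesssim\sqrt d\,\sqrt{\underline{T}}\lesssim\sqrt{\underline{T}}$, as claimed.

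There is essentially no hard step here; the only things to be careful about are (i) using the coupling bound in the correct direction (an infimum over couplings, so any explicit coupling gives an upper bound) and (ii) keeping the norm conventions consistent, i.e.\ passing from the $\ell_\infty$ support bound on $p_0$ to an $\ell_2$ bound with the harmless $\sqrt d$ factor. If one instead wanted the sharper displacement bound used elsewhere in the paper (cf.\ the discretization analysis), one could also note $\mathbb{E}[\|X_0-X_{\underline{T}}\|^2]\lesssim (1-m_{\underline{T}})^2 d+\sigma_{\underline{T}}^2 d\lesssim d\,\underline{T}$ and apply Jensen, but the stated $\sqrt{\underline{T}}$ rate already follows from the first moment computation above.
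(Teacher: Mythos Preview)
Your proof is correct and follows essentially the same approach as the paper: both use the coupling $X_{\underline{T}}=m_{\underline{T}}X_0+\sigma_{\underline{T}}Z$ induced by the forward process, apply the triangle inequality to get $(1-m_{\underline{T}})\mathbb{E}[\|X_0\|]+\sigma_{\underline{T}}\mathbb{E}[\|Z\|]$, bound both expectations by $\sqrt d$, and invoke $1-m_t\simeq t\wedge 1$, $\sigma_t\simeq\sqrt t\wedge 1$. Your write-up is in fact a bit more explicit than the paper's about why the coupling bound is valid and how the $\sqrt d$ factors arise.
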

\begin{proof}
    Let $X \sim p_0$ and $Z \sim N(0,I_d)$. Then,
    \begin{align}
   W_1(X_0, X_{\underline{T}}) & \leq \mathbb{E}[\|X-m_{T_1}X + \sigma_{T_1}Z\|]
       \leq (1-m_{\underline{T}}) \mathbb{E}[\|X\|] + \sigma_{\underline{T}}\mathbb{E}[\|Z\|]
      \\ &  \leq (1-m_{\underline{T}}) \sqrt{d} + \sigma_{\underline{T}}\sqrt{d}
        \lesssim \sqrt{\underline{T}}, 
    \end{align}
    which concludes the proof.
\end{proof}
Next, we bound the second term of \eqref{eq:Appendix-Estimation-W1-1}.
\begin{lemma}\label{lemma:Appendix-Estimation-W1-2}
    We can bound $\mathbb{E}[W_1(\bar{Y}_{\overline{T}-\underline{T}},\hat{Y}_{\overline{T}-\underline{T}}) ] $ as follows.
    \begin{align}\label{eq:Appendix-Discretization-10}
   \mathbb{E}[W_1(\bar{Y}_{\overline{T}-\underline{T}},\hat{Y}_{\overline{T}-\underline{T}}) ] \lesssim {\rm TV}(X_{\overline{T}},\hat{Y}_{0})
  \lesssim \exp(-\betalow\overline{T}).
    \end{align}  
\end{lemma}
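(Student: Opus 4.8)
The plan is to exploit the fact that $(\bar{Y}_t)$ and $(\hat{Y}_t)$ evolve under \emph{identical} dynamics --- the same drift $x\mapsto\beta_{\overline{T}-t}(x+2\hat{s}(x,\overline{T}-t))$, the same Brownian motion, and the same terminal clipping $x\mapsto x\,\mathbbm{1}[\|x\|_\infty<2]$ --- and differ only in their starting laws, $\bar{Y}_0\sim p_{\overline{T}}$ versus $\hat{Y}_0\sim\mathcal{N}(0,I_d)$. First I would observe that, since $\hat{s}(\cdot,t)$ is a ReLU network and hence globally Lipschitz (with a possibly huge but finite constant), the backward SDE has a unique strong solution, so composing its time-$(\overline{T}-\underline{T})$ flow map with the clipping map yields a well-defined Markov transition kernel $\mathcal{K}$ with $\mathcal{K}(p_{\overline{T}})=\mathrm{Law}(\bar{Y}_{\overline{T}-\underline{T}})$ and $\mathcal{K}(\mathcal{N}(0,I_d))=\mathrm{Law}(\hat{Y}_{\overline{T}-\underline{T}})$.

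Next I would apply the data-processing inequality for the total variation distance: every Markov kernel is a contraction in $\mathrm{TV}$, so
\begin{align}
{\rm TV}(\bar{Y}_{\overline{T}-\underline{T}},\hat{Y}_{\overline{T}-\underline{T}})\leq {\rm TV}(p_{\overline{T}},\mathcal{N}(0,I_d))={\rm TV}(X_{\overline{T}},\hat{Y}_0),
\end{align}
where the last equality uses $X_{\overline{T}}\sim p_{\overline{T}}$ and $\hat{Y}_0\sim\mathcal{N}(0,I_d)$. The right-hand side is deterministic (it does not depend on $\hat{s}$), so the bound is unaffected by taking $\mathbb{E}$ over the training sample $\{x_{0,i}\}_{i=1}^n$. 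Because the terminal clipping forces both $\bar{Y}_{\overline{T}-\underline{T}}$ and $\hat{Y}_{\overline{T}-\underline{T}}$ to lie in the cube $[-2,2]^d$, of diameter $4\sqrt{d}=\Ord(1)$, I can then upgrade the $\mathrm{TV}$ bound to a Wasserstein bound via $W_1(\mu,\nu)\leq 4\sqrt{d}\,{\rm TV}(\mu,\nu)$ for laws $\mu,\nu$ supported on that cube, giving $\mathbb{E}[W_1(\bar{Y}_{\overline{T}-\underline{T}},\hat{Y}_{\overline{T}-\underline{T}})]\lesssim{\rm TV}(X_{\overline{T}},\hat{Y}_0)$.

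Finally I would invoke \cref{lemma:TVBound-NoiseExp}, the exponential convergence of the Ornstein--Uhlenbeck forward process, which gives ${\rm TV}(X_{\overline{T}},\mathcal{N}(0,I_d))\lesssim\exp(-\betalow\overline{T})$; chaining this with the previous display yields \eqref{eq:Appendix-Discretization-10}.

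I do not expect a deep obstacle here; the one point requiring care is that the terminal clipping must be understood to apply to $\bar{Y}$ as well as to $\hat{Y}$, so that the bounded-diameter argument is legitimate. If one preferred to avoid relying on the clipping, the alternative is to split $W_1$ at radius $R=\Ord(\sqrt{\overline{T}\log R'})$ and control the tail masses of both endpoint distributions using the high-probability path bounds of \cref{lemma:Appendix-HPB-1}; the extra contribution is then polynomially small and is absorbed into the $\exp(-\betalow\overline{T})$ term after slightly enlarging $\overline{T}$.
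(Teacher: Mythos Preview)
Your proposal is correct and follows essentially the same approach as the paper's own proof: both use that $\bar{Y}$ and $\hat{Y}$ share the same dynamics and terminal clipping (so that $W_1\lesssim\mathrm{TV}$ on the bounded support, and $\mathrm{TV}$ contracts under the common Markov kernel), and then invoke the exponential convergence of the OU process (the paper does this inline via Pinsker and the Bakry--\'Emery contraction, which is exactly the content of \cref{lemma:TVBound-NoiseExp}). Your write-up is in fact more explicit than the paper's about naming the data-processing inequality and justifying well-posedness of the SDE.
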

\begin{proof}
    Exponential convergence of the Ornstein–Ulhenbeck process \citep{bakry2014analysis} yields that
    \begin{align}
       {\rm TV}(X_{\overline{T}},\hat{Y}_{0})= {\rm TV}(p_{\overline{T}}, \mathcal{N}(0,I_d)) \leq \sqrt{2{\rm KL}(p_{\overline{T}}\| \mathcal{N}(0,I_d)) } \leq 2\exp(-\overline{T}\betalow) \sqrt{{\rm KL}(p_0\| \mathcal{N}(0,I_d)) }\lesssim \exp(-\betalow\overline{T}) ,
    \end{align}
    because $C_f^{-1}\leq p_0\leq C_f$ holds and the density of $\mathcal{N}(0,I_d)$ is lower bounded by $\Ord(1)$ in ${\rm supp}(p_0)=[-1,1]^d$, which means $ {\rm KL}(p_0\| \mathcal{N}(0,I_d)) =\Ord(1)$.
    In addition because $\|\hat{Y}^{(k)}_{\overline{T}-\underline{T}}\|_\infty, \|\hat{Y}_{\overline{T}-\underline{T}}\|_\infty \leq 2=\Ord(1)$, and because the only difference between $\hat{Y}^{(k)}$ and $\hat{Y}$ is the initial distribution, we have $  W_1(\hat{Y}^{(k)}_{\overline{T}-\underline{T}}, \hat{Y}_{\overline{T}-\underline{T}})\lesssim {\rm TV}(X_{\overline{T}},\hat{Y}_{0})= {\rm TV}(p_{\overline{T}}, \mathcal{N}(0,I_d))$.
    Putting it all together, we obtain that
    \begin{align}
    W_1(\hat{Y}^{(k)}_{\overline{T}-\underline{T}}, \hat{Y}_{\overline{T}-\underline{T}})\lesssim {\rm TV}(X_{\overline{T}},\hat{Y}_{0})= {\rm TV}(p_{\overline{T}}, \mathcal{N}(0,I_d))\lesssim \exp(-\betalow \overline{T}),
    \end{align}
    which yields the assertion.
\end{proof}
Finally, we bound the third term of \eqref{eq:Appendix-Estimation-W1-1}.
As we saw in \cref{subsection:Main-Estimation-W1}, 
\begin{align}\label{eq:Appendix-Estimation-W1-sum}
\mathbb{E}[{W_1}(\bar{Y}_{\overline{T}-\underline{T}},Y_{\overline{T}-\underline{T}})]
\leq 
\sum_{i=1}^{K_*}\mathbb{E}[W_1(\bar{Y}_{\overline{T}-\underline{T}}^{(i-1)},\bar{Y}_{\overline{T}-\underline{T}}^{(i)})].
\end{align}
Remember the definition of a sequence of stochastic processes $\{(\hat{Y}_t^{(i)})_{t=0}^{\overline{T}-\underline{T}}\}_{i=0}^{K_*}.$
First, $\bar{Y}^{(0)}=(\bar{Y}^{(0)}_t)_{t\in [0,\overline{T}]}=Y = (Y_t)_{t\in [0,\overline{T}]}$ is defined as a process such that 
\begin{align}
    \mathrm{d}Y_t = \beta_{\overline{T} - t} (Y_t + 2\nabla \log p_t(Y_t,\overline{T}-t)) \dt + \sqrt{2\beta_{\overline{T} - t}}\mathrm{d}B_t\ (t\in [0,\overline{T}]),\quad Y^{(0)}_0 \sim p_{\overline{T}}.
\end{align}
Then, $Y_{\overline{T}-t} \sim p_t$ holds for all $t\in [0,\overline{T}]$.
Next, for $i=1,2,\cdots,K_*$, we let $\bar{Y}^{(i)} = (\bar{Y}^{(i)}_t)_{t\in [0,\overline{T}-\underline{T}]}$ to satisfy 
\begin{align}\label{eq:Estimation-Backward-2}
   \bar{Y}^{(i)}_0 \sim p_{\overline{T}},\quad
   \mathrm{d}\bar{Y}^{(i)}_t& = \beta_{\overline{T} - t} (\bar{Y}^{(i)}_t + 2\nabla\log p_t(\bar{Y}^{(i)}_t,\overline{T}-t) )\dt + \sqrt{2\beta_{\overline{T} - t}}\mathrm{d}B_t\ (t\in [0,\overline{T}-t_i])
    ,
    \\ \mathrm{d}\bar{Y}^{(i)}_t& = \beta_{\overline{T} - t} (\bar{Y}^{(i)}_t + 2\hat{s}(\bar{Y}^{(i)}_t,\overline{T}-t)) \dt + \sqrt{2\beta_{\overline{T} - t}}\mathrm{d}B_t\ (t\in [\overline{T}-t_i, \overline{T}-\underline{T}]).
\end{align}
Note that $t_0 = \underline{T}$, $t_1 = N^{-\frac{2-\delta}{d}}= n^{-\frac{2-\delta}{d+2s}},\ 1<\frac{t_{i+1}}{t_i} = \text{const.}\ \leq 2$, and $t_{K_*}=\overline{T}-\underline{T}$.
Then, $\bar{Y}^{(K_*)}=\bar{Y}$ holds.
Here $\bar{Y}^{(i)}_{\overline{T}-t} \sim p_t$ holds for all $t\in [0,\overline{T}-t_i]$, but after $t=\overline{T}-t_i$, the true score function is replaced by the estimated one.
If $\|\bar{Y}^{(i)}_{\overline{T}-\underline{T}}\|_\infty> 2$ in the original definition, we reset $\bar{Y}^{(i)}_{\overline{T}-\underline{T}}$ as $\bar{Y}^{(i)}_{\overline{T}-\underline{T}}:=0$.

Also, we introduce another stochastic process $ \bar{Y}^{(i)'}$.
We define $d+1$-dimensional set $A\subseteq \R^{d+1}$ as
\begin{align}
    A=\left\{(x,t)\in \R^d \times \R \left|\ \|x\|_\infty\leq  m_{t}+C_{{\rm a},1}\sigma_{t}\sqrt{\log (n)}, \ \underline{T}\leq t \leq \overline{T}
    \right.\right\}.
\end{align}
According to \cref{lemma:Appendix-HPB-1}, with probability at least $1-n^{-\Ord(1)}$, a path of the backward process $(Y_t)_{t=0}^{\overline{T}}$ satisfies $ (Y_t,\overline{T}-t) \in A$ for all $\underline{T}\leq t \leq \overline{T}$.
Based on this, for $i=0,1,\cdots,K_*-1$, $ \bar{Y}^{(i)'}$ is defined as
\begin{align}
   \bar{Y}^{(i)'}_0 &\sim p_{\overline{T}},
 \\  \mathrm{d}\bar{Y}^{(i)'}_t& = \beta_{\overline{T} - t} (\bar{Y}^{(i)'}_t + 2\nabla\log p_t(\bar{Y}^{(i)'}_t,\overline{T}-t) )\dt + \sqrt{2\beta_{\overline{T} - t}}\mathrm{d}B_t\ (t\in [0,\overline{T}-t_i])
    ,
       \\ \mathrm{d}\bar{Y}^{(i)'}_t& = \beta_{\overline{T} - t} \left(\bar{Y}^{(i)'}_t + 2\mathbbm{1}[(\bar{Y}^{(i)'}_s,\overline{T}-s)\notin A\text{ for some } s\leq t]\nabla \log p_t(\bar{Y}^{(i)'}_t) \right.\\&\quad  + \left.2\mathbbm{1}[(\bar{Y}^{(i)'}_s,\overline{T}-s)\in A \text{ for all } s\leq t] \hat{s}(\bar{Y}^{(i)'}_t,\overline{T}-t)\right) \dt  + \sqrt{2\beta_{\overline{T} - t}}\mathrm{d}B_t\ (t\in [\overline{T}-t_{i+1}, \overline{T}-t_{i}]),
    \\ \mathrm{d}\bar{Y}^{(i)'}_t& = \beta_{\overline{T} - t} (\bar{Y}^{(i)'}_t + 2\hat{s}(\bar{Y}^{(i)'}_t,\overline{T}-t)) \dt + \sqrt{2\beta_{\overline{T} - t}}\mathrm{d}B_t\ (t\in [\overline{T}-t_{i}, \overline{T}-\underline{T}]).
\end{align}
\begin{lemma}\label{lemma:W1BoundCrucial}
    Suppose that $\|\hat{s}(\cdot,t)\|_\infty\lesssim \frac{\log^\frac12 n}{\sqrt{t}\land 1}$ holds.
    Then, the following holds for all $i=1,2,\cdots,K_*$:
        \begin{align}\label{eq:W1BoundCrucial-1}
        W_1(\bar{Y}_{\overline{T}-\underline{T}}^{(i-1)},\bar{Y}_{\overline{T}-\underline{T}}^{(i)}) \lesssim \sqrt{t_{i}\log n}
      \sqrt{\mathbb{E}_{\{x_{0,i}\}_{i=1}^n}\left[\int_{t=t_{i-1}}^{t_{i}}\mathbb{E}_x\left[\|\hat{s}(x,t)\hspace{-1mm}-\hspace{-1mm}\nabla \log p_t(x)\|^2\dt\right]\right]}+n^{-\frac{s+1}{d+2s}}.
    \end{align}
    Therefore, we have that
    \begin{align}\label{eq:W1BoundCrucial-2}
      \mathbb{E}_{\{x_{0,i}\}_{i=1}^n}[  W_1(\bar{Y}_{\overline{T}-\underline{T}}^{(i-1)},\bar{Y}_{\overline{T}-\underline{T}}^{(i)}) ]\lesssim \sqrt{t_{i}\log n}
      \sqrt{\mathbb{E}_{\{x_{0,i}\}_{i=1}^n}\left[\int_{t=t_{i-1}}^{t_{i}}\mathbb{E}_x\left[\|\hat{s}(x,t)\hspace{-1mm}-\hspace{-1mm}\nabla \log p_t(x)\|^2\dt\right]\right]}+n^{-\frac{s+1}{d+2s}}.
    \end{align}
\end{lemma}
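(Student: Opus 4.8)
The plan is to establish the pathwise estimate \eqref{eq:W1BoundCrucial-1} for an arbitrary realization of $\hat s$ and then obtain \eqref{eq:W1BoundCrucial-2} by Jensen's inequality ($\mathbb{E}\sqrt{\cdot}\le\sqrt{\mathbb{E}\,\cdot}$). For \eqref{eq:W1BoundCrucial-1} I would use the synchronous coupling of $\bar Y^{(i-1)}$ and $\bar Y^{(i)}$: drive both SDEs by the same Brownian motion with the same initialization. Since both use the true score on $[0,\overline T-t_i]$, they are identically equal up to backward-time $\overline T-t_i$, where they have the common law $p_{t_i}$. They then differ only through their drifts on the window $[\overline T-t_i,\overline T-t_{i-1}]$ (true score for $\bar Y^{(i-1)}$, network $\hat s$ for $\bar Y^{(i)}$), and on $[\overline T-t_{i-1},\overline T-\underline T]$ both are $\hat s$-driven. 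Hence $W_1$ of the terminal laws is at most $\mathbb{E}\|\bar Y^{(i)}_{\overline T-\underline T}-\bar Y^{(i-1)}_{\overline T-\underline T}\|$, and the task splits into (a) bounding the displacement \emph{injected} on the short window $[\overline T-t_i,\overline T-t_{i-1}]$, and (b) controlling how that displacement is \emph{amplified} while being transported along the remaining $\hat s$-driven portion of the trajectory.

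For step (a) I would apply Duhamel's principle: on the window the gap $\Delta_s=\bar Y^{(i)}_s-\bar Y^{(i-1)}_s$ is noise-free (the Brownian terms cancel) and is driven by $\beta_{\overline T-s}\big(\Delta_s+2[\hat s(\bar Y^{(i)}_s,\overline T-s)-\nabla\log p_{\overline T-s}(\bar Y^{(i-1)}_s)]\big)$; splitting $\hat s(\bar Y^{(i)}_s)-\nabla\log p(\bar Y^{(i-1)}_s)$ into $[\hat s(\bar Y^{(i)}_s)-\nabla\log p(\bar Y^{(i)}_s)]+[\nabla\log p(\bar Y^{(i)}_s)-\nabla\log p(\bar Y^{(i-1)}_s)]$ separates the genuine score-estimation error from a term handled by the Lipschitz bound on $\nabla\log p_t$ in \cref{Lemma:Smooth1}. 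Integrating and applying Cauchy--Schwarz in time turns $\int_{t_{i-1}}^{t_i}\|\hat s(\cdot,t)-\nabla\log p_t(\cdot)\|\,\mathrm dt$ into $\sqrt{t_i-t_{i-1}}\cdot\sqrt{\int_{t_{i-1}}^{t_i}\|\hat s(\cdot,t)-\nabla\log p_t(\cdot)\|^2\mathrm dt}$; since $\bar Y^{(i)}_{\overline T-t}\sim p_t$ on this window, the inner integrand reproduces (in expectation) the restricted generalization error $\mathbb{E}_{x\sim p_t}\|\hat s(x,t)-\nabla\log p_t(x)\|^2$, and together with $t_i-t_{i-1}\le t_i$ this is where the factor $\sqrt{t_i\log n}$ and the square-rooted error term in \eqref{eq:W1BoundCrucial-1} arise, the $\log n$ being absorbed from the sup-norm control $\|\hat s(\cdot,t)\|_\infty\lesssim\frac{\log^{1/2}n}{\sqrt t\wedge1}$ and the regularity estimates of \cref{Lemma:Smooth1}.

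To make (a) and (b) meaningful globally I would first restrict to the high-probability event that the backward path stays in the bounded set $A$: by \cref{lemma:Appendix-HPB-1} and \cref{Lemma:Decay1} this has probability $1-n^{-\Ord(1)}$, and on its complement $W_1$ is at most the diameter of $\mathrm{supp}(p_0)$ times that probability, which can be made $\le n^{-(s+1)/(d+2s)}$ by enlarging the constant in the definition of $A$; this is exactly the purpose of the auxiliary processes $\bar Y^{(i)'}$, which track $\bar Y^{(i)}$ until the path leaves $A$ and revert to the true score afterwards, so that every transport map is built over the region where \cref{Lemma:LowerandUpperBounds,Lemma:Smooth1} supply a density lower bound and score-derivative bounds. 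Concretely, I would construct the path-wise transportation map on $[\overline T-t_{i-1},\overline T-\underline T]$ as a composition of the explicit (approximately Gaussian) one-step maps of the Euler discretization and propagate the injected displacement through it. The main obstacle is precisely this amplification step: a naive Gr\"onwall bound with the Hessian estimate $\|\nabla^2\log p_t\|\lesssim\sigma_t^{-2}\,\mathrm{polylog}(n)$ would amplify by a polynomial factor in $n$ and destroy the rate, so the delicate point is to exploit that the partition satisfies $t_{k+1}/t_k\le2$ (making each $\tfrac{\mathrm dt}{t}$-integral $O(1)$), that $\hat s$ is $L^2(p_t)$-close to $\nabla\log p_t$, and that the path concentrates near $\mathrm{supp}(p_0)$, so as to keep the cumulative amplification polylogarithmic; this tension between the small travel distance $\sqrt{t_i}$ and the near-singular score as $t\to0$, mediated by the transport structure, is the crux of the whole $W_1$ analysis. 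Averaging over $\{x_{0,i}\}$ and applying Jensen then yields \eqref{eq:W1BoundCrucial-2}.
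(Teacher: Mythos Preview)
Your synchronous-coupling strategy has a genuine gap at step~(b): the amplification obstacle you correctly identify is not resolved by any of the three ingredients you list. On $[\overline T-t_{i-1},\overline T-\underline T]$ both processes are driven by $\hat s$, so the gap $\Delta_s$ obeys a noise-free ODE whose growth is governed by the local Lipschitz constant of $\hat s(\cdot,\overline T-s)$. No such Lipschitz bound is available ($\hat s$ is a ReLU network with no a~priori derivative control), and even if one substituted $\nabla\log p_t$, the Hessian estimate $\|\nabla^2\log p_t\|\sim t^{-1}$ gives a Gr\"onwall factor $\exp\bigl(\int_{\underline T}^{t_{i-1}}t^{-1}\,\mathrm dt\bigr)=t_{i-1}/\underline T=n^{\Ord(1)}$. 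The geometric partition yields an $\Ord(1)$ contribution \emph{per interval}, but there are $\Ord(\log n)$ intervals and the factors multiply to $e^{\Ord(\log n)}=n^{\Ord(1)}$; $L^2(p_t)$-closeness of $\hat s$ to the score carries no pointwise Lipschitz information; and path concentration near $\mathrm{supp}(p_0)$ says nothing about the flow's sensitivity to its initial condition. Thus the displacement injected on the short window cannot be transported to $\overline T-\underline T$ with only polylogarithmic loss by this route.

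The paper's proof sidesteps this completely by abandoning synchronous coupling and building a transport map at the level of \emph{path measures}. By Girsanov (applied to $\bar Y^{(i-1)'}$ versus $\bar Y^{(i)}$, whose drifts agree outside the window), the total-variation distance between their \emph{path laws} is at most the square root of the score-matching error on $[t_{i-1},t_i]$. For a path $p$ ending at $y$ with value $z$ at backward time $\overline T-t_i$, whenever $\mathrm d\mathbb P[p]>\mathrm d\mathbb P'[p]$ one carries the excess mass from $y$ back along $p$ to $z$, and symmetrically for the other measure; the masses meet at time $\overline T-t_i$ and are paired there. Hence $W_1\le(\text{mass moved})\times(\text{path displacement from }\overline T-t_i\text{ to }\overline T-\underline T)$. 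The second factor is bounded by $\sqrt{t_i\log n}$ using \emph{only} the sup-norm hypothesis $\|\hat s(\cdot,t)\|_\infty\lesssim(\sqrt t\wedge1)^{-1}\sqrt{\log n}$: the drift integral $\int_{\underline T}^{t_i}\|\hat s\|\,\mathrm dt\lesssim\sqrt{t_i\log n}$, the Brownian increment is $\Ord(\sqrt{t_i\log n})$ with high probability (\cref{lemma:Appendix-HPB-1}), and the linear $\beta Y$-term contributes only an $e^{\Ord(t_i)}=\Ord(1)$ Gr\"onwall factor. No Lipschitz constant of $\hat s$ or of the score ever appears, because one bounds the displacement of a \emph{single} path, not the divergence of two nearby ones. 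This decoupling of ``how much mass moves'' from ``how far each path travels'' is the missing idea.

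A smaller issue in your step~(a): on the window $[\overline T-t_i,\overline T-t_{i-1}]$ it is $\bar Y^{(i-1)}$ (true-score driven), not $\bar Y^{(i)}$ ($\hat s$-driven), that has marginal law $p_t$; evaluating $\hat s-\nabla\log p_t$ along $\bar Y^{(i)}$ therefore does not directly reproduce $\mathbb E_{x\sim p_t}\|\hat s(x,t)-\nabla\log p_t(x)\|^2$, while the alternative split would again need Lipschitz control of $\hat s$.
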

\begin{proof}

    We construct the transportation map between $\bar{Y}^{(i-1)}_{\overline{T}-\underline{T}}$ and $ \bar{Y}^{(i)}_{\overline{T}-\underline{T}}$.
    Our approach focuses on each path.
    
    Because the Novikov's condition is not satisfied for $\bar{Y}^{(i-1)}_{\overline{T}-\underline{T}}$ and $ \bar{Y}^{(i)}_{\overline{T}-\underline{T}}$, \cref{Proposition:Girsanov} cannot be used to consider the total variation distance between the two paths; \cref{Proposition:Girsanov} only gives ${\rm KL}(\bar{Y}^{(i-1)}_{\overline{T}-\underline{T}},\bar{Y}^{(i)}_{\overline{T}-\underline{T}})$, not ${\rm KL}(\bar{Y}^{(i-1)},\bar{Y}^{(i)}$, and this bound is insufficient for our discussion.
    Therefore, we first bound $\mathbb{E}[W_1(\bar{Y}^{(i-1)}_{\overline{T}-\underline{T}},\bar{Y}^{(i-1)'}_{\overline{T}-\underline{T}})]$.
    According to \cref{lemma:Appendix-HPB-1}, with probability at least $1-n^{-\Ord(1)}$, a path of the processes $(\bar{Y}^{(i-1)}_{t})_{t=0}^{\overline{T}}$ and $(\bar{Y}^{(i-1)'}_{t})_{t=0}^{\overline{T}}$
     satisfy $(\bar{Y}^{(i-1)}_{t},\overline{T}-t), (\bar{Y}^{(i-1)'}_{t},\overline{T}-t)\in A$ for all $0\leq t \leq \overline{T}-t_{i-1}$.
    Thus, $\mathbb{E}[{\rm TV}(\bar{Y}^{(i-1)}_{\overline{T}-\underline{T}},\bar{Y}^{(i-1)'}_{\overline{T}-\underline{T}})]$ is bounded by $n^{-\Ord(1)}$ (with a sufficiently large constant in $\Ord(1)$.).
    This implies $\mathbb{E}[W_1(\bar{Y}^{(i-1)}_{\overline{T}-\underline{T}},\bar{Y}^{(i-1)'}_{\overline{T}-\underline{T}})]\lesssim n^{-\Ord(1)}$, because $\bar{Y}^{(i-1)}_{\overline{T}-\underline{T}},\bar{Y}^{(i-1)'}_{\overline{T}-\underline{T}}= \Ord(1)\ (\text{a.s.})$. 

    We now discuss $\mathbb{E}[W_1(\bar{Y}^{(i-1)'}_{\overline{T}-\underline{T}},\bar{Y}^{(i)}_{\overline{T}-\underline{T}} )].$
    Let us write the path measures of $\bar{Y}^{(i-1)'}$ and $\bar{Y}^{(i)}$ be $\mathbb{P}$ and $\mathbb{P'}$, and take some path $p$ that is $y$ at $t=\overline{T}-\underline{T}$ and is $z$ at $t=\overline{T}-t_{i}$.
    If $\mathrm{d}\mathbb{P}[p] > \mathrm{d}\mathbb{P}'[p]$, then we move the mass of $\bar{Y}^{(i-1)'}_{\overline{T}-\underline{T}}=y$  that amounts to $\mathrm{d}\mathbb{P}[p] - \mathrm{d}\mathbb{P}'[p]$, to $z$, along the path $p$ by reversing the time until $t=\overline{T}-t_{i}$.
    Applying this to all paths $p$, then the total mass of $\bar{Y}^{(i-1)'}_{\overline{T}-\underline{T}}$ that is moved is at most
    \begin{align}
       & \frac12 \mathrm{TV}((\bar{Y}^{(i-1)'}), (\bar{Y}^{(i)})) \leq \frac12\sqrt{\int_{t=t_{i-1}}^{t_{i}} \int_x  p_t(x) \beta_t^{-2}\|\hat{s}(x,t)-\nabla \log p_t(x)\|^2 \dx \dt} 
      \label{eq:W1Bound-1-1}
      .
    \end{align}
        according to \cref{Proposition:Girsanov}. 
        Here we remark that the Novikov's condition certainly holds for this case.
  
  Until now, a part of the mass of $\hat{Y}^{(i-1)'}_{\overline{T}-\underline{T}}$ is moved along each corresponding path,
  but at this time no coupling measure has been constructed.
  To realize the coupling measure, we consider the same process for $\bar{Y}^{(i)}_{\overline{T}-\underline{T}}$.
  That is, for each path $p$ with $\bar{Y}^{(i)}_{\overline{T}-\underline{T}}=y$ and $\bar{Y}^{(i)}_{\overline{T}-t_i}=z$, if $\mathrm{d}\mathbb{P}[p] < \mathrm{d}\mathbb{P}'[p]$, then we move the mass of $\bar{Y}^{(i)}_{\overline{T}-\underline{T}}=y$, as much as $\mathrm{d}\mathbb{P}'[p] - \mathrm{d}\mathbb{P}[p]$, to $z$ along the path $p$.
    The total mass of $\bar{Y}^{(i)}_{\overline{T}-\underline{T}}$ affected is bounded by
    $\frac12 \mathrm{TV}((\bar{Y}^{(i-1)'}), (\bar{Y}^{(i)'}))$, which is bounded by \eqref{eq:W1Bound-1-1}.

    Now, we can see that, the same amount of mass is transported from both $\bar{Y}^{(i-1)'}_{\overline{T}-\underline{T}}$ and $\bar{Y}^{(i)}_{\overline{T}-\underline{T}}$ to $t=\overline{T}-t_i$.
    Thus, at each $z$, we can arbitrarily associate the mass from $\bar{Y}^{(i-1)'}_{\overline{T}-\underline{T}}$ to that from $\bar{Y}^{(i)}_{\overline{T}-\underline{T}}$.
    Using this, as much as  $\frac12 \mathrm{TV}((\bar{Y}^{(i-1)'}), (\bar{Y}^{(i)'})) $ of the mass is transported from $\bar{Y}^{(i-1)'}_{\overline{T}-\underline{T}}$  to $\bar{Y}^{(i)}_{\overline{T}-\underline{T}}$, by reversing the path to $t=\overline{T}-t_i$.

    Now our interest is how far each transport is required to move on average.
    First we consider when $t_i\lesssim 1$.
    
    First we bound $\|\bar{Y}^{(i)}_{\overline{T}-\underline{T}}-\bar{Y}^{(i)}_{\overline{T}-t_i}\|$.
    According to \cref{lemma:Appendix-HPB-1}, we have $\|\int_{\overline{T}-t_i}^{\overline{T}-\underline{T}}2\beta_{\overline{T}-t}\mathrm{d}B_t\|\lesssim \sqrt{t_i \log n}$ for all $t\in [\overline{T}-t_i,\overline{T}-\underline{T}]$, and $\bar{Y}^{(i)}_{\overline{T}-t_i} \lesssim m_{\overline{T}-t_i}+\sigma_{\overline{T}-t_i}\sqrt{\log n}\lesssim \sqrt{\log n}$  with probability $1-n^{-\Ord(1)}$.
   We  consider the event conditioned on them.
   Note that $\|s(x,t)\|\lesssim \frac{\sqrt{\log n}}{\sigma_t}\lesssim \frac{\sqrt{\log n}}{\sqrt{t}}$ holds.
    Then we have that, for all $\overline{T}-t_i \leq t \leq \overline{T}-\underline{T}$,
    \begin{align}
   \|\bar{Y}^{(i)}_t-\bar{Y}^{(i)}_{\overline{T}-t_i}\|& = \left\|\int_{\overline{T}-t_i}^{\overline{T}-\underline{T}}\beta_{\overline{T} - s} (\bar{Y}^{(i)}_s + 2\nabla\log p_t(\bar{Y}^{(i)}_s,\overline{T}-s) )\dt + \int_{\overline{T}-t_i}^{\overline{T}-\underline{T}}\sqrt{2\beta_{\overline{T} - s}}\mathrm{d}B_s\right\|
   \\ & \lesssim \betahigh \int_{\overline{T}-t_i}^{\overline{T}-\underline{T}}\|\bar{Y}^{(i)}_s\|\ds 
   +2\betahigh\int_{\overline{T}-t_i}^{\overline{T}-\underline{T}}\frac{\sqrt{\log n}}{\sqrt{s}}\ds + \sqrt{t_i \log n}, 
   \\ & \lesssim \betahigh \int_{\overline{T}-t_i}^{\overline{T}-\underline{T}}\|\bar{Y}^{(i)}_s\|\ds + \sqrt{t_i \log n} +\sqrt{t_i \log n}.
     \\ & \lesssim \int_{\overline{T}-t_i}^{\overline{T}-\underline{T}}\|\bar{Y}^{(i)}_s-\bar{Y}^{(i)}_{\overline{T}-t_i}\|\ds
     +\sqrt{t_i \log n}+t_i\|\bar{Y}^{(i)}_{\overline{T}-t_i}\|
      \\ & \lesssim    \int_{\overline{T}-t_i}^{\overline{T}-\underline{T}}\|\bar{Y}^{(i)}_s-\bar{Y}^{(i)}_{\overline{T}-t_i}\|\ds
     +\sqrt{t_i \log n}+t_i \sqrt{\log n}
    \end{align}
    Now we apply the Gronwall's inequality to obtain
    \begin{align}
         \|\bar{Y}^{(i)}_t-\bar{Y}^{(i)}_{\overline{T}-t_i}\| \lesssim e^{\betahigh t_i}\sqrt{t_i \log n}
        \lesssim \sqrt{t_i \log n}.
    \end{align}
 for all $\overline{T}-t_i \leq t \leq \overline{T}-\underline{T}$.
 Thus, with probability $1-n^{-\Ord(1)}$, $\|\bar{Y}^{(i)}_t-\bar{Y}^{(i)}_{\overline{T}-t_i}\|$ is bounded by $\sqrt{t_i \log n}$ up to a constant factor, over all $\overline{T}-t_i \leq t \leq \overline{T}-\underline{T}$.

Next we bound $\|\bar{Y}^{(i-1)'}_{\overline{T}-\underline{T}}-\bar{Y}^{(i-1)'}_{\overline{T}-t_i}\|$.
This is decomposed into 
\begin{align}
    \|\bar{Y}^{(i-1)'}_{\overline{T}-t_i}-\bar{Y}^{(i-1)'}_{\overline{T}-t_{i-1}}\|+
    \|\bar{Y}^{(i-1)'}_{\overline{T}-\underline{T}}-\bar{Y}^{(i-1)'}_{\overline{T}-t_{i-1}}\|.
\end{align}
The first term is bounded by $\lesssim \sqrt{t_i \log n}$ with probability at least $1-n^{-\Ord(1)}$.
This is because $\bar{Y}^{(i-1)'}_t \in A$ holds with probability $1-n^{-\Ord(1)}$ due to the first part of \cref{lemma:Appendix-HPB-1}, and for such paths the evolution of $\bar{Y}^{(i-1)'}_t$ is the same as that of $Y_t$, where  we apply the second part of \cref{lemma:Appendix-HPB-1}.
The second term is bounded by $\sqrt{t_{i-1} \log n}$ with probability $1-n^{-\Ord(1)}$, following the discussion on $\|\bar{Y}^{(i)}_t-\bar{Y}^{(i)}_{\overline{T}-t_i}\|$.
In summary, with probability $1-n^{-\Ord(1)}$ we can bound $\|\bar{Y}^{(i-1)'}_{\overline{T}-\underline{T}}-\bar{Y}^{(i-1)'}_{\overline{T}-t_i}\|$ by $\sqrt{t_{i-1} \log n}(\leq \sqrt{t_{i} \log n})$ up to a constant factor.

In summary, when $t_i\lesssim 1$, the transportation map moves at most $\Ord(\sqrt{t_{i} \log n})$ with probability $1-n^{-\Ord(1)}$.
Because the supports of $\bar{Y}^{(i-1)'}_{\overline{T}-\underline{T}}$ and $\bar{Y}^{(i)}_{\overline{T}-\underline{T}}$ are both bounded, for the mass moved more than $\sqrt{t_{i} \log n}$ affects the Wasserstein distance at most $n^{-\Ord(1)}$.
Therefore, we obtain the desired bound \eqref{eq:W1BoundCrucial-1} for $t_i\lesssim 1$.

For $t_i \gtrsim 1$, because the supports of $\bar{Y}^{(i-1)}_{\overline{T}-\underline{T}}$ and $\bar{Y}^{(i)}_{\overline{T}-\underline{T}}$ are both bounded, 
\begin{align}
     W_1(\bar{Y}_{\overline{T}-\underline{T}}^{(i-1)},\bar{Y}_{\overline{T}-\underline{T}}^{(i)}) \lesssim 
     {\rm TV}(\bar{Y}_{\overline{T}-\underline{T}}^{(i-1)},\bar{Y}_{\overline{T}-\underline{T}}^{(i)}) 
     \lesssim 
   \frac12\sqrt{\int_{t=t_{i-1}}^{t_{i}} \int_x  p_t(x) \beta_t^{-2}\|\hat{s}(x,t)-\nabla \log p_t(x)\|^2 \dx \dt}
\end{align}
 holds. Therefore we obtain \eqref{eq:W1BoundCrucial-1} as well.

From \eqref{eq:W1BoundCrucial-1}, \eqref{eq:W1BoundCrucial-2} is easily obtained by jensen's inequality.
\end{proof}

Also, we bound the generalization error of each network $s_i$.
\begin{lemma}\label{lemma:Appendix-Estimation-Prepare-3}
    For $1\leq i \leq K_*-1$, let $s_i$ be a network that is selected from $\Phi(L,W,S,B)$ with
    \begin{align}
        L=\Ord(\log^4 n), \ \|W\|_\infty = \Ord(n^{\frac{d}{d+2s}}), \ S=\Ord (t_i^{-d/2}n^{\frac{\delta d}{2(2s+d)}}), \ \text{ and }B=\exp(\Ord(\log^4 n)),
    \end{align}
    and $\|s_i(\cdot,t)\|_{L^\infty}\lesssim \frac{\log^\frac12 n}{\sigma_t}$.
    Then, we have that
\begin{align}
    \mathbb{E}_{\{x_{0,j}\}_{i=j}^n}\left[\int_{t=t_i}^{t_{i+1}}\mathbb{E}_x\left[\|\hat{s}_i(x,t)\hspace{-1mm}-\hspace{-1mm}\nabla \log p_t(x)\|^2\dt\right]\right]
   \lesssim n^{-\frac{2(s+1)}{d+2s}}\log n +\frac{t_i^{-d/2}n^{\frac{\delta d}{2(d+2s)}}\log^{10}n}{n}.
\end{align}
    Moreover, for $i=0$, let $s_0$ be a network that is selected from $\Phi(L,W,S,B)$ with
    \begin{align}
        L=\Ord(\log^4 n), \ \|W\|_\infty = \Ord(n^{\frac{d}{d+2s}}\log^6 n ), \ S=\Ord (n^{\frac{d}{2s+d}}\log^8 n), \ \text{ and }B=\exp(\Ord(\log^4 n)),
    \end{align}
    and $\|s_0(\cdot,t)\|_{L^\infty}\lesssim \frac{\log^\frac12 n}{\sigma_t}$.
    Then, we have that
\begin{align}
    \mathbb{E}_{\{x_{0,j}\}_{i=j}^n}\left[\int_{t=t_i}^{t_{i+1}}\mathbb{E}_x\left[\|\hat{s}_0(x,t)\hspace{-1mm}-\hspace{-1mm}\nabla \log p_t(x)\|^2\dt\right]\right]
   \lesssim n^{-\frac{2s}{d+2s}}\log^{18} n.
\end{align}
\end{lemma}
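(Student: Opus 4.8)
The plan is to obtain both bounds by running the generalization bound \cref{Theorem:ExtendedSchmidt} separately on each time window $[t_i,t_{i+1}]$, plugging in the approximation guarantees of \cref{section:Appendix-Approximation}. Throughout I fix $N=n^{d/(d+2s)}$, $\underline{T}=\mathrm{poly}(n^{-1})$, $\overline{T}\simeq\log n$, $\delta=\mathrm{poly}(n^{-1})$, and work with the hypothesis class $\mathcal{S}_i=\{\phi\in\Phi(L,W,S,B)\mid \|\phi(\cdot,t)\|_\infty\lesssim\sigma_t^{-1}\log^{1/2}n\}$ for the size parameters stated in the lemma; $\hat{s}_i$ is the minimizer over $\mathcal{S}_i$ of $\tfrac1n\sum_j\int_{t_i}^{t_{i+1}}\int\|s(x_t,t)-\nabla\log p_t(x_t|x_{0,j})\|^2 p_t(x_t|x_{0,j})\dx_t\dt$, whose population counterpart equals $\int_{t_i}^{t_{i+1}}\int\|s-\nabla\log p_t\|^2 p_t$ up to an $s$-independent constant by \cref{Lemma:VincentEquivalence}, so \cref{Theorem:ExtendedSchmidt} applies on $[t_i,t_{i+1}]$ verbatim.

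First I would produce the approximation term. For $i\ge1$, apply \cref{Lemma:ScoreFunc-2} with $t_*=t_i/2$ and $N'=t_*^{-d/2}N^{\delta/2}$. Because the switching times obey $t_1=2n^{-(2-\delta)/(d+2s)}=2N^{-(2-\delta)/d}$ and $1<t_{j+1}/t_j\le2$ for $j\ge1$, this choice satisfies $t_*\ge N^{-(2-\delta)/d}$ and $[t_i,t_{i+1}]\subseteq[2t_*,\overline{T}]$, so the lemma produces $\phi_i\in\Phi(L,W,S,B)$ with $L=\Ord(\log^4 n)$, $\|W\|_\infty=\Ord(n^{d/(d+2s)})$, $S=\Ord(N')=\Ord(t_i^{-d/2}n^{\delta d/(2(2s+d))})$, $B=\exp(\Ord(\log^4 n))$, and $\|\phi_i(\cdot,t)\|_\infty=\Ord(\sigma_t^{-1}\log^{1/2}n)$ (hence $\phi_i\in\mathcal{S}_i$), with $\int p_t(x)\|\phi_i(x,t)-\nabla\log p_t(x)\|^2\dx\lesssim\sigma_t^{-2}n^{-2(s+1)/(d+2s)}$ on $[t_i,t_{i+1}]$. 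Integrating over $t$ and using $\sigma_t^2\simeq t\land1$, so $\int_{t_i}^{t_{i+1}}\sigma_t^{-2}\dt\lesssim\log(t_{i+1}/t_i)+(t_{i+1}-t_i)\lesssim\log n$, gives $\inf_{s\in\mathcal{S}_i}\int_{t_i}^{t_{i+1}}\int\|s-\nabla\log p_t\|^2 p_t\lesssim n^{-2(s+1)/(d+2s)}\log n$. For $i=0$ I would instead use \cref{Lemma:ScoreFunc-1} (legitimate under \cref{assumption:BoundarySmoothness} with $a_0=N^{-(1-\delta)/d}$), whose error $\sigma_t^{-2}N^{-2s/d}\log N$, integrated over $[\underline{T},t_1]$ with $\int_{\underline{T}}^{t_1}\sigma_t^{-2}\dt\lesssim\log(t_1/\underline{T})\lesssim\log n$, contributes the approximation term $n^{-2s/(d+2s)}\log^2 n$ and has the size parameters listed for $i=0$.

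Next I would control the sup-norm and covering number of the loss class. Restricting the computation in \cref{lemma:Appendix-Generalization-Prepare-1} to $[t_i,t_{i+1}]$, and using that $p_t(\cdot\mid x_0)$ is Gaussian with variance $\sigma_t^2$ together with $\|s(\cdot,t)\|_\infty\lesssim\sigma_t^{-1}\log^{1/2}n$ on $\mathcal{S}_i$, yields $\sup_{s\in\mathcal{S}_i}\sup_{x_0\in[-1,1]^d}\ell_s(x_0)\lesssim\int_{t_i}^{t_{i+1}}\sigma_t^{-2}\log n\,\dt\lesssim\log^2 n=:C_\ell$. By \cref{Lemma:CoveringNumber-Main}, $\log\mathcal{N}(\mathcal{L}_i,\|\cdot\|_{L^\infty([-1,1]^d)},\delta)\lesssim SL\log(\delta^{-1}L\|W\|_\infty Bn)$; since $\log(\delta^{-1}L\|W\|_\infty Bn)=\Ord(\log^4 n)$ (dominated by $\log B$) and $L=\Ord(\log^4 n)$, this is $\lesssim S\log^8 n$, i.e. $\lesssim t_i^{-d/2}n^{\delta d/(2(2s+d))}\log^8 n$ for $i\ge1$ and $\lesssim N\log^{16}n$ for $i=0$.

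Finally, feeding $C_\ell\lesssim\log^2 n$, these covering numbers, and the approximation term into \cref{Theorem:ExtendedSchmidt} (its hypothesis $N(\mathcal{L}_i,\cdot,\delta)\ge3$ clearly holds),
\begin{align}
&\mathbb{E}_{\{x_{0,j}\}_{j=1}^n}\left[\int_x\int_{t_i}^{t_{i+1}}\|\hat{s}_i(x,t)-\nabla\log p_t(x)\|^2 p_t(x)\dt\dx\right]\\
&\qquad\lesssim \inf_{s\in\mathcal{S}_i}\int_x\int_{t_i}^{t_{i+1}}\|s(x,t)-\nabla\log p_t(x)\|^2 p_t(x)\dt\dx+\frac{C_\ell\log\mathcal{N}}{n}+\delta,
\end{align}
which for $i\ge1$ is $\lesssim n^{-2(s+1)/(d+2s)}\log n+\frac{\log^2 n\cdot t_i^{-d/2}n^{\delta d/(2(2s+d))}\log^8 n}{n}+\delta\lesssim n^{-2(s+1)/(d+2s)}\log n+\frac{t_i^{-d/2}n^{\delta d/(2(d+2s))}\log^{10}n}{n}$, and for $i=0$ is $\lesssim n^{-2s/(d+2s)}\log^2 n+\frac{\log^2 n\cdot N\log^{16}n}{n}+\delta\lesssim n^{-2s/(d+2s)}\log^{18}n$, using $N/n=n^{-2s/(d+2s)}$. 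The one genuinely delicate step is the first one: one must verify that the window geometry makes $t_*=t_i/2$ an admissible parameter for \cref{Lemma:ScoreFunc-2} simultaneously over all $i\ge1$ — this is precisely why $t_1$ is set to $2N^{-(2-\delta)/d}$ and the ratios are capped at $2$ — and that with this choice the resulting $S=\Ord(N')$ carries exactly the advertised $t_i^{-d/2}$ dependence; everything after that is routine bookkeeping of polylogarithmic factors.
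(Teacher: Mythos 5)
Your proposal is correct and follows essentially the same route as the paper: apply \cref{Lemma:ScoreFunc-2} with $t_*=t_i/2$ and $N=n^{d/(d+2s)}$ to get the approximation term, then run \cref{Theorem:ExtendedSchmidt} on the restricted window with $C_\ell\lesssim\log^2 n$ and the covering number $\lesssim t_i^{-d/2}n^{\delta d/(2(d+2s))}\log^8 n$, handling $i=0$ by repeating the derivation of \eqref{eq:Main-Generalization} with $\overline{T}$ replaced by $t_1$. Your write-up in fact supplies slightly more detail than the paper on the admissibility of $t_*$ and the $\int\sigma_t^{-2}\dt\lesssim\log n$ integration.
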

\begin{proof}
    First we consider the first part. We take $N=n^{d}{d+2s}$ and $t_*=t_i/2$ in \cref{lemma:ApproximationSmoothArea}. Note that $N$ and $t_*(\geq n^{\frac{2-\delta}{d+2s}})$ satisfies $t_*\geq N^{-(2-\delta)/d}$(, which is assumed in \cref{Lemma:Approximation-Appendix}).
    Then, there exists a neural network $\phi\in \Phi(L,W,S,B)$ that satisfies
    \begin{align}
    \int_{t=t_i}^{t_{i+1}}\int_x p_{t}(x)  \|\phi(x,t) - s(x,t)\|^2 \dx \dt \lesssim N^{-\frac{2(s+1)}{d}}\log n = N^{-\frac{2(s+1)}{d+2s}}\log n.
    \end{align}
    Specifically, $L = \Ord (\log^4 (n)),\| W\|_\infty = \Ord (n^{\frac{d}{d+2s}}),S = \Ord (t_i^{-d/2}n^{\frac{\delta d}{2(d+2s)}})$, and $ B = \exp(\Ord(\log^4 n ))$.
    Therefore, we apply \eqref{eq:TheoremExtendedSchmidt-1} by replacing $\underline{T}$ and $\overline{T}$ by $t_i$ and $t_{i+1}$, respectively, and with $\delta = n^{-\frac{2(s+1)}{d+2s}}$ to obtain the first assertion as
    \begin{align}
        \mathbb{E}_{\{x_{0,j}\}_{i=j}^n}\left[\int_{t=t_i}^{t_{i+1}}\mathbb{E}_x\left[\|\hat{s}_i(x,t)\hspace{-1mm}-\hspace{-1mm}\nabla \log p_t(x)\|^2\dt\right]\right]
     &   \lesssim N^{-\frac{2(s+1)}{d}}\log n + \frac{C_\ell}{n}\log \mathcal{N} + \delta
        \\ & \lesssim n^{-\frac{2(s+1)}{d+2s}}\log n + \frac{\log^2 n}{n}\left(t_i^{-d/2}n^{\frac{\delta d}{2(d+2s)}}\log^8\right)+ n^{-\frac{2(s+1)}{d+2s}}
\\ & \lesssim n^{-\frac{2(s+1)}{d+2s}}\log n +\frac{t_i^{-d/2}n^{\frac{\delta d}{2(d+2s)}}\log^{10}n}{n}.
    \end{align}

    For the second part, we simply follow the discussion that derived \eqref{eq:Main-Generalization}, by replacing $\overline{T}$ by $t_1(\overline{T})$, which does not increase the generalization error.
\end{proof}

\begin{proof}[Proof of  \cref{theorem:GeneralizationW1}]
We use the sequence of networks presented in \cref{lemma:Appendix-Estimation-Prepare-3}.
Specifically, we consider the following process.
\begin{align}\label{eq:Estimation-Backward-2}
   \hat{Y}^{(i)}_0 \sim \mathcal{N}(0,I),\quad
\mathrm{d}\hat{Y}^{(i)}_t& = \beta_{\overline{T} - t} (\hat{Y}^{(i)}_t + 2\hat{s}(\hat{Y}^{(i)}_t,\overline{T}-t)) \dt + \sqrt{2\beta_{\overline{T} - t}}\mathrm{d}B_t\ (t\in [\overline{T}-t_i, \overline{T}-t_{i+1}], i=0,1,\cdots,K_*),
\end{align}
and we modify $\hat{Y}^{(i)}_{\overline{T}-\underline{T}}$ to $0$ if $\|\hat{Y}^{(i)}_{\overline{T}-\underline{T}}\|_\infty >2$.

Finally, we sum up the errors for the above process.
Eq. \eqref{eq:Appendix-Estimation-W1-sum} is further bounded by
\begin{align}
 & \mathbb{E}[{W_1}(\bar{Y}_{\overline{T}-\underline{T}},Y_{\overline{T}-\underline{T}})]
\\ &\leq 
\sum_{i=1}^{K_*}\mathbb{E}[W_1(\bar{Y}_{\overline{T}-\underline{T}}^{(i-1)},\bar{Y}_{\overline{T}-\underline{T}}^{(i)})].\\
    & 
   \lesssim \sum_{i=1}^{K_*}\left[\sqrt{t_{i-1}\log n}
      \sqrt{\mathbb{E}_{\{x_{0,i}\}_{i=1}^n}\left[\int_{t=t_{i}}^{t_{i}}\mathbb{E}_x\left[\|\hat{s}(x,t)\hspace{-1mm}-\hspace{-1mm}\nabla \log p_t(x)\|^2\dt\right]\right]}+n^{-\frac{s+1}{d+2s}}\right]
      \quad (\text{by \cref{lemma:W1BoundCrucial}})
    \\ & \lesssim \sum_{i=2}^{K_*}\left[\sqrt{t_{i}\log n} \left(n^{-\frac{(s+1)}{d+2s}} \sqrt{\log n}+ \frac{t_i^{-d/4}n^{\frac{\delta d}{4(d+2s)}}\log^{5}n}{\sqrt{n}}\right)+n^{-\frac{(s+1)}{d+2s}}\right] \\ & \quad + \sqrt{t_{1}\log n}\left[n^{-\frac{s}{d+2s}}\log^{9} n+n^{-\frac{s}{d+2s}}\right] \quad (\text{by \cref{lemma:Appendix-Estimation-Prepare-3}})
    \\ & \lesssim  \left[\sqrt{t_1}n^{-\frac{s}{d+2s}}+\sqrt{t_1}\frac{t_1^{-d/4}n^{\frac{\delta d}{4(d+2s)}}}{\sqrt{n}}\right] \cdot \tilde{\Ord}(1)
    \\ & \quad (\text{because $K_*=\Ord(\log n)$ and $t_1\leq \cdots t_{K_*}=\Ord(\log N)$ with $1<t_{i+1}/t_i=\text{const.}\leq 2\ (i\geq 1)$.})
    \\ & =
    \left[(n^{-\frac{2-\delta}{d+2s}})^{\frac12}n^{-\frac{s}{d+2s}}+(n^{-\frac{2-\delta}{d+2s}})^{\frac12}\frac{(n^{-\frac{2-\delta}{d+2s}})^{-d/4}n^{\frac{\delta d}{4(d+2s)}}}{\sqrt{n}} \right] \cdot \tilde{\Ord}(1)
    \\ & \lesssim n^{-\frac{(s+1-\delta)}{d+2s}}.\label{eq:Appendix-Estimation-W1-103}
\end{align}
Therefore, by taking $\underline{T}\lesssim n^{-\frac{2(s+1)}{d+2s}}$ and $\overline{T}=\frac{(s+1)\log n}{\betalow (d+2s)}$, we obtain that
\begin{align}
    W_1(X_0,\hat{Y}_{\overline{T}-\underline{T}}) &\leq \mathbb{E}[{W_1(X_0, X_{\underline{T}})}] +
   \mathbb{E}[W_1(\bar{Y}_{\overline{T}-\underline{T}},\hat{Y}_{\overline{T}-\underline{T}}) ] + 
\mathbb{E}[{W_1}(\bar{Y}_{\overline{T}-\underline{T}},Y_{\overline{T}-\underline{T}})] \\ &\lesssim \sqrt{\underline{T}}+\exp(-\betalow\overline{T})+n^{-\frac{(s+1-\delta)}{d+2s}}\quad (\text{by \cref{lemma:Appendix-Estimation-W1-1,lemma:Appendix-Estimation-W1-2} and \eqref{eq:Appendix-Estimation-W1-103}})
\\ & \lesssim n^{-\frac{(s+1-\delta)}{d+2s}}+n^{-\frac{(s+1-\delta)}{d+2s}}+n^{-\frac{(s+1-\delta)}{d+2s}} \lesssim n^{-\frac{(s+1-\delta)}{d+2s}}
,
\quad 
\end{align}
which concludes the proof for \cref{theorem:GeneralizationW1}.
\end{proof}
\subsection{Discussion on the discretization error}\label{subsection:Appendix-Discretization}
As in \cref{subsection:Discretization}, 
$t_0=\underline{T}<t_1<\cdots<t_{K_*}=\overline{T}$ be the time steps with $t_{k+1}-t_k\equiv \eta\ll 1$.
Consider the following process $(Y^{\rm d}_t)_{t=0}^{\eta K}=(Y^{\rm d}_t)_{t=0}^{\overline{T}-\underline{T}}$ with $Y^{\rm d}_0 \sim\mathcal{N}(0,I_d)$:
    \begin{align}\label{eq:Appendix-Discretization-1}
    \mathrm{d}Y^{\rm d}_{t}= \beta_t (Y^{\rm d}_{t} + 2\hat{s}(Y^{\rm d}_{\overline{T}-t_i},\overline{T}-t_i))\dt+\sqrt{2\beta_{\overline{T}-t}}\mathrm{d}B_t
  \quad (t\in [\overline{T}-t_i,\ \overline{T}-t_{i-1}]).
    \end{align}
Here $\hat{s}$ is the score network obtained by the score matching:
\begin{align}\label{eq:Appendix-Discretization-4}
   \hat{s}\in  {\rm argmin}\ \frac1n \sum_{i=1}^n \sum_{k=1}^{K}\eta\mathbb{E}[\|s(x_{t_k},{t_k}) - \nabla \log p_{t_k}(x_{t_k}|x_{0,i})\|^2].
\end{align}
Here, each expectation is taken with respect to $x_{\overline{T}-t_k}\sim p_{\overline{T}-t_k}(x_{\overline{T}-t_k}|x_{0,i})$.
\begin{theorem}
     Let $\underline{T}=n^{-\Ord(1)}$, $\overline{T} = \frac{s\log n}{2s+d}$, and $\eta = {\rm poly}(n^{-1})$.
     Then,
\begin{align}
    \mathbb{E}[{\rm TV}(X_0, \bar{Y}_{\overline{T}-\underline{T}})] \lesssim n^{-\frac{2s}{d+2s}}\log^{18} n +\eta^2 \underline{T}^{-3}\log^3 n 
     +\eta \underline{T}^{-1}\log^3 n+\eta \log^4 n.
\end{align}
\end{theorem}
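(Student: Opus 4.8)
The plan is to reduce the claim to the continuous‑time analysis already carried out for \cref{theorem:GeneralizationL1} plus one genuinely new discretization term, and to control that term by Girsanov's theorem together with the score‑smoothness bounds of \cref{Lemma:Smooth1}. First I would insert the intermediate process $\bar Y$ (the continuous‑time backward SDE driven by the learned score $\hat s$, started from $p_{\overline T}$) and split by the triangle inequality:
\[
\mathbb{E}[{\rm TV}(X_0,Y^{\rm d}_{\overline T-\underline T})]\lesssim \mathbb{E}[{\rm TV}(X_0,X_{\underline T})]+\mathbb{E}[{\rm TV}(X_{\overline T},\mathcal{N}(0,I_d))]+\mathbb{E}[{\rm TV}(\bar Y_{\overline T-\underline T},Y_{\overline T-\underline T})]+\mathbb{E}[{\rm TV}(\bar Y_{\overline T-\underline T},Y^{\rm d}_{\overline T-\underline T})].
\]
The first two terms are $\sqrt{\underline T}\,n^{\Ord(1)}$ and $\exp(-\betalow\overline T)$ exactly as in \cref{lemma:TotalVariation-Init,lemma:TVBound-NoiseExp}, which with $\underline T=n^{-\Ord(1)}$ and $\overline T=\tfrac{s\log n}{2s+d}$ are $\Ord(n^{-s/(d+2s)})$. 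The third term is the score‑approximation error; it is bounded by Girsanov (\cref{Proposition:Girsanov}) and the generalization bound of \cref{Theorem:ExtendedSchmidt}, but here one extra step is needed because $\hat s$ now minimizes the time‑discretized objective \eqref{eq:Appendix-Discretization-4}: I would first show that the discretized empirical loss differs from the continuous empirical loss by a Riemann‑sum error of order $\eta\cdot{\rm poly}(\underline T^{-1},\log n)$ — using that $t\mapsto \mathbb{E}_{x_t|x_{0,i}}[\|s(x_t,t)-\nabla\log p_t(x_t|x_{0,i})\|^2]$ has time derivative controlled via \cref{Lemma:Smooth1} — and then rerun \cref{Theorem:ExtendedSchmidt}, absorbing an additive $\Ord(\eta\underline T^{-1}\,{\rm polylog}\,n)+\Ord(\eta\log^4 n)$ contribution and reproducing the $n^{-2s/(d+2s)}\log^{18}n$ main term as in \eqref{eq:Main-Generalization}.

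The heart of the proof is the last term. I would apply \cref{Proposition:Girsanov} to $\bar Y$ and $Y^{\rm d}$ (same diffusion, same initial law), for which the drift difference on the step $[\overline T-t_i,\overline T-t_{i-1}]$ is, up to $\Ord(1)$ factors, $\beta_{\overline T-t}\big(\hat s(\bar Y_t,\overline T-t)-\hat s(\bar Y_{\overline T-t_i},\overline T-t_i)\big)$ together with lower‑order pieces produced by the Euler–Maruyama freezing of the linear part of the drift and of $\beta$. Conditioning on the high‑probability event of \cref{lemma:Appendix-HPB-1} that the backward path stays in $\{\|x\|_\infty\lesssim\sqrt{\log n}\}$ with step increments $\lesssim\sqrt{\eta\log n}$ (the complement has probability $n^{-\Ord(1)}$ and is killed by the reset to $0$, exactly as for the continuous process), and using $\|\hat s(\cdot,t)\|_\infty\lesssim\sigma_t^{-1}\log^{1/2}n\lesssim\underline T^{-1/2}\log^{1/2}n$ from \cref{theorem:Approximation}, the drift difference is uniformly bounded, so the relaxed Novikov condition in \cref{Proposition:Girsanov} holds and ${\rm KL}(\bar Y_{\overline T-\underline T}\|Y^{\rm d}_{\overline T-\underline T})$ is at most a constant times $\int_{\underline T}^{\overline T}\mathbb{E}_{\bar Y}\|\hat s(\bar Y_t,\overline T-t)-\hat s(\bar Y_{\overline T-t_i},\overline T-t_i)\|^2\,\mathrm dt$ plus the freezing pieces. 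To estimate the oscillation of $\hat s$ along the trajectory I would insert the true score:
\[
\hat s(\bar Y_t,\overline T-t)-\hat s(\bar Y_{\overline T-t_i},\overline T-t_i)=[\hat s-\nabla\log p_{\overline T-t}](\bar Y_t)+[\nabla\log p_{\overline T-t}(\bar Y_t)-\nabla\log p_{\overline T-t_i}(\bar Y_{\overline T-t_i})]+[\nabla\log p_{\overline T-t_i}-\hat s](\bar Y_{\overline T-t_i}).
\]
The middle term is bounded by \cref{Lemma:Smooth1}, whose estimates $\|\partial_x\nabla\log p_t\|\lesssim\sigma_t^{-2}\,{\rm polylog}\,n$ and $\|\partial_t\nabla\log p_t\|\lesssim\sigma_t^{-3}\,{\rm polylog}\,n$, combined with $\|\bar Y_t-\bar Y_{\overline T-t_i}\|\lesssim\sqrt{\eta\log n}$ and $|t-t_i|\le\eta$, give a pointwise bound $\lesssim(\underline T^{-1}\sqrt{\eta\log n}+\underline T^{-3/2}\eta)\,{\rm polylog}\,n$; squaring, integrating over $t\in[\underline T,\overline T]$ (length $\Ord(\log n)$), and converting through ${\rm TV}\le\sqrt{{\rm KL}/2}$ yields the $\eta\underline T^{-1}\log^3 n$ and $\eta^2\underline T^{-3}\log^3 n$ terms, while the $\eta\log^4 n$ term comes from the freezing pieces. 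The two outer terms are $L^2$ score‑matching errors under the law of $\bar Y_t$; I would pass from that law to $p_{\overline T-t}$ at the cost of ${\rm TV}(\bar Y_t,Y_{\overline T-t})$ times the $\Ord(\underline T^{-1}\log n)$ sup‑bound on $\|\hat s-\nabla\log p_t\|^2$, where ${\rm TV}(\bar Y_t,Y_{\overline T-t})$ is itself bounded by Girsanov on $[0,t]$ and the partial‑interval generalization error, after which they integrate into the $n^{-2s/(d+2s)}\log^{18}n$ term.

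The main obstacle I anticipate is precisely this control of the oscillation of the \emph{trained} network $\hat s$: as an element of $\Phi(L,W,S,B)$ with $B=\exp(\Ord(\log^4 n))$ its Lipschitz constant is far too large to use directly, so every increment must be routed through $\nabla\log p_t$, which both re‑introduces the statistical error and injects the $\sigma_t^{-2},\sigma_t^{-3}\approx\underline T^{-1},\underline T^{-3/2}$ blow‑ups near $t=0$ — these are exactly the sources of the $\underline T$‑powers in the bound. Keeping these from overwhelming the $n^{-\Ord(1)}$ smallness of $\underline T$ (hence the $n^{\Ord(1)}$ size of $\underline T^{-1}$) is what forces the choice $\eta={\rm poly}(n^{-1})$ to be taken small enough, and likewise forces care in the Riemann‑sum estimate of the discretized training loss; everything else is a routine repetition of the continuous‑time arguments.
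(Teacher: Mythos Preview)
Your overall strategy is sound, but the decomposition you chose creates a difficulty that the paper sidesteps, and your proposed fix for it does not give the stated bound. The issue is in your fourth term, $\mathrm{TV}(\bar Y,Y^{\rm d})$. After routing the increment of $\hat s$ through $\nabla\log p_t$, the two ``outer'' pieces are integrated score-matching errors $\int\mathbb{E}_{\bar Y_t}[\|\hat s-\nabla\log p_{\overline T-t}\|^2]\,\mathrm dt$ taken under the law of $\bar Y_t$, not under $p_{\overline T-t}$. Your change-of-measure correction costs $\mathrm{TV}(\bar Y_t,Y_t)\cdot\sup_x\|\hat s-\nabla\log p_{\overline T-t}\|^2$; with $\mathrm{TV}(\bar Y_t,Y_t)\lesssim n^{-s/(d+2s)}$ (from Girsanov plus the generalization bound) and the sup-bound $\Ord(\underline T^{-1}\log n)$ you invoke, the product is $n^{-s/(d+2s)}\,\underline T^{-1}\log n$, and since $\underline T^{-1}=n^{\Ord(1)}$ this is not small --- it does not ``integrate into the $n^{-2s/(d+2s)}\log^{18}n$ term'' as you claim. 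Even with the sharper per-time sup-bound $\sigma_t^{-2}\log n$, the time integral still contributes $n^{-s/(d+2s)}\log^{\Ord(1)}n$ to the KL, which dominates the target $n^{-2s/(d+2s)}$.

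The paper avoids this by \emph{not} inserting the continuous-$\hat s$ process as an intermediary. It applies a single Girsanov step between the true backward process $Y$ and the discretized process (equipped with a safety switch to the true score outside the high-probability region $A$ of \cref{lemma:Appendix-HPB-1}), with the KL expectation taken under $Y$. Since $Y_{\overline T-t}\sim p_t$, the drift difference $\hat s(X_{t_k},t_k)-\nabla\log p_t(X_t)$ is already under the correct measure and splits cleanly into the discrete-time score-matching error $\hat s(X_{t_k},t_k)-\nabla\log p_{t_k}(X_{t_k})$, the time increment $\nabla\log p_t(X_t)-\nabla\log p_{t_k}(X_t)$, and the space increment $\nabla\log p_{t_k}(X_t)-\nabla\log p_{t_k}(X_{t_k})$. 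The first piece is controlled directly by a discrete-time generalization bound (the paper re-derives \cref{Theorem:ExtendedSchmidt} for the sum over $t_k$ using \cref{theorem:Approximation} at each $t_k$, rather than via your Riemann-sum comparison to the continuous loss), and the latter two yield the $\eta^2\underline T^{-3}$ and $\eta\underline T^{-1}$ terms via \cref{Lemma:Smooth1} exactly as you anticipated for your ``middle term''.
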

\begin{proof}
    We first show that the minimizer $\hat{s}$ over $\Phi'$ (given in \cref{section:Generalization}) of
    \begin{align}
      \hat{s}\in  {\rm argmin}\ \frac1n \sum_{i=1}^n\sum_{k=}^{K}\eta\mathbb{E}[\|s(x_{t_k},{t_k}) - \nabla \log p_{t_k}(x_{t_k}|x_{0,i})\|^2].
    \end{align}
    satisfies
    \begin{align}\label{eq:Appendix-Discretization-6}
   \mathbb{E}_{\{x_{0,i}\}_{i=1}^n}\left[ \sum_{k=1}^{K}\eta
       \mathbb{E}_{x_{t_k}\sim p_{t_k}}[\|\hat{s}(x_{t_k},{t_k}) - \nabla \log p_{t_k}(x_{t_k})\|^2]
    \right]
    \lesssim n^{-2s/(2s+d)}\log^{18} n.
    \end{align} 
    We take $N=n^{\frac{d}{d+2s}}$
    According to \cref{theorem:Approximation}, for $N\gg 1$, there exists a neural network $\NetworkScoreC$ with $L = \Ord (\log^4 N),\| W\|_\infty = \Ord (N\log^6N),S = \Ord (N\log^8N), $ and $B = \exp(\Ord(\log^4 N ))$ that satisfies
    \begin{align}\label{eq:Appendix-Discrete-1}
    \int_x p_t(x)  \|\NetworkScoreC(x,t) - s(x,t)\|^2 \dx \lesssim \frac{N^{-\frac{2s}{d}}\log(N)}{\sigma_t^2}.
    \end{align} 
      for all  $t\in [\underline{T},\overline{T}]$.
    By summing up this for all $t=t_k$, we have that
      \begin{align}\label{eq:Appendix-Discrete-2}
     &  \sum_{k=1}^{K}\eta
       \mathbb{E}_{x_{t_k}\sim p_{t_k}}[\|\NetworkScoreC(x_{t_k},t_k)-\nabla \log p_{\eta k}(X_{t_k})\|^2]
       \lesssim \sum_{k=1}^{K}\eta\frac{N^{-\frac{2s}{d}}\log(N)}{1\land t_k}
    \\ &   \leq  N^{-\frac{2s}{d}}\log(N)\left(\eta K+\eta\sum_{k=1}^K\frac{1}{t_k}\right)
       \lesssim  N^{-\frac{2s}{d}}\log(N)(\overline{T}+\log (\overline{T}/\underline{T})) \lesssim  N^{-\frac{2s}{d}}\log^2(N).
    \end{align}
    
    In order to convert this into the generalization bound, we need to evaluate the following two things.
    First, $\hat{s}$ can be taken so that
    \begin{align}
        \sup_x\|\NetworkScoreC(x,t) \| \dx \lesssim \frac{\log^\frac12(N)}{\sigma_t},
    \end{align}
    and therefore we clip $s$ as in \cref{section:Generalization}. Because such $s$ satisfies 
    \begin{align}
        \int_x p_t(x)  \|\NetworkScoreC(x,t) - \nabla \log p_t(x)\|^2 \dx \lesssim \frac{\log(N)}{\sigma_t^2},
    \end{align}
    we have that
    \begin{align}
     & \sum_{k=1}^{K}\eta
      \mathbb{E}_{x_{t_k}\sim p_{t_k}}[\|\NetworkScoreC(x_{t_k},t_k)-\nabla \log p_{t_k}(x_{t_k})\|^2]
      \leq C_{\ell} =\Ord( \log^2(n))
    \end{align}
    (follow the argument for \cref{lemma:Appendix-Generalization-Prepare-1} and how we derived \eqref{eq:Appendix-Discrete-2} from \eqref{eq:Appendix-Discrete-1}).
    Second, the covering number of the network class of $\ell(x)=\sum_{k=1}^{K}\eta\mathbb{E}[\|s(x_{t_k},{t_k}) - \nabla \log p_{t_k}(x_{t_k}|x)\|^2]$ over all $s$ with $\delta = n^{-\frac{2s}{d+2s}}$ is bounded by $n^{\frac{d}{d+2s}}\log^{16} n$, by following \cref{subsection:Appendix-CoveringNumber}. Thus, \cref{Theorem:ExtendedSchmidt} can be modified to this setting and we obtain that
    \begin{align}\label{eq:Appendix-Discretization-106}
   \mathbb{E}_{\{x_{0,i}\}_{i=1}^n}\left[ \sum_{k=1}^{K}\eta
       \mathbb{E}_{x_{t_k}\sim p_{t_k}}[\|s(x_{t_k},{t_k}) - \nabla \log p_{t_k}(x_{t_k})\|^2]
    \right]
    \lesssim n^{-s/(2s+d)}\log^2 n.
    \end{align} 
    holds. Therefore, following the discussion in \cref{section:Generalization}, we have that
    \begin{align}
       & 
   \mathbb{E}_{\{x_{0,i}\}_{i=1}^n}\left[ \sum_{k=1}^{K}\eta_k
       \mathbb{E}_{x_{t_k}\sim p_{t_k}}[\|s(x_{t_k},{t_k}) - \nabla \log p_{t_k}(x_{t_k})\|^2]
    \right]
   \\ \lesssim & \sum_{k=1}^{K}\eta
       \mathbb{E}_{x_{t_k}\sim p_{t_k}}[\|\NetworkScoreC(x_{t_k},t_k)-\nabla \log p_{\eta k}(X_{t_k})\|^2]
       + \frac{C_\ell}{n}\log \mathcal{N} + \delta
    \\ \lesssim & n^{\frac{d}{d+2s}} \log^2 n  +\frac{\log^2 n}{n}\cdot n^{\frac{d}{d+2s}}\log^{18} n + n^{-\frac{2s}{d+2s}} \lesssim n^{-\frac{2s}{d+2s}}\log^{18} n,
    \end{align}
    which proves \eqref{eq:Appendix-Discretization-6}.

    From now, we bound $\mathrm{TV}(Y_{0}, Y_{\overline{T}-\underline{T}}^{\mathrm{d}})$.
    We introduce the following processes.
    $\bar{Y}^{\mathrm{d}}=(\bar{Y}^{\mathrm{d}}_t)_{t=0}^{\overline{T}-\underline{T}}$ is defined in the same way as $Y^{\mathrm{d}}$, except for the initial distribution of $  \bar{Y}_0^{\mathrm{d}} \sim p_{\overline{T}}$.
    At $t=\overline{T}-\underline{T}$, if the $\int$-norm is more than $2$, then we reset it to $0$.
    $\bar{Y}=(\bar{Y}_t)_{t=0}^{\overline{T}-\underline{T}}$ is defined as $  \bar{Y}_0 \sim p_{\overline{T}}$, and
\begin{align}
  \bar{Y}_0 &\sim p_{\overline{T}},
 \\  \mathrm{d}\bar{Y}_t& = \beta_{\overline{T} - t} \left(Y_t + 2\mathbbm{1}[(\bar{Y}_s,\overline{T}-s)\notin A \text{ for some } s\leq t]\nabla \log p_t(\bar{Y}_t) \right.\\&\quad  + \left.2\mathbbm{1}[(Y_s,\overline{T}-s)\in A\text{ for all } s\leq t] \hat{s}(\bar{Y}_{\overline{T}-t_k},\overline{T}-t_k)\right) \dt  + \sqrt{2\beta_{\overline{T} - t}}\mathrm{d}B_t\ (t\in [\overline{T}-t_i,\ \overline{T}-t_{i-1}]).
\end{align}
At $t=\overline{T}-\underline{T}$, if the $\infty$-norm is more than $2$, then we reset it to $0$.
Here,  $A\subseteq \R^{d+1}$ is defined as
\begin{align}
    A=\left\{(x,t)\in \R^d \times \R \left|\ \|x\|_\infty\leq  m_{t}+C_{{\rm a}}\sigma_{t}\sqrt{\log (n)}, \ \underline{T}\leq t \leq \overline{T}
    \right.\right\}.
\end{align}
Then, we have that
\begin{align}
    \mathrm{TV}(Y_{\overline{T}}, Y_{\overline{T}-\underline{T}}^{\mathrm{d}})
   & \leq
    \mathrm{TV}(Y_{\overline{T}}, Y_{\overline{T}-\underline{T}}) + \mathrm{TV}(Y_{0}, \bar{Y}_{\overline{T}-\underline{T}})
    +\mathrm{TV}(\bar{Y}_{\overline{T}-\underline{T}},\bar{Y}_{\overline{T}-\underline{T}}^{\mathrm{d}})
    +\mathrm{TV}(\bar{Y}_{\overline{T}-\underline{T}}^{\mathrm{d}},\bar{Y}^{\mathrm{d}})\\
    &\leq
 \mathrm{TV}(X_{0}, X_{\underline{T}}) + \mathrm{TV}(Y_{0}, \bar{Y}_{\overline{T}-\underline{T}})
    +\mathrm{TV}(\bar{Y}_{\overline{T}-\underline{T}},\bar{Y}_{\overline{T}-\underline{T}}^{\mathrm{d}})
    +\mathrm{TV}(X_{\overline{T}},\mathcal{N}(0,I_d)).
\end{align}
The first term is bounded by $n^{-\frac{2s}{d+2s}}$, by setting $\underline{T}=n^{-\Ord(1)}$ in \cref{lemma:TotalVariation-Init}.
The second term is bounded by $n^{-\frac{2s}{d+2s}}$, by taking $C_{{\rm a}}$ sufficient large, according to \cref{lemma:Appendix-HPB-1}.
The forth term is bounded by $\exp(-\betalow \overline{T})$ by \cref{lemma:TVBound-NoiseExp}, and thus setting $\underline{T}=\Ord(\log n)$ yields $\exp(-\betalow \overline{T}) \lesssim  n^{-\frac{2s}{d+2s}}$.

Now, we bound the third term. \cref{Proposition:Girsanov} yields that
\begin{align}
   & \mathrm{TV}(\bar{Y}_{\overline{T}-\underline{T}},\bar{Y}_{\overline{T}-\underline{T}}^{\mathrm{d}})
  \\ &  \lesssim
    \sum_{k=1}^{K}\int_{t=\overline{T}-t_{k}}^{\overline{T}-t_{k-1}}
      \mathbb{E}_{\bar{Y}}[\mathbbm{1}[(\bar{Y}_s,\overline{T}-s)\in A \text{ for all } s\leq t]\|\hat{s}(\bar{Y}_{\overline{T}-t_k},\overline{T}-{t_k}) - \nabla \log p_{t}(\bar{Y}_{t})\|^2] \dt
      \\ & \leq
       \sum_{k=1}^{K}\int_{t=\overline{T}-t_{k}}^{\overline{T}-t_{k-1}}
     \mathbb{E}_{\bar{Y}}[\mathbbm{1}[(\bar{Y}_t,\overline{T}-t)\in A, (\bar{Y}_{\overline{T}-t_k},t_k)\in A]\|\hat{s}(\bar{Y}_{\overline{T}-t_k},\overline{T}-{t_k}) - \nabla \log p_{t}(\bar{Y}_{t})\|^2] \dt 
     \\ & \leq 
       \sum_{k=1}^{K}\int_{t=t_{k-1}}^{t_{k}}
     \mathbb{E}_{X}[\mathbbm{1}[(X_t,t)\in A, (X_{t_k},t_k)\in A]\|\hat{s}(X_{t_k},{t_k}) - \nabla \log p_{t}(X_t)\|^2] \dt 
       \\ & \lesssim\label{eq:Appendix-Discretization-201}
       \sum_{k=1}^{K}\int_{t=t_{k-1}}^{t_{k}}
     \mathbb{E}_{x_{t_k}\sim p_{t_k}}[\|\hat{s}(x_{t_k},{t_k}) - \nabla \log p_{t_k}(x_{t_k})\|^2] \dt 
        \\ & \quad +\label{eq:Appendix-Discretization-202}
       \sum_{k=1}^{K}\int_{t=t_{k-1}}^{t_{k}}
     \mathbb{E}_{X}[\mathbbm{1}[(X_t,t)\in A, (X_{t_k},t_k)\in A]\|\nabla \log p_{t}(X_t) - \nabla \log p_{t_k}(X_t)\|^2] \dt 
           \\ & \quad +\label{eq:Appendix-Discretization-203}
       \sum_{k=1}^{K}\int_{t=t_{k-1}}^{t_{k}}
     \mathbb{E}_{X}[\mathbbm{1}[(X_t,t)\in A, (X_{t_k},t_k)\in A]\|\nabla \log p_{t_k}(X_t) - \nabla \log p_{t_k}(X_{t_k})\|^2] \dt 
\end{align}
First, we consider \eqref{eq:Appendix-Discretization-202}. Because $(X_t,t)\in A$, $(\|X_t\|_\infty-m_{t})_+ \lesssim \sigma_{t}\sqrt{\log (n)}$. Over all $t\leq s\leq t_k$, $|\pd_s \sigma_s |\lesssim \frac{1}{\sqrt{t}}$, $|\pd_s m_s |\lesssim 1$, and
\begin{align}
    \|\pd_s \nabla \log p_s(x)\|\lesssim\frac{|\pd_s \sigma_s |+ |\pd_s m_s |}{\sigma_s^{3}}\left(\frac{(\|x\|_\infty - m_s)_+^2}{\sigma_s^2}\lor 1\right)^\frac32
 \lesssim\frac{|\pd_{t} \sigma_{t_k} |+ |\pd_{t} m_{t_k} |}{\sigma_{t_k}^{3}}\left(\frac{(\|x\|_\infty - m_{t_k})_+^2}{\sigma_{t_k}^2}\lor 1\right)^\frac32,
\end{align}
according to \cref{Lemma:Smooth1}.
Therefore, \eqref{eq:Appendix-Discretization-202} is bounded by
$ \sum_{k=1}^{K}\eta(\eta( t_k^{-2}\lor 1)\log^\frac32 n)^2 = \eta^2 (t_k^{-4}\lor 1)\log^3 n$.

Next, for \eqref{eq:Appendix-Discretization-203}, we first note that $\|X_t\|_\infty-m_{t_k},\|X_{t_k}\|_\infty-m_{t_k} \lesssim \sigma_{t_k}\sqrt{\log (n)}=\tilde{\Ord}(1)$.
Therefore, according to \cref{Lemma:Smooth1}, $ \|\pd_{x_i} \nabla \log p_{t_k}(x)\|$ is bounded by $\frac{1}{\sigma_{t_k}^2}\left(\frac{(\|X_{t_{k}}\|_\infty - m_{t_k})_+^2}{\sigma_{t_k}^2}\lor 1\right)\lesssim t_k^{-1}\log n$.
With probability at least $1-n^{-\Ord(1)}$, $\|X_t-X_{t_k}\|_\infty\lesssim \sqrt{\eta \log n}$, according to \cref{Lemma:HittingTime}.
Therefore, 
\begin{align}
    \eqref{eq:Appendix-Discretization-203}\lesssim \sum_{k=1}^{K}\eta(\sqrt{\eta \log n}\cdot (t_k^{-1}\lor 1)\log n)^2  +n^{-\Ord(1)} \cdot \tilde{\Ord}(1)\lesssim \sum_{k=1}^{K}\eta^2 (t_k^{-2}\lor 1) \log^3 n.
\end{align}

Finally, for \eqref{eq:Appendix-Discretization-203}, we apply \eqref{eq:Appendix-Discretization-6}.
Now, all three terms of \eqref{eq:Appendix-Discretization-201}, \eqref{eq:Appendix-Discretization-202}, and \eqref{eq:Appendix-Discretization-203} are bounded and we obtain that
\begin{align}
     \mathbb{E}_{\{x_{0,i}\}_{i=1}^n}\left[\mathrm{TV}(\bar{Y}_{\overline{T}-\underline{T}},\bar{Y}_{\overline{T}-\underline{T}}^{\mathrm{d}})\right] &\lesssim n^{-\frac{2s}{d+2s}}\log^{18} n + \sum_{k=1}^{K}(\eta^3 (t_k^{-4}\lor 1)\log^3 n + \eta^2 (t_k^{-2}\lor 1) \log^3 n)
     \\ & \lesssim
     n^{-\frac{2s}{d+2s}}\log^{18} n + \eta^2 \underline{T}^{-3}\log^3 n 
     +\eta \underline{T}^{-1}\log^3 n +\eta \overline{T}\log^3 n
     \\ & \lesssim
     n^{-\frac{2s}{d+2s}}\log^{18} n +\eta^2 \underline{T}^{-3}\log^3 n 
     +\eta \underline{T}^{-1}\log^3 n+\eta \log^4 n.
\end{align}
Therefore, by setting $\eta = \underline{T}^{-1.5}n^{-\frac{s}{d+2s}}$ yields the assersion.

\end{proof}

\section{Error analysis with intrinsic dimensionality}\label{section:Appendix-IntrinsicDimensionality}
\subsection{Brief proof overview}
The generalization error analysis of the score network and how much the score estimation error affects in the final estimation rate in \cref{theorem:GeneralizationL1lowdim} are derived by just replacing $d$ by $d'$ in the previous analysis. 
Therefore we focus on the approximation error bounds.
In order to obtain the counterparts of \cref{theorem:Approximation} and \cref{lemma:ApproximationSmoothArea}, we aim to decompose the score function into two parts: each of them is determined by the intrinsic structure components (in $V$) and other components (in $V^\perp$).
We use $z$ as a $d'$-dimensional vector corresponding to the canonical system of $V$.
The first observation to this goal is 
\begin{align}
   p_t(x) &= \int \frac{1}{\sigma_t^d(2\pi)^{\frac{d}{2}}}p_0(y)\exp\left(-\frac{\|x-m_ty\|^2}{2\sigma_t^2}\right)\mathrm{d}y\\
  &=\int_{V} \frac{1}{\sigma^d_t(2\pi)^{\frac{d}{2}}}q(z)\exp\left(-\frac{\|A^\top x-m_tz\|^2+\|(I_d-A^\top) x\|^2}{2\sigma_t^2}\right)\mathrm{d}z
 \\ & \quad (\text{$z$ is a $d'$-dimensional vector corresponding to the canonical system of $V$.})
  \\
 &=\underbrace{\int_{V} \frac{q(z)}{\sigma^{d'}_t(2\pi)^{\frac{d'}{2}}}\exp\left(-\frac{\|A^\top x-m_tz\|^2}{2\sigma_t^2}\right)\mathrm{d}z}_{p_t^{(1)}(x)}\cdot\underbrace{ \frac{1}{\sigma^{d-d'}_t(2\pi)^{\frac{d-d'}{2}}}\exp\left(-\frac{\|(I_d-A^\top) x\|^2}{2\sigma_t^2}\right)}_{p_t^{(2)}(x)}.
\end{align}
Here $p^{(1)}_t(x)$ and $p^{(2)}_t(x)$ can be seen as the density function with respect to the intrinsic components and remaining space.
Note that
\begin{align}\label{eq:Appendix-Intrinstic-1}
\nabla \log p_t(x)=\nabla\log(p_t^{(1)}(x)p_t^{(2)}(x)) = 
\nabla\log p_t^{(1)}(x)
+
\nabla\log p_t^{(2)}(x).
\end{align}
Due to this,
we only need to construct the neural networks approximating each term and concatenate them. 
In addition, $p^{(1)}_t(x)$ can be seen as the density at $A^\top x$, about the diffusion process on the $d'$-dimensional space, where the initial density is defined by $q$.
Thus we let 
\begin{align}
    q_t(z') = \int_{V} \frac{q(z)}{\sigma^{d'}_t(2\pi)^{\frac{d'}{2}}}\exp\left(-\frac{\|z'-m_tz\|^2}{2\sigma_t^2}\right)\mathrm{d}z
\end{align}
for $z'\in \R^{d'}$. Here $p_t^{(1)}(x)=q_t(A^\top x)$ holds.

\subsection{Proof of \cref{theorem:GeneralizationL1lowdim}}
We first consider the approximation of $p^{(1)}_t(x)$. We have the following counterpart of \cref{theorem:Approximation} and \cref{lemma:ApproximationSmoothArea}, where the only difference is that here $d$ is replaced by $d'$.
\begin{lemma}\label{lemma:GeneralizationL1lowdim-1}
    Let $N\gg 1$, $\underline{T}=\mathrm{poly}(N^{-1})$ and $\overline{T}=\Ord(\log N)$.
    Then there exists a neural network $\phi_{\mathrm{score}, 3}\in \Phi(L,W,S,B)$ that satisfies, for all $t\in [\underline{T},\overline{T}]$,
    \begin{align}\label{eq:GeneralizationL1lowdim-1-1}
    \int_{x\in \R^d} p_t(x)\| \nabla \log p_t^{(1)}(x) -\phi_{\mathrm{score}, 3}(A^\top x,t)\|^2 \dx \lesssim \frac{N^{-\frac{2s}{d'}}\log(N)}{\sigma_t^2}.
    \end{align} 
       Here, $L,W,S$ and $B$ are evaluated as $L = \Ord (\log^4 N),\| W\|_\infty = \Ord (N\log^6N),S = \Ord (N\log^8N), $ and $B = \exp(\Ord(\log^4 N )).$
       We can take $\phi_{\mathrm{score}, 3}$ satisfying $\|\phi_{\mathrm{score}, 3}(\cdot,t)\|_\infty = \Ord(\sigma_t^{-1}\log^\frac12 N)$.
    
     Moreover, let $N'\geq t_*^{-d'/2}N^{\delta/2}$ and $t_*\geq N^{-(2-\delta)/d'}$.
    Then there exists a neural network $\phi_{\mathrm{score}, 4}\in \Phi(L,W,S,B)$ that satisfies
    \begin{align}\label{eq:GeneralizationL1lowdim-1-2}
   \int_{x\in \R^d} p_t(x)\| \nabla \log p_t^{(1)}(x) -A \phi_{\mathrm{score}, 4}(A^\top x,t)\|^2 \dx \lesssim \frac{N^{-\frac{2(s+1)}{d'}}}{\sigma_t^2}
    \end{align}
    for $t\in [2t_*,\overline{T}]$.
    Specifically, $L = \Ord (\log^4 (N)),\| W\|_\infty = \Ord (N),S = \Ord (N')$, and $ B = \exp(\Ord(\log^4 N ))$.
We can take $\phi_{\mathrm{score}, 4}$ satisfying $\|\phi_{\mathrm{score}, 4}(\cdot,t)\|_\infty = \Ord(\sigma_t^{-1}\log^\frac12 N)$.
\end{lemma}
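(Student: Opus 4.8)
The plan is to reduce the statement about $p_t^{(1)}(x)$, which lives on the full $d$-dimensional space, to the $d'$-dimensional score approximation results we have already proved. The key observation is the factorization $p_t(x) = p_t^{(1)}(x)\,p_t^{(2)}(x)$ with $p_t^{(1)}(x) = q_t(A^\top x)$, where $q_t$ is exactly the density obtained by running the Ornstein--Uhlenbeck forward process in $\mathbb{R}^{d'}$ starting from $q$. Since $\nabla \log p_t^{(1)}(x) = A\,(\nabla \log q_t)(A^\top x)$ by the chain rule (using that $A$ has orthonormal columns, so $A^\top A = I_{d'}$), the function we must approximate is the image under $A$ of the $d'$-dimensional score $\nabla \log q_t$. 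Thus I would first apply \cref{theorem:Approximation} (respectively \cref{lemma:ApproximationSmoothArea}) with the ambient dimension set to $d'$ and the initial density $q$ in place of $p_0$ — this is legitimate because Assumptions on $q$ are exactly the $d'$-dimensional analogues of those on $p_0$ — to obtain a network $\phi_{\mathrm{score},3}$ (resp.\ $\phi_{\mathrm{score},4}$) on $\mathbb{R}^{d'}\times[\underline{T},\overline{T}]$ such that
\begin{align}
\int_{z'\in\mathbb{R}^{d'}} q_t(z')\,\|\nabla\log q_t(z') - \phi_{\mathrm{score},3}(z',t)\|^2\,\mathrm{d}z' \lesssim \frac{N^{-2s/d'}\log N}{\sigma_t^2},
\end{align}
with the stated size bounds, and similarly for $\phi_{\mathrm{score},4}$ at rate $N^{-2(s+1)/d'}$.

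The second step is to transfer this $L^2(q_t)$ bound on $\mathbb{R}^{d'}$ into the claimed $L^2(p_t)$ bound on $\mathbb{R}^d$. Here I would change variables $x = Az + w$ where $z = A^\top x \in \mathbb{R}^{d'}$ and $w = (I_d - AA^\top)x$ lies in $V^\perp$; the Lebesgue measure factorizes accordingly, and $p_t(x)\,\mathrm{d}x = q_t(z)\,\mathrm{d}z \otimes \big[(\sigma_t^{d-d'}(2\pi)^{(d-d')/2})^{-1}\exp(-\|w\|^2/2\sigma_t^2)\big]\,\mathrm{d}w$, which is a product of $q_t(z)\,\mathrm{d}z$ with a probability measure on $V^\perp$. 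Since the integrand $\|\nabla\log p_t^{(1)}(x) - A\phi_{\mathrm{score},3}(A^\top x,t)\|^2 = \|A(\nabla\log q_t(A^\top x) - \phi_{\mathrm{score},3}(A^\top x,t))\|^2 = \|\nabla\log q_t(z) - \phi_{\mathrm{score},3}(z,t)\|^2$ (again by orthonormality of the columns of $A$) depends only on $z$, integrating out $w$ against the probability measure leaves exactly the $d'$-dimensional integral, and \eqref{eq:GeneralizationL1lowdim-1-1} follows. The same computation gives \eqref{eq:GeneralizationL1lowdim-1-2}. The $\ell^\infty$ bounds on the output carry over verbatim since composing with $A^\top$ (which has operator norm one) and with $A$ does not inflate the sup-norm by more than a dimensional constant.

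A few technical points I would be careful about. First, in the statement \eqref{eq:GeneralizationL1lowdim-1-1} the network $\phi_{\mathrm{score},3}$ takes $A^\top x$ as input but the claimed output is meant to approximate $\nabla\log p_t^{(1)}(x)$, which is a $d$-dimensional vector; so strictly one should either have $\phi_{\mathrm{score},3}$ output a $d$-dimensional vector of the form $A\psi(\cdot)$, or read the norm in the ambient sense — I would follow the convention already used in \eqref{eq:GeneralizationL1lowdim-1-2} where the $A$ is written explicitly, and note that prepending the linear map $x\mapsto A^\top x$ and appending $y\mapsto Ay$ are realized exactly by (sparse) linear layers, so they do not change the orders of $L, \|W\|_\infty, S, B$. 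Second, the boundary-smoothness assumption has $a_0 = n^{-(1-\delta)/d'}$ in the low-dimensional setting, matching what \cref{Lemma:BesovConstruction} needs after the substitution $d\to d'$, so the invocation of \cref{theorem:Approximation} is valid. The main obstacle is not conceptual but bookkeeping: making sure the change of variables and the orthogonality relations $A^\top A = I_{d'}$, $(I_d - AA^\top)A = 0$ are deployed consistently so that the $V^\perp$-integral genuinely disappears and no $\sigma_t$-dependent constants from the Gaussian in $V^\perp$ leak into the final bound. Once that is set up, everything else is a direct pullback of the already-established one-dimensional-in-$d'$ results.
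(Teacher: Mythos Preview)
Your proposal is correct and follows essentially the same route as the paper: identify $\nabla\log p_t^{(1)}(x)=A\,\nabla\log q_t(A^\top x)$ via the chain rule, invoke \cref{theorem:Approximation} (resp.\ \cref{lemma:ApproximationSmoothArea}) in dimension $d'$ with initial density $q$, and then reduce the $L^2(p_t)$ integral on $\mathbb{R}^d$ to the $L^2(q_t)$ integral on $\mathbb{R}^{d'}$ by integrating out the Gaussian factor $p_t^{(2)}$ on $V^\perp$. Your treatment of the change of variables (writing $p_t(x)\,\mathrm{d}x$ as the product of $q_t(z)\,\mathrm{d}z$ with a probability measure on $V^\perp$) is in fact more explicit than the paper's, and your remark about the missing $A$ in \eqref{eq:GeneralizationL1lowdim-1-1} versus \eqref{eq:GeneralizationL1lowdim-1-2} correctly flags a notational inconsistency in the statement.
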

\begin{proof}
    Let $\NetworkScoreC\colon \R^{d'}\times \R_+\to \R^{d'}$ that approximates $q_t(z)$.
    Note that
    \begin{align}
        \nabla \log p_t^{(1)}(x) = A \nabla \log q_t(A^\top x)
    \end{align}
    and therefore
    \begin{align}
        \int_{x\in \R^d} p_t(x)\| \nabla \log p_t^{(1)}(x) -A \NetworkScoreC(A^\top x,t)\|^2 \dx
       & = 
        \int_{x\in \R^d} p_t^{(1)}(x)p_t^{(2)}(x)\| A \nabla \log p^{(1)}_t(A^\top x)-A\NetworkScoreC(A^\top x,t)\|^2 \mathrm{d}x
        \\ &= 
         \int_{x\in \R^d} q_t(A^\top x)\| A \nabla \log p^{(1)}_t(A^\top x)-A\NetworkScoreC(A^\top x,t)\|^2 \mathrm{d}x
     \\ &   =
        \int_{z\in\R^{d'}}q_t(z)\| \nabla \log q_t(z)-\NetworkScoreC(z,t)\|^2 \mathrm{d}z,
    \end{align}
    where we used the fact that $p_t^{(1)}$ and $p^{(2)}_t$ depend on $A^\top x$ and $(I-A^\top)x$, respectively, and $A^\top x$ and $(I-A^\top)x$ are orthogonal.
    Moreover, we used $\mathrm{det}(A^\top A)=1$ and orthogonality of the columns of $A$.
    Thus, we can translate \cref{theorem:Approximation} and \cref{lemma:ApproximationSmoothArea}.
\end{proof}

We next consider the approximation of $p^{(2)}_t(x)$.
As we did in \cref{section:Appendix-HPB}, we first show that it suffice to consider the approximation within the bounded region.
\begin{lemma}\label{lemma:Intrinsic-Auxiliary}
    For $\eps>0$, we define $B_{t,\eps}$ as
    \begin{align}
        B_{t,\eps} = \left\{
            x\in \mathbb{R}^d \left|\|(I_d-A^\top) x\|\leq C_{\mathrm{e}}\sigma_t \sqrt{\log \eps^{-1}}. 
            \right.
        \right\}
    \end{align}
    We sometimes abbreviate this as $B_\eps$.
    Then, we have that
    \begin{align}
        \int_{x\in \bar{B}_\eps}p_t(x)\left[1 \lor \|\nabla\log(p_t^{(2)}(x))\|^2\right] \dx \lesssim \eps.
    \end{align}
\end{lemma}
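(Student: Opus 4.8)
The plan is to exploit the product structure $p_t(x)=p_t^{(1)}(x)\,p_t^{(2)}(x)$ recorded at the start of this section, which makes the integral factor into an intrinsic part (contributing exactly $1$) times a Gaussian tail integral in the directions orthogonal to $V$. Write $v:=A^{\top}x\in\mathbb{R}^{d'}$ for the intrinsic coordinates of $x$ and $u:=(I_d-AA^{\top})x$ for the component of $x$ orthogonal to $V$; since $A$ has orthonormal columns, the map $x\mapsto(v,u)$ is a linear isometry, so $\mathrm{d}x=\mathrm{d}v\,\mathrm{d}u$. In these coordinates $p_t(x)=q_t(v)\,g_t(u)$, where $q_t$ is the time-$t$ forward evolution of $q$ on $\mathbb{R}^{d'}$ (a probability density, hence $\int q_t=1$) and $g_t(u)=(2\pi\sigma_t^2)^{-(d-d')/2}\exp(-\|u\|^2/2\sigma_t^2)$ is the isotropic Gaussian density of variance $\sigma_t^2$ on $V^{\perp}\cong\mathbb{R}^{d-d'}$. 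Moreover $\nabla\log p_t^{(2)}(x)=-u/\sigma_t^2$, so $1\lor\|\nabla\log p_t^{(2)}(x)\|^2\le 1+\|u\|^2/\sigma_t^4$ depends on $u$ alone, and $\bar B_{t,\eps}$ is precisely the set $\{\,\|u\|>C_{\mathrm{e}}\sigma_t\sqrt{\log\eps^{-1}}\,\}$.

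Putting these together, the integral factorizes and, after the substitution $u=\sigma_t w$,
\begin{align}
\int_{x\in\bar B_{t,\eps}}p_t(x)\bigl[1\lor\|\nabla\log p_t^{(2)}(x)\|^2\bigr]\,\mathrm{d}x
&\le\Bigl(\int_{\mathbb{R}^{d'}}q_t(v)\,\mathrm{d}v\Bigr)\Bigl(\int_{\|u\|>C_{\mathrm{e}}\sigma_t\sqrt{\log\eps^{-1}}}g_t(u)\bigl(1+\|u\|^2/\sigma_t^4\bigr)\,\mathrm{d}u\Bigr)\\
&=\int_{\|w\|>C_{\mathrm{e}}\sqrt{\log\eps^{-1}}}g_1(w)\bigl(1+\|w\|^2/\sigma_t^2\bigr)\,\mathrm{d}w,
\end{align}
where $g_1$ is the standard Gaussian on $\mathbb{R}^{d-d'}$. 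Passing to the radial variable and integrating by parts (exactly as in the proof of \cref{Lemma:Decay1}) gives, for $r\ge 1$, both $\int_{\|w\|>r}g_1(w)\,\mathrm{d}w\lesssim r^{d-d'}e^{-r^2/2}$ and $\int_{\|w\|>r}g_1(w)\|w\|^2\,\mathrm{d}w\lesssim r^{d-d'}e^{-r^2/2}$; with $r=C_{\mathrm{e}}\sqrt{\log\eps^{-1}}$ this bounds the last display by $\lesssim\sigma_t^{-2}(\log\eps^{-1})^{(d-d')/2}\,\eps^{C_{\mathrm{e}}^2/2}$.

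It remains only to absorb the $\sigma_t^{-2}$ and the polylogarithmic factor. Since $\sigma_t\gtrsim\sigma_{\underline T}\gtrsim\sqrt{\underline T}$ with $\underline T=\mathrm{poly}(N^{-1})$, we have $\sigma_t^{-2}\le\mathrm{poly}(N)$; following the rescaling device of \cref{Lemma:Decay1} (there one replaces $\eps$ by $\sqrt{\underline T}\,\eps^2$), replace $\eps$ by a suitable power of $\eps$ and take $C_{\mathrm{e}}$ large enough that $\sigma_t^{-2}(\log\eps^{-1})^{(d-d')/2}\eps^{C_{\mathrm{e}}^2/2}\lesssim\eps$, adjusting the constant $C_{\mathrm{e}}$ in the definition of $B_{t,\eps}$ accordingly. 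This yields the claimed bound. The only delicate point is this last absorption, where the (implicit) assumption $\underline T=\mathrm{poly}(N^{-1})$ is used; the rest is the factorization plus a textbook Gaussian tail estimate, so I do not expect a genuine obstacle here.
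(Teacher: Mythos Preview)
Your proof follows the same line as the paper's: use the product decomposition $p_t=p_t^{(1)}p_t^{(2)}$ and the orthogonal splitting $x\mapsto(A^\top x,(I_d-AA^\top)x)$ to integrate out the intrinsic factor and reduce to a Gaussian tail integral over $V^\perp$, then choose $C_{\mathrm e}$ large. If anything you are more careful than the paper, which writes $\|w\|^2/\sigma_t^2$ in place of the correct $\|w\|^2/\sigma_t^4$ (thus glossing over exactly the residual $\sigma_t^{-2}$ you explicitly absorb) and cites \cref{Corollary:ApproxInv} for the final tail estimate when a Gaussian tail bound such as \cref{Lemma:GaussianBound} is what is meant.
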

\begin{proof}
The the columns of $A$ are orthogonal.
$p_t^{(1)}$ and $p^{(2)}_t$ depend on $A^\top x$ and $(I-A^\top)x$, respectively, and $A^\top x$ and $(I-A^\top)x$ are orthogonal.
Thus, we have that
    \begin{align}\label{eq:Appendix-Intrinstic-2}
        \int_{x\in \bar{B}_{t,\eps}}p_t(x)\left[1 \lor \|\nabla\log(p_t(x))\|^2\right] \dx
     & =  \int_{x\in \bar{B}_{t,\eps}}p_t^{(1)}(x)p_t^{(2)}(x)\left[1 \lor \|\nabla\log(p_t(x))\|^2\right] \dx
     \\ & =\int_{x\in \bar{B}_{t,\eps}}p_t^{(2)}(x)\left[1 \lor \|\nabla\log(p_t(x))\|^2\right] \dx
     \\ & 
    = \int_{w\in \R^{d-d'}\colon\|w\|\geq C_{\mathrm{e}}\sigma_t \sqrt{\log \eps^{-1}}}\frac{1\lor \|w\|^2/\sigma_t^2}{\sigma^{d-d'}_t(2\pi)^{\frac{d-d'}{2}}}\exp\left(-\frac{\|w\|^2}{2\sigma_t^2}\right)\mathrm{d}w.
    \end{align}
   Applying \cref{Corollary:ApproxInv}, \eqref{eq:Appendix-Intrinstic-2} is bounded by $\eps$ with a sufficiently large constant $C_{\mathrm{e}}$.
\end{proof}

Now we only need consider the approximation of $\nabla\log p_t^{(2)}(x)$ within $B_{t,\eps}$.
\begin{lemma}\label{lemma:GeneralizationL1lowdim-2}
    Let $N\gg 1$, $\underline{T},\eps={\rm poly}(N^{-1})$ and $\overline{T}\simeq \log N$.
    There exists a neural network $\phi_{\mathrm{score}, 4}\in \Phi(L,W,S,B)$ such that
    \begin{align}\label{eq:GeneralizationL1lowdim-2-1}
      \sup_{t\in [\underline{T}, \overline{T}]} \int_{x} p_t(x)\|\nabla\log p_t^{(2)}(x)-\phi_{\mathrm{score}, 4}(x,t)\|^2 \dx \lesssim \frac{N^{-\frac{2(s+1)}{d'}}}{\sigma_t^2}.
    \end{align}
    Specifically, $\phi_{\mathrm{score}, 4}\in \Phi(L,W,S,B)$ holds, where
    \begin{align}\label{eq:GeneralizationL1lowdim-2-2}
    L= \Ord(\log^2 N) ), \|W\|_{\infty} = \Ord( \log^3 N), S = \Ord(\log^4 N),\text{ and }B= \exp(\Ord(\log^2 N)).
\end{align}
\end{lemma}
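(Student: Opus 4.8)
## Proof proposal for Lemma on approximating $\nabla\log p_t^{(2)}$

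The plan is to exploit the fact that, unlike $p_t^{(1)}$, the factor $p_t^{(2)}$ is \emph{exactly} a centered Gaussian in the directions orthogonal to $V$, so that its score is an affine (in fact linear) function of $x$ and no Besov/B-spline machinery is needed. Concretely, writing $P:= I_d-AA^\top$ for the orthogonal projection onto $V^\perp$ (so that $\|(I_d-A^\top)x\|^2=\|Px\|^2$ and $p_t^{(2)}(x)\propto \exp(-\|Px\|^2/2\sigma_t^2)$), one has
\begin{align}
\nabla\log p_t^{(2)}(x) = -\frac{Px}{\sigma_t^2},
\end{align}
since $P$ is symmetric and idempotent. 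Thus the target network only has to realize the map $x\mapsto -Px$ (a fixed linear layer with constant weights, implemented via $\mathrm{ReLU}(z)-\mathrm{ReLU}(-z)$) and multiply it by an approximation of $\sigma_t^{-2}$ built from the sub-network $\NetworkSigmaA$ of \cref{Lemma:MandSigma}, a squaring network, a reciprocal network (as in \cref{Corollary:ApproxInv}) acting on an argument lower bounded by $\sigma_{\underline T}^2\gtrsim\underline T$, and a multiplication network (\cref{Lemma:BaseNN02}). Since each of these pieces has polylogarithmic size when the target accuracy is $\mathrm{poly}(N^{-1})$, and the linear layer for $P$ contributes only $\Ord(1)$ extra depth and $\Ord(1)$ extra width, the composed network meets the stated bounds $L=\Ord(\log^2 N)$, $\|W\|_\infty=\Ord(\log^3 N)$, $S=\Ord(\log^4 N)$, $B=\exp(\Ord(\log^2 N))$.

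For the error bound I would split $\R^d=B_{t,\eps}\cup\bar B_{t,\eps}$ with $\eps=\mathrm{poly}(N^{-1})$ chosen below, where $B_{t,\eps}=\{x:\|(I_d-A^\top)x\|\le C_{\mathrm e}\sigma_t\sqrt{\log\eps^{-1}}\}$ as in \cref{lemma:Intrinsic-Auxiliary}. Outside $B_{t,\eps}$ I would clip the network output to norm $\lesssim \sigma_t^{-1}\log^{1/2}\eps^{-1}$ (using $\NetworkSigmaA(t)\simeq\sigma_t$); then both $\|\phi_{\mathrm{score},4}(x,t)\|$ and $\|\nabla\log p_t^{(2)}(x)\|$ contribute, under $p_t$, at most $\Ord\big(\eps+\tfrac{\eps\log\eps^{-1}}{\sigma_t^2}\big)$ by \cref{lemma:Intrinsic-Auxiliary} (whose "$1\vee\|\cdot\|^2$" weight simultaneously controls $\int_{\bar B_\eps}p_t$ and $\int_{\bar B_\eps}p_t\|\nabla\log p_t^{(2)}\|^2$). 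Choosing $\eps$ polynomially small enough, e.g. $\eps\lesssim N^{-2(s+1)/d'}\underline T/\log N$ (still $\mathrm{poly}(N^{-1})$), makes this $\lesssim N^{-2(s+1)/d'}/\sigma_t^2$.

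Inside $B_{t,\eps}$ all inputs are tame: $\|Px\|\le C_{\mathrm e}\sigma_t\sqrt{\log\eps^{-1}}\lesssim\sqrt{\log N}$ and $\sigma_t^{-1}\in[1,\sigma_{\underline T}^{-1}]$ with $\sigma_{\underline T}^{-1}=\Ord(\underline T^{-1/2})=\mathrm{poly}(N)$. I would then track the errors: $|\NetworkSigmaA(t)-\sigma_t|\le\epsilon_\sigma$ propagates to an error $\lesssim\sigma_t^{-4}\epsilon_\sigma$ in the reciprocal-of-square, the multiplication network adds $\lesssim\epsilon_{\mathrm{mult}}(1+\sigma_t^{-2})$, and the linear factor $\|Px\|\lesssim\sqrt{\log N}$ only multiplies these by a polylog factor; since $\sigma_t^{-4},\sigma_t^{-2}\le\mathrm{poly}(N)$, taking $\epsilon_\sigma,\epsilon_{\mathrm{mult}}$ to be $N^{-(s+1)/d'}$ divided by the appropriate fixed polynomial in $N$ yields a pointwise error $\lesssim N^{-(s+1)/d'}/\sigma_t$, hence an $L^2(p_t)$ contribution $\lesssim N^{-2(s+1)/d'}/\sigma_t^2$. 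Combining the two regions and taking the supremum over $t\in[\underline T,\overline T]$ gives \eqref{eq:GeneralizationL1lowdim-2-1}. There is no serious obstacle here: the one mildly delicate point is the treatment of the low-$p_t$-mass region where the true score is unbounded, which is handled cleanly by clipping together with \cref{lemma:Intrinsic-Auxiliary}; everything else is routine error propagation through the reciprocal and multiplication gadgets, all of which remain polylogarithmic because the blow-up factors are only $\mathrm{poly}(N)$.
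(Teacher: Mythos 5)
Your proposal is correct and follows essentially the same route as the paper's proof: identify $\nabla\log p_t^{(2)}$ as a fixed linear map scaled by $-\sigma_t^{-2}$, realize it by composing the $\sigma_t$-subnetwork, a reciprocal-power gadget, a ReLU-implemented linear layer, and a multiplication network, then split the error into the region $B_{t,\eps}$ (routine error propagation) and its complement (handled by \cref{lemma:Intrinsic-Auxiliary}). Your explicit clipping of the network output outside $B_{t,\eps}$ and the quantitative tracking of how $\epsilon_\sigma$ propagates are slightly more detailed than the paper's terse treatment, but they do not change the argument.
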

\begin{proof}
    First note that $\nabla\log p_t^{(2)}(x)=-\frac{1}{\sigma_t^2}(I_d-A)(I_d-A^\top)x$.
    We approximate this via the following four steps. 
    \begin{enumerate}
\item $\sigma_t$ is approximated by $\NetworkSigmaA$ from \cref{lemma:SigmaM}. Here we set $\eps \leftarrow (\underline{T}^4\land \eps^4)\eps^4$. 
\item Based on the approximation of $\sigma_t$, $\sigma_t^{-2}$ is approximated by $\NetworkInvA(\cdot;2)$ from \cref{Corollary:ApproxInv}. Here we set $\eps\leftarrow(\underline{T}\land \eps)\eps$.
\item $(I_d-A)(I_d-A^\top)$ is realized by $\ReLU((I_d-A)(I_d-A^\top) \cdot x + 0) - \ReLU(-(I_d-A)(I_d-A^\top) \cdot x + 0) $.
\item According to \cref{Lemma:BaseNN02} with $\eps\leftarrow\eps$ and $C\leftarrow\underline{T}^{-1}\lor \sqrt{\log \eps^{-1}}$, multiplication of $\sigma_t^{-2}$ and $(I_d-A)(I_d-A^\top)$ is constructed.
\end{enumerate}
By concatenating these networks (using \cref{Lemma:ConcateNetwork}), the obtained network size is bounded as
\begin{align}
 &   L= \Ord(\log^2 \epsD^{-1}+\log^2 \underline{T}^{-1}) ), \|W\|_{\infty} = \Ord( \log^3 \epsD^{-1}+\log^3 \underline{T}^{-1}), S = \Ord(\log^4 \epsD^{-1}+\log^4 \underline{T}^{-1}),\\ &\text{ and }B= \exp(\Ord(\log^2 \epsD^{-1}+\log^2 \underline{T}^{-1})).
\end{align}
Then, for $x\in B_{t,\eps}$ with $t\geq \underline{T}$, we have that
\begin{align}
    \|\nabla\log p_t^{(2)}(x)-\phi_{\mathrm{score}, 4}\| \lesssim \eps.
\end{align}
This yields that
\begin{align}
    \int_{B_{t,\eps}}p_t(x) \|\nabla\log p_t^{(2)}(x)-\phi_{\mathrm{score}, 4}\|\dx \lesssim \eps.
\end{align}
Together with \cref{lemma:Intrinsic-Auxiliary}, by taking $\eps=\mathrm{poly}(N^{-1})$, we have the assertion.
\end{proof}

\begin{proof}[Proof of \cref{theorem:GeneralizationL1lowdim}]
    Note that while the error bound \eqref{eq:GeneralizationL1lowdim-2-1} in \cref{lemma:GeneralizationL1lowdim-2} is tighter than the bounds \eqref{eq:GeneralizationL1lowdim-1-1} and \eqref{eq:GeneralizationL1lowdim-1-2} in \cref{lemma:GeneralizationL1lowdim-1}, the required network size \eqref{eq:GeneralizationL1lowdim-2-2} in \cref{lemma:GeneralizationL1lowdim-2} is smaller than the size bounds in \cref{lemma:GeneralizationL1lowdim-1}.
    Also note that the bounds in \cref{lemma:GeneralizationL1lowdim-1} are the same as those in  \cref{theorem:Approximation} and \cref{lemma:ApproximationSmoothArea}, except for that $d$ is replaced by $d'$.
    Therefore, by simply aggregating $\phi_{\mathrm{score}, 3}$ and $\phi_{\mathrm{score}, 4}$, we obtain the counterpart of the approximation theorems \cref{theorem:Approximation} and \cref{lemma:ApproximationSmoothArea}, and the rest of the analysis are the same as that of the $d$-dimensional case.
    Therefore, we obtain the statement.
\end{proof}

\section{Auxiliary lemmas}

This final section summarizes existing results and prepares basic tools for the main parts of the proofs.
A large part of this section (\cref{subsection:Appendix-F1,subsection:Preparation-Multiplicative,subsection:Preparation-Exponential,subsection:Appendix-F4}) is devoted to introduction of basic tools for the function approximation with neural networks, and thus those familiar with such topics \citep{yarotsky2017error,petersen2018optimal,schmidt2019deep} can skip these subsections (although they contain some refinement and extension).
\Cref{Lemma:GaussianBound} is for elementary bounds on the Gaussian distribution and hitting time of the Brownian motion.


In the following we will define constants $C_{\mathrm{f},1}$ and $C_{\mathrm{f},2}$. Other than in this section, they are denoted by $C_{\mathrm{f}}$, and sometimes other constants that comes from this section can be also denoted by $C_{\mathrm{f}}$.

\subsection{Construction of a larger neural network}\label{subsection:Appendix-F1}

Through construction of the desired neural network, we often need to combine sub-networks that approximates simpler functions to realize more complicated functions.
We prepare the following lemmas, whose direct source is \citet{nakada2020adaptive} but similar ideas date back to earlier literature such as \citet{yarotsky2017error,petersen2018optimal}.

First we consider construction of composite functions. 
Although the bound on the sparsity $S$ was not given in the original version, 
we can verify it by carefully checking their proof.
\begin{lemma}[Concatenation of neural networks (Remark 13 of \citet{nakada2020adaptive})]\label{Lemma:ConcateNetwork}
    For any neural networks $\phi^1\colon \R^{d_1}\to \R^{d_2},\phi^2\colon \R^{d_2}\to \R^{d_3},\cdots,\phi^k\colon \R^{d_k}\to \R^{d_{k+1}}$ with $\phi^i\in \Psi(L^i,W^i,S^i,B^i)\ (i=1,2,\cdots,d)$, there exists a neural network $\phi\in \Phi(L,W,S,B)$ satisfying $\phi(x) = \phi^k \circ \phi^{k-1} \cdots \circ \phi^1(x)$ for all $x \in\R^{d_1}$, with
    \begin{align}
        L = \sum_{i=1}^k L^i,\quad
     W  \leq 2\sum_{i=1}^k W^i,\quad
         S  \leq\sum_{i=1}^k S^i +\sum_{i=1}^{k-1} (\|A_{L^i}^i\|_{0} +\|b_{L^i}^i\|_{0}+ \|A_{1}^{i+1}\|_{0})\leq 2\sum_{i=1}^k S^i ,\quad \text{and }
         B  \leq \max_{1\leq i \leq k} B^i.
    \end{align}
    Here $A_{j}^i$ is the parameter matrix and $b_{j}^i$ is the bias vector at the $j$th layer of the $i$th neural network $\phi^i$. 
\end{lemma}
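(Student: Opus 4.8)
The plan is to build $\phi$ by stacking the networks $\phi^1,\dots,\phi^k$ end to end. The only genuine obstruction is that the last layer of each $\phi^i$ and the first layer of $\phi^{i+1}$ are both affine maps with no $\ReLU$ in between, so a naive concatenation would force us to merge them into a single affine layer with parameter matrix $A^{(1)}_{i+1}A^{(L^i)}_i$ — a matrix product that can be dense even when both factors are sparse, ruining the sparsity bound. I would instead insert one $\ReLU$ ``glue'' layer between $\phi^i$ and $\phi^{i+1}$, exploiting the elementary identity $v=\ReLU(v)-\ReLU(-v)$ applied componentwise.

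Concretely: for $1\le i\le k-1$, replace the last (affine) layer of $\phi^i$, with parameters $A^{(L^i)}_i,b^{(L^i)}_i$, by the layer with matrix $\binom{A^{(L^i)}_i}{-A^{(L^i)}_i}$ and bias $\binom{b^{(L^i)}_i}{-b^{(L^i)}_i}$, so that its output is $\binom{v}{-v}$, where $v$ is the original output of $\phi^i$. Then append a $\ReLU$-affine layer with matrix $(A^{(1)}_{i+1},\ -A^{(1)}_{i+1})$ and bias $b^{(1)}_{i+1}$; since $\ReLU(v)-\ReLU(-v)=v$, this layer outputs $A^{(1)}_{i+1}v+b^{(1)}_{i+1}$, i.e.\ it reproduces the (now absorbed) first layer of $\phi^{i+1}$, after which we continue with layers $2,\dots,L^{i+1}$ of $\phi^{i+1}$. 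The resulting network computes $\phi^k\circ\cdots\circ\phi^1$ \emph{exactly}, its first layer is the purely affine first layer of $\phi^1$, and its last layer is the affine readout of $\phi^k$, so it is a bona fide element of $\Phi(L,W,S,B)$. The degenerate cases where some inner $L^i=1$ are dispatched by first merging such a lone affine map into an adjacent layer; this does not affect the bounds.

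The parameter count is then pure bookkeeping. \emph{Height:} from $\phi^1$ we keep $L^1$ layers, from each $\phi^i$ with $i\ge 2$ we keep $L^i-1$ layers (layer $1$ having been absorbed), and we add one glue layer per gap, giving $L=L^1+\sum_{i=2}^k(L^i-1)+(k-1)=\sum_{i=1}^kL^i$. \emph{Width:} every layer of $\phi$ is a layer of some $\phi^i$ with at most doubled dimension, hence at most $2\max_iW^i\le 2\sum_{i=1}^kW^i$. \emph{Norm:} every entry of every parameter is, up to sign, an entry of some $A^{(j)}_i$ or $b^{(j)}_i$, so $B\le\max_iB^i$. \emph{Sparsity:} doubling the last layer of $\phi^i$ adds $\|A^{(L^i)}_i\|_0+\|b^{(L^i)}_i\|_0$ nonzeros, and replacing the first layer of $\phi^{i+1}$ by the glue affine $(A^{(1)}_{i+1},-A^{(1)}_{i+1})$ adds $\|A^{(1)}_{i+1}\|_0$, whence exactly $S=\sum_{i=1}^kS^i+\sum_{i=1}^{k-1}\bigl(\|A^{(L^i)}_i\|_0+\|b^{(L^i)}_i\|_0+\|A^{(1)}_{i+1}\|_0\bigr)$. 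To reach the clean bound one notes that for each fixed $i$ the contributions $\|A^{(L^i)}_i\|_0+\|b^{(L^i)}_i\|_0$ and $\|A^{(1)}_i\|_0$ come from \emph{distinct} layers of $\phi^i$ (recall $L^i\ge 2$ after the reduction above), so their total over all gaps is at most $\sum_iS^i$, giving $S\le 2\sum_{i=1}^kS^i$.

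There is no conceptual obstacle here — the lemma is a construction plus accounting. The only point to be careful with is precisely that last sparsity estimate: obtaining the stated factor $2$ rather than a crude $3$ hinges on observing that the two ``extra'' charges attributed to a middle network $\phi^i$ sit in its first and last layers respectively and therefore do not overlap inside $S^i$, which in turn requires isolating the $L^i=1$ blocks beforehand so that ``first layer $\neq$ last layer'' genuinely holds.
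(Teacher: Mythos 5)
Your construction is the standard one and, as far as the paper is concerned, is exactly the intended argument: the paper does not reprove this lemma but defers to Remark 13 of \citet{nakada2020adaptive}, noting only that the sparsity bound "can be verified by carefully checking their proof." That verification is precisely your $\ReLU$-glue step via $v=\ReLU(v)-\ReLU(-v)$, and your accounting of $L$, $W$, $B$, and of the extra sparsity $\sum_{i=1}^{k-1}\bigl(\|A^i_{L^i}\|_0+\|b^i_{L^i}\|_0+\|A^{i+1}_1\|_0\bigr)$ is correct, including the observation that the improvement from a factor $3$ to the stated factor $2$ in $S\le 2\sum_i S^i$ comes from the first-layer and last-layer charges to a middle network living in distinct layers of that network.

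The one loose end is your treatment of the degenerate case $L^i=1$ for a middle block. "Merging the lone affine map into an adjacent layer" replaces $A^{i+1}_1$ by the product $A^{i+1}_1A^i_1$, and $\|A^{i+1}_1A^i_1\|_0$ is controlled only by the product $\|A^{i+1}_1\|_0\,\|A^i_1\|_0$ (a sparse column times a sparse row is dense), not by the sum; this is exactly the obstruction you identified in your opening paragraph, so it cannot be invoked as a harmless reduction. The alternative of realizing a depth-one middle block by a single doubled glue layer with block matrix built from $\pm A^i_1$ costs $4\|A^i_1\|_0+2\|b^i_1\|_0$ nonzeros for that layer, which already exceeds the charge $\|A^i_{L^i}\|_0+\|b^i_{L^i}\|_0+\|A^i_1\|_0$ allotted to network $i$ in the first displayed bound. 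So the lemma as you prove it really requires $L^i\ge 2$ for $2\le i\le k-1$ (or a slightly worse constant in the degenerate case); since every concatenation in the paper composes blocks of depth at least two, nothing downstream is affected, but you should state the restriction rather than claim the merge "does not affect the bounds."
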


Next we introduce the identity function.
\begin{lemma}[Identity function (p.19 of \citet{nakada2020adaptive})]\label{Lemma:IdentityFunc}
    For $L\geq 2$ and $d\in \N$, there exists a neural network $\NetworkIdentityA^{d,L}\in \Phi(L,W,S,B)$ with parameters $(A_1,b_1) = ((I_d,-I_d)^\top ,0), (A_i,b_i) = (I_{2d} ,0) (i=1,2,\cdots,L-2), (A_L) = ((I_d,-I_d), 0)$, that realize $d$-dimensional identity map.
    Here, 
    \begin{align}
        \|W\|_\infty =2d,\quad
        S=2dL, \quad
        B=1.
    \end{align}
    For $L=1$, a neural network $\NetworkIdentityA^{d,1}\in \Phi(1,(d),d,1)$ with parameters $(A_1,b_1) = (I_d ,0)$ realizes $d$-dimensional identity map.
\end{lemma}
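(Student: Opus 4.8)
The final statement to prove is \cref{Lemma:IdentityFunc}, which asserts that the $d$-dimensional identity map can be represented exactly by a ReLU network of depth $L$, with the explicit parameter choices and size bounds stated. The plan is to give a direct \emph{construction} and verify by elementary algebra that it computes the identity. The core observation is the well-known trick that for any $x\in\R$ one has $\ReLU(x)-\ReLU(-x)=x$. Stacking this coordinatewise across $d$ coordinates, the ``doubled'' vector $(\ReLU(x),\ReLU(-x))\in\R^{2d}$ carries enough information to recover $x$ by an affine map, while itself being the image of $x$ under a single affine map composed with ReLU.

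\textbf{Key steps.} First, for the case $L\ge 2$, I would unfold the claimed parameters layer by layer. The first layer maps $x\mapsto \ReLU\big((I_d,-I_d)^\top x\big)=(\ReLU(x),\ReLU(-x))\in\R_{\ge 0}^{2d}$, using that the preactivation is componentwise either $x_i$ or $-x_i$ and ReLU acts componentwise. The intermediate layers ($i=2,\dots,L-1$, i.e. $L-2$ of them) apply $z\mapsto\ReLU(I_{2d}z)=z$, which is the identity on $\R_{\ge 0}^{2d}$ precisely because every coordinate of $z$ is already nonnegative — this is the point where the doubling is essential. The final layer applies the affine map $z\mapsto (I_d,-I_d)z$ (no activation on the output layer, per the network definition), which sends $(\ReLU(x),\ReLU(-x))\mapsto \ReLU(x)-\ReLU(-x)=x$. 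Composing, the network realizes $x\mapsto x$. Then I would read off the size parameters: the width is $2d$ at every hidden layer so $\|W\|_\infty=2d$; each of the $L$ layers has $2d$ nonzero parameters (the first has $2d$ from the two copies of $\pm I_d$, the middle ones $2d$ from $I_{2d}$, the last $2d$ from $\pm I_d$), hence $S=2dL$; and every entry of every weight matrix and bias is in $\{-1,0,1\}$, so $B=1$. For the case $L=1$, the construction degenerates: a single-layer network $\Phi(1,(d),d,1)$ with $(A_1,b_1)=(I_d,0)$ and no hidden activation trivially outputs $I_d x+0=x$; I would just note this.

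\textbf{Main obstacle.} There is essentially no deep difficulty here — the statement is a packaging lemma. The only thing requiring a moment's care is the bookkeeping: making sure the definition of $\Phi(L,W,S,B)$ used in the paper counts layers the way the construction assumes (so that ``$L-2$ intermediate layers plus first plus last'' really gives depth $L$, and that the output layer carries no ReLU), and that the sparsity count $S=2dL$ matches $\sum_i(\|A^{(i)}\|_0+\|b^{(i)}\|_0)$ with all biases zero. I would also double-check the edge case $L=2$, where there are zero intermediate layers and the network is just ``double then recombine,'' which still works. If the layer-counting convention were off by one I would adjust the number of identity layers accordingly, but I expect it to line up exactly as stated.
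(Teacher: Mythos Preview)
Your proposal is correct and matches the paper's approach exactly: the paper does not supply a separate proof but simply states the explicit construction (citing Nakada and Imaizumi, 2020), and your plan is a direct verification of that construction via the identity $\ReLU(x)-\ReLU(-x)=x$, together with the nonnegativity of the doubled hidden vector to justify that the $I_{2d}$ layers act as identities. Your bookkeeping for $\|W\|_\infty$, $S$, and $B$ is also right, and you have correctly flagged that the index range for the middle layers in the statement should be read as $i=2,\dots,L-1$.
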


We then consider parallelization of neural networks. 
The following lemmas are Remarks 14 and 15 of \citet{nakada2020adaptive} with a modification to allow sub-networks to have different depths.
\begin{lemma}[Parallelization of neural networks]\label{Lemma:ParallelNetwork}
    For any neural networks $\phi^1,\phi^2, \cdots, \phi^k$ with  $\phi^i\colon\R^{d_i}\to \R^{d_i'}$ and $\phi^i\in \Psi(L^i,W^i,S^i,B^i)\ (i=1,2,\cdots,d)$, there exists a neural network $\phi\in \Phi(L,W,S,B)$ satisfying $\phi(x) = [\phi^1(x^1)^\top\ \phi^2(x^2)^\top\ \cdots\ \phi^k(x^k)^\top]^\top\colon \R^{d_1+d_2+\cdots+d_k}\to\R^{d_1'+d_2'+\cdots+d_k'}$ for all $x = (x_1^\top\ x_2^\top\ \cdots \ x_k^\top)^\top \in\R^{d_1+d_2+\cdots+d_k}$ (here $x_i$ can be shared), with
    \begin{align}
    L = L, \quad
      \|W\|_\infty \leq \sum_{i=1}^k \|W^i\|_\infty,\quad
         S  \leq\sum_{i=1}^k S^i ,\quad \text{and }
         B  \leq \max_{1\leq i \leq k} B^i \quad\text{ (when $L=L_i$ holds for all $i$)},
         \\
    L = \max_{1\leq i \leq k}L^i , \quad
      \|W\|_\infty \leq 2\sum_{i=1}^k \|W^i\|_\infty,\quad
         S  \leq 2\sum_{i=1}^k (S^i + LW^i_L),\quad \text{and }
         B  \leq \max\{\max_{1\leq i \leq k} B^i,1\} \quad\text{ (otherwise)}.         
    \end{align}

    Moreover, there exists a network $\phi_{\rm sum}(x)\in \Phi(L,W,S,B)$ that realizes $ = \sum_{i=1}^k \phi^i(x)$, with
       \begin{align}
    L = \max_{1\leq i \leq k}L^i + 1 , \quad
      \|W\|_\infty \leq 4\sum_{i=1}^k \|W^i\|_\infty,\quad
         S  \leq 4\sum_{i=1}^k (S^i +L W_L) + 2W_L,\quad \text{and }
         B  \leq \max\{\max_{1\leq i \leq k} B^i,1\}.         
    \end{align}
\end{lemma}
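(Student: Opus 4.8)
\textbf{Proof plan for Lemma~\ref{Lemma:ParallelNetwork} (parallelization of neural networks).}
The plan is to construct the parallelized network explicitly by stacking the layers of the sub-networks block-diagonally, and to handle the two technical complications — differing depths and the distinction between parallelization and summation — by padding with identity layers. First I would treat the easy case: when all $L^i=L$, one simply places the weight matrices $A^i_j$ in block-diagonal form at each layer $j$, stacks the bias vectors, and routes the appropriate coordinates of the shared/concatenated input into each block via the first layer. The resulting depth is $L$, the width at each layer is the sum of the widths, the sparsity is the sum of the sparsities (no new nonzero parameters are introduced since the off-diagonal blocks are zero), and the weight bound is the max of the individual bounds. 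This gives the first display.

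For the general case with differing depths, the obstacle is that block-diagonal stacking requires all blocks to have the same number of layers. I would resolve this by pre-composing each $\phi^i$ with the identity network $\NetworkIdentityA^{d_i',\,L-L^i}$ from Lemma~\ref{Lemma:IdentityFunc} (padding $\phi^i$ up to depth $L=\max_i L^i$), using Lemma~\ref{Lemma:ConcateNetwork} to form the composition. Since $\NetworkIdentityA^{d_i',L-L^i}$ has width $2d_i'\le 2W^i_{L^i}\le 2\|W^i\|_\infty$, depth $L-L^i$, sparsity $2d_i'(L-L^i)\le 2LW^i_{L^i}$, and weight bound $1$, concatenation costs at most a factor of $2$ in width and sparsity (the extra $+LW^i_L$ term in $S$) and forces $B\ge 1$; then block-diagonal stacking of the padded networks gives the stated bounds with the factor-$2$ width blow-up. (When $L^i<2$ one uses the depth-one identity variant or simply notes $L-L^i$ may need to be at least... — a minor edge case handled by taking $L=\max_i L^i\vee 2$, or absorbing it since the identity lemma covers $L\ge 2$ and the $L=1$ case separately.)

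Finally, for $\phi_{\rm sum}$, I would take the parallelized network $\phi$ just constructed (which outputs the concatenation $[\phi^1(x^1)^\top\ \cdots\ \phi^k(x^k)^\top]^\top\in\R^{d'}$ with $d'=\sum_i d_i'$, assuming all $d_i'$ equal, say $d'_0$) and append one affine layer that sums the $k$ blocks: the matrix $(I_{d'_0}\ I_{d'_0}\ \cdots\ I_{d'_0})$. Because the ReLU at the output of $\phi$ would clip negative entries, one must instead carry both $\ReLU(\cdot)$ and $\ReLU(-\cdot)$ through the last layer of each $\phi^i$ (doubling the final width, hence the extra factors and the $+2W_L$ term in $S$) and then sum with coefficients $+1$ and $-1$; this adds one layer and a width of $2W_L$ for the summation matrix. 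I would verify the depth becomes $\max_i L^i+1$, the width and sparsity bounds follow from the factor-$4$ accounting, and $B$ is unchanged (the summation matrix has entries in $\{-1,1\}$). The main obstacle throughout is bookkeeping the sparsity when inserting identity/padding layers and when splitting outputs into positive and negative parts; none of it is deep, but it must be done carefully to match the claimed constants.
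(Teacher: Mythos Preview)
Your proposal is correct and follows essentially the same approach as the paper: cite/construct the block-diagonal parallelization for equal depths, pad with the identity network $\NetworkIdentityA^{d_i',L-L^i}$ via Lemma~\ref{Lemma:ConcateNetwork} to handle unequal depths, and append a one-layer summation network (again via Lemma~\ref{Lemma:ConcateNetwork}) for $\phi_{\rm sum}$. You spell out the ReLU-splitting bookkeeping for the summation layer more explicitly than the paper does (the paper simply invokes Lemma~\ref{Lemma:ConcateNetwork}, whose factor-of-2 already absorbs that cost), but the argument is the same.
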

\begin{proof}[Proof of \cref{Lemma:ParallelNetwork}]
    Let us consider the first part.
    For the case when $L=L_i$ holds for all $i$, the assertions are exactly the same as Remarks 14 and 15 \citet{nakada2020adaptive}.
    Otherwise, we first prepare a network $\phi'^i$ realizing $\NetworkIdentityA^{d,L-L_i} \circ \phi^i$ for all $i$, so that every network have the same depth without changing outputs of the networks.
    From \cref{Lemma:ConcateNetwork,Lemma:IdentityFunc}, $\phi'^i\in \Phi(L,{W'}^i,S'^i,B'^i)$ holds, with $L=\max_{1\leq i \leq k}L^i, \|W'^i\|_\infty = \max\{\|W^i\|_\infty,2W_L\} \leq 2\|W^i\|_\infty, S'^i \leq 2S^i + 2(L-L_i)W^i_L \leq 2(S^i + LW^i_L)$, and $B'^i = \max\{B^i,1\}$.
    We then apply the results for the case of $L=L_i\ (i=1,2,\cdots,k)$.

    For the second part, since summation of the outputs of $k$ neural networks can be realized by a $1$ layer neural network with the width of $k$, \cref{Lemma:ParallelNetwork} together with \cref{Lemma:ConcateNetwork} gives the bound to realize $\sum_{i=1}^k \phi^i(x)$.
\end{proof}

In the analysis of the score-based diffusion model, we often face unbounded functions.
To resolve difficulty coming from the unboundedness, the clippling operation is often be adopted.
\begin{lemma}[Clipping function]\label{Lemma:ClippingFunc}
    For any $a, b \in \R^d$ with $a_i\leq b_i \ (i=1,2,\cdots,d)$, there exists a clipping function $\NetworkClipA(x;a,b) \in \Phi(2,(d, 2d, d)^\top, 7d, \max_{1\leq i\leq d}\max\{|a_i|,b_i\})$ such that
    \begin{align}
        \NetworkClipA(x;a,b)_i = \min\{b_i,\max\{x_i,a_i\}\} \quad (i=1,2,\cdots,d)
    \end{align}
    holds. When $a_i=c$ and $b_i=C$ for all $i$, we sometimes denote $\NetworkClipA(x;a,b)$ as $\NetworkClipA(x;c,C)$ using scaler values $c$ and $C$.
\end{lemma}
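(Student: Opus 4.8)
The statement to prove is \cref{Lemma:ClippingFunc}, the construction of a clipping (coordinate-wise min/max) function as a two-layer ReLU network. The plan is to write the clipping operation on a single coordinate explicitly as a short composition of ReLU units, then stack $d$ copies in parallel and track the resulting width, depth, sparsity, and weight bound. The key algebraic identity I would use is that for scalars $x$ and constants $a\le b$,
\begin{align}
\min\{b,\max\{x,a\}\} = b - \operatorname{ReLU}\!\big(b - a - \operatorname{ReLU}(x-a)\big),
\end{align}
which one checks by cases: when $x\le a$ the inner $\operatorname{ReLU}(x-a)=0$ so the expression is $b-\operatorname{ReLU}(b-a)=b-(b-a)=a$; when $a\le x\le b$ it equals $b-\operatorname{ReLU}(b-x)=b-(b-x)=x$; when $x\ge b$ it equals $b-\operatorname{ReLU}(\text{something}\le 0)=b$. (A small care point: the outer operation here is stated without a ReLU, i.e.\ it is an affine readout $b-(\cdot)$, which is allowed since the last layer of the network class $\Phi(L,W,S,B)$ is $A^{(L)}\operatorname{ReLU}(\cdot)+b^{(L)}$ — the final affine map is free of activation, so I only need the inner two ReLU applications to be realized by hidden layers.)

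Next I would assemble the network. For a single coordinate this is a depth-2 network: layer 1 computes $u_1=\operatorname{ReLU}(x-a)$; layer 2 computes $u_2=\operatorname{ReLU}(b-a-u_1)$ (note $b-a-u_1$ is affine in the layer-1 output, so this is a legitimate hidden layer); the output layer returns $b-u_2$. To run all $d$ coordinates simultaneously I would place $d$ such blocks in parallel — concretely, layer 1 has width $2d$ (one unit $\operatorname{ReLU}(x_i-a_i)$ per coordinate is only $d$; to be safe and match the stated width $(d,2d,d)$ I would carry the structure so the hidden layer has $2d$ nodes, e.g.\ including pass-through of what is needed, or simply because the bookkeeping of the two ReLU stages gives $d$ nodes at each of the two hidden layers and the profile $(d,2d,d)$ is a valid upper bound). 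Then I count: depth $L=2$; width vector $(d,2d,d)^\top$; nonzero parameters — each coordinate uses a bounded-by-$7$ number of nonzero weights/biases across the three affine maps (roughly: one weight and one bias into layer 1, one weight and one bias into layer 2, one weight and one bias into the output, which is $6$, and the stated $7d$ is a safe overcount), giving $S\le 7d$; and the weight/bias bound is $\max_i\max\{|a_i|,|b_i|\}$ since the only constants appearing are $-a_i$, $b_i-a_i$, and $b_i$, all bounded in absolute value by $\max_i\max\{|a_i|,b_i\}$ (here I'd note $|b_i-a_i|\le 2\max\{|a_i|,|b_i|\}$, so to keep the clean bound $B=\max_i\max\{|a_i|,b_i\}$ one should either accept a factor-$2$ slack or, as the paper does, simply absorb it — I would follow the paper's convention and state the bound as written, remarking that all constants are $O(\max_i\max\{|a_i|,|b_i|\})$).

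There is essentially no deep obstacle here; the only thing requiring mild care is the constant-tracking in $S$ and $B$ and making sure the parallel composition does not inflate the width beyond $2d$ — for that I would invoke \cref{Lemma:ParallelNetwork} applied to $d$ identical single-coordinate blocks, which by that lemma's first bound (equal depths) yields width at most $\sum_i \|W^i\|_\infty = 2d$, sparsity at most $\sum_i S^i$, and weight bound the max over blocks, all matching the claimed profile. Finally I would close by remarking that when $a_i=c,b_i=C$ uniformly the same construction applies verbatim and the notation $\phi_{\mathrm{clip}}(x;c,C)$ is unambiguous. The whole argument is a few lines of case-checking plus a citation to the parallelization lemma.
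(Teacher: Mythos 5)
Your clipping identity is correct as algebra, but it does not produce the architecture the lemma claims, and that is where the argument fails. The formula $\min\{b,\max\{x,a\}\} = b - \ReLU\bigl(b-a-\ReLU(x-a)\bigr)$ contains two \emph{nested} ReLU applications. In this paper's convention an element of $\Phi(L,W,S,B)$ has the form $(A^{(L)}\ReLU(\cdot)+b^{(L)})\circ\cdots\circ(A^{(1)}x+b^{(1)})$, i.e.\ $L$ affine maps separated by $L-1$ activations, so your construction lives in $\Phi(3,(d,d,d,d)^\top,\cdot,\cdot)$, not in $\Phi(2,(d,2d,d)^\top,\cdot,\cdot)$ as asserted. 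Your attempt to reconcile this (``the profile $(d,2d,d)$ is a valid upper bound'') is not valid: $(d,2d,d)$ is the width vector of a one-hidden-layer network, and a genuinely nested ReLU cannot be flattened into a single hidden layer by widening it. The missing idea is a \emph{parallel} rather than nested identity; the paper uses
\begin{align}
\min\{b_i,\max\{x_i,a_i\}\} = \ReLU(x_i-a_i) - \ReLU(x_i-b_i) + a_i,
\end{align}
verified by the same three-case check. Here the two ReLU units sit side by side in one hidden layer (two units per coordinate, hence width $2d$), and the output layer forms $u_1-u_2+a_i$, which matches $L=2$ and width $(d,2d,d)^\top$ exactly; the sparsity is $7$ per coordinate ($2+2$ nonzeros in $(A^{(1)},b^{(1)})$ and $2+1$ in $(A^{(2)},b^{(2)})$), and all constants appearing are $-a_i$, $-b_i$, $\pm1$, $a_i$, so the stated norm bound holds without the factor-$2$ slack your version incurs from the bias $b_i-a_i$ (which can equal $2\max\{|a_i|,|b_i|\}$ when $a_i=-b_i$). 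Your parallelization step via \cref{Lemma:ParallelNetwork} is fine; with the identity above in place of yours, the rest of your argument goes through and coincides with the paper's proof.
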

\begin{proof}
    Because, for each coordinate $i$, $\min\{b_i,\max\{x_i,a_i\}\}$ is realized as
    \begin{align}
        \min\{b_i,\max\{x_i,a_i\}\} = \ReLU(x_i - a_i) - \ReLU(x_i - b_i) + a_i \in \Phi(2,(1,2,1), 7, \max\{|a_i|,b_i\}),
    \end{align}
    parallelizing this for all $i$ with \cref{Lemma:ParallelNetwork} yields the assertion.
\end{proof}
With the above clipping function, we prepare switching functions, which gives the way to construct approximation in the combined region when there are two different approximations valid for different regions.
\begin{lemma}[Switching function]\label{Lemma:SwitchingFunc}
    Let $\underline{t}_1<\underline{t}_2<\overline{t}_1<\overline{t}_2$, and $f(x,t)$ be some scaler-valued function (for a vector-valued function, we just apply this  coordinate-wise).
    Assume that $\phi^1(x,t)
    $ and $\phi^2(x,t)
    $ approximate $f(x,t)$ up to an additive error of $\epsilon$ but approximation with $\phi^1(x,t)$ and $\phi^2(x,t)$ are valid for $[\underline{t}_1,\overline{t}_1]$ and $[\underline{t}_2,\overline{t}_2]$, respectively.
    Then, there exist neural networks $\NetworkSwitchA(t;\underline{t}_2,\overline{t}_1),\NetworkSwitchB(t;\underline{t}_2,\overline{t}_1)\in \Phi(3,(1,2,1,1)^\top,8,\max\{\overline{t}_1,(\overline{t}_1-\underline{t}_2)^{-1}\})$, and $\NetworkSwitchA(t;\underline{t}_2,\overline{t}_1)\phi^1(x,t) + \NetworkSwitchB(t;\underline{t}_2,\overline{t}_1)\phi^2(x,t)$ approximates $f(x,t)$ up to an additive error of $\epsilon$ in $[\underline{t}_1,\overline{t}_2]$.
    
\end{lemma}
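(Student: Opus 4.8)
\textbf{Proof plan for \cref{Lemma:SwitchingFunc}.}
The plan is to construct a scalar ``ramp'' function $r(t)$ that equals $0$ for $t\le \underline{t}_2$, equals $1$ for $t\ge \overline{t}_1$, and interpolates linearly in between, and then set $\NetworkSwitchB(t;\underline{t}_2,\overline{t}_1)=r(t)$ and $\NetworkSwitchA(t;\underline{t}_2,\overline{t}_1)=1-r(t)$. The key observation is that such a piecewise-linear $r$ is \emph{exactly} representable by a small ReLU network: writing $c:=(\overline{t}_1-\underline{t}_2)^{-1}$, one has
\begin{align}
    r(t) = c\,\ReLU(t-\underline{t}_2) - c\,\ReLU(t-\overline{t}_1),
\end{align}
which is a depth-$2$ network; composing with (or padding by) the identity network of \cref{Lemma:IdentityFunc} and reading off widths/sparsity/norm bounds gives $r\in\Phi(3,(1,2,1,1)^\top,8,\max\{\overline{t}_1,c\})$, and $1-r$ is obtained by negating the output weights and adding a bias $1$, staying in the same class (the norm bound $\max\{\overline{t}_1,c\}$ already dominates $1$ once $\overline{t}_1\ge 1$, and otherwise one simply enlarges the stated constant; since the lemma only claims membership in $\Phi(3,(1,2,1,1)^\top,8,\max\{\overline{t}_1,c\})$ up to this elementary bookkeeping, I would just verify the parameter counts directly).

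Next I would check the approximation claim on $[\underline{t}_1,\overline{t}_2]$ by splitting into the three sub-intervals. On $[\underline{t}_1,\underline{t}_2]$ we have $\NetworkSwitchA=1,\NetworkSwitchB=0$, so the combined network equals $\phi^1(x,t)$, and since $t\in[\underline{t}_1,\overline{t}_1]$ this is within $\epsilon$ of $f(x,t)$ by hypothesis. Symmetrically, on $[\overline{t}_1,\overline{t}_2]$ the combined network equals $\phi^2(x,t)$, which is within $\epsilon$ of $f$ because $t\in[\underline{t}_2,\overline{t}_2]$. On the overlap $[\underline{t}_2,\overline{t}_1]$, both $\phi^1$ and $\phi^2$ are valid approximations (the overlap lies in $[\underline{t}_1,\overline{t}_1]\cap[\underline{t}_2,\overline{t}_2]$), and $\NetworkSwitchA(t),\NetworkSwitchB(t)\ge 0$ with $\NetworkSwitchA(t)+\NetworkSwitchB(t)=1$, so the combined value is a convex combination of two points each within $\epsilon$ of $f(x,t)$, hence itself within $\epsilon$ of $f(x,t)$:
\begin{align}
    \bigl|\NetworkSwitchA(t)\phi^1(x,t)+\NetworkSwitchB(t)\phi^2(x,t)-f(x,t)\bigr|
    \le \NetworkSwitchA(t)\,\epsilon+\NetworkSwitchB(t)\,\epsilon=\epsilon.
\end{align}
The vector-valued case follows by applying the argument coordinatewise, and the cited multiplications $\NetworkSwitchA\phi^1$, $\NetworkSwitchB\phi^2$ and the final sum are handled by the multiplication network of \cref{Lemma:BaseNN02} together with \cref{Lemma:ParallelNetwork}, exactly as used in the proof of \cref{theorem:Approximation}.

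There is no real obstacle here; the only thing requiring a little care is the constant in the norm bound $B=\max\{\overline{t}_1,(\overline{t}_1-\underline{t}_2)^{-1}\}$ — I would make sure the coefficient $c$ and the breakpoints $\underline{t}_2,\overline{t}_1$ are the only parameters appearing, note that the output-layer weights are $\pm c$ and the biases are among $\{0,1,-c\underline{t}_2,\dots\}$, and observe that in every application in the paper one has $\overline{t}_1/\underline{t}_2=\Theta(1)$ so that $c=\Theta(\underline{t}_2^{-1})$ and the bound is harmless. The depth-$3$, width-$(1,2,1,1)$, sparsity-$8$ figures are then just a direct count.
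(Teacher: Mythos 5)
Your proposal is correct and follows essentially the same route as the paper: both realize the increasing/decreasing linear ramps with a two-ReLU (equivalently, clip-then-ReLU) construction, note that the two switches form a nonnegative partition of unity vanishing outside the respective validity intervals, and conclude by the convex-combination bound on the overlap. Your write-up is in fact more explicit than the paper's one-line "the assertion follows," and your difference-of-ReLUs realization is trivially equivalent to the paper's $\mathrm{ReLU}\circ\phi_{\rm clip}$ form.
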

\begin{proof}
    We define 
    \begin{align}
        \NetworkSwitchA(t;\underline{t}_2,\overline{t}_1) = \frac{1}{\overline{t}_1-\underline{t}_2}\ReLU(\NetworkClipA(t;\underline{t}_2,\overline{t}_1) - \underline{t}_2),\quad\text{and }
        \NetworkSwitchB(t;\underline{t}_2,\overline{t}_1) = \frac{1}{\overline{t}_1-\underline{t}_2}\ReLU(\overline{t}_1 - \NetworkClipA(t;\underline{t}_2,\overline{t}_1))
        .
    \end{align}
    Here $\NetworkSwitchA(t;\underline{t}_2,\overline{t}_1),\NetworkSwitchB(t;\underline{t}_2,\overline{t}_1)\in [0,1]$, $\NetworkSwitchA(t;\underline{t}_2,\overline{t}_1)+\NetworkSwitchB(t;\underline{t}_2,\overline{t}_1) = 1$ for all $t$, $\NetworkSwitchA(t;\underline{t}_2,\overline{t}_1)=0$ for all $t\geq \overline{t}_1$, and $\NetworkSwitchB(t;\underline{t}_2,\overline{t}_1)$ for $t\leq \underline{t}_2$.
    From this construction, the assertion follows. 
\end{proof}

\subsection{Basic neural network structure that approximates rational functions}\label{subsection:Preparation-Multiplicative}
When approximating a function in the Besov space with a neural network, the most basic structure of the network is that of approximating polynomials \citep{suzuki2018adaptivity}. 
In our construction of the diffused B-spline basis, we need to approximate rational functions.

We begin with monomials.
Although the traditional fact that we can approximate monomials with neural networks with an arbitrary additive error of $\epsilon$ using only $\Ord(\log\eps^{-1})$ non-zero parameters has been very famous \citep{yarotsky2017error,petersen2018optimal,schmidt2020nonparametric}, we could not find the result that explicitly states the dependency on parameters including the degree and the range of the input.
Therefore, just to be sure,  we revisit Lemma A.3 of \citet{schmidt2020nonparametric} and here gives the extended version of that lemma.
\begin{lemma}[Approximation of monomials]
\label{Lemma:BaseNN02}
    Let $d\geq 2$, $\ConstMultA \geq 1$,  $0<\epsSensitivityC\leq 1$.
    For any $\epsB>0$, there exists a neural network $\NetworkMultiB(x_1,x_2,\cdots,x_d)\in \Psi(L,W,S,B)$ with
   $L = \Ord(\log d(\log \epsB^{-1}+ d \log \ConstMultA)), \|W\|_\infty = 48d, S = \Ord(d \log \epsB^{-1} + d\log \ConstMultA)), B=\ConstMultA^d$
    such that
    \begin{align}
     &   \left|\NetworkMultiB(x_1',x_2',\cdots,x_d') - \prod_{d'=1}^d x_{d'}\right| \leq \epsB + d \ConstMultA^{d-1} \epsSensitivityC,
        \quad  \text{for all } x\in [-\ConstMultA,\ConstMultA]^d\text{ and } x'\in \R\ \text{with $\|x-x'\|_\infty \leq \epsSensitivityC$},
    \end{align}
    $|\NetworkMultiB(x)|\leq \ConstMultA^d$ for all $x\in \R^d$, and $\NetworkMultiB(x_1',x_2',\cdots,x_d')=0$ if at least one of $x_i'$ is $0$.
    
    We note that some of $x_i,x_j\ (i\ne j)$ can be shared. For $\prod_{i=1}^{I} x_{i}^{\alpha_{i}}$ with $\alpha_{i}\in \Z_+\ (i=1,2,\cdots,I)$ and $\sum_{i=1}^{I} \alpha_{i} = d$,  there exists a neural network satisfying the same bounds as above, and the network is denoted by $\NetworkMultiB(x;\alpha)$.
\end{lemma}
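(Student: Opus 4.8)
The plan is to build the product $\prod_{i=1}^d x_i$ by a balanced binary tree of pairwise multiplications, where each pairwise multiplication is realized approximately by a small ReLU network via the standard polarization identity $xy = \tfrac14\bigl((x+y)^2-(x-y)^2\bigr)$ together with the classical sawtooth construction of \citet{yarotsky2017error} for approximating the square function on a bounded interval. First I would recall (or re-derive in one line) that for any $M\ge 1$ and $\eta>0$ there is a network $\phi_{\mathrm{sq}}\colon[-M,M]\to[0,M^2]$ with $\lvert\phi_{\mathrm{sq}}(x)-x^2\rvert\le\eta$, using $\Ord(\log(M^2/\eta))$ depth, width $\Ord(1)$, sparsity $\Ord(\log(M^2/\eta))$, and weights bounded by $\Ord(M^2)$; this is Lemma A.3 of \citet{schmidt2020nonparametric} up to tracking the scaling $x\mapsto x/M$ at the input and $\cdot M^2$ at the output. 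From $\phi_{\mathrm{sq}}$ one gets a two-input multiplication gadget $\mathrm{mult}(x,y)=\phi_{\mathrm{sq}}\!\bigl(\tfrac{x+y}{2}\bigr)-\phi_{\mathrm{sq}}\!\bigl(\tfrac{x-y}{2}\bigr)$ (absorbing the constant $1$ into the factor of $4$ cleanly by clipping), which is accurate to $\Ord(\eta)$ on $[-M,M]^2$ and outputs $0$ whenever either input is $0$ because $\phi_{\mathrm{sq}}(0)=0$ for Yarotsky's construction.

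Next I would arrange $d$ inputs into a binary tree of depth $\lceil\log_2 d\rceil$; at level $k$ the partial products live in $[-\ConstMultA^{2^k},\ConstMultA^{2^k}]$, so I would instantiate the multiplication gadget at level $k$ with range parameter $M_k=\ConstMultA^{2^k}$ and clip each intermediate output to $[-M_k,M_k]$ using $\NetworkClipA$ from \cref{Lemma:ClippingFunc}. Error propagation is the routine part: a multiplicative error analysis shows that if each gadget contributes additive error $\le\eta$ and partial products are bounded by $\ConstMultA^{d}$, the total error at the root is $\Ord(d\,\ConstMultA^{d}\,\eta/\ConstMultA)=\Ord(d\,\ConstMultA^{d-1}\eta)$ roughly, so choosing $\eta\asymp\epsB/(d\ConstMultA^{d-1})$ gives the stated $\epsB$ term; feeding this through $\log$ turns the per-gadget cost $\log(M_k^2/\eta)=\Ord(\log\epsB^{-1}+d\log\ConstMultA)$, and summing over the $\Ord(d)$ gadgets across $\Ord(\log d)$ levels yields the claimed $L=\Ord(\log d(\log\epsB^{-1}+d\log\ConstMultA))$, $S=\Ord(d\log\epsB^{-1}+d\log\ConstMultA)$, and $\|W\|_\infty=\Ord(d)$ — here I would use \cref{Lemma:ConcateNetwork} and \cref{Lemma:ParallelNetwork} to assemble the tree and track the constants (the $48d$ width bound comes from the width of $\phi_{\mathrm{sq}}$ times the at-most-$d$ parallel gadgets on any one level, plus the identity-channel overhead of \cref{Lemma:IdentityFunc}). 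The weight bound $B=\ConstMultA^d$ follows since the largest constants appearing are the clipping thresholds $M_k\le\ConstMultA^d$ and the $\Ord(M_k^2)$ weights inside $\phi_{\mathrm{sq}}$, which are also $\le\ConstMultA^{2d}$; if one insists on exactly $\ConstMultA^d$ one rescales, or simply states $B=\exp(\Ord(d\log\ConstMultA))$, which is what is actually used downstream.

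For the sensitivity (perturbed-input) claim, I would note $\NetworkMultiB$ is Lipschitz on $[-\ConstMultA,\ConstMultA]^d$ with constant $\Ord(d\ConstMultA^{d-1})$ — each $\phi_{\mathrm{sq}}$ is $\Ord(M_k)$-Lipschitz on its clipped domain and there are $\Ord(\log d)$ composed levels whose Lipschitz constants multiply telescopically to $\Ord(\ConstMultA^{d-1})$ per output coordinate, times a factor $d$ from summing the $d$ branch contributions — so $\lvert\NetworkMultiB(x')-\NetworkMultiB(x)\rvert\le d\ConstMultA^{d-1}\epsSensitivityC$, giving the stated combined bound $\epsB+d\ConstMultA^{d-1}\epsSensitivityC$. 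The vanishing property ($\NetworkMultiB=0$ when some $x_i'=0$) needs a small care: a zero input makes one subtree produce exactly the gadget output on $(a,a)$ for some $a$, which is $\phi_{\mathrm{sq}}(a)-\phi_{\mathrm{sq}}(a)=0$ \emph{exactly}, and a clipped $0$ stays $0$, so the zero propagates up the tree; I would insert a $\ReLU(z)-\ReLU(-z)$ identity-copy after each multiplication (as in \cref{Lemma:ClippingFunc}) to be sure no rounding breaks this. Finally, the variant $\prod_i x_i^{\alpha_i}$ with $\sum\alpha_i=d$ is literally the same tree with repeated leaves, so no new argument is needed. The main obstacle I anticipate is purely bookkeeping: propagating the error and Lipschitz constants through the tree while keeping the range parameters $M_k$ and the weight bound $B$ simultaneously under control — in particular making sure the clipping thresholds are chosen so that intermediate values never leave the domain on which $\phi_{\mathrm{sq}}$ was guaranteed accurate, which forces the level-dependent choice of $M_k$ rather than a single global bound.
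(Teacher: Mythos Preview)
Your plan is correct and would prove the lemma, but the paper takes a cleaner route that eliminates precisely the level-dependent bookkeeping you flag as the main obstacle. Rather than instantiating the multiplication gadget at level $k$ with range $M_k=\ConstMultA^{2^k}$ and clipping each intermediate output, the paper normalizes once at the input: it sets $\NetworkMultiB(x)=\ConstMultA^d\,\NetworkMultiA\bigl(\NetworkClipA(x;-\ConstMultA,\ConstMultA)/\ConstMultA\bigr)$, where $\NetworkMultiA$ is the binary-tree product on $[-1,1]^d$ built from a \emph{single} pairwise gadget accurate on $[-1,1]^2$. All internal values then stay in $[-1,1]$, one accuracy parameter suffices for every node, and $B=\ConstMultA^d$ is immediate from the final scalar multiplication rather than from a supremum over levels. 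The paper also uses a sign-splitting trick to extend Schmidt-Hieber's gadget from $[0,1]^2$ to $[-1,1]^2$ (four parallel copies realizing $\sign(xy)\,\NetworkMultiA'(|x|,|y|)$), which is where the width constant $48$ originates; your polarization identity already handles signed inputs, so you would land on a different constant.

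For the sensitivity bound the paper's argument is also simpler than your Lipschitz analysis of the network. It writes
\[
\Bigl|\NetworkMultiB(x')-\prod_{i} x_i\Bigr|\le\Bigl|\NetworkMultiB(x')-\prod_{i}\min\{\ConstMultA,\max\{x_i',-\ConstMultA\}\}\Bigr|+\Bigl|\prod_{i}\min\{\ConstMultA,\max\{x_i',-\ConstMultA\}\}-\prod_{i} x_i\Bigr|,
\]
bounds the first term by $\epsB$ (the clipped $x'$ lies in $[-\ConstMultA,\ConstMultA]^d$, where the accuracy guarantee applies) and the second by $d\ConstMultA^{d-1}\epsSensitivityC$ using only the Lipschitz constant of the \emph{true} monomial. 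No tracking of the network's own Lipschitz behavior through the tree is required. Your route works, but the normalize-first strategy is exactly what makes the anticipated bookkeeping disappear.
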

\begin{proof}
    First of all, it is known from \citet{schmidt2020nonparametric} that there exists a neural network $\NetworkMultiA'(x,y)\in \Psi(L,W,S,B)$ with $L=i+5,\|W\|_\infty = 6, 
    B=1$ such that
    \begin{align}
        |\NetworkMultiA'(x,y) - xy| \leq 2^{-i}, \quad \text{for all } (x,y)\in [0,1]^2,
    \end{align}
    and $|\NetworkMultiA'(x,y)|\leq 1$ for all $(x,y)\in \R^2$, and $\NetworkMultiA'(x,y)=0$ if either $x$ or $y$ is $0$.
    With this network, we can see that $|\sign(xy)\NetworkMultiA'(|x|,|y|)-xy| \leq 2^{-i}$ holds for all $(x,y)\in [-1,1]^2$,  $|\NetworkMultiA'(x,y)|\leq 1$ for all $(x,y)\in \R^2$, and $\NetworkMultiA(x,y)=0$ if either $x$ or $y$ is $0$.
    Because
    \begin{align}
        \sign(xy)\NetworkMultiA'(|x|,|y|)
    &
        =\ReLU(\NetworkMultiA'(\ReLU(x),\ReLU(y)) + \NetworkMultiA'(\ReLU(-x),\ReLU(-y))
    \\ & \quad
        -\NetworkMultiA'(\ReLU(-x),\ReLU(y)) -\NetworkMultiA'(\ReLU(x),\ReLU(-y)))
    \\ &  \quad
        - \ReLU(-\NetworkMultiA'(\ReLU(x),\ReLU(y)) - \NetworkMultiA'(\ReLU(-x),\ReLU(-y))
    \\ & \quad
        +\NetworkMultiA'(\ReLU(-x),\ReLU(y)) +\NetworkMultiA'(\ReLU(x),\ReLU(-y)))
        \\ & =:\NetworkMultiA(x,y)
    \end{align}
    holds, we can realize the function $xy$ for $[-1,1]^d$, by a neural network $\NetworkMultiA(x,y)\in \Psi(L,W,S,B)$ with $L=i+7,\|W\|_\infty = 48, S\leq L\|W\|_\infty(\|W\|_\infty+1)=48(i+7), B=1$ with an approximation error up to $2^{-i}$.
    

    
    Then, following \citet{schmidt2020nonparametric}, 
    we recursively construct $\NetworkMultiA(x_1,x_2,\cdots,x_{2^{j+1}})$ using 
    \begin{align}
        \NetworkMultiA(x_1,x_2,\cdots,x_{2^{j+1}}) = \NetworkMultiA(\NetworkMultiA(x_1,x_2,\cdots,x_{2^{j}}), \NetworkMultiA(x_{2^{j}+1},x_{2^{j}+2},\cdots,x_{2^{j+1}})).
    \end{align}
    By filling extra dimensions of $(x_1,x_2,\cdots,x_{2^{j}})$ with $1$,  we obtain the neural network  $\NetworkMultiB(x_1,x_2,\cdots,x_d) \in \Psi(L,W,S,B)$
    for all $d\geq 2$ and 
   $L = \Ord(\log d(\log \epsB^{-1}+ \log d)), \|W\|_\infty = 48d, S = \Ord(d (\log \epsB^{-1} + \log d)), B=1$
    such that
    \begin{align}
     &   \left|\NetworkMultiA(x_1,x_2,\cdots,x_d) - \prod_{d'=1}^d x_{d'}\right| \leq \epsB ,
        \quad  \text{for all } x\in [-1,1]^d.
    \end{align}

    We then construct $\NetworkMultiB$ as follows:
    \begin{align}
      \NetworkMultiB(x) = \ConstMultA^d\NetworkMultiA(\NetworkClipA(x;-\ConstMultA,\ConstMultA)/\ConstMultA)    
      .
    \end{align}
    Here the approximation error over $[-C,C]^d$ is bounded by $\ConstMultA^{-d}\epsB$.
    We reset $\epsB\leftarrow\ConstMultA^{-d}\epsB$ so that the approximation error is smaller than $\eps$, and then we have $\NetworkMultiB\in \Phi(L,W,S,B)$ with $L = \Ord(\log d(\log d + \log \epsB^{-1}+ d \log \ConstMultA)), \|W\|_\infty = 48d, S = \Ord(d(\log d + \log \epsB^{-1} + d\log \ConstMultA)), B=1$.
    Therefore, the bounds on $L,\|W\|_\infty,B,S$ in the assertion follows from \cref{Lemma:ClippingFunc,Lemma:ConcateNetwork}.

    When the input fluctuates, we have
    \begin{align}
      &  \left|\ConstMultA^d\NetworkMultiA(\NetworkClipA(x';-\ConstMultA,\ConstMultA)/\ConstMultA)
        -\prod_{i=1}^d x_i\right|
 \\&    \leq
        \left|\ConstMultA^d\NetworkMultiA(\NetworkClipA(x';-\ConstMultA,\ConstMultA)/\ConstMultA)
        -\prod_{i=1}^d \min\{C,\max\{x_i',-C\}\}\right|
        +
        \left|\prod_{i=1}^d \min\{C,\max\{x_i',-C\}\} - \prod_{i=1}^d x_i\right|
      \\ &   \leq \ConstMultA^d \cdot \ConstMultA^{-d}\epsB + \ConstMultA^{d-1} \sum_{i=1}^d |x_i - \min\{C,\max\{x_i',-C\}\}|
        = \epsB + d \ConstMultA^{d-1} \epsSensitivityC
        ,
    \end{align}
    which yields the first part of the assertion.

    Finally, we note that some of $x_i,x_j\ (i\ne j)$ can be shared because
    all we need is to identify columns in the first layer of $\NetworkMultiA(x_1,\cdots,x_d)$ that correspond to the same coordinate.
\end{proof}
 
We next provide how to approximate the reciprocal function $y = \frac{1}{x}$.
Approximation of rational functions has already investigated in \citep{telgarsky2017neural,boulle2020rational}.
However, we found that their bounds (in Lemma 3.5 of \citet{telgarsky2017neural}) of $L=\Ord(\log^7 \epsD^{-1})$ and $\Ord(\log^4 \epsD^{-1})$ nodes can be improved with careful use of local Taylor expansion up to the order of $\Ord(\log \epsD^{-1})$, so we provide our own proof.

\begin{lemma}[Approximating the reciprocal function]\label{Lemma:ApproxInv}
    For any $0<\epsD <1$, there exists $\NetworkInvA \in \Psi(L,W,S,B)$ with $L\leq \Ord(\log^2 \epsD^{-1}), \|W\|_{\infty} = \Ord(\log^3 \epsD^{-1}), S = \Ord(\log^4 \epsD^{-1})$, and $B= \Ord(\epsD^{-2})$ such that
    \begin{align}
        \left|\NetworkInvA(x') - \frac{1}{x}\right| \leq \epsD + \frac{|x'-x|}{\epsD^2}, \quad \text{for all }x\in [\epsD,\epsD^{-1}] \text{ and }x'\in \R.
    \end{align}
\end{lemma}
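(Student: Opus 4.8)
The goal is to approximate $y=1/x$ on $[\epsD,\epsD^{-1}]$ with a ReLU network whose depth, width, and sparsity grow only polylogarithmically in $\epsD^{-1}$. The key analytic fact is that on a dyadic subinterval $[2^{-k-1},2^{-k}]$ (for $k$ ranging over $\Ord(\log\epsD^{-1})$ consecutive values), the function $1/x$ is well-approximated by a short Taylor polynomial: writing $x=2^{-k}(1+u)$ with $u\in[-1/2,1/2]$, we have $1/x=2^{k}\sum_{j\ge 0}(-u)^j$, which truncated at order $m=\Ord(\log\epsD^{-1})$ has error $2^k\cdot 2^{-m}\le \epsD$ after the global scaling, provided we push $m$ a constant factor above $\log_2\epsD^{-1}+k$. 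So the plan is: (i) localize $x$ to its dyadic scale; (ii) on each scale apply a degree-$m$ polynomial; (iii) stitch the pieces together with the clipping/switching machinery from \cref{Lemma:ClippingFunc,Lemma:SwitchingFunc}; and (iv) track how the input perturbation $|x'-x|$ propagates, which is where the Lipschitz factor $\epsD^{-2}$ comes from.

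\textbf{Key steps.} First I would reduce to the interval $[1/2,1]$ by an affine rescaling argument: since $1/x$ on $[\epsD,\epsD^{-1}]$ can be covered by $K=\Ord(\log\epsD^{-1})$ dyadic blocks $I_k=[2^{-k-1},2^{-k}]$, and on $I_k$ we have $1/x=2^{k+1}\cdot g(2^{k+1}x)$ with $g(t)=1/t$ on $[1,2]$, it suffices to build one network for $g$ on $[1,2]$ and reuse it with different affine pre/post-compositions. Second, I would approximate $g$ on $[1,2]$ by its Taylor polynomial of degree $m=\Ord(\log\epsD^{-1})$ centered at $3/2$; the remainder is bounded by $(1/2)^{m+1}\le \epsD/ \Ord(1)$ after accounting for the largest scale factor $2^{K}\simeq \epsD^{-1}$. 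This polynomial is realized by a network using \cref{Lemma:BaseNN02} for the monomials, with $\Ord(m)$ of them summed via \cref{Lemma:ParallelNetwork}: this costs depth $\Ord(\log m \cdot (\log\epsD^{-1}+m))=\Ord(\log^2\epsD^{-1})$, width $\Ord(m)=\Ord(\log\epsD^{-1})$ per monomial hence $\Ord(\log^3\epsD^{-1})$ total after parallelization, and sparsity $\Ord(\log^4\epsD^{-1})$. Third, to select which block $x$ falls into, I would use clipped indicator-type gates $\NetworkSwitchA,\NetworkSwitchB$ from \cref{Lemma:SwitchingFunc} to form a partition-of-unity over the $K$ blocks (with overlaps of relative width a small constant, so adjacent Taylor approximations agree to within $\epsD$ on the overlap), multiply each block's polynomial output by its gate via \cref{Lemma:BaseNN02}, and sum; since $K=\Ord(\log\epsD^{-1})$ this multiplies all size parameters by a further $\Ord(\log\epsD^{-1})$ factor, still landing within the claimed bounds. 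The norm constraint $B=\Ord(\epsD^{-2})$ arises because the post-scaling by $2^{k+1}$ can be as large as $\Ord(\epsD^{-1})$ and the monomial sub-networks internally need weights up to the square of the input range; this is absorbed into $B$.

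\textbf{Main obstacle.} The delicate part is the sensitivity bound $|\NetworkInvA(x')-1/x|\le \epsD + |x'-x|/\epsD^2$ when $x'\ne x$, because the derivative of $1/x$ blows up like $\epsD^{-2}$ near the left endpoint and the dyadic-block selection is itself discontinuous-looking. I would handle this by noting that (a) \cref{Lemma:BaseNN02} already gives a multiplicative Lipschitz-type guarantee for the monomial sub-networks in terms of $\epsSensitivityC$ and the range $\ConstMultA$, (b) the clip-and-switch gates are $1$-Lipschitz in their argument and bounded in $[0,1]$, and (c) on any fixed block the scaled map $2^{k+1}x\mapsto$ degree-$m$ polynomial has Lipschitz constant $\Ord(2^{k})=\Ord(\epsD^{-1})$, and the outer scaling by $2^{k+1}$ contributes another $\Ord(\epsD^{-1})$, for a total $\Ord(\epsD^{-2})$; combining these via the chain rule / \cref{Lemma:ConcateNetwork} bookkeeping yields the stated form. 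Care is needed that when $x'$ is perturbed across a block boundary, the partition-of-unity weights shift continuously, so no extra error term appears beyond the $\epsD^{-2}$ Lipschitz contribution. Once this is checked, concatenating all sub-networks with \cref{Lemma:ConcateNetwork} gives the final size bounds $L=\Ord(\log^2\epsD^{-1})$, $\|W\|_\infty=\Ord(\log^3\epsD^{-1})$, $S=\Ord(\log^4\epsD^{-1})$, $B=\Ord(\epsD^{-2})$, completing the proof.
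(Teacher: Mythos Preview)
Your proposal is correct and matches the paper's approach: both cover $[\epsD,\epsD^{-1}]$ by $\Ord(\log\epsD^{-1})$ geometrically spaced subintervals and approximate $1/x$ on each by a Taylor polynomial of degree $\Ord(\log\epsD^{-1})$ realized via \cref{Lemma:BaseNN02}. The only minor difference is in the gluing: instead of a partition of unity, the paper writes $1/x \approx 1/\epsD + \sum_i f_i(\NetworkClipA(x;x_{i-1},x_i))$ where each $f_i$ is tweaked so that $f_i(x_{i-1})=0$ and $f_i(x_i)=1/x_i-1/x_{i-1}$, making the clipped contributions telescope exactly and eliminating the gate-multiplication step; the sensitivity bound then follows from the one-line decomposition $|\NetworkInvA(x')-1/x|\le|\NetworkInvA(x')-1/\max\{x',\epsD\}|+|1/\max\{x',\epsD\}-1/x|$.
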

\begin{proof}
    We approximate the inverse function $y = \frac1x$ with a piece-wise polynomial function.
    We take $x_i = 1.5^i\cdot \epsD \ (i=0,1,\cdots,i^*:=\lceil 2\log_{1.5}\epsD^{-1} \rceil)$ so that $x_{i^*} \geq \epsD^{-1}$ and approximate $y = \frac1x$ in the following way:
    \begin{align}\label{eq:LemmaApproxInv-1}
         \frac1x \fallingdotseq \sum_{i=1}^{i^*} f_i(\NetworkClipA(x;x_{i-1},x_i)) + \frac{1}{\epsD},
    \end{align}
    where $f_i(x)$ is a function that satisfies $f_i(x) = 0$ for $x\leq x_{i-1}$, $f_i(x) = - \frac{1}{x_{i-1}} + \frac{1}{x_i}$ for $x_{i}\leq x$, and 
    \begin{align}
        \max_{x_{i-1}\leq x \leq x_i} |f_i(x) - 1/x + 1/x_{i-1}| \leq \frac{\epsD}{2}.
    \end{align}
    
    Now we show construction of such functions.
    First, by $\frac{1}{x} =\frac{1}{x_{i-1} }\frac{x_{i-1}}{x}= \frac{1}{x_{i-1} } \sum_{l'=1}^\infty (-\frac{x}{x_{i-1}}+1)^{l'}\ (1\leq \frac{x}{x_{i-1}}\leq 1.5)$, let
    \begin{align}
        \tilde{f}_i (x) = \frac{1}{x_{i-1}} \sum_{l'=1}^{l} (-x/x_{i-1} + 1)^{l'} - \frac{1}{x_{i-1}}
        .
    \end{align}
    The difference between $\tilde{f}_i (x)$ and $\frac{1}{x} - \frac{1}{x_{i-1}}$ is $((x_{i-1}-x)/x_{i-1})^{l+1}/x$, which is bounded by $2^{-l-1}/x$.
    Moreover, by adding $\frac{(\frac{1}{x_i}-\tilde{f}_i (x_i))(x - x_{i-1})}{x_i - x_{i-1}} = \frac{((x_{i-1}-x_i)/x_{i-1})^{l+1}(x - x_{i-1})}{x_i(x_i - x_{i-1})}$ to $\tilde{f}_i (x) $, we have $f_i(x)$, with $f_i(x_{i-1}) = 0$, $f_i(x_{i}) = - \frac{1}{x_{i-1}} + \frac{1}{x_i}$, and 
    \begin{align}
        \max_{x_{i-1}\leq x \leq x_i} |f_i(x) - 1/x + 1/x_{i-1}| \leq 2^{-l}/x \leq 2^{-l}\epsD^{-1}.
    \end{align}
    Thus, we take $l = \lceil \log_2 2\epsD^{-1}\rceil$ so that RHS is smaller than $\frac{\epsD}{2}$.
    Therefore, we finally have the explicit approximation of $y=\frac1x$:
    \begin{align}\label{eq:LemmaApproxInv-2}
       f(x) = &\underbrace{\sum_{i=1}^{i^*} \frac{1}{x_{i-1}}\sum_{l'=1}^l (-\NetworkClipA(x;x_{i-1},x_i))/x_{i-1} + 1)^{l'}}_{\mathrm{(a)}} - \sum_{i=1}^{i^*} \frac{1}{x_{i-1}}\\& + \underbrace{\sum_{i=1}^{i^*}\frac{((x_{i-1}-x_i)/x_{i-1})^{l+1}(\NetworkClipA(x;x_{i-1},x_i)) - x_{i-1})}{x_i(x_i - x_{i-1})}}_{\mathrm{(b)}}+ \frac{1}{\epsD}.
    \end{align}
    
    From \cref{Lemma:BaseNN02}, $(-\NetworkClipA(x;x_{i-1},x_i))/x_{i-1} + 1)^{l'}$ is realized by 
    $L = \Ord((\log \log \epsD^{-1} + \log \epsD^{-1})\log \log \epsD^{-1}), \|W\|_\infty = \Ord(\log \epsD^{-1}), S = \Ord(\log \epsD^{-1}(\log \log \epsD^{-1} + \log \epsD^{-1})), B=1.5^{\lceil \log_2 2\epsD^{-1}\rceil} = \Ord(\epsD^{-1})$ so that approximation error for each is bounded by $\Ord(\epsD^2/li^*)$.
    Because there are $\Ord(li^*)$ terms in $\mathrm{(a)}$ of \eqref{eq:LemmaApproxInv-2}, from \cref{Lemma:ConcateNetwork,Lemma:ParallelNetwork}, the final approximation error of $f(x)$ using a neural network $\NetworkInvA$ is $\frac{\epsD}{2}$, where $\NetworkInvA\in \Phi(L,W,S,B)$ with $L\leq \Ord((\log \log \epsD^{-1} + \log \epsD^{-1})\log \log \epsD^{-1}), \|W\|_{\infty} = \Ord(\log^3 \epsD^{-1}), S = \Ord(\log^3 \epsD^{-1}(\log \log \epsD^{-1} + \log \epsD^{-1}))$, and $B= \Ord(\epsD^{-2})$. (Here $B= \Ord(\epsD^{-2})$ is calculated because in $\mathrm{(b)}$ we need to bound the coefficient $\frac{((x_{i-1}-x_i)/x_{i-1})^{l+1}}{x_i(x_i-x_{i-1})}$ by $\eps^{-2}$.)

    The sensitivity analysis follows from $|\NetworkInvA(x') - \frac{1}{x}| \leq |\NetworkInvA(x') - \frac{1}{\max\{x',\epsD\}}| + |\frac{1}{\max\{x',\epsD\}} - \frac{1}{x}|$.
\end{proof}
Combining \cref{Lemma:ApproxInv,Lemma:BaseNN02}, we have the following corollary.
\begin{corollary}\label{Corollary:ApproxInv}
    For any $0<\epsD <1$, there exists $\NetworkInvA \in \Psi(L,W,S,B)$ with $L\leq \Ord(\log^2 l  + \log^2 \epsD) ), \|W\|_{\infty} = \Ord(l + \log^3 \epsD^{-1}), S = \Ord(l\log l + l \log\epsD^{-1} +\log^4 \epsD^{-1})$, and $B= \Ord(\epsD^{- (2\lor l)})$ such that
    \begin{align}
        \left|\NetworkInvA(x';l) - \frac{1}{x^l}\right| \leq \epsD + l \frac{|x'-x|}{\epsD^{l+1}}, \quad \text{for all }x\in [\epsD,\epsD^{-1}] \text{ and }x'\in \R.
    \end{align}
\end{corollary}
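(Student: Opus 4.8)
\textbf{Proof proposal for Corollary~\ref{Corollary:ApproxInv}.}
The plan is to reduce the approximation of $x\mapsto 1/x^l$ to a composition of two sub-networks whose behaviour is already controlled by earlier lemmas: first apply the reciprocal network $\NetworkInvA(\cdot)$ from \cref{Lemma:ApproxInv} to obtain an approximation $u'\approx 1/x$, and then apply the monomial network $\NetworkMultiB(\cdot\,;l)$ from \cref{Lemma:BaseNN02} with the single input $u'$ repeated $l$ times, so that $\NetworkMultiB(u',\dots,u')\approx (u')^l\approx 1/x^l$. Concretely I would set
\begin{align}
    \NetworkInvA(\cdot;l) := \NetworkMultiB\bigl(\NetworkInvA(\cdot)\,;\,(l)\bigr),
\end{align}
where the inner $\NetworkInvA$ is run with target accuracy $\epsD_1$ and range parameter chosen so that its output lies in a bounded interval, and the outer $\NetworkMultiB$ is run with degree $l$, range parameter $\ConstMultA = \Ord(\epsD^{-1})$ (since $1/x\in[\epsD,\epsD^{-1}]$), and its own accuracy $\epsD_2$.

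The key steps, in order, are: (i)~observe that for $x\in[\epsD,\epsD^{-1}]$ we have $1/x\in[\epsD,\epsD^{-1}]$, so the output of the inner network is (after clipping, which $\NetworkInvA$ already performs implicitly or which we add explicitly) in a range of size $\Ord(\epsD^{-1})$ to which \cref{Lemma:BaseNN02} applies with $\ConstMultA=\Ord(\epsD^{-1})$; (ii)~use the sensitivity bound from \cref{Lemma:BaseNN02}, namely $|\NetworkMultiB(u')-{u}^l|\le \epsD_2 + l\ConstMultA^{l-1}|u'-u|$ with $u=1/x$, and the bound $|u'-u|\le \epsD_1 + |x'-x|/\epsD^2$ from \cref{Lemma:ApproxInv}, to get a total error of order $\epsD_2 + l\epsD^{-(l-1)}(\epsD_1 + |x'-x|/\epsD^2)$; (iii)~choose $\epsD_1 \asymp \epsD^{l}$ and $\epsD_2\asymp \epsD$ so that the $x'=x$ error is $\Ord(\epsD)$ and the sensitivity coefficient collapses to $l|x'-x|/\epsD^{l+1}$ as claimed; (iv)~read off the size of the composed network from \cref{Lemma:ConcateNetwork}, adding the depth/width/sparsity of $\NetworkInvA$ (which are $\Ord(\log^2\epsD_1^{-1})=\Ord(\log^2\epsD^{-1})$ up to the $\log l$ from the exponent, $\Ord(\log^3\epsD^{-1})$, $\Ord(\log^4\epsD^{-1})$) to those of $\NetworkMultiB(\cdot\,;(l))$ (which are $\Ord(\log l(\log\epsD_2^{-1}+l\log\ConstMultA))=\Ord(\log l\cdot(\log\epsD^{-1}+l\log\epsD^{-1}))$ for depth, $48l$ for width, $\Ord(l(\log\epsD^{-1}+l\log\epsD^{-1}))$ for sparsity, and $\ConstMultA^l=\Ord(\epsD^{-l})$ for the norm bound), and then simplify to obtain $L=\Ord(\log^2 l+\log^2\epsD^{-1})$ (note $l\log\epsD^{-1}$ absorbed into $\log l\cdot\log\epsD^{-1}\le \log^2 l+\log^2\epsD^{-1}$ only loosely --- see obstacle below), $\|W\|_\infty=\Ord(l+\log^3\epsD^{-1})$, $S=\Ord(l\log l+l\log\epsD^{-1}+\log^4\epsD^{-1})$, and $B=\Ord(\epsD^{-(2\lor l)})$.

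The main obstacle I anticipate is bookkeeping the exact polynomial dependence on $l$ and matching it to the stated bounds --- in particular reconciling the depth term. The depth of $\NetworkMultiB(\cdot\,;(l))$ from \cref{Lemma:BaseNN02} is $\Ord(\log l(\log\epsD_2^{-1}+l\log\ConstMultA))$; with $\ConstMultA=\Ord(\epsD^{-1})$ this is $\Ord(l\log l\log\epsD^{-1})$, which is not obviously $\Ord(\log^2 l+\log^2\epsD^{-1})$. I suspect the cleanest fix is either to read the stated bound as hiding a factor polynomial in $l$ inside the $\Ord(\cdot)$ as a function of $l$ alone when $\epsD$ is the primary small parameter (which is how it is used in \cref{lemma:GeneralizationL1lowdim-2} with $l=2$, a constant), or to note that in all applications $l=\Ord(1)$ so the $l$-dependence is immaterial; I would state the bound carefully with the honest dependence and remark that for constant $l$ it reduces to the displayed form. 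The error analysis itself (steps i--iii) is entirely routine given the two cited lemmas, so no real difficulty there; the only genuine care needed is ensuring the inner network's output is clipped into $[\epsD,\epsD^{-1}]$ before feeding it to $\NetworkMultiB$, which costs only an $\Ord(1)$-depth clipping layer by \cref{Lemma:ClippingFunc}.
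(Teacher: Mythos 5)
Your construction is exactly the paper's: the official proof is the one-line composition $\NetworkMultiB(\cdot;l)\circ\NetworkInvA$ with the result read off from \cref{Lemma:BaseNN02} and \cref{Lemma:ApproxInv}, which is precisely your network. Your extra bookkeeping (clipping the intermediate output, tuning $\epsD_1,\epsD_2$, and the worry about the $l$-dependence of the depth) concerns the precision of the stated constants rather than the argument itself, and in all uses of this corollary $l=\Ord(1)$, so the approach and conclusion match.
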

\begin{proof}
    Consider $\NetworkMultiB (\cdot; l) \circ \NetworkInvA$.
    The result directly follows from \cref{Lemma:BaseNN02} and \cref{Lemma:ApproxInv}.
\end{proof}

In the same way, by using Taylor expansion of $\sqrt{1+x}$ at each interval defined in the above proof, we can obtain a similar result for $y=\sqrt{x}$.
\begin{lemma}[Approximating the root function]\label{Lemma:ApproxRoot}
    For any $0<\epsD<1$, there exists $\NetworkRootA \in \Psi(L,W,S,B)$ with $L\leq \Ord(\log^2 \epsD^{-1}), \|W\|_{\infty} = \Ord(\log^3 \epsD^{-1}), S = \Ord(\log^4 \epsD^{-1})$, and $B= \Ord(\epsD^{-1})$ such that
    \begin{align}
        \left|\NetworkRootA(x') - \sqrt{x}\right| \leq \epsD + \frac{|x'-x|}{\sqrt{\epsD}}, \quad \text{for all }x\in [\epsD,\epsD^{-1}] \text{ and }x'\in \R.
    \end{align}
\end{lemma}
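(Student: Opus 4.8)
The plan is to transcribe the argument of \cref{Lemma:ApproxInv} almost verbatim, replacing the local expansion of $1/x$ by that of $\sqrt{x}$. First I would cover $[\epsD,\epsD^{-1}]$ by a geometric grid $x_i = (3/2)^i\epsD$ for $i=0,1,\dots,i^\ast$ with $i^\ast = \lceil 2\log_{3/2}\epsD^{-1}\rceil = \Ord(\log\epsD^{-1})$, so that $x_{i^\ast}\ge\epsD^{-1}$. On each cell $[x_{i-1},x_i]$ write $\sqrt{x} = \sqrt{x_{i-1}}\,\sqrt{1+u}$ with $u = x/x_{i-1}-1\in[0,1/2]$, and truncate the binomial series $\sqrt{1+u} = \sum_{l'\ge0}\binom{1/2}{l'}u^{l'}$ at order $l = \lceil\log_2 2\epsD^{-3/2}\rceil = \Ord(\log\epsD^{-1})$. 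Since $|\binom{1/2}{l'}|\le1$ and, for $u\ge0$, the Lagrange remainder of $\sqrt{1+u}$ is bounded by $u^{l+1}\le 2^{-l-1}$, multiplying by $\sqrt{x_{i-1}}\le\epsD^{-1/2}$ shows that the per-cell polynomial approximation error is at most $\epsD/2$; the ratio $3/2<2$ is exactly what keeps $u$ bounded away from $-1$ so that this remainder bound holds.

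Second, to glue the local polynomials into a single globally continuous function I would reuse the telescoping-with-clipping device from \cref{Lemma:ApproxInv}: define $f_i$ with $f_i\equiv0$ on $(-\infty,x_{i-1}]$, $f_i\equiv\sqrt{x_i}-\sqrt{x_{i-1}}$ on $[x_i,\infty)$, and $f_i$ equal to the truncated series minus $\sqrt{x_{i-1}}$ on $[x_{i-1},x_i]$, with an affine correction $\frac{(\sqrt{x_i}-\tilde f_i(x_i))(x-x_{i-1})}{x_i-x_{i-1}}$ added to enforce the correct value at the upper endpoint. Then $\sqrt{x}\approx\sqrt{\epsD}+\sum_{i=1}^{i^\ast}f_i(\NetworkClipA(x;x_{i-1},x_i))$ on all of $\R$ with total error $\le\epsD/2$. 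Each monomial $u^{l'}$ is realized by $\NetworkMultiB$ (the clipped, rescaled argument lives in $[0,1/2]$, so here $\NetworkMultiB$ is invoked with $C=\Ord(1)$, $B=1$); clipping is realized by \cref{Lemma:ClippingFunc}; and the $\Ord(i^\ast l)=\Ord(\log^2\epsD^{-1})$ resulting sub-networks are assembled via \cref{Lemma:ConcateNetwork} and \cref{Lemma:ParallelNetwork}. Counting exactly as in \cref{Lemma:ApproxInv} then gives $L = \Ord(\log^2\epsD^{-1})$, $\|W\|_\infty = \Ord(\log^3\epsD^{-1})$, $S = \Ord(\log^4\epsD^{-1})$; the norm constraint is dominated by the slopes of the affine corrections, which are $\Ord(\epsD^{-1/2})/(x_i-x_{i-1}) = \Ord(\epsD^{-1})$ (note that here the numerator $\sqrt{x_i}-\tilde f_i(x_i)$ decays like $\epsD^{-1/2}$, in contrast to the $\epsD^{-1}$-type numerator behind the $\Ord(\epsD^{-2})$ of \cref{Lemma:ApproxInv}), so $B = \Ord(\epsD^{-1})$.

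Finally, for the sensitivity bound: writing $\NetworkRootA$ for the network just built, which contains $\NetworkClipA(\cdot;\epsD,\epsD^{-1})$ as its first block, I split
$|\NetworkRootA(x') - \sqrt{x}|\le|\NetworkRootA(x') - \sqrt{\NetworkClipA(x';\epsD,\epsD^{-1})}| + |\sqrt{\NetworkClipA(x';\epsD,\epsD^{-1})}-\sqrt{x}|$.
The first term is $\le\epsD/2\le\epsD$ by the approximation estimate, which applies to every input after clipping. The second term is $\le\frac{|\NetworkClipA(x';\epsD,\epsD^{-1})-x|}{\sqrt{\NetworkClipA(x';\epsD,\epsD^{-1})}+\sqrt{x}}\le\frac{|x'-x|}{\sqrt{\epsD}}$, using that clipping is $1$-Lipschitz with $x$ inside the clip window and that both arguments lie in $[\epsD,\epsD^{-1}]$. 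Adjusting hidden constants in $l$ and $i^\ast$ to turn $\epsD/2$ into $\epsD$ finishes the proof. I expect the only real obstacle to be bookkeeping rather than ideas, namely keeping the patched function globally continuous while simultaneously certifying $B=\Ord(\epsD^{-1})$ (checking that no correction coefficient blows up faster than $\epsD^{-1}$ near the left end of the grid); everything else is a direct transcription of the computation already done for \cref{Lemma:ApproxInv}.
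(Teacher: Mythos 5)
Your proposal is correct and is exactly the route the paper takes: the paper gives no standalone proof of \cref{Lemma:ApproxRoot}, stating only that one repeats the proof of \cref{Lemma:ApproxInv} with the Taylor (binomial) expansion of $\sqrt{1+x}$ on each cell of the same geometric grid, which is precisely what you carry out (your grid, truncation order, telescoping-with-clipping construction, and sensitivity split all mirror the reciprocal case). Your accounting of where $B=\Ord(\epsD^{-1})$ comes from is slightly loose — the correction slopes are in fact $\Ord(1)$, and the dominant $\epsD^{-1}$ weight is the rescaling $x\mapsto x/x_{i-1}-1$ on the leftmost cell — but the claimed bound holds either way.
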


\subsection{How to deal with exponential functions}\label{subsection:Preparation-Exponential}

We sometimes need to approximate certain types of integrals where the integrand contains a density function of some Gaussian distribution and the integral interval is $\R^d$.
for example, the diffused B-spline basis is a typical example of them.
To deal with them, we adopt the following two-step argument:
first we clip the integral interval, and next we approximate the integrand with rational functions.
We need rational functions because the density function depends on the inverse of (the squared-root of) the variance, which depends on $t$ and should be approximated.
The first lemma corresponds to the first step, and the second and third correspond to the second step, respectively.
\begin{lemma}[Clipping of integrals]\label{Lemma:ClipInt}
    Let $x\in \R^d$, $0<m_t\leq 1$, $\alpha \in \Z_+^d$ with $\sum_{i=1}^d \alpha_i \leq k$, and $f$ be an any function on $\R^d$ whose absolute value is bounded by $\ConstDensityBoundB$.
    For any $0<\epsF<\frac12$, there exists a constant $\ConstDifBoundE$ that only depends on $k$ and $d$, such that
    \begin{align}
      &  \left|
        \int_{\R^d} \prod_{i=1}^d \left(\frac{m_t y_i-x_i}{\sigma_t}\right)^{\alpha_i}f(y)\frac{1}{\sigma_t^{d}(2\pi)^\frac{d}{2}}\exp\left(-\frac{\|m_ty-x\|^2}{2\sigma_t^2}\right)dy
      \right. \\ &\left. \quad\quad\quad\quad\quad\quad\quad\quad\quad\quad\quad\quad\quad -
        \int_{A^x} \prod_{i=1}^d \left(\frac{m_t y_i-x_i}{\sigma_t}\right)^{\alpha_i}f(y)\frac{1}{\sigma_t^{d}(2\pi)^\frac{d}{2}}\exp\left(-\frac{\|m_ty-x\|^2}{2\sigma_t^2}\right)dy
        \right| \lesssim \eps
        ,
    \end{align}
    where $A^x = \prod_{i=1}^d a^x_i $ with $ a^x_i =  [\frac{x_i}{m_t} - \frac{\sigma_t\ConstDifBoundE}{m_t}\sqrt{\log \epsF^{-1}}, \frac{x_i}{m_t} + \frac{\sigma_t\ConstDifBoundE}{m_t}\sqrt{\log \epsF^{-1}}]$.
\end{lemma}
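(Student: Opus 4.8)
The plan is to bound the tail contribution to the integral by a crude pointwise estimate on each factor of the integrand, after a change of variables that turns the Gaussian kernel into a standard $d$-dimensional Gaussian density. First I would substitute $z_i = (m_t y_i - x_i)/\sigma_t$, so that $dy = (\sigma_t/m_t)^d \, dz$, and the integral becomes
\begin{align}
\frac{1}{m_t^d(2\pi)^{d/2}} \int \Big(\prod_{i=1}^d z_i^{\alpha_i}\Big) f\!\left(\frac{x+\sigma_t z}{m_t}\right) \exp\!\left(-\frac{\|z\|^2}{2}\right) dz,
\end{align}
where the region $A^x$ becomes the box $\{|z_i| \le \ConstDifBoundE\sqrt{\log\epsF^{-1}}\}$ and $\R^d$ stays $\R^d$. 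The quantity to control is therefore the integral of $|\prod z_i^{\alpha_i}| \, |f(\cdot)| \, e^{-\|z\|^2/2}$ over the complement of that box; using $|f| \le \ConstDensityBoundB$ and $1/m_t^d \le 1/m_t^d$ (which is itself bounded by a constant times $1/m_t^d$, and since $m_t \le 1$ this absorbs into the prefactor — I would actually keep this factor and note that $m_t^{-d}$ is harmless because it multiplies an exponentially small quantity; more carefully, in the application $m_t\simeq 1$, but even in general $m_t^{-d}$ times $e^{-c\log\epsF^{-1}}$ stays small).

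The core estimate is then the standard Gaussian tail bound: for the box complement $B_R := \{z : \|z\|_\infty > R\}$ with $R = \ConstDifBoundE\sqrt{\log\epsF^{-1}}$,
\begin{align}
\int_{B_R} |z_1^{\alpha_1}\cdots z_d^{\alpha_d}| \, e^{-\|z\|^2/2} \, dz
\le \sum_{j=1}^d \int_{|z_j|>R} |z_j|^{\alpha_j} e^{-z_j^2/2}\,dz_j \prod_{i\ne j}\int_{\R} |z_i|^{\alpha_i} e^{-z_i^2/2}\,dz_i.
\end{align}
Each factor with $i \ne j$ is a finite constant depending only on $\alpha_i$ (hence on $k,d$), and the one-dimensional tail integral $\int_{|u|>R} |u|^{\alpha_j} e^{-u^2/2}\,du$ is bounded, by integration by parts or by the elementary inequality $\int_z^\infty u^m e^{-u^2/2}\,du \lesssim_m (1+z^{m-1}) e^{-z^2/2}$, by a constant times $(1 + R^{k-1}) e^{-R^2/2} = (1 + R^{k-1}) \epsF^{\ConstDifBoundE^2/2}$. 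Choosing $\ConstDifBoundE$ large enough (depending only on $k$ and $d$) so that $\ConstDifBoundE^2/2 \ge 2$, say, the polynomial-in-$\sqrt{\log\epsF^{-1}}$ prefactor is dominated and the whole tail is $\lesssim \epsF^{3/2}$, which is $\lesssim \epsF$; the constant $\ConstDensityBoundB$ and the $(2\pi)^{-d/2} m_t^{-d}$ prefactor are then folded into the hidden constant.

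\textbf{Main obstacle.} The only genuinely delicate point is that the constant $\ConstDifBoundE$ must be chosen \emph{uniformly} in $x$, $m_t$, $\sigma_t$, and in the multi-index $\alpha$ (only through $k = \sum\alpha_i$), and that the stray prefactor $m_t^{-d}$ does not spoil the bound when $m_t$ is small. I would handle the latter by noting that in the intended regime $m_t^2 + \sigma_t^2 = 1$ forces $m_t \gtrsim 1$ whenever $\sigma_t$ is bounded away from $1$, and in the remaining range $m_t^{-d}$ is at worst polynomially large in a quantity that the exponential factor $\epsF^{\ConstDifBoundE^2/2}$ with $\ConstDifBoundE$ enlarged further still kills; alternatively one simply absorbs $m_t^{-d} \le \sigma_t^{-d}$-type factors into the choice of $\ConstDifBoundE$. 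Everything else is a routine Gaussian tail computation, so I would not belabor it.
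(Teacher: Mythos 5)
Your main computation (change of variables to a standard Gaussian, coordinate-wise decomposition of the box complement, bounding the off-tail factors by Gaussian moments and the tail factor by $(1+R^{k-1})e^{-R^2/2}$ via integration by parts) is exactly the paper's computation. The problem is the point you yourself flag as delicate: the $m_t^{-d}$ prefactor. Your proposed resolutions do not close it. The parameters $m_t$ and $\epsF$ are independent — nothing in the lemma ties how small $m_t$ is to $\epsF$ — so for any fixed $\ConstDifBoundE$ depending only on $k$ and $d$, the quantity $m_t^{-d}\,\epsF^{\ConstDifBoundE^2/2}$ is \emph{not} $\lesssim \epsF$ uniformly over $m_t\in(0,1]$ (take $f\equiv \ConstDensityBoundB$, $\alpha=0$: the difference of the two integrals is exactly $\ConstDensityBoundB m_t^{-d}\,\mathbb{P}[\|Z\|_\infty> \ConstDifBoundE\sqrt{\log\epsF^{-1}}]$, which blows up as $m_t\to 0$). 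Likewise "$m_t^{-d}\le\sigma_t^{-d}$" is not available from $m_t^2+\sigma_t^2=1$, and "enlarging $\ConstDifBoundE$ further" cannot help because $\ConstDifBoundE$ may not depend on $m_t$.

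The way the paper escapes is to use that in every application $f$ is supported in $[-1,1]^d$ (its proof silently inserts $\mathbbm{1}[\|y\|_\infty\le 1]$ after invoking $|f|\le \ConstDensityBoundB$). This gives, for each one-dimensional factor, \emph{two} competing bounds: the change-of-variables bound $\lesssim m_t^{-1}$ that you use, and a direct bound $\lesssim \sigma_t^{-(\alpha_j+1)}$ obtained without any substitution, just from $|y_j|\le 1$ and the pointwise bound $|u|^{\alpha_j}e^{-u^2/2}\lesssim 1$. Since $m_t^2+\sigma_t^2=1$ forces $\max\{m_t,\sigma_t\}\gtrsim 1$, the minimum of the two bounds is $O(1)$ uniformly in $t$; the same dichotomy handles the tail factor (when $m_t$ is small, $\sigma_t\gtrsim 1$, the excluded interval has half-width $\sigma_t\ConstDifBoundE\sqrt{\log\epsF^{-1}}/m_t\gg 2$, and on $(\R\setminus a^x_j)\cap[-1,1]$ the Gaussian factor is pointwise $\le \epsF^{\ConstDifBoundE^2/2}$). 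So to repair your proof you must either add the compact-support hypothesis on $f$ (which is how the lemma is actually used) and run this two-case argument, or accept a hidden constant depending on $m_t$ — which would break the uniformity in $t$ that the lemma is invoked for.
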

\begin{proof}
    \begin{align}
        &\frac{1}{\sigma_t^{d}(2\pi)^\frac{d}{2}}\left|
        \int_{\R^d} \prod_{i=1}^d \left(\frac{m_t y_i-x_i}{\sigma_t}\right)^{\alpha_i}f(y)\exp\left(-\frac{\|m_ty-x\|^2}{2\sigma_t^2}\right)dy
        -
        \int_{A^x} \prod_{i=1}^d \left(\frac{m_t y_i-x_i}{\sigma_t}\right)^{\alpha_i}f(y)\exp\left(-\frac{\|m_ty-x\|^2}{2\sigma_t^2}\right)dy
        \right|
        \\ & \leq
         \frac{\ConstDensityBoundB}{\sigma_t^{d}(2\pi)^\frac{d}{2}} \int_{\R^d \setminus A^x} \prod_{i=1}^d \left(\frac{|m_t y_i-x_i|}{\sigma_t}\right)^{\alpha_i}\mathbbm{1}[\|y\|_\infty \leq 1]\exp\left(-\frac{\|m_ty-x\|^2}{2\sigma_t^2}\right)dy  
         \quad  (\text{by $|f(y)|\leq \ConstDensityBoundB$})
        \\ & \leq
          \frac{\ConstDensityBoundB}{\sigma^{d}(2\pi)^\frac{d}{2}}\sum_{i=1}^d \int_{\underbrace{\R \times \cdots \times \R}_{i-1 \text{ times}} \times (\R \setminus a^x_i) \times \underbrace{\R \times \cdots \times \R}_{d-i \text{ times}} } \prod_{j=1}^d \left(\frac{|m_t y_j-x_j|}{\sigma_t}\right)^{\alpha_j}\mathbbm{1}[|y_j|\leq 1]\exp\left(-\frac{\|m_ty-x\|^2}{2\sigma_t^2}\right)dy  
         \\ & = 
        \ConstDensityBoundB \sum_{i=1}^d \prod_{j=1}^d \left(\mathbbm{1}[i\ne j] \int_{\R}\left(\frac{|m_t y_j-x_j|}{\sigma_t}\right)^{\alpha_j} \frac{\mathbbm{1}[|y_j|\leq 1]}{\sigma_t(2\pi)^\frac{1}{2}}\exp\left(-\frac{(m_ty_j-x_j)^2}{2\sigma_t^2}\right)dy_j  
         \right.   \\ & \quad\quad\quad\quad\quad\quad\quad\quad\quad\quad\quad\quad\quad \left.      
        +
        \mathbbm{1}[i= j]\int_{\R\setminus a^x_i}\left(\frac{|m_t y_j-x_j|}{\sigma_t}\right)^{\alpha_j}\frac{\mathbbm{1}[|y_j|\leq 1]}{\sigma_t(2\pi)^\frac{1}{2}}\exp\left(-\frac{(m_ty_j-x_j)^2}{2\sigma_t^2}\right)dy_j
        \right)
        .\label{eq:LemmaClipInt-1}
    \end{align}
    We now bound each term.
    First, 
    \begin{align}
        \int_{\R}\left(\frac{|m_t y_j-x_j|}{\sigma_t}\right)^{\alpha_j}\frac{\mathbbm{1}[|y_j|\leq 1]}{\sigma_t(2\pi)^\frac{1}{2}}\exp\left(-\frac{(m_ty_j-x_j)^2}{2\sigma_t^2}\right)dy_j
     \leq
        \begin{cases}
        \frac{1}{m_t}\int_{\R}|y_j'|^{\alpha_j}\frac{1}{(2\pi)^\frac{1}{2}}\exp\left(-\frac{y_j'^2}{2}\right)dy_j' \quad \left(\frac{m_ty_j-x_j}{\sigma_t} = y_j'\right)
        \\
       \frac{2^{d+\alpha_j}}{\sigma_t^{\alpha_j+1}(2\pi)^\frac{1}{2}}\quad (\text{because of the term of $\mathbbm{1}[|y_j|\leq 1].$})
        \end{cases}
    \end{align}
    Thus, LHS can be bounded by $\lesssim\max\left\{\frac{1}{m_t},\frac{1}{\sigma_t^{\alpha_j+1}}\right\}\lesssim 1$.

    Next, 
        \begin{align}
        &
            \int_{\R\setminus a^x_i}\left(\frac{|m_t y_j-x_j|}{\sigma_t}\right)^{\alpha_j}\frac{\mathbbm{1}[|y_j|\leq 1]}{\sigma_t(2\pi)^\frac{1}{2}}\exp\left(-\frac{(m_ty_j-x_j)^2}{2\sigma_t^2}\right)dy_j
            \label{eq:Mouhitotsunobound}
        \\ &\leq
            \frac{2}{m_t}\int_{\ConstDifBoundE\sqrt{\log \epsF^{-1}}}^\infty 
            |y_j|^{\alpha_j}\exp\left(-\frac{y_j^2}{2}\right) \dy_i \quad \left(\text{by letting $\frac{m_t y_j-x_j}{\sigma_t}\mapsto y_j$}\right)
        \\ &\leq 
        \begin{cases}
        \frac{2}{m_t}\sum_{l=0}^{\frac{\alpha_j-1}{2}}\frac{(\alpha_j-1)!!}{(2l)!!}(\ConstDifBoundE^2\log \epsF^{-1})^{l}\epsC^{\frac{\ConstDifBoundE}{2}}
        & (\text{ if }\alpha_j \text{ is odd})
        \\
        \frac{2}{m_t}\sum_{l=1}^{\frac{\alpha_j}{2}}\frac{(\alpha_j-1)!!}{(2l-1)!!}(\ConstDifBoundE^2\log \epsF^{-1})^{l}\epsC^{\frac{\ConstDifBoundE}{2}} + \frac{2}{m_t}\int_{\ConstDifBoundE\sqrt{\log \epsF^{-1}}}^\infty\exp\left(-\frac{y_j^2}{2}\right) \dy_j& (\text{ if } \alpha_j \text{ is even})
        .
        \end{cases}
    \end{align}
    Therefore, by setting $\ConstDifBoundE$ sufficiently large, in a way that $\ConstDifBoundE$ depends on $\alpha_j (\leq k)$ and $d$, this can be bounded by $\frac{\epsF}{m_t}$.
    Moreover, if $m_t\gtrsim 1$, then the integral interval does not overlap with $-1\leq y_j \leq 1$, and in this case \eqref{eq:Mouhitotsunobound} is alternatively bounded by $0$.

    Therefore, \eqref{eq:LemmaClipInt-1} can further be bounded by 
    \begin{align}
        \eqref{eq:LemmaClipInt-1}
        \lesssim 
      \sum_{i=1}^d \prod_{j=1}^d 1^{d-1} \cdot  \eps \lesssim \eps,
    \end{align}
    which gives the assertion.
\end{proof}

Next we give the ways of Taylor expansion of exponential functions with polynomials (\cref{Lemma:TaylorExp}) and with neural networks (\cref{Lemma:TaylorExp2}), respectively.
\begin{lemma}[Approximating an exponential function with polynomials]\label{Lemma:TaylorExp}
    Let $A>0$ and $0\leq m_t\leq 1$. For $t\geq \max\{4eA^2,\lceil \log_2 \epsG^{-1} \rceil\}$, we have that
    \begin{align}
        \left|\exp\left(-\frac{(x-m_ty)^2}{2\sigma_t^2}\right) -   
        \sum_{s=0}^{t-1} \frac{(-1)^s}{s!}\frac{(x-m_ty)^{2s}}{2^s \sigma_t^{2s}} \right|
        \leq 
         \epsG
    \end{align}
    for all $y\in [\frac{-\sigma_t A+x}{m_t}, \frac{\sigma_t A+x}{m_t}]$.
\end{lemma}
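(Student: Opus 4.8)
The statement is a pure Taylor–remainder estimate for the exponential, restricted to the region where the argument is bounded; the appearance of $m_t$, $\sigma_t$, $x$ and $y$ is cosmetic, since everything is controlled through the single scalar quantity $u := (x-m_ty)^2/(2\sigma_t^2)$. The plan is as follows. First I would substitute $u$ and observe that on the given interval $y\in[(-\sigma_t A+x)/m_t,\,(\sigma_t A+x)/m_t]$ we have $|x-m_ty|\le \sigma_t A$, hence $0\le u\le A^2/2$. So the claim reduces to: for $t\ge \max\{4eA^2,\lceil\log_2\epsG^{-1}\rceil\}$,
\begin{align}
\left|e^{-u}-\sum_{s=0}^{t-1}\frac{(-1)^s u^s}{s!}\right|\le \epsG\qquad\text{for all }0\le u\le \tfrac{A^2}{2}.
\end{align}

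Second, I would bound the tail of the alternating series. By Taylor's theorem with Lagrange remainder applied to $e^{-u}$, the error equals $\frac{(-1)^t e^{-\theta u}}{t!}u^t$ for some $\theta\in(0,1)$, whose absolute value is at most $u^t/t!\le (A^2/2)^t/t!$. Using the elementary bound $t!\ge (t/e)^t$ gives $(A^2/2)^t/t!\le (eA^2/(2t))^t$. Since $t\ge 4eA^2$ forces $eA^2/(2t)\le 1/8\le 1/2$, this is at most $2^{-t}$. Finally, the condition $t\ge \lceil\log_2\epsG^{-1}\rceil$ gives $2^{-t}\le \epsG$, which closes the estimate. (If one prefers to avoid the Lagrange form for $u$ possibly large relative to $1$, one can instead use that for an alternating series with eventually decreasing terms the remainder is bounded by the first omitted term; the decrease $u^{s+1}/(s+1)!\le u^s/s!$ holds once $s+1\ge u$, and $u\le A^2/2\le t$ under our hypothesis, so from $s=t$ onward the terms decrease, yielding the same bound $u^t/t!$.)

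Third, I would re-insert the original variables: the displayed polynomial $\sum_{s=0}^{t-1}\frac{(-1)^s}{s!}\frac{(x-m_ty)^{2s}}{2^s\sigma_t^{2s}}$ is exactly $\sum_{s=0}^{t-1}\frac{(-1)^s u^s}{s!}$, and $\exp(-(x-m_ty)^2/(2\sigma_t^2))=e^{-u}$, so the bound transfers verbatim. This also makes clear why the hypothesis is stated as $t\ge\max\{4eA^2,\lceil\log_2\epsG^{-1}\rceil\}$: the first term guarantees the factorial dominates the numerator (monotone tail / small ratio), the second converts the resulting $2^{-t}$ into the target accuracy $\epsG$.

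I do not anticipate a genuine obstacle here; the only point requiring a little care is justifying that the remainder of the truncated series is controlled by the first omitted term $u^t/t!$ uniformly over the whole interval $0\le u\le A^2/2$ rather than just for small $u$ — that is, making sure the "eventually decreasing" threshold for the alternating series lies below the truncation index $t$, which is precisely what $t\ge 4eA^2\ge A^2/2\ge u$ ensures. Everything else is the standard $t!\ge(t/e)^t$ inequality and bookkeeping on the two conditions on $t$.
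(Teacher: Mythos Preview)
Your proposal is correct and follows essentially the same approach as the paper: both arguments substitute to reduce to a Taylor remainder bound for $e^{-u}$ on $0\le u\le A^2/2$, then use $t!\ge(t/e)^t$ together with $t\ge 4eA^2$ to obtain $u^t/t!\le 2^{-t}$, and finally invoke $t\ge\lceil\log_2\epsG^{-1}\rceil$. Your write-up is in fact a bit cleaner than the paper's (which introduces a spurious factor of $2$ in the remainder bound and has a slightly garbled Lagrange remainder expression, though neither affects the conclusion).
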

\begin{proof}
   By standard Taylor expansion of $e^z$ up to degree $t-1$, we have
    \begin{align}
        \exp\left(-\frac{(x-m_ty)^2}{2\sigma_t^2}\right) =   
        \sum_{s=0}^{t-1} \frac{(-1)^s}{s!}\frac{(x-m_ty)^{2s}}{2^s \sigma_t^{2s}} + \frac{(-1)^t}{t!}\frac{(\theta(x-m_ty))^{2t}}{2^t \sigma_t^{2t}}
    \end{align}
    with some $\theta\in (0,1)$.
    We bound the second term of the residual. When 
    $y\in [\frac{-\sigma_t A+x}{m_t}, \frac{\sigma_t A+x}{m_t}]$
    and $t$ is the minimum integer satisfying $t\geq \max\{4eA^2,\lceil \log_2 \epsG^{-1} \rceil\}$, we have
    \begin{align}
        \frac{1}{t!}\frac{(\theta(x-m_ty)+(1-\theta)x)^{2t}}{2^t \sigma_t^{2t}}
    \leq
        \frac{(2 \sigma_t A)^{2t}}{t! 2^t \sigma_t^{2t}}
    \leq
        \frac{(2 \sigma_t A)^{2t}}{(t/e)^t \cdot 2^t \sigma_t^{2t}}
     \leq 
        \frac{2^t A^{2t} }{(4A^2)^{t}}
    \leq  
        \frac{1}{2^t } 
    \leq
        \epsG,
    \end{align}
    where we used the fact $t! \geq (t/e)^t$.
\end{proof}

\begin{lemma}[Approximating an exponential function with a neural network]\label{Lemma:TaylorExp2}
    Take $\eps>0$ arbitrarily.
    There exists a neural network $\NetworkExpA\in \Phi(L,W,S,B)$ such that
    \begin{align}
        \sup_{x, x'\geq 0}\left|e^{-x'} - \NetworkExpA(x)\right| \leq \eps + |x-x'|
    \end{align}
    holds, where
    $L = \Ord(\log^2 \eps^{-1}), \|W\|_\infty =\Ord(\log \eps^{-1}),S = \Ord(\log^2 \eps^{-1}), B = \exp(\Ord(\log^2 \eps^{-1}))$.
    Moreover, $|\NetworkExpA(x)| \leq \eps$ for all $x \geq \log 3 \eps^{-1}$.
\end{lemma}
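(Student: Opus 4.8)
\textbf{Proof plan for Lemma~\ref{Lemma:TaylorExp2} (approximating $e^{-x}$ by a neural network).}
The plan is to combine a polynomial Taylor approximation of $e^{-x}$ on a bounded interval with the monomial-approximation network of \cref{Lemma:BaseNN02}, and to handle the tail $x\to\infty$ by clipping. First I would restrict attention to the interval $[0,A]$ with $A := \log 3\eps^{-1}$, since for $x\geq A$ we have $e^{-x}\leq \eps/3$, so outputting a value of size $\Ord(\eps)$ there is harmless; this is exactly where the final clause ``$|\NetworkExpA(x)|\leq \eps$ for $x\geq \log 3\eps^{-1}$'' comes from. On $[0,A]$, I would write the truncated Taylor series $P_k(x) = \sum_{s=0}^{k-1}\frac{(-1)^s}{s!}x^s$; by the standard residual bound $\frac{A^k}{k!}\leq (eA/k)^k$, choosing $k = \max\{2eA,\lceil\log_2 3\eps^{-1}\rceil\}$ makes $|e^{-x}-P_k(x)|\leq \eps/3$ uniformly on $[0,A]$. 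Note $k = \Ord(\log\eps^{-1})$ because $A = \Ord(\log\eps^{-1})$.

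Next I would realize $P_k$ as a neural network. Each monomial $x^s$ for $0\leq s\leq k-1$ is approximated on $[0,A]$ by the network $\NetworkMultiB(\cdot;s)$ from \cref{Lemma:BaseNN02} with $\ConstMultA = A$, degree $d = s \leq k$, and target accuracy $\eps/(3k\cdot s!\cdot{\rm something})$ — more precisely, I would ask each monomial approximation to have additive error at most $\eps/(3k)$ after multiplication by its coefficient $1/s!$, which is easy since $1/s!\leq 1$. Summing the $k$ scaled monomial networks via \cref{Lemma:ParallelNetwork} gives a network $\phi_1$ with $\sup_{x\in[0,A]}|\phi_1(x)-P_k(x)|\leq \eps/3$. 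Composing with a clipping layer $\NetworkClipA(\cdot;0,A)$ in front (\cref{Lemma:ClippingFunc}) forces the input into $[0,A]$, so that for $x\geq A$ the network returns $\phi_1(A)$, which is within $2\eps/3$ of $e^{-A}\leq\eps/3$, hence of absolute value at most $\eps$; for $x\geq \log 3\eps^{-1}$ one can further subtract a tiny constant or simply note $\phi_1(A)$ can be taken $\leq \eps$ by adjusting constants. Aggregating the three error contributions ($\eps/3$ from Taylor truncation, $\eps/3$ from monomial approximation, and the clipping argument on the tail) yields the uniform bound $|e^{-x'}-\NetworkExpA(x)|\leq \eps + |x-x'|$, where the $|x-x'|$ term is the $1$-Lipschitz sensitivity: since $\NetworkMultiB(\cdot;s)$ is controlled on $[0,A]$ and $\NetworkClipA$ is $1$-Lipschitz, a fluctuation of the input by $|x-x'|$ changes the output by at most roughly $\sum_s \frac{s A^{s-1}}{s!}|x-x'| \leq e^A|x-x'|$ naively — but here I would instead observe that $e^{-x}$ itself is $1$-Lipschitz and design the clipping so that $\NetworkExpA$ inherits a Lipschitz-type bound of the stated form; alternatively absorb the amplification into the constant by noting the lemma statement only needs $|x-x'|$ with coefficient one, which holds because we may apply a final clipping of the output to $[0,1]$, on which $e^{-\cdot}$ and its polynomial proxy are both $1$-Lipschitz.

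For the size bounds: by \cref{Lemma:BaseNN02}, each monomial network of degree $s\leq k$ has $L = \Ord(\log k(\log\eps^{-1}+k\log A)) = \Ord(\log^2\eps^{-1})$ (using $k,A = \Ord(\log\eps^{-1})$), width $\Ord(k) = \Ord(\log\eps^{-1})$, sparsity $\Ord(k(\log\eps^{-1}+k\log A)) = \Ord(\log^2\eps^{-1})$, and $B = A^k = \exp(\Ord(\log^2\eps^{-1}))$. Parallelizing $k$ of them and adding a clipping layer (\cref{Lemma:ParallelNetwork,Lemma:ClippingFunc,Lemma:ConcateNetwork}) multiplies width by a factor $k$ giving $\|W\|_\infty = \Ord(\log\eps^{-1})\cdot k$ — here I should be slightly careful: the stated bound is $\|W\|_\infty = \Ord(\log\eps^{-1})$, so I would share the input node and exploit that the monomials $x, x^2,\dots,x^{k-1}$ can be computed in a single tower of width $\Ord(k)$ rather than $k$ independent towers (the recursive doubling construction inside \cref{Lemma:BaseNN02} already produces all intermediate powers), which keeps the width at $\Ord(k) = \Ord(\log\eps^{-1})$. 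The depth stays $\Ord(\log^2\eps^{-1})$ and the sparsity $\Ord(\log^2\eps^{-1})$, matching the claim.

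\textbf{Main obstacle.} The delicate point is not the Taylor estimate (routine) but getting the \emph{width} down to $\Ord(\log\eps^{-1})$ rather than $\Ord(\log^2\eps^{-1})$: a black-box parallelization of $k$ monomial networks of width $\Ord(k)$ each would give width $\Ord(k^2)$. The fix is to reuse the single power-tower already built inside $\NetworkMultiB$ so that all powers $x,\dots,x^{k-1}$ are available simultaneously and then take a single weighted sum in the last layer; verifying that this reorganization preserves the sparsity bound and the $1$-Lipschitz-type sensitivity is the part that needs care. A secondary technical nuisance is the sensitivity term: one must argue the coefficient of $|x-x'|$ is exactly $1$, which I would secure by clipping the final output to $[0,1]$ and invoking that $e^{-\cdot}$ restricted there, together with the fact that on $[0,A]$ the polynomial proxy stays within $\Ord(\eps)$ of a $1$-Lipschitz function, forces the composed map to be effectively $1$-Lipschitz up to the $\Ord(\eps)$ error already accounted for.
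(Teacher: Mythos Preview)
Your approach is essentially the paper's: set $A=\log 3\eps^{-1}$, Taylor-truncate $e^{-x}$ on $[0,A]$ to degree $k=\max\{2eA,\lceil\log_2 3\eps^{-1}\rceil\}$, realize the monomials via \cref{Lemma:BaseNN02}, sum them via \cref{Lemma:ParallelNetwork}, and prepend $\NetworkClipA(\cdot;0,A)$; the tail argument for $x>A$ is identical.

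You overcomplicate the sensitivity bound. There is no need to control the Lipschitz constant of the network or to clip the output. Once you have established $|\NetworkExpA(x)-e^{-x}|\leq\eps$ for every $x\geq 0$ (the $[0,A]$ estimate plus the tail), the claimed bound is a one-line triangle inequality:
\[
|e^{-x'}-\NetworkExpA(x)|\ \leq\ |e^{-x'}-e^{-x}|+|e^{-x}-\NetworkExpA(x)|\ \leq\ |x'-x|+\eps,
\]
using only that $e^{-\cdot}$ is $1$-Lipschitz on $[0,\infty)$. This is exactly the paper's argument. Your attempts to force the \emph{network} itself to be $1$-Lipschitz (via output clipping or otherwise) are unnecessary and, as you suspect, do not obviously go through: clipping the output to $[0,1]$ does not reduce the Lipschitz constant of the composite map, and tracking sensitivity through the monomial towers gives the useless $e^A|x-x'|$ you already noted.

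On the width: you are being more scrupulous than the paper. The paper simply parallelizes the $k$ monomial networks and asserts $\|W\|_\infty=\Ord(\log\eps^{-1})$ without addressing the additive overhead in \cref{Lemma:ParallelNetwork}; a literal reading indeed gives $\Ord(k^2)=\Ord(\log^2\eps^{-1})$. Your proposed fix (reuse a single power tower) is reasonable, though not what the paper spells out. Either way, the extra polylog factor is absorbed in all downstream uses (e.g.\ \cref{Lemma:DiffusionBasis2}), so it does not affect the main results.
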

\begin{proof}
    Let us take $A = \log 3\eps^{-1}$.
    From Taylor expansion, for all $x$ in $0\leq x\leq A$, we have
    \begin{align}
        \left| e^{-x} - \sum_{i=0}^{k-1} \frac{(-1)^i}{i!}x^i \right|\leq  \frac{A^k}{k!}
        .
    \end{align}
    Moreover, we can evaluate RHS as $\frac{A^k}{k!} \leq \left(\frac{eA}{k}\right)^k$, so by taking $k = \max\{2eA,\lceil\log_2 3\eps^{-1}\rceil\}$, we can bound the RHS by $\frac{\eps}{3}$.
    Now we approximate each $x^i$ using \cref{Lemma:BaseNN02} with $d=\Ord(A+\log \eps^{-1}), C = \Ord(A), \eps=\frac{\eps}{3k}$ and aggregate them using \cref{Lemma:ParallelNetwork}.
    This gives the neural network with $L = \Ord(A^2 + \log^2 \eps^{-1}), \|W\|_\infty =\Ord(A+\log \eps^{-1}),S = \Ord(A^2 + \log^2 \eps^{-1}), B = \exp(\log A \cdot \Ord(A + \log \eps^{-1}))$.
    Finally, we add two layers $\NetworkClipA(x;0,A)$ before this neural network to limit the input within $x>0$.
    Then, we obtain a neural network $\NetworkExpA$ that approximates $e^{-x}$ with an additive error up to $\frac{2\eps}{3}$ in $[0,A]$.
    Moreover, for $x>A$, we have $|\NetworkExpA(x) - e^{-x}| \leq |e^{-x}-e^{-A}| + |\NetworkExpA(A) - e^{-A}|\leq \frac{\eps}{3} + \frac{2\eps}{3} = \eps$.

    The sensitivity analysis follows from $|\NetworkExpA(x') - e^{-x}| \leq |\NetworkExpA(\max\{x',0\}) - e^{-x}| \leq |\NetworkExpA(\max\{x',0\}) - e^{-\max\{x',0\}}| + |e^{-\max\{x',0\}} - e^{-x}| \leq \eps + |\max\{x',0\}-x|\leq \eps + |x'-x|$.
\end{proof}

\subsection{Existing results for approximation}\label{subsection:Appendix-F4}

Our diffused B-spline basis decomposition (\cref{section:Approximation,section:Appendix-Approximation}) is built on the B-spline basis decomposition of the Besov space \citep{devore1988interpolation,suzuki2018adaptivity}.
The following fact can be found in Lemma 2 of \citet{suzuki2018adaptivity} (although the original version adopts $\Omega = [0,1]^d$, we can easily adjust the difference by dividing the domain into cubes with each side length $1$).
The magnitude of $|\alpha_{k,j}|$ is evaluated in p.17 of \citet{suzuki2018adaptivity}.
\begin{lemma}[Approximability of the Besov space (\citet{suzuki2018adaptivity})]\label{Lemma:SuzukiBesov}
    Let $ C>0$. Under $s>d(1/p - 1/r)_+$ and $0<s<\min\{l,l-1+1/p\}$ where $l\in \N$ is the order of the cardinal B-spline bases, for any $f\in B_{p,q}^s([-C,C]^d)$, there exists $f_N$ that satisfies
    \begin{align}\label{eq:SuzukiBesov-1}
        \|f - f_N\|_{L^r([-C,C]^d)} \lesssim C^s N^{-s/d} \|f\|_{B^s_{p,q}([-C,C]^d)}
    \end{align}
    for $N \gg 1$, and has the following form:
    \begin{align}
        f_N(x) = \sum_{k=0}^K \sum_{j \in J(k)} \alpha_{k,j} M_{k,j}^d (x) +
        \sum_{k=K+1}^{K^*} \sum_{i=1}^{n_k}\alpha_{k,j_i} M_{k,j_i}^d (x)\quad 
        \text{with }\sum_{k=0}^K |J(k)|+\sum_{k=K+1}^{K^*}n_k=N,
    \end{align}
    where $ J(k)=\{-C2^{k}-l, -C2^{k}-l+1,\cdots C2^{k}-1,  C2^{k}\}$, $(j_i)_{i=1}^{n_k} \subseteq J(k)$, $K=\Ord(d^{-1}\log (N/C^d))$, $K^* = (\Ord(1)+\log (N/C^d)) \nu^{-1} + K, n_k = \Ord((N/C^d)2^{-\nu(k-K)})\ (k=K+1,\cdots,K^*)$ for $\delta = d(1/p - 1/r)_+$ and $\nu = (s-\delta)/(2\delta)$.
    Moreover, $|\alpha_{k,j}| \lesssim N^{(\nu^{-1} + d^{-1})(d/p - s)_+}$.
\end{lemma}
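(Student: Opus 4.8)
The statement is classical nonlinear‑approximation theory for Besov spaces in a B‑spline dictionary (DeVore--Popov; Lemma~2 of \citet{suzuki2018adaptivity}), and the plan is to reproduce that argument, the only genuinely new point being the passage from the unit cube to $[-C,C]^d$. I would dispose of that at the outset by the dilation $x\mapsto x/C$: it reduces everything to $B^s_{p,q}([-1,1]^d)$, turns the number of B‑splines $M^d_{k,j}$ active on the domain at scale $k$ into $\#J(k)\simeq (C2^k)^d$, and produces the explicit $C$‑powers appearing in \eqref{eq:SuzukiBesov-1}. From here on I work on the unit cube.

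First I would invoke the quasi‑interpolant decomposition: for the cardinal B‑spline of order $l$ there is a bounded, locally defined linear operator $Q_k$ reproducing polynomials of degree $<l$ with the Jackson estimate $\|f-Q_kf\|_{L^p}\lesssim w_{l,p}(f,2^{-k})$ (the hypothesis $0<s<\min\{l,l-1+1/p\}$ is exactly what makes $w_{l,p}(f,t)\lesssim t^s|f|_{B^s_{p,q}}$ and, conversely, lets $w_{l,p}$ control the seminorm). Telescoping $f=Q_0f+\sum_{k\ge1}(Q_k-Q_{k-1})f$ and re‑expanding each difference in the scale‑$k$ B‑splines yields coefficients $\alpha_{k,j}$; the Bernstein inequality $\|\sum_j\alpha_{k,j}M^d_{k,j}\|_{L^\rho}\simeq 2^{-kd/\rho}\|\boldsymbol{\alpha}_k\|_{\ell^\rho}$ (bounded overlap of supports, any $0<\rho\le\infty$) together with Jackson/Bernstein gives the discrete norm equivalence
\[
|f|_{B^s_{p,q}}\;\simeq\;\Bigl\|\bigl(2^{k(s-d/p)}\,\|\boldsymbol{\alpha}_k\|_{\ell^p}\bigr)_{k\ge0}\Bigr\|_{\ell^q},
\]
so that, writing $\epsilon_k:=2^{k(s-d/p)}\|\boldsymbol{\alpha}_k\|_{\ell^p}$, one has $\|(\epsilon_k)\|_{\ell^q}\lesssim\|f\|_{B^s_{p,q}}$. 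In the quasi‑Banach range $p<1$ or $q<1$ I would carry a $\rho$‑triangle inequality throughout; the structure is unchanged.

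Then I would run the greedy/adaptive selection with the claimed budget. Keep all $\#J(k)\simeq(C2^k)^d$ coefficients for $k\le K$ with $2^K\simeq(N/C^d)^{1/d}$, so $\sum_{k\le K}\#J(k)\simeq N$ and $K=\Ord(d^{-1}\log(N/C^d))$; for $K<k\le K^*$ keep only the $n_k\simeq(N/C^d)2^{-\nu(k-K)}$ largest‑magnitude coefficients, so $\sum_{k>K}n_k\lesssim N$; discard all scales $k>K^*$. The error splits into two pieces. The linear tail $\|f-Q_{K^*}f\|_{L^r}$ is bounded, via $\ell^p\hookrightarrow\ell^r$ and the equivalence above, by $\sum_{k>K^*}2^{-k(s-\delta)}\epsilon_k\lesssim 2^{-K^*(s-\delta)}\|f\|_{B^s_{p,q}}$ with $\delta=d(1/p-1/r)_+$, so choosing $K^*=(\Ord(1)+\log(N/C^d))\nu^{-1}+K$ with $\nu=(s-\delta)/(2\delta)$ makes it $\lesssim C^sN^{-s/d}\|f\|$. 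The greedy‑truncation error at scale $k$ is, by Stechkin's inequality ($\alpha^*_k(m)\le m^{-1/p}\|\boldsymbol{\alpha}_k\|_{\ell^p}$ and $p<r$), at most $2^{-kd/r}n_k^{-\delta/d}\|\boldsymbol{\alpha}_k\|_{\ell^p}=2^{-k(s-\delta)}n_k^{-\delta/d}\epsilon_k$; inserting $n_k\simeq(N/C^d)2^{-\nu(k-K)}$ and summing the geometric series over $K<k\le K^*$ again gives $\lesssim C^sN^{-s/d}\|f\|$, the identity $\nu\delta/d=(s-\delta)/(2d)$ being what forces the per‑scale bound to decay geometrically. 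When $\delta=0$ (i.e. $p\ge r$) no intermediate scales are needed and one simply takes $K^*=K$. Finally the coefficient bound follows from $|\alpha_{k,j}|\le\|\boldsymbol{\alpha}_k\|_{\ell^p}=2^{k(d/p-s)}\epsilon_k$ and $2^k\lesssim 2^{K^*}\simeq(N/C^d)^{\nu^{-1}+d^{-1}}$, giving $|\alpha_{k,j}|\lesssim N^{(\nu^{-1}+d^{-1})(d/p-s)_+}\|f\|_{B^s_{p,q}}$.

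The main obstacle, and where the real work sits, is the bookkeeping in the previous paragraph: choosing $K$, $K^*$, $\nu$, and the profile $n_k$ simultaneously so that the retained B‑spline count is of order $N$ \emph{and} both the linear tail and the greedy‑truncation error are $\lesssim C^sN^{-s/d}\|f\|_{B^s_{p,q}}$, uniformly over the full range $0<p,q\le\infty$ and $s>d(1/p-1/r)_+$. The Jackson--Bernstein estimates and the norm equivalence themselves are standard and I would cite them (\citet{devore1988interpolation,suzuki2018adaptivity}) rather than reprove them; the initial rescaling and tracking the $C$‑dependence through every estimate is routine but must be threaded carefully.
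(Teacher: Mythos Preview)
Your proposal is correct in outline and follows the standard DeVore--Popov nonlinear approximation argument, which is exactly what underlies the cited result. However, you should know that the paper does not actually prove this lemma: it simply cites Lemma~2 of \citet{suzuki2018adaptivity} for the approximation bound and p.~17 of the same reference for the coefficient bound $|\alpha_{k,j}|\lesssim N^{(\nu^{-1}+d^{-1})(d/p-s)_+}$, with the one-line remark that the original domain $[0,1]^d$ can be extended to $[-C,C]^d$ ``by dividing the domain into cubes with each side length~1.'' So you are reconstructing substantially more than the paper provides, and your sketch is a faithful reproduction of the argument being cited.

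One minor point of comparison: your domain reduction is via the dilation $x\mapsto x/C$, whereas the paper's remark suggests tiling $[-C,C]^d$ by unit cubes and applying the $[0,1]^d$ result on each. Both work; the dilation is cleaner for tracking the $C^s$ factor in \eqref{eq:SuzukiBesov-1}, while the tiling is perhaps more direct if one is only quoting the $[0,1]^d$ lemma as a black box.
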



\subsection{Elementary bounds for the Gaussian and hitting time}
\begin{lemma}\label{Lemma:GaussianBound}

    Let $0<\eps\ll 1$, $l\in \Z_+^d$, and $p(x)$ be the density funciton of $\mathcal{N}(0,\sigma_t^2I_d)$, i.e., $p(x)= \frac{1}{\sigma_t^d(2\pi)^\frac{d}{2}}\exp\left(-\frac{\|x\|^2}{\sigma_t^2}\right)$.
    Then, the following bound holds:
    \begin{align}
        \int_{\|x\|_\infty \geq \sigma_t \sqrt{4\log dl\eps^{-1}}}\frac{\prod_{i=1}^d x_i^{l_i}}{\sigma^{\sum_{i=1}^d l_i}} p(x) \dx \lesssim \eps .
    \end{align}
        We sometimes write $\sqrt{4\log dl\eps^{-1}}=C_{\mathrm{f},2}\sqrt{\log \eps^{-1}}$.
\end{lemma}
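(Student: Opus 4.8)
The plan is to reduce the $d$-dimensional tail integral to a one-dimensional Gaussian tail with a polynomial weight, using a union bound together with the product structure of the Gaussian. First I would bound the integrand by its absolute value $\prod_{i=1}^d(|x_i|^{l_i}/\sigma_t^{l_i})\,p(x)\ge 0$, and observe that $\mathbbm{1}[\|x\|_\infty\ge R]\le\sum_{j=1}^d\mathbbm{1}[|x_j|\ge R]$ with $R:=\sigma_t\sqrt{4\log(dl)\eps^{-1}}$; hence it suffices to bound $\sum_{j=1}^d\int_{|x_j|\ge R}\prod_{i=1}^d(|x_i|^{l_i}/\sigma_t^{l_i})\,p(x)\,\dx$. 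Since $p(x)=\prod_{i=1}^d p_1(x_i)$ for the corresponding centered one-dimensional Gaussian density $p_1$, and since the constraint $|x_j|\ge R$ only involves the $j$-th coordinate, each summand factorizes as $\bigl(\prod_{i\ne j}\int_{\R}(|x_i|^{l_i}/\sigma_t^{l_i})p_1(x_i)\,\dx_i\bigr)\cdot\int_{|x_j|\ge R}(|x_j|^{l_j}/\sigma_t^{l_j})p_1(x_j)\,\dx_j$.

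Next I would treat the two kinds of factors separately. After the substitution $x_i=\sigma_t u$, the $i\ne j$ factors become the standard-normal absolute moments $\mathbb{E}[|Z|^{l_i}]$, each a finite constant depending only on $l_i$, so their product is $\Ord(1)$ (with the constant depending on $l$ and $d$). For the remaining tail factor the same substitution gives a constant times $\int_a^\infty u^{l_j}e^{-u^2/2}\,\mathrm{d}u$ with $a:=R/\sigma_t=\sqrt{4\log(dl)\eps^{-1}}$ (the irrelevant overall factor of $2$ in the exponent of $p$ is absorbed into constants). Because $\eps\ll 1$, $a$ is large; in particular $a^2\ge 2l_j$, so $u\mapsto u^{l_j}e^{-u^2/4}$ is nonincreasing on $[a,\infty)$, whence $\int_a^\infty u^{l_j}e^{-u^2/2}\,\mathrm{d}u\le a^{l_j}e^{-a^2/4}\int_a^\infty e^{-u^2/4}\,\mathrm{d}u\le 2a^{l_j}e^{-a^2/2}$, the last step using the elementary tail bound $\int_z^\infty e^{-t^2}\,\mathrm{d}t\le e^{-z^2}$ already invoked in the paper (\citet{chang2011chernoff}).

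Finally I would substitute $a^2=4\log(dl)\eps^{-1}$, so $e^{-a^2/2}=(dl)^{-2}\eps^2$ and $a^{l_j}=\Ord(\log^{l_j/2}\eps^{-1})$; each of the $d$ summands is then $\Ord(1)\cdot(dl)^{-2}\eps^2\log^{l_j/2}\eps^{-1}$, and summing over $j=1,\dots,d$ and using $\eps^2\,\mathrm{polylog}(\eps^{-1})\le\eps$ for $\eps\ll1$ yields the claimed $\lesssim\eps$, with the implied constant absorbing the dependence on $d$ and $l$. The stated identity $\sqrt{4\log(dl)\eps^{-1}}=C_{\mathrm{f},2}\sqrt{\log\eps^{-1}}$ merely records one admissible choice of $C_{\mathrm{f},2}$; any larger constant (depending on $d,l$) works. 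There is no genuine obstacle here — the only care needed is the monotonicity of the polynomial-times-Gaussian weight on $[a,\infty)$ and choosing the threshold so that $e^{-a^2/4}$ beats $\eps$ with room to spare.
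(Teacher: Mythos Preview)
Your proof is correct and takes a cleaner route than the paper's. The paper first enlarges $\{\|x\|_\infty\ge R\}$ to a radial-type region and reduces to a single one-dimensional integral $\int_a^\infty s^{|l|+d-1}e^{-s^2/2}\,\mathrm{d}s$, then evaluates it by iterated integration by parts, obtaining $\eps^2\log^{(d+|l|-1)/2}\eps^{-1}$ and finally rescaling $\eps\leftarrow\eps/(dl)$. You instead exploit the product structure of the Gaussian directly: the union bound over coordinates isolates a single tail factor $\int_a^\infty u^{l_j}e^{-u^2/2}\,\mathrm{d}u$ while the remaining $d-1$ integrals become finite absolute moments, and the monotonicity-plus-splitting trick $u^{l_j}e^{-u^2/2}=u^{l_j}e^{-u^2/4}\cdot e^{-u^2/4}$ replaces the integration-by-parts recursion. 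Both arrive at $\eps^2\,\mathrm{polylog}(\eps^{-1})\lesssim\eps$, but your argument is more elementary and keeps the dependence on $d$ and the individual $l_j$ transparent; the paper's radial collapse trades that transparency for handling all coordinates at once in a single integral.
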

\begin{proof}
 Let us denote $x^l=\prod_{i=1}^d x_i^{l_i}$ and $|l|=\sum_{i=1}^d l_i$ for simple presentation.
    Let $r=\|x\|_\infty$, and we get
    \begin{align}
     &   \int_{\|x\|_\infty \geq \sigma_t \sqrt{4\log \eps^{-1}}}\frac{x^l}{\sigma_t^{|l|}} p(x) \dx \\ 
     &   \int_{\|x\|_1 \geq \sigma_t \sqrt{4\log \eps^{-1}}}\frac{x^l}{\sigma_t^{|l|}} p(x) \dx \\ 
     &\leq
        \int_{r=\sigma_t \sqrt{4\log \eps^{-1}}}^\infty\frac{r^{|l|}}{\sigma_t^{|l|}}   \frac{1}{\sigma_t^d(2\pi)^\frac{d}{2}}\exp\left(-\frac{r^2}{2\sigma^2}\right)(d-1)r^{d-1}\mathrm{d}r
        \\ & = \int_{s=\sqrt{4\log \eps^{-1}}}^\infty  s^{|l|+d-1}  \frac{1}{(2\pi)^\frac{d}{2}}\exp\left(-\frac{s^2}{2}\right)(d-1)\mathrm{d}s
        \quad (\text{by letting $s=r/\sigma_t$})
        \\ & = \frac{(4\log \eps^{-1})^{(|l|+d-1)/2}}{(2\pi)^\frac{d}{2}}\exp\left(-\frac{4\log \eps^{-1}}{2}\right)(d-1)
        +\int_{s=\sqrt{4\log \eps^{-1}}}^\infty  \frac{(|l|+d-1) s^{|l|+d-2}}{{(2\pi)^\frac{d}{2}}}\exp\left(-\frac{s^2}{2}\right)(d-1)\mathrm{d}s
        \\ & = \cdots = 
        \sum_{0\leq i \leq \lfloor\frac{|l|+d-1}{2}\rfloor}\frac{\frac{(|l|+d-1)!!}{(|l|+d-1-2i)!!}(4\log \eps^{-1})^{(|l|+d-1-2i)/2}(d-1)}{(2\pi)^\frac{d}{2}}\eps^2
        \\ &\quad\quad\quad\quad\quad\quad +
        \begin{cases}
           \int_{s=\sqrt{4\log \eps^{-1}}}^\infty  \frac{(|l|+d-1)!! }{{(2\pi)^\frac{d}{2}}}\frac{1}{(2\pi)^\frac{d}{2}}\exp\left(-\frac{s^2}{2}\right)(d-1)\mathrm{d}s
            & (\text{$|l|+d$: even})
        \\
        0
          &( \text{$|l|+d$: odd})
        \end{cases} \quad (\text{by iterating integration by parts})
        \\ & \lesssim \eps^2 \log^\frac{d+|l|-1}{2} \eps^{-1}.\label{eq:Elementary-bound-2}
    \end{align}
    Replacing $\eps$ by $\eps/dl$, RHS of \eqref{eq:Elementary-bound-2} is bounded by
    \begin{align}
        \frac{\eps^2}{d^2l^2} \log^\frac{dn+|l|-1}{2} (\eps/dl)^{-1} \lesssim \eps,
    \end{align}
    which yields the conclusion.
\end{proof}

\begin{lemma}\label{Lemma:HittingTime}
    Let $(B_s)_{[0,t]}$ be the $1$-dimensional Brownian motion and $X_t = \int_0^t \beta_s \mathrm{d}B_s$, with $\beta_s \leq \betahigh$.
    Then, we have that
    \begin{align}
        \mathbb{P}\left[\sup_{s\in [0,t]} |X_t| \geq2\sqrt{\betahigh t\log(2 \eps^{-1})}\right]\leq \eps.
    \end{align}
\end{lemma}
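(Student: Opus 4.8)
\textbf{Proof proposal for \cref{Lemma:HittingTime}.}

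The plan is to reduce the supremum bound for the time-changed Brownian integral $X_t=\int_0^t\beta_s\,\mathrm dB_s$ to a standard maximal inequality for a genuine Brownian motion, then apply the reflection principle and a Gaussian tail bound. First I would use the Dambis--Dubins--Schwarz theorem: since $X_t$ is a continuous local martingale with quadratic variation $\langle X\rangle_t=\int_0^t\beta_s^2\,\mathrm ds$, there is a Brownian motion $(W_u)_{u\ge 0}$ with $X_t=W_{\langle X\rangle_t}$. Because $\beta_s\le\betahigh$ we have $\langle X\rangle_t\le\betahigh^2 t$, so the time-change stays inside $[0,\betahigh^2 t]$, and hence $\sup_{s\in[0,t]}|X_s|\le\sup_{u\in[0,\betahigh^2 t]}|W_u|$. (If one prefers to avoid DDS, the same reduction follows from the fact that $X_t$ is a time-changed Brownian motion by construction of the stochastic integral, or one can simply note $X_t$ is a mean-zero Gaussian process with variance bounded by $\betahigh^2 t$ and combine Doob's $L^2$-maximal inequality with the union-bound/chaining over a dyadic grid.)

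Next I would bound $\mathbb P[\sup_{u\in[0,\betahigh^2 t]}|W_u|\ge\lambda]$ for $\lambda=2\sqrt{\betahigh t\log(2\eps^{-1})}$. By the reflection principle, $\mathbb P[\sup_{u\in[0,T]}W_u\ge\lambda]=2\mathbb P[W_T\ge\lambda]$, and symmetrically for $-W$, so a union bound gives $\mathbb P[\sup_{u\in[0,T]}|W_u|\ge\lambda]\le 4\mathbb P[W_T\ge\lambda]=4\mathbb P[\mathcal N(0,1)\ge\lambda/\sqrt T]$. With $T=\betahigh^2 t$ we get $\lambda/\sqrt T=2\sqrt{\betahigh t\log(2\eps^{-1})}/(\betahigh\sqrt t)=2\sqrt{\log(2\eps^{-1})/\betahigh}$. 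Using the sub-Gaussian tail $\mathbb P[\mathcal N(0,1)\ge z]\le\tfrac12 e^{-z^2/2}$, and noting $\betahigh\ge\betalow$ so that one may harmlessly assume $\betahigh\ge 1$ (or otherwise absorb the constant; in any case $\betahigh$ is a fixed constant in \cref{assumption:SmoothBeta} with the normalization $\betahigh\le$ some absolute bound), we obtain $z^2/2=2\log(2\eps^{-1})/\betahigh\ge\log(2\eps^{-1})$ whenever $\betahigh\le 2$, and thus $4\mathbb P[\mathcal N(0,1)\ge z]\le 4\cdot\tfrac12 e^{-\log(2\eps^{-1})}=2\cdot\tfrac{\eps}{2}=\eps$. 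Chaining these bounds together yields $\mathbb P[\sup_{s\in[0,t]}|X_s|\ge 2\sqrt{\betahigh t\log(2\eps^{-1})}]\le\eps$.

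The main obstacle is bookkeeping the constants so that the stated constant $2$ in front of $\sqrt{\betahigh t\log(2\eps^{-1})}$ really suffices; this hinges on the normalization of $\betahigh$. In the paper's setting $\beta_\cdot$ is bounded above by a constant that can be taken at most a small absolute number (or one rescales time), so $2\sqrt{\betahigh t\log(2\eps^{-1})}$ controls $z=\lambda/\sqrt{\betahigh^2 t}\ge\sqrt{2\log(2\eps^{-1})}$, which is exactly what is needed for $4\cdot\tfrac12 e^{-z^2/2}\le\eps$. If one wanted to be fully agnostic about the size of $\betahigh$, the clean fix is to replace the constant $2$ by a constant depending on $\betahigh$, or simply to state the bound with $\sqrt{\betahigh}$ replaced by $\betahigh$ inside the square root as in some formulations; but with the paper's conventions the displayed inequality holds as written, and the only care needed is to verify $z\ge\sqrt{2\log(2\eps^{-1})}$ and then invoke the standard Gaussian tail estimate. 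Everything else (DDS reduction, reflection principle, union bound) is routine.
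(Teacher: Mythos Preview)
Your approach is essentially the same as the paper's: reduce to a Brownian-motion maximal bound via time change, apply the reflection principle, and finish with a Gaussian tail estimate. The paper is slightly more direct: since $\beta_s$ is deterministic it simply takes the worst case $\beta_s\equiv\betahigh$ (so $X_s$ is a constant multiple of $B_s$) rather than invoking Dambis--Dubins--Schwarz, then quotes the reflection-principle formula from \citet{karatzas1991brownian} and bounds $\int_z^\infty e^{-y^2}\,\mathrm dy\le e^{-z^2}$.

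The constant issue you flag is real and you have diagnosed it correctly. With $X_t=\int_0^t\beta_s\,\mathrm dB_s$ one has $\langle X\rangle_t=\int_0^t\beta_s^2\,\mathrm ds\le\betahigh^2 t$, and then $\lambda/\sqrt{\betahigh^2 t}=2\sqrt{\log(2\eps^{-1})/\betahigh}$, which only gives $e^{-z^2/2}\le\eps/2$ when $\betahigh\le 2$. The paper's proof, however, computes as if the variance at time $t$ were $\betahigh t$ rather than $\betahigh^2 t$ (the integral lower limit is $x/\sqrt{2\betahigh t}$), and with that normalization the constants close exactly. This is consistent with how the lemma is actually applied in the paper, where the diffusion term is $\sqrt{2\beta_t}\,\mathrm dB_t$; so the discrepancy is a notational slip in the lemma statement rather than a gap in the argument. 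Your suggested fix (replace $\sqrt{\betahigh}$ by $\betahigh$ in the threshold, or allow a $\betahigh$-dependent constant) is exactly right if one insists on the literal statement.
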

\begin{proof}
    We bound the case $\beta_s\equiv \betahigh$ because it maximize the hitting probability.
 According to \citet{karatzas1991brownian}, for $x>0$,
 \begin{align}
     \mathbb{P}\left[\sup_{s\in [0,t]} |X_t| \geq x\right] = \frac{4}{\sqrt{2\pi}} \int_{\frac{x}{\sqrt{2\betahigh t}}}^\infty e^{-y^2/2}\dy
     = \frac{4}{\sqrt{2\pi}} \int_{\frac{x}{\sqrt{4\betahigh t}}}^\infty e^{-z^2}\sqrt{2}\mathrm{d}z
    \leq
     2e^{-x^2/4\betahigh t}.
 \end{align}
 For the second equality, we simply replaced $y/\sqrt{2}$ with $z$.
 For the last inequality, we used $\frac{4}{\sqrt{2\pi}}\cdot \sqrt{2}\leq 2$ and $\int_x^\infty e^{-y^2}\dy \leq e^{-x^2}$. 
 Therefore, setting $x=2\sqrt{\betahigh t\log(2 \eps^{-1})}$ yields the assertion.
\end{proof}

\end{document}